\renewcommand\algorithmicthen{}
\setlist[enumerate]{leftmargin=*}
\definecolor{light-gray}{gray}{0.65}
\filleft\fontsize{26}{35}\selectfont}
\newcommand{\cleardoublepageempty}{
  \clearpage
  \thispagestyle{empty}
  \cleardoublepage
}
\newcolumntype{H}{>{\setbox0=\hbox\bgroup}c<{\egroup}@{}} 
\newtheorem*{rep@theorem}{\rep@title}
\newcommand{\newreptheorem}[2]{%
\newenvironment{rep#1}[1]{%
 \def\rep@title{#2 \ref{##1}}%
 \begin{rep@theorem}}%
 {\end{rep@theorem}}}
\newtheorem{theorem}{Theorem}
\newtheorem{lemma}[theorem]{Lemma}
\newtheorem{definition}[theorem]{Definition}
\newtheorem{proposition}[theorem]{Proposition}
\newcommand{\LINEIF}[3][default]{%
  \ALC@it\algorithmicif\ #2\ \algorithmicthen%
  \ALC@com{#1}\ #3\ %
}
\tikzset{mynode/.style={draw,circle, minimum size = 0.7cm}}
\definecolor{gblue}{RGB}{207,226,243}
\definecolor{gred}{RGB}{244,204,204}
\definecolor{gyellow}{RGB}{255,229,153}
\definecolor{gyellow2}{RGB}{252,229,205}
\definecolor{ggreen}{RGB}{217,234,211}
\definecolor{ggray}{RGB}{238,238,238} 
\definecolor{ggray2}{RGB}{81,84,87} 
\definecolor{gpurple}{RGB}{217,210,233} 
\newcommand{\ie}{\textit{i.e.}\xspace}
\newcommand{\eg}{\textit{e.g.}\xspace}
\newcommand{\iid}{\textit{i.i.d.}\xspace}
\newcommand{\upmax}{\def\blx@maxcitenames{99}}
\newcommand{\dnmax}{\def\blx@maxcitenames{1}}
\newcommand{\myfullcite}[1]{\upmax\fullcite{#1}\dnmax}
\newcommand{\Or}[1]{\mathcal{O}\mathopen{}\left(#1\right)\mathclose{}}
\newcommand{\Th}[1]{\Theta\mathopen{}\left(#1\right)\mathclose{}}
\newcommand{\encase}[1]{{\left[#1\right]}}
\newcommand{\brac}[1]{{\left(#1\right)}}
\newcommand{\R}{\mathbb{R}}
\newcommand{\trace}{\operatorname{tr}}
\newcommand{\norm}[1]{\Vert #1 \Vert}
\newcommand{\ip}[2]{\big\langle #1, \, #2 \big\rangle}
\newcommand{\prox}[1]{\mathrm{prox}_{#1}}
\newcommand{\lmo}{\textit{lmo}\xspace}
\def\bbE{{\mathbb{E}}}
\def\E{{\mathbb{E}}}
\def\sR{{\mathbb{R}}}
\def\sZ{{\mathbb{Z}}}
\DeclareMathOperator*{\argmin}{arg\,min}
\DeclareMathOperator*{\argmax}{arg\,max}
\DeclareMathOperator{\conv}{conv}
\DeclareMathOperator{\lin}{lin}
\DeclareMathOperator{\cone}{cone}
\newcommand{\dkl}{D^{KL}}
\newcommand{\KL}{D^{KL}}
\newcommand{\relbo}{\textsc{RELBO}\xspace}
\newcommand{\FW}{{\textsf{\tiny FW}}}
\DeclareMathOperator{\l1}{L1-ball}
\DeclareMathOperator{\Cf}{C_f}
\DeclareMathOperator{\radius}{\mathrm{radius}}
\DeclareMathOperator{\diam}{\mathrm{diam}}
\DeclareMathOperator{\mdw}{mDW(\cA)}
\DeclareMathOperator{\amx}{\left\lbrace\cA\cup-\frac{\theta_k}{\|\theta_k\|_\cA}\right\rbrace}
\DeclareMathOperator{\amxnok}{\left\lbrace\cA\cup-\frac{\theta}{\|\theta\|_\cA}\right\rbrace}
\DeclareMathOperator{\faces}{faces}
\DeclareMathOperator{\gfaces}{g-faces}
\DeclareMathOperator{\cw}{CWidth(\cA)}
\DeclareMathOperator{\clip}{clip}
\def\cA{\mathcal{A}}
\def\cB{\mathcal{B}}
\def\cD{\mathcal{D}}
\def\cF{\mathcal{F}}
\def\cH{\mathcal{H}}
\def\cK{\mathcal{K}}
\def\cQ{\mathcal{Q}}
\def\cS{\mathcal{S}}
\def\cT{\mathcal{T}}
\def\cZ{\mathcal{Z}}
\def\rvu{{\mathbf{i}}}
\def\rvs{{\mathbf{s}}}
\def\rvu{{\mathbf{u}}}
\def\rvw{{\mathbf{w}}}
\def\rvx{{\mathbf{x}}}
\def\rvy{{\mathbf{y}}}
\def\rvz{{\mathbf{z}}}
\def\sfA{\mathsf{A}}
\def\sfZ{\mathsf{Z}}
\def\rz{{\textnormal{z}}}
\def\vu{{\textbf{u}}}
\def\vv{{\textbf{v}}}
\def\vx{{\textbf{x}}}
\def\mA{{\textbf{A}}}
\def\mI{{\textbf{I}}}
\def\gM{{\mathcal{M}}}
\def\gS{{\mathcal{S}}}
\def\gX{{\mathcal{X}}}
\def\gZ{{\mathcal{Z}}}
\renewcommand{\P}{P}
\newcommand{\Q}{Q}
\DeclareRobustCommand{\myhammer}{%
  \begingroup\normalfont
  \includegraphics[height=1.1\fontcharht\font`\B]{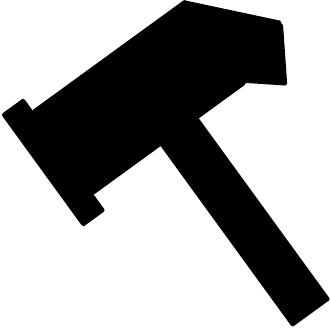}%
  \endgroup
}
\newcommand{\PA}{\textbf{PA}}
\newcommand{\supp}{\operatorname{supp}}
\newcommand\blfootnote[1]{%
  \begingroup
  \renewcommand\thefootnote{}\footnote{#1}%
  \addtocounter{footnote}{-1}%
  \endgroup
}
\newcommand{\softmax}[1]{\texttt{Softmax}\,(#1)}
\newcommand{\gru}[2]{\texttt{GRU}\,(#1, \ #2)}
\newcommand{\slots}[0]{\texttt{slots}}
\newcommand{\attn}[0]{\texttt{attn}}
\newcommand{\inp}[0]{\texttt{inputs}}
\newcommand{\mlp}[0]{\texttt{MLP}}
\newcommand{\sam}[0]{Slot Attention\xspace}
\newcommand{\updates}[0]{\texttt{updates}}
\newcommand{\layernorm}[0]{\texttt{LayerNorm}}
\begin{document}
\initETHthesis

\dissnum{27248}
\title{Enforcing and Discovering Structure in Machine Learning}
\author{Francesco Locatello}
\acatitle{MSc ETH in Computer Science, ETH Zurich}
\dateofbirth{01.08.1992}
\citizen{Italy}
\Year{2020}
\examiners{
  Prof.\ Dr.\ Gunnar R\"atsch (ETH Zurich), examiner\linebreak
  Prof.\ Dr.\ Andreas Krause (ETH Zurich), co-examiner \linebreak
  Prof.\ Dr.\ Bernhard Sch\"olkopf (MPI T\"ubingen), co-examiner \linebreak
  Prof.\ Dr.\ Volkan Cevher (EPFL), co-examiner
}
\support{}
\disclaimer{}
\pagestyle{empty}
\hypersetup{pageanchor=false}
\maketitle
\hypersetup{pageanchor=true}

\pagestyle{plain}
\pagenumbering{roman}
\chapter*{Abstract}
The world is structured in countless ways. It may be prudent to enforce corresponding structural properties to a learning algorithm’s solution, such as incorporating prior beliefs, natural constraints, or causal structures. Doing so may translate to faster, more accurate, and more flexible models, which may directly relate to real-world impact. In this dissertation, we consider two different research areas that concern structuring a learning algorithm’s solution: when the structure is known and when it has to be discovered.

First, we consider the case in which the desired structural properties are known, and we wish to express the solution of our learning algorithm as a sparse combination of elements from a set. We assume that this set is given in the form of a constraint for an optimization problem.  Specifically, we consider convex combinations with additional affine constraints, linear combinations, and non-negative linear combinations. In the first case, we develop a stochastic optimization algorithm suitable to minimize non-smooth objectives with applications to Semidefinite Programs. In the case of linear combinations, we establish a connection in the analysis of Matching Pursuit and Coordinate Descent, which allows us to present a unified analysis of both algorithms. We also show the first accelerated convergence for both matching pursuit and steepest coordinate descent on convex objectives. On convex cones, we present the first principled definitions of non-negative MP algorithms which provably converge on convex and strongly convex objectives. Further, we consider the applications of greedy optimization to the problem of approximate probabilistic inference. We present an analysis of existing boosting variational inference approaches that yields novel theoretical insights and algorithmic simplifications. 

Second, we consider the case of learning the structural properties underlying a dataset by learning its \textit{factors of variation}. This is an emerging field in representation learning that starts from the premise that real-world data is generated by a few explanatory factors of variation, which can be recovered by (unsupervised) learning algorithms. Recovering such factors should be useful for arbitrary downstream tasks. We challenge these ideas and provide a sober look at the common practices in the training and evaluation of such models. From the modeling perspective, we discuss under which conditions factors of variation can be disentangled and perform extensive empirical evaluations in the unsupervised, semi-supervised, and weakly-supervised settings. Regarding the evaluation, we discuss the biases and usefulness of the disentanglement metrics and the downstream benefits of disentanglement, particularly for fairness applications.
Overall, we find that the unsupervised learning of disentangled representations is theoretically impossible, and unsupervised model selection appears challenging in practice. On the other hand, we also find that little and imprecise explicit supervision (in the order of 0.01--0.5\% of the dataset) is sufficient to train and identify disentangled representations in the seven datasets we consider.
Motivated by these results, we propose a new weakly-supervised disentanglement setting that is theoretically identifiable and does not require explicit observations of the factors of variation, providing useful representations for diverse tasks such as abstract visual reasoning, fairness, and strong generalization. Finally, we discuss the conceptual limits of disentangled representation and propose a novel paradigm based on attentive grouping. We propose a differentiable interface mapping perceptual features in a distributed representational format to a set of high-level, task-dependent variables that we evaluate on set prediction tasks. 

\chapter*{Sommario}
Il mondo \`e strutturato in innumerevoli modi. Potrebbe essere utile imporre simili propriet\`a strutturali alla soluzione di un algoritmo di apprendimento automatico, ad esempio incorporando conoscenza pregressa, vincoli naturali o strutture causali. Ciò potrebbe tradursi in modelli più veloci, più accurati e più flessibili, che potrebbero avere un impatto nel mondo reale. In questa tesi consideriamo due diverse aree di ricerca, che riguardano la struttura della soluzione di un algoritmo di apprendimento automatico: quando la struttura è nota e quando deve essere individuata.

\looseness=-1Innanzitutto consideriamo il caso in cui le proprietà strutturali desiderate sono note e si vuole esprimere la soluzione del nostro algoritmo di apprendimento automatico come una combinazione sparsa di elementi di un insieme. Assumiamo che questo insieme sia dato sotto forma di vincolo per un problema di ottimizzazione. In particolare consideriamo combinazioni convesse con aggiuntivi vincoli affini, combinazioni lineari e combinazioni lineari non negative. Nel primo caso sviluppiamo un algoritmo di ottimizzazione stocastica adatto a minimizzare funzioni non lisce con applicazioni a Programmi Semidefiniti. Nel caso di combinazioni lineari stabiliamo una connessione nell'analisi di Matching Pursuit e Coordinate Descent, che ci permette di presentare un'analisi unificata di entrambi gli algoritmi. Dimostriamo anche la prima convergenza accelerata sia per Matching Pursuit che per Coordinate Descent su funzioni convesse. Per vincoli conici convessi presentiamo i primi algoritmi di MP non negativi, per i quali dimostriamo la convergenza su funzioni convesse e fortemente convesse. Inoltre consideriamo le applicazioni dell'ottimizzazione greedy al problema dell'inferenza probabilistica approssimata. Presentiamo quindi un'analisi degli approcci esistenti di inferenza variazionale che risultano in nuove intuizioni teoriche e semplificazioni algoritmiche.

\looseness=-1In secondo luogo consideriamo il caso di dover apprendere le proprietà strutturali sottostanti a un insieme di dati, scoprendo i suoi \textit{fattori di variazione}. Si tratta di un settore emergente nell'apprendimento automatico delle rappresentazioni, partendo dal presupposto che i dati nel mondo reale sono generati da pochi fattori di variazione esplicativi, che possono essere scoperti da algoritmi di apprendimento automatico (non supervisionati). La scoperta di tali fattori dovrebbe essere utile per arbitrari problemi successivi. Mettiamo in discussione queste idee e forniamo un'analisi sobria delle pratiche comuni nell'apprendimento e nella valutazione di tali modelli. Dal punto di vista della modellizzazione discutiamo in quali condizioni i fattori di variazione possono essere scoperti ed eseguiamo estese valutazioni empiriche  in contesti non supervisionati, semi-supervisionati e debolmente supervisionati. Per quanto riguarda la valutazione discutiamo la parzialit\`a, l'utilit\`a delle metriche e i vantaggi per successivi compiti, in particolare per le applicazioni di predizioni eque.
Nel complesso scopriamo che l'apprendimento automatico senza supervisione dei fattori di variazione è teoricamente impossibile e la selezione dei modelli senza supervisione appare impegnativa nella pratica. D'altra parte, troviamo anche che una limitata supervisione esplicita ed imprecisa (nell'ordine dello 0,01-0,5 \% del set di dati) è sufficiente per imparare ed identificare i fattori di variazione nei sette set di dati che consideriamo.
Motivati da questi risultati, proponiamo un nuovo scenario debolmente supervisionato, che è teoricamente identificabile e non richiede osservazioni esplicite dei fattori di variazione, fornendo rappresentazioni utili per svariati compiti come ragionamento visivo astratto, predizioni eque e generalizzazione forte. Infine discutiamo i limiti concettuali dell'imparare fattori di variazione e proponiamo un nuovo paradigma basato sul raggruppamento attento. Proponiamo quindi un'interfaccia differenziabile, che mappa da una descrizione percettiva dei dati in un formato rappresentazionale distribuito ad un insieme di variabili di alto livello, che dipendono dall'obiettivo dell'apprendimento e che valuitamo in attivit\`a di predizione di insiemi. 

\chapter*{Acknowledgments}
I wish to thank everyone that supported me in the past few years in the effort leading to this dissertation.

\looseness=-1First, I'd like to thank my advisors, Gunnar R\"atsch and  Bernhard Sch\"olkopf, for supporting and supervising me over these years. I'm grateful for the freedom you gave me while helping me focus on the important questions and giving me your precious feedback and advice! Thank you for being patient with me and for being such great advisors and mentors.
I am also grateful for the stimulating research environment in their labs at ETH and MPI. I'd like to thank present and past members of the BMI group and Empirical Inference department (especially the Empirical Inferencers pub quiz team). I would also like to thank Prof. Dr. Andeas Krause and Prof. Dr. Volkan Cevher for serving on my Ph.D. committee.

\looseness=-1I would like to give a special thanks to Olivier Bachem, who has been a great friend, manager, and mentor. He made my time at Google Zurich the most productive and most fun of my PhD. He also helped me and supported me over the past two years, giving me invaluable feedback for both my research and professional growth. I really cannot thank you enough!

\looseness=-1Another special thanks goes to everyone else that mentored me and helped me during the PhD, in particular Stefan Bauer, Volkan Cevher, Martin Jaggi, Rajiv Khanna, Mario Lucic, Quentin de Laroussilhe, Thomas Kipf, Nal Kalchbrenner, Olivier Bousquet, Sylvain Gelly, Nicolas Le Roux, and Michael Tschannen. Thank all of you for listening to me, for your precious advice, and for helping me navigate the difficult times.

\looseness=-1I am grateful to all my co-authors: Alp Yurtsever, Olivier Fercoq, Volkan Cevher, Anant Raj, Sai Praneeth Reddy Karimireddy,  Sebastian Urban Stich, Martin Jaggi, Michael Tschannen, Rajiv Khanna, Joydeep Ghosh, Gideon Dresdner, Isabel Valera,  Stefan Bauer, Mario Lucic, Sylvain Gelly, Olivier Bachem,  Gabriele  Abbati,  Tom  Rainforth, Ben Poole, Dirk Weissenborn, Thomas Unterthiner, Aravindh Mahendran, Georg Heigold, Damien Vincent, Ilya Tolstikhin, Jakob Uszkoreit, Alexey Dosovitskiy, Thomas Kipf, Nan Rosemary Ke, Nal Kalchbrenner, Anirudh Goyal, Yoshua Bengio, Bernhard Sch\"olkopf and Gunnar R\"atsch. I'm honored I could work with all of you, you were all fundamental for both my growth as a researcher and for this dissertation! I also thank all the other people with whom I co-authored a paper, in particular Sjoerd Van Steenkiste, Alp Yurtsever, Vincent Fortuin, Waleed Gondal, Manuel Wuthrich, Luigi Gresele, Paul Rubenstein, Frederik Tr\"auble, Stefan Stark, Joanna Ficek, Geoffrey Negiar, and Andrea Dittadi. Thank you for involving me in your projects, collaborating with you has been a pleasure!

I'd like to thank Natalia for her friendship and endless help. Most of the things I did in my PhD would just not have been possible without you. 

I'd like to thank Annika, Stefan, Anirudh, Rosemary, Thomas K., Klaus, Alexey, Jakob, Gunnar, Bernhard, Yoshua and Nal for the philosophical discussions and the influence they had on my research and this dissertation. 

I'd like to thank all the members of the Brain Team in Zurich for the great time I had, in particular Olivier, Mario, Marvin, Michael, and Joan for the many foosball matches. Thank you Mario for teaching me how to stay reasonable.

I am also very grateful to Nal Kalchbrenner for having me in his team at Google Brain Amsterdam for six months and to Thomas Kipf for being a fantastic collaborator and my go-to person for just about everything during my internship. Thank you both for all the mentorship, inspiring discussions, and support. I'd like to thank also all the other members of the Amsterdam and Berlin teams for the great athmosphere and the fun times (despite the lockdown), in particular Klaus, Alexey, Jakob, Tim, Rianne, Mostafa, Avital, Thomas U., Aravindh, Sindy, Manoj, Nal, and Thomas K.

I gratefully acknowledge the institutions that supported my research, in particular the
Department of Computer Science at ETH Zurich, the Max Planck Institute for Intelligent Systems, the ETH MPI Center for Leaerning Systems, ELLIS, and Google (through a 2019 Google PhD Fellowship, the MSRA Partnership with ETH and MPI, and an internship).

\looseness=-1Many thanks to all my friends, in particular Stefano, Tommaso, Riccardo, Giuseppe, Stefan, Stefan, Gideon, Fabio, Michael, Gabriele, Andrii,  Kseniia, and Dani (who tragically left us). 

I am grateful to Sarah, for her unconditional support and for enriching my life.

Finally, I'd like to thank my family, Alessandro, Chiara, and Silvano, but also Marisa and Piero (I wish you were still here) for believing in me, enduring my absence, and for helping me follow my dreams!

\cleardoublepageempty{}
\selectlanguage{english}

\setcounter{tocdepth}{1}
\pagestyle{headings}
\pdfbookmark[0]{\contentsname}{contents}
\tableofcontents
\cleardoublepageempty{}

\pagenumbering{arabic}
\cleardoublepageempty{}



\chapter{Introduction}
\section{Broad Motivation}
\looseness=-1Structure is pervasive in human's understanding and description of reality. In Physics, we describe phenomena with an intricate combination of elementary concepts that can be composed with each other.
For example, we model rigid body motion with the laws of classical mechanics and the behavior of charged objects with Maxwell's equations.
Structuring scientific knowledge has the advantage that elementary laws can be more easily and individually falsified~\citep{popper2005logic}. 
Interestingly, this structure may not be a human construct arising from our scientific pursuits. Philosophers argued that the existence of a natural structure should play a part in the correctness of counterfactual claims~\citep{chisholm1946contrary,chisholm1955law,goodman1947problem} which are in turn related to human's learning. While Kant believed that causal understanding is innate in Humans~\citep{kant1977prolegomena}, Hume argued that the mind is unable to perceive causal relations directly and can only extrapolate them through counterfactual reasoning~\citep{hume2000enquiry}.
Work in cognitive psychology \citep{epstude2008functional} found that counterfactuals indeed allow reasoning about the usefulness of past actions and transferring these insights to corresponding behavioral intentions in future scenarios \citep{roese1994functional,reichert1999reflective,landman1995missed}. Without an underlying natural structure, the induction from experimental evidence to law formulation would likely not be possible, although this conclusion is not yet widely accepted in modern philosophy~\citep{gabriel2015world}.

Similarly, the fundamental concept of \textit{generalization} emerged in the early days of machine learning~\citep{Solomonoff1964} under the assumption that future data will be similar to past data. Even today, this assumption is present in the form of the widespread \iid assumption. The predominant philosophy of modern machine learning is connectionism, which attempts to model the brain using an artificial neural network~\citep{rosenblatt1958perceptron} learning a distributed representation of the data~\citep{hinton1984distributed}, where knowledge is distributed in a pattern of activations across multiple computing elements (\ie the neurons). As deep learning approaches are dominating the field in computer vision~\citep{he2016deep,krizhevsky2012imagenet}, natural language processing~\citep{devlin2018bert}, and speech recognition~\citep{graves2013speech}, an emerging body of literature is questioning the robustness of the prediction of state-of-the-art models~\citep{hendrycks2019benchmarking,karahan2016image,michaelis2019benchmarking,roy2018effects,azulay2019deep,barbu2019objectnet,engstrom2017exploring,zhang2019making,gu2019using,shankarimage,barbu2019objectnet}. In fact, the assumption that future data will follow the same distribution of training data is often violated in practice as a model is deployed to solve real-world tasks in the wild. While scale is a viable short-term answer to this issue~\citep{brown2020language}, new work is arguing for the advantages of structuring representations to better align with our understanding of physics and human cognition~\citep{bengio2013representation,PetJanSch17,scholkopf2020towards}. Pragmatically, several real-world successful applications of deep learning are already trained from the internal representation of simulators~\citep{battaglia2016interaction,sanchez2020learning} and game engines~\citep{berner2019dota,vinyals2019grandmaster} rather than raw observational data. 

These arguments serve as a conceptual motivation for the present dissertation. Starting from the premise that there exists a natural structure underlying some data distribution, we argue that incorporating the same structure into a learning algorithm may be beneficial. This structure may come from prior beliefs, physical constraints, or knowledge about the causality of a system, and could either be given or it has to be discovered. 
More concretely, we restrict ourselves to the settings of (1) Enforcing constraints to the solution of a learning algorithm through its optimization formulation. In particular, we focus on constraints that can be written as a combination (convex, linear, or non-negative linear) of a set of \textit{atoms} (that may be vectors, matrices, or sometimes functions). This corresponds to the case where the structural constraints are known. 
(2) We investigate under which conditions and to which extent neural networks can discover underlying factors of variation in a dataset with various degrees of supervision. In this scenario, we wish to discover the natural structure underlying a dataset and investigate the feasibility of this problem and its usefulness for different downstream tasks.

\section{Summary of Main Results}

In this dissertation, we explore two main research questions related to enforcing and discovering structure:

\begin{enumerate}
  \item \emph{How can we efficiently constrain a learning algorithm to express its solution as a combination of elements from a set?}
  

Learning problems with real-life applications often benefit from incorporating requirements such as constraints given by natural laws. 
For example, a structured solution is desirable in many applications, due to the underlying physics or for the sake of interpretability. 
The most general convex optimization template can be formulated as follows:
\begin{align}\label{eq:main_template}
\min_{\theta\in\cD} f(\theta)
\end{align}
where $f:\cH\rightarrow \R$, $\cH$ is an Hilbert space with associated inner product $\langle \theta, y\rangle$ $\forall$ $\theta, \ y \in\cH$ and $\cD\subset \cH$. $\cD$ encodes the desired structure of the solution as a constraint for the optimization problem. We here consider the general setup of optimization over Hilbert spaces as we aim at applying our results to both convex optimization algorithms in Euclidean spaces and approximate inference.

As an example, consider $\cD = \R^n$ and $f$ convex and smooth. In this case, the most common approach is gradient descent. Gradient descent is an iterative algorithm in which the iterate is updated as $\theta_{k+1} = \theta_k - \gamma_k \nabla f(\theta_k)$, where $\gamma_k$ is some step size. If $\cD$ is a proper subset of $\R^n$, one has to project the iterate at each step to make sure it remains feasible, for example computing $\theta_{k+1} = \argmin_{\theta\in\cD} \|\theta -  \theta_k + \gamma_k \nabla f(\theta_k)\|^2$, where $\|\cdot\|$ is the norm induced by the inner product in $\cH$. Depending on $\cD$, this can be a challenging optimization problem. The recent developments in machine learning applications with vast data brought the scalability of first-order optimization methods like projected gradient descent under scrutiny. As a result, there has been a renewed interest in projection-free optimization algorithms. 
We study mainly three classes of algorithms, which differ in the type of constraints they handle. Let $\cA\subset\cH$ be a compact set. Then, we consider the case in which $\cD$ is the convex hull ($\cD = \conv(\cA)$), the conic-hull ($\cD = \cone(\cA)$) and the linear span ($\cD = \lin(\cA)$) of the set $\cA$.
To solve these problems we use Frank-Wolfe (FW)~\citep{frank1956algorithm}, Non-Negative Matching Pursuit (NNMP) and Matching Pursuit (MP, also known as boosting when $\cA$ is a set of functions)~\citep{Mallat:1993gu,locatello2017unified}. The structure in the solution is given by the choice of $\cA$. For example, $\cA$ can be the set of rank one matrices, and so $\conv(\cA)$ is the trace norm ball, $\cone(\cA)$ is the set of all positive semidefinite matrices, and $\lin(\cA)$ is the set of all matrices.
Our work covers both deterministic and stochastic optimization and focuses on proving convergence guarantees for new and existing algorithms.


In Chapter~\ref{ch:SFW}~\citep{locatello2020stochastic}, we propose a stochastic FW method for solving stochastic convex minimization problems with affine constraints over a compact convex domain:
\begin{equation}\label{eq:shcgm}
\underset{\theta\in\conv(\cA)}{\text{minimize}} \quad \bbE_\rvw  f(\theta,\omega) +g(A\theta)
\end{equation}
where $g$ can be an indicator function. In this scenario, vanilla stochastic FW and projection methods suffer from high computational complexity. There are many practical applications for this template including scalable stochastic optimization of Semidefinite Programs (SDPs) and splitting methods for stochastic and online optimization. Stochastic and online optimization of SDPs alone have countless applications: clustering, online max cut, optimal power-flow, sparse PCA, kernel learning, blind deconvolution, community detection, convex relaxation of combinatorial problems, amongst many others. 
Our algorithm has a $\Or{k^{-1/3}}$ convergence rate in expectation on the objective residual, and $\Or{k^{-5/12}}$ in expectation on the feasibility gap. Surprisingly, our rate on the objective residual is asymptotically identical to recent rates for the stochastic FW with constant batch size on problems without the affine constraint~\citep{mokhtari2020stochastic}. 
Furthermore, the rate on the feasibility gap is only $\Or{k^{-1/12}}$ worse in expectation than the one with full gradient information~\cite{yurtsever2018conditional}. 

\looseness=-1In Chapter~\ref{cha:mpcd}~\citep{locatello2018matching}, we consider the connection between Matching Pursuit and Coordinate Descent. Coordinate descent can be seen as special case of MP as it solves the optimization problem moving the iterate along coordinates~\cite{nesterov2012efficiency}, while MP considers a generalized notion of directions (in CD $\cA$ is fixed to contain coordinates and solving the linear problem of MP is equivalent to finding the steepest coordinate). In light of this connection, we unify the analysis of both algorithms, proving affine invariant sublinear $\Or{1/k}$ rates on convex and smooth objectives and linear convergence on strongly convex smooth objectives. Furthermore, we prove the first accelerated convergence rate $\Or{1/k^2}$ for matching pursuit and steepest coordinate descent (only accelerated rates for random coordinate descent were known before~\cite{nesterov2012efficiency}) on convex objectives. 

\looseness=-1In Chapter~\ref{cha:cone}~\citep{locatello2017greedy}, we consider the case of optimization over the \textit{convex cone}, parametrized as the conic hull  of a generic atom set, leading to the first principled definitions of non-negative MP algorithms. Concrete examples of this setup include unmixing problems, projections, and non-negative matrix and tensor factorizations (using heuristic oracles). 
We derive sublinear ($\Or{1/k}$) convergence on general smooth and convex objectives, and linear convergence ($\Or{e^{-k}}$) on strongly convex objectives, in both cases for general sets of atoms. 

In Chapter~\ref{cha:boostingVI}~\citep{LocKhaGhoRat18,locatello2018boosting}, we use tools from the convex optimization literature to study the problem of approximate Bayesian inference. Here, the optimization is over spaces of probability distributions. Approximating probability densities is a core problem in Bayesian statistics and representation learning, where inference translates to the computation of a posterior distribution.
Posterior distributions depend on the modeling assumptions and can rarely be computed  exactly. Many methods popular today rely on a "flexibly parametrize, optimize and hope for the best" paradigm. In our work, we consider boosting variational inference that has been proposed as a new principled approach to approximate a posterior distribution~\cite{Guo:2016tg,Miller:2016vt}:
\begin{equation*}
\min_{q\in\conv(\cA)} \dkl(q(z)\|p(z|x))
\end{equation*}
where $\conv(\cA)$ now represents the space of mixtures of the densities in some family $\cA$. Boosting algorithms construct a mixture of these densities by greedily adding components to the solution. Assuming that one can find the components, building a mixture is a convex problem for which we discuss convergence properties.
Further, we rephrase the linear optimization problem of the FW subroutine and propose to maximize the Residual ELBO (RELBO), which replaces the standard ELBO (Evidence Lower BOund) optimization in VI.
These theoretical enhancements allow for black-box implementation of the boosting subroutine.

\item \emph{If we want learning algorithms to enrich and complement our understanding of reality, how can they discover new structure that we did not already know?}

\looseness=-1Learning useful representations from data is considered crucial in Machine Learning~\cite{bengio2013representation}. It is often argued that a representation can hide or reveal the underlying mechanisms governing the data. A recent trend in the community is to learn representations that \textit{disentangle} the factors of variation in a data set. The common wisdom is that disentangled representations are useful not only for (semi-)supervised downstream tasks but also transfer and few-shot learning. Unfortunately, a vague definition of what disentangled representations actually means has brought confusion to the community. After the $\beta$-VAE paper~\citep{higgins2016beta}, several approaches where proposed to learn disentangled representations within the VAE framework~\citep{kim2018disentangling,eastwood2018framework,kumar2017variational,chen2018isolating,ridgeway2018learning,suter2018interventional}.

In \emph{Variational Autoencoders (VAEs)}~\cite{kingma2013auto},  one assumes a prior $P(\rvz)$ on the latent space and parameterizes the conditional probability $\P(\rvx|\rvz)$ using a deep neural network (i.e., a \textit{decoder network} $p_\theta(\rvx|\rvz)$). The posterior distribution is approximated by a variational distribution $\Q(\rvz|\rvx)$, again parameterized using a deep neural network (i.e., an \textit{encoder network} $q_\phi(\rvz|\rvx)$). The model is then trained by maximizing a variational lower-bound to the log-likelihood:
\begin{align*}
\max_{\phi, \theta}\quad \E_{\rvx} [\E_{q_\phi(\rvz|\rvx)}[\log p_\theta(\rvx|\rvz)] -  \KL(q_\phi(\rvz|\rvx) \| p(\rvz))\ .
\end{align*}

The common approach for disentanglement is to enforce some structural constraints to the distribution learned by the encoder of a VAE through a carefully designed regularizer:
\begin{align*}
\max_{\phi, \theta}\quad \E_{\rvx} [\E_{q_\phi(\rvz|\rvx)}[\log p_\theta(\rvx|\rvz)] -  \KL(q_\phi(\rvz|\rvx) \| p(\rvz)) + \beta R_u(q_\phi(\rvz|\rvx))]
\end{align*}
This regularizer corresponds to a constraint $R_u(q_\phi(\rvz|\rvx))\leq \tau$, where $R_u$ is a function enforcing certain statistical properties to the encoder  $q_\phi$ (\eg factorizing aggregate posterior) and $\tau >0$ is a threshold.

\looseness=-1The early promising results on synthetic data sets led to several applications anecdotally linking disentangled representations learned by autoencoders to downstream benefits~\citep{steenbrugge2018improving,laversanne2018curiosity,nair2018visual,higgins2017darla,higgins2018scan}. In our work, we broadly investigate disentangled representations learned with VAEs, exploring different supervision settings, their evaluation, usefulness and conceptual limits.

In Chapter~\ref{cha:unsup_dis}~\citep{locatello2019challenging,locatello2020sober,locatello2020commentary}, we provide a sober look at the unsupervised learning of disentangled representations, discussing the recent progress and highlighting the limits. We present a theoretical result showing that the unsupervised learning of disentangled representations is impossible for arbitrary data sets. Further, we analyse the performance of state-of-the-art approaches, focusing on model selection questions that are particularly relevant for practitioners.
We observe that while the different methods successfully enforce properties ``encouraged'' by the corresponding losses, well-disentangled models seemingly cannot be identified without supervision.

In Chapter~\ref{cha:eval_dis}~\citep{locatello2020sober}, we focus on the evaluation of disentangled representations. In fact, measuring disentanglement is non-trivial and which metric should be used is debated. Therefore, we study the different ``notions'' of disentanglement being measured by the different metrics and investigate how seemingly small implementation decisions can affect the end results. These considerations are important to better interpret the results of Chapters~\ref{cha:unsup_dis},~\ref{cha:semi_sup},~\ref{cha:fairness}, and~\ref{cha:weak}.

In Chapter~\ref{cha:semi_sup}~\citep{locatello2019disentangling}, we investigate the impact of explicit supervision on state-of-the-art disentanglement methods. We observe that a small number of labeled examples (0.01--0.5\% of the data set), with potentially imprecise and incomplete labels, is sufficient to perform model selection. Further, we investigate the benefit of incorporating supervision into the training process.
Overall, we empirically validate that it is possible to reliably learn disentangled representations with little and imprecise supervision. Although this setting is clearly less elegant than a purely unsupervised approach, we argue that imprecise explicit supervision may be cheaply obtained in some applications. 

In Chapter~\ref{cha:fairness}~\citep{locatello2019fairness}, we investigate the usefulness of the notions of disentanglement studied in Chapter~\ref{cha:eval_dis} for improving the fairness of simple downstream classification tasks. We consider the setting of predicting a target variable based on a learned representation of high-dimensional observations (such as images) that depend on both the target variable and an \emph{unobserved} sensitive variable. We make the additional assumption that target and sensitive variables are only dependent conditioned on the observations. While this may seem restrictive, we show that training fair classifiers is still non-trivial. Analyzing the representations of \num{12600} models we trained for the analysis in Chapter~\ref{cha:fairness}, we observe that several disentanglement scores are consistently correlated with increased fairness, suggesting that disentanglement may be a useful property to encourage fairness when sensitive variables are not observed (under the assumption that disentangled representations can be learned and identified without explicit suppervision).

In Chapter~\ref{cha:weak}~\citep{locatello2020weakly}, we consider the setting where an agent is trying to learn disentangled representations observing changes in their environment. We model this setting sampling pairs of non-i.i.d. images sharing most of the underlying factors of variation. These can be thought of as nearby frames in a video, under the assumption that changes in temporally close frames should be sparse. We prove that this setting is theoretically identifiable under some additional assumptions, such as knowing \textit{how many} factors have changed, but not which ones. Inspired by the analysis, we provide methods to learn disentangled representations from paired observations. In a large-scale empirical study, we show that this type of weak supervision allows learning of disentangled representations on several benchmark data sets. Further, we find that these representations are \emph{simultaneously} useful on a diverse suite of tasks, including generalization under covariate shifts, fairness, and abstract reasoning. Overall, our results demonstrate that weak supervision enables learning of useful disentangled representations in arguably realistic scenarios.

In Chapter~\ref{cha:slot_attn}~\citep{locatello2020object}, we discuss a critical limitation of disentanglement. Pragmatically, disentangled representations as described in Chapters~\ref{cha:unsup_dis}--\ref{cha:weak} are the output of a (convolutional) neural network and are represented in a vector format. This is problematic as it prohibits compositional generalization and does not permit the network to learn a notion of objects and their description through independently controllable factors of variation. In fact, the capacity of the representation is fixed and only a fixed and constant number of objects (across the data set) can be disentangled. Turning to a simpler supervised learning scenario, we introduce Slot Attention, a novel architectural component that maps perceptual features such as the output of a CNN to a set of slots with a common representational format. These slots are exchangeable and can bind to any object in the input by specializing through a competitive procedure over multiple rounds of attention. 
In a supervised set prediction task, we show that Slot Attention succeeds in learning a set representation of the input that generalizes to a different number of objects at test time. 

\end{enumerate}

\paragraph{Personal Retrospective} \looseness=-1From the beginning of my PhD, I was interested in improving learning algorithms by incorporating natural structure. In the spirit of tackling simpler problems first, I spent the first two years of my PhD focusing on how to efficiently incorporate known structure. My goal was to develop a framework to express the solution of a learning algorithm as a general combination of elements from a set. This line of research proved to be rather fruitful from the theoretical perspective, but I struggled to find immediate and exciting real-world applications (with the partial exception of Chapter~\ref{ch:SFW}, which describes my last paper in optimization as a first author). In the last two years of the Ph.D., I became more interested in learning the structure. My interest sparked during ICML 2018, where I visited several disentanglement talks and posters with Olivier Bachem. At the superficial level, it was easy to be convinced of the usefulness of factorizing information in a disentangled format, as also argued in the seminal paper from~\citet{bengio2013representation}. We decided to investigate which inductive bias was key for disentanglement, develop new state-of-the-art algorithms, and explore applications. This problem turned out to be much more challenging than expected and led to a whole research agenda. In Chapter~\ref{cha:disent_background}, I describe in more details our research agenda and on how our paper ``\textit{Challenging Common Assumptions in the Unsupervised Learning of Disentangled Representations}''~\citep{locatello2019challenging} (best paper award at ICML 2019) changed my views on this problem shaping my following research. From the theoretical impossibility and the practical limitations, we set off to address all the issues we discovered. Most came together in the ``\textit{Weakly-Supervised Disentanglement without Compromises}''~\citep{locatello2020weakly} described in Chapter~\ref{cha:weak}, where we could present identifiability results with practical algorithms that were useful on different tasks. In Chapters~\ref{cha:slot_attn} and~\ref{cha:conclusion}, I will highlight conceptual limitations of the disentanglement framework. I believe that addressing these issues will require different architectures and move machine learning closer to causality.

\section{Publications relevant to this dissertation}
\label{sec:publications}
This dissertation (including the present section) is based upon the following publications and technical reports:

\textbf{Part~\ref{part:opt}: \nameref{part:opt}}
\begin{itemize}
  \item \myfullcite{locatello2020stochastic}
  \item \myfullcite{locatello2018matching}
  \item \myfullcite{locatello2017greedy}
  \item \myfullcite{LocKhaGhoRat18} 
  \item \myfullcite{locatello2018boosting} 
\end{itemize}

\textbf{Part~\ref{part:rep_learn}: \nameref{part:rep_learn}}
\begin{itemize}
  \item \myfullcite{locatello2019challenging} 
  \item \myfullcite{locatello2020commentary}
  \item \myfullcite{locatello2020sober}
  \item \myfullcite{locatello2019fairness}
  \item \myfullcite{locatello2019disentangling}
  \item \myfullcite{locatello2020weakly}
  \item \myfullcite{locatello2020object} 
  \item \myfullcite{scholkopf2020towards}
  
\end{itemize}
The following publications and technical reports are also relevant to but not covered in this dissertation.
\begin{itemize}
  \item \myfullcite{locatello2017unified}
  \item \myfullcite{yurtsever2018conditional}
  \item \myfullcite{locatello2018clustering}
  \item \myfullcite{fortuin2018deep}
  \item \myfullcite{gondal2019transfer}
  \item \myfullcite{van2019disentangled}
  \item \myfullcite{gresele2019incomplete}
  \item \myfullcite{trauble2020independence}
  \item \myfullcite{stark2020scim}
  \item \myfullcite{negiar2020stochastic}
  \item \myfullcite{dittadi2020transfer}
\end{itemize}

\section{Collaborators}
The content of this dissertation was developed across multiple institutions and with multiple collaborators. The work described in Part~\ref{part:opt}, was done while Francesco Locatello was at ETH Zurich and at the Max-Planck Institute for Intelligent Systems and over several visits to EPFL. This work was developed in collaboration with (listed in random order) Alp Yurtsever, Olivier Fercoq, Volkan Cevher, Anant Raj, Sai Praneeth Karimireddy, Gunnar R\"atsch, Bernhard Sch\"olkopf, Sebastian U. Stich, Martin Jaggi, Michael Tschannen, Gideon Dresdner, Rajiv Khanna, Isabel Valera, Joydeep Ghosh. The work described in Part~\ref{part:rep_learn}, was done while Francesco Locatello was at ETH Zurich, at the Max-Planck Institute for Intelligent Systems, and at Google Research (Brain Zurich and Amsterdam teams). 
This work was developed in collaboration with (listed in random order) Ben Poole, Stefan Bauer, Mario Lucic, Gunnar R\"atsch, Gabriele Abbati, Tom Rainforth, Sylvain Gelly, Bernhard Sch\"olkopf, Olivier Bachem, Rosemary Nan Ke, Nal Kalchbrenner, Anirudh Goyal, Yoshua Bengio, Michael Tschannen, Dirk Weissenborn, Thomas Unterthiner, Aravindh Mahendran, Georg Heigold, Jakob Uszkoreit, Alexey Dosovitskiy, and Thomas Kipf.
The specific contributions are highlighted as a dedicated paragraph at the beginning of each chapter.

\part{Constrained Optimization}
\label{part:opt}


\chapter{Introduction and Background}\label{cha:opt_background}
Greedy algorithms led to many success stories in machine learning (\eg, boosting and iterative inference), signal processing (\eg, compressed sensing), and optimization.
The most prominent representatives are matching pursuit (MP) algorithms \cite{Mallat:1993gu} with their Orthogonal variants (\eg Orthogonal Matching Pursuit -- OMP) \cite{chen1989orthogonal, Tropp:2004gc}, Coordinate Descent \citep{nesterov2012efficiency}, and Frank-Wolfe (FW)-type algorithms \cite{frank1956algorithm}.
All operate in the setting of minimizing an objective over combinations of a given set of atoms, or dictionary elements.
These classes of methods have strong similarities. In particular, they are iterative algorithms that rely on the very same subroutine, namely selecting the atom with the largest inner product with the negative gradient.

\looseness=-1The main difference is that MP and CD methods optimize over the \textit{linear span} of the atoms, while FW methods optimize over their \textit{convex hull}. An important ``intermediate case'' between the two domain parameterizations is the \emph{conic hull} of a possibly infinite atom set. In this case, the solution can be represented as a \emph{non-negative} linear combination of the atoms. All these cases may be desirable in many applications, \eg, due to the physics underlying the problem at hand, or for the sake of interpretability. This seemingly small difference has significant implications on the analysis of these algorithms. 
Concrete classical application examples include unmixing problems \cite{esser2013method, gillis2016fast,behr2013mitie}, model selection \cite{makalic2011logistic}, and (non-negative\footnote{In this case, we remark that a tractable approximation of the subroutine is still generally missing to the best of our knowledge.}) structured matrix and tensor factorizations \cite{berry2007algorithms, kim2012fast, wang2014matrixcompletion,Yang:2015wy, yaogreedy,guo2017efficient}. Other example applications include multilinear multitask learning \citep{romera2013multilinear}, matrix completion and image denoising \citep{tibshirani2015general}, boosting \citep{meir2003introduction,buhlmann2005boosting}, structured SVM training \citep{lacoste2013block}, and particular instances of semidefinite programs \citep{vandenberghe1996semidefinite}.

\looseness=-1Despite the vast literature on MP-type methods, which typically gives recovery guarantees for sparse signals, surprisingly little is known about MP algorithms in terms of optimization, \ie, how many iterations are needed to reach a defined target accuracy. Furthermore, all existing MP variants for the conic hull case \cite{bruckstein2008uniqueness,ID52513,Yaghoobi:2015ff} are not even guaranteed to converge.
In the context of sparse recovery, convergence rates typically come as a byproduct of the recovery guarantees and depend on strong assumptions (from an optimization perspective), such as incoherence or restricted isometry properties of the atom set \cite{Tropp:2004gc,davenport2010analysis}. Motivated by this line of work, \cite{Gribonval:2006ch,Temlyakov:2013wf, Temlyakov:2014eb, nguyen2014greedy} specifically target convergence rates but still rely on incoherence properties. 
On the other hand, FW methods are well understood from an optimization perspective, with strong explicit convergence results available for a large class of input problems, see, \eg, \cite{jaggi2013revisiting,LacosteJulien:2015wj} for a recent account.
A notable example application of greedy optimization algorithms that spawned an entire sub-field in the Machine Learning community is boosting \cite{friedman2001greedy,freund1999short,meir2003introduction}. The classical analysis of boosting algorithms is related to the early convergence results of Steepest Coordinate Descent \cite{luo1992convergence,ratsch2001convergence} generalized to Hilbert spaces.

\paragraph{Our Goal} 
\looseness=-1In this part of the dissertation, we aim at unifying the convergence analysis of several first-order greedy optimization methods under a single framework. This unification effort seeks to provide a generic framework that can be instantiated to a wide class of optimization problems. Previously, the discussed approaches have been separately studied by different communities. Therefore, their theoretical understanding is often geared towards the needs of the particular community. The literature on MP is focused on recovery guarantees (which are not covered in this dissertation), the literature on CD and FW is closer to optimization while boosting variational inference is rather empirical. The advantage of this unified framework is that it allows connecting different algorithms that generally apply to diverse settings. By virtue of this connection, we can extend some properties and rates from one algorithm/approach to the next.
Our contributions span novel stochastic optimization algorithms for semidefinite programs, accelerated rates for MP and Greedy Coordinate Descent, rates for the non-negative variants of Matching Pursuit, and applications of Frank-Wolfe to boosting variational inference.

\paragraph{General Setting} To give a general perspective on greedy projection-free optimization, consider the following optimization template:
\begin{equation}\label{eq:general_problem}
\min_{\theta\in\cD} f(\theta)
\end{equation}
where $f:\cH\rightarrow \R$ is a smooth function, and $D\subset\cH$ is the optimization domain in some Hilbert space $\cH$. In this dissertation, we focus on domains that can be parameterized as a weighted combination of elements from a compact set $\cA$. Our goal is to develop and analyze algorithms that do not require projections onto the optimization domain and instead follow the general greedy template of Algorithm~\ref{algo:generalgreedy}.

\begin{algorithm}[ht]
  \caption{General Greedy Optimization over Combinations of Atoms}
  \label{algo:generalgreedy}
\begin{algorithmic}[1]
  \STATE \textbf{init} $\theta_{0} \in \conv(\cA)$, $\cS =\left\lbrace \theta_0 \right\rbrace$
  \STATE \textbf{for} {$k=0\dots K$}
  \STATE \qquad Find $z_k := (\text{Approx-}) \lmo_{\cA}(\nabla f(\theta_{k}))$
  \STATE \qquad $\cS = \cS\cup z_k$
  \STATE \qquad $\theta_{k+1} = \textsf{update}(\theta_k,\cS,f) $
    \STATE \qquad \emph{Optional:} Correction of some/all atoms $z_{0\ldots k}$
  \STATE \textbf{end for}
\end{algorithmic}
\end{algorithm}
\paragraph{Linear Oracles}
Instead of projections, the primitive operation we assume we can efficiently solve are \textit{linear projections} over the set $\cA$. 
At each step of the optimization procedure, we query a linear minimization oracle (\lmo) to find the closest direction among the set~$\cA$:
\begin{equation}\tag{\lmo}\label{eqn:lmo}
\lmo_\cA(y) := \argmin_{z\in\cA} \,\ip{y}{z} \,,
\end{equation}
for a given query vector $y\in\cH$. Whether projected gradient steps or linear problems are more efficient depends on the shape and parameterization of $\cD$ through the set $\cA$. In practice, however, the solution of the \lmo subroutine is rarely computed exactly. In particular, for matrix problems (\eg $\cA$ contains rank one matrices), the \lmo can be \textit{approximated} efficiently using shifted power methods or the randomized subspace iterations \cite{HMT11:Finding-Structure}. Following \citep{jaggi2013revisiting}, we consider additive and multiplicative errors.
For given quality parameter $\delta\geq 0$, an objective with curvature $\Cf$ (see Equation 
\ref{def:Cf} or \citep{jaggi2013revisiting,locatello2017unified} for a definition) and any direction $d\in\cH$, the approximate \lmo with additive error returns at iteration $k$ of Algorithm \ref{algo:generalgreedy} a vector $\tilde{z}\in\cA$ such that:
\begin{equation}\tag{\lmo \ -- additive}\label{eqn:lmo_add}
    \ip{d}{\tilde{z}} \leq \min_{z\in\cA} \ip{d}{z} + \frac{\delta\Cf }{k + 2} 
\end{equation}
The definition of additive error for the approximate oracle is however rather specific as it depends on both the curvature of the objective and the iteration of the algorithm. Instead, we can define the multiplicative error for given quality parameter $\delta\in \left( 0,1\right]$ as:
\begin{equation}\tag{\lmo \ -- multiplicative}\label{eqn:lmo_mult}
    \ip{d}{\tilde{z}} \leq \min_{z\in\cA} \delta\ip{d}{z} 
\end{equation}

\paragraph{Update Step}\looseness=-1The update function is chosen to maintain the feasibility of the iterate. In other words, the optimization domain $\cD$ is closed under the update function.
We allow flexibility on the choice of the update rule and analyze different strategies. For example, the update might depend on the gradient, the function itself, the most recent atom $z_k$ or all the previously selected atoms.
Common to most of our analysis is that the function decrease is measured with an upper bound of $f$ at $\theta_k$, given as:
 \begin{equation}\label{eq:QuadraticUpperBound}
f(\theta) \leq g_{\theta_{k}}(\theta) \, \quad \forall \, \quad  \theta, \quad \text{where} \quad g_{\theta_{k}}(\theta)  := f(\theta_{k}) + \langle\nabla f(\theta_{k}), \theta-\theta_{k}\rangle+\frac{L}{2}\|\theta-\theta_{k}\|^2
\end{equation}
which is also considered as an alternative to line search on the true objectiv. $L$ is an upper bound on the smoothness constant of~$f$ with respect to the Hilbert norm $\|\cdot\|$.

\section{Frank-Wolfe}
The FW algorithm, also referred to as \textit{Conditional Gradient Method} (CGM), dates back to the 1956 paper of Frank and Wolfe \cite{frank1956algorithm}. It did not acquire much interest in machine learning until the last decade because of its slower convergence rate compared to the (projected) accelerated gradient methods. However, there has been a resurgence of interest in FW and its variants, following the seminal papers of Hazan \cite{hazan2008sparse} and Jaggi \cite{jaggi2013revisiting}. They demonstrated that FW might offer superior computational complexity than state-of-the-art methods in many large-scale optimization problems (that arise in machine learning) despite its slower convergence rate, thanks to its lower per-iteration cost.

The original method by Frank and Wolfe \cite{frank1956algorithm} was proposed for smooth convex minimization on polytopes. 
The analysis is extended for smooth convex minimization on simplex by Clarkson \cite{Clarkson2010}, spectrahedron by Hazan \cite{hazan2008sparse}, and finally for arbitrary compact convex sets by Jaggi \cite{jaggi2013revisiting}. All these methods are restricted to smooth objectives.

The FW algorithm, presented in Algorithm \ref{algo:FW}, targets the optimization problem
\begin{align}\label{eq:FWproblem}
\min_{\theta\in\conv(\cA)} f(\theta),
\end{align}
where $\conv(\cA) \subset \cH$ is convex and bounded and $f$ is a smooth function. In many applications (\eg lasso and low rank matrix problems), the domain is parameterized as the convex hull of a dictionary $\cA$, \ie, $\cD=\conv(\cA)$.

\begin{algorithm}[ht]
  \caption{Frank-Wolfe constant step size and line search variants \citep{frank1956algorithm,hazan2008sparse,jaggi2013revisiting,locatello2017unified}}
  \label{algo:FW}
\begin{algorithmic}[1]
  \STATE \textbf{init} $\theta_{0} \in \conv(\cA)$
  \STATE \textbf{for} {$k=0\dots K$}
  \STATE \qquad Find $z_k := (\text{Approx-}) \lmo_{\cA}(\nabla f(\theta_{k}))$
  \STATE \qquad \emph{Variant 0:} $\gamma := \frac{2}{t+2}$
  \STATE \qquad \emph{Variant 1:} $\gamma := \displaystyle\argmin_{\gamma\in[0,1]}  f({\theta_{k} \!+\! \gamma(z_k-\theta_{k})})$\vspace{-1mm}
  \STATE \qquad \emph{Variant 2:} $\gamma := \clip_{[0,1]}\!\big[ \frac{\langle -\nabla f(\theta_k), z_k - \theta_k\rangle}{\diam_{\|.\|\!}(\cA)^2 L} \big]$
  \STATE \qquad \emph{Variant 3:} $\gamma := \clip_{[0,1]}\!\big[ \frac{\langle -\nabla f(\theta_k), z_k - \theta_k\rangle}{\|z_k - \theta_k\|^2 L} \big]$
  \STATE \qquad Update $\theta_{k+1}:= \theta_{k} + \gamma(z_k-\theta_{k})$
    \STATE \qquad \emph{Optional:} Correction of some/all atoms $z_{0\ldots t}$
  \STATE \textbf{end for}
\end{algorithmic}
\end{algorithm}
\paragraph{Intuitive Explanation}
At each iteration, the $\lmo$ returns an element of $\cA$, which is also a descent direction. Then, a convex combination between this descent direction and the previous iterate ensures that the next iterate remains in the convex hull. Importantly, we do not need to know the smoothness parameter $L$ exactly; an upper bound is always sufficient to ensure convergence.
The convergence of Algorithm~\ref{algo:FW} can be intuitively motivated by the fact that the $\lmo$ is minimizing the supporting hyperplane to the graph of $f$ computed in $\theta_k$ on the constraint set $\conv(\cA)$. By convexity of $f$, the linearization lies beneath the graph of $f$ inducing the notion of \textit{poor man duality} \cite{jaggi2011convex}. Let the dual variable $w(\theta)$ be the minimum value obtained by the linear approximation computed at $\theta$ on $\conv(\cA)$. Intuitively, the distance between $w(\theta)$ and $f(\theta)$ is zero at the optimum and by weak duality is an upper bound of the primal error $f(\theta)-f(\theta^\star)$ where $\theta^\star$ is the minimizer of Equation~\eqref{eq:FWproblem}. Note that the minimum of the linear approximation is obtained at the solution of the $\lmo$. Therefore, the $\lmo$ is selecting at each iteration the point that minimizes the duality gap, thus yielding convergence.

\paragraph{Variants for Non-Smooth Optimization}
Existing variants of non-smooth Frank-Wolfe \cite{Lan2014,Lan2016} are based on Nesterov smoothing for Lipschitz continuous objectives.

Nesterov Smoothing \cite{Nesterov2005} approximates a Lipschitz continuous function $g$ as:
\begin{equation*}
g_\beta (z) = \max_{y \in \R^d} \ip{z}{y} - g^*(y) - \frac{\beta}{2} \norm{y}^2,
\end{equation*}
where $\beta>0$ controls the tightness of smoothing and $g^*$ denotes the Fenchel conjugate of $g$.
It is easy to see that $g_\beta$ is convex and $\frac{1}{\beta}$ smooth.
Optimizing $g_\beta (z)$ guarantees progress on $g(z)$ when $g(z)$ is $L_g$-Lipschitz continuous as
$
g_\beta(z)\leq g(z)\leq g_\beta(z) + \frac{\beta}{2} L_g^2.
$
The challenge of smoothing an affine constraint consists in the fact that the indicator function is not Lipschitz. Therefore, $g^*$ does not have bounded support, so adding a strongly convex term to it does not guarantee that $g$ and its smoothed version are uniformly close. 

To smooth constraints which are not Nesterov smoothable in the Frank-Wolfe setting, we proposed in \cite{yurtsever2018conditional} a Homotopy transformation on $\beta$ which can be intuitively understood as follows. If $\beta$ decreases during the optimization, optimizing $g_\beta(z)$ will progressively become similar to optimizing $g(z)$. When $g$ is the indicator function of an affine constraint, the iterate will converge to the feasibility set as $\beta$ goes to zero. This technique comes with a reduction in the rate from $\mathcal{O}(1/k)$ for the smooth setting to $\mathcal{O}(1/\sqrt{k})$.
In a follow-up work \cite{yurtsever2019cgal}, extended this method from quadratic penalty to an augmented Lagrangian formulation for empirical benefits. Gidel et al., \cite{Gidel2018} also proposed an augmented Lagrangian FW with a different analysis. We refer to the references in \cite{yurtsever2019cgal, yurtsever2018conditional} for other variants in this direction. 

\paragraph{Stochastic Variants and Further Non-Smooth Extensions}
So far, we have focused on deterministic variants of Frank-Wolfe. The literature on stochastic variants can be traced back to Hazan and Kale's projection-free methods for online learning \cite{Hazan2012}. 
When $g$ is a non-smooth but Lipschitz continuous function, their method returns an $\varepsilon$-solution in $\mathcal{O}(1/\varepsilon^4)$ iterations. 

The standard extension of FW to the stochastic setting gets $\mathcal{O}(1/\varepsilon)$ iteration complexity for smooth minimization, but with an increasing minibatch size. 
Overall, this method requires $\mathcal{O}(1/\varepsilon^3)$ sample complexity, see \cite{Hazan2016} for the details. More recently, Mokhtari et al., \cite{mokhtari2020stochastic} proposed a new variant with $\mathcal{O}(1/\varepsilon^3)$ iteration complexity, but the proposed method can work with a single sample at each iteration. Hazan and Luo \cite{Hazan2016} and Yurtsever et al., \cite{yurtsever2019spiderfw} incorporated various variance reduction techniques for further improvements. Goldfarb et al., \cite{pmlr-v54-goldfarb17a} introduced two stochastic FW variants, with away-steps and pairwise-steps (see next section). These methods enjoy linear convergence rate (however, the batchsize increases exponentially) but for strongly convex objectives and only in polytope domains. None of these stochastic FW variants work for non-smooth (or composite) problems. 

The non-smooth conditional gradient sliding by Lan and Zhou \cite{Lan2016} also has extensions to the stochastic setting, \eg the lazy variant of Lan et al., \cite{lan2017conditional}.
Note, however, that, as in their deterministic variants, these methods are based on the Nesterov's smoothing and are not suitable for problems with affine constraints.   

Garber and Kaplan \cite{garber2018fast} consider composite problems and propose a variance reduced algorithm that solves a smooth relaxation of the template (see Definition~1 Section~4.1). 

Lu and Freund \cite{lu2020generalized} also studied a composite template but their method incorporates the non-smooth term into the linear minimization oracle. 
This is restrictive in practice because the indicator function can significantly increase the oracle's cost, \eg in the case of semidefinite programs. 

To the best of our knowledge, there is no stochastic projection-free method for convex composite problems.

\paragraph{Non-Convex Setting}
In recent years, FW has also been extended for non-convex problems. These extensions are beyond the scope of this dissertation. We refer to Yu et al., \cite{yu2017generalized} and Julien-Lacoste \cite{julienlacoste2016nonconvexfw} for the non-convex extensions in the deterministic setting, and to Reddi et al., \cite{reddi2016stochastic}, Yurtsever et al., \cite{yurtsever2019spiderfw}, and Shen et al. \cite{Shen2019} in the stochastic setting.

\paragraph{Limitations of the Classical Algorithm}
Unfortunately, when the optimum lies on a face of $\conv(\cA)$ convergence is known to be slow. Indeed, the next iterate can only be formed as a convex mixture between a vertex of the set $\conv(\cA)$ and the current iterate. Therefore, to reach a face, the \lmo will alternate between its vertexes. This problem is known as the \textit{zig-zagging phenomenon}, see \cite{LacosteJulien:2015wj} for a more detailed overview. 

\subsection{Corrective Variants}
To alleviate the zig-zagging problem, \textit{corrective variants} were introduced to allow for a richer set of possible updates. In particular, Wolfe proposed to include the possibility to move away from an active atom in $\cS$ whenever it would yield larger descent than the regular FW step \cite{Wolfe:1970wy}. This method is called \textit{away-step Frank Wolfe} (AFW), and is presented in Algorithm~\ref{algo:AFW}. Alternatively, one can selectively swap weight between one active atom and the descent direction returned by the \lmo \cite{LacosteJulien:2015wj}. By doing so, the update selectively shrinks the weight of a single element of $\cS$ rather than uniformly shrinking the weight of every active atom. This algorithm is known as \textit{Pairwise Frank-Wolfe} (PFW) and is presented in Algorithm~\ref{algo:PFW}. 

At the extreme, one can selectively refine the weight of every  atom in $\cS$ as long as they remain normalized. This approach is known as \textit{Fully Corrective Frank-Wolfe} depicted in Algorithm~\ref{algo:FCFW} Variant 1 (see, \eg, \cite{Holloway:1974ii,jaggi2013revisiting}). We instead proposed in \citep{locatello2017unified} to minimize the simpler quadratic upper bound~\eqref{eq:QuadraticUpperBound} over the atom selected at the current iteration (using line-search) or over $\conv(\cS)$.  The name ``norm-corrective'' illustrates that the algorithm employs a simple squared norm surrogate function (or upper bound on $f$, similarly to Variant 2 and 3 in Algorithm \ref{algo:FW}), which only depends on the smoothness constant $L$. 
Finding the closest point in norm on the simplex may be more efficient than solving the more general optimization problem as in the ``fully-corrective'' variant (Variant 0 of Algorithm \ref{algo:FCFW}).
Approximately solving the subproblem in Variant 1 can be done efficiently using projected gradient steps on the weights (as the projection onto the simplex and L1 ball is efficient). Assuming a fixed quadratic subproblem as in Variant 1, the CoGEnT algorithm of \cite{Rao:2015df} uses the same ``enhancement'' steps. The difference in the presentation here is that we address general~$f$ so the quadratic correction subproblem changes in every iteration in our case.

\begin{algorithm}[ht]
	\caption{Away-steps Frank-Wolfe algorithm \citep{Wolfe:1970wy}}
	\label{algo:AFW}
	\begin{algorithmic}[1]
	\STATE Let $\theta_0 \in \cA$, and $\cS := \{\theta_0\}$
	\FOR{$k=0\dots K$}
		\STATE Let $z_k := \lmo \!\left(\nabla f(\theta_k)\right)$ %
		   and $d_k^\FW := s_k - \theta_k$ \qquad~~ \emph{\small(the FW direction)}
		\STATE Let $v_k \in \displaystyle\argmax_{v \in \cS} \textstyle\left\langle \nabla f(\theta_k), v \right\rangle$ and $d_k^A := \theta_k - v_k$ \qquad \emph{\small(the away direction)}
		  \IF{$\left\langle -\nabla f(\theta_k), d_k^\FW\right\rangle  \geq \left\langle -\nabla f(\theta_k), d_k^A\right\rangle$ }
		  \STATE $d_k :=  d_k^\FW$, and $\gamma_{\max} := 1$  
			     \hspace{22mm}\emph{\small(choose the FW direction)}
		  \ELSE
		  \STATE $d_k :=  d_k^A$, and $\gamma_{max} := \alpha_{v_k} / (1- \alpha_{v_k})$
		  	\hspace{5mm}\emph{\small(choose away direction; maximum feasible step-size)}
		  \ENDIF	
		  \STATE Line-search: $\gamma_k \in \displaystyle\argmin_{\gamma \in [0,\gamma_{max}]} \textstyle f\left(\theta_k + \gamma d_k\right)$ 
		  \STATE Update $\theta_{k+1} := \theta_{k} + \gamma_k d_k$  
		  	\hspace{2cm}\emph{\small(and accordingly for the weights $\alpha^{(t+1)}$, see text)}
		  \STATE Update $\cS := \{v \in \cA \,\: \mathrm{ s.t. } \,\: \alpha^{(t+1)}_{v} > 0\}$
		    \STATE \qquad \emph{Optional:} Correction of some/all atoms $z_{0\ldots t}$
	\ENDFOR
	\end{algorithmic}
\end{algorithm}
\begin{algorithm}
	\caption{Pairwise Frank-Wolfe algorithm \citep{LacosteJulien:2015wj}}
	\label{algo:PFW}
	\begin{algorithmic}[1]
	\STATE $\ldots$ as in Algorithm~\ref{algo:AFW}, except replacing lines~6 to~10 with:  $\,\, d_k = d^{PFW}_k \!\!\!:= z_k - v_k$, and $\gamma_{\max} := \alpha_{v_k}$.
	\end{algorithmic}
\end{algorithm}

\begin{algorithm}[ht]
\caption{Fully-Corrective Frank-Wolfe \citep{locatello2017unified,LacosteJulien:2015wj}}
\label{algo:FCFW}
\begin{algorithmic}[1]
  \STATE \textbf{init} $\theta_{0} = 0 \in \cA, \cS = \{\theta_0 \}$
  \STATE \textbf{for} {$k=0\dots K$}
  \STATE \quad Find $z_k := (\text{Approx-}) \lmo_{\cA}(\nabla f(\theta_{k}))$
  \STATE \quad $\cS := \cS \cup \{ z_k\}$
\STATE \quad \textit{Variant 0 -- norm-corrective:} \\
 \qquad $\theta_{k+1} \! =\! \underset{{\theta\in\conv(\cS)}}{\argmin} \!\|\theta - (\theta_k -\frac{1}{L}\nabla f(\theta_k))\|^2\!$
\STATE \quad \textit{Variant 1 -- fully-corrective:} \\ \qquad $\theta_{k+1} = \argmin_{\theta\in\conv(\cS)} f(\theta)$
\STATE \quad Remove atoms with zero weights from $\cS$
  \STATE \qquad \emph{Optional:} Correction of some/all atoms $z_{0\ldots t}$
  \STATE \textbf{end for}
\end{algorithmic}
\end{algorithm}

\section{Matching Pursuit and Coordinate Descent}
In Algorithm \ref{algo:MP}, we present the generalized Matching Pursuit of \citep{locatello2017unified} to solve convex optimization problems on the linear span of a symmetric atom set $\cA$. This symmetry assumption can be relaxed to non-symmetric sets as long as the origin is in the relative interior of $\conv(\cA)$.
   \begin{algorithm}
\caption{Generalized Matching Pursuit \citep{locatello2017unified}}
  \label{algo:MP}
\begin{algorithmic}[1]
  \STATE \textbf{init} $\theta_{0} \in \lin(\cA)$
  \STATE \textbf{for} {$k=0\dots K$}
  \STATE \quad Find $k_k := (\text{Approx-}) \lmo_\cA(\nabla f(\theta_{k}))$
  \STATE \quad $\theta_{k+1} := \theta_k - \frac{\ip{\nabla f(\theta_k)}{z_k}}{L\|z_k\|^2}z_k$
  \STATE \textbf{end for}
\end{algorithmic}
\end{algorithm}
\vspace{-0.2mm}
The algorithm’s structure follows that of Algorithm \ref{algo:generalgreedy}, with the \lmo finding the steepest descent direction among the set~$\cA$. This subroutine is shared with the Frank-Wolfe method~\cite{frank1956algorithm,jaggi2013revisiting} and steepest coordinate descent \citep{nesterov2012efficiency}.
The update step of MP is computed by minimizing the quadratic upper bound given by the objective's smoothness in Equation \eqref{eq:QuadraticUpperBound}.
For $f(\theta) = \frac{1}{2}\|y -\theta\|^2$, $y \in \cH$, Algorithm~\ref{algo:MP} recovers the classical MP algorithm \cite{Mallat:1993gu}.
Besides the classical work on signal recovery, convergence rates of MP on smooth objectives were studied in \cite{ShalevShwartz:2010wq, Temlyakov:2013wf, Temlyakov:2014eb, Temlyakov:2012vg, nguyen2014greedy, locatello2017unified} (see \citep{locatello2017unified} for a more detailed review and further references). 

\paragraph{Steepest Coordinate Descent.}
\looseness=-1The MP algorithm can be seen as a generalized version of steepest coordinate descent \cite{nesterov2012efficiency}. When $\cA$ is the L1-ball the $\lmo$ problem becomes 
$i_k = \argmax_{i} | \nabla_i f(\theta)| \,,$ where $\nabla_i$ is the $i$-th component of the gradient, \ie $\ip{\nabla f(\theta)}{e_i}$ with $e_i$ being one of the natural vectors. The update step is:
\begin{align*}
\theta_{k+1} := \theta_{k+1} - \frac{1}{L}\nabla_{i_k} f(\theta_k)e_{i_k} \,.
\end{align*}
Note that assuming a symmetric atom set, the $\lmo$ problem is equivalent to finding the steepest descent direction in the set $\cA$, \ie the most aligned with the negative gradient. Therefore, the positive stepsize $- \frac{\langle \nabla f(\theta_k), z_k\rangle}{L} = - \frac{1}{L}\nabla_{i_k} f(\theta_k)$ decreases the objective. In this dissertation, we explicitly connect the two algorithms providing a unified analysis. The advantage is that our rates for CD are tighter than previous rates with global smoothness constant, and we can prove new accelerated rates for MP, generic random pursuit algorithms, and steepest CD.

\paragraph{Challenges for Accelerated MP Rates}
The connection in our analysis between CD and MP allows us to extend the accelerated analysis of \cite{Lee13,Nesterov:2017} to the latter. The main challenge is that the only accelerated rates known are for random coordinate descent. \citet{song2017accelerated} proposed an accelerated greedy coordinate descent method by using the linear coupling framework of \cite{allen2017linear}. However, the updates they perform at each iteration are not guaranteed to be sparse, which is critical as we wish to represent the iterate as a sparse combination of atoms.

\section{Non-Negative Matching Pursuit}\label{sec:back_NNMP}
The optimization convergence analysis of Algorithm~\ref{algo:MP} relies on the origin being in the relative interior of $\conv(\cA)$, which is trivially implied by the symmetry of $\cA$ \cite{locatello2017unified}. This assumption is critical as it ensures that for any possible gradient, the $\lmo$ can find a descent direction within $\cA$, see the \textit{third premise} formulated by \citet{pena2015polytope}. 
In other words, unless we are at the optimum:
$\min_{z\in\cA}\langle \nabla f(\theta_k), z \rangle < 0$
which implies that the resulting stepsize in Algorithm~\ref{algo:MP} is positive. While this sounds promising for minimization problems over convex cones, there can be non-stationary points $\theta$ in the conic hull of a set $\cA$ for which $\min_{z\in\cA}\langle \nabla f(\theta),z\rangle = 0$. 

Within the context of non-negative pursuit algorithms, existing heuristics are 
limited to the least-squares objective \cite{bruckstein2008uniqueness,Yaghoobi:2015ff} (\citep{Yaghoobi:2015ff} also has coherence-based recovery guarantees for finite atom sets but no optimization rates).
Apart from MP-type algorithms, there is a large variety of non-negative least-squares algorithms, \eg, \cite{lawson1995solving}, particularly for matrix and tensor spaces. 
The gold standard in factorization problems is projected gradient descent with alternating minimization, see \cite{Sha:2002um,berry2007algorithms,shashua2005non,kim2014algorithms}. \cite{pena2016solving} is concerned with the feasibility problem on symmetric cones and \cite{harchaoui2015conditional} introduces a norm-regularized variant of the minimization problem over convex cones that solves using FW on a rescaled version of the convex hull of $\cA$. To the best of our knowledge, general rates for non-negative pursuit algorithms on general convex objectives are not known.

\section{Boosting}
Earlier, a flavor of generalized MP in Hilbert spaces became popular in the context of boosting, see \cite{meir2003introduction,ratsch2001convergence,Buhlmann:2010fj} for a general overview.
Following \cite{ratsch2001convergence}, one can view boosting as an iterative greedy algorithm minimizing a (strongly) convex objective over the linear span of a possibly infinite hypothesis class. The convergence analysis crucially relies on the assumption of the origin being in the relative interior of the hypothesis class, 
see Theorem 1 in \cite{grubb2011generalized}. Indeed, Algorithm 5.2 of \cite{meir2003introduction} might not converge if the \cite{pena2015polytope} alignment assumption is violated.
In this dissertation, we relax this assumption while preserving essentially the same asymptotic rates of \cite{meir2003introduction,grubb2011generalized}. 


    
\chapter[Convex Hulls with Affine Constraints]{Convex Hulls with Affine Constraints: Stochastic Frank-Wolfe}
\label{ch:SFW}
In this chapter, we consider the problem of minimizing a composite convex function over a convex set. This problem template cover Semidefinite Programs (SDPs) as a special case (minimization over positive-semidefinite cone subject to some affine constraints). The presented approach is based on \citep{locatello2020stochastic} and was developed in collaboration with Alp Yurtsever, Olivier Fercoq, and Volkan Cevher. The experiments were done in collaboration with Alp Yurtsever. Code available at \url{https://github.com/alpyurtsever/SHCGM}.

\section{Problem Formulation}
This chapter focuses on the following stochastic convex optimization template with composite objective, which covers both finite sum and online learning problems:
\begin{equation}
\label{eqn:SFW_main-template}
\underset{\theta\in\mathcal{X}}{\text{minimize}} \quad \bbE_\rvw  f(\theta,\omega) +g(A\theta) := F(\theta).
\end{equation}
In this optimization template, we consider the following setting: \\
$\triangleright~~\cA \subset \R^n$ is a set of atoms in $\R^n$ and is compact, \\
$\triangleright~~\mathcal{X} = \conv(\cA) \subset \R^n$ is the convex hull of $\cA$ and is a convex and compact set, \\
$\triangleright~~\omega$ is a realization of the random variable $\rvw$ drawn from a distribution $\mathcal{P}$, \\
$\triangleright~~\bbE_\rvw  f(\, \cdot \, ,\omega):\mathcal{X} \to \R$ is a smooth  convex function, \\
$\triangleright~~A \in \R^n \to R^d$ is a given linear map, \\
$\triangleright~~g:\R^d \to \R\cup\{+\infty\}$ is a convex function (possibly non-smooth). 

We consider two distinct specific cases for $g$: \\
(i) $g$ is a Lipschitz-continuous function, for which the proximal-operator is easy to compute:
\begin{equation}
\prox{g}(y) = \arg \min_{z \in \R^d} ~ g(z) + \frac{1}{2} \norm{z - y}^2
\end{equation}
(ii) $g$ is the indicator function of a convex set $\mathcal{K} \subset \R^d$:
\begin{equation}
g(z) = \begin{cases} 
0 \quad \text{if}~~z \in \mathcal{K}, \\
+\infty \quad \text{otherwise.} 
\end{cases}
\end{equation}
The former case is useful for regularized optimization problems, common in machine learning applications to promote a desired structure to the solution. The latter handles affine constraints of the form $A\theta \in \mathcal{K}$. Note that combinations of both are also possible within our framework.

\section{Practical Motivation}
An important application for our framework is stochastic semidefinite programming (SDP):
\begin{equation}\label{eqn:SFW_stochastic-sdp}
\underset{{\theta \in \mathbb{S}_+^n, ~ \trace(\theta) \leq \beta}}{\text{minimize}} \quad \bbE_\rvw  f(\theta,\omega) \quad \text{subject to} \quad A\theta \in \mathcal{K}.
\end{equation}
where $\mathbb{S}_+^n $ denotes the positive-semidefinite cone. We are interested in this problem formulation as it does not require access to the whole data at one time, which is useful for both stochastic optimization over large data sets and online optimization with streaming data. This problem template enables new applications of SDPs in machine learning, such as online variants of clustering \citep{Peng2007}, streaming PCA \cite{dAspremont2004}, kernel learning \cite{Lanckriet2004}, community detection \cite{Abbe2018}, optimal power-flow \cite{Madani2015}, etc. 

\paragraph{Example} Consider the SDP formulation of the k-means clustering problem \cite{Peng2007}:
\begin{align}\label{eqn:clustering}
\underset{{\theta \in \mathbb{S}_+^n, ~ \trace(\theta) = k}}{\text{minimize}} \quad \ip{D}{\theta} \quad \text{subject to} \quad \theta1_n = 1_n, \quad \theta \geq 0.
\end{align}
Here, $1_n$ denotes the vector of ones, $\theta \geq 0$ enforces entrywise non-negativity, and $D$ is the Euclidean distance matrix between each pair of examples. To solve this problem with a standard SDP solver, we need access to the whole data matrix $D$ at the same time. Note that $D$ is quadratic on the size of the data set. Instead, our formulation allows observing only a subset of entries of $D$ at each iteration, for example, by computing the pairwise Euclidean distance over a batch of observations.

\paragraph{Splitting}
An important use-case of affine constraints in \eqref{eqn:SFW_main-template} is splitting (see Section~5.6 in \cite{yurtsever2018conditional}). Suppose that $\mathcal{X}$ can be written as the intersection of two (or more) simpler (in terms of the computational cost of $\lmo$ or projection) sets $\mathcal{U} \cap \mathcal{V}$. By using the standard product space technique, we can reformulate this problem in the extended space $(u,v) \in \mathcal{U} \times \mathcal{V}$ with the constraint $u=v$: 
\begin{equation}
\underset{(u,v) \in \mathcal{U} \times \mathcal{V}}{\text{minimize}} \quad \bbE_\rvw  f(u,\omega) \quad \text{subject to} \quad u = v.
\end{equation}
This allows us to decompose the difficult optimization domain $\mathcal{U}\cap\mathcal{V}$ into simpler pieces for which the \lmo in equation \eqref{eqn:lmo} can be called separately. Alternatively, if projecting onto \eg $\mathcal{V}$ is cheap we can reformulate the problem minimizing over $\mathcal{U}$ with the affine constraint $v \in \mathcal{V}$:
\begin{equation}
\underset{u \in \mathcal{U}}{\text{minimize}} \quad \bbE_\rvw  f(u,\omega) \quad \text{subject to} \quad u\in \mathcal{V}.
\end{equation}

A notable application is the SDP relaxation of k-means clustering. The \lmo over the intersection of the positive-semidefinite cone and the first orthant can only be computed in $\mathcal{O}(n^3)$ with the Hungarian method. Instead, we can split the constraint and run the \lmo over the semidefinite cone and perform projections onto the first orthant cheaply.

\paragraph{Lack of Scalable Alternative Approaches}
\looseness=-1The template problem in Equation \eqref{eqn:SFW_main-template} can be solved with operator splitting methods (see \cite{Cevher2018sfdr} and the references therein). These approaches require to project the iterate onto $\mathcal{X}$. This projection requires a full eigendecomposition in SDP applications, which has a cubic cost with respect to the problem dimension. Instead, our approach has computational cost of $\Th{N_\nabla/\delta}$ where $N_{\nabla}$ is the number of non-zeros of the gradient and $\delta$ is the accuracy of the approximate $\lmo$. When computing the full gradients is feasible, the deterministic approach of \cite{yurtsever2018conditional} offers $\Or{1/\sqrt{t}}$ convergence rates. Due to its practical relevance, several relaxations to the template problem in Equation \eqref{eqn:SFW_stochastic-sdp} have been proposed in the literature \cite{garber2018fast,Hazan2016,lu2020generalized,hazan2008sparse}. None of them is guaranteed to find the optimal solution, see Table \ref{table:SFW_complexity} for an overview.

\begin{table}[ht!]
\begin{center}
\small{
\begin{tabular}{ccccc}
\toprule
Algorithm & Iter. compl. & Sample compl.  & Solves~\eqref{eqn:SFW_stochastic-sdp} & Iter. cost for~\eqref{eqn:SFW_stochastic-sdp}\\
\midrule
\cite{yurtsever2018conditional} & $\mathcal{O}({1}/{\varepsilon^{2}})$ &  $N$ & Yes & $\Theta(N_{\nabla}/\delta)$ \\
\cite{garber2018fast} & $\mathcal{O}({1}/{\varepsilon^2})$ &  $\mathcal{O}({1}/{\varepsilon^4})$ & No & $\Theta(N_{\nabla}/\delta)$ \\
\cite{Hazan2016}  & $\mathcal{O}(1/\varepsilon)$ & $\mathcal{O}(1/\varepsilon^3)$ & No & $\Theta(N_{\nabla}/\delta)$ \\
\cite{lu2020generalized} & $\mathcal{O}(1/\varepsilon)$ & $\mathcal{O}(1/\varepsilon^2)$ & No & SDP \\
\cite{hazan2008sparse} &  $\mathcal{O}(1/\varepsilon)$ & $N$ & No & $\Theta(N_{\nabla}/\delta)$\\
\phantom{$^*$}\cite{Cevher2018sfdr}$^*$ & $-$ &  $-$ & Yes & $\Theta(n^3)$ \\
\textbf{Our} & $\mathcal{O}({1}/{\varepsilon^{3}})$ &  $\mathcal{O}({1}/{\varepsilon^{3}})$ & Yes & $\Theta(N_{\nabla}/\delta)$ \\
\bottomrule\\
\end{tabular}}
 \caption{Existing algorithms for stochastic semidefinite programs as in Equation~\eqref{eqn:SFW_stochastic-sdp}. $N$ is the size of the data set. $n$ is the dimension of each data point. $N_{\nabla}$ is the number of non-zeros of the gradient. $\delta$ is the accuracy of the approximate $\lmo$. The per-iteration cost of \cite{lu2020generalized} is the cost of solving a SDP in the canonical form. \\ 
{$^*$\cite{Cevher2018sfdr} has $\mathcal{O}({1}/{\varepsilon^2})$ iteration and sample complexity when the objective function is strongly convex. This is not the case in our model problem, and \cite{Cevher2018sfdr} only has an asymptotic convergence guarantee. }
}\label{table:SFW_complexity}
\end{center}
\end{table}

\newpage
Without affine constraints, we could solve the stochastic SDP formulation of \eqref{eqn:SFW_stochastic-sdp} with stochastic Frank-Wolfe variants \eg \cite{mokhtari2020stochastic}. For positive-semidefinite cones with trace norm constraints, the $\lmo$ becomes:
\begin{equation}\label{eqn:SFW_SDP_lmo}
S = \arg\min_{Y} 
~ \left \{ \ip{\nabla f (\theta,\omega)}{Y} :  \quad Y \in \mathbb{S}_+^n,~ \trace(Y) \leq \beta \right \}
\end{equation}
which corresponds to finding the eigenvector with smallest eigenvalue of the matrix $\nabla f (\theta,\omega)$. This can be done efficiently using shifted power methods or the randomized subspace iterations \cite{HMT11:Finding-Structure}.
When we add affine constraints in our problem template, the $\lmo$ becomes an SDP instance in the canonical form, which renders the application of vanilla stochastic Frank-Wolfe algorithms computationally challenging.
\newpage
\section{Stochastic Homotopy CGM (SHCGM)}
\label{sec:algorithm}
\hfill
\begin{algorithm}[H]
   \caption{Stochastic Homotopy CGM (SHCGM) \cite{locatello2020stochastic}}
   \label{alg:stochastic_HFW}
\begin{algorithmic}
   \STATE {\bfseries Input:} $\theta_1 \in \mathcal{X}, ~ \beta_0 > 0$, $d_0 = 0$
   \FOR{$k=1,2, \ldots, $}
   \STATE $\eta_k = 9/(k+8)$
   \STATE $\beta_k =\beta_0 / (k+8)^{\frac{1}{2}}$
   \STATE $\rho_k = 4/(k+7)^{\frac{2}{3}}$
   \STATE $d_k = (1-\rho_k)d_{k-1} + \rho_k\nabla_\theta f(\theta_k,\omega_k)$
   \STATE $v_k = d_k + \beta_k^{-1} A^\top \big(A \theta_k -  \prox{\beta_k g}(A\theta_k)\big)$
	\STATE $s_k = \arg\min_{\theta \in \mathcal{X}}\ip{v_k}{\theta}$
   	\STATE $\theta_{k+1} = \theta_k + \eta_k(s_k - \theta_k)$
   \ENDFOR
\end{algorithmic}
\end{algorithm}
\vspace{-1em}

\looseness=-1Our approach is depicted in Algorithm \ref{alg:stochastic_HFW}. Let $g_\beta$ be a smooth approximation of $g$, parametrized by the penalty (or smoothing) parameter $\beta > 0$: 
\begin{equation*}
g_\beta (z) = \max_{y \in \R^d} \ip{z}{y} - g^*(y) - \frac{\beta}{2} \norm{y}^2, \quad \text{where} \quad g^\ast (x) = \max_{v \in \R^d} \ip{x}{v} - g(v).
\end{equation*}
It is easy to show that $g_\beta$ is $1/\beta$-smooth and its gradient can be computed as:
\begin{equation*}
\nabla_\theta g_{\beta}(A\theta) = A^\top \prox{\beta^{-1}g^*}(\beta^{-1} A\theta) = \beta^{-1} A^\top \left( A\theta - \prox{\beta g}(A\theta) \right),
\end{equation*}
where the second equality follows from the Moreau decomposition. In our prior work \cite{yurtsever2018conditional}, we combined Nesterov smoothing \cite{Nesterov2005} with the analysis of Frank-Wolfe, replacing the non-smooth term $g$ by the smooth approximation $g_\beta$. As we anneal $\beta \to 0$ at an appropriate rate, $g_\beta \to g$ and thus the smoothed problem (where we consider $g_\beta$ instead of $g$) converges to the original template of Equation \eqref{eqn:SFW_main-template}. As a result, while the decision variable converges to a minimizer of the non-smooth problem, we can use the smoothness of $g_\beta$ for the analysis. Unfortunately, the analysis of \cite{yurtsever2018conditional} is only applicable to deterministic gradients. Instead, we use the biased gradient estimator of \citet{mokhtari2020stochastic}:
\begin{equation}\label{eqn:grad-est}
d_k = (1-\rho_k)d_{k-1} + \rho_k\nabla_\theta f(\theta_k,\omega_k).
\end{equation}
The main advantage of this gradient estimator is that it does not require increasing the batch size to reduce the variance. However, it only yields convergence at a rate of   $\mathcal{O}({1}/{k^{\frac{1}{3}}})$ for smooth convex minimization.

In our Stochastic Homotopy Conditional Gradient Method (SHCGM), we rely on a stochastic gradient estimator $v_k$ for the smooth approximation of the composite objective,
\begin{equation}
\begin{aligned}
\nabla_\theta F_{\beta_k}(\theta) = \nabla_\theta \bbE_\rvw  f(\theta,\omega) +\nabla_\theta g_{\beta_k}(A\theta)  \quad \implies \quad
v_k & = d_k + \nabla_\theta g_{\beta_k}(A\theta).
\end{aligned}
\end{equation}
where $d_k$ is computed as in Equation \eqref{eqn:grad-est}. Crucially, we have three coupled learning rates $\eta_k,~\beta_k$, and $\rho_k$ that govern the stepsize for the iterate update, the smoothing parameter, and the gradient averaging parameter, respectively. $\rho_k$ controls the variance of the gradient estimator and $\beta_k$ the smoothness of the surrogate minimization of $F_{\beta_k}(\theta)$.

\section{Analysis}
We denote $f^\star := \bbE_\rvw f(\theta^\star, \omega)$ where $\theta^\star$ is the solution of~\eqref{eqn:SFW_main-template}. Throughout the chapter, $y^\star$ represents the solution of the dual problem of~\eqref{eqn:SFW_main-template}. For problems with affine constraints, we further assume that the strong duality holds. 
Slater's condition is a common sufficient condition for strong duality. 
By Slater's condition, we mean
\begin{equation*}
\mathrm{relint} (\mathcal{X} \times \mathcal{K})  ~\cap~ \left\{ (\theta,r) \in \R^n \times \R^d : A\theta = r \right\} \neq \emptyset.
\end{equation*}
Recall that the strong duality ensures the existence of a finite dual solution. 
We denote a solution to \eqref{eqn:SFW_main-template} and the optimal value by $\theta^\star$ and $F^\star$ respectively:
\begin{equation*}
F^\star = F(\theta^\star) \leq F(\theta), \qquad \forall \theta \in \mathcal{X}.
\end{equation*}

The main assumption we make in order to prove convergence is that the stochastic gradient has \textit{bounded variance}:
\begin{equation*}
\bbE\left[\norm{ \nabla_\theta f(\theta,\omega) - \nabla_\theta \bbE_\rvw f(\theta, \omega)}^2 \right]\leq \sigma^2 < +\infty.
\end{equation*}
\begin{theorem}[Lipschitz-continuous regularizer]\label{cor:g_lip}
Assume that $g: \R^d \to \R$ is $L_g$-Lipschitz continuous. 
Then, the sequence $\theta_k$ generated by Algorithm~\ref{alg:stochastic_HFW} satisfies the following convergence bound: 
\begin{equation}
\bbE F(\theta_{k+1}) - F^\star  
\leq 9^\frac{1}{3} \frac{C}{(k+8)^\frac{1}{3}} + \frac{\beta_0 L_g^2}{2\sqrt{k+8}},
\end{equation}
where $C := \frac{81}{2}D_{\mathcal{X}}^2(L_f + \beta_0\norm{A}^2) + 36\sigma D_{\mathcal{X}} + 27\sqrt{3}L_fD^2_{\mathcal{X}}$, $L_f$ is the smoothness of $f$, and $D_{\mathcal{X}}$ is the diameter of $\mathcal{X}$.
\end{theorem}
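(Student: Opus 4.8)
The idea is to run the Frank--Wolfe descent argument not on $F$ but on the smooth surrogate $F_{\beta}(\theta):=\bbE_\rvw f(\theta,\omega)+g_{\beta}(A\theta)$, to control the bias $\Delta_k:=d_k-\nabla_\theta\bbE_\rvw f(\theta_k,\omega)$ of the estimator $d_k$ separately, and to pay for the homotopy (that $\beta_k$ shrinks from step to step) with a lower-order term. Three elementary facts about the surrogate drive everything. (i) $F_{\beta}$ is convex and $(L_f+\norm{A}^2/\beta)$-smooth, since $g_\beta$ is $1/\beta$-smooth. (ii) As $g$ is $L_g$-Lipschitz, $g_\beta\le g\le g_\beta+\tfrac{\beta}{2}L_g^2$ pointwise; hence $F_\beta(\theta^\star)\le F^\star$ and $F(\theta)\le F_\beta(\theta)+\tfrac{\beta}{2}L_g^2$ for every $\theta\in\mathcal{X}$. (iii) $\beta\mapsto g_\beta(z)$ is non-increasing and, since $\operatorname{dom}g^\ast\subseteq\{\norm{y}\le L_g\}$, the envelope theorem gives $|\partial_\beta g_\beta(z)|\le\tfrac12 L_g^2$, so $0\le g_{\beta_k}(z)-g_{\beta_{k-1}}(z)\le\tfrac12 L_g^2(\beta_{k-1}-\beta_k)$.

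\emph{Step 1 (estimator bias).} Unrolling $d_k=(1-\rho_k)d_{k-1}+\rho_k\nabla_\theta f(\theta_k,\omega_k)$ gives
\[
\Delta_k=(1-\rho_k)\Delta_{k-1}+(1-\rho_k)\big(\nabla_\theta\bbE_\rvw f(\theta_{k-1},\omega)-\nabla_\theta\bbE_\rvw f(\theta_k,\omega)\big)+\rho_k\xi_k ,
\]
where $\xi_k$ is mean-zero with $\bbE[\xi_k\mid\mathcal{F}_{k-1}]=0$, $\bbE\norm{\xi_k}^2\le\sigma^2$, and $\Delta_{k-1}$ is $\mathcal{F}_{k-1}$-measurable so its cross term with $\xi_k$ vanishes in expectation. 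Using $L_f$-smoothness of $\bbE_\rvw f(\cdot,\omega)$, the step bound $\norm{\theta_k-\theta_{k-1}}\le\eta_{k-1}\norm{s_{k-1}-\theta_{k-1}}\le\eta_{k-1}D_{\mathcal{X}}$, Young's inequality with parameter $\rho_k$, and $(1-\rho_k)^2(1+\rho_k)\le1-\rho_k$, one obtains $\bbE\norm{\Delta_k}^2\le(1-\rho_k)\bbE\norm{\Delta_{k-1}}^2+\tfrac{2L_f^2\eta_{k-1}^2 D_{\mathcal{X}}^2}{\rho_k}+\rho_k^2\sigma^2$. With $\rho_k=4/(k+7)^{2/3}$ and $\eta_{k-1}=9/(k+7)$ the inhomogeneous term is $\Or{(k+7)^{-4/3}}$, and since $2/3<1$ an induction yields $\bbE\norm{\Delta_k}^2\le\bar Q/(k+8)^{2/3}$ with $\bar Q=\Or{\sigma^2+L_f^2D_{\mathcal{X}}^2}$; by Jensen $\bbE\norm{\Delta_k}\le\sqrt{\bar Q}/(k+8)^{1/3}$ with $\sqrt{\bar Q}=\Or{\sigma+L_fD_{\mathcal{X}}}$. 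This is the estimator lemma of~\citet{mokhtari2020stochastic}, reproved in the present setting.

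\emph{Step 2 (one step on the surrogate, coupling, and the recursion).} By $(L_f+\norm{A}^2/\beta_k)$-smoothness of $F_{\beta_k}$ and $\theta_{k+1}=\theta_k+\eta_k(s_k-\theta_k)$,
\[
F_{\beta_k}(\theta_{k+1})\le F_{\beta_k}(\theta_k)+\eta_k\ip{\nabla F_{\beta_k}(\theta_k)}{s_k-\theta_k}+\tfrac{\eta_k^2}{2}\big(L_f+\norm{A}^2/\beta_k\big)D_{\mathcal{X}}^2 .
\]
Since $v_k-\nabla F_{\beta_k}(\theta_k)=\Delta_k$ and $s_k$ minimizes $\ip{v_k}{\cdot}$ on $\mathcal{X}$, convexity of $F_{\beta_k}$ and Cauchy--Schwarz give $\ip{\nabla F_{\beta_k}(\theta_k)}{s_k-\theta_k}\le\ip{\nabla F_{\beta_k}(\theta_k)}{\theta^\star-\theta_k}+\ip{\Delta_k}{\theta^\star-s_k}\le F_{\beta_k}(\theta^\star)-F_{\beta_k}(\theta_k)+D_{\mathcal{X}}\norm{\Delta_k}$. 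Setting $\mathcal{E}_k:=F_{\beta_k}(\theta_{k+1})-F_{\beta_k}(\theta^\star)$ (so $\mathcal{E}_k\ge F_{\beta_k}(\theta_{k+1})-F^\star$ by (ii)), and coupling $F_{\beta_k}(\theta_k)-F_{\beta_k}(\theta^\star)\le\mathcal{E}_{k-1}+\tfrac12 L_g^2(\beta_{k-1}-\beta_k)$ via (iii), one gets after taking expectations $\bbE\mathcal{E}_k\le(1-\eta_k)\bbE\mathcal{E}_{k-1}+\eta_k D_{\mathcal{X}}\bbE\norm{\Delta_k}+\tfrac{\eta_k^2}{2}(L_f+\norm{A}^2/\beta_k)D_{\mathcal{X}}^2+\tfrac12 L_g^2(\beta_{k-1}-\beta_k)$. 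Inserting $\eta_k=9/(k+8)$, $1/\beta_k=(k+8)^{1/2}/\beta_0$, $\beta_{k-1}-\beta_k=\Or{(k+8)^{-3/2}}$ and the Step 1 bound, every inhomogeneous term is $\Or{(k+8)^{-4/3}}$; since the contraction factor $1-9/(k+8)$ has constant $9>2/3$, the elementary lemma for sequences $a_k\le(1-c/(k+k_0))a_{k-1}+d/(k+k_0)^{1+p}$ with $c>p$ (proved by induction on $a_k\le M/(k+8)^{1/3}$) gives $\bbE\mathcal{E}_k\le 9^{1/3}C/(k+8)^{1/3}$, with $C$ obtained by collecting the constants from $D_{\mathcal{X}}\sqrt{\bar Q}$, the curvature term, and the homotopy term. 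Finally, by (ii), $\bbE F(\theta_{k+1})-F^\star\le\bbE F_{\beta_k}(\theta_{k+1})-F^\star+\tfrac{\beta_k}{2}L_g^2\le\bbE\mathcal{E}_k+\tfrac{\beta_0 L_g^2}{2\sqrt{k+8}}$, which is the asserted bound.

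\emph{Main obstacle.} The crux is the rate bookkeeping across the three coupled schedules: one has to verify that with the stated $\eta_k,\beta_k,\rho_k$ every term feeding the two recursions decays fast enough relative to its own contraction factor — in particular that the curvature term $\tfrac{\eta_k^2}{2}(L_f+\norm{A}^2/\beta_k)D_{\mathcal{X}}^2$, whose $\norm{A}^2$ part grows like $(k+8)^{1/2}/\beta_0$, is still $\Or{(k+8)^{-3/2}}$ and hence dominated, and that Step 1 genuinely produces the $(k+8)^{-2/3}$ decay needed to make the bias contribution $\Or{(k+8)^{-4/3}}$. Once the exponents line up, both recursions close by the same elementary induction, and carrying the induction constants through yields the explicit $C$; the base cases ($\eta_1=1$, $\rho_1=1$, $d_0=0$) are checked directly.
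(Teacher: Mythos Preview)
Your proof is correct for the Lipschitz case and follows the same overall architecture as the paper (bias control via the \citet{mokhtari2020stochastic} recursion, one-step smoothness descent on the surrogate $F_{\beta_k}$, a homotopy coupling between $\beta_k$ and $\beta_{k-1}$, and the final envelope bound). The difference lies in how the homotopy term is handled.

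You bound the drift $g_{\beta_k}(A\theta_k)-g_{\beta_{k-1}}(A\theta_k)$ directly by $\tfrac12 L_g^2(\beta_{k-1}-\beta_k)=\Or{(k+8)^{-3/2}}$ and absorb it into the inhomogeneous part of the recursion. This is valid, but it injects an extra $\beta_0 L_g^2$ contribution into the constant, so you will not reproduce the stated $C$, which is $L_g$-free. The paper instead uses the sharper inequality $g(z_1)\ge g_\beta(z_2)+\ip{\nabla g_\beta(z_2)}{z_1-z_2}+\tfrac{\beta}{2}\norm{y^\ast_\beta(z_2)}^2$ (from \cite{TranDinh2017}) in the linearization step, which produces an extra term $-\eta_k\tfrac{\beta_k}{2}\norm{y^\ast_{\beta_k}(A\theta_k)}^2$; combined with the homotopy drift $(1-\eta_k)\tfrac{\beta_{k-1}-\beta_k}{2}\norm{y^\ast_{\beta_k}(A\theta_k)}^2$, the net coefficient $(1-\eta_k)(\beta_{k-1}-\beta_k)-\eta_k\beta_k$ is \emph{negative} for the chosen schedules, so the whole $\norm{y^\ast_{\beta_k}}^2$ term is dropped. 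This cancellation is precisely what motivates $\eta_k=9/(k+8)$, yields the $L_g$-independent $C$, and---crucially---makes the surrogate recursion (Theorem~\ref{thm:composite}) hold without any Lipschitz assumption on $g$, so the same lemma can be reused verbatim for the indicator-function case (Theorem~\ref{cor:indicator-exact}). Your route, by contrast, is specific to Lipschitz $g$. A related minor difference: the paper tracks $F_{\beta_{k-1}}(\theta_k)-F^\star$ rather than $F_{\beta_k}(\theta_{k+1})-F_{\beta_k}(\theta^\star)$, which pairs naturally with the $\norm{y^\ast_{\beta_k}}^2$ bookkeeping.
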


\textit{Proof sketch.}
The proof follows the following steps: \\
(i) Relate the stochastic gradient to the full gradient (Lemma~\ref{lemma:linear_exact_additive}). \\
(ii) Show convergence of the gradient estimator to the full gradient (Lemma~\ref{lemma:stoch_gradient_convergence}). \\
(iii) Show $\mathcal{O}(1/k^{\frac{1}{3}})$ convergence rate on the smooth gap $\bbE F_{\beta_k}(\theta_{k+1}) - F^\star$ (Theorem~\ref{thm:composite}). \\
(iv) Translate this bound to the actual sub-optimality $\bbE F(\theta_{k+1}) - F^\star$ by using the envelope property for Nesterov smoothing, see Equation (2.7) in \cite{Nesterov2005}. 
\hfill$\square$

\paragraph{Discussion} \looseness=-1\cite{Hazan2012,Lan2016,lan2017conditional} also has guarantees for stochastic Frank-Wolfe type algorithms on non-smooth but Lipschitz continuous $g$. Our rate is slower than the $\mathcal{O}(\frac{1}{\varepsilon^2})$ in \cite{Lan2016,lan2017conditional}, but we obtain $\mathcal{O}(\frac{1}{\varepsilon^3})$ sample complexity in the statistical setting as opposed to $\mathcal{O}(\frac{1}{\varepsilon^4})$. 
The main advantage of our approach is that it allows affine constraints that are not Lipschitz continuous, such as the indicator function for which Nesterov smoothing cannot be used.

\begin{theorem}[Affine constraints]\label{cor:indicator-exact}
\looseness=-1Suppose that $g: \R^d \to \R$ is the indicator function of a simple convex set $\mathcal{K}$. 
Assuming that the strong duality holds, the sequence $\theta_k$ generated by Algorithm~\ref{alg:stochastic_HFW} satisfies:
\begin{equation}
\begin{aligned}
\bbE \bbE_\rvw f(\theta_{k+1}, \omega) -  f^\star  & \geq -\norm{y^\star} ~\bbE  \mathrm{dist}(A\theta_{k+1},\mathcal{K}) \\
\bbE \bbE_\rvw f(\theta_{k+1}, \omega) -  f^\star & \leq  9^\frac{1}{3} \frac{C}{(k+8)^\frac{1}{3}} \\
\bbE \mathrm{dist}(A\theta_{k+1}, \mathcal{K}) & \leq \frac{2 \beta_0 \norm{y^\star} }{\sqrt{k+8}} +\frac{2\sqrt{2\cdot9^\frac13 C\beta_0}}{(k+8)^{\frac{5}{12}}}
\end{aligned}
\end{equation}
\end{theorem}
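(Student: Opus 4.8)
The plan is to obtain all three inequalities as corollaries of the rate on the \emph{smoothed} suboptimality gap $\bbE F_{\beta_k}(\theta_{k+1}) - F^\star \leq 9^{1/3} C/(k+8)^{1/3}$ established in Theorem~\ref{thm:composite} (step (iii) of the proof sketch of Theorem~\ref{cor:g_lip}), by exploiting the special structure of $g = \iota_{\mathcal{K}}$. First I would record two elementary facts. Since $g$ is the indicator of $\mathcal{K}$, its smoothing is the scaled squared distance, $g_{\beta_k}(A\theta) = \tfrac{1}{2\beta_k}\,\mathrm{dist}^2(A\theta,\mathcal{K}) \geq 0$; this is exactly the infimal-convolution form of the definition of $g_\beta$, and it is consistent with the update $v_k$ in Algorithm~\ref{alg:stochastic_HFW} since $\prox{\beta_k g} = \Pi_{\mathcal{K}}$. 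Moreover, the solution $\theta^\star$ of \eqref{eqn:SFW_main-template} is feasible, $A\theta^\star \in \mathcal{K}$, so $g(A\theta^\star)=0$ and hence $F^\star = f^\star$.

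Second, I would derive the lower bound from Lagrangian duality. Writing the dual of \eqref{eqn:SFW_main-template} with respect to the constraint $A\theta = r$, $r\in\mathcal{K}$, the assumed strong duality (guaranteed by Slater's condition) produces a finite maximizer $y^\star$ with $\bbE_\rvw f(\theta) + \ip{y^\star}{A\theta - r} \geq f^\star$ for every $\theta \in \mathcal{X}$ and $r\in\mathcal{K}$. Taking $r = \Pi_{\mathcal{K}}(A\theta_{k+1})$ and applying Cauchy--Schwarz gives $\bbE_\rvw f(\theta_{k+1}) - f^\star \geq -\norm{y^\star}\,\mathrm{dist}(A\theta_{k+1},\mathcal{K})$; taking the expectation over the algorithm's randomness yields the first claimed inequality.

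Third, I would combine these. The upper bound on the objective gap is immediate: $g_{\beta_k}\geq 0$ and $F^\star = f^\star$ give $\bbE\bbE_\rvw f(\theta_{k+1},\omega) - f^\star \leq \bbE F_{\beta_k}(\theta_{k+1}) - F^\star \leq 9^{1/3}C/(k+8)^{1/3}$. For the feasibility gap, I would expand
\[
\bbE F_{\beta_k}(\theta_{k+1}) - F^\star = \big(\bbE\bbE_\rvw f(\theta_{k+1},\omega) - f^\star\big) + \frac{1}{2\beta_k}\,\bbE\,\mathrm{dist}^2(A\theta_{k+1},\mathcal{K}),
\]
lower-bound the first summand with the duality inequality above, and use Jensen's inequality $\bbE\,\mathrm{dist}^2 \geq (\bbE\,\mathrm{dist})^2$. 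Writing $a_k := \bbE\,\mathrm{dist}(A\theta_{k+1},\mathcal{K})$, this yields the scalar quadratic inequality $\tfrac{1}{2\beta_k}a_k^2 - \norm{y^\star}a_k \leq 9^{1/3}C/(k+8)^{1/3}$; solving for $a_k$, applying $\sqrt{x+y}\leq\sqrt{x}+\sqrt{y}$, and substituting $\beta_k = \beta_0/(k+8)^{1/2}$ produces a bound of the form $2\beta_0\norm{y^\star}/\sqrt{k+8} + \Or{\sqrt{\beta_0 C}\,/(k+8)^{5/12}}$, which is the third inequality.

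The bulk of the real work is not in this argument but upstream, in Theorem~\ref{thm:composite}: establishing the $\Or{1/k^{1/3}}$ rate on the smoothed gap with the biased single-sample estimator $d_k$ and the three coupled schedules $\eta_k,\beta_k,\rho_k$, which requires controlling the bias/variance of $d_k$ (Lemma~\ref{lemma:stoch_gradient_convergence}) against the simultaneously growing smoothness $1/\beta_k$ of the surrogate. Within the present proof the only delicate point is the passage from the squared-distance (Moreau-envelope) control to a linear distance bound: one must apply Jensen in the correct direction and keep the constants in the quadratic formula clean so that the two error terms separate into the $(k+8)^{-1/2}$ and $(k+8)^{-5/12}$ contributions.
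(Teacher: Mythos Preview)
Your proposal is correct and follows essentially the same route as the paper's proof: the lower bound comes from the Lagrangian saddle-point inequality with Cauchy--Schwarz, the upper bound drops the nonnegative Moreau-envelope term $g_{\beta_k}(A\theta_{k+1})=\tfrac{1}{2\beta_k}\mathrm{dist}^2(A\theta_{k+1},\mathcal{K})$ from the smoothed gap of Theorem~\ref{thm:composite}, and the feasibility bound is obtained by combining the two into a scalar quadratic in $\bbE\,\mathrm{dist}(A\theta_{k+1},\mathcal{K})$ and solving it. You are in fact slightly more explicit than the paper in calling out the Jensen step $\bbE[\mathrm{dist}^2]\geq(\bbE\,\mathrm{dist})^2$ needed to reduce to a quadratic in the expected distance.
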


\textit{Proof sketch.}
We re-use the ingredients of the proof of Theorem~\ref{cor:g_lip}, except that at step (iv) we translate the bound on the smooth gap (penalized objective) to the actual convergence measures (objective residual and feasibility gap) using the Lagrange saddle point formulations and strong duality. 
\hfill$\square$

\paragraph{Discussion} Surprisingly, \\
$\triangleright$~$\mathcal{O}(1/k^{\frac{1}{3}})$ rate in objective residual matches the rate in~\cite{mokhtari2020stochastic} for smooth minimization. \\
$\triangleright$~$\mathcal{O}(1/k^{\frac{5}{12}})$ rate in feasibility gap is only an order of $k^{\frac{1}{12}}$ worse than the deterministic variant in~\cite{yurtsever2018conditional}. 

\paragraph{Inexact Oracles} \looseness=-1In Theorems~\ref{cor:g_lip} and \ref{cor:indicator-exact}, We assumed to use the exact solutions of \ref{eqn:lmo} for simplicity of exposition. 
In many applications including SDP problems, however, it is much easier to find an approximate solution to the \eqref{eqn:lmo} problem. Our convergence result can be extended to both additive and multiplicative errors. For the sake of brevity, we will only prove the case with additive error in Section \ref{sec:shcgm_proof} and defer to \citep{locatello2020stochastic} for the other case. 

\section{Empirical Evaluation}
This section presents the empirical performance of the proposed method for the stochastic k-means clustering, and matrix completion problems.
As a baseline for the stochastic k-means, we compare our algorithm against the Homotopy CGM as it is the only projection free method that handles affine constraints even though it is deterministic. In the stochastic matrix completion, we compare against the Stochastic Three-Composite Convex Minimization algorithm (S3CCM)~ \cite{Yurtsever2016s3cm} to show that our approach is favorable at scale since it avoids expensive projections. We also compare against the stochastic Frank-Wolfe (SFW) of \cite{mokhtari2020stochastic} \textit{without the additional constraints}. With this comparison, we empirically validate that adding the indicator constraints on SFW does not impact the convergence rate of the objective as predicted by the theory. 
\subsection{Stochastic k-means Clustering}
\label{sec:kmeans}
We consider the SDP formulation \eqref{eqn:clustering} of the k-means clustering problem. 
The same problem is used in numerical experiments by Mixon et al. \cite{Mixon2017}, and we design our experiment based on their problem setup\footnote{D.G. Mixon, S. Villar, R.Ward. | Available at \url{https://github.com/solevillar/kmeans_sdp}} with a sample of $1000$ data points from the MNIST data\footnote{Y. LeCun and C. Cortes. | Available at \url{http://yann.lecun.com/exdb/mnist/}}. 
See \cite{Mixon2017} for details on the preprocessing.

We solve this problem with SHCGM and compare it against HCGM \cite{yurtsever2018conditional} as the baseline. 
HCGM is a deterministic algorithm; hence it uses the full gradient. 
For SHCGM, we compute a gradient estimator by randomly sampling $100$ data points at each iteration. 
Remark that this corresponds to observing approximately $1$ percent of the entries of $D$. 

We use $\beta_0 = 1$ for HCGM and $\beta_0 = 10$ for SHCGM. 
We set these values by tuning both methods by trying $\beta_0 = 0.01, 0.1, ..., 1000$. 
We display the results in Figure~\ref{fig:clustering}, where we denote a full pass over the entries of $D$ as an epoch. 
Figure~\ref{fig:clustering} demonstrates that SHCGM performs similarly to HCGM although it uses fewer data. 

\begin{figure}[th!]
\centering
\includegraphics[width=\linewidth]{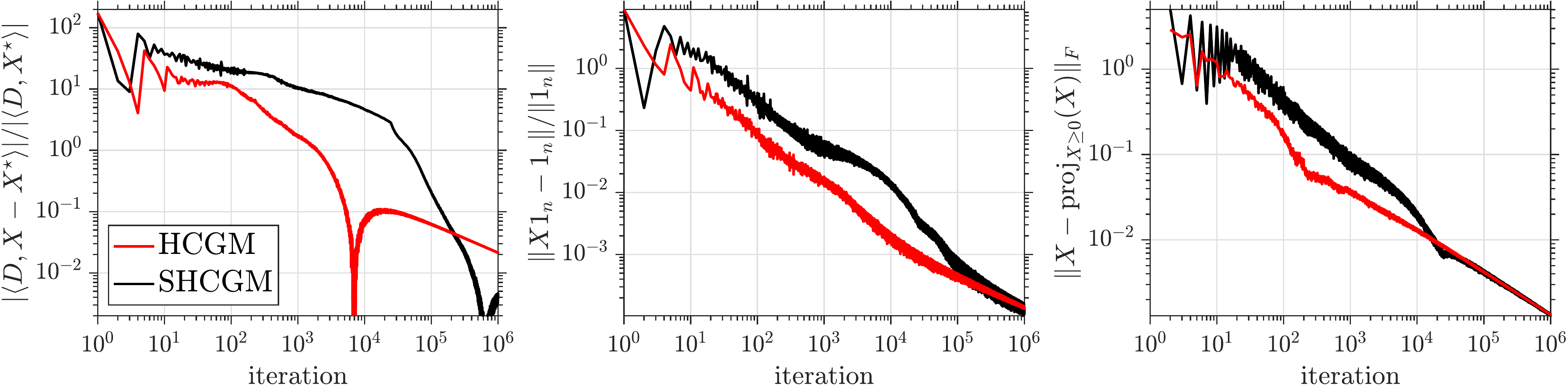}\\[0.5em]
\includegraphics[width=\linewidth]{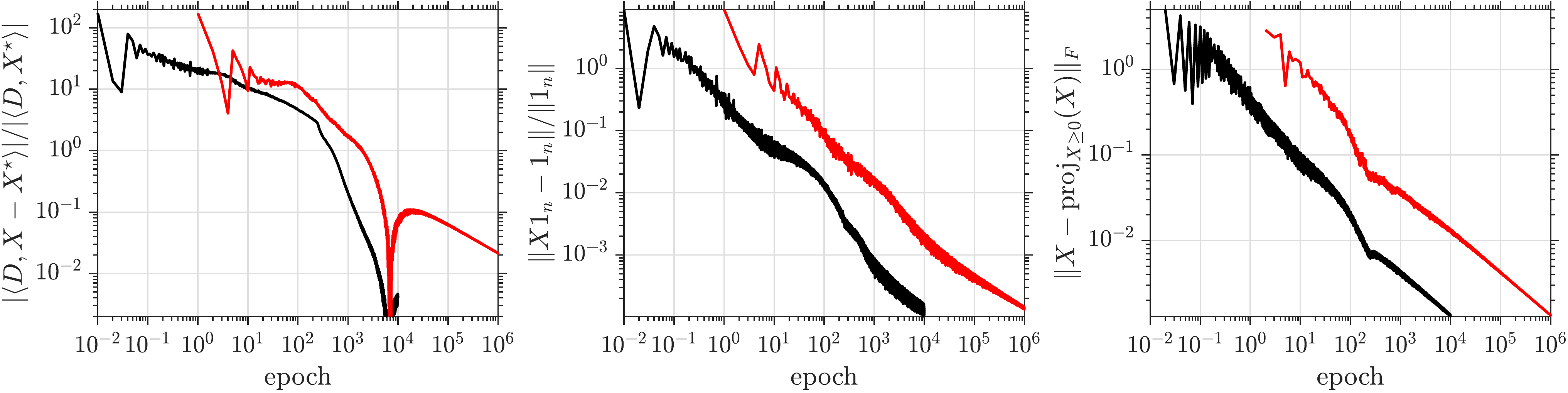}
        \caption{\small Comparison of SHCGM with HCGM for the SDP relaxation of k-means clustering described in Section~\ref{sec:kmeans}. The first column reports an estimate of the suboptimality for the objective and the latter two columns the iterate feasibility (assignment sums to one and non-negativity respectively). The first row reports these results in terms of number of iterations and the second row in epochs. We clearly see that SHCGM is competetive with the deterministic variant and is more data efficient.
        }
        \label{fig:clustering}
\end{figure}

\subsection{Stochastic Matrix Completion}
We consider the problem of matrix completion with the following mathematical formulation:
\begin{equation}\label{eqn:exp_online_mat_comp}
\underset{\|X\|_* \leq \beta_1}{\text{minimize}} \quad  \sum_{(i,j) \in \rvw} (X_{i,j} - Y_{i,j})^2  \quad \text{subject to} \quad 1\leq X\leq 5,
\end{equation}
where, $\rvw$ is the set of observed ratings (samples of entries from the true matrix $Y$ that we try to recover), and $\|X\|_*$ denotes the nuclear-norm (sum of singular values). The affine constraint $1\leq X\leq 5$ imposes a hard threshold on the estimated ratings (in other words, the entries of $X$).

\begin{figure}[ht!]
\begin{minipage}{0.65\textwidth}
\includegraphics[width=0.95\linewidth]{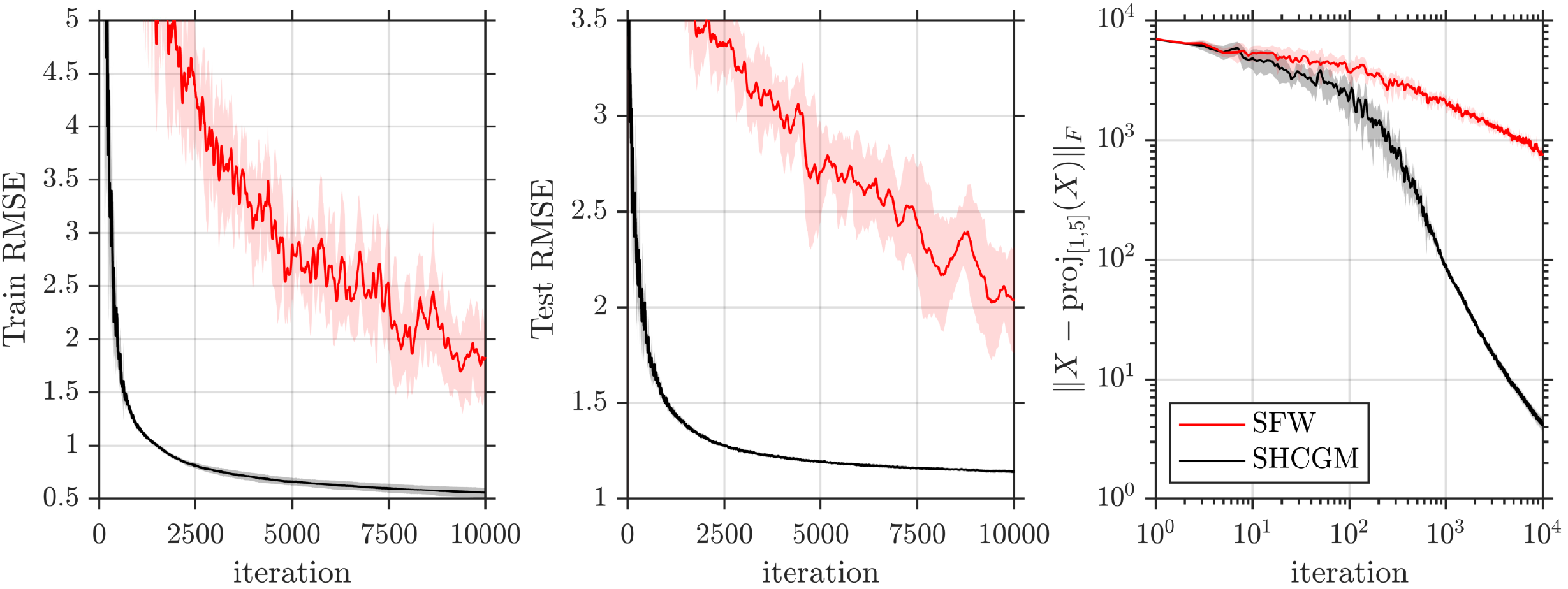}
\end{minipage}
\begin{minipage}{0.33\textwidth}
\center
\begin{tabular}{ l c }
\toprule
   & \small{train RMSE } \\
  \midrule
  \footnotesize
 {SHCGM} & {0.5574 \!$\pm$\! 0.0498} \\
 {SFW} & 1.8360 \!$\pm$\! 0.3266\\
  \bottomrule
\end{tabular}
\vspace{0.5em}
\begin{tabular}{ l c }
\toprule
   & \small{ test RMSE } \\
  \midrule
  \footnotesize
  {SHCGM} & {1.1446 \!$\pm$\! 0.0087} \\
  {SFW} & 2.0416  \!$\pm$\!  0.2739\\
  \bottomrule
\end{tabular}
\end{minipage}
        \caption{\small Training Error, Feasibility gap and Test Error for MovieLens~100k. Table shows the mean values and standard deviation of train and test RMSE over 5 different train/test splits at the end of $10^4$ iterations. We observe that adding the additional constraint does not negatively impact the convergence. On the contrary, our SHCGM train faster and generalize better compared to the vanilla stochastic FW.}
        \label{fig:movielens100k}
\end{figure}

We first compare SHCGM with the Stochastic Frank-Wolfe (SFW) from \cite{mokhtari2020stochastic}. 
We consider a test setup with the MovieLens100k data set\footnote{\label{ftnote:foot3}F.M. Harper, J.A. Konstan. | Available at \url{https://grouplens.org/data sets/movielens/}} \cite{harper2016movielens}. 
This data set contains $\sim$100'000 integer valued ratings between $1$ and $5$, assigned by $1682$ users to $943$ movies. 
This experiment aims to emphasize the flexibility of SHCGM. 
Recall that SFW does not directly apply to \eqref{eqn:exp_online_mat_comp} as it cannot handle the affine constraint $1\leq X\leq 5$. 
Therefore, we apply SFW to a relaxation of  \eqref{eqn:exp_online_mat_comp} that omits this constraint. 
Then, we solve \eqref{eqn:exp_online_mat_comp} with SHCGM and compare the results. 

We use the default \texttt{ub.train} and \texttt{ub.test} partitions provided with the original data. 
We set the model parameter for the nuclear norm constraint $\beta_1 = 7000$, and the initial smoothing parameter $\beta_0 = 10$. 
At each iteration, we compute a gradient estimator from $1000$ \textit{iid} samples. 
We perform the same test independently for $10$ times to compute the average performance and confidence intervals. 
In Figure~\ref{fig:movielens100k}, we report the training and test errors (root mean squared error) and the feasibility gap. 
The solid lines display the average performance, and the shaded areas show $\pm$ one standard deviation. 
Note that SHCGM performs uniformly better than SFW, both in terms of the training and test errors. 
The table shows the values achieved at the end of $10'000$ iterations. 

Finally, we compare SHCGM with the stochastic three-composite convex minimization method (S3CCM) from \cite{Yurtsever2016s3cm}. 
S3CCM is a projection-based method that applies to \eqref{eqn:exp_online_mat_comp}. 
In this experiment, we aim to demonstrate the advantages of the projection-free methods for problems in large-scale. 

We consider a test setup with the MovieLens1m data set$^{\ref{ftnote:foot3}}$ with $\sim$1 million ratings from $\sim$$6000$ users on $\sim$$4000$ movies. 
We partition the data into training and test samples with a $80/20$ train/test split. 
We use  $10'000$ \textit{iid} samples at each iteration to compute a gradient estimator. 
We set the model parameter $\beta_1 = 20'000$. 
We use $\beta_0 = 10$ for SHCGM, and we set the step-size parameter $\gamma = 1$ for S3CCM. 
We implement the \ref{eqn:lmo} efficiently using the power method. 

\begin{figure}
\centering
\includegraphics[width=0.7\linewidth]{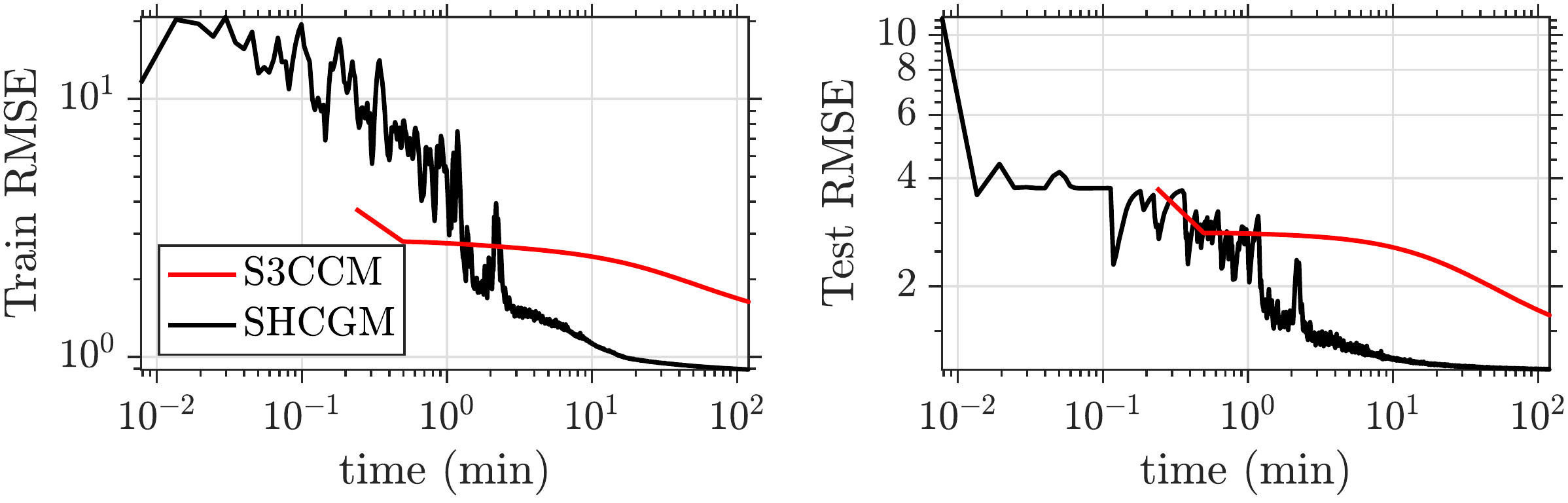}
        \caption{\small SHCGM vs S3CCM with MovieLens-1M. We observe that, on large data sets where projections are very expensive, the small iteration cost of our SHCGM makes it competitive against a projection-based approach.}
        \label{fig:movielens1m}
\end{figure}
Figure~\ref{fig:movielens1m} reports the outcomes of this experiment. 
SHCGM clearly outperforms S3CCM in this test. 
We run both methods for $2$ hours. 
Within this time limit, SHCGM can perform $27'860$ iterations while S3CCM can gets only up to $435$ because of the high computational cost of the projection. We remark that S3CCM needs much fewer iterations. However, the iteration cost is much better for our algorithm which proves to be an advantage on large datasets. On smaller datasets (such as MovieLens100k), projection based methods such as S3CCM would be faster.

\section{Proofs}\label{sec:shcgm_proof}
We first prove some key lemmas. This section builds on top of the analysis of \cite{mokhtari2020stochastic} and the homotopy CGM framework of \citet{yurtsever2018conditional}. 
All these results are for the inexact oracle with additive error, the exact oracle case can be obtained setting $\delta = 0$.

First, we adapt the definition of the additive error \ref{eqn:lmo_add} to the smoothed objective $F_{\beta_k}$.
At iteration $k$, for the given $v_k$, we assume that the approximate \ref{eqn:lmo} returns an element $\tilde{s}_k\in\mathcal{X}$ such that:
\begin{equation}\label{eqn:SFW_inexact_LMO}
\ip{v_k}{\tilde{s}_k} \leq \ip{v_k}{s_k} + \delta \frac{\eta_k}{2}D_{\mathcal{X}}^2\left(L_f + \frac{\|A\|^2}{\beta_k}\right)
\end{equation}
for some $\delta > 0$, where $s_k$ is the exact \ref{eqn:lmo} solution. 

\begin{lemma}\label{lemma:linear_exact_additive}
For any given iteration $k\geq 1$ of Algorithm~\ref{alg:stochastic_HFW} the following relation holds:
 \begin{align*}
 \ip{ \nabla F_{\beta_k}(\theta_k)}{s_k - \theta_k}&\leq \norm{\nabla_\theta \bbE_\rvw f(\theta_k, \omega)- d_k}D_\mathcal{X} +  f^\star  - \bbE_\rvw f(\theta_k, \omega) + g(A\theta^\star) - g_{\beta_k}(A\theta_k)\\ &\qquad - \frac{\beta_k}{2} \norm{  y^\ast_{\beta_k}(A\theta_k) }^2  + \delta\frac{\eta_k}{2}D_\mathcal{X}^2\left( L_f + \frac{\norm{A}^2}{\beta_k}\right)
 \end{align*}
 where $\delta \geq 0$ is the accuracy of the inexact \ref{eqn:lmo} with additive error as in Equation \eqref{eqn:SFW_inexact_LMO}.
 \begin{proof}
\begin{align}
\ip{ \nabla F_{\beta_k}&(\theta_k)}{\tilde s_k - \theta_k}
= 
\ip{\nabla_\theta \bbE_\rvw f(\theta_k, \omega)}{\tilde s_k - \theta_k} + \ip{A^\top \nabla g_{\beta_k}(A\theta_k)}{\tilde s_k - \theta_k }\nonumber\\
&= \ip{\nabla_\theta \bbE_\rvw f(\theta_k, \omega) - d_k}{\tilde s_k - \theta_k} + \ip{d_k + A^\top \nabla g_{\beta_k}(A\theta_k)}{\tilde s_k - \theta_k } \nonumber
\end{align}
\begin{align}
\quad
&\leq \ip{\nabla_\theta \bbE_\rvw f(\theta_k, \omega) - d_k}{s_k - \theta_k} \nonumber\\&\qquad + \ip{d_k + A^\top \nabla g_{\beta_k}(A\theta_k)}{s_k - \theta_k } + \delta\frac{\eta_k}{2}D_\mathcal{X}^2\left( L_f + \frac{\norm{A}^2}{\beta_k}\right)\label{eq_proof:lemma_additive_inexact_def}\\
&\leq \ip{\nabla_\theta \bbE_\rvw f(\theta_k, \omega) - d_k}{s_k - \theta_k} \nonumber\\&\qquad + \ip{d_k + A^\top \nabla g_{\beta_k}(A\theta_k)}{\theta^\star - \theta_k } + \delta\frac{\eta_k}{2}D_\mathcal{X}^2\left( L_f + \frac{\norm{A}^2}{\beta_k}\right)\label{eq_proof:lemma_additive_min_oracle}\\
&\leq \norm{\nabla_\theta \bbE_\rvw f(\theta_k, \omega) - d_k}\norm{s_k - \theta^\star} \nonumber\\&\qquad + \ip{\nabla_\theta \bbE_\rvw f(\theta_k, \omega)+ A^\top \nabla g_{\beta_k}(A\theta_k)}{\theta^\star - \theta_k }+ \delta\frac{\eta_k}{2}D_\mathcal{X}^2\left( L_f + \frac{\norm{A}^2}{\beta_k}\right)\label{eq_proof:lemma_additive_cs} \\
&\leq \norm{\nabla_\theta \bbE_\rvw f(\theta_k, \omega) - d_k}D_\mathcal{X} \nonumber\\&\qquad + \ip{\nabla_\theta \bbE_\rvw f(\theta_k, \omega) + A^\top \nabla g_{\beta_k}(A\theta_k)}{\theta^\star - \theta_k } + \delta\frac{\eta_k}{2}D_\mathcal{X}^2\left( L_f + \frac{\norm{A}^2}{\beta_k}\right)\label{eq_proof:lemma_additive_diam}
\end{align}
where Equation~\eqref{eq_proof:lemma_additive_inexact_def} is the definition of inexact oracle with additive error, Equation~\eqref{eq_proof:lemma_additive_min_oracle} follows from $s_k$ being a solution of $\min_{\theta \in \mathcal{X}} \ip{d_k + A^\top \nabla g_{\beta_k}(A\theta_k)}{\theta}$, Equation~\eqref{eq_proof:lemma_additive_cs} is Cauchy-Schwarz and the Equation~\eqref{eq_proof:lemma_additive_diam} is the definition of diameter.
Now, convexity of $\bbE_\rvw f(\theta_k, \omega)$ ensures $\langle\nabla_\theta \bbE_\rvw f(\theta_k, \omega),\theta^\star - \theta_k \rangle\leq f^\star  - \bbE_\rvw f(\theta_k, \omega)$. 

From Lemma 10 in \citep{TranDinh2017} we have that:
\begin{align}\label{eqn:smoothing-prop-2}
g(z_1) & \geq g_\beta(z_2) + \ip{\nabla g_{\beta}(z_2)}{z_1 - z_2} + \frac{\beta}{2} \norm{y^\ast_\beta(z_2)}^2.
\end{align}
Therefore: 
\begin{align*}
\ip{A^\top \nabla g_{\beta_k} (A\theta_k)}{\theta^\star - \theta_k} 
&\leq 
g(A\theta^\star) - g_{\beta_k}(A\theta_k) - \frac{\beta_k}{2} \norm{  y^\ast_{\beta_k}(A\theta_k) }^2. 
\end{align*}
Which yields the desired result:
\begin{align*}
\ip{ \nabla F_{\beta_k}(\theta_k)}{\tilde s_k - \theta_k}&\leq \norm{\nabla_\theta \bbE_\rvw f(\theta_k, \omega) - d_k}D_\mathcal{X} +  f^\star ) - \bbE_\rvw f(\theta_k, \omega) + g(A\theta^\star) - g_{\beta_k}(A\theta_k)\\&\qquad - \frac{\beta_k}{2} \norm{  y^\ast_{\beta_k}(A\theta_k) }^2 + \delta\frac{\eta_k}{2}D_\mathcal{X}^2\left( L_f + \frac{\norm{A}^2}{\beta_k}\right)
\end{align*}
 \end{proof}
\end{lemma}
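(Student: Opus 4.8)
The plan is to bound the smoothed Frank--Wolfe gap $\ip{\nabla F_{\beta_k}(\theta_k)}{\tilde s_k - \theta_k}$ at the inexact oracle output $\tilde s_k$ (the exact statement being the special case $\delta=0$), by splitting $\nabla F_{\beta_k}(\theta_k)$ into a stochastic-error direction and the surrogate direction actually used by the algorithm. Since $\nabla F_{\beta_k}(\theta_k) = \nabla_\theta \bbE_\rvw f(\theta_k,\omega) + A^\top \nabla g_{\beta_k}(A\theta_k)$ and, by the Moreau decomposition, $v_k = d_k + \beta_k^{-1}A^\top\big(A\theta_k - \prox{\beta_k g}(A\theta_k)\big) = d_k + A^\top \nabla g_{\beta_k}(A\theta_k)$, I would first write $\ip{\nabla F_{\beta_k}(\theta_k)}{\tilde s_k - \theta_k} = \ip{\nabla_\theta \bbE_\rvw f(\theta_k,\omega) - d_k}{\tilde s_k - \theta_k} + \ip{v_k}{\tilde s_k - \theta_k}$.

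Next I would invoke the inexact-LMO inequality \eqref{eqn:SFW_inexact_LMO}, which compares $\tilde s_k$ with the exact minimizer $s_k$ of $\theta\mapsto\ip{v_k}{\theta}$ over $\mathcal{X}$, to pass from $\tilde s_k$ to $s_k$ at the cost of the additive term $\delta\frac{\eta_k}{2}D_\mathcal{X}^2\big(L_f + \norm{A}^2/\beta_k\big)$, and then use optimality of $s_k$ together with $\theta^\star\in\mathcal{X}$ to bound $\ip{v_k}{s_k}\le\ip{v_k}{\theta^\star}$. This reduces matters to bounding $\ip{\nabla_\theta \bbE_\rvw f(\theta_k,\omega) - d_k}{s_k - \theta_k} + \ip{d_k + A^\top \nabla g_{\beta_k}(A\theta_k)}{\theta^\star - \theta_k}$. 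The decisive move is the algebraic identity $\ip{\nabla_\theta \bbE_\rvw f(\theta_k,\omega) - d_k}{s_k - \theta_k} + \ip{d_k}{\theta^\star - \theta_k} = \ip{\nabla_\theta \bbE_\rvw f(\theta_k,\omega) - d_k}{s_k - \theta^\star} + \ip{\nabla_\theta \bbE_\rvw f(\theta_k,\omega)}{\theta^\star - \theta_k}$, obtained by inserting $\pm\theta^\star$; Cauchy--Schwarz on $\ip{\nabla_\theta \bbE_\rvw f(\theta_k,\omega) - d_k}{s_k - \theta^\star}$ together with $\norm{s_k - \theta^\star}\le D_\mathcal{X}$ produces the term $\norm{\nabla_\theta \bbE_\rvw f(\theta_k,\omega) - d_k}D_\mathcal{X}$.

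It then remains to control the two ``true gradient'' inner products against $\theta^\star - \theta_k$. Convexity of $\theta\mapsto\bbE_\rvw f(\theta,\omega)$ gives $\ip{\nabla_\theta \bbE_\rvw f(\theta_k,\omega)}{\theta^\star - \theta_k}\le f^\star - \bbE_\rvw f(\theta_k,\omega)$, and the lower-bound (envelope) inequality for Nesterov smoothing, $g(z_1)\ge g_{\beta_k}(z_2) + \ip{\nabla g_{\beta_k}(z_2)}{z_1 - z_2} + \frac{\beta_k}{2}\norm{y^\ast_{\beta_k}(z_2)}^2$ (Lemma~10 in \citep{TranDinh2017}), evaluated at $z_1 = A\theta^\star$, $z_2 = A\theta_k$, gives $\ip{A^\top \nabla g_{\beta_k}(A\theta_k)}{\theta^\star - \theta_k}\le g(A\theta^\star) - g_{\beta_k}(A\theta_k) - \frac{\beta_k}{2}\norm{y^\ast_{\beta_k}(A\theta_k)}^2$. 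Summing the four contributions reproduces the claimed inequality.

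The step I expect to require care is the bookkeeping in the second paragraph: applying Cauchy--Schwarz separately to $\ip{\nabla_\theta \bbE_\rvw f(\theta_k,\omega) - d_k}{\tilde s_k - \theta_k}$ and to a discrepancy term $\ip{d_k - \nabla_\theta \bbE_\rvw f(\theta_k,\omega)}{\theta^\star - \theta_k}$ would cost a factor $2\,\norm{\nabla_\theta \bbE_\rvw f(\theta_k,\omega) - d_k}D_\mathcal{X}$, whereas the statement has coefficient one. Avoiding this forces me to keep the stochastic-error direction paired with $s_k$ through the inexact-oracle and optimality steps, and only then exploit the exact cancellation of the $d_k$-terms via the identity above; one must also keep in mind that the additive-error bound carried through is the one for $v_k$, not one for the full $\nabla F_{\beta_k}(\theta_k)$.
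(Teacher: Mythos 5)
Your proposal is correct and follows essentially the same route as the paper's proof: the same split of $\nabla F_{\beta_k}(\theta_k)$ into the stochastic-error direction and $v_k$, the same use of the inexact-LMO bound and optimality of $s_k$ against $\theta^\star$, the same $\pm\theta^\star$ cancellation identity followed by Cauchy--Schwarz with the diameter, and the same appeal to convexity and Lemma~10 of \citep{TranDinh2017}. Your closing remark about avoiding the factor-of-two loss is exactly the point the paper's chain of inequalities \eqref{eq_proof:lemma_additive_inexact_def}--\eqref{eq_proof:lemma_additive_diam} is arranged around.
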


\begin{lemma}\label{lemma:stoch_gradient_convergence}
For any $k\geq 1$ the estimate of the gradient computed in Algorithm~\ref{alg:stochastic_HFW} satisfies:
\begin{align*}
\mathbb{E}\left[ \norm{\nabla_\theta \bbE_\rvw f(\theta_k, \omega) - d_k }^2\right] &\leq \frac{Q}{( k +8)^\frac{2}{3}}
\end{align*}
where $Q = \max\left\lbrace \norm{\nabla_\theta \bbE_\rvw f(\theta_{1}, \omega) - d_1 }^2 7^\frac{2}{3},16\sigma^2 + 81L_f^2D_\mathcal{X}^2\right\rbrace$
\begin{proof}
This lemma applies Lemma 1 and Lemma 17 of \cite{mokhtari2020stochastic} to our different stepsize sequences. First, we invoke Lemma 1:
\begin{align}
\mathbb{E}&\left[ \norm{\nabla_\theta\bbE_\rvw f(\theta_k, \omega) - d_k }^2\right] \leq \left(1-\frac{\rho_k}{2} \right) \norm{\nabla_\theta\bbE_\rvw f(\theta_{k-1}, \omega) - d_{k-1} }^2 + \rho_k^2\sigma^2 + \frac{2L_f^2D_\mathcal{X}^2\eta_{k-1}^2}{\rho_k}\nonumber\\
&\leq \left(1-\frac{2}{(k+7)^\frac{2}{3}} \right) \norm{\nabla_\theta \bbE_\rvw f(\theta_{k-1}, \omega) - d_{k-1} }^2 + \frac{16\sigma^2 + 81L_f^2D_\mathcal{X}^2}{( k+7)^\frac{4}{3}}\label{eq_proof:lemma1}
\end{align}
where we used $\rho_k = \frac{4}{(k+7)^\frac{2}{3}}$.
Applying Lemma 17 of \cite{mokhtari2020stochastic} to Equation~\eqref{eq_proof:lemma1}  with $k_0 = 7$, $\alpha = \frac{2}{3}$, $c = 2$, $b = 16\sigma^2 + 81L_f^2D_\mathcal{X}^2$ gives:
\begin{align*}
\mathbb{E}\left[ \norm{\nabla_\theta \bbE_\rvw f(\theta_k, \omega) - d_k }^2\right] &\leq \frac{Q}{( k +8)^\frac{2}{3}}
\end{align*}
where $Q = \max\left\lbrace \norm{\nabla \bbE_\rvw f(x_{1}, \omega) - d_1 }^2 7^\frac{2}{3},16\sigma^2 + 81L_f^2D_\mathcal{X}^2\right\rbrace$
\end{proof}
\end{lemma}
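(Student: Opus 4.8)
Write $\bar f(\theta) := \bbE_\rvw f(\theta,\omega)$ and let $e_k := \nabla_\theta\bar f(\theta_k) - d_k$ be the gradient-estimation error, so the goal is a bound on $\bbE\norm{e_k}^2$. The plan is to derive a one-step recursion of the form $\bbE\norm{e_k}^2 \le (1-\rho_k/2)\bbE\norm{e_{k-1}}^2 + O(\rho_k^2)$ and then unroll it against the prescribed schedules. Expanding the update $d_k = (1-\rho_k)d_{k-1}+\rho_k\nabla_\theta f(\theta_k,\omega_k)$ and subtracting $\nabla_\theta\bar f(\theta_k)$, one splits
\[
e_k = (1-\rho_k)\,e_{k-1} + (1-\rho_k)\big(\nabla_\theta\bar f(\theta_k)-\nabla_\theta\bar f(\theta_{k-1})\big) - \rho_k\big(\nabla_\theta f(\theta_k,\omega_k)-\nabla_\theta\bar f(\theta_k)\big).
\]
Conditioning on the history $\cF_{k-1}$ through iteration $k-1$ (which determines $\theta_k$, since $d_{k-1}$ and the Frank-Wolfe step use only $\omega_1,\dots,\omega_{k-1}$), the last term has zero conditional mean, is conditionally uncorrelated with the other two, and has conditional second moment at most $\rho_k^2\sigma^2$ by the bounded-variance assumption; so $\bbE[\norm{e_k}^2\mid\cF_{k-1}]$ equals $\rho_k^2\sigma^2$ plus the squared norm of the deterministic part.

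To bound the deterministic part I would apply Young's inequality $\norm{a+b}^2\le(1+\rho_k)\norm{a}^2+(1+\rho_k^{-1})\norm{b}^2$, then use the elementary estimates $(1-\rho_k)^2(1+\rho_k)\le 1-\rho_k/2$ and $(1-\rho_k)^2(1+\rho_k^{-1})\le\rho_k^{-1}$ (valid since $\rho_k=4/(k+7)^{2/3}\le 1$), the $L_f$-smoothness of $\bar f$, and the step-length bound $\norm{\theta_k-\theta_{k-1}}=\eta_{k-1}\norm{s_{k-1}-\theta_{k-1}}\le\eta_{k-1}D_{\mathcal{X}}$. This yields
\[
\bbE\norm{e_k}^2 \le \Big(1-\tfrac{\rho_k}{2}\Big)\bbE\norm{e_{k-1}}^2 + \frac{L_f^2 D_{\mathcal{X}}^2\eta_{k-1}^2}{\rho_k} + \rho_k^2\sigma^2,
\]
which is exactly Lemma~1 of \cite{mokhtari2020stochastic} specialized to our setting. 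Plugging in $\rho_k=4/(k+7)^{2/3}$ and $\eta_{k-1}=9/(k+7)$ makes both the drift term $L_f^2D_{\mathcal{X}}^2\eta_{k-1}^2/\rho_k$ and the noise term $\rho_k^2\sigma^2$ of order $(k+7)^{-4/3}$, giving $\bbE\norm{e_k}^2\le\big(1-2(k+7)^{-2/3}\big)\bbE\norm{e_{k-1}}^2+(16\sigma^2+81L_f^2D_{\mathcal{X}}^2)(k+7)^{-4/3}$ after loosely absorbing constants.

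Finally I would close the scalar recursion by induction on $k$ (this is Lemma~17 of \cite{mokhtari2020stochastic}), with the claim $\bbE\norm{e_k}^2\le Q(k+8)^{-2/3}$. The base case $k=1$ uses that $\rho_1=4/8^{2/3}=1$, hence $d_1=\nabla_\theta f(\theta_1,\omega_1)$ is unbiased and $\bbE\norm{e_1}^2\le\sigma^2\le Q/9^{2/3}$ because $Q\ge16\sigma^2$; the inductive step plugs the hypothesis $\bbE\norm{e_{k-1}}^2\le Q(k+7)^{-2/3}$ into the recursion, uses $Q\ge16\sigma^2+81L_f^2D_{\mathcal{X}}^2$ to get $\bbE\norm{e_k}^2\le Q(k+7)^{-2/3}-Q(k+7)^{-4/3}$, and finishes with the elementary inequality $(k+7)^{-2/3}-(k+8)^{-2/3}\le(k+7)^{-4/3}$, which follows from the mean value theorem for $t\mapsto t^{-2/3}$ together with $(k+7)^{1/3}\ge2/3$. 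I expect the main obstacle to be bookkeeping rather than ideas: extracting the contraction factor $1-\rho_k/2$ cleanly from the Young split while ensuring the gradient-drift term is absorbed at the noise level $(k+7)^{-4/3}$ — this is precisely what forces the coupling $\eta_k\sim k^{-1}$, $\rho_k\sim k^{-2/3}$ — and correctly handling the base case where $\rho_1=1$.
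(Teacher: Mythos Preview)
Your proposal is correct and follows exactly the paper's approach: the paper simply cites Lemma~1 of \cite{mokhtari2020stochastic} to obtain the one-step recursion $\bbE\norm{e_k}^2 \le (1-\rho_k/2)\bbE\norm{e_{k-1}}^2 + \rho_k^2\sigma^2 + 2L_f^2 D_{\mathcal{X}}^2\eta_{k-1}^2/\rho_k$ (which you re-derive via the Young-inequality split), substitutes the schedules to get the $(k+7)^{-4/3}$ driving term, and then cites Lemma~17 of \cite{mokhtari2020stochastic} to close the scalar recursion (which you carry out explicitly by induction with the mean-value estimate). The only cosmetic difference is that you supply the details of both cited lemmas and handle the base case via $\rho_1=1\Rightarrow\bbE\norm{e_1}^2\le\sigma^2$, which makes the first branch of the $\max$ in $Q$ redundant.
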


\subsection{Proof of Theorem~\ref{thm:composite}}
We prove Theorem~\ref{thm:composite} with the oracle with additive error. The proof without additive error can be obtained with $\delta = 0$.
\begin{theorem}\label{thm:composite}
The sequence $\theta_k$ generated by Algorithm~\ref{alg:stochastic_HFW} satisfies the following bound for $k\geq 1$:
\begin{align*}
\bbE F_{\beta_k}(\theta_{k+1}) - F^\star 
\leq  9^\frac{1}{3} \frac{C_\delta}{(k+8)^\frac{1}{3}},
\end{align*}
where $C_\delta := \frac{81}{2}D_{\mathcal{X}}^2(L_f + \beta_0\norm{A}^2)(1+\delta) + 9D_\mathcal{X}\sqrt{Q}$, \newline
$Q = \max\left\lbrace 4\norm{\nabla \bbE_\rvw f(\theta_1, \omega) - d_1}^2, 16\sigma^2 + 2L_f^2D_\mathcal{X}^2 \right\rbrace$ and $\delta\geq 0$.
\end{theorem}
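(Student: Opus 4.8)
The plan is to turn Algorithm~\ref{alg:stochastic_HFW} into a one-step recursion for the smoothed suboptimality $h_k := \bbE F_{\beta_k}(\theta_{k+1}) - F^\star$ and then unroll it against the explicit schedule $\eta_k = 9/(k+8)$, $\beta_k = \beta_0/(k+8)^{1/2}$, $\rho_k = 4/(k+7)^{2/3}$. Since $g_{\beta_k}$ is $1/\beta_k$-smooth, the map $\theta\mapsto g_{\beta_k}(A\theta)$ is $\norm{A}^2/\beta_k$-smooth, so $F_{\beta_k}$ is $(L_f + \norm{A}^2/\beta_k)$-smooth; applying the quadratic upper bound~\eqref{eq:QuadraticUpperBound} along $\theta_{k+1} = \theta_k + \eta_k(\tilde s_k - \theta_k)$ and using $\norm{\tilde s_k - \theta_k}\le D_{\mathcal{X}}$ gives
\[
F_{\beta_k}(\theta_{k+1}) \le F_{\beta_k}(\theta_k) + \eta_k\ip{\nabla F_{\beta_k}(\theta_k)}{\tilde s_k - \theta_k} + \tfrac{\eta_k^2}{2}\bigl(L_f + \tfrac{\norm{A}^2}{\beta_k}\bigr)D_{\mathcal{X}}^2 .
\]
Into the inner-product term I substitute Lemma~\ref{lemma:linear_exact_additive}. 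Using $F^\star = f^\star + g(A\theta^\star)$ and $F_{\beta_k}(\theta_k) = \bbE_\rvw f(\theta_k,\omega) + g_{\beta_k}(A\theta_k)$, the $f$- and $g$-value pieces of the lemma collapse to $F^\star - F_{\beta_k}(\theta_k)$, the $\delta$-dependent error of the lemma merges with the curvature term into a factor $(1+\delta)$, and one is left with the contraction
\[
F_{\beta_k}(\theta_{k+1}) - F^\star \le (1-\eta_k)\bigl(F_{\beta_k}(\theta_k) - F^\star\bigr) + \eta_k\norm{\nabla_\theta\bbE_\rvw f(\theta_k,\omega) - d_k}D_{\mathcal{X}} - \eta_k\tfrac{\beta_k}{2}\norm{y^\ast_{\beta_k}(A\theta_k)}^2 + \tfrac{(1+\delta)\eta_k^2}{2}\bigl(L_f + \tfrac{\norm{A}^2}{\beta_k}\bigr)D_{\mathcal{X}}^2 .
\]

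The step I expect to be the main obstacle is aligning the smoothing parameters across iterations: the gap above is written with the current $\beta_k$, whereas to close the recursion I need it in terms of $h_{k-1} = F_{\beta_{k-1}}(\theta_k) - F^\star$. From the definition of Nesterov smoothing (equivalently, that $\beta g_\beta$ is a Moreau-type envelope of $g^\ast$) one gets, for $\beta_k \le \beta_{k-1}$, the monotonicity estimate $g_{\beta_k}(z) \le g_{\beta_{k-1}}(z) + \tfrac{\beta_{k-1}-\beta_k}{2}\norm{y^\ast_{\beta_k}(z)}^2$, hence $F_{\beta_k}(\theta_k) - F^\star \le h_{k-1} + \tfrac{\beta_{k-1}-\beta_k}{2}\norm{y^\ast_{\beta_k}(A\theta_k)}^2$. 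Substituting this, the two $\norm{y^\ast_{\beta_k}(A\theta_k)}^2$ terms merge with coefficient $(1-\eta_k)\tfrac{\beta_{k-1}-\beta_k}{2} - \eta_k\tfrac{\beta_k}{2}$, which is $\le 0$ exactly when $(1-\eta_k)\beta_{k-1} \le \beta_k$; with the given sequences this is equivalent to $k-1 \le \sqrt{(k+7)(k+8)}$, true for all $k\ge 1$, so the whole $\norm{y^\ast}^2$ contribution can be dropped. This is precisely where the coupling of the three rates is used ($\eta_k\asymp 1/k$ and $\beta_k\asymp 1/\sqrt k$ must be calibrated so the homotopy on $\beta$ is compatible with the Frank--Wolfe gap certificate, $\rho_k$ entering only through Lemma~\ref{lemma:stoch_gradient_convergence}); verifying the numerical constants that make this non-positivity hold is the delicate part.

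What remains is bookkeeping. Taking total expectation and combining Lemma~\ref{lemma:stoch_gradient_convergence} with Jensen's inequality gives $\bbE\norm{\nabla_\theta\bbE_\rvw f(\theta_k,\omega) - d_k} \le \sqrt{Q}/(k+8)^{1/3}$, hence
\[
h_k \le (1-\eta_k)h_{k-1} + \tfrac{9D_{\mathcal{X}}\sqrt{Q}}{(k+8)^{4/3}} + \tfrac{(1+\delta)\eta_k^2}{2}\bigl(L_f + \tfrac{\norm{A}^2}{\beta_k}\bigr)D_{\mathcal{X}}^2 .
\]
I would unroll this by weighted telescoping: choose weights $W_k$ with $W_k(1-\eta_k) = W_{k-1}$, so $W_k \propto \prod_{2\le j\le k}\tfrac{j+8}{j-1}\asymp (k+8)^9$; then $W_k h_k \le W_1 h_1 + \sum_{j=2}^{k} W_j(\text{per-step terms})$. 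The variance contribution sums as $\sum_{j} W_j (j+8)^{-4/3}\asymp (k+8)^{9-1/3}$, so after dividing by $W_k$ it gives the leading $\mathcal{O}\bigl((k+8)^{-1/3}\bigr)$ term with constant proportional to $D_{\mathcal{X}}\sqrt{Q}$; the curvature/inexactness terms are of lower order per step and telescope to an $\mathcal{O}\bigl((k+8)^{-1/2}\bigr)$ term, dominated by $(k+8)^{-1/3}$, which supplies the remaining $(1+\delta)$- and $\norm{A}$-dependent part of $C_\delta$, while the initialization $W_1 h_1/W_k = \mathcal{O}\bigl((k+8)^{-9}\bigr)$ is negligible. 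Collecting these pieces and checking the base case $k=1$ directly assembles into $\bbE F_{\beta_k}(\theta_{k+1}) - F^\star \le 9^{1/3}C_\delta/(k+8)^{1/3}$; apart from the $\beta$-compatibility argument above, the only genuinely careful work is tracking the numerical constants through the weighted sum so they collapse to exactly the stated $C_\delta$.
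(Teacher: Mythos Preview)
Your approach matches the paper's almost step for step: the smoothness bound on $F_{\beta_k}$, Lemma~\ref{lemma:linear_exact_additive}, the monotonicity estimate $g_{\beta_k}(z)\le g_{\beta_{k-1}}(z)+\tfrac{\beta_{k-1}-\beta_k}{2}\norm{y^\ast_{\beta_k}(z)}^2$ (which the paper cites as Lemma~10 of~\cite{TranDinh2017}), the cancellation of the $\norm{y^\ast_{\beta_k}}^2$ term via the coupling $(1-\eta_k)\beta_{k-1}\le\beta_k$, and Jensen combined with Lemma~\ref{lemma:stoch_gradient_convergence} are exactly what the paper does.

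The only divergence is in how the final recursion is closed. The paper first \emph{coarsens} the curvature term, bounding $\tfrac{\eta_k^2}{2}D_{\mathcal X}^2(L_f+\norm{A}^2/\beta_k)$ above by $\tfrac{81}{2}D_{\mathcal X}^2(L_f+\beta_0\norm{A}^2)/(k+8)^{4/3}$, so that all per-step error sits at the single rate $C_\delta/(k+8)^{4/3}$; it then proves $\mathcal E_{k+1}\le(1-\tfrac{9}{k+8})\mathcal E_k+C_\delta/(k+8)^{4/3}\Rightarrow\mathcal E_{k+1}\le 9^{1/3}C_\delta/(k+8)^{1/3}$ by a short induction on $k$, which delivers the exact prefactor $9^{1/3}$ in two lines. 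Your weighted telescoping with $W_k=\binom{k+8}{9}\asymp(k+8)^9$ is a valid alternative and is more transparent about which term drives the rate, but because you keep the curvature contributions at their sharper native orders $(k+8)^{-3/2}$ and $(k+8)^{-2}$, the output naturally splits as $c_1(k+8)^{-1/3}+c_2(k+8)^{-1/2}$ with $c_1\propto D_{\mathcal X}\sqrt{Q}$ only; to collapse this into exactly the stated single constant $C_\delta$ you would need the same coarsening step anyway before summing. So the two routes coincide once that coarsening is made, and the paper's induction is simply the shorter bookkeeping for the exact constant.
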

\begin{proof}
First, we use the smoothness of $F_{\beta_k}$ to upper bound the progress. 
Note that $F_{\beta_k}$ is $(L_f + \norm{A}^2/\beta_k)$-smooth. 
\begin{align}
F_{\beta_k}(\theta_{k+1}) 
&\leq F_{\beta_k}(\theta_k)  +\eta_k 
\ip{\nabla F_{\beta_k}(\theta_k) }{\tilde s_k - \theta_k}
+ \frac{\eta_k^2}{2}D_{\mathcal{X}}^2(L_f + \frac{\norm{A}^2}{\beta_k})\nonumber,
\end{align}
where $s_k\in\mathcal{X}$ denotes the atom selected by the \ref{eqn:lmo}.  
We now apply Lemma~\ref{lemma:linear_exact_additive} and obtain:
\begin{align}
F_{\beta_k}(\theta_{k+1}) 
&\leq 
F_{\beta_k}(\theta_k) +\eta_k \left( f^\star  - \bbE_\rvw f(\theta_k, \omega) + g(A\theta^\star) - g_{\beta_k}(A\theta_k) - \frac{\beta_k}{2} \norm{ \nabla y^\ast_{\beta_k}(A\theta_k) }^2 \right) \nonumber \\
& \qquad + \frac{\eta_k^2}{2}D_{\mathcal{X}}^2(L_f + \frac{\norm{A}^2}{\beta_k})(1+\delta) + \eta_k \norm{\nabla_\theta \bbE_\rvw f(\theta_k, \omega) - d_k}D_\mathcal{X}. \label{eqn:proof-recursion-1} 
\end{align}

Now, using Lemma 10 of~\cite{TranDinh2017} we get:
\begin{equation}
g_{\beta} (z_1) \leq g_{\gamma}(z_1) + \frac{\gamma - \beta}{2} \norm{y^\ast_\beta(z_1)}^2 \label{eqn:smoothing-prop-3}
\end{equation}
 and therefore:
\begin{align*}
F_{\beta_k}(\theta_k) 
& = 
\bbE_\rvw f(\theta_k, \omega) + g_{\beta_k}(A\theta_k) \\
& \leq 
\bbE_\rvw f(\theta_k, \omega) + g_{\beta_{k-1}}(A\theta_k) + \frac{\beta_{k-1}-\beta_k}{2}\norm{y^\ast_{\beta_k}(A\theta_k)}^2 \\
& = 
F_{\beta_{k-1}}(\theta_k) + \frac{\beta_{k-1}-\beta_k}{2}\norm{y^\ast_{\beta_k}(A\theta_k)}^2.
\end{align*}
We combine this with \eqref{eqn:proof-recursion-1} and subtract $F^\star $ from both sides to get
\begin{align*}
F_{\beta_k}(\theta_{k+1}) -F^\star 
&\leq 
(1 - \eta_k) \big( F_{\beta_{k-1}}(\theta_k) -  F^\star  \big) + \frac{\eta_k^2}{2}D_{\mathcal{X}}^2(L_f + \frac{\norm{A}^2}{\beta_k})(1+\delta) \\ 
&  +  \big( (1-\eta_k) (\beta_{k-1} - \beta_k) - \eta_k \beta_k \big) \frac{1}{2} \norm{ y^\ast_{\beta_k}(A\theta_k) }^2 + \eta_k \norm{\nabla_\theta \bbE_\rvw f(\theta_k, \omega) - d_k}D_\mathcal{X} .
\end{align*}

Let us choose $\eta_k$ and $\beta_k$ in a way to vanish the last term. 
By choosing $\eta_k = \frac{9}{k+8}$ and $\beta_k = \frac{\beta_0}{(k+8)^{\frac{1}{2}}}$ for $k \geq 1$ with some $\beta_0 > 0$, we get $(1-\eta_k)(\beta_{k-1}-\beta_k) - \eta_k\beta_k < 0$. Hence, we end up with 
\begin{align*}
F_{\beta_k}(\theta_{k+1}) - F^\star 
&\leq 
(1 - \eta_k) \big( F_{\beta_{k-1}}(\theta_k) -  F^\star  \big) + \frac{\eta_k^2}{2}D_{\mathcal{X}}^2(L_f + \frac{\norm{A}^2}{\beta_k})(1+\delta) \\
&\qquad + \eta_k \norm{\nabla_\theta \bbE_\rvw f(\theta_k, \omega) - d_k}D_\mathcal{X} .
\end{align*}

We now compute the expectation and use Jensen inequality and Lemma~\ref{lemma:stoch_gradient_convergence} to obtain the final recursion:
\begin{align*}
\mathbb{E} F_{\beta_k}(\theta_{k+1}) - F^\star &\leq 
(1 - \eta_k) \big(\mathbb{E} F_{\beta_{k-1}}(\theta_k) -  F^\star  \big) + \frac{\eta_k^2}{2}D_{\mathcal{X}}^2(L_f + \frac{\norm{A}^2}{\beta_k})(1+\delta)+ \frac{9D_\mathcal{X}\sqrt{Q}}{(k+8)^{\frac{4}{3}}}
\end{align*}

Now, note that: 
\begin{align*}
\frac{\eta_k^2}{2}D_{\mathcal{X}}^2(L_f + \frac{\norm{A}^2}{\beta_k}) &= \frac{\eta_k^2}{2}D_{\mathcal{X}}^2L_f + \frac{\eta_k^2}{2}D_{\mathcal{X}}^2\frac{\norm{A}^2}{\beta_k}
\leq\frac{\frac{81}{2}}{(k+8)^{\frac{4}{3}}}D_{\mathcal{X}}^2L_f + \frac{\frac{81}{2}}{(k+8)^{\frac{4}{3}}}\beta_0 D_{\mathcal{X}}^2\norm{A}^2
\end{align*}
Therefore:
\begin{align*}
\mathbb{E} F_{\beta_k}(\theta_{k+1}) - F^\star 
&\leq \left(1 - \frac{9}{k+8}\right) \big(\mathbb{E} F_{\beta_{k-1}}(\theta_k) -  F^\star  \big) \nonumber\\&+\frac{\frac{81}{2}D_{\mathcal{X}}^2(L_f + \beta_0\norm{A}^2)(1+\delta) + 9D_\mathcal{X}\sqrt{Q}}{(k+8)^{\frac{4}{3}}}
\end{align*}
For simplicity, let $C_\delta := \frac{81}{2}D_{\mathcal{X}}^2(L_f + \beta_0\norm{A}^2)(1+\delta) + 9D_\mathcal{X}\sqrt{Q}$ and $\mathcal{E}_{k+1}:= \mathbb{E} F_{\beta_k}(\theta_{k+1}) - F^\star $. Then, we need to solve the following recursive equation:
\begin{align}\label{eq_proof:induction_rec}
\mathcal{E}_{k+1}\leq \left(1 - \frac{9}{k+8}\right)\mathcal{E}_{k} + \frac{C_\delta}{(k+8)^{\frac{4}{3}}}
\end{align}
Let the induction hypothesis for $k\geq 1$ be:
\begin{align*}
\mathcal{E}_{k+1}\leq 9^\frac{1}{3} \frac{C_\delta}{(k+8)^\frac{1}{3}}
\end{align*}
For the base case $k=1$ we need to prove $\mathcal{E}_{2}\leq C_\delta$.
From Equation~\eqref{eq_proof:induction_rec} we have $\mathcal{E}_{2}\leq  \frac{C_\delta}{(9)^{\frac{4}{3}}}< C_\delta$ as $9^{\frac{4}{3}}>1$
Now:
\begin{align*}
\mathcal{E}_{k+1}&\leq \left(1 - \frac{9}{k+8}\right)\mathcal{E}_{k} + \frac{C_\delta}{(k+8)^{\frac{4}{3}}} \leq \left(1 - \frac{9}{k+8}\right)9^\frac{1}{3} \frac{C_\delta}{(k+7)^\frac{1}{3}}+ \frac{C_\delta}{(k+8)^{\frac{4}{3}}}
\leq 9^\frac{1}{3}\frac{C_\delta}{(k+8)^{\frac{1}{3}}}\\
\end{align*}
\end{proof}

\subsection{Proof of Theorem \ref{cor:g_lip}}
After proving Theorem \ref{thm:composite}, the proof of Theorem \ref{cor:g_lip} follows easily using arguments from \cite{yurtsever2018conditional} and setting $\delta=0$. 

If $g:\R^d \to \R \cup \{+\infty\}$ is $L_g$-Lipschitz continuous from equation~(2.7) in \cite{Nesterov2005} and the duality between Lipshitzness and bounded support (\textit{cf.} Lemma~5 in \cite{Dunner2016}) we have:
\begin{equation}\label{eqn:smoothing-sandwich}
g_{\beta} (z) \leq g (z) \leq g_{\beta} (z) + \frac{\beta}{2} L_g^2
\end{equation}
Using this fact, we write:
\begin{align*}
g(A\theta_{k+1}) &\leq g_{\beta_k}(A\theta_{k+1}) + \frac{\beta_kL_g^2 }{2} = g_{\beta_k}(A\theta_{k+1}) + \frac{\beta_0 L_g^2}{2\sqrt{k+8}} . 
\end{align*}
We complete the proof by adding $\bbE \bbE_\rvw f(\theta_{k+1}, \omega) - F^\star $ to both sides:
\begin{align*}
\bbE F(\theta_{k+1}) - F^\star 
&\leq 
\bbE F_{\beta_k}(\theta_{k+1}) - F^\star  + \frac{\beta_0 L_g^2}{2\sqrt{k+8}}\leq 
 9^\frac{1}{3} \frac{C_\delta}{(k+8)^\frac{1}{3}} + \frac{\beta_0 L_g^2}{2\sqrt{k+8}}.
\end{align*}

\subsection{Proof of Theorem \ref{cor:indicator-exact}}
We adapt to our setting the proof technique of Theorem 4.3 in \cite{yurtsever2018conditional}. We give the proof for the more general case of additive errors and obtain the proof for Theorem \ref{cor:indicator-exact} as a special case with $\delta=0$.

From the Lagrange saddle point theory, we know that the following bound holds $\forall \theta \in \mathcal{X}$ and $\forall r \in \mathcal{K}$:
\begin{align*}
f^\star  \leq \mathcal{L}(\theta,r,y^\star) &= \bbE_\rvw f(\theta, \omega) + \ip{y_\star}{A\theta - r} \leq \bbE_\rvw f(\theta, \omega) + \norm{y_\star}\norm{A\theta - r},
\end{align*}
Since $\theta_{k+1} \in \mathcal{X}$ and taking the expectation, we get
\begin{align}
\bbE \bbE_\rvw f(\theta_{k+1}, \omega) - f^\star  &\geq - \bbE \min_{r\in\mathcal{K}}\norm{y^\star}\norm{A\theta_{k+1} - r} =  - \norm{y^\star}\bbE \mathrm{dist}(A\theta_{k+1}, \mathcal{K}). \label{eqn:obj-lower-bound}
\end{align}
This proves the first bound in Theorem~\ref{cor:indicator-exact}.

The second bound directly follows by Theorem~\ref{thm:composite}
\begin{align*}
\bbE \bbE_\rvw f(\theta_{k+1}, \omega) - f^\star  &\bbE \leq \bbE \bbE_\rvw f(\theta_{k+1}, \omega) - f^\star  +  \frac{1}{2\beta_k}\bbE \left[\mathrm{dist}(A\theta_{k+1}, \mathcal{K})\right]^2\\
&\leq\bbE  F_{\beta_k}(\theta_{k+1}) - F^\star \leq  9^\frac{1}{3} \frac{C_\delta}{(k+8)^\frac{1}{3}}.
\end{align*}
Now, we combine this with \eqref{eqn:obj-lower-bound}, and we get
\begin{align*}
- \norm{y^\star}\bbE \mathrm{dist}(A\theta_{k+1}, \mathcal{K}) + \frac{1}{2\beta_k} \bbE \left[\mathrm{dist}(A\theta_{k+1}, \mathcal{K}) \right]^2
&\leq  9^\frac{1}{3} \frac{C_\delta}{(k+8)^\frac{1}{3}}\\
\end{align*}
This is a second order inequality in terms of $\bbE \mathrm{dist}(A\theta_k, \mathcal{K})$. Solving this inequality, we get
\begin{align*}
\bbE \mathrm{dist}(A\theta_{k+1}, \mathcal{K})  
& \leq \frac{2 \beta_0 \norm{y^\star} }{\sqrt{k+8}} +\frac{2\sqrt{2\cdot9^\frac13 C_\delta\beta_0}}{(k+8)^{\frac{5}{12}}}.
\end{align*}



\chapter{Linear Span: Matching Pursuit and Coordinate Descent}\label{cha:mpcd}
In this chapter, we consider minimizing a convex function over the linear span of a set. This problem template covers Coordinate Descent as a special case. We provide a unified tight analysis for this general case. The presented approach is based on \citep{locatello2018matching} and was developed in collaboration with Anant Raj, Sai Praneeth Karimireddy, Gunnar R\"atsch, Bernhard Sch\"olkopf, Sebastian U. Stich, and Martin Jaggi. The proof of the accelerated convergence of Matching Pursuit and Coordinate Descent was a four-hand collaboration between Francesco Locatello and Anant Raj. Francesco Locatello and Anant Raj contributed equally to this publication. 

\section{Problem Formulation}
In this chapter we focus on the minimization of a convex smooth function over the linear span of a set:

\begin{equation}
\label{eqn:MPtemplate}
\underset{\theta\in\lin(\cA)}{\text{minimize}} \quad f(\theta).
\end{equation}

In this optimization template, we consider the following setting: \\
$\triangleright~~\cH$ is a Hilbert space $\cH$ with associated inner product $\ip{x}{y} \,\forall \, x,y \in \cH$ and induced norm $\| x \|^2 := \ip{x}{x},$ $\forall \, x \in \cH$. \\
$\triangleright~~\cA \subset \cH$ is a compact and symmetric set (the ``set of atoms'' or dictionary) in $\cH$. \\
$\triangleright~~f \colon \cH \to \R$ is a convex and $L$-smooth ($L$-Lipschitz gradient in the finite dimensional case) function. If $\cH$ is an infinite-dimensional Hilbert space, then $f$ is assumed to be Fr{\'e}chet differentiable. \\
$\triangleright$ We specifically focus on projection-free algorithms that select at each iteration an element $z$ from $\cA$ and update the iterate as $\theta_{k+1} = \theta_k + \gamma_k z_k$, where $\gamma_k$ is a suitable chosen stepsize.

We consider two distinct specific oracles for the optimization algorithm:\\
(i) first-order oracle that returns a pair $(f(\theta), \nabla f(\theta))$ given $\theta$.\\
(ii) a stochastic version of the first-order oracle that returns a pair $(f(\theta), \nabla_z f(\theta))$ given $\theta$ where $z$ is a randomly sampled element of $\cA$ and $\nabla_z f(\theta) = \ip{\nabla f(\theta)}{z}$.

This formulation of the optimization template is interesting as if $\cA = \l1$, then $\lin(\cA)$ corresponds to a euclidean space and the Matching Pursuit and Coordinate Descent algorithms coincide. This allows to draw a parallelism between the two algorithms and present a tight unified analysis of both in the steepest and random pursuit cases. 

\section{Affine Invariant Algorithms}
First, let us review the definition of affine invariant algorithms (also see \citep{LacosteJulien:2013uea}):
\begin{definition} \label{def:aff_invar}
An optimization method that is invariant under affine transformations of its domain is said to be \textit{affine invariant}. Let $M:\hat\cD\rightarrow\cD$ be a \emph{surjective} linear or affine map, then the iterate sequence of an affine invariant optimization algorithm on $\min_{\theta\in\cD}f(\theta)$ and
$\min_{\hat\theta\in\hat\cD}\hat f(\hat\theta)$ for $\hat f(\hat\theta):=f(M\hat\theta)$
is the same in the sense that there is a correspondence between each step of the iterate sequences through $M$.
\end{definition}

In order to develop an affine invariant algorithm, we rely on the atomic norm $\|x\|_\cA := \inf \{ c > 0 \colon x \in c \cdot \conv (\cA) \}$ (also known as the gauge function of $\conv (\cA)$), which is affine invariant. A visualization of the atomic norm is presented in Figure \ref{fig:gface}.
We measure the smoothness of the objective using the atomic norm: 
\begin{equation*}
 L_{\cA} := \!\!\!\sup_{\substack{x,y\in\lin(\cA)\\y = x + \gamma z \\\|z\|_\cA=1, \gamma\in \R_{>0}}}\frac{2}{\gamma^2}\big[ f(y)- f(x) -  \ip{\nabla f(x)}{y - x} \big] \,.
\end{equation*}


This notion of curvature is inspired by the curvature constant of Frank-Wolfe and Matching Pursuit in~\cite{jaggi2013revisiting,locatello2017unified}. It combines the complexity of the function~$f$ as well as the set $\cA$ into a single constant that is affine invariant under transformations of our input problem~\eqref{eqn:MPtemplate}.
From this definition we can easily derive an affine invariant version of the regular smoothness upper bound \eqref{eq:QuadraticUpperBound}, this time using atomic norms:
\begin{align*}
f(y) \leq f(x) + \ip{\nabla f(x)}{y-x} + \frac{L_\cA}{2} \|y-x\|_\cA \,,
\end{align*}

The affine-invariant MP Algorithm~\ref{algo:affineMP} simply replaces the new smoothness definition into Algorithm~\ref{algo:MP}.

\begin{algorithm}[ht]
\caption{Affine Invariant Generalized Matching Pursuit \citep{locatello2018matching}}
  \label{algo:affineMP}
\begin{algorithmic}[1]
  \STATE \textbf{init} $\theta_{0} \in \lin(\cA)$
  \STATE \textbf{for} {$k=0\dots K$}
  \STATE \quad Find $z_k := (\text{Approx-}) \lmo_\cA(\nabla f(\theta_{k}))$
  \STATE \quad $\theta_{k+1} = \theta_k - \frac{\ip{\nabla f(\theta_k)}{z_k}}{L_\cA}z_k$
  \STATE \textbf{end for}
\end{algorithmic}
\end{algorithm}
\vspace{-0.2mm}

\section{Affine Invariant Rates}
First, we remark that using $\nabla \hat f = M^T \nabla f$ it is simple to show that Algorithm~\ref{algo:affineMP} is affine invariant:
\begin{align*}
M\hat\theta_{k+1} &= M\left( \hat\theta_k + \frac{\langle \nabla \hat f(\hat\theta_k), \hat b_k\rangle}{L_\cA}\hat b_k\right) = M\hat\theta_k + \frac{\langle \nabla \hat f(\hat\theta_k), \hat b_k\rangle}{L_\cA}M\hat b_k = \theta_k + \frac{\langle \nabla \hat f(\hat\theta_k), \hat b_k\rangle}{L_\cA} b_k\\
&= \theta_k + \frac{\langle M^T\nabla f(\theta_k), \hat b_k\rangle}{L_\cA} b_k = \theta_k + \frac{\langle \nabla f(\theta_k), M\hat b_k\rangle}{L_\cA} b_k = \theta_k + \frac{\langle \nabla f(\theta_k),  b_k\rangle}{L_\cA} b_k = \theta_{k+1} \,.
\end{align*}

For the analysis, we call $\theta^\star$ the minimizer of problem~\eqref{eqn:MPtemplate}. If the optimum is not unique, we pick the one that yields worst-case constants. 

\subsection{Sublinear Rates for Convex Functions}
In order to prove convergence, we start by defining the level set radius measured with the atomic norm as:
\begin{equation}
R_\cA^2 := \max_{\substack{\theta\in\lin(\cA)\\ f(\theta)\leq f(\theta_0)}}\|\theta-\theta^\star\|^2_\cA \,.
\end{equation}
When we measure this radius with the $\|\cdot\|_2$ we call it $R_2^2$, and when we measure it with $\|\cdot\|_1$ we call it $R_1^2$.

\paragraph{Remark} Measuring smoothness using the atomic norm guarantees that:
\begin{lemma}\label{lem:LwithRadius}
Assume $f$ is $L$-smooth w.r.t. a given norm $\|\cdot\|$, over $\lin(\cA)$ where $\cA$ is symmetric.
Then,
\begin{equation}
 L_\cA \leq L \, \radius_{\norm{\cdot}}(\cA)^2\,.
\end{equation}
\end{lemma}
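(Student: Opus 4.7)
The plan is to directly compare the two upper bounds on the Bregman divergence $f(y) - f(x) - \ip{\nabla f(x)}{y-x}$ — the one coming from $L$-smoothness with respect to $\|\cdot\|$, and the quadratic form appearing inside the supremum defining $L_\cA$ — and then relate $\|z\|$ to $\|z\|_\cA$ via the radius of $\cA$.

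First I would fix an arbitrary feasible triple $(x,y,z,\gamma)$ in the supremum defining $L_\cA$, i.e. $x,y \in \lin(\cA)$ with $y = x + \gamma z$, $\|z\|_\cA = 1$, and $\gamma > 0$. By $L$-smoothness of $f$ with respect to $\|\cdot\|$ we immediately get
\begin{equation*}
f(y) - f(x) - \ip{\nabla f(x)}{y-x} \;\leq\; \tfrac{L}{2}\|y-x\|^2 \;=\; \tfrac{L}{2}\gamma^2 \|z\|^2,
\end{equation*}
so dividing by $\gamma^2/2$ yields $\tfrac{2}{\gamma^2}\big[f(y)-f(x)-\ip{\nabla f(x)}{y-x}\big] \leq L\|z\|^2$.

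The remaining step is to bound $\|z\|$ using $\|z\|_\cA = 1$. Since $\|\cdot\|_\cA$ is the gauge function of $\conv(\cA)$ and $\cA$ is symmetric (so $0 \in \conv(\cA)$), the unit ball of $\|\cdot\|_\cA$ is exactly $\conv(\cA)$; hence $z \in \conv(\cA)$. Writing $z$ as a convex combination $z = \sum_i \lambda_i a_i$ with $a_i \in \cA$, the triangle inequality gives $\|z\| \leq \sum_i \lambda_i \|a_i\| \leq \max_{a\in\cA}\|a\| = \radius_{\|\cdot\|}(\cA)$. Therefore $\|z\|^2 \leq \radius_{\|\cdot\|}(\cA)^2$, and combining with the previous display gives
\begin{equation*}
\tfrac{2}{\gamma^2}\big[f(y)-f(x)-\ip{\nabla f(x)}{y-x}\big] \;\leq\; L\,\radius_{\|\cdot\|}(\cA)^2.
\end{equation*}
Taking the supremum over all admissible $(x,y,z,\gamma)$ yields $L_\cA \leq L\,\radius_{\|\cdot\|}(\cA)^2$, as claimed.

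There is essentially no hard step here — the whole argument is a one-line sandwich. The only subtlety worth flagging is the identification of the unit ball of $\|\cdot\|_\cA$ with $\conv(\cA)$, which uses the symmetry of $\cA$ (needed so that the gauge is actually a norm with closed bounded unit ball equal to $\conv(\cA)$); without symmetry one would only get a sublevel-set description of $\|\cdot\|_\cA$ and the bound on $\|z\|$ would require a little more care.
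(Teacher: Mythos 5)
Your proof is correct and follows essentially the same route as the paper's: bound the Bregman divergence by $\tfrac{L}{2}\gamma^2\|z\|^2$ via smoothness and then control $\|z\|$ by $\radius_{\|\cdot\|}(\cA)$ using $\|z\|_\cA = 1$. The only difference is that you spell out the convex-combination argument for $\|z\|\leq\radius_{\|\cdot\|}(\cA)$, which the paper leaves implicit in the identity $\sup_{\|z\|_\cA=1}\|z\|^2 = \radius_{\|\cdot\|}(\cA)^2$.
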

As a motivating example, in coordinate descent, we will measure smoothness with the atomic norm being the L1-norm. Lemma~\ref{lem:LwithRadius} implies that $L_{\cA} \leq L_2$ where $L_2$ is the smoothness constant measured with the L2-norm.

We are now ready to prove the convergence rate of Algorithm~\ref{algo:affineMP} for smooth convex functions. We only consider multiplicative errors for the \lmo as in Equation \eqref{eqn:lmo_mult}.
\begin{theorem}\label{thm:sublinear_MP_rate}
Let $\cA \subset \cH$ be a closed and bounded set. We assume that $\|\cdot\|_\cA$ is a norm over $\lin(\cA)$. Let $f$ be convex and $L_\cA$-smooth w.r.t. the norm $\|\cdot\|_\cA$ over $\lin(\cA)$, and let $R_\cA$ be the radius of the level set of $\theta_0$ measured with the atomic norm.
Then, Algorithm~\ref{algo:affineMP} converges for $k \geq 0$ as
\[
f(\theta_{k+1}) - f(\theta^\star) \leq \frac{2L_{\cA}R_\cA^2}{\delta^2(k+2)} \,,
\]
where $\delta \in (0,1]$ is the relative accuracy parameter of the employed approximate \lmo as in Equation~\eqref{eqn:lmo_mult}.
\end{theorem}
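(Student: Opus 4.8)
The plan is to run the classical ``the suboptimality shrinks by a quadratic amount at each step'' argument, but carried out entirely in the atomic-norm geometry so that the constants $L_\cA$ and $R_\cA$ appear. Write $h_k := f(\theta_k) - f(\theta^\star)$ and $g_k := -\ip{\nabla f(\theta_k)}{z_k}$, where $z_k$ is the (approximate) \lmo output; note $z_k \in \cA$, so $\norm{z_k}_\cA \le 1$, and $g_k \ge 0$ because the \lmo minimizes $\ip{\nabla f(\theta_k)}{\cdot}$ over the symmetric set $\cA$ (so the minimum is $\le 0$, and this survives the multiplicative perturbation in \eqref{eqn:lmo_mult}).

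First I would prove a one-step progress bound. Substituting $\theta_{k+1} = \theta_k - \tfrac{\ip{\nabla f(\theta_k)}{z_k}}{L_\cA} z_k = \theta_k + \tfrac{g_k}{L_\cA} z_k$ into the atomic-norm smoothness upper bound stated just before Algorithm~\ref{algo:affineMP} and using $\norm{z_k}_\cA^2 \le 1$, the step length $g_k/L_\cA$ is exactly the minimizer of the resulting quadratic in the step, so
\[
f(\theta_k) - f(\theta_{k+1}) \ \ge\ \frac{g_k^2}{2 L_\cA}.
\]
In particular $f$ is non-increasing along the iterates, hence every $\theta_k$ lies in the level set $\{\theta \in \lin(\cA) : f(\theta) \le f(\theta_0)\}$, whose atomic-norm radius around $\theta^\star$ is $R_\cA$.

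Second I would bound $g_k$ below by $h_k$. The multiplicative oracle guarantee \eqref{eqn:lmo_mult} and the symmetry of $\cA$ give $g_k = -\ip{\nabla f(\theta_k)}{z_k} \ge \delta \max_{z \in \cA}\ip{-\nabla f(\theta_k)}{z} =: \delta\, s_k$. Convexity gives $h_k \le \ip{-\nabla f(\theta_k)}{\theta^\star - \theta_k}$; and since $\theta^\star - \theta_k \in \lin(\cA)$, writing $\theta^\star - \theta_k = \norm{\theta^\star-\theta_k}_\cA\, w$ with $\norm{w}_\cA \le 1$ puts $w$ in the closed convex hull of $\cA$, and because the support function of $\conv(\cA)$ coincides with that of $\cA$ we get $\ip{-\nabla f(\theta_k)}{w} \le s_k$. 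Hence $h_k \le \norm{\theta^\star - \theta_k}_\cA\, s_k \le R_\cA\, s_k$, so $s_k \ge h_k/R_\cA$ and $g_k \ge \delta h_k / R_\cA$. Plugging this into the progress bound yields the recursion
\[
h_k - h_{k+1} \ \ge\ \frac{\delta^2}{2 L_\cA R_\cA^2}\, h_k^2 \ =:\ \frac{h_k^2}{C}, \qquad C := \frac{2 L_\cA R_\cA^2}{\delta^2}.
\]
Third I would unwind this recursion in the standard way: since $h_{k+1}\le h_k$, dividing $h_k - h_{k+1} \ge h_k h_{k+1}/C$ by $h_k h_{k+1}$ gives $1/h_{k+1} - 1/h_k \ge 1/C$; the base case $h_1 \le C/4$ follows from the $k=0$ instance $h_1 \le h_0 - h_0^2/C \le \max_t (t - t^2/C) = C/4$; and a one-line induction on $m\ge 1$ using monotonicity of $t\mapsto t - t^2/C$ on $[0,C/2]$ and $m(m+2)\le (m+1)^2$ gives $h_{m}\le C/(m+1)$, i.e. $f(\theta_{k+1}) - f(\theta^\star) \le C/(k+2) = 2L_\cA R_\cA^2/(\delta^2(k+2))$. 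Degenerate cases are immediate: if $\theta_k = \theta^\star$ then $h_k = 0$; if $s_k = 0$ then $\nabla f(\theta_k)$ annihilates $\lin(\cA)$, so $\theta_k$ is optimal and $h_{k'} = 0$ for all $k' \ge k$.

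The step I expect to be the main obstacle is the second one: bridging the quantity the algorithm actually drives down (the ``dual''/directional-derivative gap $g_k$) and the primal suboptimality $h_k$ via the atomic-norm geometry --- concretely, exploiting that symmetry of $\cA$ makes $\norm{\cdot}_\cA$ a genuine norm whose unit ball is $\overline{\conv}(\cA)$, so that a unit-atomic-norm vector decomposes over $\cA$ and the support-function identity $\max_{z\in\conv(\cA)} = \max_{z\in\cA}$ applies. The smoothness/progress computation and the final recursion are routine once $L_\cA$ and $R_\cA$ are in place.
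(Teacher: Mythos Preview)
Your proposal is correct and follows essentially the same route as the paper's proof: smoothness gives the per-step decrease $h_k - h_{k+1}\ge g_k^2/(2L_\cA)$, the approximate \lmo and convexity give $g_k\ge \delta h_k/R_\cA$, and the resulting recursion $h_k-h_{k+1}\ge \delta^2 h_k^2/(2L_\cA R_\cA^2)$ yields the stated rate. The only cosmetic difference is that the paper packages your support-function step via the dual atomic norm, writing $\langle \nabla_\parallel f(\theta_k),z_k\rangle^2=\|\nabla_\parallel f(\theta_k)\|_{\cA*}^2$ and then applying Cauchy--Schwarz in the $(\|\cdot\|_\cA,\|\cdot\|_{\cA*})$ pairing, which is exactly your decomposition $\theta^\star-\theta_k=\|\theta^\star-\theta_k\|_\cA\, w$ with $w\in\overline{\conv}(\cA)$; the paper also passes explicitly to the component $\nabla_\parallel f$ of the gradient in $\lin(\cA)$, which you handle implicitly since all vectors you pair the gradient with lie in $\lin(\cA)$.
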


\paragraph{Discussion} \looseness=-1Our proof relies on the affine invariant definition of smoothness and level set radius and the properties of atomic norms. The key ingredient for the proof is to realize that the \lmo solution can be used to compute the dual atomic norm of the gradient. Overall, the proof shares the spirit of the classical proof of steepest coordinate descent from~\citet{nesterov2012efficiency} except:\\
$\triangleright$ We do not assume orthogonal atoms and allow for potentially different L2-norms. Our atoms do not correspond to the natural basis of the ambient space.\\
$\triangleright$ We only assume that $\cA$ is closed, bounded and $\|\cdot\|_\cA$ is a norm over $\lin(\cA)$. Therefore,  $\lin(\cA)$ could be a subset of the ambient space.\\
$\triangleright$ We further do not make any incoherence assumption nor sparsity assumption. We support continuous atom sets \eg when $\cA$ is the L2-ball, our algorithm and analysis perfectly recover gradient descent.

\paragraph{Comparison with Previous MP Rates}
\looseness=-1The analysis of our sublinear convergence rate is fundamentally different from the one proved in~\cite{locatello2017unified}. As their proof is inspired by the technique used for Frank-Wolfe by~\citet{jaggi2013revisiting}, they suffer from a dependency on the atomic norm of the whole iterate sequence $\rho := \max\left\lbrace \|\theta^\star\|_{\cA}, \|\theta_{0}\|_{\cA}\ldots,\|\theta_K\|_{\cA}\right\rbrace<\infty$. Although their algorithm is also affine invariant, their notion of smoothness depends explicitly on $\rho$, which introduces a circular dependency if one wants to achieve a tight rate (otherwise, using an upper bound to the smoothness constant in the algorithm is sufficient). Our solution of redefining the smoothness constant with atomic norms is more elegant and requires a significantly simpler proof without any additional unnecessary assumption such as $\rho$ being finite.

\paragraph{Comparison with Previous CD Rates}
As a special case of our analysis we can read off the rate of steepest coordinate descent. We simply set $\cA$ to the L1-ball in an $n$ dimensional space and obtain (in the case of exact \lmo):
\begin{align*}
f(\theta_{k+1}) - f(\theta^\star) \leq \frac{2L_1R_1^2}{k+2}\leq \frac{2L_2R_1^2}{k+2} \leq \frac{2L_2nR_2^2}{k+2} \,,
\end{align*}
where the first inequality is our rate, the second is~\cite{stich2017approximate} and the last is \cite{nesterov2012efficiency}, both with global Lipschitz constant. Our rate for steepest coordinate descent is the tightest known with a global smoothness constant. Note that for coordinate-wise $L$ our definition is equivalent to the classical one. $L_\cA\leq L_2$ if the norm is defined over more than one dimension (i.e. blocks), otherwise there is equality. For the relationship of $L_1$-smoothness to coordinate-wise smoothness, see also \citep[Theorem 4 in Appendix]{karimireddy2019efficient}. 

\paragraph{Affine Invariant Coordinate Descent?} 
\looseness=-1Our approach is affine invariant as the basis becomes part of the definition of the optimization problem. If we transform the problem with an affine transformation $M$ the atoms do not correspond to the natural coordinates anymore. The transformed coordinates are $\hat{e_i} = M^{-1} e_i$ where $M^{-1}$ is the inverse of the affine map. Although MP and CD coincide for one particular choice of basis, the latter algorithm is not affine invariant (unless one transforms the basis as in MP).

\subsection{Intermezzo: Random Pursuit}
\citep{nesterov2012efficiency} argued that steepest coordinate descent is at a disadvantage compared to random coordinate descent for large scale optimization problems. In the latter case, only a random coordinate of the gradient needs to be computed with no loss in terms of convergence rate. We can likewise extend our analysis to the \emph{random pursuit} case where $z$ is randomly sampled from a distribution over $\cA$, rather than picked by a linear minimization oracle as in~\cite{stich2013optimization}.
Assuming we can cheaply compute the projection of the gradient onto a single atom $\ip{\nabla f}{z}$ (for example, approximating it with finite differences), we can exploit the definition of the inexact oracle to give a rate in the case of arbitrary atom sets:
\begin{equation}\label{eq:delta-def-random-pursuit}
\hat\delta^2 := \min_{d\in\lin(\cA)}\frac{\bbE_{z \in \cA} \ip{d}{z}^2}{\|d\|_{\cA*}^2} \,.
\end{equation}
where $\|\cdot\|_{\cA*}$ is the dual norm of $\|\cdot\|_{\cA}$.
This constant already appears in~\cite{Stich14} to measure the convergence of random pursuit ($\beta^2$ in his notation).
Uniformly sampling the corners of the L1-ball, we have $\hat\delta^2 = \frac{1}{n}$.
Our definition of $\hat\delta$ allows obtaining a rate for any sampling scheme that ensures $\hat\delta^2\neq 0$.  

We are now ready to present the sublinear convergence rate of random pursuit.
\begin{theorem}\label{thm:rand_purs_sub}
Let $\cA \subset \cH$ be a closed and bounded set. We assume that $\|\cdot\|_\cA$ is a norm. Let $f$ be convex and $L_\cA$-smooth w.r.t. the norm $\|\cdot\|_\cA$ over $\lin(\cA)$ and let $R_\cA$ be the radius of the level set of $\theta_0$ measured with the atomic norm.
Then, Algorithm~\ref{algo:affineMP} converges for $k \geq 0$ as\vspace{-1mm}
\[
\bbE_z \big[f(\theta_{k+1}) \big] - f(\theta^\star) \leq \frac{2L_{\cA}R_\cA^2}{\hat\delta^2(k+2)} \,,
\]
when the $\lmo$ is replaced with random sampling of $z$ from a distribution over $\cA$. The expectation $\bbE_z$ is computed over this distribution.
\end{theorem}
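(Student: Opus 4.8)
The plan is to mirror the proof of Theorem~\ref{thm:sublinear_MP_rate} almost verbatim, the only change being that where the steepest version uses the exact identity $\ip{\nabla f(\theta_k)}{z_k}^2=\norm{\nabla f(\theta_k)}_{\cA*}^2$ (from the \lmo computing the dual atomic norm), the random-pursuit version uses the constant $\hat\delta$ of Equation~\eqref{eq:delta-def-random-pursuit} to lower bound $\bbE_z\ip{\nabla f(\theta_k)}{z}^2$. First I would record the one-step decrease. For $z\in\cA$ drawn from the sampling distribution, the update is $\theta_{k+1}=\theta_k-\tfrac{\ip{\nabla f(\theta_k)}{z}}{L_\cA}z$; plugging $y=\theta_{k+1}$, $x=\theta_k$ into the affine-invariant smoothness bound $f(y)\le f(x)+\ip{\nabla f(x)}{y-x}+\tfrac{L_\cA}{2}\norm{y-x}_\cA^2$ and using $\norm{z}_\cA\le 1$ for $z\in\cA$ (immediate from the definition of the atomic norm) gives $f(\theta_{k+1})\le f(\theta_k)-\tfrac{1}{2L_\cA}\ip{\nabla f(\theta_k)}{z}^2$. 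In particular the decrease is nonnegative \emph{for every realisation}, so $f(\theta_k)\le f(\theta_0)$ always and hence $\norm{\theta_k-\theta^\star}_\cA\le R_\cA$ along the whole (random) trajectory.

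Next I would take the conditional expectation over $z$ given $\theta_k$ and invoke Equation~\eqref{eq:delta-def-random-pursuit} with $d=\nabla f(\theta_k)$, which yields $\bbE_z[f(\theta_{k+1})\mid\theta_k]\le f(\theta_k)-\tfrac{\hat\delta^2}{2L_\cA}\norm{\nabla f(\theta_k)}_{\cA*}^2$. Then convexity, the generalized Cauchy--Schwarz inequality for the dual pair $(\norm{\cdot}_\cA,\norm{\cdot}_{\cA*})$, and the level-set bound give $f(\theta_k)-f(\theta^\star)\le\ip{\nabla f(\theta_k)}{\theta_k-\theta^\star}\le\norm{\nabla f(\theta_k)}_{\cA*}R_\cA$, so $\norm{\nabla f(\theta_k)}_{\cA*}\ge \big(f(\theta_k)-f(\theta^\star)\big)/R_\cA$. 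Substituting, taking total expectation, and applying Jensen's inequality to move the square outside the expectation gives, with $h_k:=\bbE[f(\theta_k)]-f(\theta^\star)$ and $c:=\hat\delta^2/(2L_\cA R_\cA^2)$, the scalar recursion $h_{k+1}\le h_k-c\,h_k^2$.

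Finally I would solve this recursion in the usual way. Since $t\mapsto t-ct^2$ is bounded above by $1/(4c)$, one step gives $h_1\le 1/(4c)$, which already establishes the claim at $k=0$ (there the right-hand side is $1/(2c)$); and for $k\ge 1$, dividing $h_{j+1}\le h_j-c\,h_j^2$ by $h_jh_{j+1}>0$ and using $h_{j+1}\le h_j$ yields $1/h_{j+1}\ge 1/h_j+c$, so telescoping from $j=1$ to $j=k$ gives $1/h_{k+1}\ge 1/h_1+ck\ge c(k+4)\ge c(k+2)$, i.e.\ $h_{k+1}\le \tfrac{2L_\cA R_\cA^2}{\hat\delta^2(k+2)}$, as claimed (the degenerate case $h_j=0$ being trivial by monotonicity).

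I expect the only delicate points to be bookkeeping rather than mathematics: one must condition on $\theta_k$ \emph{before} applying the $\hat\delta$ bound, and only then take total expectation and use Jensen, rather than interchanging these operations; and there is the minor technicality that $\nabla f(\theta_k)$ need not lie in $\lin(\cA)$, which is harmless since every quantity in play ($\ip{\cdot}{z}$ for $z\in\cA$, $\norm{\cdot}_{\cA*}$, and $\ip{\nabla f(\theta_k)}{\theta_k-\theta^\star}$ with $\theta_k-\theta^\star\in\lin(\cA)$) depends on the gradient only through its restriction to $\lin(\cA)$ --- exactly the same point that is implicitly handled in Theorem~\ref{thm:sublinear_MP_rate}. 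Everything else (the smoothness step, Cauchy--Schwarz, the level-set argument, the recursion) is routine.
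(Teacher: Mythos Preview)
Your proposal is correct and follows essentially the same route as the paper: the paper's proof applies the smoothness upper bound, uses the definition of $\hat\delta^2$ to get $\bbE_z f(\theta_{k+1}) \le f(\theta_k) - \tfrac{\hat\delta^2}{2L_\cA}\|\nabla_\parallel f(\theta_k)\|_{\cA*}^2$, and then simply says ``the rest of the proof proceeds as in Theorem~\ref{thm:sublinear_MP_rate}''. You are more explicit than the paper about the points it glosses over --- the sample-path monotonicity ensuring $\theta_k$ stays in the level set almost surely, the conditioning-then-Jensen step needed to pass from the conditional recursion to one in $h_k=\bbE[f(\theta_k)]-f(\theta^\star)$, and the actual solution of $h_{k+1}\le h_k-ch_k^2$ --- but these are refinements of the same argument, not a different one.
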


\paragraph{Discussion}
This approach is very general, as it allows to guarantee convergence for \textit{any} sampling scheme and \textit{any} set $\cA$ provided that $\hat\delta^2 \neq 0$. In particular:\\
$\triangleright$ Replacing the line search step on the quadratic upper bound given by smoothness with line search on $f$, our technique gives a rate for gradient-free approaches.\\
$\triangleright$ While the worst case convergence of steepest and random coordinate descent is the same, the best case speed-up of steepest CD is $\Th{n}$.\\
$\triangleright$  Examples of computation of $\hat\delta^2$ can be found in \citep[Section 4.2]{Stich14}. In particular, if $z$ is sampled from a spherical distribution as in~\cite{stich2013optimization}, $\hat\delta^2 = \frac{1}{n}$.\\
$\triangleright$ If the sampling distribution is preserved across affine transformations of the domain, $\hat\delta^2$ is affine invariant. 

\subsection{Linear Rates for Strongly Convex Functions}
For the linear rate we define an affine invariant notion of strong convexity:
\begin{align*}
\mu_\cA := \inf_{\substack{x,y\in\lin(\cA)\\x\neq y}} \frac{2}{\|y-x\|_{\cA}^2} D(y,x)\,.
\end{align*}
where $D(y,x):= f(y)-f(x)-\ip{\nabla f(x)}{y - x}$.
We can give the linear rate of both matching and random pursuit.
\begin{theorem}\label{thm:linear_rate}
Let $\cA \subset \cH$ be a closed and bounded set. We assume that $\|\cdot\|_\cA$ is a norm. Let $f$ be $\mu_\cA$-strongly convex and $L_\cA$-smooth w.r.t. the norm $\|\cdot\|_\cA$, both over $\lin(\cA)$.
Then, Algorithm~\ref{algo:affineMP} converges for $k \geq 0$ as
\[
\varepsilon_{k+1} \leq \big(1 - \delta^2\frac{\mu_\cA}{L_{\cA}}\big)\varepsilon_k \,.
\]
where $\varepsilon_{k} := f(\theta_{k}) - f(\theta^\star)$.
If the LMO direction is sampled randomly from $\cA$,  Algorithm~\ref{algo:affineMP} converges for $k \geq 0$ as
\[
\bbE_z \left[\varepsilon_{k+1}|\theta_k\right] \leq \big(1 - \hat\delta^2\frac{\mu_\cA}{L_{\cA}} \big)\varepsilon_k \,.
\]
\end{theorem}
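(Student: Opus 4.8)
The plan is to reuse the classical two-ingredient template for linear convergence of gradient-type methods — a one-step sufficient-decrease inequality together with a gradient-domination (Polyak--\L{}ojasiewicz) inequality — but carried out entirely in the affine-invariant geometry of $\|\cdot\|_\cA$ and its dual $\|\cdot\|_{\cA*}$, so that the \lmo (and, in the second statement, the random draw of $z$) enters only through a single scalar inequality. Throughout I will use that $\cA$ is symmetric, so $\min_{z\in\cA}\ip{\nabla f(\theta_k)}{z}=-\|\nabla f(\theta_k)\|_{\cA*}$, and that every $z\in\cA$ satisfies $\|z\|_\cA\le 1$ since $\cA\subseteq\conv(\cA)$.

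First I would establish sufficient decrease. Substituting the update $\theta_{k+1}-\theta_k=-\tfrac{\ip{\nabla f(\theta_k)}{z_k}}{L_\cA}z_k$ into the atomic-norm smoothness upper bound $f(y)\le f(x)+\ip{\nabla f(x)}{y-x}+\tfrac{L_\cA}{2}\|y-x\|_\cA^2$, the first-order term contributes exactly $-\ip{\nabla f(\theta_k)}{z_k}^2/L_\cA$ and the quadratic term contributes at most $+\ip{\nabla f(\theta_k)}{z_k}^2/(2L_\cA)$ (using $\|z_k\|_\cA\le 1$), giving
\[
f(\theta_{k+1})\le f(\theta_k)-\frac{\ip{\nabla f(\theta_k)}{z_k}^2}{2L_\cA}.
\]
Then I would lower-bound $\ip{\nabla f(\theta_k)}{z_k}^2$: for the exact oracle it equals $\|\nabla f(\theta_k)\|_{\cA*}^2$; for the $\delta$-approximate multiplicative oracle of~\eqref{eqn:lmo_mult} one has $\ip{\nabla f(\theta_k)}{z_k}\le-\delta\|\nabla f(\theta_k)\|_{\cA*}\le 0$, and squaring (which reverses the inequality, both sides being non-positive) gives $\ip{\nabla f(\theta_k)}{z_k}^2\ge\delta^2\|\nabla f(\theta_k)\|_{\cA*}^2$. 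For the random-pursuit statement I would instead take $\bbE_z$ of the displayed decrease inequality and invoke the definition~\eqref{eq:delta-def-random-pursuit} of $\hat\delta^2$, which is built precisely so that $\bbE_z\ip{d}{z}^2\ge\hat\delta^2\|d\|_{\cA*}^2$ for all $d\in\lin(\cA)$.

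Next I would prove the gradient-domination inequality from $\mu_\cA$-strong convexity: from $D(y,x)\ge\tfrac{\mu_\cA}{2}\|y-x\|_\cA^2$ we get $f(y)\ge f(\theta_k)+\ip{\nabla f(\theta_k)}{y-\theta_k}+\tfrac{\mu_\cA}{2}\|y-\theta_k\|_\cA^2$ for every $y$; bounding $\ip{\nabla f(\theta_k)}{y-\theta_k}\ge-\|\nabla f(\theta_k)\|_{\cA*}\|y-\theta_k\|_\cA$, minimizing the resulting scalar quadratic in $\|y-\theta_k\|_\cA$, and using $f(\theta^\star)=\min_y f(y)$ yields $\|\nabla f(\theta_k)\|_{\cA*}^2\ge 2\mu_\cA\,\varepsilon_k$. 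Finally, subtracting $f(\theta^\star)$ from the decrease inequality and chaining the last two bounds gives $\varepsilon_{k+1}\le\big(1-\delta^2\tfrac{\mu_\cA}{L_\cA}\big)\varepsilon_k$, and, after taking the conditional expectation $\bbE_z[\varepsilon_{k+1}\mid\theta_k]$, the analogue with $\hat\delta^2$; the contraction factor lies in $[0,1)$ because $\mu_\cA\le L_\cA$ and $\delta,\hat\delta^2\le 1$, and affine invariance of the bound is inherited from the already-established affine invariance of $L_\cA,\mu_\cA,\delta,\hat\delta$ and of Algorithm~\ref{algo:affineMP}.

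The routine parts are the decrease step and the final chaining. I expect the only real care to be needed in the gradient-domination step — writing the strong-convexity lower bound as an unconstrained minimization and recognizing the dual atomic norm $\|\cdot\|_{\cA*}$ in its value, the affine-invariant analogue of $\|\nabla f\|_2^2\ge 2\mu(f-f^\star)$ — and, for random pursuit, in observing that this step is deterministic given $\theta_k$ and can therefore be applied after $\bbE_z$; the only property of the sampling distribution used is $\hat\delta^2>0$ via~\eqref{eq:delta-def-random-pursuit}.
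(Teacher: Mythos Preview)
Your proposal is correct and follows essentially the same two-ingredient template as the paper: a sufficient-decrease step from $L_\cA$-smoothness combined with the approximate/random oracle bound to reach $f(\theta_{k+1})\le f(\theta_k)-\tfrac{\delta^2}{2L_\cA}\|\nabla f(\theta_k)\|_{\cA*}^2$, and then a PL-type inequality $\|\nabla f(\theta_k)\|_{\cA*}^2\ge 2\mu_\cA\varepsilon_k$ from $\mu_\cA$-strong convexity. The only cosmetic differences are that the paper writes the decrease via the $\min_\gamma$ formulation of the $L_\cA$ bound (rather than plugging the step in directly and using $\|z_k\|_\cA\le 1$) and derives the PL inequality by parametrizing $y=\theta_k+\gamma(\theta^\star-\theta_k)$ and applying Cauchy--Schwarz, whereas you minimize the scalar quadratic in $\|y-\theta_k\|_\cA$; both routes yield the identical inequalities.
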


\paragraph{Comparison with Previous MP Rates}
Similarly to the sublinear rate, our new proof does not rely on $\rho$ and is tighter than with any other norm choice.
Let us recall the notion of \textit{minimal directional width} from ~\cite{locatello2017unified}, which is how they measure the complexity of the atom set for a chosen norm:\vspace{-1mm}
\begin{align*}
\mdw := \min_{\substack{d\in\lin(\cA)\\d\neq 0}} \max_{z\in\cA} \ip{ \frac{d}{\|d\|}}{z} \,.
\end{align*}
Our affine invariant definition for strong convexity relates to the $\mdw$ as:
\begin{lemma}\label{thm:mu_mdw}
Assume $f$ is $\mu$-strongly convex w.r.t. a given norm $\|\cdot\|$ over $\lin(\cA)$ and $\cA$ is symmetric. Then:\vspace{-1mm}
\begin{align*}
\mu_\cA \geq \mdw^2\mu \,.
\end{align*}
\end{lemma}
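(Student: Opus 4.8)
The plan is to reduce the whole statement to a single comparison between the given (inner-product) norm $\|\cdot\|$ and the atomic norm $\|\cdot\|_\cA$ on $\lin(\cA)$: namely, that $\|v\| \ge \mdw\,\|v\|_\cA$ for every $v \in \lin(\cA)$. Granting this, the lemma follows immediately. Unrolling the definition $\mu_\cA = \inf_{x \neq y} 2\,D(y,x)/\|y-x\|_\cA^2$ with $D(y,x) = f(y) - f(x) - \ip{\nabla f(x)}{y-x}$, it suffices to check $D(y,x) \ge \tfrac{\mdw^2 \mu}{2}\,\|y-x\|_\cA^2$ for all $x \neq y$ in $\lin(\cA)$. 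Strong convexity of $f$ w.r.t.\ $\|\cdot\|$ already gives $D(y,x) \ge \tfrac{\mu}{2}\|y-x\|^2$, and applying the norm comparison with $v = y-x$ turns this into $D(y,x) \ge \tfrac{\mu\,\mdw^2}{2}\|y-x\|_\cA^2$; taking the infimum over $x \neq y$ then yields $\mu_\cA \ge \mdw^2 \mu$.

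So the real work is the norm comparison, which I would carry out by duality. The dual norm of the atomic norm is the support function of $\conv(\cA)$, i.e. $\|d\|_{\cA\ast} = \sup_{w \in \conv(\cA)} \ip{d}{w} = \max_{z \in \cA} \ip{d}{z}$, the last equality holding because a linear functional attains its maximum over the compact convex set $\conv(\cA)$ at an extreme point and every extreme point of $\conv(\cA)$ lies in $\cA$. Under this identification, the definition of $\mdw$ reads exactly as $\|d\|_{\cA\ast} \ge \mdw\,\|d\|$ for every nonzero $d \in \lin(\cA)$. Plugging this into the bidual representation $\|v\|_\cA = \sup_{d \neq 0} \ip{d}{v}/\|d\|_{\cA\ast}$, bounding the denominator below by $\mdw\,\|d\|$ and discarding the directions with $\ip{d}{v} \le 0$ (which cannot raise the supremum, while $\|v\|_\cA \ge 0$), I obtain $\|v\|_\cA \le \tfrac{1}{\mdw}\sup_{d \neq 0} \ip{d}{v}/\|d\| = \tfrac{1}{\mdw}\,\|v\|$, the final equality being Cauchy--Schwarz together with the self-duality of the inner-product norm (the supremum is attained at $d = v$). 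Rearranging gives exactly $\|v\| \ge \mdw\,\|v\|_\cA$.

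The step I expect to demand the most care is this chain of duality manipulations rather than any computation: one has to know that $\|\cdot\|_\cA$ is genuinely a norm on $\lin(\cA)$ (which is where symmetry and compactness of $\cA$ are used), that the minimizing direction in the definition of $\mdw$ is attained so that $\mdw > 0$ and dividing by it is legitimate (this needs $\cA$ to span $\lin(\cA)$ plus a compactness argument, handled as in \cite{locatello2017unified}), and one must keep the sign bookkeeping in the supremum over $d$ honest. Everything else is routine. Finally, it is worth recording that this lemma is the ``lower'' counterpart of Lemma~\ref{lem:LwithRadius}, which is the reverse comparison $\|v\| \le \radius_{\|\cdot\|}(\cA)\,\|v\|_\cA$; combined, the two give $\mu_\cA / L_\cA \ge \mdw^2 \mu / (L\,\radius_{\|\cdot\|}(\cA)^2)$, which is how the condition number of the earlier matching-pursuit analysis is recovered from the affine-invariant quantities.
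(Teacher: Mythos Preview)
Your proof is correct and follows essentially the same route as the paper: both apply strong convexity to bound $D(y,x)\ge\tfrac{\mu}{2}\|y-x\|^2$, then use the identification $\mdw=\inf_{d\neq 0}\|d\|_{\cA*}/\|d\|$ together with Cauchy--Schwarz and the duality between $\|\cdot\|_\cA$ and $\|\cdot\|_{\cA*}$ to compare $\|\cdot\|$ with $\|\cdot\|_\cA$. The only difference is packaging: you isolate the norm comparison $\|v\|\ge\mdw\,\|v\|_\cA$ as a standalone step, whereas the paper threads the same inequalities through a single chain starting from the definition of $\mu_\cA$.
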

This lemma allows us to recover the rate of \citep{locatello2017unified} when the norm is fixed. 

\paragraph{Comparison with Previous CD Rates}
Note that for CD we have that $\mdw = \frac{1}{\sqrt{n}}$. Our rate compares to previous work as:
\begin{align*}
\varepsilon_{k+1}\leq\left(1 - \frac{\mu_1}{L_1}\right)\varepsilon_k \leq\left(1 - \frac{\mu_1}{L}\right)\varepsilon_k \leq \left(1 - \frac{\mu}{nL}\right)\varepsilon_k \,,
\end{align*}
where the first is our rate, the second is the rate of steepest CD~\cite{Nutini:2015vd} and the last is the one for random CD~\cite{nesterov2012efficiency}.

\section{Accelerated Rates}
\paragraph{Grounding of Our Approach}
\looseness=-1We consider the acceleration technique in \cite{stich2013optimization} which in turn is based on \cite{Lee13}. Their accelerated coordinate descent algorithm maintains two sequences of iterates $\theta$ and $v$ that are updated along randomly sampled coordinates. Considering general atoms instead, this proof technique would lead us to an accelerated rate for random pursuit. To accelerate \emph{matching} pursuit we need to \emph{decouple} the updates of $\theta$ and $v$, using the steepest descent direction for $\theta$ and a randomly sampled atom for $v$.
The possibility of decoupling the updates was first noted in \citep[Corollary 6.4]{Stich14} though its implications for accelerating greedy coordinate descent or matching pursuit were not explored.

\paragraph{Additional Assumptions} For the accelerated rate, we further assume:\\
$\triangleright$ That the linear space spanned by the atoms $\cA$ is finite dimensional.\\
$\triangleright$ That random atoms are only sampled on a non-symmetric version of $\cA$ with all the atoms in the same half space. Line search ensures that sampling either $z$ or $-z$ yields the same update. \\
$\triangleright$ For simplicity, we focus on an exact \lmo.\\
Whether these assumptions are necessary or not remains an open question. \\
$\triangleright$ We make assumptions on the sampling distribution as explained in the next paragraph.\\
Whether these assumptions are necessary or not remains an open question. 

\paragraph{Key Ingredients}
The main difference between working with atoms and working with coordinates is that randomly sampling the coordinates of the gradient gives an unbiased estimate of the gradient itself in expectation. In other words, for any vector $d$ we have:
$$
  \bbE_{i \in [n]} \encase{\ip{e_i}{d}e_i \Large}= \frac{1}{n} d\,.
$$
This is not true for general atom sets. To correct this issue, we morph the geometry of the space to achieve comparable sampling properties. The disadvantage of this approach is that it makes the proof dependent on a specific norm choice and therefore is not affine invariant.

Suppose we sample the atoms from a random variable $\rvz$ defined over $\cA$, we define
$$
  \tilde{P} := \bbE_{z \sim \rvz}[z z^\top]\,.
$$
We assume that the distribution $\rvz$ is such that $\lin(\cA) \subseteq \text{range}(\tilde{P})$.
 Further let $P = \tilde{P}^{\dagger}$ be the pseudo-inverse of $\tilde{P}$. Note that both $P$ and $\tilde{P}$ are positive semi-definite matrices. We can equip our space with a new inner product
$\ip{\cdot}{P \cdot}$ and the resulting norm $\norm{\cdot}_P$. With this new dot product, for each $d\in\lin(\cA)$ we have:
$$
  \bbE_{z \sim \rvz} \encase{\ip{z}{P d}z} = \bbE_{z \sim \rvz} [ z z^\top]P d = P^{\dagger} P d = d.
$$
The last equality follows from our assumption that $\lin(\cA) \subseteq \text{range}(\tilde{P})$ and $P^{\dagger} P$ is an orthogonal projection operator onto the range of $\tilde{P}$.

The acceleration technique of \cite{stich2013optimization} works by optimizing two different quadratic subproblems at each iteration. The first is the regular smoothness upper bound. The second is a ``model'' of the function:
\begin{align}
\psi_{k+1}(x)  =
 \psi_k(x) + \alpha_{k+1} \Big( f(y_k) + \langle {z}_k^\top \nabla f(y_k) , {z}_k^\top P(x - y_k)  \rangle \Big) \,, \label{eq:model_acc_random_paper}
\end{align}
where
$\psi_0(x) = \frac{1}{2}\| x - x_0 \|_{P}^2$ and $z$ is sampled from $\rvz$.

\subsection{Analysis}

\begin{algorithm}[ht]
\begin{algorithmic}[1]
  \STATE \textbf{init} $\theta_0 = v_0 = y_0$, $\beta_0 = 0$, and $\nu$
  \STATE \textbf{for} {$k=0, 1 \dots K$}
     \STATE \qquad  Solve ${\alpha_{k+1}^2}{L\nu} = \beta_k + \alpha_{k+1}$
    \STATE  \qquad $\beta_{k+1} := \beta_k + \alpha_{k+1} $
    \STATE \qquad $\tau_k := \frac{\alpha_{k+1}}{\beta_{k+1}}$
   \STATE \qquad Compute $y_k := (1- \tau_k)\theta_k + \tau_k v_k$
   \STATE \qquad Find $z_k := \lmo_\cA(\nabla f(y_k))$
\STATE \qquad $\theta_{k+1} := y_k - \frac{\langle \nabla f(y_k),z_k\rangle}{L\|z_k\|_2^2}z_k$
\STATE \qquad  Sample $\tilde{z}_k  \sim \rvz$
\STATE \qquad $v_{k+1} := v_k - \alpha_{k+1}{\langle \nabla f(y_k),\tilde{z}_k\rangle}\tilde{z}_k$
  \STATE \textbf{end for}
\end{algorithmic}
 \caption{Accelerated Matching Pursuit \citet{locatello2018matching}}
 \label{algo:ACDM}
\end{algorithm}

\begin{algorithm}[ht]
\begin{algorithmic}[1]
    \STATE $\ldots$ as in Algorithm~\ref{algo:ACDM}, except replacing lines~7 to~10 by
   \STATE \qquad Sample $z_k \sim \rvz$
\STATE \qquad $\theta_{k+1} := y_k - \frac{\langle \nabla f(y_k),z_k\rangle}{L\|z_k\|^2_2}z_k$
\STATE \qquad $v_{k+1} := v_k - \alpha_{k+1}{\langle \nabla f(y_k),{z}_k\rangle}{z}_k$
  \STATE \textbf{end for}
\end{algorithmic}
 \caption{Accelerated Random Pursuit \citet{locatello2018matching}}
 \label{algo:RAMP}
\end{algorithm}

\begin{lemma}\label{lem:v-min-psi}
The update of $v$ in Algorithm~\ref{algo:ACDM} and~\ref{algo:RAMP} minimizes the model\vspace{-1mm}
  $$
    v_k \in \argmin_{\theta}\psi_k(\theta) \,.
  $$
\end{lemma}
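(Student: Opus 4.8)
The plan is to induct on $k$, leaning on the single structural observation that every term appended to the model in \eqref{eq:model_acc_random_paper} is \emph{affine} in $x$. Write $\psi_0(x) = \tfrac12\langle x - x_0,\, P(x-x_0)\rangle$, so $\psi_0$ is a convex quadratic with constant Hessian $P$ and $\nabla\psi_0(x) = P(x-x_0)$; since $\psi_0$ is minimized at $x_0$ and $v_0 = y_0 = x_0$ by initialization, the base case $v_0 \in \argmin_\theta \psi_0(\theta)$ is immediate. For the inductive step I would first record that the map $x \mapsto f(y_k) + \langle z_k^\top\nabla f(y_k),\, z_k^\top P(x-y_k)\rangle$ is affine in $x$ with gradient $\langle\nabla f(y_k), z_k\rangle\, P z_k$ (using $P^\top = P$), so adding $\alpha_{k+1}$ times this term to $\psi_k$ does not change the Hessian: $\psi_{k+1}$ is again a convex quadratic with Hessian $P$, and for every $x$
\[
\nabla\psi_{k+1}(x) = \nabla\psi_k(x) + \alpha_{k+1}\langle\nabla f(y_k), z_k\rangle\, P z_k,
\]
where here (and in \eqref{eq:model_acc_random_paper}) $z_k$ denotes the \emph{randomly sampled} atom, i.e.\ $\tilde z_k$ in Algorithm~\ref{algo:ACDM} and $z_k$ in Algorithm~\ref{algo:RAMP}.

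Next I would combine the inductive hypothesis $\nabla\psi_k(v_k) = 0$ with the fact that $\psi_k$ has constant Hessian $P$ to write $\nabla\psi_k(x) = \nabla\psi_k(v_k) + P(x - v_k) = P(x-v_k)$ for all $x$, whence
\[
\nabla\psi_{k+1}(x) = P\bigl(x - v_k + \alpha_{k+1}\langle\nabla f(y_k), z_k\rangle z_k\bigr).
\]
The $v$-update in both Algorithm~\ref{algo:ACDM} and Algorithm~\ref{algo:RAMP} reads $v_{k+1} = v_k - \alpha_{k+1}\langle\nabla f(y_k), z_k\rangle z_k$ with $z_k$ the sampled atom, so substituting $x = v_{k+1}$ makes the argument of $P$ vanish and gives $\nabla\psi_{k+1}(v_{k+1}) = 0$. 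Since $P \succeq 0$ each $\psi_k$ is convex, so this stationarity condition is sufficient for global optimality, and therefore $v_{k+1} \in \argmin_\theta \psi_{k+1}(\theta)$, closing the induction. The minimizer need not be unique when $P$ is singular, but since the statement only claims membership in the $\argmin$ set this is harmless.

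I do not expect a genuine obstacle here: the argument is essentially bookkeeping. The one point that needs care is keeping the notation for the sampled atom consistent across the two algorithms and the model \eqref{eq:model_acc_random_paper} — the atom appearing in $\psi_{k+1}$ must be exactly the one driving the $v$-update ($\tilde z_k$ for Algorithm~\ref{algo:ACDM}, $z_k$ for Algorithm~\ref{algo:RAMP}) — and verifying that the appended term is truly affine in $x$, since that is precisely what keeps the Hessian equal to $P$ at every step and makes the cancellation in the last display go through.
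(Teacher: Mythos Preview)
Your proposal is correct and follows essentially the same inductive approach as the paper: both rely on the observation that each $\psi_k$ is a convex quadratic with constant Hessian $P$ (since the appended terms are affine), and then verify that the $v$-update lands at a minimizer. The only cosmetic difference is that the paper rewrites $\psi_k(\theta) = \psi_k(v_k) + \tfrac12\|\theta - v_k\|_P^2$ and minimizes $\psi_{k+1}$ directly, whereas you phrase the same step via the first-order condition $\nabla\psi_{k+1}(v_{k+1}) = 0$; your remark about non-uniqueness when $P$ is singular is a nice extra touch.
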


For both the algorithm and the analysis we need a constant $\nu$ relating the geometry of the atom set with the sampling procedure (similar to $\hat\delta^2$ in Equation~\eqref{eq:delta-def-random-pursuit}):\vspace{-2mm}
   $$
  \nu  \leq \max_{d\in\lin(\cA)}
  \frac{\bbE\encase{ (\tilde{z}_k^\top d)^2 \norm{\tilde{z}_k}_P^2}\|{z(d)}\|_2^2}{(z(d)^\top d)^2} \, \quad \text{where} \quad z(d) = \lmo_\cA(- d)\,.
  $$

\begin{theorem} \label{thm:greedy_acc_pursuit}
Let $f$ be a convex function and $\cA$ be a symmetric compact set. Then the output of algorithm~\ref{algo:ACDM} for any $k\geq 1$ converges with the following rate:\vspace{-1mm}
$$
    \bbE[f(\theta_k)] - f(\theta^\star) \leq \frac{2L \nu}{k(k+1)}\norm{\theta^\star - \theta_0}^2_P \,.
  $$
\end{theorem}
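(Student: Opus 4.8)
\emph{Proof proposal.} The plan is to carry out the estimate-sequence / linear-coupling argument of~\cite{Lee13,stich2013optimization}, modified to account for the fact that Algorithm~\ref{algo:ACDM} advances $\theta$ along the greedy \lmo direction $z_k$ but advances $v$ along the independently sampled atom $\tilde z_k$. Set $\psi_k^\star := \min_{\theta}\psi_k(\theta)$. I will establish, by induction on $k$, the invariant $\beta_k\,\bbE[f(\theta_k)] \leq \bbE[\psi_k^\star]$; it holds trivially at $k=0$ since $\beta_0 = 0$ and $\psi_0^\star = 0$, and at the end I will unroll it at $\theta^\star$. A first, purely structural observation: each increment added to $\psi_0(x) = \tfrac12\norm{x-x_0}_P^2$ in~\eqref{eq:model_acc_random_paper} is affine in $x$, so $\psi_k$ is a quadratic with Hessian $P$, and by Lemma~\ref{lem:v-min-psi} its minimizer is $v_k$, giving $\psi_k(x) = \psi_k^\star + \tfrac12\norm{x-v_k}_P^2$ for all $x$.

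For the induction step I substitute this representation into the definition of $\psi_{k+1}$ and minimize the resulting $P$-quadratic in $x$ in closed form; the minimizer is exactly the $v$-update of line~10, so
\[
\psi_{k+1}^\star = \psi_k^\star + \alpha_{k+1} f(y_k) + \beta_k\,\ip{\nabla f(y_k)}{\tilde z_k}\,\ip{\tilde z_k}{P(y_k-\theta_k)} - \tfrac{\alpha_{k+1}^2}{2}\,\ip{\nabla f(y_k)}{\tilde z_k}^2\,\norm{\tilde z_k}_P^2 ,
\]
where I used $v_k - y_k = \tfrac{\beta_k}{\alpha_{k+1}}(y_k-\theta_k)$, which follows from $y_k = (1-\tau_k)\theta_k+\tau_k v_k$ and $\tau_k = \alpha_{k+1}/\beta_{k+1}$. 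Conditioning on the history before the draw of $\tilde z_k$ (so $y_k,\theta_k,z_k$ are fixed), I take $\bbE_{\tilde z_k\sim\rvz}$: using $\bbE[\tilde z_k\tilde z_k^\top] = \tilde P$ and the fact that $P\tilde P$ is the orthogonal projector onto $\mathrm{range}(\tilde P)\supseteq\lin(\cA)$, together with $y_k-\theta_k\in\lin(\cA)$, the cross term has expectation $\beta_k\,\ip{\nabla f(y_k)}{y_k-\theta_k}$; for the quadratic term, symmetry of $\cA$ gives $z_k = -\lmo_\cA(-\nabla f(y_k))$, so the defining property of $\nu$ bounds $\bbE[\ip{\nabla f(y_k)}{\tilde z_k}^2\norm{\tilde z_k}_P^2]$ by $\nu\,\ip{\nabla f(y_k)}{z_k}^2/\norm{z_k}_2^2$.

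It remains to bridge to the greedy primal progress. $L$-smoothness of $f$ in $\norm{\cdot}_2$ applied to the step of line~8 yields $\ip{\nabla f(y_k)}{z_k}^2/(2L\norm{z_k}_2^2)\leq f(y_k)-f(\theta_{k+1})$, and the scaling relation $\alpha_{k+1}^2 L\nu = \beta_{k+1}$ of lines~3--4 turns the (negative) quadratic term into $\geq -\beta_{k+1}(f(y_k)-f(\theta_{k+1}))$; meanwhile convexity gives $\ip{\nabla f(y_k)}{y_k-\theta_k}\geq f(y_k)-f(\theta_k)$, and $\alpha_{k+1}+\beta_k=\beta_{k+1}$ cancels the $f(y_k)$ terms, so after taking full expectations and applying the induction hypothesis one obtains $\bbE[\psi_{k+1}^\star] \geq \beta_{k+1}\bbE[f(\theta_{k+1})]$, closing the induction. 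Finally, evaluating the invariant at $\theta^\star$ gives $\beta_k\,\bbE[f(\theta_k)-f(\theta^\star)] \leq \bbE[\psi_k(\theta^\star)] - \beta_k f(\theta^\star) \leq \psi_0(\theta^\star) = \tfrac12\norm{\theta^\star-\theta_0}_P^2$, since (by convexity and the same projection identity) each model increment at $\theta^\star$ has conditional expectation at most $\alpha_i f(\theta^\star)$; combining with $\sqrt{\beta_{k+1}}-\sqrt{\beta_k}\geq \tfrac{1}{2\sqrt{L\nu}}$ (a direct consequence of $\alpha_{k+1}^2 L\nu = \beta_k+\alpha_{k+1}$), hence $\beta_k\geq \tfrac{(k+1)^2}{4L\nu}\geq \tfrac{k(k+1)}{4L\nu}$, and dividing through gives the claimed bound.

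The main obstacle is the expectation step: reconciling the greedy \lmo direction driving $\theta$ with the random atom driving $v$. The constant $\nu$ is tailored precisely so that the second moment of the random proximal step is controlled by the progress of the greedy step on $f$; making this rigorous needs the symmetry of $\cA$, the pseudo-inverse identity $\tilde P^\dagger\tilde P = \mathrm{proj}_{\mathrm{range}(\tilde P)}$ (so that sampling is unbiased in the $P$-geometry on $\lin(\cA)$), and careful conditioning on the filtration generated by $\tilde z_1,\dots,\tilde z_{k-1}$. Once the bookkeeping with $\alpha_{k+1},\beta_{k+1},\tau_k$ is in place the remaining manipulations are routine; an analogous (slightly simpler) argument, using the same atom in both updates, gives the random-pursuit rate for Algorithm~\ref{algo:RAMP}.
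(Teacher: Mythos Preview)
Your proposal is correct and follows essentially the same estimate-sequence argument as the paper: you establish the sandwich $\beta_k\,\bbE[f(\theta_k)] \leq \bbE[\psi_k^\star] \leq \bbE[\psi_k(\theta^\star)] \leq \beta_k f(\theta^\star)+\psi_0(\theta^\star)$ via the same ingredients (the explicit quadratic form of $\psi_k$, the projection identity $\tilde P P|_{\lin(\cA)}=\mathrm{id}$ for the cross term, the coupling constant $\nu$ for the quadratic term, smoothness for the greedy progress, and the relation $\alpha_{k+1}^2 L\nu=\beta_{k+1}$ to close the recursion). The only substantive difference is your growth estimate for $\beta_k$: the paper proceeds by induction showing $\alpha_k\geq k/(2L\nu)$ and then sums, whereas you telescope $\sqrt{\beta_{k+1}}-\sqrt{\beta_k}\geq \tfrac{1}{2\sqrt{L\nu}}$ (together with the exact $\sqrt{\beta_1}=1/\sqrt{L\nu}$) to get $\beta_k\geq (k+1)^2/(4L\nu)$ directly---a standard alternative that is marginally cleaner.
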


From the rate of the greedy approach we can easily derive the rate for random pursuit:
\begin{theorem} \label{thm:random_pursuit_acc}
Let $f$ be a convex function and $\cA$ be a symmetric set. Then the output of the algorithm~\ref{algo:RAMP} for any $k\geq 1$ converges with the following rate:\vspace{-1mm}
$$\bbE [f(\theta_k)] - f(\theta^\star) \leq \frac{2L \nu'}{k(k+1)}\norm{\theta^\star - \theta_0}^2_P\,,\quad \text{where} \quad
\nu'  \leq \max_{d\in\lin(\cA)}
\frac{\bbE\encase{ ({z}_k^\top d)^2 \norm{{z}_k}_P^2}}{\bbE\encase{ ({z}_k^\top d)^2/ \norm{{z}_k}_2^2}}\,.
$$
\end{theorem}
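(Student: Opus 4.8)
The plan is to mirror the proof of Theorem~\ref{thm:greedy_acc_pursuit} line by line, replacing the steepest-descent (\lmo) property of the $\theta$-update by the \emph{expected} descent obtained from a random atom. Recall that the analysis of Algorithm~\ref{algo:ACDM} rests on two ingredients: (a) the smoothness upper bound applied to the $\theta$-step, which in the greedy case gives $f(\theta_{k+1}) \le f(y_k) - \frac{\ip{\nabla f(y_k)}{z_k}^2}{2L\norm{z_k}_2^2}$ with $z_k = \lmo_\cA(\nabla f(y_k))$; and (b) the fact that $v_k$ minimizes the model $\psi_k$ (Lemma~\ref{lem:v-min-psi}), together with the identity $\bbE_{z\sim\rvz}\encase{\ip{z}{Pd}z} = d$ which controls the cross term of $\psi_{k+1}-\psi_k$ in \eqref{eq:model_acc_random_paper}. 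In Algorithm~\ref{algo:RAMP} the same fresh draw $z_k\sim\rvz$ drives both the $\theta$-step and the $v$-step, so (b) is unchanged — no independence between the two updates is ever needed, only conditional unbiasedness of a single draw given the history — while in (a) we now take the conditional expectation of the smoothness bound over $z_k$ to obtain $\bbE[f(\theta_{k+1})\mid y_k] \le f(y_k) - \frac{1}{2L}\bbE\encase{\ip{\nabla f(y_k)}{z_k}^2/\norm{z_k}_2^2}$.

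First I would re-derive the one-step recursion of the ACDM proof with these substitutions, using the modified scalar sequences defined by $\alpha_{k+1}^2 L\nu' = \beta_k + \alpha_{k+1}$, $\beta_{k+1} = \beta_k + \alpha_{k+1}$, $\tau_k = \alpha_{k+1}/\beta_{k+1}$. The role of the constant $\nu$ there is solely to guarantee that the ``progress'' term of the $\theta$-step dominates a multiple of the ``model'' term $\bbE\encase{(z_k^\top\nabla f(y_k))^2\norm{z_k}_P^2}$; concretely one needs, for $d=\nabla f(y_k)$,
\[
\frac{1}{L\nu'}\,\bbE\encase{(z_k^\top d)^2\norm{z_k}_P^2} \;\le\; \frac{1}{L}\,\bbE\encase{(z_k^\top d)^2/\norm{z_k}_2^2}\,,
\]
which holds for every $d\in\lin(\cA)$ exactly by the stated definition of $\nu'$. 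With this inequality in hand, the Lyapunov bookkeeping — multiplying the $\theta$-decrease by $\beta_{k+1}$, adding $\psi_{k+1}(\theta^\star) - \psi_k(\theta^\star)$, using convexity of $f$ to lower bound $\ip{\nabla f(y_k)}{\theta^\star - y_k}$, invoking Lemma~\ref{lem:v-min-psi} to pass from $\psi_k(v_k)$ to its minimum, and telescoping — goes through verbatim, yielding $\beta_k\big(\bbE[f(\theta_k)] - f(\theta^\star)\big) \le \psi_0(\theta^\star) = \tfrac12\norm{\theta^\star-\theta_0}_P^2$.

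Finally I would close the argument as for Theorem~\ref{thm:greedy_acc_pursuit}: from $\alpha_{k+1} = \tfrac12\big(\tfrac1{L\nu'} + \sqrt{\tfrac1{L^2(\nu')^2} + 4\beta_k/(L\nu')}\big)$ one shows by induction that $\beta_k \ge \frac{k(k+1)}{4L\nu'}$, hence $\bbE[f(\theta_k)] - f(\theta^\star) \le \beta_k^{-1}\cdot\tfrac12\norm{\theta^\star-\theta_0}_P^2 \le \frac{2L\nu'}{k(k+1)}\norm{\theta^\star-\theta_0}_P^2$. Note that, compared with Theorem~\ref{thm:greedy_acc_pursuit}, none of the extra assumptions used to accelerate \emph{matching} pursuit (the non-symmetric ``same half-space'' atom set, the decoupling of the two iterate sequences) are needed here, since there is nothing to decouple: random pursuit already uses a single random direction for both $\theta$ and $v$.

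I expect the only delicate point to be checking that the conditional-expectation steps still telescope when the \emph{same} $z_k$ drives both $\theta_{k+1}$ and $v_{k+1}$: one must verify that taking $\bbE[\,\cdot\mid\mathcal{F}_k]$, with $\mathcal{F}_k$ the $\sigma$-algebra generated by the draws used through iteration $k-1$, before forming the Lyapunov combination introduces no spurious correlation — i.e. that $y_k$ is $\mathcal{F}_k$-measurable while $z_k$ is a fresh draw, which holds by construction of the algorithm. Everything else is a routine re-run of the ACDM calculation with $\nu$ replaced by $\nu'$.
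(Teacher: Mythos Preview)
Your proposal is correct and matches the paper's own argument essentially exactly: the paper's proof of Theorem~\ref{thm:random_pursuit_acc} consists of a single sentence saying that the proof is ``exactly the same'' as that of Theorem~\ref{thm:greedy_acc_pursuit}, with the only change being the definition of $\nu'$ so that the expected progress bound $\bbE_k\!\encase{(z_k^\top d)^2/\norm{z_k}_2^2}$ replaces the greedy progress term. Your identification of the key inequality and of the measurability check (that $y_k$ is $\cF_k$-measurable while $z_k$ is a fresh draw, so the same $z_k$ driving both updates causes no trouble) is precisely the ``delicate point'' the paper glosses over; the side remark that the half-space/decoupling assumptions are unnecessary here goes slightly beyond what the paper claims, but is not part of the proof proper.
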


\subsection{Accelerated Greedy Coordinate Descent.}
From our accelerated rate of random and matching pursuit, we can read the rates of random and greedy coordinate descent respectively by setting $\cA$ to the usual $\l1$ and $\rvz$ is uniformly distributed. In this case, algorithm \ref{algo:RAMP} reduces to the accelerated randomized coordinate method (ACDM) of \cite{Lee13,Nesterov:2017}. Instead, the accelerated MP algorithm yield a novel accelerated greedy coordinate descent method. The same rate was simultaneously (and independently) derived by \cite{lu2018greedy}, which was published at the same ICML conference. The rate of accelerated greedy coordinate descent is again, in the worst case, the same as random coordinate descent but can be faster up to a factor $\Th{n}$:
\begin{lemma}\label{lem:coordinate_descent_acc}
	When $\cA = \{e_i, i \in [n]\}$ and $\rvz$ is a uniform distribution over $\cA$, then $P = n I$, $\nu' = n$ and $\nu \in [1, n]$.
\end{lemma}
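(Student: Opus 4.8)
The plan is a direct computation: substitute the specific choice $\cA = \{e_i : i \in [n]\}$ with $\rvz$ uniform into the definitions of $\tilde P$, $\nu'$, and $\nu$. First I would compute $\tilde P = \bbE_{z\sim\rvz}[zz^\top] = \frac1n\sum_{i=1}^n e_ie_i^\top = \frac1n I$. Since this is invertible, $P = \tilde P^\dagger = \tilde P^{-1} = nI$; in particular $\lin(\cA) = \R^n = \mathrm{range}(\tilde P)$, so the standing assumption $\lin(\cA)\subseteq\mathrm{range}(\tilde P)$ holds and the morphed inner product is $\ip{\cdot}{n\,\cdot}$.

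Next, for $\nu'$: taking $z_k = e_j$ gives $\ip{z_k}{d} = d_j$, $\norm{z_k}_P^2 = e_j^\top(nI)e_j = n$, and $\norm{z_k}_2^2 = 1$. Hence the numerator of the ratio in Theorem~\ref{thm:random_pursuit_acc} is $\bbE_j\!\left[d_j^2\cdot n\right] = n\cdot\frac1n\norm{d}_2^2 = \norm{d}_2^2$ and the denominator is $\bbE_j\!\left[d_j^2\right] = \frac1n\norm{d}_2^2$, so the ratio equals $n$ for every $d\neq 0$; being $d$-independent, the maximum — and thus $\nu'$ — is exactly $n$. For $\nu$ (the constant defined just before Theorem~\ref{thm:greedy_acc_pursuit}), the random factor $\bbE[(\tilde z_k^\top d)^2\norm{\tilde z_k}_P^2]$ equals $\norm{d}_2^2$ by the same calculation, and $\norm{z(d)}_2^2 = \norm{e_{i^\star}}_2^2 = 1$. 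It remains to evaluate $(z(d)^\top d)^2$ for $z(d) = \lmo_\cA(-d)$: using the symmetry/line-search assumption (sampling $z$ or $-z$ yields the same update, so the greedy step is genuinely steepest coordinate descent, effectively over $\{\pm e_i\}$), one gets $z(d)^\top d = \norm{d}_\infty$. Thus the defining supremum is $\max_{d\neq 0}\norm{d}_2^2/\norm{d}_\infty^2$, which is $\le n$ (by $\norm{d}_2^2\le n\norm{d}_\infty^2$, attained at $d=\mathbf 1$) and $\ge 1$ (by $\norm{d}_2^2\ge\norm{d}_\infty^2$, attained at $d=e_1$), giving $\nu\in[1,n]$ — worst case $n$, matching random pursuit, best case $1$, the $\Th{n}$ speed-up.

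Since every step is an elementary substitution, I do not anticipate a real obstacle; the only point requiring care is the symmetric-versus-non-symmetric atom set, so that the greedy $\lmo$ selects the steepest coordinate and $(z(d)^\top d)^2 = \norm{d}_\infty^2$ rather than $(\max_i d_i)^2$ — the latter would not even be bounded by $n\norm{d}_\infty^2$ and would wreck the $\nu\le n$ claim. A secondary observation to record is the asymmetry between the two constants: $\nu'$ is pinned to $n$ because its ratio does not depend on $d$, whereas $\nu$'s ratio genuinely varies with $d$, which is exactly why the lemma states the full admissible range $[1,n]$ for $\nu$.
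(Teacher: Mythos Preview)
Your proposal is correct and follows exactly the approach the paper intends: the paper's own proof merely states ``a simple computation shows that $\nu' = n$ and $\nu \in [1, n]$'' after recording $\tilde P = \tfrac{1}{n}I$ and $P = nI$, so your direct substitution is precisely that computation spelled out. Your attention to the symmetry issue (so that $(z(d)^\top d)^2 = \norm{d}_\infty^2$ rather than $(\max_i d_i)^2$) is the only genuinely delicate point, and you handle it correctly by invoking the paper's line-search/symmetry convention for the accelerated setting.
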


\section{Proofs}
\subsection{Proof of Lemma~\ref{lem:LwithRadius}}
\begin{proof}
Let $D(y,\theta):= f(y)- f(\theta) + \gamma \langle \nabla f(\theta), y - \theta \rangle$
By the definition of smoothness of $f$ w.r.t. $\|\cdot\|$,
$
D(y,\theta) \leq \frac{L}{2} \| y - \theta\|^2 \,.
$

Hence, from the definition of $L_\cA$,
\begin{align*}
L_\cA &\leq \sup_{\substack{\theta,y\in\lin(\cA)\\y = \theta + \gamma z \\\|z\|_\cA=1, \gamma\in \R_{>0}}}
 \frac{2}{\gamma^2} \frac{L}{2}  \| y - \theta\|^2 = L \sup_{z \ s.t. \|z\|_\cA = 1} \, \| z\|^2 = L \, \radius_{\norm{\cdot}}(\cA)^2 \ . \qedhere
\end{align*}
\end{proof}

\subsection{Proof of Theorem \ref{thm:sublinear_MP_rate}}

\begin{proof}

Recall that $\tilde{z}_k$ is the atom selected in iteration $k$ by the approximate \lmo  defined in \eqref{eqn:lmo_mult}. We start by upper-bounding $f$  using the definition of $L_\cA$ as follows:
\begin{align*}
f(\theta_{k+1}) &\leq  \min_{\gamma\in\R} f(\theta_k) + \gamma \langle \nabla f(\theta_k),
 \tilde{z}_k \rangle   + \frac{\gamma^2}{2} L_{\cA} \nonumber \\
&\leq   f(\theta_k) - \frac{\langle\nabla f(\theta_k),\tilde{z}_k\rangle^2}{2L_{\cA}} 
 =   f(\theta_k) - \frac{\langle\nabla_\parallel f(\theta_k),\tilde{z}_k\rangle^2}{2L_{\cA}}
\leq   f(\theta_k) - \delta^2\frac{\langle\nabla_\parallel f(\theta_k),z_k\rangle^2}{2L_{\cA}} \,. \nonumber
\end{align*}
Where $\nabla_\parallel f$ is the parallel component of the gradient wrt the linear span of $\cA$. Note that $\|d\|_{\cA*}:= \sup\left\lbrace \langle z,d\rangle, z\in\cA\right\rbrace$ is the dual of the atomic norm.
Therefore, by definition:
\begin{align*}
\langle\nabla_\parallel f(\theta_k),z_k\rangle^2 = \|-\nabla_\parallel f(\theta_k)\|^2_{\cA*} \,,
\end{align*}
which gives:
\begin{align*}
f(\theta_{k+1}) &\leq f(\theta_k) - \delta^2\frac{1}{2L_{\cA}} \|\nabla_\parallel f(\theta_k)\|^2_{\cA*}\leq f(\theta_k) - \delta^2\frac{1}{2L_{\cA}}\frac{ \left(-\langle\nabla_\parallel f(\theta_k),\theta_k-\theta^\star\rangle\right)^2}{R_\cA^2}\\
&= f(\theta_k) - \delta^2\frac{1}{2L_{\cA}}\frac{ \left(\langle\nabla_\parallel f(\theta_k),\theta_k-\theta^\star\rangle\right)^2}{R_\cA^2}\leq f(\theta_k) - \delta^2\frac{1}{2L_{\cA}}\frac{ \left(f(\theta_k)-f(\theta^\star)\right)^2}{R_\cA^2} \,,
\end{align*}
where the second inequality is Cauchy-Schwarz and the third one is convexity.
Which gives:
\begin{align*}
\varepsilon_{k+1} &\leq \frac{2L_{\cA}R_\cA^2}{\delta^2(k+2)} \,. \qedhere
\end{align*}
\end{proof}

\subsection{Proof of Theorem~\ref{thm:rand_purs_sub}}


\begin{proof}

Recall that $\tilde{z}_k$ is the atom selected in iteration $k$ by the approximate \lmo  defined in \eqref{eqn:lmo_mult}. We start by upper-bounding $f$  using the definition of $L_\cA$ as follows
\begin{eqnarray}
\bbE_z f(\theta_{k+1}) &\leq & \bbE_z\left[\min_{\gamma\in\R} f(\theta_k) + \gamma \langle \nabla f(\theta_k),
 z \rangle   + \frac{\gamma^2}{2} L_{\cA}\right] \leq  f(\theta_k) - \frac{\bbE_z\left[\langle\nabla f(\theta_k),z\rangle^2\right]}{2L_{\cA}} \nonumber\\
& = &  f(\theta_k) - \frac{\bbE_z\left[\langle\nabla_\parallel f(\theta_k),z\rangle^2\right]}{2L_{\cA}} \leq  f(\theta_k) - \hat\delta^2\frac{\langle\nabla_\parallel f(\theta_k),z_k\rangle^2}{2L_{\cA}} \,.\nonumber 
\end{eqnarray}
The rest of the proof proceeds as in Theorem~\ref{thm:sublinear_MP_rate}.
\end{proof}

\subsection{Proof of Lemma~\ref{thm:mu_mdw}}

\begin{proof}
First of all, note that for any $\theta,y\in\lin(\cA)$ with $\theta\neq y$ we have that:
\begin{align*}
\langle\nabla f(x),x-y\rangle^2\leq \|\nabla f(\theta)\|_{\cA*}^2\|\theta - y\|_{\cA}^2 \,.
\end{align*}
Therefore:
\begin{align*}
\mu_\cA &= \inf_{\substack{\theta,y\in\lin\cA\\\theta\neq y}}\frac{2}{\|y-\theta\|_{\cA}^2} D(\theta,y)
\geq \inf_{\substack{\theta,y,d\in\lin\cA\\\theta\neq y,d\neq 0}} \frac{\|d\|_{\cA*}^2}{\langle d,\theta-y\rangle^2}2D(\theta,y)\\
&\geq\inf_{\substack{\theta,y,d\in\lin\cA\\\theta\neq y,d\neq 0}}\frac{\|d\|_{\cA*}^2}{\langle d,\theta-y\rangle^2}\mu \|\theta-y\|^2
\geq\inf_{\substack{\theta,y,d\in\lin\cA\\\theta\neq y,d\neq 0}}\frac{\|d\|_{\cA*}^2}{\langle d,\frac{\theta-y}{\|\theta-y\|}\rangle^2}\mu\\
&\geq\inf_{\substack{\theta,y,d\in\lin\cA\\\theta\neq y,d\neq 0}}\frac{\|d\|_{\cA*}^2}{\|d\|^2}\mu
\geq\inf_{\substack{d\in\lin\cA\\d\neq 0}} \max_{z}\frac{\langle d,z\rangle^2}{\|d\|^2}\mu
=\mdw^2\mu \,. \qedhere
\end{align*}
\end{proof}

\subsection{Proof of Theorem~\ref{thm:linear_rate}}
\begin{proof}
\textbf{(Part 1).} 
We start by upper-bounding $f$  using the definition of $L_\cA$ as follows
\begin{eqnarray}
f(\theta_{k+1}) &\leq &  \min_{\gamma\in\R} f(\theta_k) + \gamma \langle \nabla f(\theta_k),
 \tilde{z}_k \rangle   + \frac{\gamma^2}{2} L_{\cA}  \leq  f(\theta_k) - \frac{\langle\nabla_\parallel f(\theta_k),\tilde{z}_k\rangle^2}{2L_{\cA}} \nonumber\\
 & \leq & f(\theta_k) - \delta^2\frac{\langle\nabla_\parallel f(\theta_k),z_k\rangle^2}{2L_{\cA}} =   f(\theta_k) - \delta^2\frac{\langle-\nabla_\parallel f(\theta_k),z_k\rangle^2}{2L_{\cA}} \,. \nonumber 
\end{eqnarray}
Where $\|d\|_{\cA*}:= \sup\left\lbrace \langle z,d\rangle, z\in\cA\right\rbrace$ is the dual of the atomic norm.
Therefore, by definition:
\begin{align*}
\langle-\nabla_\parallel f(\theta_k),z_k\rangle^2 = \|\nabla_\parallel f(\theta_k)\|^2_{\cA*} \,,
\end{align*}
which gives $
f(\theta_{k+1}) \leq f(\theta_k) - \delta^2\frac{1}{2L_{\cA}} \|\nabla_\parallel f(\theta_k)\|^2_{\cA*} \,.
$
From strong convexity we have that:
\begin{align*}
f(y)\geq f(\theta) + \langle\nabla f(\theta),y-\theta\rangle + \frac{\mu_\cA}{2}\|y-\theta\|_\cA^2 \,.
\end{align*}
Fixing $y = \theta_k + \gamma(\theta^\star - \theta_k)$ and $\gamma = 1$ in the LHS and minimizing the RHS we obtain:
\begin{align*}
f(\theta^\star)&\geq f(\theta_k) - \frac{1}{2\mu_\cA}\frac{\langle\nabla f(\theta_k),\theta^\star-\theta_k\rangle}{\|\theta^\star-\theta_k\|_\cA^2}\geq f(\theta_k) - \frac{1}{2\mu_\cA}\|\nabla_\parallel f(\theta_k)\|_{\cA^*}^2 \,,
\end{align*}
where the last inequality is obtained by the fact that $\langle\nabla f(\theta_k),\theta^\star-\theta_k\rangle=\langle\nabla_{\parallel} f(\theta_k),\theta^\star-\theta_k\rangle$ and  Cauchy-Schwartz.
Therefore $
\|\nabla f(\theta_k)\|_{\cA^*}\geq 2\varepsilon_k\mu_\cA \,,
$
which yields $$
\varepsilon_{k+1} \leq\varepsilon_k - \delta^2\frac{\mu_\cA}{L_{\cA}}\varepsilon_k\,. 
$$

\textbf{(Part 2).} 
We start by upper-bounding $f$  using the definition of $L_\cA$ as follows
\begin{eqnarray}
\bbE_z \left[f(\theta_{k+1})\right] &\leq &  \bbE_z\left[\min_{\gamma\in\R} f(\theta_k) + \gamma \langle \nabla f(\theta_k),
 \tilde{z}_k \rangle   + \frac{\gamma^2}{2} L_{\cA} \right]\nonumber \\
& \leq &  f(\theta_k) - \bbE_z\left[\frac{\langle\nabla f(\theta_k),\tilde{z}_k\rangle^2}{2L_{\cA}}\right] 
\leq   f(\theta_k) - \hat\delta^2\frac{\langle\nabla f(\theta_k),z_k\rangle^2}{2L_{\cA}}\nonumber \\
& = &  f(\theta_k) - \hat\delta^2\frac{\langle\nabla_\parallel f(\theta_k),\tilde{z}_k\rangle^2}{2L_{\cA}}\nonumber  =   f(\theta_k) - \hat\delta^2\frac{\langle-\nabla_\parallel f(\theta_k),\tilde{z}_k\rangle^2}{2L_{\cA}} \,. \nonumber
\end{eqnarray}
The rest of the proof proceeds as in Part 1 of the proof of Theorem~\ref{thm:linear_rate}.
\end{proof}

\subsection{Accelerated Matching Pursuit}\label{app:acc_match_pursuit}
Our proof follows the technique for acceleration from~\cite{Lee13,Nesterov:2017,Nesterov:2004gx,stich2013optimization} 

We define $\norm{\theta}^2_P = \theta^\top P\theta $. We start our proof by first defining the model function $\psi_k$. For $k=0$, we define:
$$\psi_{0}(\theta) = \frac{1}{2}\norm{\theta - v_0}_P^2 \,.$$
Then for $k >1$, $\psi_k$ is inductively defined as
\begin{align}
\psi_{k+1}(\theta) &= \psi_k(\theta) +  \alpha_{k+1} \Big( f(y_k) + \langle \tilde{z}_k^\top \nabla f(y_k) , \tilde{z}_k^\top P(\theta - y_k)  \rangle \Big) \,. \label{eq:model_acc_random}
\end{align}

\begin{proof}[\textbf{Proof of Lemma~\ref{lem:v-min-psi}}]
  We will prove the statement inductively. For $k=0$, $\psi_{0}(\theta) = \frac{1}{2}\norm{\theta - v_0}_P^2$ and so the statement holds. Suppose it holds for some $k \geq 0$. Observe that the function $\psi_k(\theta)$ is a quadratic with Hessian $P$. This means that we can reformulate $\psi_k(\theta)$ with minima at $v_k$ as
  $$
    \psi_k(\theta) = \psi_k(v_k) + \frac{1}{2}\norm{\theta - v_k}_P^2 \,.
  $$
  Using this reformulation,
  \begin{align*}
    \argmin_{\theta}\psi_{k+1}(\theta) &= \argmin_{\theta} \Big\{ \psi_k(\theta) +  \alpha_{k+1} \Big( f(y_k) + \langle \tilde{z}_k^\top \nabla f(y_k) , \tilde{z}_k^\top P(\theta - y_k)  \rangle \Big)\Big\}\\
    &= \argmin_{\theta} \Big\{ \psi_k(v_k) + \frac{1}{2}\norm{\theta - v_k}_P^2 +  \alpha_{k+1} \Big( f(y_k) + \langle \tilde{z}_k^\top \nabla f(y_k) , \tilde{z}_k^\top P(\theta - y_k)  \rangle \Big)\Big\}\\
    &= \argmin_{\theta} \Big\{ \frac{1}{2}\norm{\theta - v_k}_P^2 +  \alpha_{k+1}  \langle \tilde{z}_k^\top \nabla f(y_k) , \tilde{z}_k^\top P(\theta - v_k)  \rangle \Big\}\\
    &= v_k - \alpha_{k+1}{\langle \nabla f(y_k),\tilde{z}_k\rangle}\tilde{z}_k = v_{k+1}\,. \qedhere
  \end{align*}
\end{proof}
\begin{lemma}[Upper bound on $\psi_k(\theta)$]\label{lem:psi-upper}
  $$
    \bbE[\psi_k(\theta)] \leq \beta_k f(\theta) + \psi_0(\theta) \,.
  $$
\end{lemma}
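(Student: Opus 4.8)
The plan is to prove the bound by induction on $k$, using two ingredients: the $\rvz$-sampling is ``unbiased'' with respect to the $P$-inner product, so that after taking expectations the stochastic linear term in the recursive definition of $\psi_{k+1}$ collapses to the ordinary gradient linearization $\langle \nabla f(y_k),\theta-y_k\rangle$; and convexity of $f$, which bounds that linearization by $f(\theta)$. The statement is understood for $\theta\in\lin(\cA)$, which is what matters for Theorem~\ref{thm:greedy_acc_pursuit} (we will apply it at $\theta=\theta^\star$).

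For the base case $k=0$ we have $\beta_0=0$ and $\psi_0$ is deterministic, so $\bbE[\psi_0(\theta)]=\psi_0(\theta)=\beta_0 f(\theta)+\psi_0(\theta)$. For the inductive step, suppose $\bbE[\psi_k(\theta)]\leq \beta_k f(\theta)+\psi_0(\theta)$. Let $\mathcal{F}_k$ be the $\sigma$-algebra generated by $\tilde z_0,\dots,\tilde z_{k-1}$; then $\theta_k, v_k, y_k$ and $\psi_k$ are $\mathcal{F}_k$-measurable, while $\tilde z_k\sim\rvz$ is drawn independently. Taking the conditional expectation of~\eqref{eq:model_acc_random} given $\mathcal{F}_k$, and noting that $\langle \tilde z_k^\top\nabla f(y_k),\tilde z_k^\top P(\theta-y_k)\rangle$ is just the product of the two scalars $\tilde z_k^\top\nabla f(y_k)$ and $\tilde z_k^\top P(\theta-y_k)$, I would compute
\begin{align*}
\bbE\big[(\tilde z_k^\top\nabla f(y_k))\,(\tilde z_k^\top P(\theta-y_k))\,\big|\,\mathcal{F}_k\big]
&= \nabla f(y_k)^\top\,\bbE[\tilde z_k\tilde z_k^\top]\,P\,(\theta-y_k)\\
&= \nabla f(y_k)^\top\,P^{\dagger}P\,(\theta-y_k)
= \langle\nabla f(y_k),\theta-y_k\rangle,
\end{align*}
where the last equality uses that $P^{\dagger}P$ is the orthogonal projection onto $\mathrm{range}(\tilde P)$ together with the standing assumption $\lin(\cA)\subseteq\mathrm{range}(\tilde P)$ and the fact that $\theta-y_k\in\lin(\cA)$ (the initialization $\theta_0=v_0=y_0\in\lin(\cA)$ and all updates keep the iterates in $\lin(\cA)$).

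Then, using convexity of $f$ in the form $f(y_k)+\langle\nabla f(y_k),\theta-y_k\rangle\leq f(\theta)$ and the inductive hypothesis,
\begin{align*}
\bbE[\psi_{k+1}(\theta)]
&= \bbE[\psi_k(\theta)] + \alpha_{k+1}\,\bbE\big[f(y_k)+\langle\nabla f(y_k),\theta-y_k\rangle\big]\\
&\leq \bbE[\psi_k(\theta)] + \alpha_{k+1} f(\theta)
\leq \big(\beta_k+\alpha_{k+1}\big) f(\theta) + \psi_0(\theta)
= \beta_{k+1} f(\theta) + \psi_0(\theta),
\end{align*}
using $\beta_{k+1}=\beta_k+\alpha_{k+1}$, which closes the induction. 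The computation is short; the only real point of care is the probabilistic bookkeeping — setting up $\mathcal{F}_k$ so that $y_k$ is measurable and $\tilde z_k$ is fresh — and verifying that $\theta-y_k$ lies in $\lin(\cA)\subseteq\mathrm{range}(\tilde P)$ so that the projection identity $P^{\dagger}P(\theta-y_k)=\theta-y_k$ applies. I expect that to be the main (mild) obstacle rather than any analytic difficulty.
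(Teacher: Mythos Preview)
Your proposal is correct and follows essentially the same approach as the paper: induction on $k$, using $\bbE[\tilde z_k\tilde z_k^\top]=P^{\dagger}$ to collapse the stochastic term to $\langle\nabla f(y_k),\theta-y_k\rangle$, then convexity and $\beta_{k+1}=\beta_k+\alpha_{k+1}$ to close the step. If anything, you are slightly more careful than the paper in making the filtration explicit and in justifying why $P^{\dagger}P(\theta-y_k)=\theta-y_k$ via $\theta-y_k\in\lin(\cA)\subseteq\mathrm{range}(\tilde P)$.
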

\begin{proof}
  We will also show this through induction. The statement is trivially true for $k=0$ since $\beta_0 = 0$. Assuming the statement holds for some $k \geq 0$,
  \begin{align*}
    \bbE[\psi_{k+1}(\theta)] &= \bbE\Big[\psi_k(\theta) +  \alpha_{k+1} \Big( f(y_k) + \langle \tilde{z}_k^\top \nabla f(y_k) , \tilde{z}_k^\top P(\theta - y_k)  \rangle \Big)\Big]\\
    &= \bbE\Big[\psi_k(\theta)\Big] +  \alpha_{k+1}\bbE\Big[ \Big( f(y_k) + \langle \tilde{z}_k^\top \nabla f(y_k) , \tilde{z}_k^\top P(\theta - y_k)  \rangle \Big)\Big] \\
    &\leq \beta_kf(\theta) + \psi_0(\theta) +  \alpha_{k+1} \Big( f(y_k) + \nabla f(y_k)^\top \bbE\Big[ \tilde{z}_k \tilde{z}_k^\top\Big] P(\theta - y_k)  \rangle \Big)\\
    &= \beta_kf(\theta) + \psi_0(\theta) +  \alpha_{k+1} \Big( f(y_k) + \nabla f(y_k)^\top P^{\dagger} P (\theta - y_k)  \rangle \Big)\\
    &= \beta_kf(\theta) + \psi_0(\theta) +  \alpha_{k+1} \Big( f(y_k) + \nabla f(y_k)^\top (\theta - y_k)  \rangle \Big)\\
    &\leq  \beta_kf(\theta) + \psi_0(\theta) +  \alpha_{k+1} f(\theta)\,.
  \end{align*}
  In the above, we used the convexity of the function $f(\theta)$ and the definition of $P$.
\end{proof}
\begin{lemma}[Bound on progress]\label{lem:progress-bound}
	For any $k\geq 0$ of algorithm \ref{algo:ACDM},
	$$
		f(\theta_{k+1}) - f(y_k) \leq - \frac{1}{2L \norm{z_k}^2_2}\nabla f(y_k)^\top\encase{ z_kz_k^\top} \nabla f(y_k)\,.
	$$
\end{lemma}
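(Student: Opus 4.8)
The plan is to invoke the smoothness of $f$ with respect to the Hilbert/Euclidean norm and plug in the explicit update rule for $\theta_{k+1}$ from line~8 of Algorithm~\ref{algo:ACDM}; the argument is a one-line quadratic-minimization computation and does not use that $z_k$ came from the \lmo (it holds for any atom). Concretely, since $f$ is $L$-smooth, the quadratic upper bound~\eqref{eq:QuadraticUpperBound} applied at $y_k$ gives
\begin{equation*}
f(\theta_{k+1}) \leq f(y_k) + \langle \nabla f(y_k),\, \theta_{k+1} - y_k \rangle + \frac{L}{2}\,\norm{\theta_{k+1} - y_k}_2^2 \,.
\end{equation*}

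Next I would substitute $\theta_{k+1} - y_k = - \frac{\langle \nabla f(y_k), z_k\rangle}{L\norm{z_k}_2^2}\, z_k$. The linear term becomes $-\frac{\langle \nabla f(y_k), z_k\rangle^2}{L\norm{z_k}_2^2}$, and the quadratic term becomes $\frac{L}{2}\cdot\frac{\langle \nabla f(y_k), z_k\rangle^2}{L^2\norm{z_k}_2^4}\norm{z_k}_2^2 = \frac{\langle \nabla f(y_k), z_k\rangle^2}{2L\norm{z_k}_2^2}$. Adding these, the first-order terms combine to $-\frac{\langle \nabla f(y_k), z_k\rangle^2}{2L\norm{z_k}_2^2}$, so that $f(\theta_{k+1}) - f(y_k) \leq -\frac{\langle \nabla f(y_k), z_k\rangle^2}{2L\norm{z_k}_2^2}$.

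Finally I would rewrite the scalar square as a quadratic form, $\langle \nabla f(y_k), z_k\rangle^2 = \nabla f(y_k)^\top \encase{z_k z_k^\top} \nabla f(y_k)$, which yields exactly the claimed bound. (Equivalently, one can note that the update $\theta_{k+1}$ is precisely the minimizer over $\gamma\in\R$ of $\gamma\mapsto f(y_k) + \gamma\langle\nabla f(y_k),z_k\rangle + \tfrac{L}{2}\gamma^2\norm{z_k}_2^2$, i.e.\ the step that minimizes the smoothness upper bound along direction $z_k$, and read off the optimal value.) There is no real obstacle here — the only thing to be careful about is bookkeeping of the $\norm{z_k}_2$ powers and making sure the smoothness constant $L$ used in the update matches the one in~\eqref{eq:QuadraticUpperBound}; this lemma is then the basic "per-step descent" ingredient fed into the telescoping/estimate-sequence argument for Theorem~\ref{thm:greedy_acc_pursuit}.
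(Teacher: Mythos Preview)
Your proposal is correct and is essentially identical to the paper's proof: the paper also applies the smoothness upper bound at $y_k$, writes $\theta_{k+1} = y_k + \gamma_{k+1} z_k$ with $\gamma_{k+1}$ the explicit step from line~8, and simplifies the resulting quadratic to obtain the stated bound. Your observation that this is just the minimization of the smoothness upper bound along $z_k$ (and does not use the \lmo property) is exactly the content of the argument.
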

\begin{proof}
	  The update $\theta_{k+1}$ along with the smoothness of $f(\theta)$ guarantees that for $\gamma_{k+1} = \frac{\langle \nabla f(y_k),z_k\rangle}{L\|z_k\|^2}$,
	\begin{align*}
	f(\theta_{k+1}) &= f(y_k + \gamma_{k+1} z_k) \leq f(y_k) + \gamma_{k+1}\ip{\nabla f(y_k)}{z_k} + \frac{L\gamma_{k+1}^2}{2}\norm{z_k}^2 \\
	&= f(y_k) - \frac{1}{2L \norm{z_k}^2_2}\nabla f(y_k)^\top\encase{ z_kz_k^\top} \nabla f(y_k)\,.
	\end{align*}
\end{proof}
\begin{lemma}[Lower bound on $\psi_k(\theta)$]\label{lem:psi-lower}
Given a filtration $\cF_k$ upto time step $k$, $$
  \bbE[\min_{\theta}\psi_k(\theta)| \cF_k] \geq \beta_k f(\theta_k) \,.
$$
\end{lemma}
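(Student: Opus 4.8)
The plan is to prove the bound by induction on $k$, following the estimate-sequence template of \cite{Lee13,Nesterov:2017,stich2013optimization}. The base case $k=0$ is immediate: since $\psi_0(\theta)=\tfrac12\norm{\theta-v_0}_P^2$ we have $\min_\theta\psi_0(\theta)=0=\beta_0 f(\theta_0)$ because $\beta_0=0$. For the inductive step I would condition on the information available up to (and including) the computation of $y_k$, $z_k$, $\theta_{k+1}$ but \emph{before} drawing the fresh sample $\tilde z_k$, integrate out $\tilde z_k$, and then apply the induction hypothesis (using the tower property to reconcile the one-step offset in the filtration indices; it is enough to carry the argument in expectation for Theorem~\ref{thm:greedy_acc_pursuit}).

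First I would rewrite $\psi_k$ around its minimizer using Lemma~\ref{lem:v-min-psi}: $\psi_k$ is a quadratic with Hessian $P$ and minimum at $v_k$, so $\psi_k(\theta)=\psi_k(v_k)+\tfrac12\norm{\theta-v_k}_P^2$, and again by Lemma~\ref{lem:v-min-psi} $\min_\theta\psi_{k+1}(\theta)=\psi_{k+1}(v_{k+1})$ with $v_{k+1}=v_k-\alpha_{k+1}\ip{\nabla f(y_k)}{\tilde z_k}\tilde z_k$. Substituting the definition~\eqref{eq:model_acc_random} of $\psi_{k+1}$ and expanding, the two quadratic-in-$\alpha_{k+1}$ terms partially cancel and one arrives at
\begin{equation*}
\min_\theta\psi_{k+1}(\theta)=\min_\theta\psi_k(\theta)+\alpha_{k+1}f(y_k)+\alpha_{k+1}\ip{\nabla f(y_k)}{\tilde z_k}\,\tilde z_k^\top P(v_k-y_k)-\tfrac12\alpha_{k+1}^2\ip{\nabla f(y_k)}{\tilde z_k}^2\norm{\tilde z_k}_P^2 .
\end{equation*}
Taking expectation over $\tilde z_k$, the linear term becomes $\alpha_{k+1}\nabla f(y_k)^\top\tilde P P(v_k-y_k)=\alpha_{k+1}\ip{\nabla f(y_k)}{v_k-y_k}$, using $\bbE[\tilde z_k\tilde z_k^\top]=\tilde P$ and that $\tilde P P=\tilde P\tilde P^\dagger$ restricts to the identity on $v_k-y_k\in\lin(\cA)\subseteq\operatorname{range}(\tilde P)$; and the last term is controlled by the definition of $\nu$ applied with $d=\nabla f(y_k)$ (whose \lmo is $z_k$, up to sign, since $\cA$ is symmetric in Theorem~\ref{thm:greedy_acc_pursuit}), giving $\bbE\encase{\ip{\nabla f(y_k)}{\tilde z_k}^2\norm{\tilde z_k}_P^2}\le\nu\,\ip{\nabla f(y_k)}{z_k}^2/\norm{z_k}_2^2$.

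Next I would invoke the progress bound (Lemma~\ref{lem:progress-bound}), $\ip{\nabla f(y_k)}{z_k}^2/\norm{z_k}_2^2\le 2L\big(f(y_k)-f(\theta_{k+1})\big)$, together with the defining relation $\alpha_{k+1}^2 L\nu=\beta_k+\alpha_{k+1}=\beta_{k+1}$ from Algorithm~\ref{algo:ACDM}, which converts $-\tfrac12\alpha_{k+1}^2\nu\,\ip{\nabla f(y_k)}{z_k}^2/\norm{z_k}_2^2$ into $-\beta_{k+1}\big(f(y_k)-f(\theta_{k+1})\big)$. After applying the induction hypothesis $\bbE[\min_\theta\psi_k(\theta)]\ge\beta_k\bbE[f(\theta_k)]$ and using $\beta_{k+1}=\beta_k+\alpha_{k+1}$ to collect the $f(y_k)$ terms, the remaining requirement is $\beta_k\big(f(\theta_k)-f(y_k)\big)+\alpha_{k+1}\ip{\nabla f(y_k)}{v_k-y_k}\ge 0$. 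This follows from convexity, $f(\theta_k)\ge f(y_k)+\ip{\nabla f(y_k)}{\theta_k-y_k}$, which bounds the left-hand side below by $\ip{\nabla f(y_k)}{\beta_k\theta_k+\alpha_{k+1}v_k-\beta_{k+1}y_k}$, and this inner product vanishes exactly because $y_k=(1-\tau_k)\theta_k+\tau_k v_k$ with $\tau_k=\alpha_{k+1}/\beta_{k+1}$ yields $\beta_{k+1}y_k=\beta_k\theta_k+\alpha_{k+1}v_k$. Hence $\bbE[\min_\theta\psi_{k+1}(\theta)]\ge\beta_{k+1}\bbE[f(\theta_{k+1})]$, closing the induction.

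I expect the obstacles to be bookkeeping rather than conceptual: making sure $\tilde z_k$ is genuinely independent of the conditioning $\sigma$-algebra when $\bbE[\tilde z_k\tilde z_k^\top]=\tilde P$ and the $\nu$-bound are invoked, correctly identifying that the constant $\nu$ — defined through $z(d)=\lmo_\cA(-d)$ — applies to the greedy direction $z_k=\lmo_\cA(\nabla f(y_k))$ (same squared alignment and same $\ell_2$-norm by symmetry of $\cA$), and the final algebra in which $\beta_k+\alpha_{k+1}=\beta_{k+1}$ and $\tau_k=\alpha_{k+1}/\beta_{k+1}$ must conspire to annihilate the leftover gradient-linear term.
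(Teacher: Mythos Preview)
Your proposal is correct and follows essentially the same induction as the paper: rewrite $\psi_k$ around its minimizer $v_k$, compute $\min_\theta\psi_{k+1}(\theta)$, take expectation over $\tilde z_k$ using $\bbE[\tilde z_k\tilde z_k^\top]=\tilde P$ and the $\nu$-bound, apply the progress Lemma~\ref{lem:progress-bound} together with $\alpha_{k+1}^2 L\nu=\beta_{k+1}$, and close with convexity plus the identity $\beta_{k+1}y_k=\beta_k\theta_k+\alpha_{k+1}v_k$. The only cosmetic difference is that the paper keeps the minimization implicit a bit longer and writes the coupling as $y_k-v_k=\tfrac{1-\tau_k}{\tau_k}(\theta_k-y_k)$, which is algebraically equivalent to your formulation.
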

\begin{proof}
  This too we will show inductively. For $k=0$, $\psi_k(\theta) = \frac{1}{2}\norm{\theta - v_0}_P^2 \geq 0$ with $\beta_0 =0$. Assume the statement holds for some $k \geq 0$. Recall that $\psi_k(\theta)$ has a minima at $v_k$ and can be alternatively formulated as $\psi_k(v_k) + \frac{1}{2}\norm{\theta - v_k}_P^2$. Using this,
  \begin{align*}
    \psi_{k+1}^\star &= \min_{\theta}\encase{\psi_k(\theta) + \alpha_{k+1}\brac{\ip{{\tilde{z}}_k^\top\nabla f(y_k)}{{\tilde{z}}_k^\top P(\theta - y_k)} + f(y_k)}}\\
    &= \min_{\theta}\encase{\psi_k(v_k) + \alpha_{k+1}\brac{\ip{{\tilde{z}}_k^\top\nabla f(y_k)}{{\tilde{z}}_k^\top P(\theta - y_k)} + \frac{1}{2 \alpha_{k+1}}\norm{\theta - v_k}_P^2 + f(y_k)}}\\
    &= \psi_k^\star + \alpha_{k+1} f(y_k) + \alpha_{k+1} \min_{\theta}\encase{\ip{P {\tilde{z}}_k{\tilde{z}}_k^\top\nabla f(y_k)}{\theta - y_k} + \frac{1}{2 \alpha_{k+1}}\norm{\theta - v_k}_P^2 }\,.
  \end{align*}
  Since we defined $y_k = (1- \tau_k)\theta_k + \tau_k v_k$, rearranging the terms gives us that
  $$
    y_k - v_k = \frac{1 - \tau_k}{\tau_k}(\theta_k - y_k)\,.
  $$
  Let us take now compute $\bbE[\psi_{k+1}^\star|\cF_k]$ by combining the above two equations:
  \begin{align*}
    \bbE[\psi_{k+1}^\star|\cF_k] &= \psi_k^\star + \alpha_{k+1} f(y_k) + \frac{\alpha_{k+1} (1 - \tau_k)}{\tau_k} \ip{P \bbE_k[\tilde{z}_k\tilde{z}_k^\top]\nabla f(y_k)}{y_k - \theta_k} \\ &\hspace{1in} + \alpha_{k+1}\bbE_k \min_{\theta}\encase{\ip{P \tilde{z}_k\tilde{z}_k^\top\nabla f(y_k)}{\theta - v_k} + \frac{1}{2 \alpha_{k+1}}\norm{\theta - v_k}_P^2 }\\
    &= \psi_k^\star + \alpha_{k+1} f(y_k) + \frac{\alpha_{k+1} (1 - \tau_k)}{\tau_k} \ip{\nabla f(y_k)}{y_k - \theta_k} \\ &\hspace{1in} + \alpha_{k+1} \bbE_k \min_{\theta}\encase{\ip{P \tilde{z}_k\tilde{z}_k^\top\nabla f(y_k)}{\theta - v_k} + \frac{1}{2 \alpha_{k+1}}\norm{\theta - v_k}_P^2 }\\
    &= \psi_k^\star + \alpha_{k+1} f(y_k) + \frac{\alpha_{k+1} (1 - \tau_k)}{\tau_k} \ip{\nabla f(y_k)}{y_k - \theta_k} \\ &\hspace{1in} - \frac{\alpha_{k+1}^2}{2} \nabla f(y_k)^\top \bbE_k\encase{ \tilde{z}_k\tilde{z}_k^\top P P^{-1} P \tilde{z}_k\tilde{z}_k^\top}\nabla f(y_k)\\
    &= \psi_k^\star + \alpha_{k+1} f(y_k) + \frac{\alpha_{k+1} (1 - \tau_k)}{\tau_k} \ip{\nabla f(y_k)}{y_k - \theta_k} \\ &\hspace{1in} - \frac{\alpha_{k+1}^2}{2} \nabla f(y_k)^\top \bbE_k\encase{ \tilde{z}_k\tilde{z}_k^\top P \tilde{z}_k\tilde{z}_k^\top}\nabla f(y_k)\,.
  \end{align*}
  
  Let us define a constant $\nu \geq 0$ such that it is the smallest number for which the below inequality holds for all $k$,
  $$
  \nu \nabla f(y_k)^\top\frac{\encase{ z_kz_k^\top}}{2L \norm{z_k}^2_2} \nabla f(y_k) \geq
  \nabla f(y_k)^\top \bbE\encase{ \tilde{z}_k\tilde{z}_k^\top P \tilde{z}_k\tilde{z}_k^\top}\nabla f(y_k)\,.
  $$

  Also recall from Lemma \ref{lem:progress-bound} that
  $$
  f(\theta_{k+1}) - f(y_k) \leq - \frac{1}{2L \norm{z_k}^2_2}\nabla f(y_k)^\top\encase{ z_kz_k^\top} \nabla f(y_k)\,.
  $$
  Using the above two statements in our computation of $\psi_{k+1}^\star$, we get
  \begin{align*}
  \bbE[\psi_{k+1}^\star | \cF_k] &= \psi_k^\star + \alpha_{k+1} f(y_k) + \frac{\alpha_{k+1} (1 - \tau_k)}{\tau_k} \ip{\nabla f(y_k)}{y_k - \theta_k} \\ &\hspace{1in} - \frac{\alpha_{k+1}^2}{2} \nabla f(y_k)^\top \bbE_k\encase{ \tilde{z}_k\tilde{z}_k^\top P \tilde{z}_k\tilde{z}_k^\top}\nabla f(y_k) \\
  &\geq \psi_k^\star + \alpha_{k+1} f(y_k) + \frac{\alpha_{k+1} (1 - \tau_k)}{\tau_k} \ip{\nabla f(y_k)}{y_k - \theta_k} \\ &\hspace{1in} - \frac{\alpha_{k+1}^2 \nu}{2} \nabla f(y_k)^\top\encase{ z_kz_k^\top} \nabla f(y_k)\\
  &\geq \psi_k^\star + \alpha_{k+1} f(y_k) + \frac{\alpha_{k+1} (1 - \tau_k)}{\tau_k} \ip{\nabla f(y_k)}{y_k - \theta_k} \\ &\hspace{1in} + {\alpha_{k+1}^2 L\nu} (f(\theta_{k+1}) - f(y_k))\\
  &\geq \psi_k^\star + \alpha_{k+1} f(y_k) + \frac{\alpha_{k+1} (1 - \tau_k)}{\tau_k} (f(y_k) - f(\theta_k)) \\ &\hspace{1in} + {\alpha_{k+1}^2 L\nu} (f(\theta_{k+1}) - f(y_k))\,.
  \end{align*}

  Let us pick $\alpha_{k+1}$ such that it satisfies ${\alpha_{k+1}^2}{\nu L} = \beta_{k+1}$. Then the above equation simplifies to
  \begin{align*}
  \bbE[\psi_{k+1}^\star| \cF_k] &\geq \psi_k^\star + \frac{\alpha_{k+1}}{\tau_k} f(y_k) - \frac{\alpha_{k+1} (1 - \tau_k)}{\tau_k} f(\theta_k) + \beta_{k+1} (f(\theta_{k+1}) - f(y_k))\\
  &= \psi_k^\star - \beta_k f(\theta_k) + \beta_{k+1}f(y_k) - \beta_{k+1}f(y_k) + \beta_{k+1} f(\theta_{k+1}) \\
  &= \psi_k^\star - \beta_k f(\theta_k) + \beta_{k+1} f(\theta_{k+1}) \,.
  \end{align*}

  We used that $\tau_k = \alpha_{k+1}/\beta_{k+1}$. Finally we use the inductive hypothesis to conclude that
  \begin{align*}
  \bbE[\psi_{k+1}^\star| \cF_k] &\geq \psi_k^\star - \beta_k f(\theta_k) + \beta_{k+1} f(\theta_{k+1}) \geq \beta_{k+1} f(\theta_{k+1})\,. \qedhere
  \end{align*}
\end{proof}

\begin{lemma}[Final convergence rate]\label{lem:acc-mp-rate}
	For any $k \geq 1$ the output of algorithm \ref{algo:ACDM} satisfies:
  $$
    \bbE[f(\theta_k)] - f(\theta^\star) \leq \frac{2L \nu}{k(k+1)}\norm{\theta^\star - \theta_0}^2_P \,.
  $$
\end{lemma}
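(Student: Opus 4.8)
The plan is to combine the upper and lower bounds on the model function $\psi_k$ that have already been established (Lemmas~\ref{lem:psi-upper} and~\ref{lem:psi-lower}), evaluate them at the minimizer $\theta^\star$, and then show that the scalar sequence $\beta_k$ grows at least quadratically in $k$.

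First I would chain the two bounds. Taking the total expectation in Lemma~\ref{lem:psi-lower} gives $\bbE[\min_\theta \psi_k(\theta)] \geq \beta_k\,\bbE[f(\theta_k)]$, and specializing Lemma~\ref{lem:psi-upper} to $\theta = \theta^\star$ gives $\bbE[\psi_k(\theta^\star)] \leq \beta_k f(\theta^\star) + \psi_0(\theta^\star)$. Since $\min_\theta \psi_k(\theta) \leq \psi_k(\theta^\star)$ pointwise, these combine to
\[
\beta_k\big(\bbE[f(\theta_k)] - f(\theta^\star)\big) \;\leq\; \psi_0(\theta^\star) \;=\; \tfrac12\norm{\theta^\star - v_0}_P^2 \;=\; \tfrac12\norm{\theta^\star - \theta_0}_P^2,
\]
where the last equality uses the initialization $v_0 = \theta_0$ in Algorithm~\ref{algo:ACDM}. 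Dividing by $\beta_k$, the statement reduces to proving $\beta_k \geq \frac{k(k+1)}{4L\nu}$.

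For this last step I would use the defining relations of the step-size sequence, $\alpha_{k+1}^2 L\nu = \beta_{k+1}$ and $\beta_{k+1} = \beta_k + \alpha_{k+1}$ with $\beta_0 = 0$. Eliminating $\alpha_{k+1} = \sqrt{\beta_{k+1}/(L\nu)}$ yields the recursion $\beta_{k+1} = \beta_k + \sqrt{\beta_{k+1}/(L\nu)}$, and I would then prove $\beta_k \geq \frac{k(k+1)}{4L\nu}$ by induction on $k$: the base case $k=0$ holds with equality, and in the inductive step the inequality $\beta_{k+1} \geq \frac{k(k+1)}{4L\nu} + \sqrt{\beta_{k+1}/(L\nu)}$, viewed as a quadratic inequality in $\sqrt{\beta_{k+1}}$, forces $\sqrt{\beta_{k+1}} \geq \sqrt{\tfrac{1}{4L\nu}}\big(1 + \sqrt{k^2+k+1}\big) \geq \sqrt{\tfrac{(k+1)(k+2)}{4L\nu}}$, where the last bound reduces after squaring to $\sqrt{k^2+k+1}\geq k$. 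Substituting $\beta_k \geq \frac{k(k+1)}{4L\nu}$ into the displayed inequality above gives $\bbE[f(\theta_k)] - f(\theta^\star) \leq \frac{2L\nu}{k(k+1)}\norm{\theta^\star-\theta_0}_P^2$.

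I expect the only genuine subtlety to be this $\beta_k$ lower bound: the naive telescoping estimate $\sqrt{\beta_{k+1}}-\sqrt{\beta_k}\geq \tfrac{1}{2\sqrt{L\nu}}$ only delivers $\beta_k \geq \frac{k^2}{4L\nu}$, which would yield the slightly weaker $2L\nu/k^2$ rate; obtaining the sharper $k(k+1)$ denominator is exactly what the inductive argument above provides. Everything else is a mechanical assembly of lemmas already proved in this section, so no further ingredients are needed.
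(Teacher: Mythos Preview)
Your proposal is correct and follows essentially the same route as the paper: combine Lemmas~\ref{lem:psi-upper} and~\ref{lem:psi-lower} at $\theta^\star$ to reduce to a lower bound on $\beta_k$, then extract $\beta_k \geq \tfrac{k(k+1)}{4L\nu}$ from the step-size recursion via a quadratic inequality. The only cosmetic difference is that the paper carries out the induction on $\alpha_k \geq \tfrac{k}{2L\nu}$ and sums to get the bound on $\beta_k$, whereas you induct directly on $\beta_k$; since $\beta_{k+1} = \alpha_{k+1}^2 L\nu$, the two quadratic computations are identical after the change of variables $\sqrt{\beta_{k+1}} = \alpha_{k+1}\sqrt{L\nu}$.
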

\begin{proof}

Putting together Lemmas \ref{lem:psi-upper} and \ref{lem:psi-lower}, we have that
$$
  \beta_k \bbE[f(\theta_k)] \leq \bbE[\psi_k^\star] \leq \bbE[\psi_k(\theta^\star)] \leq \beta_k f(\theta^\star) + \psi_0(\theta^\star)\,.
$$
Rearranging the terms we get
$$
  \bbE[f(\theta_k)] - f(\theta^\star) \leq \frac{1}{2 \beta_k}\norm{\theta^\star - \theta_0}^2_P\,.
$$
To finish the proof of the theorem, we only have to compute the value of $\beta_k$.
Recall that
$$
  {\alpha_{k+1}^2}{L\nu} = \beta_k + \alpha_{k+1} \,.
$$
We will inductively show that $\alpha_k \geq \frac{k}{2L \nu}$. For $k=0$, $\beta_0 =0$ and $\alpha_1 = \frac{1}{2L \nu}$ which satisfies the condition. Suppose that for some $k\geq 0$, the inequality holds for all iterations $i \leq k$. Recall that $\beta_k = \sum_{i=1}^k \alpha_i$ i.e. $\beta_k \geq \frac{k(k+1)}{4 L \nu}$. Then
\begin{align*}
  (\alpha_{k+1}L\nu)^2 - \alpha_{k+1}L \nu = \beta_k L \nu
  \geq \frac{k(k+1)}{4}\,.
\end{align*}
The positive root of the quadratic $x^2 - x - c = 0$ for $c \geq 0$ is $x = \frac{1}{2}\brac{1 + \sqrt{4c + 1}}$. Thus
$$
  \alpha_{k+1}L \nu \geq \frac{1}{2}\brac{1 + \sqrt{k(k+1) + 1}} \geq \frac{k + 1}{2}\,.
$$
This finishes our induction and proves the final rate of convergence.
\end{proof}

\begin{lemma}[Understanding $\nu$]
  $$
    \nu \leq \max_{d\in\lin(\cA)}
  \frac{\bbE\encase{ (\tilde{z}_k^\top d)^2 \norm{\tilde{z}_k}_P^2}\|{z(d)}\|_2^2}{(z(d)^\top d)^2} \,,
  $$
\end{lemma}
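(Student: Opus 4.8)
The plan is to unwind the definition of $\nu$ introduced inside the proof of Lemma~\ref{lem:psi-lower} and to evaluate it along the gradient directions actually produced by Algorithm~\ref{algo:ACDM}. Recall that $\nu\ge 0$ is taken there as the smallest constant for which, for every iteration $k$,
\[
\nabla f(y_k)^\top \bbE\encase{ \tilde{z}_k\tilde{z}_k^\top P \tilde{z}_k\tilde{z}_k^\top}\nabla f(y_k)\ \le\ \nu\,\frac{\nabla f(y_k)^\top\encase{z_k z_k^\top}\nabla f(y_k)}{\norm{z_k}_2^2}
\]
(the normalizing constant in the displayed definition there is absorbed into $\nu$ and $L$ in the subsequent analysis). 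The first step is to turn both quadratic forms into scalars: $z_kz_k^\top\nabla f(y_k)=z_k\,(z_k^\top\nabla f(y_k))$ gives $\nabla f(y_k)^\top\encase{z_kz_k^\top}\nabla f(y_k)=(z_k^\top\nabla f(y_k))^2$, while $\nabla f(y_k)^\top\tilde z_k\tilde z_k^\top P\tilde z_k\tilde z_k^\top\nabla f(y_k)=(\tilde z_k^\top\nabla f(y_k))^2\,\tilde z_k^\top P\tilde z_k=(\tilde z_k^\top\nabla f(y_k))^2\norm{\tilde z_k}_P^2$; taking the expectation, the left-hand side becomes $\bbE\encase{(\tilde z_k^\top\nabla f(y_k))^2\norm{\tilde z_k}_P^2}$.

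Hence, for every $k$ at which $\nabla f(y_k)$ has a nonzero component in $\lin(\cA)$ (otherwise $y_k$ already minimizes $f$ over $\lin(\cA)$ and the iteration is vacuous), the defining inequality is equivalent to
\[
\nu\ \ge\ \frac{\norm{z_k}_2^2\ \bbE\encase{(\tilde z_k^\top\nabla f(y_k))^2\norm{\tilde z_k}_P^2}}{(z_k^\top\nabla f(y_k))^2}.
\]
I would then substitute $d:=-\nabla f(y_k)$: by definition of the oracle $z_k=\lmo_\cA(\nabla f(y_k))=\lmo_\cA(-d)=z(d)$, and since every factor above is a square it does not see the sign of $d$, so the bound reads $\nu\ge \bbE\encase{(\tilde z_k^\top d)^2\norm{\tilde z_k}_P^2}\,\norm{z(d)}_2^2/(z(d)^\top d)^2$. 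Because $z(d)$ and each atom $\tilde z_k$ lie in $\lin(\cA)$, the right-hand side depends only on the orthogonal projection of $d$ onto $\lin(\cA)$ and is invariant to positive rescaling of $d$; therefore, as $k$ ranges over the iterates, this quantity is dominated by its maximum over all $d\in\lin(\cA)$, which is exactly the stated bound.

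The argument is essentially a substitution, so there is no genuine obstacle. The only points that need care are the degenerate case $\nabla f(y_k)\perp\lin(\cA)$ — which must be excluded from the ratio but is harmless since it signals optimality of the iterate — and the final step of passing from the supremum over the realized directions $\{-\nabla f(y_k)\}_k$ to the maximum over all of $\lin(\cA)$, which is legitimate precisely because $z(\cdot)$ and the atoms stay in $\lin(\cA)$, so that enlarging $d$ by components orthogonal to $\lin(\cA)$ or rescaling it leaves the ratio unchanged.
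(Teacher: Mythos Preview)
Your proposal is correct and follows essentially the same approach as the paper: recall the defining inequality for $\nu$ from the proof of Lemma~\ref{lem:psi-lower}, rewrite the two quadratic forms as scalars, and upper-bound the resulting ratio by maximizing over all $d\in\lin(\cA)$. The paper's own proof is extremely terse (it simply restates the defining inequality and declares that the bound ``then follows''), so your version is a more explicit rendering of the same argument; your handling of the degenerate case and the passage from realized gradients to all of $\lin(\cA)$ are reasonable additions that the paper leaves implicit.
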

\begin{proof}
  Recall the definition of $\nu$ as a constant which satisfies the following inequality for all iterations $k$
  $$
  \nu \nabla f(y_k)^\top\frac{\encase{ z_kz_k^\top}}{2L \norm{z_k}^2_2} \nabla f(y_k) \geq
  \nabla f(y_k)^\top \bbE\encase{ \tilde{z}_k\tilde{z}_k^\top P \tilde{z}_k\tilde{z}_k^\top}\nabla f(y_k)\,.
  $$
which then yields the following sufficient condition for $\nu$:
  $$
  \nu  \leq \max_{d\in\lin(\cA)}
  \frac{\bbE\encase{ (\tilde{z}_k^\top d)^2 \norm{\tilde{z}_k}_P^2}\|{z(d)}\|_2^2}{(z(d)^\top d)^2} \,,
  $$
  where $z(d)$ is defined to be\vspace{-2mm}
  $$
  z(d) =  \lmo_\cA(- d)\,.
  $$
\end{proof}
\paragraph{Proof of Theorem \ref{thm:random_pursuit_acc}}
The proof of Theorem \ref{thm:random_pursuit_acc} is exactly the same as that of the previous except that now the update to $v_k$ is also a random variable. The only change needed is the definition of $\nu'$ where we need the following to hold:
$$
\nu' \nabla f(y_k)^\top\frac{1}{2L}\bbE_k\encase{ z_kz_k^\top/{\norm{z_k}^2_2}} \nabla f(y_k) \geq
\nabla f(y_k)^\top \bbE\encase{ {z}_k{z}_k^\top P {z}_k{z}_k^\top}\nabla f(y_k)\,.
$$ \qedhere

\paragraph{Proof of Lemma \ref{lem:coordinate_descent_acc}}
When $\cA = \{e_i, i \in [n]\}$ and $\cZ$ has a uniform distribution over $\cA$, then $\tilde{P} = 1/n I$ and $P = nI$. A simple computation shows that $\nu' = n$ and $\nu \in [1, n]$. Note that here $\nu$ could be upto $n$ times smaller than $\nu'$ meaning that our accelerated greedy coordinate descent algorithm could be $\sqrt{n}$ times faster than the accelerated random coordinate descent. In the worst case $\nu = \nu'$, but in practice one can pick a smaller $\nu$ compared to $\nu'$ as the worst case gradient rarely happen. It is possible to tune $\nu$ and $\nu'$ empirically but we do not explore this direction.


\chapter{Convex Cones: Non-Negative Matching Pursuit}\label{cha:cone}
In this chapter, we consider the problem of minimizing a convex function over convex cones. The presented approach is based on \citep{locatello2017greedy} and was developed in collaboration with Michael Tschannen, Gunnar R\"atsch, and Martin Jaggi. In this dissertation, we present a new unpublished proof for the sublinear rate. The new proof is shorter and more elegant than the one in \citep{locatello2017greedy} despite the rate being essentially the same. 

\section{Problem Formulation}
In this chapter we study greedy projection-free optimization of smooth convex function over convex cones: 

\begin{equation}
\label{eqn:NNMPtemplate}
\underset{\theta\in\cone{A}}{\text{minimize}} \quad f(\theta).
\end{equation}
We consider the following setting: \\
$\triangleright~~\cH$ is a Hilbert space $\cH$ with associated inner product $\ip{x}{y} \,\forall \, x,y \in \cH$ and induced norm $\| x \|^2 := \ip{x}{x},$ $\forall \, x \in \cH$. \\
$\triangleright~~\cA \subset \cH$ is a compact set (the ``set of atoms'' or dictionary) in $\cH$. As opposed to chapter \ref{cha:mpcd}, we do not assume that $\cA$ is symmetric.
\\$\triangleright$ Without loss of generality, we point the cone to the origin and assume that $0 \in \cA$ (see next paragraph). \\
$\triangleright~~f \colon \cH \to \R$ is a convex and $L$-smooth ($L$-Lipschitz gradient in the finite dimensional case) function. If $\cH$ is an infinite-dimensional Hilbert space, then $f$ is assumed to be Fr{\'e}chet differentiable. \\
$\triangleright$ We specifically focus on projection-free algorithms that maintain the iterate as a non-negative linear combination of elements from $\cA$.

\paragraph{Convex Cone}
The cone $\cone(\cA-y)$ tangent to the convex set $\conv(\cA)$ at a point $y$  is formed by the half-lines emanating from $y$ and intersecting $\conv(\cA)$ in at least one other point. 
Without loss of generality we consider $0\in\cA$ and consider the set $\cone(\cA)$ (\ie with $y=0$) to be closed. If $\cA$ is finite, the cone constraint can be written as
$$
\cone(\cA):=\lbrace x: x = \sum_{i=1}^{|\cA|}\alpha_i z_i \;\;  \text{s.t.} \;  z_i\in\cA, \ \alpha_i\geq 0 ~\forall i\rbrace\!.
$$

This setting is theoretically interesting as it represents an intermediate case between Frank-Wolfe and Matching Pursuit. Non-negative combinations of atoms are very natural in some applications \cite{esser2013method, gillis2016fast,behr2013mitie,makalic2011logistic,berry2007algorithms, kim2012fast}. On the other hand, the convergence properties of these heuristic algorithms remain unclear.

\section{A Simple Non-Negative MP Method}
First, we present a simple method in Algorithm \ref{algo:NNMP} and its convergence on general convex functions. Unfortunately, this method does not enjoy a linear rate for strongly convex objectives. We will fix this issue using corrective variants in Section \ref{sec:opt/corrective}. Note that the algorithm definition is not affine invariant but can be made so replacing $L\|z_k\|^2$ by $L_{\cone(\cA)}$ defined as in Equation \eqref{eqn:cone_L}. 

The main difference between Algorithm \ref{algo:NNMP} and the general greedy template of Algorithm~\ref{algo:generalgreedy} is how we query the \lmo. We do not only look for the steepest descent direction in $\cA$ but also add an iteration-dependent atom $-\frac{\theta_k}{\|\theta_k\|_\cA}$ to the set of possible search directions. Note that this only makes sense when $\theta_k \neq 0$; in that case, no direction is added. Further, we stress that $\|y\|_\cA$ is only well defined if $y\in\cone(\cA)$. Since the set $\cA$ contains the origin on a corner, $\|\cdot\|_\cA$ is not a valid norm in general.
Intuitively, one can think that if the \lmo selects an atom in $\cA$, the update will always increase its weight. Otherwise, it will shrink all non-zero weights taking a step towards the origin.

\begin{algorithm}[H]
\caption{Non-Negative Matching Pursuit \citep{locatello2017greedy}}
\label{algo:NNMP}
\begin{algorithmic}[1]
  \STATE \textbf{init} $\theta_{0} = 0 \in \cA$
  \STATE \textbf{for} {$k=0\dots K$}
  \STATE \quad Find $\bar{z}_k := (\text{Approx-}) \lmo_{\cA}(\nabla f(\theta_k))$
  \STATE \quad $z_k = \argmin_{z\in \left\lbrace\bar{z}_k,\frac{-\theta_k}{\|\theta_k\|_\cA}\right\rbrace}\langle \nabla f(\theta_k),z\rangle$ 
  \STATE \quad $\gamma := \frac{\langle -\nabla f(\theta_k), z_k   \rangle}{L\|z_k\|^2}$
  \STATE \quad   Update $\theta_{k+1}:= \theta_k + \gamma z_k$
  \STATE \textbf{end for}
\end{algorithmic}
\end{algorithm}

Recall the alignment assumption from \citet{pena2015polytope} discussed in Section~\ref{sec:back_NNMP}. Our variation on the \lmo follows this assumption. In fact, if $\theta_k$ is not an optimum of the optimization problem in Equation \eqref{eqn:NNMPtemplate} and if $\min_{z\in\cA}\langle \nabla f(\theta_k), z\rangle = 0$, the vector $-\frac{\theta_k}{\|\theta_k\|_\cA}$ is aligned with $\nabla f(\theta_k)$ in the sense that $\langle \nabla f(\theta_k),-\frac{\theta_k}{\|\theta_k\|_\cA} \rangle < 0$. This effectively ensure that the algorithm does not stop unless $\theta_k$ is a solution of the optimization problem.

Formally, we define the set of feasible descent directions of Algorithm~\ref{algo:NNMP} at a point $\theta\in \cone(\cA)$ as:
\begin{equation}
T_\cA(\theta)  :=  \left\lbrace  d\in\cH\!: \exists z\in\cA\cup \Big\lbrace  -\frac{\theta}{\|\theta\|_\cA} \Big\rbrace \, \ \text{s.t.} \; \langle d,z\rangle<0  \right\rbrace .
\end{equation}
Intuitively, $T_\cA(\theta)$ is the set of all possible directions for which the Algorithm would not terminate if the iterate were $\theta$. 
If at some iteration $k$ the gradient $\nabla f(\theta_k)$ is not in $T_\cA(\theta_k)$, then Algorithm~\ref{algo:NNMP} terminates as $\min_{z\in\cA}\langle d,z\rangle = 0$ and $\langle d,-\theta_k\rangle\geq 0$ (which yields $z_k = 0$). This is expected as in that case $\theta_k$
 would be optimal.

\begin{lemma}\label{lemma:originOpt}
If $\tilde\theta \in \cone(\cA)$ and $\nabla f(\tilde\theta)\not\in T_\cA$ then $\tilde\theta$ is a solution to $\min_{\theta\in\cone(\cA)}f(\theta)$.
\begin{proof}
By contradiction, assume $\theta^\star\neq \tilde\theta$ and $\nabla f(\tilde\theta)\not\in T_\cA$. Now, by convexity of $f$ we have:
\begin{align*}
f(\theta^\star)\geq f(\tilde\theta) + \langle \nabla f(\tilde\theta),\theta^\star-\tilde\theta\rangle
\end{align*}
Since $\theta^\star\neq \tilde\theta$ we have also that $f(\theta^\star)<f(\tilde\theta)$. Therefore:
\begin{align*}
0< f(\tilde\theta) - f(\theta^\star)\leq  \langle - \nabla f(\tilde\theta),\theta^\star-\tilde\theta\rangle
\end{align*}
which we rewrite as $\langle  \nabla f(\tilde\theta),\theta^\star\rangle+\langle  \nabla f(\tilde\theta),-\tilde\theta\rangle<0$. Now we note that
by the assumption that $\nabla f(\tilde\theta)\not\in T_\cA$ we have that both these inner products are non negative which is absurd.
To draw this conclusion note that $\theta^\star\in \cone(\cA)$ we have that $\theta^\star = \sum_i \alpha_i z_i$ where $z_i\in\cA$ and $\alpha_i\geq 0 \ \forall \ i $. 
\end{proof}
\end{lemma}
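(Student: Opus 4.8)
The plan is a short, direct argument built on the first-order characterisation of minimisers of a convex function. First I would unpack the hypothesis $\nabla f(\tilde\theta)\notin T_\cA(\tilde\theta)$. By the definition of $T_\cA$, a vector $d$ fails to lie in $T_\cA(\tilde\theta)$ precisely when $\langle d,z\rangle\ge 0$ for every $z$ in the enlarged direction set $\cA\cup\{-\tilde\theta/\|\tilde\theta\|_\cA\}$ (and, when $\tilde\theta=0$, the extra atom is absent and the condition reads $\langle d,z\rangle\ge 0$ for all $z\in\cA$). Applying this with $d=\nabla f(\tilde\theta)$ gives two facts: \emph{(a)} $\langle\nabla f(\tilde\theta),z\rangle\ge 0$ for all $z\in\cA$; and \emph{(b)}, rescaling the atom $-\tilde\theta/\|\tilde\theta\|_\cA$ by the positive scalar $\|\tilde\theta\|_\cA$, that $\langle\nabla f(\tilde\theta),-\tilde\theta\rangle\ge 0$, i.e.\ $\langle\nabla f(\tilde\theta),\tilde\theta\rangle\le 0$.

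Next I would take an arbitrary feasible point $\theta^\star\in\cone(\cA)$ and express it as a non-negative combination $\theta^\star=\sum_i\alpha_i z_i$ with $z_i\in\cA$, $\alpha_i\ge 0$ (this is exactly the description of $\cone(\cA)$ when $\cA$ is finite; for a general compact $\cA$ one uses that $\cone(\cA)$ is assumed closed and passes to a limit of such finite combinations). Linearity of the inner product together with \emph{(a)} yields $\langle\nabla f(\tilde\theta),\theta^\star\rangle=\sum_i\alpha_i\langle\nabla f(\tilde\theta),z_i\rangle\ge 0$, and adding \emph{(b)} gives $\langle\nabla f(\tilde\theta),\theta^\star-\tilde\theta\rangle\ge 0$.

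Finally I would invoke convexity of $f$: $f(\theta^\star)\ge f(\tilde\theta)+\langle\nabla f(\tilde\theta),\theta^\star-\tilde\theta\rangle\ge f(\tilde\theta)$. Since $\theta^\star$ was an arbitrary element of $\cone(\cA)$, this shows $\tilde\theta$ minimises $f$ over $\cone(\cA)$. Equivalently one can phrase it by contradiction, as in the excerpt: if $\tilde\theta$ were not optimal, pick $\theta^\star$ with $f(\theta^\star)<f(\tilde\theta)$; convexity then forces $\langle\nabla f(\tilde\theta),\theta^\star\rangle+\langle\nabla f(\tilde\theta),-\tilde\theta\rangle<0$, contradicting that both summands are non-negative by \emph{(a)} and \emph{(b)}.

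The argument is essentially routine; the only place that needs care is the representation step for $\theta^\star$ when the atom set $\cA$ is infinite and $\cone(\cA)$ is a closure rather than the literal set of finite non-negative combinations — there one must either restrict to the dense subset of finite combinations and use continuity of $f$ and of the linear functional $\langle\nabla f(\tilde\theta),\cdot\rangle$, or argue directly on the closure. The boundary case $\tilde\theta=0$ (where $\|\tilde\theta\|_\cA$ is undefined and the extra direction $-\tilde\theta/\|\tilde\theta\|_\cA$ is simply not included) is harmless, since fact \emph{(a)} alone already gives $\langle\nabla f(0),\theta^\star-0\rangle\ge 0$ for every $\theta^\star\in\cone(\cA)$.
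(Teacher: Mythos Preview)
Your proposal is correct and follows essentially the same approach as the paper: unpack $\nabla f(\tilde\theta)\notin T_\cA(\tilde\theta)$ into the two nonnegativity conditions, use the conic-combination representation of any feasible $\theta^\star$ to get $\langle\nabla f(\tilde\theta),\theta^\star-\tilde\theta\rangle\ge 0$, and conclude via convexity. The paper phrases it by contradiction (which you also sketch at the end); your direct version is slightly cleaner, and your explicit handling of the $\tilde\theta=0$ and infinite-$\cA$ edge cases goes beyond what the paper spells out, but the substance is the same.
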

Notably, $\theta$ remains feasible since $\theta_0 = 0$. The reason is that we compute $\gamma$ minimizing the quadratic upper bound on $f$ given by smoothness. Since $f$ is convex, for $k>0$ we have $f(\theta_k)\leq f(0)$. Hence, the minimum of the upper bound lies in between $\theta_k$ and the origin.

For the analysis, let $$R_{\cone(\cA)}^2 := \max_{\substack{\theta\in\cone(\cA)\\ f(\theta)\leq f(\theta_0)}}\|\theta-\theta^\star\|^2_{\amxnok}$$ and 
\begin{equation}\label{eqn:cone_L}
L_{\cone(\cA)} := \!\!\!\sup_{\substack{\theta,y\in\cone(\cA)\\y = \theta + \gamma z \\\|z\|_{\amxnok}=1, \gamma\in \R_{>0}}}\frac{2}{\gamma^2}\big[ f(y)- f(\theta) -  \ip{\nabla f(\theta)}{y - \theta} \big] \,.
\end{equation}
Note that when $\theta=0$ the norm $\|y\|_{\cA}$ is well defined only for all $y\in\cone(\cA)$ and $y\neq 0$ (see Figure~\ref{fig:gface}). This is not an issue because $\theta_0 = 0$. 
\begin{theorem}
Let $\cA \subset \cH$ be a bounded set, $f$ be convex and $L_{\cone(\cA)}$ smooth and let $R_{\cone(\cA)}$ be the radius of the level set of $\theta_0$ both measured with the atomic norm.
Then, Algorithm~\ref{algo:NNMP} converges for $k \geq 1$ as 
\[
\varepsilon_{k+1}\leq \frac{2L_{\cone(\cA)} R_{\cone(\cA)}^2}{\delta^2(k+2)} ,
\]
where $\delta \in (0,1]$ is the relative accuracy parameter of the employed approximate \lmo in Equation~\eqref{eqn:lmo_mult}.
\end{theorem}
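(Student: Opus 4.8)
\textit{Proof sketch.}
The plan is to mirror the proof of Theorem~\ref{thm:sublinear_MP_rate}, the extra ingredient being that the iteration-dependent direction $-\theta_k/\|\theta_k\|_\cA$ adjoined to the oracle's search set is exactly the device that recovers, in the conic setting, the estimate ``dual atomic norm of the gradient $\geq$ primal gap over level-set radius''. First I would record feasibility and monotonicity: $\theta_0 = 0 \in \cone(\cA)$, and every update adds $\gamma z_k$ with $z_k$ in the augmented set $\cA\cup\{-\theta_k/\|\theta_k\|_\cA\}$ and $\gamma$ chosen to minimize the smoothness quadratic upper bound, so $\theta_k \in \cone(\cA)$ and $f(\theta_{k+1}) \leq f(\theta_k) \leq f(\theta_0)$ for all $k$; in particular every iterate lies in the level set defining $R_{\cone(\cA)}$. (The degenerate case $\theta_k = 0$ can only occur at $k=0$, which is why the statement starts at $k\geq1$.) Working with the affine-invariant version of Algorithm~\ref{algo:NNMP} (step size governed by $L_{\cone(\cA)}$) and using that every admissible $z_k$ satisfies $\|z_k\|_{\amxnok} \leq 1$, minimizing the atomic-norm quadratic upper bound \eqref{eqn:cone_L} over the step yields the one-step decrease
\[
f(\theta_{k+1}) \leq f(\theta_k) - \frac{\langle \nabla f(\theta_k), z_k\rangle^2}{2 L_{\cone(\cA)}}.
\]

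The heart of the argument is to bound $-\langle \nabla f(\theta_k), z_k\rangle \geq 0$ from below by $\delta\varepsilon_k / R_{\cone(\cA)}$, where $\varepsilon_k = f(\theta_k) - f(\theta^\star)$. Write $g_k := \nabla f(\theta_k)$. Because $0 \in \cA$ and $z_k$ is the more aligned of $\bar z_k$ and $-\theta_k/\|\theta_k\|_\cA$, the multiplicative oracle guarantee \eqref{eqn:lmo_mult} gives $\langle -g_k, z_k\rangle \geq \delta \langle -g_k, z\rangle$ for every $z$ in the augmented atom set. Since $\theta^\star \in \cone(\cA)$ we may write $\theta^\star = \sum_i \alpha_i z_i$ with $z_i \in \cA$, $\alpha_i \geq 0$, so that
\[
\theta^\star - \theta_k = \sum_i \alpha_i z_i + \|\theta_k\|_\cA\Bigl(-\frac{\theta_k}{\|\theta_k\|_\cA}\Bigr)
\]
exhibits $\theta^\star - \theta_k$ as a non-negative combination of elements of the augmented set; taking a representation that attains (up to $\epsilon\to0$) the atomic norm $\|\theta^\star - \theta_k\|_{\amxnok}$ and distributing $\langle -g_k, \cdot\rangle$ over it gives $\langle -g_k, \theta^\star - \theta_k\rangle \leq \|\theta^\star - \theta_k\|_{\amxnok}\,\max_z\langle -g_k, z\rangle \leq R_{\cone(\cA)}\,\max_z\langle -g_k, z\rangle$. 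Combining with convexity, $\langle -g_k, \theta^\star - \theta_k\rangle \geq \varepsilon_k$, yields $\langle -g_k, z_k\rangle \geq \delta\varepsilon_k / R_{\cone(\cA)}$.

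Substituting this into the one-step decrease produces the scalar recursion $\varepsilon_{k+1} \leq \varepsilon_k - \frac{\delta^2}{2 L_{\cone(\cA)} R_{\cone(\cA)}^2}\varepsilon_k^2$, and the advertised $\mathcal{O}(1/k)$ bound then follows by the standard induction argument (the base case using that a single step sends $\varepsilon$ below one quarter of the reciprocal of the recursion constant). I expect the most delicate point to be the conic-decomposition/``H\"older'' step: one must keep track of the fact that $\|\cdot\|_\cA$ is only a gauge (the origin sits on a corner of $\cA$, so it is not a norm), that $\|\theta^\star - \theta_k\|_{\amxnok}$ is finite and well-defined — which the exhibited decomposition certifies — and that the reference direction in the definition of $R_{\cone(\cA)}$ is the one attached to the current iterate $\theta_k$, so that it is consistent with the norm used both in the smoothness descent step and in the oracle guarantee; lining up these three bookkeeping details is precisely what lets the clean linear-span proof carry over unchanged to the conic case.
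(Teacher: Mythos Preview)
Your proof is correct and follows essentially the same route as the paper's: smoothness descent on the augmented atom set, a dual-gauge/H\"older step relating the oracle output to $\varepsilon_k/R_{\cone(\cA)}$, and the standard quadratic recursion. Your variant is in fact slightly more direct, since you apply H\"older immediately after exhibiting the explicit conic decomposition of $\theta^\star-\theta_k$ over the augmented set, whereas the paper takes a short detour through Lemma~\ref{lemma:originOpt} to rewrite the dual-gauge term as $\|\nabla_\parallel f(\theta_k)-\nabla_\parallel f(\theta^\star)\|$ before the Cauchy--Schwarz step.
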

As opposed to other theorems, we report the full proof in this main section as it is a unpublished, new, and independent proof compared to the one in \citep{locatello2017greedy}.
\begin{proof}
Recall that $\tilde{z}_k\in\cA\cup\left\lbrace -\frac{\theta_k}{\|\theta_k\|_\cA}\right\rbrace$ is the atom selected in iteration $k$ by the approximate \lmo. This ensures that the search direction is always negatively correlated with the gradient and hence is a descent direction and the algorithm does not stop unless at the constrained minimum. Furthermore, the quadratic upper bound holds also if $z_k = -\frac{\theta_k}{\|\theta_k\|_{\cA}}$ as it has atomic norm of one. 
We start by upper-bounding $f$ using the definition of $L_{\cone(\cA)}$:
\begin{eqnarray}
f(\theta_{k+1}) &\leq &  \min_{\gamma\in\R} f(\theta_k) + \gamma \langle \nabla f(\theta_k), 
 \tilde{z}_k \rangle   + \frac{\gamma^2}{2} L_{\cone(\cA)} \nonumber \\
& \leq &  f(\theta_k) - \frac{\langle\nabla f(\theta_k),\tilde{z}_k\rangle^2}{2L_{cone(\cA)}}  =   f(\theta_k) - \frac{\langle\nabla_\parallel f(\theta_k),\tilde{z}_k\rangle^2}{2L_{\cone(\cA)}}\nonumber \\
& \leq &  f(\theta_k) - \delta^2\frac{\langle\nabla_\parallel f(\theta_k),\tilde{z}_k\rangle^2}{2L_{\cone(\cA)}}\nonumber \\
& = & f(\theta_k) - \delta^2\frac{1}{2L_{\cone(\cA)}} \|\nabla_\parallel f(\theta_k)\|^2_{\amx*}
\end{eqnarray}
Where we used that $\|d\|_{\amx	*}:= \sup\left\lbrace \langle z,d\rangle, z\in\amx\right\rbrace$ is the dual of the atomic norm.
 Using Lemma \ref{lemma:originOpt}, it is easy to show that:
  \begin{align*}
\|\nabla_\parallel f(\theta_k) \|^2_{\amx*} &=\|\nabla_\parallel f(\theta_k)-\nabla_\parallel f(\theta^\star) \|^2_{\amx*}
\end{align*}
which in turn yields:
\begin{align*}
f(\theta_{k+1}) &\leq f(\theta_k) - \delta^2\frac{1}{2L_{\cone(\cA)}} \|\nabla_\parallel f(\theta_k)-\nabla_\parallel f(\theta^\star)\|^2_{\amx*}\\
&\leq f(\theta_k) - \delta^2\frac{1}{2L_{\cone(\cA)}}\frac{ \langle\nabla_\parallel f(\theta_k),\theta_k-\theta^\star\rangle^2}{R_{\cone(\cA)}^2}\\
&\leq f(\theta_k) - \delta^2\frac{1}{2L_{\cone(\cA)}}\frac{ \left(f(\theta_k)-f(\theta^\star)\right)^2}{R_{\cone(\cA)}^2}\\
\end{align*}
Which gives the rate after solving the recursion:
\begin{align*}
\varepsilon_{k+1}\leq \frac{2L_{\cone(\cA)} R_{\cone(\cA)}^2}{\delta^2(k+2)}
\end{align*}%
\end{proof}

\paragraph{Difference form the proof in \citep{locatello2017greedy}} In this dissertation, we presented a new proof for the sublinear rate of Algorithm \ref{algo:NNMP}. This proof is significantly simpler than the one in the original publication and allows for an affine invariant algorithm without the circular dependency in the smoothness definition described in Chapter \ref{cha:mpcd}. The original proof separates the cases where the weights increase or decrease, which the new proof avoids using the properties of atomic norms.

\paragraph{Limitations of Algorithm \ref{algo:NNMP}:}
Let us call \textit{active} the atoms which have nonzero weights in the representation of $\theta_k = \sum_{i=0}^{k-1} \alpha_i z_i$ computed by Algorithm~\ref{algo:NNMP}. Formally, the set of active atoms is defined as $\cS := \{ z_i \colon \alpha_i > 0, i = 0,1,\ldots,k-1 \}$.
The main drawback of Algorithm~\ref{algo:NNMP} is that when the direction $-\frac{\theta_k}{\|\theta_k\|_\cA}$ is selected, the weight of \textit{all} active atoms is reduced. 
This uniform reduction can lead the algorithm to alternately select $-\frac{\theta_k}{\|\theta_k\|_\cA}$ and an atom from $\cA$, thereby slowing down convergence in a similar manner as the \textit{zig-zagging} phenomenon well-known in the Frank-Wolfe framework \cite{LacosteJulien:2015wj}. A visualization of this issue is presented in Figure \ref{fig:zig_zag_cone} (left).

\section{Corrective Variants}\label{sec:opt/corrective}
To achieve linear convergence in the strongly convex case, we introduce corrective variants of Algorithm~\ref{algo:NNMP} that are inspired by corresponding approaches in FW \cite{frank1956algorithm,LacosteJulien:2015wj}.  Our algorithms are the Away-steps Non-Negative MP (ANNMP) and Pairwise Non-Negative MP (PWNNMP), presented in Algorithm~\ref{algo:ANNMPPWNNMP} and the fully-corrective Non-Negative MP (FCNNMP) in Algorithm \ref{algo:FCNNMP}.Now, we query the \lmo a second time on the active set $\cS$ (the set of atoms with non-zero weight in the iterate) to find the direction of steepest \textit{ascent}. We use this information to selectively ``reduce'' the weight on this atom  or ``swap'' it with the steepest descent direction. A visualization of the away step is presented in Figure \ref{fig:zig_zag_cone} (right).
\begin{figure}
    \centering
    \includegraphics[width=0.6\textwidth]{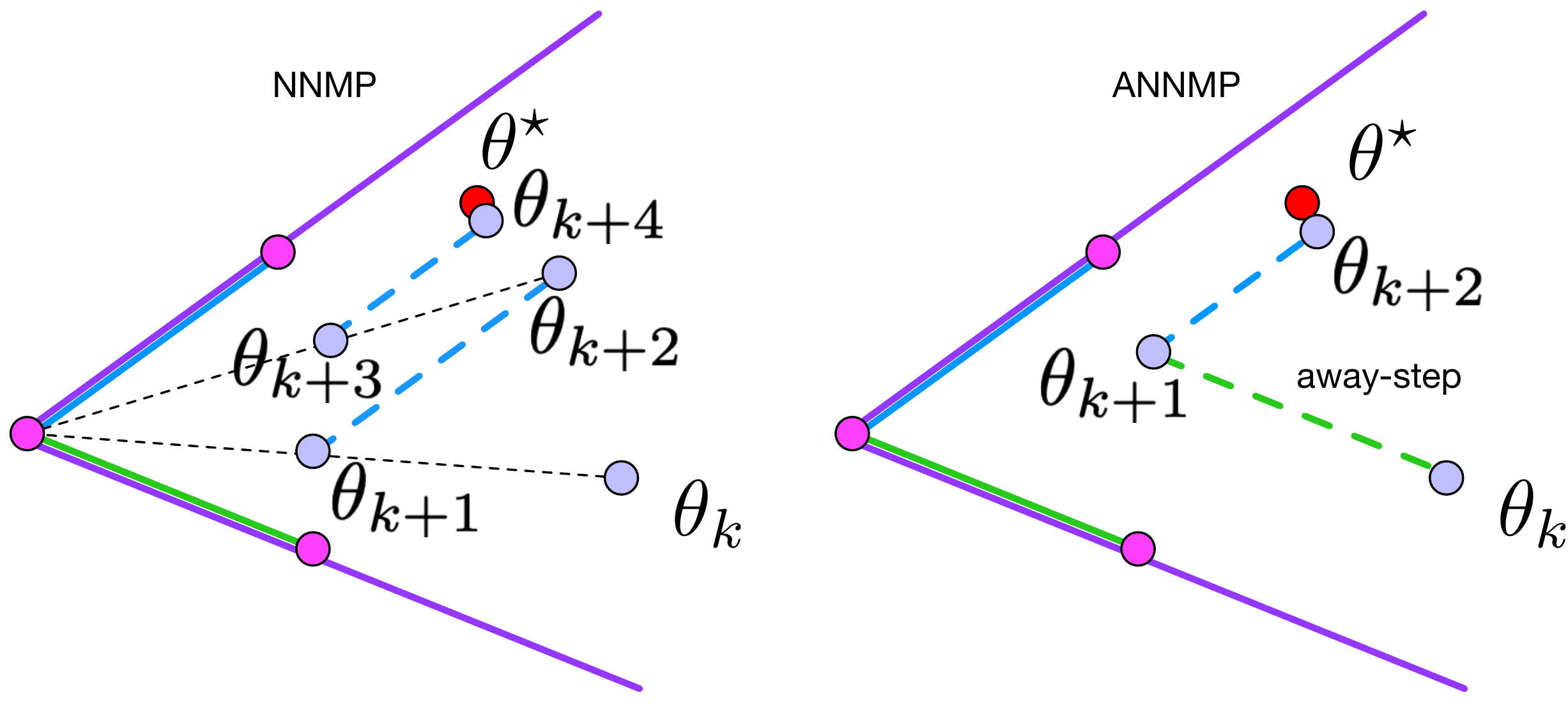}
    \caption{\small Visualization of the \textit{zig-zagging} phenomenon of the NNMP. As the vanilla algorithm cannot selectively reduce the weights of the iterate it takes more step to reach the optimum. Instead, the ANNMP allows to take an away-step that selectively reduce the weight of an atom (the green atom in the visualization) which  leads to faster convergence.}
    \label{fig:zig_zag_cone}
\end{figure}
\looseness=-1At each iteration, Algorithm~\ref{algo:ANNMPPWNNMP} updates the weights of $z_k$ and $v_k$ as $\alpha_{z_k}=\alpha_{z_k}+\gamma$ and $\alpha_{v_k}=\alpha_{v_k}-\gamma$, respectively. To ensure that $\theta_{k+1}\in\cone(\cA)$, $\gamma$ has to be clipped according to the weight which is currently on $v_k$, i.e., $\gamma_{\max}=\alpha_{v_k}$. If $\gamma = \gamma_{\max}$, we set $\alpha_{v_k}=0$ and remove $v_k$ from $\cS$ as the atom~$v_k$ is no longer active. If $d_k \in\cA$ (i.e., we take a regular MP step and not an away step), the line search is unconstrained (i.e., $\gamma_{\max}=\infty$).
Note that while the second $\lmo$ query increases the computational complexity, an exact search may. be feasible in practice as at most $k$ atoms can be active at iteration $k$. 
\looseness=-1Taking an additional computational burden, one can update the weights of all active atoms in the spirit of Orthogonal Matching Pursuit \citep{Mallat:1993gu}. This approach is implemented in the Fully Corrective MP (FCNNMP), Algorithm \ref{algo:FCNNMP}.

\begin{algorithm}[H]
\caption{Away-steps (ANNMP) and Pairwise (PWNNMP) Non-Negative MP \citep{locatello2017greedy}}
\label{algo:ANNMPPWNNMP}
\begin{algorithmic}[1]
  \STATE \textbf{init} $\theta_{0} = 0 \in \cA$, and $\cS:=\{\theta_{0}\}$
  \STATE \textbf{for} {$k=0\dots K$}
  \STATE \quad Find $z_k := (\text{Approx-}) \lmo_{\cA}(\nabla f(\theta_{k}))$
  \STATE \quad Find $v_k := (\text{Approx-}) \lmo_{\cS}(-\nabla f(\theta_{k}))$
  \STATE \quad $\cS = \cS \cup z_k$
  \STATE \quad \textit{ANNMP:} $d_k \!=\! \argmin_{d\in\lbrace z_k,-v_k\rbrace} \!\langle\nabla f(\theta_k),d \rangle\!$
  \STATE \quad \textit{PWNNMP:} $d_k = z_k-v_k$
  \STATE \quad $\gamma := \min\left\lbrace\frac{\langle -\nabla f(\theta_k),d_k\rangle}{L\|d_k\|^2},\gamma_{\max}\right\rbrace$ \\ \qquad($\gamma_{\max}$ see text)
  \STATE \quad Update $\alpha_{z_k}$, $\alpha_{v_k}$ and $\cS$ according to $\gamma$ \\ \qquad ($\gamma$ see text)
  \STATE \quad   Update $\theta_{k+1}:= \theta_k + \gamma d_k$
  \STATE \textbf{end for}
\end{algorithmic}
\end{algorithm}

\begin{algorithm}[H]
\caption{Fully Corrective Non-Negative Matching Pursuit (FCNNMP) \citep{locatello2017greedy}}
\label{algo:FCNNMP}
\begin{algorithmic}[1]
  \STATE \ldots as Algorithm~\ref{algo:ANNMPPWNNMP}, except replacing lines 6-10 with
\STATE \quad \textit{Variant 0 (norm corrective):} \\
 \qquad $\theta_{k+1} \! =\! \underset{{\theta\in\cone(\cS)}}{\argmin} \! \,\|\theta - (\theta_k -\frac{1}{L}\nabla f(\theta_k))\|^2\!$
\STATE \quad \textit{Variant 1:} \\ \qquad $\theta_{k+1} = \argmin_{\theta\in\cone(\cS)} f(\theta)$
\STATE \quad Remove atoms with zero weights from $\cS$
\end{algorithmic}
\end{algorithm}

At each iteration, Algorithm \ref{algo:FCNNMP} maintains the set of active atoms $\cS$ (\ie the atoms with non-zero weight forming the iterate) by adding $z_k$ and removing atoms with zero weights after the update. 
In Variant 0, the algorithm minimizes the quadratic upper bound $g_{\theta_{k}}(\theta)$ on $f$ at $\theta_{k}$ imitating a gradient descent step with projection onto a ``varying'' target, i.e., $\cone(\cS)$.  
In Variant~1, the original objective $f$ is minimized over $\cone(\cS)$ at each iteration, which is in general more efficient than minimizing $f$ over $\cone(\cA)$ if $k$ is small. 

For $f(\theta) = \frac{1}{2}\|y  -\theta\|^2$, $y  \in \cH$, Variant 1 recovers Algorithm~1 in \cite{Yaghoobi:2015ff} and the OMP variant in \cite{bruckstein2008uniqueness} which both only apply to this specific objective.
\subsection{Linear Rates}
The linear rate case is interesting, as it turns out that the rate is surprisingly closer in spirit to the one of Frank-Wolfe \citep{LacosteJulien:2015wj} rather than the one of MP \citep{locatello2017unified}. To illustrate these relations, we only discuss the non-affine invariant linear rate in this dissertaiton  and refer to \citep{locatello2017greedy} for the remaining affine invariant case.

We start by recalling some of the geometric complexity quantities that were introduced in the context of FW by \citet{LacosteJulien:2015wj}
and are adapted here to the optimization problem we aim to solve (minimization over $\cone(\cA)$ instead of $\conv(\cA)$). 
\vspace{-2mm}
\paragraph{Directional Width {\normalfont\cite{LacosteJulien:2015wj}}}
The directional width of a set $\cA$ w.r.t. a direction $r\in\cH$ is defined as:
\begin{align}
dirW(\cA,r):=\max_{s,v\in\cA}\big\langle\tfrac{r}{\|r\|},s-v\big\rangle
\end{align}
\vspace{-5mm}
\paragraph{Pyramidal Directional Width  {\normalfont\cite{LacosteJulien:2015wj}}}
The Pyramidal Directional Width of a set $\cA$ with respect to a direction $r$ and a reference point $\theta\in\conv(\cA)$ is defined as:
\begin{align}
PdirW(\cA,r,\theta) := \min_{\cS\in\cS_\theta}dirW(\cS\cup
\lbrace s(\cA,r)\rbrace, r),
\end{align}
where $\cS_\theta := \lbrace \cS \ |\ \cS\subset \cA$ and $\theta$ is a proper convex combination of all the elements in $\cS\rbrace$ and $s(\cA,r) := \max_{s\in\cA}\langle\frac{r}{\|r\|},s\rangle$.

Inspired by the notion of pyramidal width in \cite{LacosteJulien:2015wj}, which is the minimal pyramidal directional width computed over the set of feasible directions, we now define the cone width of a set $\cA$ where only the generating faces ($\gfaces$) of $\cone(\cA)$ (instead of the faces of $\conv(\cA)$) are considered. Before doing so we introduce the notions of \textit{face}, \textit{generating faces} ($\gfaces$), and \textit{feasible direction}.
\paragraph{Face of a Convex Set {\normalfont\cite{LacosteJulien:2015wj}}} Let us consider a set $\cK$ with a $k-$dimensional affine hull along with a point $\theta\in\cK$. Then, $\cK$ is a $k-$dimensional face of $\conv(\cA)$ if $\cK = \conv(\cA) \cap \lbrace y \colon \langle r,y-\theta\rangle = 0\rbrace$ for some normal vector $r$ and $\conv(A)$ is contained in the half-space determined by $r$, i.e., $\langle r,y-\theta\rangle\leq 0$, $\forall \ y\in\conv(\cA)$.
Intuitively, given a set $\conv(\cA)$ one can think of  $\conv(\cA)$ being a $\mathrm{dim}(\conv(\cA))-$dimensional face of itself, an edge on the border of the set a $1$-dimensional face and a vertex a $0$-dimensional face.
\vspace{-2mm}
\paragraph{Face of a Cone and \normalfont{$\gfaces$}}
\looseness=-1Similarly, a $k-$dimensional face of a cone is an open and unbounded set $\cone(\cA) \cap \lbrace y \colon \langle r,y-\theta\rangle = 0\rbrace$ for some normal vector $r$ and $\cone(A)$ is contained in the half space determined by $r$.  We can define the generating faces of a cone as:
\begin{align*}
\gfaces(\cone(\cA)) \! := \! \left\lbrace \cB \cap \conv(\cA)\colon \! \cB\in \faces(\cone(\cA))\right\rbrace.
\end{align*}
Note that $\gfaces(\cone(\cA))\subset \faces(\conv(\cA))$ and $\conv(\cA)\in\gfaces(\cone(\cA))$.  
Furthermore, for each $\cK\in\gfaces(\cone(\cA))$, $\cone(\cK)$ is a $k-$dimensional face of $\cone(\cA)$. An intuitive visualization of $\gfaces$ is depicted in Figure \ref{fig:gface} (right).

\begin{figure}
    \begin{center}
    \includegraphics[width=0.34\textwidth]{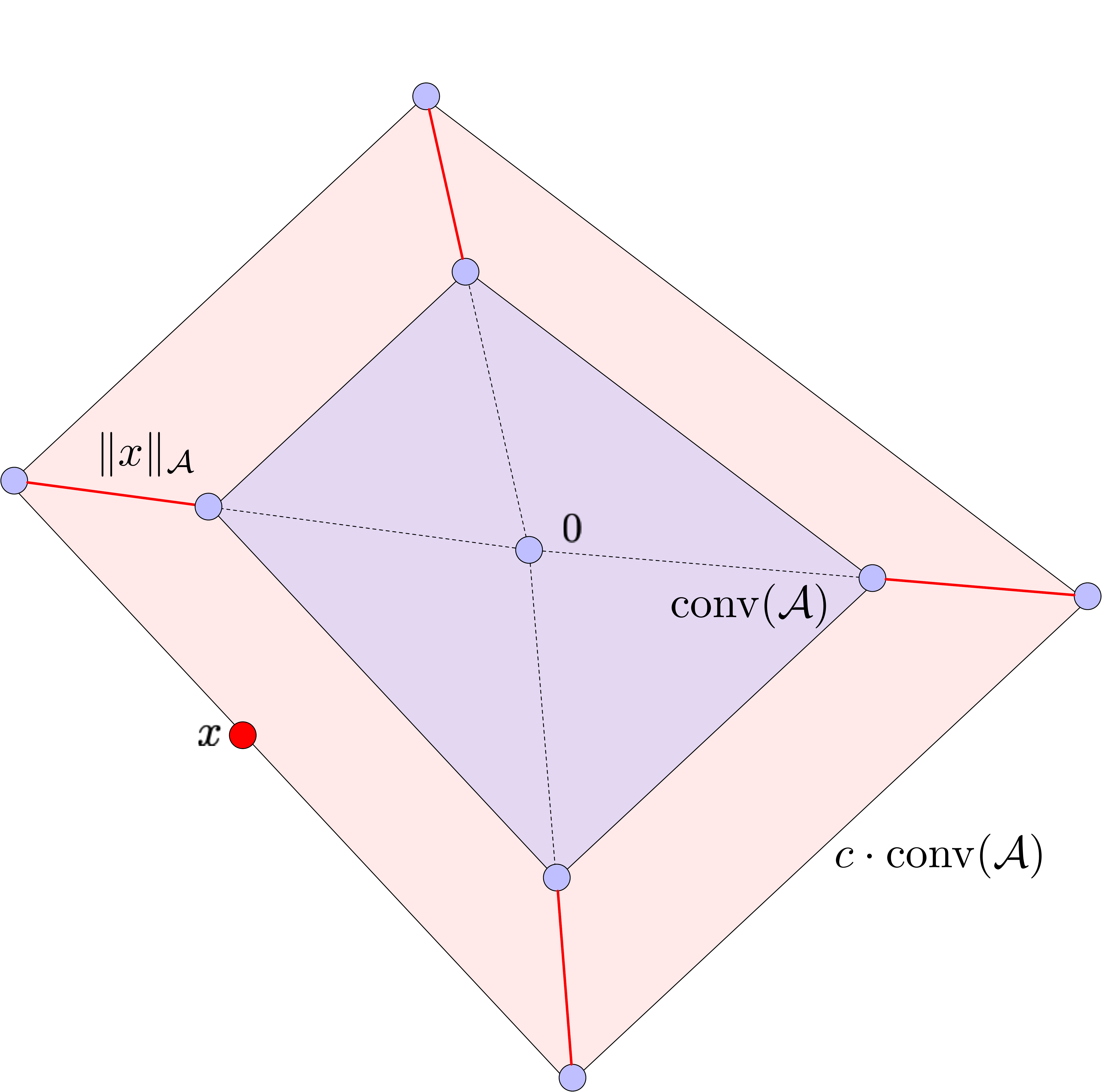}%
    \hspace{3em}
    \includegraphics[scale=0.4]{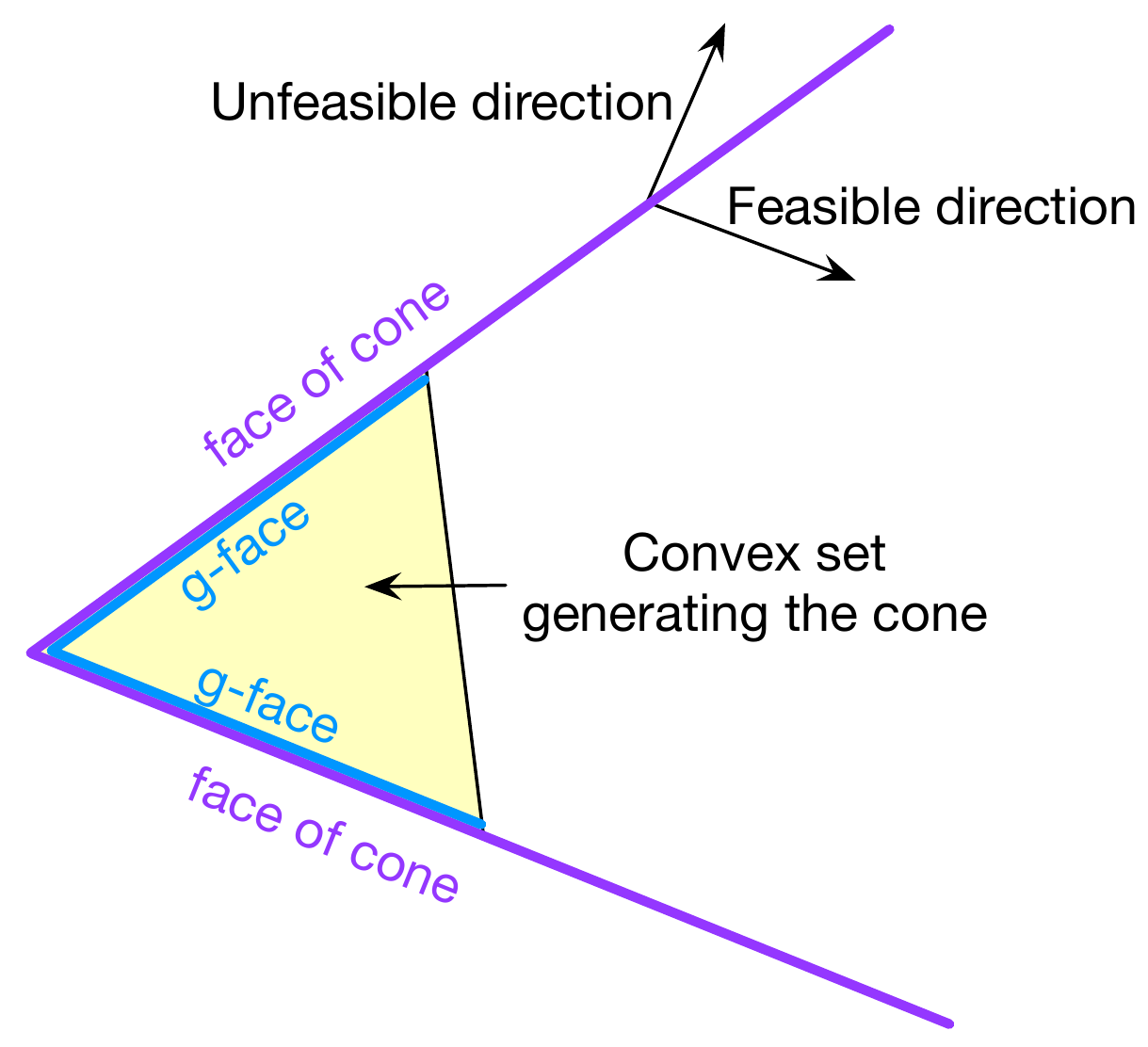}
    \end{center}
    \caption{\small (left) Visualization of the atomic norm. It corresponds to the smallest constant $c$ so that a point $x$ belong to $c\cdot\conv(\cA)$. If the origin is not in the relative interior the norm is not well defined everywhere. (right) Visualization of the $\gfaces$ of a $\cone$ generated by a convex set.}
    \label{fig:gface}
\end{figure}

We now adapt the notion of \textbf{feasible directions} from \citep{LacosteJulien:2015wj} to the cone setting. A direction $d$ is feasible from $\theta \in \cone(\cA)$ if it points inwards $\cone(\cA)$, i.e., if $\exists \varepsilon > 0$ s.t. $\theta + \varepsilon d \in \cone(\cA)$.
Since a face of the cone is itself a cone, if a direction is feasible from $\theta\in\cone(\cK)\setminus 0$, it is feasible from every positive rescaling of $\theta$. We therefore can consider only the feasible directions on the generating faces (which are closed and bounded sets). Finally, we define the cone width of $\cA$.

\paragraph{Cone Width}\vspace{-1mm}
\begin{align} \label{def:conewidth}
\cw:= \min_{\substack{\cK\in \gfaces(\cone(\cA))\\ \theta\in \cK \\ r\in\cone(\cK-\theta)\setminus \lbrace 0\rbrace}} PdirW(\cK\cap\cA,r,\theta)
\end{align}

We are now ready to show the linear convergence of Algorithms~\ref{algo:ANNMPPWNNMP} and \ref{algo:FCNNMP}.
\begin{theorem}\label{thm:PWNNMPlinear}
\looseness=-1Let $\cA \subset \cH$ be a bounded set with $0\in\cA$ and let the objective function $f \colon \cH \to \R$ be both $L$-smooth and $\mu$-strongly convex. 
Then, the suboptimality of the iterates of Algorithms~\ref{algo:ANNMPPWNNMP} and \ref{algo:FCNNMP} decreases geometrically at each step in which $\gamma < \alpha_{v_k}$ (henceforth referred to as ``good steps'') as:
\begin{equation} \label{eq:linrate}
\varepsilon_{k+1}
\leq \left(1- \beta \right)\varepsilon_{k},
\end{equation}
where $\beta := \delta^2 \frac{\mu \cw^2}{L\diam(\cA)^2}\in (0,1]$,
$\varepsilon_k := f(\theta_k) - f(\theta^\star)$ is the suboptimality at step $k$ and $\delta \in (0,1]$ is the relative accuracy parameter of the employed approximate \lmo \eqref{eqn:lmo_mult}. For ANNMP (Algorithm~\ref{algo:ANNMPPWNNMP}), $\beta^{\text{ANNMP}} = \beta/2$. If $\mu = 0$ Algorithm~\ref{algo:ANNMPPWNNMP} converges with rate $\Or{1/K(k)}$ where $K(k)$ is the number of ``good steps'' up to iteration k.
\end{theorem}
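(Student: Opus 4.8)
The plan is to transplant the away-step / pairwise Frank--Wolfe analysis of \citet{LacosteJulien:2015wj} to the conic setting, following \citep{locatello2017greedy}, with the cone width $\cw$ of Definition~\eqref{def:conewidth} playing the role of the pyramidal width. Fix an iteration $k$, write $r := -\nabla f(\theta_k)$, and assume $\varepsilon_k > 0$ (otherwise the iterate is already optimal). I would first extract the per-good-step primal progress from smoothness, then prove the geometric lemma tying the achieved pairwise/away gap to $\cw$, and finally combine the two with strong convexity.

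\textbf{Step 1 (progress from smoothness).} On a good step the clipped line search in Algorithm~\ref{algo:ANNMPPWNNMP} coincides with the unconstrained minimizer $\gamma = \langle r, d_k\rangle/(L\|d_k\|^2)$ of the quadratic model \eqref{eq:QuadraticUpperBound}, so substituting $\theta_{k+1} = \theta_k + \gamma d_k$ gives
\begin{equation*}
\varepsilon_k - \varepsilon_{k+1} \;\geq\; \frac{\langle r, d_k\rangle^2}{2L\|d_k\|^2} \;\geq\; \frac{\langle r, d_k\rangle^2}{2L\,\diam(\cA)^2},
\end{equation*}
where $\|d_k\|\le\diam(\cA)$ because a pairwise direction $z_k-v_k$ has both endpoints in $\cA$, and an away direction $-v_k$ or an MP direction $z_k$ satisfies $\|v_k\|=\|v_k-0\|\le\diam(\cA)$ since $0\in\cA$ (likewise for $z_k$). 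For FCNNMP the same inequality holds: its subproblem over $\cone(\cS)$ (Variant~0 or~1) admits the single pairwise update as a feasible point, hence decreases $f$, respectively the surrogate, at least as much.

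\textbf{Step 2 (geometric lemma — the main obstacle).} The heart of the argument is
\begin{equation*}
\langle r,\, z_k - v_k\rangle \;\geq\; \delta\,\cw \;\frac{\langle r,\, \theta^\star - \theta_k\rangle}{\|\theta^\star - \theta_k\|},
\end{equation*}
i.e. the inexact pairwise gap dominates the directional derivative towards the optimum up to the factor $\delta\cw$. To prove it one uses that $\theta^\star-\theta_k$ is a feasible direction at $\theta_k$ (the segment $[\theta_k,\theta^\star]$ lies in $\cone(\cA)$ by convexity of the cone), that $\theta_k$ lies in the relative interior of $\cone(\cS)=\cone(\cK\cap\cA)$ for a suitable generating face $\cK$, and then decomposes $\theta^\star-\theta_k$ against the active atoms $\cS$ together with the exact Frank--Wolfe atom $s(\cA,r)$; the pyramidal directional width of the resulting pyramid is at least $\cw$ by Definition~\eqref{def:conewidth}, and the approximation guarantee \eqref{eqn:lmo_mult} for $z_k$ and $v_k$ introduces the $\delta$. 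Lemma~\ref{lemma:originOpt} guarantees $r\in T_\cA(\theta_k)$ whenever $\theta_k$ is not optimal, so the selected direction is a genuine descent direction and the left side is strictly positive. The points that make this harder than the polytope case are that faces of a cone are unbounded and that $\|\cdot\|_\cA$ degenerates at the origin; both are circumvented by working only with the bounded generating faces and by the fact that $\theta_0=0$ keeps every iterate feasible, so $\cw$ is well defined along the whole trajectory. I expect this lemma to be where most of the work lies.

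\textbf{Step 3 (strong convexity and conclusion).} By $\mu$-strong convexity, $f(\theta^\star)\ge f(\theta_k)+\langle\nabla f(\theta_k),\theta^\star-\theta_k\rangle+\tfrac{\mu}{2}\|\theta^\star-\theta_k\|^2$, hence $\langle r,\theta^\star-\theta_k\rangle\ge\varepsilon_k+\tfrac{\mu}{2}\|\theta^\star-\theta_k\|^2\ge\sqrt{2\mu\varepsilon_k}\,\|\theta^\star-\theta_k\|$ by AM--GM, so that $\langle r,\theta^\star-\theta_k\rangle/\|\theta^\star-\theta_k\|\ge\sqrt{2\mu\varepsilon_k}$. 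Combining with Steps~1--2, for PWNNMP (and, by the domination remark, for FCNNMP),
\begin{equation*}
\varepsilon_k - \varepsilon_{k+1} \;\geq\; \frac{(\delta\,\cw)^2}{2L\,\diam(\cA)^2}\cdot 2\mu\,\varepsilon_k \;=\; \beta\,\varepsilon_k,
\end{equation*}
i.e. $\varepsilon_{k+1}\le(1-\beta)\varepsilon_k$. For ANNMP one only has $\langle r,d_k\rangle=\max\{\langle r,z_k\rangle,\langle r,-v_k\rangle\}\ge\tfrac12\langle r,z_k-v_k\rangle$ (both summands being nonnegative by the choice of $z_k$ and, when relevant, of $v_k$), which loses a constant factor and yields $\beta^{\text{ANNMP}}=\beta/2$ as stated. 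Finally, when $\mu=0$, replace the strong-convexity bound by convexity $\langle r,\theta^\star-\theta_k\rangle\ge\varepsilon_k$ together with $\|\theta^\star-\theta_k\|\le R_{\cone(\cA)}$ on the level set to get $\varepsilon_k-\varepsilon_{k+1}\ge c\,\varepsilon_k^2$ with $c=\Th{\delta^2\cw^2/(L\,\diam(\cA)^2 R_{\cone(\cA)}^2)}$; since $\varepsilon$ is nonincreasing, $1/\varepsilon_{k+1}\ge 1/\varepsilon_k+c$ on good steps, and telescoping over the $K(k)$ good steps gives $\varepsilon\le\Or{1/K(k)}$.
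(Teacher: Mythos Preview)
Your plan is correct and matches the paper's proof almost step for step: the paper also (i) extracts the per-good-step progress from smoothness, (ii) minimizes the strong-convexity quadratic to get $\varepsilon_k \le \tfrac{1}{2\mu}\langle r,\hat e\rangle^2$ (equivalent to your AM--GM), and (iii) closes with the cone-width geometric lemma $\langle r,d\rangle/\langle r,\hat e\rangle \ge \cw$, which it states separately as Theorem~\ref{thm:widthBound} and proves via projection onto generating faces using Lemma~\ref{lemma:faces}; the ANNMP $\tfrac12$ factor is argued identically. The only real divergence is the $\mu=0$ clause: the paper does not reuse the cone-width bound there but instead runs a classical Frank--Wolfe induction with the fixed step $\gamma=2/(\delta' K(k)+2)$ and a constant $C=L\rho^2\diam(\cA)^2$, whereas you combine Steps~1--2 with plain convexity and telescope $1/\varepsilon$; both give $\Or{1/K(k)}$ and your route is arguably cleaner since it avoids the extra $\rho$.
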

\vspace{-2mm}
\paragraph{Bad Steps} To obtain a linear convergence rate, one needs to upper-bound the number of ``bad steps'' $k-K(k)$ (i.e., steps with $\gamma \geq \alpha_{v_k}$). We have that $K(k)=k$ for Variant 1 of FCNNMP (Algorithm~\ref{algo:FCNNMP}), $K(k)\geq k/2$ for ANNMP (Algorithm~\ref{algo:ANNMPPWNNMP}) and $K(k) \geq t/(3|\cA|!+1)$ for PWNNMP (Algorithm~\ref{algo:ANNMPPWNNMP}) and Variant 0 of FCNNMP (Algorithm~\ref{algo:FCNNMP}). This yields a global linear convergence rate of $\varepsilon_k\leq \varepsilon_0 \exp\left(-\beta K(k)\right)$. The bound for PWNNMP is very loose and only meaningful for finite sets $\cA$. Further note that Variant 1 of FCNNMP (Algorithm \ref{algo:FCNNMP}) does not produce bad steps. Also note that the bounds on the number of good steps given above are the same as for the corresponding FW variants and are obtained using the same (purely combinatorial) arguments as in \cite{LacosteJulien:2015wj}.

\paragraph{Relation to previous MP rates}
The linear convergence of the generalized (not non-negative) MP variants studied in  \cite{locatello2017unified} crucially depends on the geometry of the set which is characterized by the Minimal Directional Width $\mdw$:
\begin{align}
\mdw := \min_{ \substack {d\in\lin(\cA)\\ d \neq 0}}\max_{z\in\cA}\langle \frac{d}{\|d\|},z\rangle \ . 
\end{align}
The following Lemma relates the Cone Width with the minimal directional width.
\begin{lemma} \label{lem:mdw}
If the origin is in the relative interior of $\conv(\cA)$ with respect to its linear span, then $\cone(\cA)=\lin(\cA)$ and $\cw= \mdw$.
\end{lemma}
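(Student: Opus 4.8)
The plan is to establish the two asserted identities separately. For $\cone(\cA)=\lin(\cA)$, the inclusion $\cone(\cA)\subseteq\lin(\cA)$ is automatic, since every non-negative combination of atoms lies in their linear span. For the reverse inclusion I would fix $d\in\lin(\cA)$; because $0\in\mathrm{relint}(\conv(\cA))$ there is an $\varepsilon>0$ with $\varepsilon d\in\conv(\cA)\subseteq\cone(\cA)$, and since $\cone(\cA)$ is invariant under scaling by $\varepsilon^{-1}>0$ we obtain $d\in\cone(\cA)$. Hence $\lin(\cA)=\cone(\cA)$. With this, $\cone(\cA)$ is a linear subspace, and a subspace has only itself as a face: any normal $r$ with $\ip{r}{y}\le 0$ for all $y$ in a subspace satisfies $\ip{r}{y}=0$ there, so $\cone(\cA)\cap\{y:\ip{r}{y}=0\}=\lin(\cA)$. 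Therefore $\gfaces(\cone(\cA))=\{\lin(\cA)\cap\conv(\cA)\}=\{\conv(\cA)\}$, and using $\conv(\cA)\cap\cA=\cA$ the minimization defining $\cw$ collapses to $\min_{\theta\in\conv(\cA),\, r\in\cone(\conv(\cA)-\theta)\setminus\{0\}}PdirW(\cA,r,\theta)$.

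The next step is to put $PdirW$ into a transparent form. Writing $u=r/\|r\|$, the atom $s(\cA,r)$ realizes $\max_{z\in\cA}\ip{u}{z}$ and in particular dominates every atom of any candidate set $\cS\in\cS_\theta$ along $u$, so $dirW(\cS\cup\{s(\cA,r)\},r)=\max_{z\in\cA}\ip{u}{z}-\min_{v\in\cS}\ip{u}{v}$. Minimizing over $\cS\in\cS_\theta$ gives $PdirW(\cA,r,\theta)=\max_{z\in\cA}\ip{u}{z}-c^\star(\theta,u)$, where $c^\star(\theta,u):=\max_{\cS\in\cS_\theta}\min_{v\in\cS}\ip{u}{v}$ is the largest $c$ for which $\theta$ is a convex combination of atoms $z$ with $\ip{u}{z}\ge c$.

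For the inequality $\cw\le\mdw$ I would evaluate this at $\theta=0$, which is admissible since $0\in\cA$ and $\cone(\conv(\cA))=\lin(\cA)$ by the relative-interior hypothesis, so that $r$ ranges over all of $\lin(\cA)\setminus\{0\}$. No convex combination of atoms all having strictly positive inner product with $u$ can equal $0$, whereas $\{0\}\in\cS_0$ already attains level $0$; hence $c^\star(0,u)=0$ and $PdirW(\cA,r,0)=\max_{z\in\cA}\ip{u}{z}$. Minimizing over $r\in\lin(\cA)\setminus\{0\}$ is precisely the definition of $\mdw$, so $\cw\le\mdw$.

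The reverse inequality $\cw\ge\mdw$ is the real content and the step I expect to be the main obstacle. By the formula above it is equivalent to $c^\star(\theta,u)\le\max_{z\in\cA}\ip{u}{z}-\mdw$ for every admissible pair $(\theta,r)$; since $\max_{z\in\cA}\ip{u}{z}\ge\mdw$ always, the case $c^\star(\theta,u)\le 0$ is immediate, and the difficulty is entirely in the case $c^\star(\theta,u)>0$. Here the feasibility constraint $r\in\cone(\conv(\cA)-\theta)$ is used essentially: it rules out $\theta$ lying on the face of $\conv(\cA)$ exposed by $u$, so there is genuine slack $\theta+tu\in\conv(\cA)$ for some $t>0$; combined with $0\in\mathrm{relint}(\conv(\cA))$ — which forces any representation of $\theta$ as a convex combination of atoms to draw on atoms spread around the origin, thereby depressing $c^\star(\theta,u)$ by a full $\mdw$ — this yields the claimed bound. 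This is the cone counterpart of the pyramidal-directional-width estimates of Lacoste-Julien and Jaggi \cite{LacosteJulien:2015wj}; once it is in place, combining it with the previous paragraph gives $\cw=\mdw$, which together with the first paragraph completes the lemma.
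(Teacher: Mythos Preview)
Your treatment of $\cone(\cA)=\lin(\cA)$, the collapse $\gfaces(\cone(\cA))=\{\conv(\cA)\}$, and the direction $\cw\le\mdw$ (by evaluating at $\theta=0$, where $\cS_0=\{\{0\}\}$ forces $c^\star(0,u)=0$) are all correct and coincide with the paper's argument.

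The gap is exactly where you flag it: $\cw\ge\mdw$. Your proposed route---bounding $c^\star(\theta,u)\le\max_z\langle u,z\rangle-\mdw$ for every feasible pair $(\theta,r)$---is not carried out. The sentence about representations of $\theta$ being ``spread around the origin, thereby depressing $c^\star(\theta,u)$ by a full $\mdw$'' is a hope, not an argument; the case $c^\star(\theta,u)>0$ arises precisely when $\theta$ admits a convex representation using only atoms in the half-space $\{\langle u,\cdot\rangle>0\}$, so there is no spreading to appeal to. Invoking the pyramidal-width machinery of \cite{LacosteJulien:2015wj} as a black box does not deliver the specific inequality you need.

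The paper avoids this pointwise bound altogether and instead shows that $\theta=0$ is a minimizer of $\theta\mapsto\min_r PdirW(\cA,r,\theta)$. The structural input is that $0\in\cA$: for any competing $\theta$ with active set $\cS$, having $0$ available as an atom gives $\max_{v\in\cS}\langle r,-v\rangle\ge 0$, hence $\langle r,z-v\rangle\ge\langle r,z\rangle$ for the chosen away atom $v$, which forces $PdirW$ at that $\theta$ to dominate $\max_z\langle r/\|r\|,z\rangle=PdirW(\cA,r,0)$. Once the minimum is pinned at $\theta=0$, the identity $\cw=\min_{r}\max_{z}\langle r/\|r\|,z\rangle=\mdw$ is immediate. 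So the idea you are missing is to exploit $0\in\cA$ to compare every $\theta$ directly to $\theta=0$, rather than trying to uniformly lower-bound $PdirW$ across all $\theta$.
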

Now, if the set $\cA$ is symmetric or, more generally, if $\cone(\cA)$ spans the linear space $\lin(\cA)$ (which implies that the origin is in the relative interior of $\conv(\cA)$), there are no bad steps. Hence, by Lemma \ref{lem:mdw}, the linear rate obtained in Theorem~\ref{thm:PWNNMPlinear} for non-negative MP variants generalizes the one presented in \cite[Theorem~7]{locatello2017unified} for generalized MP variants.

\vspace{-2mm}
\paragraph{Relation to FW rates}
Optimization over conic hulls with non-negative MP is more similar to FW than to MP itself in the following sense. For MP, every direction in $\lin(\cA)$ allows for unconstrained steps, from any iterate $\theta_k$. In contrast, for our non-negative MPs, while some directions allow for unconstrained steps from some iterate $\theta_k$, others are constrained, thereby leading to the dependence of the linear convergence rate on the cone width, a geometric constant which is very similar in spirit to the Pyramidal Width appearing in the linear convergence bound in \cite{LacosteJulien:2015wj} for FW. Furthermore, as for Algorithm~\ref{algo:ANNMPPWNNMP}, the linear rate of Away-steps and Pairwise FW holds only for good steps. We finally relate the cone width with the Pyramidal Width \cite{LacosteJulien:2015wj}. The Pyramidal Width is defined as 
\begin{align*}
\mathrm{PWidth}(\cA):= \min_{\substack{\cK\in \faces(\conv(\cA))\\ \theta\in \cK \\ r\in\cone(\cK-\theta)\setminus \lbrace 0\rbrace}} PdirW(\cK\cap\cA,r,\theta).
\end{align*}
We have $\cw \geq \mathrm{PWidth}(\cA)$ as the minimization in the definition \eqref{def:conewidth} of $\cw$ is only over the subset $\gfaces(\cone(\cA))$ of $\faces(\conv(\cA))$. As a consequence, the decrease per iteration characterized in Theorem~\ref{thm:PWNNMPlinear} is larger than what one could obtain with FW on the rescaled convex set $\tau\cA$. Furthermore, the decrease characterized in \cite{LacosteJulien:2015wj} scales as $1/\tau^2$ due to the dependence on $1/\diam(\conv(\cA))^2$. 


\section{Empirical Convergence}
We illustrate the performance of the presented algorithms on the exemplary task of finding the projection of a point in a convex cone. 

\paragraph{Synthetic data} 
We consider minimizing the least-squares objective on the conic hull of~100 unit-norm vectors sampled at random in the first orthant of $\R^{50}$. We compare the convergence of Algorithms~\ref{algo:NNMP}, \ref{algo:ANNMPPWNNMP}, and \ref{algo:FCNNMP} with the Fast Non-Negative MP (FNNOMP) of~\cite{Yaghoobi:2015ff}, and Variant 3 (line-search) of the FW algorithm in \cite{locatello2017unified} on the atom set rescaled by $\tau = 10 \|y\|$, observing linear convergence for our corrective variants. 
\begin{figure}
\centering
\includegraphics[width=0.7\linewidth]{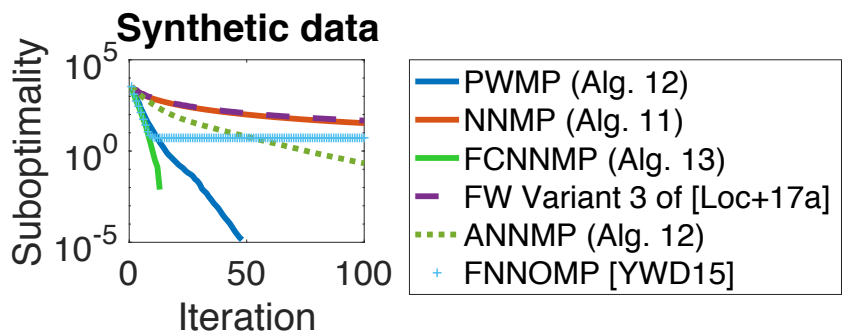}
\caption{\small Synthetic data experiment solving a least squares projection on a randomly generated cone. These curves are averages over 20 random cones and target vectors. We observe that as the objective is strongly convex, our FCNNMP, PWNNMP and ANNMP achieve a linear convergence rate.}\label{exp:synth}
\end{figure}
Figure~\ref{exp:synth} shows the suboptimality $\varepsilon_k$, averaged over 20 realizations of $\cA$ and $y$, as a function of the iteration $t$. As expected, FCNNMP achieves fastest convergence followed by PWNNMP, ANNMP and NNMP. 
The FNNOMP gets stuck instead. Indeed, \citep{Yaghoobi:2015ff} only show that their algorithm terminates and not that it converges.

\section{Proofs}
\subsection{Proof of Theorem~\ref{thm:PWNNMPlinear}}
\begin{proof}
Let us consider the case of PWNNMP.

Consider the atoms $\tilde{z}_k\in\cA$ and $\tilde{v}_k\in\cS$ selected by the \lmo at iteration $k$.
Due to the smoothness property of $f$ it holds that: 
\begin{align*}
f(\theta_{k+1})&\leq \min_{\gamma\in \mathbb{R}} f(\theta_{k}) + \gamma\langle\nabla f(\theta_{k}), \tilde{z}_k - \tilde{v}_k\rangle \\&+ \frac{L}{2}\gamma^2\|\tilde{z}_k - \tilde{v}_k\|^2.
\end{align*}
for a good step (i.e. $\gamma < \alpha_{v_k}$). Note that this also holds for variant 0 of Algorithm~\ref{algo:FCNNMP}.

We minimize the upper bound with respect to $\gamma$ setting $\gamma = -\frac{1}{L}\langle\nabla f(\theta_{k}),\frac{\tilde{z}_k - \tilde{v}_k}{\|\tilde{z}_k - \tilde{v}_k\|^2}\rangle$ . Subtracting $f(\theta^\star)$ from both sides and replacing the optimal $\gamma$ yields:
\begin{equation}\label{step:linearRateMPsmoothfast}
\varepsilon_{k+1}\leq\varepsilon_{k}-\frac{1}{2L}\left\langle\nabla f(\theta_{k}),\frac{\tilde{z}_k - \tilde{v}_k}{\|\tilde{z}_k-\tilde{v}_k\|}\right\rangle^2
\end{equation}
Now writing the definition of strong convexity, we have the following inequality holding for all  $\gamma\in\R$:
\begin{align*}
f(\theta_{k}+\gamma(\theta^\star-\theta_{k}))\geq f(\theta_{k})+\gamma\langle\nabla f(\theta_{k}),\theta^\star-\theta_{k}\rangle+\\\quad\gamma^2\frac{\mu}{2}\|\theta^\star-\theta_{k}\|^2
\end{align*}
We now fix $\gamma=1$ in the LHS and minimize with respect to $\gamma$ in the RHS:
\begin{align*}
\varepsilon_{k}\leq \frac{1}{2\mu} \left\langle \nabla f(\theta_{k}), \frac{\theta^\star-\theta_{k}}{\|\theta^\star-\theta_{k}\|} \right\rangle^2
\end{align*}
Combining this with \eqref{step:linearRateMPsmoothfast} yields: 
\begin{align}\label{step:linearRateCombinedfast}
\varepsilon_{k}-\varepsilon_{k+1}\geq \frac{\mu}{L}\frac{\big\langle \nabla f(\theta_{k}),\frac{\tilde{z}_k-\tilde{v}_k}{\|\tilde{z}_k-\tilde{v}_k\|}\big\rangle^2}{\big\langle \nabla f(\theta_{k}), \frac{\theta^\star-\theta_{k}}{\|\theta^\star-\theta_{k}\|}\big\rangle^2}\varepsilon_{k}
\end{align}%
We now use Theorem~\ref{thm:widthBound} to conclude the proof.
For Away-steps MP the proof is trivially extended since $2\min_{z\in\cA\cup-\cS}\langle \nabla f(\theta_k),z\rangle\leq \min_{z\in\cA, v\in\cS}\langle \nabla f(\theta_k),z-v\rangle$. Therefore, we obtain the same smoothness upper bound of the PWNNMP. The rest of the proof proceed as for PWNNMP with the additional $\frac12$ factor.

\paragraph{Sublinear Convergence for $\mu = 0$}
If $\mu = 0$ we have for PWNNMP:
\begin{align}\label{eq:PWsubBound}
f(\theta_{k+1})&\leq \min_{\gamma\leq \alpha_{v_k}} f(\theta_{k}) + \gamma\langle\nabla f(\theta_{k}), \tilde{z}_k - \tilde{v}_k\rangle \\&+ \frac{L}{2}\gamma^2\|\tilde{z}_k - \tilde{v}_k\|^2.
\end{align}
which can be rewritten for a good step (i.e. no clipping is necessary) as: 
\begin{eqnarray*}
 \varepsilon_{k+1} &\leq \varepsilon_{k} + \min_{\gamma\in[0,1]}\left\lbrace - \frac{\delta}{2} \gamma \varepsilon_{k} + \frac{\gamma^2}{2}L\rho^2\diam(\cA)^2\right\rbrace.\\
\end{eqnarray*}  
Unfortunately, $\alpha_{v_k}$ limits the improvement. On the other hand, we can repeat the induction only for the good steps.
Therefore:
\begin{eqnarray*}
 \varepsilon_{k+1}
 & \leq \varepsilon_{k} - \frac{2}{\delta' k+2}\delta' \varepsilon_{k} + \frac{1}{2}\left(\frac{2}{\delta' k+2}\right)^2 C,
\end{eqnarray*}
where we set $\delta' := \delta/2$, $C=L\rho^2\diam(\cA)^2$ and used $\gamma = \frac{2}{\delta' k+2} \in [0,1]$ (since it is a good step this produce a valid upper bound). Finally, we show by induction
 \begin{equation*}
 \varepsilon_k \leq \frac{4\left(\frac{2}{\delta} C + \varepsilon_0\right)}{t+4} = 2\frac{\left(\frac{1}{\delta'} C + \varepsilon_0\right)}{\delta' K(k)+2}
 \end{equation*}
where $K(k) \geq 0$ is the number of good steps at iteration $k$. 

When $K(k)=0$ we get $\varepsilon_0\leq \left(\frac{1}{\delta'}C+\varepsilon_0\right)$. Therefore, the base case holds. We now prove the induction step assuming $\varepsilon_k \leq \tfrac{2\left(\frac{1}{\delta'}C+\varepsilon_0\right)}{\delta' K(k)+2}$ as :
\begin{align*}
\varepsilon_{k+1} &\leq \left(1-\tfrac{2\delta'}{\delta' K(k) + 2}\right)\varepsilon_{k} + \tfrac12 C \left(\tfrac{2}{\delta' K(k)+2}\right)^2\\
&\leq \left(1-\tfrac{2\delta'}{\delta' K(k) + 2}\right)\tfrac{2\left(\frac{1}{\delta'}C+\varepsilon_0\right)}{\delta' K(k)+2} \\
&\quad+ \tfrac{1}{2}\left(\tfrac{2}{\delta' K(k)+2}\right)^2C + \tfrac{2}{(\delta' K(k)+2)^2}\delta'\varepsilon_0\\
&= \tfrac{2\left(\frac{1}{\delta'}C+\varepsilon_0\right)}{\delta' K(k)+2}\left(1-\tfrac{2\delta'}{\delta' K(k) +2}+\tfrac{\delta'}{\delta' K(k) +2}\right)\\
&\leq \tfrac{2\left(\frac{1}{\delta'}C+\varepsilon_0\right)}{\delta'(K(k)+1)+2}.
\end{align*}
For AFW the procedure is the same but the linear term of Equation~\ref{eq:PWsubBound} is divided by two. We proceed as before with the only difference that we call $\delta'=\delta/4$. \qedhere
\end{proof}

\subsection{Linear Convergence of FCNNMP}

\begin{proof}
The proof is trivial noticing that:
\begin{align*}
f(\theta_{k+1}) &=\min_{\theta\in\cone(\cS\cup s(\cA,r))}f(\theta)\\
&\leq \min_{\theta\in\cone(\cS\cup s(\cA,r))}g_{\theta_k}(\theta)\\
&\leq \min_{\gamma\leq\alpha_{v_k}}g_{\theta_k}(\theta_k+\gamma(z_k-v_k))
\end{align*}
which is the beginning of the proof of Theorem~\ref{thm:PWNNMPlinear}. Note that there are no bad steps for variant 1. Since we minimize $f$ at each iteration, $v_k$ is always zero and each step is unconstrained (i.e., no bad steps). \qedhere\qedhere
\end{proof}

\subsection{Cone and Pyramidal Widths}
The linear rate analysis is dominated by the fact that, similarly as in FW, many step directions are constrained (the ones pointing outside of the cone). So these arguments are in line with \cite{LacosteJulien:2015wj} and the techniques are adapted here. Lemma~\ref{lemma:faces} is a minor modification of [\cite{LacosteJulien:2015wj}, Lemma 5], see also their Figure 3. If the gradient is not feasible, the vector with maximum inner product must lie on a facet. Furthermore, it has the same inner product with the gradient and with its orthogonal projection on that facet. While first proof of Lemma~\ref{lemma:faces} follows \cite{LacosteJulien:2015wj}, we also give a different proof which does not use the KKT conditions. 

\begin{lemma}\label{lemma:faces}
Let $\theta$ be a reference point inside a polytope $\cK\in\gfaces(\cone(\cA))$ and $r\in\lin(\cK)$ is not a feasible direction from $\theta$. Then, a feasible direction in $\cK$ minimizing the angle with $r$ lies on a facet $\cK'$ of $\cK$ that includes $\theta$:
\begin{align*}
\max_{e\in\cone(\cK-\theta)} \langle r,\frac{e}{\|e\|}\rangle &= \max_{e\in\cone(\cK'-\theta)} \langle r,\frac{e}{\|e\|}\rangle \\&= \max_{e\in\cone(\cK'-\theta)} \langle r',\frac{e}{\|e\|}\rangle
\end{align*} 
where $r'$ is the orthogonal projection of $r$ onto $\lin(\cK')$
\begin{proof}
Let us center the problem in $\theta$.
We rewrite the optimization problem as:
\begin{align*}
\max_{e\in\cone(\cK),\|e\| = 1} \langle r,e\rangle
\end{align*}
and suppose by contradiction that $e$ is in the relative interior of the cone. By the KKT necessary conditions we have that $e^\star$ is collinear with $r$. Therefore $e^\star = \pm r$. Now we know that $r$ is not feasible, therefore the solution is $e^\star = -r$. By Cauchy-Schwarz we know that this solution is minimizing the inner product which is absurd. Therefore, $e^\star$ must lie on a face of the cone. The last equality is trivial considering that $r'$ is the orthogonal projection of $r$ onto $\lin(\cK')$.
\paragraph{Alternative proof}
This proof extends the traditional proof technique of \cite{LacosteJulien:2015wj} to infinitely many constraints. We also reported the FW inspired proof for the readers that are more familiar with the FW analysis.
Using proposition 2.11 of \cite{burger2003infinite} (we also use their notation) the first order optimality condition minimizing a function $J$ in a general Hilbert space given a closed set $\cK$ is that the directional derivative computed at the optimum $\bar u$ satisfy $J'(\bar u)v\geq 0$ $\forall v\in\cT(\cK-\bar u)$. Let us now assume that $\bar u $ is in the relative interior of $\cK$. Then $\cT(\cK-\bar u)=\cH$. Furthermore, $J'(\bar u)v = \langle r,v\rangle$ which is clearly not greater or equal than zero for any element of $\cH$.
\end{proof}
\end{lemma}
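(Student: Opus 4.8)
The plan is to prove Lemma~\ref{lemma:faces} by a compactness argument combined with first-order optimality, essentially transporting Lemma~5 of \cite{LacosteJulien:2015wj} to the conic setting. First I would re-center the problem at $\theta$ and work with the closed polyhedral cone $\mathcal{C} := \cone(\cK-\theta)$, whose linear span is $\lin(\cK)$ (centered at $\theta$); note $\mathcal{C}$ is closed because $\cK$ is a polytope containing $\theta$. Since $r\in\lin(\cK)$, the problem $\max\{\ip{r}{e} : e\in\mathcal{C},\ \norm{e}=1\}$ admits a maximizer $e^\star$: the feasible set is the intersection of the closed cone $\mathcal{C}$ with the unit sphere of the finite-dimensional space $\lin(\cK)$, hence compact. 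Moreover the hypothesis that $r$ is not a feasible direction from $\theta$ forces $r\neq 0$, so by Cauchy--Schwarz the optimal value is strictly positive.

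The key step is to show that $e^\star$ lies on the relative boundary of $\mathcal{C}$. I would argue by contradiction: if $e^\star$ were in the relative interior of $\mathcal{C}$, then the cone constraint is inactive at $e^\star$, so $e^\star$ is a stationary point of the linear functional $e\mapsto\ip{r}{e}$ restricted to the unit sphere of $\lin(\cK)$. The only stationary points of this functional are $\pm r/\norm{r}$, with $+r/\norm{r}$ the unique maximizer. But $r/\norm{r}\notin\mathcal{C}$ precisely because $r$ is not feasible from $\theta$, while $-r/\norm{r}$ is the minimizer, not the maximizer --- a contradiction. Hence $e^\star$ belongs to a proper face of $\mathcal{C}$, and therefore to some facet of $\mathcal{C}$; since the facets of $\cone(\cK-\theta)$ are exactly the sets $\cone(\cK'-\theta)$ for facets $\cK'$ of $\cK$ passing through $\theta$, we conclude $e^\star\in\cone(\cK'-\theta)$ for such a $\cK'$. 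The first claimed equality then follows by sandwiching, using $\cK'\subseteq\cK$: the maximum over $\cone(\cK-\theta)$ equals $\ip{r}{e^\star/\norm{e^\star}}$, which is at most the maximum over $\cone(\cK'-\theta)$, which in turn is at most the maximum over $\cone(\cK-\theta)$.

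For the second equality I would exploit that $\cone(\cK'-\theta)\subseteq\lin(\cK')$, valid because $\theta\in\cK'$, so that for every admissible $e$ we have $\ip{r}{e}=\ip{r'}{e}+\ip{r-r'}{e}=\ip{r'}{e}$, since $r-r'$ is orthogonal to $\lin(\cK')$ by definition of the projection $r'$; the two objectives agree term by term, hence so do their maxima. Finally, I would also record a coordinate-free variant that sidesteps the classical KKT conditions: invoke the first-order optimality condition for minimizing a Fr\'echet-differentiable functional over a closed set in a Hilbert space (Proposition~2.11 of \cite{burger2003infinite}), namely that the directional derivative at the optimum is non-negative along the tangent cone; were the optimum relatively interior, the tangent cone would be all of $\lin(\cK)$ and the condition $\ip{r}{v}\ge 0$ for all $v$ would be impossible for $r\neq 0$. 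I expect the main friction to be the combinatorial bookkeeping in the step ``proper face $\Rightarrow$ facet'' and in identifying faces of $\cone(\cK-\theta)$ with faces of $\cK$ through $\theta$; the optimality and orthogonality parts are short.
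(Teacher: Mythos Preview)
Your proposal is correct and follows essentially the same approach as the paper: the paper also centers at $\theta$, argues by contradiction that an interior maximizer would have to be $\pm r/\norm{r}$ via first-order optimality (KKT), rules out both options, and notes the second equality is immediate from orthogonality of $r-r'$; it even gives the same alternative via the first-order condition of \cite{burger2003infinite}. Your write-up adds a bit more detail (compactness for existence, the explicit sandwiching, the face-to-facet identification), and one minor remark: the strict positivity of the optimal value is true but is not ``by Cauchy--Schwarz''---it follows because $\cone(\cK-\theta)$ is full-dimensional in $\lin(\cK)$---though this claim is not actually needed for the contradiction.
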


Theorem~\ref{thm:widthBound} is the key argument to conclude the proof of Theorem~\ref{thm:PWNNMPlinear} from Equation~\eqref{step:linearRateCombinedfast}: we have to bound the ratio of those inner products with the cone width. 

\begin{theorem}\label{thm:widthBound}
Let $r = -\nabla f(\theta_k)$, $\theta\in\cone(\cA)$, $\cS$ be the active set and $z$ and $v$ obtained as in Algorithm~\ref{algo:ANNMPPWNNMP}. Then, using the notation from Lemma \ref{lemma:faces}:
\begin{align}\label{eq:pyrThm}
\frac{\langle r, d\rangle}{\langle r,\hat e\rangle}\geq \cw
\end{align}
where $d := z-v$, $\hat e = \frac{e}{\|e\|}$ and $e= \theta^\star -\theta_k$.
\begin{proof}
As we already discussed we can consider $\theta\in\conv(\cA)$ instead of $\theta_k\in\cone(\cA)$ since both the cone and the set of feasible direction are invariant to a rescaling of $\theta$ by a strictly positive constant.
Let us center all the vectors in $\theta$, then $\hat{e}$ is just a vector with norm 1 in some face.
As $\theta$ is not optimal, by convexity we have that $\langle r,\hat e\rangle>0$. By Cauchy-Schwartz we know that $\langle r,\hat e\rangle\leq \|r\|$ since $\langle r,\hat e\rangle>0$ and $\|\hat e\| = 1$. 
By definition of $d$ we have:
\begin{align*}
\langle\frac{r}{\|r\|},d\rangle &= \max_{z\in\cA,v\in\cS} \langle\frac{r}{\|r\|},z-v\rangle\\
&\geq \min_{\cS\subset \cS_\theta}\max_{z\in\cA,v\in\cS} \langle\frac{r}{\|r\|},z-v\rangle\\
&=PdirW(\cA,r,\theta).
\end{align*}

Now, if $r$ is a feasible direction from $\theta$ Equation \eqref{eq:pyrThm} is proved (note that $PdirW(\cA,r,\theta) \geq \cw$ as $\conv(\cA) \in \gfaces(\cone(\cA))$ and $\conv(\cA) \cap \cA = \cA$). If $r$ is not a feasible direction it means that $\theta$ is on a face of $\cone(\cA)$ and $r$ points to the exterior of $\cone(\cA)$ from $\theta$. 
We then project $r$ on the faces of $\cone(\cA)$ containing $\theta$ until it is a feasible direction. We start by lower bounding the ratio of the two inner products replacing $\hat{e}$ with a vector of norm 1 in the cone that has maximum inner product with $r$ (with abuse of notation we still call it $\hat{e}$).
We then write:
\begin{align*}
\frac{\langle r, d\rangle}{\langle d,\hat e\rangle}\geq \left(\max_{z\in\cA,v\in\cS} \langle{r},z-v\rangle\right)\cdot\left(\max_{e\in\cone(\cA-\theta)}\langle r,\frac{e}{\|e\|}\rangle\right)^{-1}
\end{align*}
Let us assume that $r$ is not feasible but without loss of generality is in $\lin(\cA)$ since orthogonal components to $\lin(\cA)$ does not influence the inner product with elements in $\lin(\cA)$. 

Using Lemma~\ref{lemma:faces} we know that:
\begin{align*}
\max_{e\in\cone(\cK-\theta)} \langle r,\frac{e}{\|e\|}\rangle &= \max_{e\in\cone(\cK'-\theta)} \langle r,\frac{e}{\|e\|}\rangle \\&= \max_{e\in\cone(\cK'-\theta)} \langle r',\frac{e}{\|e\|}\rangle
\end{align*} 
Let us now consider the reduced cone $\cone(\cK')$ as $r\in\lin(\cK')$. For the numerator we obtain:
\begin{align*}
\max_{z\in\cA,v\in\cS} \langle{r},z-v\rangle
&\stackrel{\cK'\subset\cA}{\geq} \max_{z\in\cK'} \langle{r},z\rangle + \max_{v\in\cS} \langle{-r},v\rangle
\end{align*}
Putting numerator and denominator together we obtain:
\begin{align*}
\frac{\langle r, d\rangle}{\langle d,\hat e\rangle}\geq \left(\max_{\substack{z\in\cK'\\v\in\cS}}\langle{r'},z-v\rangle\right)\cdot\left(\max_{e\in\cone(\cK'-\theta)} \langle r',\frac{e}{\|e\|}\rangle\right)^{-1}
\end{align*}
Note that $\cS\subset\cK'$. Indeed, $\theta$ is a proper convex combination of the elements of $\cS$ and $\theta\in\cK'\subset\conv(\cA)$.
Now if $r'$ is a feasible direction in $\cone(\cK'-\theta)$ we obtain the cone width since $\cone(\cK')$ is a face of $\cone(\cA)$. If not we reiterate the procedure projecting onto a lower dimensional face $\cK^{''}$. Eventually, we will obtain a feasible direction. Since $\langle r,\hat e\rangle\neq 0$ we will obtain $r_{final}\neq 0$.  
\end{proof}
\end{theorem}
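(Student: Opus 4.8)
The plan is to transplant the Lacoste-Julien--Jaggi pyramidal-width argument to the conic setting, using the rescaling invariance of the cone and the generating-face structure $\gfaces(\cone(\cA))$. First I would normalize: since $\cone(\cA)$ and the set of feasible directions emanating from a point are both invariant under multiplication of the reference point by a strictly positive scalar, I may center all vectors at $\theta$ and work with $\theta\in\conv(\cA)$ in place of $\theta_k\in\cone(\cA)$. Because $\theta$ is not optimal, convexity of $f$ gives $\langle r,\hat e\rangle>0$ (with $e=\theta^\star-\theta$), so the ratio in \eqref{eq:pyrThm} is well defined and strictly positive, and Cauchy--Schwarz gives $\langle r,\hat e\rangle\le\|r\|$. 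I would then also replace $\hat e$ in the denominator by the unit vector in $\cone(\cA-\theta)$ that maximizes the inner product with $r$ (this only enlarges the denominator), so it suffices to lower-bound $\big(\max_{z\in\cA,v\in\cS}\langle r,z-v\rangle\big)\big/\big(\max_{e\in\cone(\cA-\theta)}\langle r,e/\|e\|\rangle\big)$ by $\cw$.

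For the numerator I would use that $d=z-v$ is exactly the pairwise direction chosen in Algorithm~\ref{algo:ANNMPPWNNMP}, so $\langle r/\|r\|,d\rangle=\max_{z\in\cA,\,v\in\cS}\langle r/\|r\|,z-v\rangle\ge\min_{\cS'\in\cS_\theta}\max_{z\in\cA,\,v\in\cS'}\langle r/\|r\|,z-v\rangle = PdirW(\cA,r,\theta)$ by definition of the pyramidal directional width (using that $z$ realizes $s(\cA,r)=\max_{s\in\cA}\langle r/\|r\|,s\rangle$). The core of the argument is then the case split on whether $r$, after discarding components orthogonal to $\lin(\cA)$, is a feasible direction from $\theta$. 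If it is feasible, the maximizing face in the denominator is all of $\conv(\cA)$, which is itself an element of $\gfaces(\cone(\cA))$ with $\conv(\cA)\cap\cA=\cA$, so the ratio is at least $PdirW(\cA,r,\theta)\ge\cw$. If $r$ is not feasible, $\theta$ lies on a proper face, and I would invoke Lemma~\ref{lemma:faces}: a feasible direction maximizing the angle with $r$ lies on a facet $\cK'$ of the current generating face containing $\theta$, and the maximum over $\cone(\cK'-\theta)$ is unchanged if $r$ is replaced by its orthogonal projection $r'$ onto $\lin(\cK')$.

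The crucial structural observation is that the active set stays inside the facet: since $\theta$ is a \emph{proper} convex combination of the atoms of $\cS$ (a property maintained by the algorithm) and $\theta\in\cK'$, every element of $\cS$ must also lie on $\cK'$. Hence the numerator can be lower-bounded by $\max_{z\in\cK'}\langle r,z\rangle+\max_{v\in\cS}\langle -r,v\rangle\ge\max_{z\in\cK',\,v\in\cS}\langle r',z-v\rangle$ (here $\cS\subset\cK'\subset\cA$ lets us drop to $r'$ inside the inner products), while the denominator becomes a maximum over $\cone(\cK'-\theta)$ against $r'$. I would iterate this projection-onto-faces step: each pass strictly lowers the affine dimension, so after finitely many steps the current projected direction $r^{(m)}$ is feasible on some generating face $\cK^{(m)}\in\gfaces(\cone(\cA))$, at which point the ratio is bounded below by $PdirW(\cK^{(m)}\cap\cA,\,r^{(m)},\,\theta)\ge\cw$ by definition~\eqref{def:conewidth} of the cone width. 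One checks along the way that $r^{(m)}\neq 0$, using $\langle r,\hat e\rangle\neq 0$.

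The main obstacle I expect is the bookkeeping of the iterated face projection: at every level one must re-verify (i) that the active set $\cS$ remains contained in the new face — which hinges on $\theta$ being a proper convex combination of $\cS$; (ii) that the numerator inequality genuinely survives the substitution $r\mapsto r'$ and the restriction $z\in\cK'$; and (iii) that the terminating face is in $\gfaces(\cone(\cA))$ so the definition of $\cw$ applies rather than only that of $\mathrm{PWidth}$. A secondary subtlety is that $\cA$ need not be finite, so the "a maximizing feasible direction lies on a facet" step cannot rely on the polytope KKT conditions; here the alternative, Burger-style first-order optimality proof of Lemma~\ref{lemma:faces} is what carries the argument through in general Hilbert space.
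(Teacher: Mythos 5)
Your proposal is correct and follows essentially the same route as the paper's proof: rescaling to $\conv(\cA)$, lower-bounding the numerator by $PdirW(\cA,r,\theta)$, the case split on feasibility of $r$, iterated projection onto facets via Lemma~\ref{lemma:faces} with the observation that $\cS$ stays inside each facet because $\theta$ is a proper convex combination of its elements, and termination at a generating face where the definition of $\cw$ applies. The subtleties you flag (infinite atom sets handled by the Hilbert-space first-order optimality argument, and $r_{final}\neq 0$ from $\langle r,\hat e\rangle\neq 0$) are exactly the ones the paper addresses.
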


\paragraph{Lemma~\ref{lem:mdw}}
\textit{If the origin is in the relative interior of $\conv(\cA)$ with respect to its linear span, then $\cone(\cA)=\lin(\cA)$ and $\cw= \mdw$.}
\begin{proof}
Let us first rewrite the definition of cone width:
\begin{align*}
\cw:= \min_{\substack{\cK\in \gfaces(\cone(\cA))\\ \theta\in \cK \\ r\in\cone(\cK-\theta)\setminus \lbrace 0\rbrace}} PdirW(\cK\cap\cA,r,\theta).
\end{align*}
The minimum is over all the feasible directions of the gradient from every point in the domain.
It is not restrictive to consider $r$ parallel to $\lin(\cA)$ (because the orthogonal component has no influence). Therefore, from every point $\theta\in\lin(\cA)$ every $r\in\lin(\cA)$ is a feasible direction.
The geometric constant then becomes:
\begin{align*}
\cw=\min_{\substack{\cK\in \gfaces(\cone(\cA)) \\ \theta\in\cK\\r\in\lin(\cA)\setminus \lbrace 0\rbrace}} PdirW(\cK\cap\cA,r,\theta)
\end{align*}

 Let us now assume by contradiction that for any $\cK\in\gfaces$ we have: 
\begin{align}\label{eq:minxface}
0\not\in \argmin_{\theta\in\cK}\min_{r\in\lin(\cA)\setminus\lbrace 0\rbrace}PdirW(\cK\cap\cA,r,\theta)
\end{align} 
Therefore, $\exists v\in\cS$ such that $v\neq 0$ for any of the $\theta$ minimizing~\eqref{eq:minxface}. By definition, we have $0\in\cS$, which yields $\max_{v\in\cS}\langle r,-v\rangle\geq0$ for every $r$. Therefore, $\langle r,z-v\rangle\geq\langle r,z\rangle$ which is absurd because we assumed zero was in the set of minimizers of \eqref{eq:minxface}.
 So $0$ minimize the cone directional width which yields $\cS_\theta = \lbrace 0 \rbrace$ and $v=0$.
 In conclusion we have:
\begin{align*}
\cw = \min_{d\in\lin(\cA)} \max_{z\in\cA} \langle \frac{d}{\|d\|},z\rangle = \mdw
\end{align*}
\end{proof}



\chapter{An Optimization View on Boosting Variational Inference}\label{cha:boostingVI}
\looseness=-1In this chapter, we discuss an optimization perspective on Boosting Variational Inference Algorithm, which allows us to provide a theoretical understanding of this problem and algorithmic simplifications. The presented work is partially based on \citep{LocKhaGhoRat18} and \citep{locatello2018boosting} and was developed in collaboration with Gideon Dresdner, Rajiv Khanna, Isabel Valera, Joydeep Ghosh, and Gunnar R\"atsch. For the experimental evaluation, we only report the proof of concept results on the synthetic data that were performed by Francesco Locatello. The other experiments on real data were performed by Rajiv Khanna in \citep{LocKhaGhoRat18} and Gideon Dresdner in \citep{locatello2018boosting} (Francesco Locatello and Gideon Dresdner contributed equally to this publication). We refer the reader interested in the experiments to those papers.

\section{Variational Inference and Boosting}
Bayesian inference involves computing the posterior distribution given a model and the data.
More formally, we choose a distribution for our observations $\rvx$ given unobserved latent variables $\rvz$, called the likelihood $p(\rvx | \rvz)$, and a prior distribution over the latent variables $p(\rvz)$. Our goal is to infer the posterior, $p(\rvz | \rvx)$~\cite{Blei:2016vr}.
Bayes theorem relates these three distributions by expressing the posterior as equal to the product of prior and likelihood divided by the normalization constant, $p(\rvx)$.
The posterior is often intractable because the normalization constant $p(\rvx) = \int_\sfZ p(\rvx | \rvz) p(\rvz)d\rvz$ requires integrating over the full latent variable space.

The goal of VI is to find a \emph{tractable} approximation $q(\rvz)$ of $p(\rvz|\rvx)$.
From an optimization viewpoint, one can think of the posterior as an unknown function $p(\rvz|\rvx):\sfZ\rightarrow \R^+_{>0}$ where $\sfZ$ is a measurable set.
The task of VI is to find the best approximation, in terms of  KL divergence, to this unknown function within a family of tractable distributions $\cQ$.
Therefore, VI can be written as the following optimization problem:
\begin{align}\label{eq:VarInfProb}
\min_{q(\rvz)\in\cQ} \dkl(q(\rvz)\|p(\rvz|\rvx)).
\end{align}
Obviously, the quality of the approximation directly depends on the expressivity of the family $\cQ$. However, as we increase the complexity of  $\cQ$, the optimization problem~\eqref{eq:VarInfProb} also becomes more complex. 

The objective in Equation~\eqref{eq:VarInfProb} requires access to an unknown function $p(\rvz|\rvx)$ and is therefore not computable. Equivalently, VI maximizes instead the so-called Evidence Lower BOund (ELBO)~\cite{Blei:2016vr}: 
\begin{equation}
-\bbE_q \left[ \log q(\rvz)\right] + \bbE_q \left[ \log p(\rvx,\rvz)\right].
\end{equation}

Intuitively, variational inference aims at projecting the true posterior on the set of tractable densities $\cQ$ (for example, factorial in the mean-field case). 
There have been several efforts to improve the approximation while retaining a tractable variational family. 
Relevant to this dissertation, one could consider approximating by a mixture of \eg Gaussian distributions and allowing more than just isotropic structures.  The underlying intuition is that the family of mixtures is more expressive than any single distribution composing the mixture.
Continuing our example, a mixture of isotropic Gaussian distributions is already a much more powerful and flexible model than a single isotropic Gaussian. In fact, it is flexible enough to model any distribution arbitrarily well~\cite{parzen1962estimation}. While there has been significant algorithmic  and empirical development for studying variational inference using mixture models~\cite{Miller:2016vt,Guo:2016tg,Li:2000vt}, the theoretical understanding is limited. The boosting approach described in~\cite{Guo:2016tg,Miller:2016vt} explicitly aims at replacing $\cQ$ with $\conv(\cQ)$ thereby expanding the capacity of the variational approximation to the class of mixtures of the base family $\cQ$.

\section{Problem Formulation}
In this chapter we first analyze the problem template proposed by \citep{Guo:2016tg} for Boosting Variational Inference. We establish a connection between their approach and the Frank-Wolfe algorithm, which allows us to consider closed form stepsize estimates and corrective variants enabling faster convergence in practice as well as better rates in some specific case.

Returning to the problem formulation, \citep{Guo:2016tg} proposes to optimize the following template iteratively:
\begin{equation}\label{eq:minklQconv}
\min_{q(\rvz)\in\conv(\cQ)}\dkl(q(\rvz)||p(\rvx,\rvz)).
\end{equation}
For simplicity in the following we write $\dkl(q)$ instead of $\dkl(q(\rvz)||p(\rvx,\rvz))$ and $\dkl(q||p_\rvx)$ instead of $\dkl(q(\rvz)||p(\rvz|\rvx))$.
The boosting approach to this problem consists of specifying an iterative procedure, in which the problem is solved via the greedy combination of solutions from simpler surrogate problems. This approach was first proposed in~\cite{Guo:2016tg}, where they iteratively enrich the approximation of the ELBO by minimizing a Taylor approximation to the KL divergence:
$$
\dkl(q_{k+1}) = \dkl(q_{k}) + \alpha_k \ip{log(q_k/p)}{s_k}\,.
$$
To find the next component, this suggest to solve:
$$
\min_{s\in\cQ} \ip{-log(q_k/p)}{s}
$$
The issue with this approach is that the solution to this problem is degenerate. As a fix,~\cite{Guo:2016tg} propose to regularize this linear problem by the euclidean norm of $s$, which in turn relates to its covariance, preventing degenerate solutions. The iterate is then updated performing a convex combination between the old iterate $q_k$ and the new distribution. For the boosting analysis of \citep{zhang2003sequential} to apply, they argue that densities should be bounded from below, for example truncating their support. Performing this truncation, one can apply the classical Frank-Wolfe analysis~\citep{jaggi2013revisiting} to obtain an explicit convergence rate up to the constant approximation error induced by truncation (the true posterior is not truncated).
This error can be quantified as follows. Let:
\begin{align*}
p_{\sfA}(\rvz | \rvx) =  \begin{cases} \frac{p(\rvz| \rvx)}{\int_\sfZ p(\rvz | \rvx)\delta_\sfA(\rvz)d\rvz}, & \mbox{if } \rvz\in\sfA \\ 0, & \mbox{otherwise }\end{cases}
\end{align*}
Where $\delta_\sfA(\rvz)$ is the delta set function. Using the definition of $p_{\sfA}(\rvz)$ we have that:
\begin{align}\label{eq:klstar}
\dkl(p_\sfA(\rvz| \rvx)||p(\rvz| \rvx)) &= \int_\sfA p_\sfA(\rvz| \rvx) \log\frac{p_\sfA(\rvz| \rvx)}{p(\rvz| \rvx)} d\rvz \nonumber\\
 &= \int_\sfA p_\sfA(\rvz| \rvx) \log\frac{p(\rvz| \rvx)}{p(\rvz| \rvx)\cdot \int_\sfZ p(\rvz | \rvx)\delta_\sfA(\rvz)d\rvz } d\rvz \nonumber \\
 &=   -\log \int_\sfZ p(\rvz | \rvx)\delta_\sfA(\rvz)d\rvz
\end{align}
This error represents a trade-off between the smoothness of the objective (and therefore the rate of the boosting algorithm) and the approximation quality. Better rates might be achieved following the analysis of \citep{7472875} even without requiring truncation. We did not explore this direction, which is very promising to obtain rates in more realistic settings.

\section{The Practitioner's perspective}
Imagine a practitioner that, after designing a Bayesian model and using a VI algorithm to approximate the posterior, finds that the approximation is too poor to be useful. Standard VI does not give the practitioner the option to trade additional computational cost for a better approximation. 
As a result, the practitioner may change the variational family for a more expressive one and restart the optimization from scratch. 

Boosting is an interesting alternative as it allows finding the optimal approximating mixture adding components iteratively~\cite{Guo:2016tg, Miller:2016vt}. Further, we studied trade-off bounds for the number of iterations vs.~approximation quality. Unfortunately, these greedy algorithms require a specialized, restricted variational family to ensure convergence and, therefore, a \emph{white box} implementation of the boosting subroutine.
These restrictions include that~(a)~each potential component of the mixture has a bounded support \ie,~truncated densities, and~(b)~the subroutine should not return degenerate distributions. These assumptions require specialized care during implementation, and therefore, one cannot simply take existing VI solvers and boost them. This makes boosting VI unattractive for practitioners. Instead, we would argue that the ideal algorithm for boosting VI uses a generic VI solver as a subroutine which is iteratively queried to fit what the current approximation of the posterior is not yet capturing.

\section{Revisiting the Curvature}
To boost VI using FW in practice, we need to ensure that the assumptions are not violated. 
Assume that $\cB\subset \cQ$ is the set of probability density functions with compact parameter space as well as bounded infinity norm and $L2$ norm. 
These assumptions on the search space are easily justified since it is reasonable to assume that the posterior is not degenerate (bounded infinity norm) and has modes that are not arbitrarily far away from each other (compactness).
Under these assumptions, the optimization domain is closed and bounded. It is simple to show that the solution of the \lmo problem over $\conv(\cB)$ is an element of $\cB$. Therefore, $\cB$ is closed.
The troublesome condition that needs to be satisfied for the convergence of FW is smoothness. 
A bounded curvature is however sufficient to guarantee convergence \citep{jaggi2013revisiting}.
\begin{equation}
\label{def:Cf}
\Cf := \sup_{\substack{s\in\cA,\, q \in \conv(\cA)
 \\ \gamma \in [0,1]\\ y = q + \gamma(s- q)}} \frac{2}{\gamma^2} D(y,q),
\end{equation}
where 
\begin{equation*}
D(y,q) :=f(y) - f(q)- \langle y -q, \nabla f(q)\rangle.
\end{equation*}
It is known that $\Cf\leq L\diam(\cA)^2$ if $f$ is $L$-smooth over $\conv(\cA)$.
This condition is weaker than smoothness, which was assumed by~\cite{Guo:2016tg} and our previous analysis in \citep{LocKhaGhoRat18}. For the KL divergence, the following holds.

\begin{theorem}\label{boundedCf}
$\Cf$ is bounded for the KL divergence if the parameter space of the densities in $\cB$ is bounded.
\end{theorem}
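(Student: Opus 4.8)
The plan is to show that the curvature constant $\Cf$ for the KL divergence objective $f(q) = \dkl(q \,\|\, p(\cdot,\rvx))$ over $\conv(\cB)$ is finite, using the characterization $\Cf \leq L \diam(\cB)^2$ whenever $f$ is $L$-smooth on $\conv(\cB)$. So the core of the argument reduces to bounding two quantities: the diameter of $\conv(\cB)$ and a smoothness constant of $f$ over that set.

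First I would unpack the objective. Writing $f(q) = \int q(\rvz) \log q(\rvz)\, d\rvz - \int q(\rvz) \log p(\rvx,\rvz)\, d\rvz + \text{const}$, the second term is linear in $q$ and hence contributes nothing to the curvature (its Hessian vanishes), while the const term is irrelevant. So it suffices to control the curvature of the negative entropy functional $h(q) := \int q(\rvz) \log q(\rvz)\, d\rvz$. Along a segment $y = q + \gamma(s - q)$ with $q, s \in \conv(\cB)$, I would compute the second-order remainder $D(y,q)$ directly; by Taylor's theorem in the direction $s - q$, this equals $\tfrac{\gamma^2}{2}\langle s-q, \nabla^2 h(\xi)(s-q)\rangle$ for some $\xi$ on the segment, and the Hessian of the negative entropy acts as $\langle u, \nabla^2 h(\xi) u\rangle = \int \frac{u(\rvz)^2}{\xi(\rvz)}\, d\rvz$. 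Thus $\frac{2}{\gamma^2} D(y,q) = \int \frac{(s(\rvz)-q(\rvz))^2}{\xi(\rvz)}\, d\rvz$, and I need a uniform upper bound on this over all admissible $q, s, \xi$.

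Here is where the assumptions defining $\cB$ enter decisively. Since $\cB$ has bounded parameter space, the density values and supports are controlled: in particular every $q \in \cB$ (and hence every convex combination $\xi \in \conv(\cB)$) is bounded below by a strictly positive constant $\varepsilon > 0$ on a common compact support region $\sfA$ — this is exactly the role of the compact parameter space, which prevents modes from escaping to infinity and prevents variances from collapsing or blowing up, so $\inf_{\rvz \in \sfA}\inf_{\xi \in \conv(\cB)} \xi(\rvz) \geq \varepsilon$. Meanwhile the bounded infinity norm gives $|s(\rvz) - q(\rvz)| \leq 2M$ uniformly, and bounded $L2$ norm (together with compact support) gives $\int (s-q)^2 \leq \|s-q\|_2^2 \leq 4B^2$. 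Combining, $\int \frac{(s-q)^2}{\xi} \leq \frac{1}{\varepsilon}\int (s-q)^2 \leq \frac{4B^2}{\varepsilon} < \infty$, which bounds $\Cf$. (Equivalently one reads off $L \leq 1/\varepsilon$ as an $L2$-smoothness constant for $h$ on $\conv(\cB)$ and $\diam(\cB)^2 \leq 4B^2$, then invokes $\Cf \leq L\diam(\cB)^2$.) I would also note the small technical point that the linear term $-\int q \log p(\rvx,\rvz)$ needs $\log p$ to be integrable against the densities in $\cB$ — but this is finite since the parameter space is bounded and the integration is over a compact region, so it is genuinely a constant-plus-linear contribution with zero curvature.

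The main obstacle I anticipate is making the uniform lower bound $\xi(\rvz) \geq \varepsilon$ fully rigorous: one must argue that compactness of the parameter space of the base family $\cB$ translates, via continuity of the density map $(\text{params}) \mapsto q(\rvz)$ and an argument that the effective supports stay within a fixed compact set, into a genuine positive infimum of $\xi$ over a compact region and over all of $\conv(\cB)$ simultaneously. For a concrete family like (truncated) Gaussians with parameters in a compact set this is a routine continuity-and-compactness argument, but stating it cleanly for an abstract $\cB$ requires care about what "bounded parameter space" is assumed to buy us; I would lean on the stated assumptions (compact parameter space, bounded $\infty$-norm and $L2$-norm) as a package and spell out the continuity step. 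Everything else — the Hessian computation for negative entropy, the Taylor remainder, and the final estimate — is mechanical.
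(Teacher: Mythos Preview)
Your argument hinges on producing a uniform lower bound $\xi(\rvz)\geq \varepsilon>0$ for every $\xi\in\conv(\cB)$ on some common support region, and that is precisely the assumption the theorem is designed to \emph{dispense with}. The stated hypotheses on $\cB$ are only: compact parameter space, bounded $L^\infty$ norm, bounded $L^2$ norm. None of these gives a positive infimum of the density over a fixed region unless you additionally assume compact (truncated) support---and that is not assumed here. For a concrete counterexample, take untruncated Gaussians on $\R$ with mean in $[-1,1]$ and variance in $[1,2]$: the parameter set is compact, the sup-norm and $L^2$ norm are bounded, yet every such density decays to $0$, so no uniform $\varepsilon$ exists. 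Your final paragraph flags this as the ``main obstacle'' and then effectively assumes it away; but it is a real obstruction to your route, not a technicality.

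The paper avoids the lower-bound issue by a different device. It first observes that the Bregman remainder for the KL objective simplifies exactly to $D(y,q)=\dkl(y\,\|\,q)$ with $y=q+\gamma(s-q)$, so $\Cf=\sup_{s,q,\gamma}\tfrac{2}{\gamma^2}\dkl(y\,\|\,q)$. For fixed $s,q$ this is a continuous function of $\gamma$ on $(0,1]$, with value $2\dkl(s\,\|\,q)$ at $\gamma=1$ (finite because the parameter space is bounded). The only delicate point is $\gamma\to 0$, which the paper handles by two applications of L'H\^opital's rule, obtaining a limit of the form $\int(s-q)^2$, bounded via the $L^2$ assumption. Continuity on the compact interval $[0,1]$ then gives boundedness. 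The conceptual difference is that this argument controls the curvature \emph{along each segment} directly, using finiteness of $\dkl(s\,\|\,q)$ at the far endpoint (a much weaker consequence of bounded parameters than a density lower bound) and a limiting computation at the near endpoint, rather than trying to bound the Hessian $\int u^2/\xi$ globally.
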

\paragraph{Discussion}
A bounded curvature for the $\dkl$ can be obtained as long as:
\begin{align*}
\sup_{\substack{s\in\cB,\, q \in \conv(\cB)
 \\ \gamma \in [0,1]\\ y = q + \gamma(s- q)}} \frac{2}{\gamma^2}\dkl(y\| q)
\end{align*}
is bounded. The proof sketch proceeds as follows. For any pair $s$ and $q$, we need to check that $\frac{2}{\gamma^2}\dkl(y\| q)$ is bounded as a function of $\gamma\in[0,1]$. The two limit points, $\dkl(s\| q)$ for $\gamma = 1$ and $\|s-q\|_2^2$ for $\gamma = 0$, are both bounded for any choice of $s$ and $q$. Hence, the $\Cf$ is bounded as it is a continuous function of $\gamma$ in $[0,1]$ with bounded function values at the extreme points. $\dkl(s\| q)$ is bounded because the parameter space is bounded. $\|s-q\|_2^2$ is bounded by the triangle inequality and bounded $L2$ norm of the elements of $\cB$. This result is interesting since it makes the complicated truncation described before unnecessary assuming a bounded parameter space which is arguably more practical.
\vspace{-1mm}
\section{The residual ELBO}
\vspace{-1mm}
Note that the \lmo is a constrained linear problem in a function space. A complicated heuristic is developed in \cite{Guo:2016tg} to deal with the fact that the \textit{unconstrained} linear problem they consider has a degenerate solution. 
In \cite{LocKhaGhoRat18}, we explored using projected gradient descent on the parameters of $s$ with a constraint on the infinity norm of $s$. 
Such a constraint is hardly practical. Indeed, one must compute the maximum value of $s$ as a function of its parameters, which depends on the particular choice of $\cB$. In contrast, the entropy is a general term that can be approximated via sampling and allows for black box computation. We relate infinity norm and the entropy in the following lemma. 
\begin{lemma}\label{thm:bounded_entropy}
A density with bounded infinity norm has entropy bounded from below. The converse is true for many of the distributions which are commonly used in VI (for example Gaussian, Cauchy and Laplace).
\end{lemma}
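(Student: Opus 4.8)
The plan is to prove the two directions separately: the forward direction is a one-line pointwise estimate, and the converse rests on the behaviour of differential entropy under location--scale reparametrisations.

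First I would establish that a bounded infinity norm forces the entropy to be bounded below. Let $p$ be a density with $M := \|p\|_\infty < \infty$. Then $p(z) \le M$ for a.e.\ $z$, so $\log\frac{1}{p(z)} \ge \log\frac{1}{M} = -\log M$ on the support of $p$. Writing the (differential) entropy as $H(p) = \int p(z)\log\frac{1}{p(z)}\,dz$ and integrating this pointwise inequality against the probability measure $p\,dz$ yields
\[
H(p) \;\ge\; -\log M \int p(z)\,dz \;=\; -\log M \;=\; -\log\|p\|_\infty \;>\; -\infty .
\]
The integral is well defined in $[-\log M,\,+\infty]$ because $-x\log x$ is bounded below on $[0,M]$ and non-negative on $[0,1]$, so the only case left open is $H(p)=+\infty$, which still gives a lower bound. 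This already proves the first claim, with an explicit constant.

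For the converse I would restrict, as the statement does, to the parametric families used as boosting components, and exploit that these are location--scale families built from a fixed standard density $f_0$ with finite peak $\|f_0\|_\infty$ and finite entropy $H(f_0)$: $p_{\mu,\sigma}(z)=\tfrac1\sigma f_0\!\big(\tfrac{z-\mu}{\sigma}\big)$ in the scalar case, and $p_{\mu,\Sigma}(z)=\det(\Sigma)^{-1/2} f_0(\Sigma^{-1/2}(z-\mu))$ in the multivariate Gaussian case. Two elementary identities then hold: the peak scales as $\|p_{\mu,\sigma}\|_\infty=\|f_0\|_\infty/\sigma$ (resp.\ $\|f_0\|_\infty/\sqrt{\det\Sigma}$), while the change of variables $z\mapsto\sigma z+\mu$ gives $H(p_{\mu,\sigma})=H(f_0)+\log\sigma$ (resp.\ $H(f_0)+\tfrac12\log\det\Sigma$). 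Adding these eliminates the scale parameter and produces the invariant relation
\[
H(p) + \log\|p\|_\infty \;=\; H(f_0) + \log\|f_0\|_\infty \;=:\; c_{\cB},
\]
a constant that depends only on the family, not on its parameters. Hence $\inf_{p\in\cB} H(p) = c_{\cB} - \log\big(\sup_{p\in\cB}\|p\|_\infty\big)$, so the entropy is bounded below over $\cB$ iff $\|p\|_\infty$ is bounded over $\cB$. One then only has to check finiteness of $\|f_0\|_\infty$ and $H(f_0)$ for the named families, which is immediate: $\|f_0\|_\infty = (2\pi)^{-d/2},\ \tfrac1\pi,\ \tfrac12$ and $H(f_0) = \tfrac d2\log(2\pi e),\ \log(4\pi),\ 1+\log 2$ for the Gaussian, Cauchy and Laplace respectively, giving $c_{\cB}=\tfrac d2,\ \log 4,\ 1$.

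The only (mild) obstacle, which I would flag rather than belabour, is that the converse genuinely fails for arbitrary densities of bounded-below entropy: adding a tall, narrow spike carrying vanishing mass to a broad base can push $\|p\|_\infty$ arbitrarily high while keeping $H(p)$ large, since the spike's negative contribution to $-\int p\log p$ is proportional to its (vanishing) mass. The location--scale restriction is exactly what forbids such ``spiky'' perturbations — the shape $f_0$ is frozen and only the scale moves — which is why the statement is phrased for the parametric families typically used to build the boosting mixture rather than for $\conv(\cQ)$ at large. Once this is noted, the remaining work is the routine verification of the two scaling identities and of boundedness/finite-entropy of $f_0$.
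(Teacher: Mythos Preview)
Your proof is correct. The forward direction is the standard one-line argument, and your treatment of the converse via the location--scale invariant $H(p)+\log\|p\|_\infty$ is clean and verifies all three named families at once.

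The paper does not actually give a proof of this lemma. In the surrounding discussion it only remarks that ``a bounded entropy does not always imply a bounded infinity norm'' in general, and that ``a simple verification is sufficient to show that it holds in several cases of interest,'' leaving that verification implicit. Your argument supplies exactly what the paper omits, and does so in a unified way rather than by checking the Gaussian, Cauchy and Laplace separately: by freezing the shape $f_0$ and observing that both $H$ and $\log\|\cdot\|_\infty$ shift additively in $\log\sigma$ (with opposite signs), you reduce the whole family to the single constant $c_{\cB}$. Your closing counterexample for general densities (tall thin spike on a broad base) is also more explicit than the paper's bare assertion that the converse can fail.
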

In general, a bounded entropy does not always imply a bounded infinity norm. While this is precisely the statement we would need, a simple verification is sufficient to show that it holds in several cases of interest. We assume that $\cB$ is a family for which bounded entropy implies bounded infinity norm. Therefore, we can constrain the optimization problem with the entropy instead of the infinity norm. We call $\bar\cB$ the family $\cB$ without the infinity norm constraint. At every iteration, we need to solve: 
\begin{align*}
\argmin_{\substack{s\in\bar\cB\\H(s)\geq -M}} \left\langle s, \log\left(\frac{q_k}{p}\right)\right\rangle
\end{align*}
Note that the constraint on the entropy is crucial here. Otherwise, the solution of the \lmo would be a degenerate distribution as also argued in~\cite{Guo:2016tg}. 

We now replace this problem with its regularized form using Lagrange multipliers and solve for $s$ given a fixed value of $\lambda$: 
\begin{align}
\argmin_{\substack{s\in\bar\cB}} \left\langle s, \log\left(\frac{q_k}{p}\right)\right\rangle + \lambda\left( -H(s)- M\right) &= \argmin_{\substack{s\in\bar\cB}} \left\langle s, \log\left(\frac{q_k}{p}\right)\right\rangle + \langle s, \log s^\lambda\rangle\label{eq:minus_relbo}\\
&= \argmin_{\substack{s\in\bar\cB}} \left\langle s, \log\left(\frac{s^\lambda}{\frac{p}{q_k}}\right)\right\rangle\nonumber\\
&= \argmin_{\substack{s\in\bar\cB}} \left\langle s, \log\left(\frac{s}{\sqrt[\lambda]{\frac{p}{q_k}}}\right)\right\rangle. \nonumber
\end{align}
Therefore, the regularized LMO problem is equivalent to the following minimization problem: 
\begin{align*}
\argmin_{\substack{s\in\bar\cB}} \dkl(s\| \sqrt[\lambda]{\frac{p}{q_k}}Z ),
\end{align*}
where $Z$ is the normalization constant of $\sqrt[\lambda]{\frac{p}{q_k}}$. 
From this optimization problem, we can write what we call the Residual Evidence Lower Bound (RELBO) as:
\begin{align}\label{eq:relbo}
\relbo(s, \lambda) :=  \bbE_{s}[\log p] -\lambda\bbE_{s}[\log s] -\bbE_{s}[\log q_k].
\end{align}

\paragraph{Discussion}
Let us now analyze the \relbo and compare it with the ELBO in standard VI~\cite{Blei:2016vr}. First, note that we introduce the hyperparameter $\lambda$, which controls the weight of the entropy. To obtain the true LMO solution, one would need to maximize the LHS of Equation~\eqref{eq:minus_relbo} for $\lambda$ and solve the saddle point problem. Since an approximate solution is sufficient for convergence, we consider the regularized problem as a simple heuristic. One can then fix an arbitrary value for $\lambda$ or decrease it when $k$ increases. The latter amounts to allowing increasingly sharp densities as optimization proceeds. 
The other important difference between ELBO and \relbo is the residual term which is expressed through $\bbE_{s}[\log p]-\bbE_{s}[\log q_k]$. Maximizing this term amounts to looking for a density with low cross-entropy with the joint $p$ and high cross-entropy with the current iterate $q_k$. In other words, the next component $s_k$ needs to be as close as possible to the target $p$ but also sufficiently different from the current approximation $q_k$. Indeed, $s_k$ should capture the aspects of the posterior that the current mixture could not approximate yet. 

\paragraph{Failure Modes} Using a black box VI as an implementation for the \lmo represents an attractive, practical solution. Indeed, one could just run VI once and, if the result is not good enough, rerun it on the residual without changing the structure of the implementation. 
Unfortunately, two failure modes should be discussed. First, if the target posterior is a perfectly symmetric multimodal distribution, the residual is also symmetric, and the algorithm may get stuck. A simple solution to this problem is to run the black box VI for fewer iterations, breaking the residual symmetry. This failure mode arise from the non-convexity of the variational inference problem we solve as a subroutine. 
The second problem arises in scenarios where the posterior distribution can be approximated well by a single element of $\cQ$. In such cases, most of the residual will be on the tails. The algorithm will then fit the tails and, in the following iterations, re-learn a distribution close to $q^0$. Consequently, it is essential to identify reasonable solutions before investing additional computational effort by adding more components to the mixture. Note that the ELBO cannot be used for this purpose, as its value at the maximum is unknown.

\paragraph{Stopping criterion} 

We propose a stopping criterion for boosting VI, which allows us to identify when a reasonably good approximation is reached and save computational effort. To this end, we rephrase the notion of \textit{duality gap}~\cite{jaggi2013revisiting,jaggi2011convex} in the context of boosting VI, which gives a surprisingly simple stopping criterion for the algorithm.
\begin{lemma}\label{lemma:duality_gap}
The duality gap $g(q):= \max_{s\in\conv(\cB)} \langle q-s,\log\frac{q}{p}\rangle$ computed at some iterate $q\in\conv(\cB)$ is an upper bound on the primal error $\dkl(q\|p) - \dkl(q^\star\|p)$.
\end{lemma}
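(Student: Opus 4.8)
The statement is the standard Frank--Wolfe ``poor man's duality'' inequality specialized to the boosting VI problem~\eqref{eq:minklQconv}, so the plan is to mimic the classical argument from \cite{jaggi2013revisiting,jaggi2011convex}. Write $f(q) := \dkl(q\|p)$, which is convex in $q$ over $\conv(\cB)$, and note that its gradient (first variation) is $\nabla f(q) = \log\frac{q}{p} + \text{const}$, where the additive constant does not affect any of the inner products $\langle q - s, \nabla f(q)\rangle$ since $q$ and $s$ are both probability densities and hence $\langle q - s, \mathbf{1}\rangle = 0$. Therefore $g(q) = \max_{s\in\conv(\cB)} \langle q - s, \nabla f(q)\rangle$ is exactly the FW duality gap for minimizing $f$ over the convex set $\conv(\cB)$.

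The core of the argument is a single application of convexity. For any $s \in \conv(\cB)$, convexity of $f$ gives
\begin{equation*}
f(s) \geq f(q) + \langle s - q, \nabla f(q)\rangle,
\end{equation*}
equivalently $\langle q - s, \nabla f(q)\rangle \geq f(q) - f(s)$. Taking $s = q^\star$, the minimizer of $f$ over $\conv(\cB)$ (which lies in $\conv(\cB)$, so it is a feasible choice in the max defining $g$), yields
\begin{equation*}
g(q) = \max_{s\in\conv(\cB)} \langle q - s, \nabla f(q)\rangle \;\geq\; \langle q - q^\star, \nabla f(q)\rangle \;\geq\; f(q) - f(q^\star) = \dkl(q\|p) - \dkl(q^\star\|p),
\end{equation*}
which is the claimed bound. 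I would also remark, for completeness, that the linear functional $s \mapsto \langle q - s, \log\frac{q}{p}\rangle$ is linear in $s$, so its maximum over $\conv(\cB)$ is attained at an extreme point, i.e.\ at an element of $\cB$; this justifies why the \lmo subroutine (which searches over $\cB$, not its convex hull) suffices to compute $g(q)$ in practice, though it is not strictly needed for the inequality itself.

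The main subtlety — and the only place one must be slightly careful — is the identification of $\nabla f(q)$ with $\log\frac{q}{p}$ and the verification that the argument is valid in the infinite-dimensional function-space setting: one should check that $f$ is Fréchet (or at least Gâteaux) differentiable along the segment from $q$ to $q^\star$ inside $\conv(\cB)$, which follows from the boundedness assumptions on $\cB$ (bounded parameter space, bounded $\infty$- and $L^2$-norms) that were already invoked to establish bounded curvature in Theorem~\ref{boundedCf}; in particular $\langle q - s, \log\frac{q}{p}\rangle$ is finite for all $q,s \in \conv(\cB)$. Given that, everything reduces to the one-line convexity inequality above, so I do not anticipate any real obstacle beyond stating these regularity remarks cleanly.
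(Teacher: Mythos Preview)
Your proof is correct and follows essentially the same approach as the paper: both use the chain $g(q) \geq \langle q - q^\star, \log\tfrac{q}{p}\rangle \geq \dkl(q\|p) - \dkl(q^\star\|p)$, where the first inequality holds because $q^\star\in\conv(\cB)$ is feasible in the max and the second is convexity of the KL divergence. The paper additionally frames this via the dual (linearized) function $w(q) := \min_{s\in\conv(\cB)} \dkl(q\|p) + \langle s-q,\log\tfrac{q}{p}\rangle$ before writing $g(q) = \dkl(q\|p) - w(q)$, but the substance of the argument is identical.
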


Note that the $\argmax_{s\in\conv(\cB)} \langle q-s,\log\frac{q}{p}\rangle$ is precisely the \lmo solution to the problem~\eqref{eqn:lmo}. Therefore, with an exact \lmo, one obtains a certificate on the primal error for free, without knowing the value of $\dkl(q^\star\|p)$. 
It is possible to show that a convergence rate also holds for the duality gap~\cite{jaggi2013revisiting}. If the oracle is inexact, the estimate of the duality gap $\tilde{g}(q)$ satisfies that $\frac{1}{\delta}\tilde{g}(q)\geq g(q)$, as a consequence of~\eqref{eqn:lmo_mult}.

\section{Experimental Proof of Concept}
\label{sec:Experiments}
\looseness=-1This experimental proof of concept aims to illustrate that the algorithm indeed learns a multimodal approximation to the posterior distribution. The experimental evaluation beside this simple case was done by Rajiv Khanna and Gideon Dresdner and, therefore, is not included in this dissertation. We refer to \citep{LocKhaGhoRat18,locatello2018boosting} for a detailed experimental evaluation. We implemented our algorithm as an extension to the {\sl Edward} probabilistic programming framework~\cite{tran2016edward} thereby enabling users to apply boosting VI to any probabilistic model and variational family which are definable in {\sl Edward}.
For comparisons to baseline VI, we use {\sl Edward}'s built-in black box VI (BBVI) algorithm without modification.
We used $\lambda = \frac{1}{\sqrt{k+1}}$. 

\begin{figure}
  \center\includegraphics[width=0.5\linewidth]{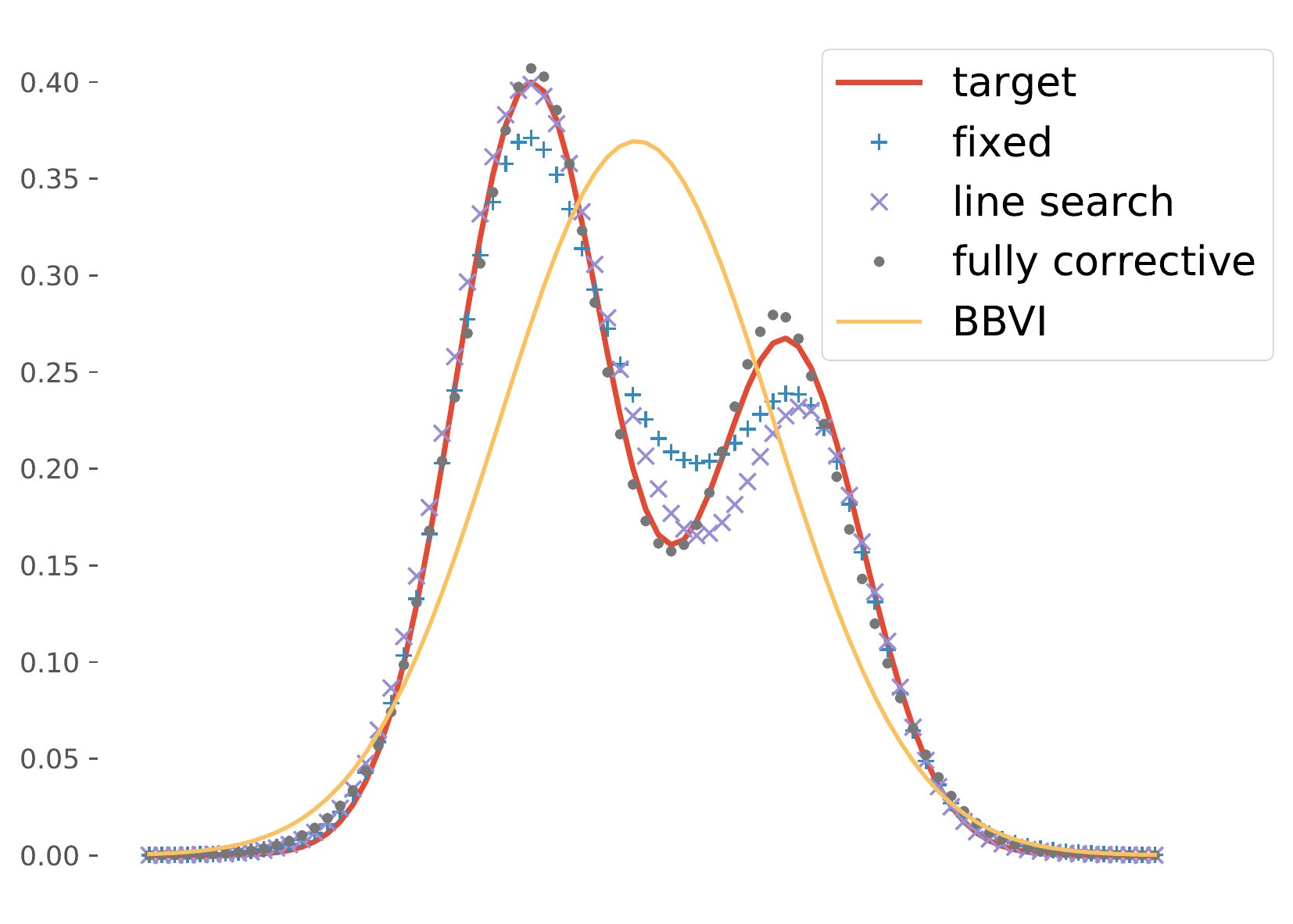}
    \caption{\small Comparison between BBVI and three variants of boosting BBVI method on a mixture of Gaussians example. We observe that the boosting methods are able to fit a simple bi-modal distribution.}
  \label{fig:bimodal-post}
  \vspace{-5mm}
\end{figure}

We use synthetic data to visualize the approximation of our algorithm of a bimodal posterior using up to $40$ iterations. 
In particular, we consider a mixture of two Gaussians with parameters $\mu = (-1,+1)$, $\sigma = (0.5,0.5)$, and mixing weights $\pi = (0.4,0.6)$. 

We performed experiments using vanilla FW with fixed stepsize, line search, and the fully corrective variant. For the fully corrective variant, we used FW to solve the subproblem of finding the optimal weights for the current atom set. Our results are summarized in Figure~\ref{fig:bimodal-post}.
We observe that unlike BBVI, all three variants can fit both modes of the bimodal target distribution. The fully corrective version gives the best fit.
This improved solution comes at a computational cost --- solving the line search and fully corrective subproblems is slower than the fixed step size variant. 

\section{Proofs}

\subsection{Proof of Theorem~\ref{boundedCf}}
\begin{proof}
First, we rewrite the divergence in the curvature definition as:
\begin{align*}
D(y,q) &= \dkl(y) - \dkl(q)- \langle y -q, \nabla \dkl(q)\rangle\\
&= \langle y, \log \frac{y}{p} \rangle - \langle q, \log \frac{q}{p} \rangle - \langle y -q, \log\frac{q}{p}\rangle \\
&= \langle y, \log \frac{y}{p} \rangle  - \langle y , \log\frac{q}{p}\rangle \\
&= \langle y, \log \frac{y}{q} \rangle\\
&= \dkl(y\| q)
\end{align*}
In order to show that $\Cf$ is bounded we then need to show that:
\begin{align*}
\sup_{\substack{s\in\cB,\, q \in \conv(\cB)
 \\ \gamma \in [0,1]\\ y = q + \gamma(s- q)}} \frac{2}{\gamma^2}\dkl(y\| q)
\end{align*}
is bounded.
For a fixed $s$ and $q$ we how that $ \frac{2}{\gamma^2}\dkl(y\| q)$ is continuous.
Since the parameter space is bounded $\dkl(y\| q)$ is always bounded for any $\gamma\geq \varepsilon > 0$ and so is the $\Cf$, therefore the $\Cf$ is continuous for $\gamma \in (0,1]$. We only need to show that it also holds for $\gamma = 0$ in order to use the result that a continuous function on a bounded domain is bounded.
When $\gamma \rightarrow 0 $ we have that both $\gamma^2$ and $\dkl(y\| q)\rightarrow 0$. Therefore we use L'Hospital Rule (H) and obtain:
\begin{align*}
\lim_{\gamma\rightarrow 0 }\frac{2}{\gamma^2}\dkl(y\| q) &\stackrel{H}{=} \lim_{\gamma\rightarrow 0 }\frac{1}{\gamma}\int_\sfZ(s-q)\log\left(\frac{y}{q}\right)
\end{align*}
\begin{align*}
\lim_{\gamma\rightarrow 0 }\frac{2}{\gamma^2}\dkl(y\| q) &\stackrel{H}{=} \lim_{\gamma\rightarrow 0 }\frac{1}{\gamma}\int_\sfZ(s-q)\log\left(\frac{y}{q}\right)
\end{align*}
where for the derivative of the $\dkl$ we used the functional chain rule. 
Again both numerator and denominators in the limit go to zero when $\gamma\rightarrow 0 $, so we use L'Hospital Rule again and obtain:
\begin{align*}
\lim_{\gamma\rightarrow 0 }\frac{1}{\gamma}\int_\sfZ(s-q)\log\left(\frac{y}{q}\right) &\stackrel{H}{=} \lim_{\gamma\rightarrow 0 }\int_\sfZ(s-q)^2\frac{q}{y}\\
&= \int_\sfZ \lim_{\gamma\rightarrow 0 }(s-q)^2\frac{q}{y}\\
&= \int_\sfZ (s-q)^2\\
\end{align*}
which is bounded under the assumption of bounded parameters space and bounded infinity norm. Indeed:
\begin{align*}
\int_\sfZ (s-q)^2\leq 4\max_{s\in\conv(\cB)} \int_\sfZ s^2
\end{align*}
Which is bounded under the assumption of bounded $L2$ norm of the densities in $\cB$ by triangle inequality.
\end{proof}

\subsection{Proof of Lemma~\ref{lemma:duality_gap}}
\begin{proof}
Let $\log\frac{q}{p}$ be the gradient of the $\dkl(q\|p )$ computed at some $q\in\conv(\cQ)$. The dual function of the $\dkl$ is:
\begin{align*}
w(q) := \min_{s\in\conv(\cB)} \dkl(q\|p ) + \langle s-q,\log\frac{q}{p}\rangle.
\end{align*}
By definition, the gradient is a linear approximation to a function lying below its graph at any point. Therefore, we have that for any $q,y\in\conv(\cB)$:
\begin{align*}
w(q) = \min_{s\in\conv(\cB)} \dkl(q\|p ) + \langle s-q,\log\frac{q}{p}\rangle \leq\dkl(q\|p ) + \langle y-q,\log\frac{q}{p}\rangle \leq \dkl(y\|p ).
\end{align*}
The duality gap at some point $q$ is the defined as the difference between the values of the primal and dual problems:
\begin{equation}
g(q) := \dkl(q\|p) - w(q) = \max_{s\in\conv(\cB)} \langle q-s,\log\frac{q}{p}\rangle.
\end{equation}
Note that the duality gap is a bound on the primal error as:
\begin{equation}
g(q) = \max_{s\in\conv(\cB)} \langle q-s,\log\frac{q}{p}\rangle \geq \langle q-q^\star,\log\frac{q}{p} \geq \dkl(q\|p) - \dkl(q^\star\|p),
\end{equation}
where the first inequality comes from the fact that the optimum $q^\star\in\conv(\cB)$ and the second from the convexity of the KL divergence w.r.t. $q$.
\end{proof}

\part{Disentangled Representations}
\label{part:rep_learn}


\chapter{Introduction and Background}\label{cha:disent_background}
\blfootnote{This chapter is partly based on discussions in \citep{locatello2019challenging,locatello2020sober,locatello2020commentary,scholkopf2020towards} that were developed in collaboration with Stefan Bauer, Mario Lucic, Gunnar R\"atsch, Sylvain Gelly, Bernhard Sch\"olkopf, Olivier Bachem, Rosemary Nan Ke, Nal Kalchbrenner, Anirudh Goyal, and Yoshua Bengio. These works were partially done when Francesco Locatello was at Google Research, Brain Teams in Zurich and Amsterdam.}

\looseness=-1
In representation learning, it is often assumed that real-world observations $\rvx$ (such as images or videos) are generated by a two-step generative process.
First, a multivariate latent random variable $\rvz$ is sampled from a distribution $\P(\rvz)$.
Intuitively, $\rvz$ corresponds to semantically meaningful \textit{factors of variation} of the observations (such as content and position of objects in an image).
Then, in a second step, the observation $\rvx$ is sampled from the conditional distribution $\P(\rvx|\rvz)$.
The key idea behind this model is that the high-dimensional data $\rvx$ can be explained by the substantially lower dimensional and semantically meaningful latent variable $\rvz$.
Informally, the goal of representation learning is to find useful transformations $r(\rvx)$ of $\rvx$ that ``\textit{make it easier to extract useful information when building classifiers or other predictors}'' \citep{bengio2013representation}. 

\looseness=-1A recent line of work has argued that \emph{disentanglement} is a desirable property of good representations~\citep{bengio2013representation,peters2017elements,lecun2015deep,bengio2007scaling,schmidhuber1992learning,lake2017building,tschannen2018recent}. Disentangled representations should contain all the information present in $\rvx$ in a compact and interpretable structure~\citep{bengio2013representation,kulkarni2015deep,chen2016infogan} while being independent from the task at hand~\citep{goodfellow2009measuring,lenc2015understanding}. 
They should be useful for (semi-)supervised learning of downstream tasks, transfer and few shot learning~\citep{bengio2013representation,scholkopf2012causal,peters2017elements}. They should enable to integrate out nuisance factors~\citep{kumar2017variational}, to perform interventions, and to answer counterfactual questions~\citep{pearl2009causality,SpiGlySch93,peters2017elements}. 

While there is no single formalized notion of disentanglement (yet) which is widely accepted, the key intuition is that a disentangled representation should separate the distinct, informative \emph{factors of variations} in the data \citep{bengio2013representation}.
A change in a single underlying factor of variation $\rz_i$ should lead to a change in a single factor in the learned representation $r(\rvx)$. This assumption can be extended to groups of dimensions as, for instance, in the work of \citet{bouchacourt2017multi} or \citet{suter2018interventional}. 
Based on this idea, a variety of disentanglement evaluation protocols have been proposed leveraging the statistical relations between the learned representation and the ground-truth factor of variations~\citep{higgins2016beta,kim2018disentangling,eastwood2018framework,kumar2017variational,chen2018isolating,ridgeway2018learning,suter2018interventional}. 

State-of-the-art approaches for unsupervised disentanglement learning are largely based on \emph{Variational Autoencoders (VAEs)} \citep{kingma2013auto}:
One assumes a specific prior $\P(\rvz)$ on the latent space and then uses a deep neural network to parameterize the conditional probability $\P(\rvx|\rvz)$. 
Similarly, the distribution $\P(\rvz|\rvx)$ is approximated using a variational distribution  $\Q(\rvz|\rvx)$, again parametrized using a deep neural network.
The model is then trained by minimizing a suitable approximation to the negative log-likelihood. The representation for $r(\rvx)$ is usually taken to be the mean of the approximate posterior distribution $\Q(\rvz|\rvx)$. 
Several variations of VAEs were proposed with the motivation that they lead to better disentanglement~\citep{higgins2016beta,burgess2018understanding,kim2018disentangling,chen2018isolating,kumar2017variational}.
The common theme behind all these approaches is that they try to enforce a factorized aggregated posterior $\int_{\rvx}\Q(\rvz|\rvx)\P(\rvx)d\rvx$, which should encourage disentanglement, see Figure~\ref{fig:vae}.

\begin{figure}
    \centering
    \includegraphics[width=0.65\textwidth]{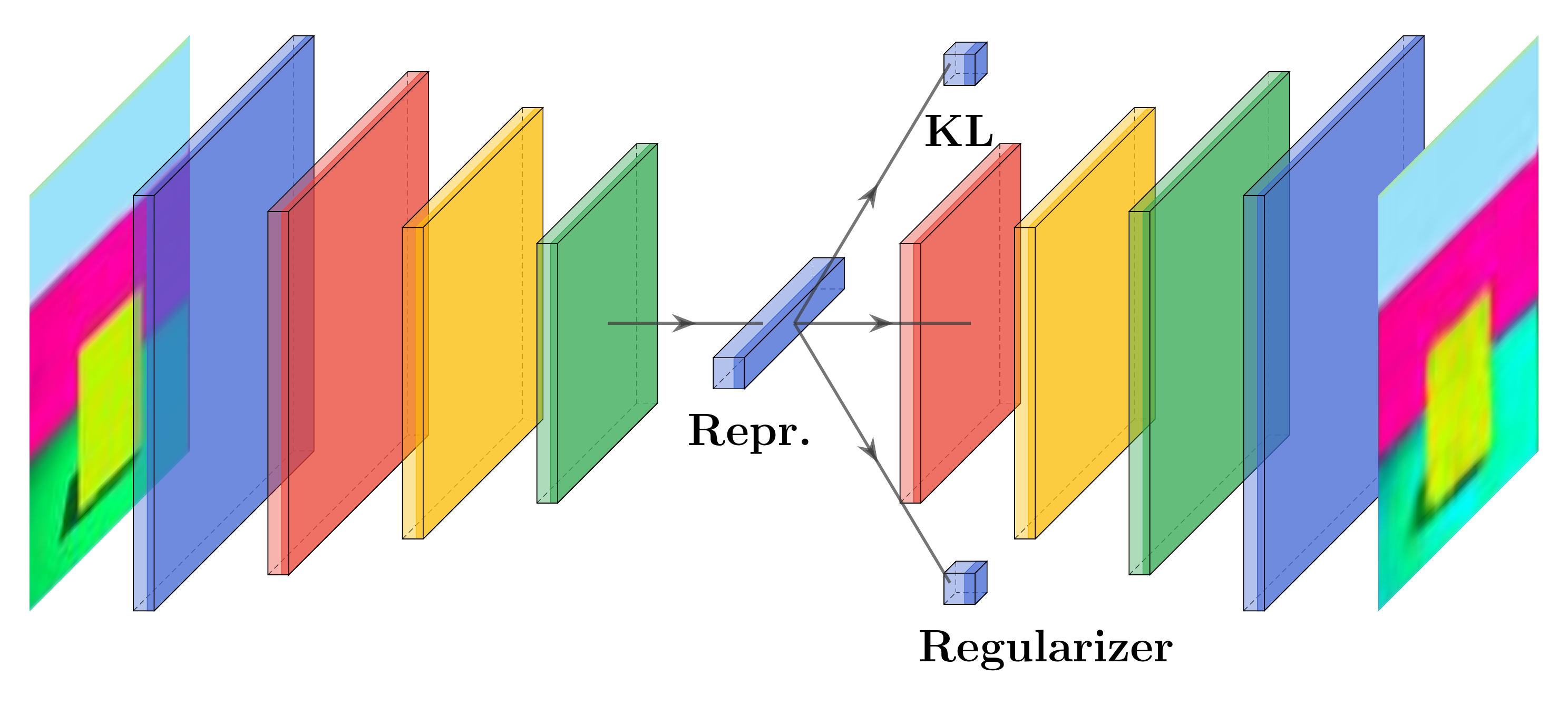}
    \caption{\small Prototypical architecture for disentanglement based on VAEs. In order to learn disentangled representations, one further regularize the latent space to enforce suitable properties as proxies for disentanglement.}
    \label{fig:vae}
\end{figure}
\paragraph{Our Goal and Retrospectives} Disentangled representations appear very attractive at a superficial level as factorizing knowledge in a way that matches independent causal modules may have several important applications. The theoretical impossibility result of Theorem~\ref{thm:impossibility} was  the starting point of our work and our initial goal was to systematically study the state-of-the-art approaches in order to precisely pinpoint which inductive bias was making disentanglement possible. Eventually, we hoped to use this knowledge to propose a new state-of-the-art approach. Therefore, our first main research question was: \textit{What's the secret ingredient that allows learning disentangled representations?} In~\citep{locatello2019challenging}, we observed that learning disentangled representations without ground-truth knowledge was challenging and we could not validate the usefulness in terms of downstream sample complexity. Due to the relevance of this problem, we received the best paper award at ICML 2019. There were two main questions at this point: \textit{``How much supervision do we actually need?''} and \textit{``Is it really worth it to learn disentangled representations?''}. To address the first question, we investigated the role of explicit supervision on state-of-the-art approaches in~\citep{locatello2019disentangling}. We performed a very large scale study and found that we actually did not need much supervision and methods were generally robust to imprecision.
To address the usefulness of disentangled representations, we investigated a Fairness~\citep{locatello2019fairness} and an abstract reasoning setting~\citep{van2019disentangled} (the latter work is not part of this dissertation). These works provided convincing evidence about the usefulness of disentangled representations but we still had no method to reliably learn them without access to the ground-truth factors of variation. In~\citep{locatello2020weakly}, we proposed a new realistic setting motivated by our recent results in non-linear ICA~\citep{gresele2019incomplete} where disentangled representations are identifiable. Inspired by our analysis, we also proposed a method that reliably learned them in practice. Perhaps most importantly, we could show that these representations were indeed useful on multiple diverse downstream tasks.

\section{Relation with Prior Work in ML}
\looseness=-1In a similar spirit to disentanglement, (non-)linear independent component analysis~\citep{comon1994independent,bach2002kernel,jutten2003advances, hyvarinen2016unsupervised} studies the problem of recovering independent components of a signal. The underlying assumption is that there is a generative model for the signal composed of the combination of statistically independent non-Gaussian components. While the
identifiability result for linear ICA \citep{comon1994independent} proved to be a milestone for the classical theory of factor analysis, similar results are in general not obtainable for the nonlinear case and the underlying sources generating the data cannot be identified
 \citep{hyvarinen1999nonlinear}. The lack of almost any identifiability result in non-linear ICA has been a main bottleneck for the utility of the approach \citep{hyvarinen2018nonlinear} and partially motivated alternative machine learning approaches~\citep{desjardins2012disentangling,schmidhuber1992learning,cohen2014transformation}.
Given that unsupervised algorithms did not initially perform well on realistic settings most of the other works have considered some more or less explicit form of supervision~\citep{reed2014learning,zhu2014multi,yang2015weakly,kulkarni2015deep,cheung2014discovering,mathieu2016disentangling,narayanaswamy2017learning,suter2018interventional}.~\citep{hinton2011transforming,cohen2014learning} assume some knowledge of the effect of the factors of variations even though they are not observed. One can also exploit known relations between factors in different samples~\citep{karaletsos2015bayesian,goroshin2015learning,whitney2016understanding,fraccaro2017disentangled,denton2017unsupervised,hsu2017unsupervised,yingzhen2018disentangled,locatello2018clustering}. This is not a limiting assumption especially in sequential data like for videos. There is for example a rich literature in disentangling pose from content in 3D objects and content from motion in videos or time series in general~\citep{yang2015weakly,li2018disentangled,hsieh2018learning,fortuin2018deep,deng2017factorized,goroshin2015learning}.
Similarly, the non-linear ICA community recently shifted to non-iid data types exploiting time dependent or grouped observations~\citep{hyvarinen2016unsupervised,hyvarinen2018nonlinear,gresele2019incomplete}

\section{Relation with Causality}
In this section, we give a brief introduction to causal inference and structural causal models and we explain the link between disentanglement and causality. 

\begin{figure*}[t]
\begin{center}
\includegraphics[width=0.7\textwidth]{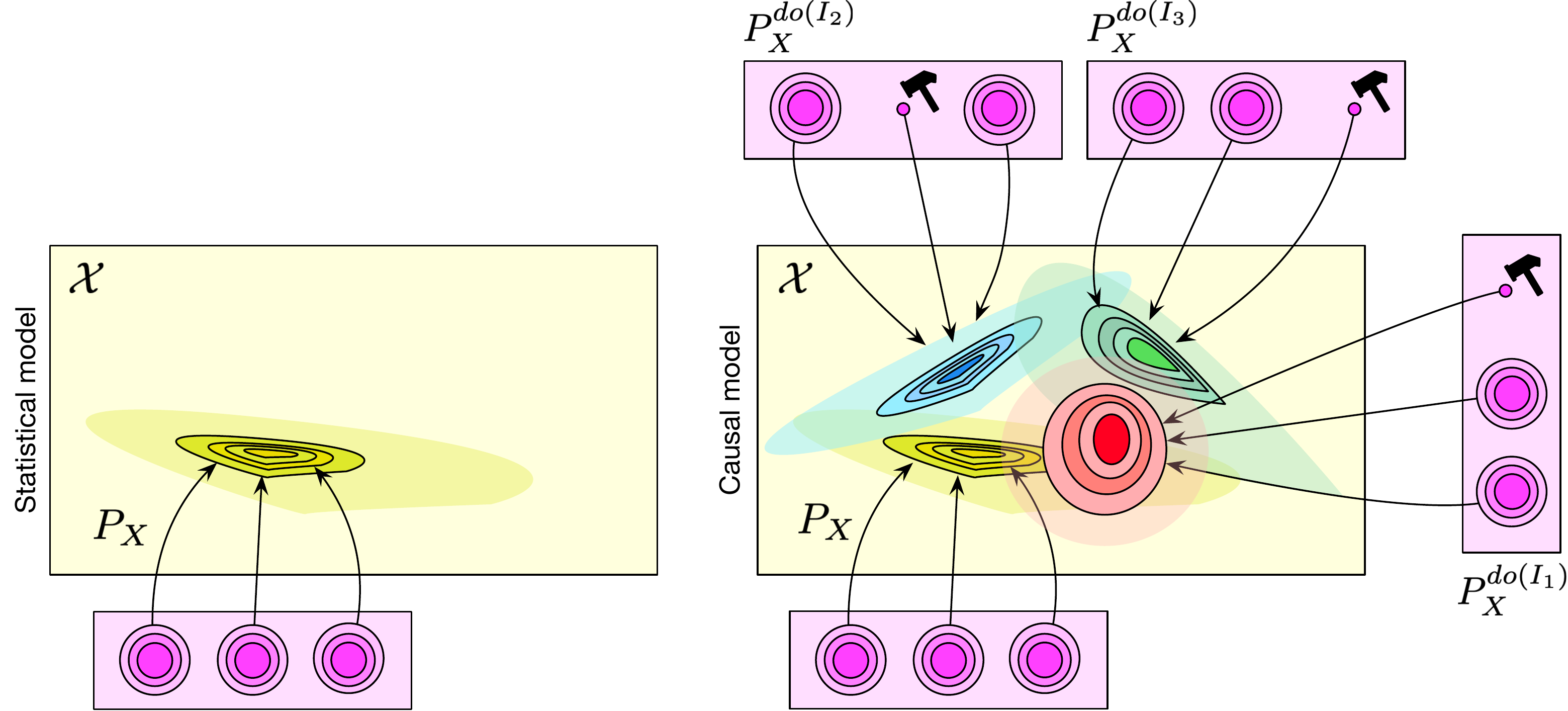}
\end{center}
  \caption{\small Difference between statistical (left) and causal models (right) on a given set of three $X = \left[\rvx_1, \rvx_2, \rvx_3 \right]$ variables. While a statistical model specifies a single probability distribution, a causal model represents a set of distributions, one for each possible intervention (indicated with a \myhammer{} in the figure). In causal graphical models, the interventions are induced by the causal graph. In both cases, one can factorize the joint distribution into a product of independent components. In causal models, we can intervene on these components, for example by fixing their value. 
 }\label{fig_stats_vs_causal}
\end{figure*}

Structural causal models consider a set of random variables $\rvx_1,\dots,\rvx_n$ and operate under the assumption that the value of each variable corresponds to a functional assignment. The causal relations between the variables are modelled in a directed acyclic graph (DAG) and are quantified via structural equations:
\begin{equation}\label{eq:SA}
\rvx_i := f_i (\PA_i, \rvu_i),   ~~~~ (i=1,\dots,n),
\end{equation}
where $f_i$ are deterministic functions taking as input $\rvx_i$'s parents in the DAG (which we call $\PA_i$) and a random variable $\rvu_i$, representing the noise source of the corresponding variable. This noise explain all the stochasticity in the variable $\rvx_i$ and allow to express a a general conditional distribution $P(\rvx_i|\PA_i)$. The critical assumption is that the noise variables are jointly independent, which can be seen as a consequence of \textit{causal sufficiency} \citep{peters2017elements}.
The parent-child relationships entailed by the structural equations and the graph structure implies a specific factorization of the joint distribution of the observables, which is called the {\em causal (or disentangled) factorization}:
\begin{equation}\label{eq:cf}
P(\rvx_1,\dots,\rvx_n) = \prod_{i=1}^n  P(\rvx_i \mid \PA_i).
\end{equation}
While many other factorizations are possible, Equation \eqref{eq:cf} decomposes the joint distribution into conditionals corresponding to the structural assignments in Equation \eqref{eq:SA}.

\paragraph{Causal vs Statistical Models}
An example of the difference between a statistical and a causal model is depicted in Figure~\ref{fig_stats_vs_causal}. A statistical model may be defined through a graphical model where the joint distribution factorizes. In general, the connections in a (generic) graphical model do not need to be causal \citep{peters2017elements}. 
A graphical model can be made causal by augmenting it with a graph  (sometimes referred to as the ``causal graph''). The addition of the graph allows to compute interventional distributions as in Figure~\ref{fig_stats_vs_causal}. When a variable is intervened upon, we disconnect it from its parents, fix its value, and perform ancestral sampling on its children. A structural causal model is composed of (i) a set of causal variables and (ii) a set of structural equations with a distribution over the noise variables $\rvu_i$ (or a set of causal conditionals). While both causal graphical models and SCMs allow to compute interventional distributions, only the SCMs allow to compute counterfactuals. To compute counterfactuals, we need to fix the value of noise variables and there are many ways to represent a conditional as a structural assignment (by picking different combinations of functions and noise variables).

\paragraph{Disentanglement}
\looseness=-1There are two ways of understanding the relation between disentanglement and causality. First, one could think of the factors of variation as latent causal parents of the observed variables \citep{suter2018interventional}. The second view, is that causal variables are not given and images are a high-dimensional measurement of these lower dimensional variables~\citep{scholkopf2019causality,scholkopf2020towards}. The goal is then to learn a mapping from the high-dimensional observations to the causal variables. To do so, we can encode the images with a neural network to obtain the unexplained noise variables $\rvu_1,\ldots,\rvu_n$ and decode them back to images. If the structural relations between the variables are known we can embed this knowledge into the architecture of the decoder. Otherwise, the decoder will implicitly model the structural relations using distributed representations. In this context, disentangled representations are appealing for causality. They allow to reason about the noise realization of an observation, which is useful to answer specific interventional and counterfactual questions. On the other hand, we shall see that disentangled representations cannot be identified from observational data and non-iid settings that are common in causality can fix this issue.

\section{Learning Disentangled Representations}\label{sec:disent_methdos}
Variants of variational autoencoders~\cite{kingma2013auto} are considered the state-of-the-art for unsupervised disentanglement learning. 
They optimize the following approximation to the maximum likelihood objective, 
\begin{align}
\max_{\phi, \theta}\quad \E_{p(\rvx)} [\E_{q_\phi(\rvz|\rvx)}[\log p_\theta(\rvx|\rvz)] -  \KL(q_\phi(\rvz|\rvx) \| p(\rvz))]\label{eq:elbo}
\end{align}
which is also know as the evidence lower bound (ELBO). By carefully considering the KL term, one can encourage various properties of the resulting presentation. We will briefly review the main approaches. We now briefly categorize the different approaches.

\paragraph{Bottleneck Capacity}
 \citet{higgins2016beta} propose the $\beta$-VAE, introducing a hyperparameter in front of the KL regularizer of vanilla VAEs. They maximize the following expression:
\begin{align*}
\E_{p(\rvx)} [\E_{q_\phi(\rvz|\rvx)}[\log p_\theta(\rvx|\rvz)] - \beta \KL(q_\phi(\rvz|\rvx) \| p(\rvz))]
\end{align*} 
By setting $\beta > 1$, the encoder distribution will be forced to better match the factorized unit Gaussian prior. This procedure introduces additional constraints on the capacity of the latent bottleneck, encouraging the encoder to learn a disentangled representation for the data. 
~\citet{burgess2018understanding} argue that when the bottleneck has limited capacity, the network will be forced to specialize on the factor of variation that most contributes to a small reconstruction error. Therefore, they propose to progressively increase the bottleneck capacity, so that the encoder can focus on learning one factor of variation at the time:
\begin{align*}
\E_{p(\rvx)} [\E_{q_\phi(\rvz|\rvx)}[\log p_\theta(\rvx|\rvz)] - \gamma | \KL(q_\phi(\rvz|\rvx) \| p(\rvz)) - C |]
\end{align*}
where C is annealed from zero to some value which is large enough to produce good reconstruction. 
In the following, we refer to this model as AnnealedVAE.

\paragraph{Penalizing the Total Correlation}
Let $I(\rvx; \rvz)$ denote the mutual information between $\rvx$ and $\rvz$ and note that the second term in~\eqref{eq:elbo} can be rewritten as 
\begin{align*}
\E_{p(\rvx)} [\KL(q_\phi(\rvz|\rvx) \| p(\rvz))] = I(\rvx;\rvz) + \KL(q(\rvz) \| p(\rvz)).
\end{align*}
Therefore, when $\beta>1$, $\beta$-VAE penalizes the mutual information between the latent representation and the data, thus constraining the capacity of the latent space. Furthermore, it pushes $q(\rvz)$, the so called \textit{aggregated posterior}, to match the prior and therefore to factorize, given a factorized prior. 
\citet{kim2018disentangling} argues that penalizing $I(\rvx;\rvz)$ is neither necessary nor desirable for disentanglement. The FactorVAE~\citep{kim2018disentangling} and the $\beta$-TCVAE~\citep{chen2018isolating} augment the VAE objective with an additional regularizer that specifically penalizes dependencies between the dimensions of the representation:
\begin{align*}
\E_{p(\rvx)} [\E_{q_\phi(\rvz|\rvx)}[\log p_\theta(\rvx|\rvz)] -  \KL(q_\phi(\rvz|\rvx) \| p(\rvz))] - \gamma \KL(q(\rvz)\| \prod_{j=1}^d q(\rz_j)).
\end{align*}
This last term is also known as \textit{total correlation}~\citep{watanabe1960information}. The total correlation is intractable and vanilla Monte Carlo approximations require marginalization over the training set. \citep{kim2018disentangling} propose an estimate using the density ratio trick~\citep{nguyen2010estimating,sugiyama2012density} (FactorVAE). Samples from $\prod_{j=1}^d q(\rz_j)$ can be obtained shuffling samples from $q(\rvz)$~\citep{arcones1992bootstrap}. Concurrently, \citet{chen2018isolating} propose a tractable biased Monte-Carlo estimate for the total correlation ($\beta$-TCVAE).

\paragraph{Disentangled Priors}
\citet{kumar2017variational} argue that a disentangled generative model requires a disentangled prior. This approach is related to the total correlation penalty, but now the aggregated posterior is pushed to match a factorized prior. Therefore
\begin{align*}
&\E_{p(\rvx)} [ \E_{q_\phi(\rvz|\rvx)}[\log p_\theta(\rvx|\rvz)] - \KL(q_\phi(\rvz|\rvx) \| p(\rvz))] - \lambda D(q(\rvz)\|p(\rvz)),
\end{align*}
where $D$ is some (arbitrary) divergence. Since this term is intractable when $D$ is the KL divergence, they propose to match the moments of these distribution. In particular, they regularize the deviation of either $\mathrm{Cov}_{p(\rvx)}[\mu_\phi(\rvx)]$ or $\mathrm{Cov}_{q_\phi}[\rvz]$ from the identity matrix in the two variants of the DIP-VAE. This results in maximizing either the DIP-VAE-I objective
\begin{align*}
\E_{p(\rvx)} [ \E_{q_\phi(\rvz|\rvx)}[\log p_\theta(\rvx|\rvz)] &- \KL(q_\phi(\rvz|\rvx) \| p(\rvz))] - \lambda_{od}\sum_{i\neq j} \left[\mathrm{Cov}_{p(\rvx)}[\mu_\phi(\rvx)]\right]_{ij}^2\\&- \lambda_{d}\sum_{i} \left(\left[\mathrm{Cov}_{p(\rvx)}[\mu_\phi(\rvx)]\right]_{ii} - 1\right)^2
\end{align*}
or the DIP-VAE-II objective
\begin{align*}
\E_{p(\rvx)} [ \E_{q_\phi(\rvz|\rvx)}[\log p_\theta(\rvx|\rvz)] &- \KL(q_\phi(\rvz|\rvx) \| p(\rvz))] - \lambda_{od}\sum_{i\neq j} \left[\mathrm{Cov}_{q_\phi}[\rvz]\right]_{ij}^2\\ &-\lambda_{d}\sum_{i} \left(\left[\mathrm{Cov}_{q_\phi}[\rvz]\right]_{ii} - 1\right)^2.
\end{align*}

\section{Measuring Disentanglement}\label{sec:disent_metrics}
After training a model, we wish to inspect the representation and quantitatively measure its disentanglement. This task is somewhat challenging, as disentanglement does not have a widely accepted formal definition. Therefore, different papers proposed different evaluation metrics (often paired with new methods), leading to inconsistencies on what the best performing methods are. Common to all these metrics is the assumption that at test time we have access to either the full generative model (with the ability to perform interventions) or to a sufficiently large set of observations with annotated ground-truth factors. We now present an overview of the metrics we consider in our empirical studies as well as simple downstream tasks and an approximation of the total correlation we will use in our evaluation. Following our critique on the difficulties of model selection without ground-truth annotations,~\citet{duan2019heuristic} proposed a stability based heuristic, which is posthumous to our studies and is therefore not discussed in this dissertation.

The \emph{BetaVAE} metric~\citep{higgins2016beta} measures disentanglement as the accuracy of a linear classifier that predicts the index of a fixed factor of variation.
\citet{kim2018disentangling} address several issues with this metric in their \emph{FactorVAE} metric by using a majority vote classifier on a different feature vector which accounts for a corner case in the BetaVAE metric.
The \emph{Mutual Information Gap (MIG)}~\citep{chen2018isolating} measures for each factor of variation the normalized gap in mutual information between the highest and second highest coordinate in $r(\rvx)$. Instead, the \emph{Modularity}~\citep{ridgeway2018learning} measures if each dimension of $r(\rvx)$ depends on at most a factor of variation using their mutual information.
The metrics of~\citet{eastwood2018framework}  compute the entropy of the distribution obtained by normalizing the importance of each dimension of the learned representation for predicting the value of a factor of variation. Their disentanglement score (which we call \emph{DCI Disentanglement} for clarity) penalizes multiple factors of variation being captured by the same code and their completeness score (which we call \emph{DCI Completeness}) penalizes a factor of variation being captured by multiple codes.
The \emph{SAP score}~\citep{kumar2017variational} is the average difference of the prediction error of the two most predictive latent dimensions for each factor. 
The \emph{Interventional Robustness Score (IRS)}~\citep{suter2018interventional} measures whether the representation is robustly disentangled by performing interventions on the factors of variations and measuring deviations in the latent space.

\section{\texttt{Disentanglement\_lib}}
The main bulk of our experimental evaluation builds on the methods in Section~\ref{sec:disent_methdos} using the metrics described in Section~\ref{sec:disent_metrics}. In this section, we discuss design choices taken in the \texttt{disentanglement\_lib}\footnote{\url{https://github.com/google-research/disentanglement_lib}} that are common to all experiments in Chapters~\ref{cha:unsup_dis}-\ref{cha:weak}.

\paragraph{Guiding Principles} In our library, we seek controlled, fair and reproducible experimental conditions. 
We consider the case in which we can sample from a well defined and known ground-truth generative model by first sampling the factors of variations from a distribution $P(\rvz)$ and then sampling an observation from $P(\rvx | \rvz)$. 
Our experimental protocol works as follows:
During training, we only observe the samples of $\rvx$ obtained by marginalizing $P(\rvx | \rvz)$ over $P(\rvz)$. After training, we obtain a representation $r(\rvx)$ by either taking a sample from the probabilistic encoder $Q(\rvz|\rvx)$ or by taking its mean. 
Typically, disentanglement metrics consider the latter as the representation $r(\rvx)$. 
During the evaluation, we assume to have access to the whole generative model: we can draw samples from both $P(\rvz)$ and $P(\rvx | \rvz)$. 
In this way, we can perform interventions on the latent factors as required by certain evaluation metrics. 
We explicitly note that we effectively consider the statistical learning problem where we optimize the loss and the metrics on the known data generating distribution. 
As a result, we do not use separate train and test sets but always take \iid samples from the known ground-truth distribution.
This is justified as the statistical problem is well defined and it allows us to remove the additional complexity of dealing with overfitting and empirical risk minimization.

\paragraph{Data Sets} 
We consider five data sets in which $\rvx$ is obtained as a deterministic function of $\rvz$: \textit{dSprites}~\citep{higgins2016beta}, \textit{Cars3D}~\citep{reed2015deep}, \textit{SmallNORB}~\citep{lecun2004learning}, \textit{Shapes3D}~\citep{kim2018disentangling} and we introduced \textit{MPI3D}~\citep{gondal2019transfer}, the first disentanglement dataset with real images (not synthetically generated).
We also introduce three data sets where the observations $\rvx$ are stochastic given the factor of variations $\rvz$: \textit{Color-dSprites}, \textit{Noisy-dSprites} and \textit{Scream-dSprites}. 
In \textit{Color-dSprites}, the shapes are colored with a random color. 
In \textit{Noisy-dSprites}, we consider white-colored shapes on a noisy background. 
Finally, in \textit{Scream-dSprites} the background is replaced with a random patch in a random color shade extracted from the famous \textit{The Scream} painting~\citep{munch_1893}. 
The dSprites shape is embedded into the image by inverting the color of its pixels.

\paragraph{Preprocessing Details} All the data sets contains images with pixels between \num{0} and \num{1}. 
\textit{Color-dSprites:} Every time we sample a point, we also sample a random scaling for each channel uniformly between \num{0.5} and \num{1}. 
\textit{Noisy-dSprites:} Every time we sample a point, we fill the background with uniform noise. 
\textit{Scream-dSprites:} Every time we sample a point, we sample a random $64\times 64$ patch of \textit{The Scream} painting. We then change the color distribution by adding a random uniform number to each channel and divide the result by two. Then, we embed the dSprites shape by inverting the colors of each of its pixels.

\paragraph{Inductive Biases} 
\looseness=-1To fairly evaluate the different approaches, we separate the effect of regularization (in the form of model choice and regularization strength) from the other inductive biases (for example, the choice of the neural architecture). 
Each method uses the same convolutional architecture, optimizer, hyperparameters of the optimizer and batch size. 
All methods use a Gaussian encoder where the mean and the log variance of each latent factor is parametrized by the deep neural network, a Bernoulli decoder and latent dimension fixed to 10. 
We note that these are all standard choices in prior work \citep{higgins2016beta,kim2018disentangling}.
We choose six different regularization strengths, that is, hyperparameter values, for each of the considered methods.
The key idea was to take a wide enough set to ensure that there are useful hyperparameters for different settings for each method and not to focus on specific values known to work for specific data sets.
However, the values are partially based on the ranges that are prescribed in the literature (including the hyperparameters suggested by the authors). 
We fix our experimental setup in advance and we run all the considered methods on each data set for 50 different random seeds (unless otherwise specified) and evaluate them on the considered metrics. 

\paragraph{Hyperparameters and Differences with Previous Implementations}
In our study, we fix all hyperparameters except one per each model. 
Model specific hyperparameters can be found in Table~\ref{table:sweep_main}. 
All the other hyperparameters were not varied and are selected based on the literature, see~\citep{locatello2019challenging} for the detailed values. 
We use a single choice of architecture, batch size and optimizer for all the methods which might deviate from the settings considered in the original papers.
However, we argue that unification of these choices is the only way to guarantee a fair comparison among the different methods such that valid conclusions may be drawn in between methods.
The largest change is that for DIP-VAE and for $\beta$-TCVAE we used a batch size of 64 instead of 400 and 2048 respectively.
However, \citet{chen2018isolating} shows in Section H.2 of the Appendix that the bias in the mini-batch estimation of the total correlation does not considerably affect the performances of their model even with small batch sizes.
For DIP-VAE-II, we did not implement the additional regularizer on the third order central moments since no implementation details are provided and since this regularizer is only used on specific data sets.

Our implementations of the disentanglement metrics deviate from the implementations in the original papers as follows:
First, we strictly enforce that all factors of variations are treated as discrete variables as this corresponds to the assumed ground-truth model in all our data sets.
Hence, we used classification instead of regression for the SAP score and the disentanglement score of~\citep{eastwood2018framework}.
This is important as it does not make sense to use regression on true factors of variations that are discrete (for example on shape on dSprites).
Second, wherever possible, we resorted to using the default, well-tested Scikit-learn~\citep{scikitlearn} implementations instead of using custom implementations with potentially hard to set hyperparameters.
Third, for the Mutual Information Gap~\citep{chen2018isolating}, we estimate the \textit{discrete} mutual information (as opposed to continuous) on the \textit{mean} representation (as opposed to sampled) on a \textit{subset} of the samples (as opposed to the whole data set).
We argue that this is the correct choice as the mean is usually taken to be the representation.
Hence, it would be wrong to consider the full Gaussian encoder or samples thereof as that would correspond to a different representation.
Finally, we fix the number of sampled train and test points across all metrics to a large value to ensure robustness.

\begin{table*}
\centering
\caption{\small Model's hyperparameters. We allow a sweep over a single hyperparameter for each model.}
\vspace{2mm}
\begin{tabular}{l l  l}
\toprule
\textbf{Model} & \textbf{Parameter} & \textbf{Values}\\
\midrule 
$\beta$-VAE & $\beta$ & $[1,\ 2,\ 4,\ 6,\ 8,\ 16]$\\
AnnealedVAE & $c_{max}$ & $[5,\ 10,\ 25,\ 50,\ 75,\ 100]$\\
& iteration threshold & $100000$\\
& $\gamma$ & $1000$\\
FactorVAE & $\gamma$ & $[10,\ 20,\ 30,\ 40,\ 50,\ 100]$\\
DIP-VAE-I & $\lambda_{od}$ & $[1,\ 2,\ 5,\ 10,\ 20,\ 50]$\\
&$\lambda_{d}$ & $10\lambda_{od}$\\
DIP-VAE-II & $\lambda_{od}$ & $[1,\ 2,\ 5,\ 10,\ 20,\ 50]$\\
&$\lambda_{d}$ & $\lambda_{od}$\\
$\beta$-TCVAE & $\beta$ & $[1,\ 2,\ 4,\ 6,\ 8,\ 10]$\\
\bottomrule
\end{tabular}
\label{table:sweep_main}
\end{table*}

\paragraph{Limitations}
While we aim to provide useful and fair experimental studies, there are clear limitations to the conclusions that can be drawn from them due to design choices that we have taken.
In all these choices, we have aimed to capture what is considered the state-of-the-art inductive bias in the community.

On the data set side, we only consider images with a heavy focus on synthetic images. Our MPI3D~\citep{gondal2019transfer} is the only real world dataset, but the images are still taken in a very controlled setting where ground-truth factors are known and perfectly independent (excluding camera aberrations).
We do not explore other modalities and we only consider the toy scenario in which we have access to a data generative process with uniformly distributed factors of variations. 
Furthermore, all our data sets have a small number of independent discrete factors of variations without any confounding variables.

For the methods, we only consider the inductive bias of convolutional architectures. 
We do not test fully connected architectures or additional techniques such as skip connections. 
Furthermore, we do not explore different activation functions, reconstruction losses or different number of layers. 
We also do not vary any other hyperparameters other than the regularization weight. 
In particular, we do not evaluate the role of different latent space sizes, optimizers and batch sizes. 

Implementing the different disentanglement methods and metrics has proven to be a difficult endeavour.
Few ``official'' open source implementations are available and there are many small details to consider.
We take a best-effort approach to these implementations and implemented all the methods and metrics from scratch as any sound machine learning practitioner might do based on the original papers.
When taking different implementation choices than the original papers, we explicitly state and motivate them. 



\chapter{Unsupervised Learning of Disentangled Representations}\label{cha:unsup_dis}
In this chapter, we discuss the unsupervised learning of disentangled representations. The presented work is based on \citep{locatello2019challenging,locatello2020sober,locatello2020commentary} and was developed in collaboration with Stefan Bauer, Mario Lucic, Gunnar R\"atsch, Sylvain Gelly, Bernhard Sch\"olkopf, and Olivier Bachem. Olivier Bachem did the first sketch of the impossibility result and Francesco Locatello contributed to the final version and the theorem statement.  The \texttt{disentanglement\_lib} was done by Francesco Locatello and Olivier Bachem. All this work was partially done when Francesco Locatello was at Google Research, Brain Team in Zurich.

\section{Theoretical Impossibility}\label{sec:impossibility}
The first question that we investigate is whether unsupervised disentanglement learning is even possible for arbitrary ground-truth generative models.
Theorem~\ref{thm:impossibility} shows that without inductive biases both on models and data set, the task is fundamentally impossible.
\begin{theorem}
\label{thm:impossibility}
For $d>1$, let $\rvz\sim\P$ denote any distribution which admits a density $p(\rvz)=\prod_{i=1}^dp(\rz_i)$.
Then, there exists an infinite family of bijective functions $f:\supp(\rvz)\to\supp(\rvz)$ such that $\frac{\partial f_i(\vu)}{\partial u_j} \neq 0$ almost everywhere for all $i$ and $j$ (implying that $\rvz$ and $f(\rvz)$ are completely entangled) and $\P(\rvz \leq \vu) = \P(f(\rvz) \leq \vu)$ for all $\vu\in\supp(\rvz)$ (they have the same marginal distribution). 
\end{theorem}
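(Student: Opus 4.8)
The plan is to reduce the general case to the Gaussian case by a coordinate‑wise change of variables, exploit the rotational invariance of the standard Gaussian, and transport the construction back. Write $F_i$ for the CDF of $\rz_i$ and $\Phi$ for the standard normal CDF, and set $h_i(t):=\Phi^{-1}(F_i(t))$, so that $h(\rvz):=(h_1(\rz_1),\dots,h_d(\rz_d))$ has independent standard normal coordinates, i.e.\ $h(\rvz)\sim\mathcal N(0,I_d)$ (probability integral transform, valid for any continuous $F_i$). Since $\rvz$ has a density with independent coordinates, $\supp(\rvz)=\prod_i\supp(\rz_i)$, each $F_i$ is continuous, and up to a null set $h$ is a bijection from $\supp(\rvz)$ onto $\mathbb R^d$ with (coordinate‑wise) inverse $h_i^{-1}(s)=F_i^{-1}(\Phi(s))$, where $F_i^{-1}$ denotes the quantile function.

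Next I would pick the rotation. For $d\ge 2$ the set of $A\in\mathrm{SO}(d)$ all of whose entries $A_{ij}$ are nonzero is nonempty and infinite: for $d=2$ take $A=\bigl(\begin{smallmatrix}\cos\theta & -\sin\theta\\ \sin\theta & \cos\theta\end{smallmatrix}\bigr)$ with $\theta\in(0,\pi/2)$, and for larger $d$ compose such planar rotations (or note that this set is open and dense in $\mathrm{SO}(d)$). For any such $A$ define $f:=h^{-1}\circ A\circ h$, a bijection of $\supp(\rvz)$ onto itself (modulo null sets). Distribution preservation is then immediate: $h(\rvz)\sim\mathcal N(0,I_d)$ gives $A\,h(\rvz)\sim\mathcal N(0,AA^{\top})=\mathcal N(0,I_d)$, whose coordinates are again independent standard normals, so $h^{-1}(A\,h(\rvz))$ has independent coordinates with marginal CDFs $F_i$, i.e.\ the law of $\rvz$; hence $\P(\rvz\le\vu)=\P(f(\rvz)\le\vu)$ for all $\vu\in\supp(\rvz)$. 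Distinct rotations $A$ give distinct maps $f$ because $h$ is a bijection, yielding the required infinite family.

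It remains to check complete entanglement. Writing $f_i(\vu)=h_i^{-1}\bigl(\sum_j A_{ij}\,h_j(u_j)\bigr)$, the chain rule gives $\partial f_i(\vu)/\partial u_j=(h_i^{-1})'\bigl((A\,h(\vu))_i\bigr)\cdot A_{ij}\cdot h_j'(u_j)$. Here $h_j'(t)=p(t)/\phi(h_j(t))$ is nonzero wherever the density $p(t)>0$, hence a.e.\ on $\supp(\rz_j)$; $(h_i^{-1})'$ is the reciprocal of $h_i'$ evaluated at the corresponding point, hence finite and nonzero a.e.; and $A_{ij}\ne 0$ by choice of $A$. Therefore every partial derivative is nonzero almost everywhere, which is precisely the stated notion of $\rvz$ and $f(\rvz)$ being completely entangled, and the hypothesis $d>1$ is exactly what makes a nontrivial such $A$ available.

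The main obstacle is the measure‑theoretic bookkeeping rather than the idea: since $P$ is only assumed to admit a density, each $F_i$ is merely absolutely continuous, so $h_i$ and hence $f$ are differentiable only almost everywhere; one must verify that the exceptional null sets for the different coordinates do not conspire, that $h_i'=p/\phi(h_i)$ is finite a.e.\ (so its reciprocal is genuinely nonzero), and that $h$ is an honest bijection of $\supp(\rvz)$ after excising the null set of boundary points (e.g.\ right endpoints of ``gaps'' in a disconnected support, which is the only place $F_i$ fails to be strictly increasing). All of these are routine once one restricts to the full‑measure set on which $h$ and $h^{-1}$ are mutually inverse $C^1$ maps.
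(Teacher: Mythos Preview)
Your proposal is correct and follows essentially the same approach as the paper: transform each coordinate to a standard Gaussian via the probability integral transform composed with $\Phi^{-1}$, apply an orthogonal matrix with all entries nonzero (the paper uses a Householder reflection $I_d-2\vv\vv^{\top}$ rather than your planar rotations, but either works), and pull back, then verify distribution preservation by rotational invariance of the Gaussian and entanglement by the chain rule. The only cosmetic difference is that the paper keeps the CDF-to-uniform and uniform-to-Gaussian maps as two separate steps, whereas you compose them into a single $h$.
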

The proof sketch is presented in Figure~\ref{fig:proof_sketch}. 
The key idea is that we can construct two generative models whose latent variables $\rvz$ and $f(\rvz)$ are entangled with each other. If a representation is disentangled with one of these generative models it must be entangled with the other by construction: all the entries in the Jacobian of $f$ are non-zero, so a change in a single dimension of $\rvz$ implies that all dimensions of $\hat{\rvz}$ change.
Since $f$ is deterministic and $p(\rvz)=p(\hat{\rvz})$ almost everywhere, both generative models have the same marginal distribution of the observations $\rvx$ by construction, that is, $P(\rvx) = \int p(\rvx| \rvz)p(\rvz) d\rvz = \int p(\rvx|\hat{\rvz})p(\hat{\rvz}) d\hat{\rvz}$. It is impossible to distinguish which model $r(\rvx)$ should disentangle only observing only samples from $P(\rvx)$: both $\rvz$ and $f(\rvz)$ are equally plausible and ``look the same'' as they produce the same $\rvx$ with the same probability. 

\begin{figure}[ht]
    \centering
    \includegraphics[width=0.7\textwidth]{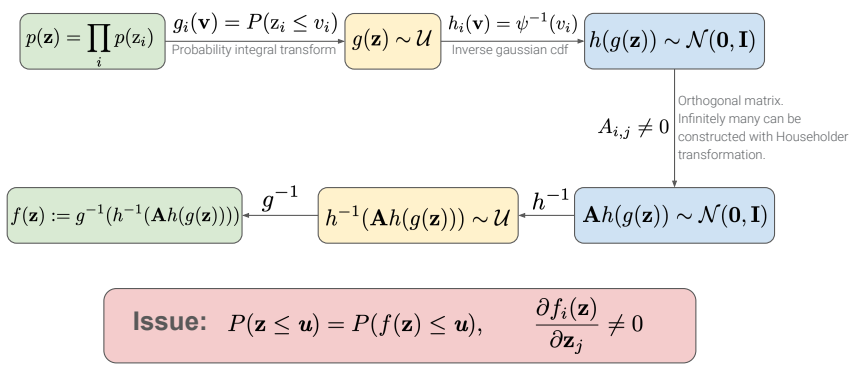}
    \caption{\small Proof sketch for the impossibility result. We construct a family of transformations $f$ that preserve the marginal distribution of the latent variable but has a non-zero jacobian almost everywhere, which ensures that $\rvz$ and $f(\rvz)$ are entangled with each other.}
    \label{fig:proof_sketch}
\end{figure}

This may not be surprising to readers familiar with the causality and ICA literature as it is consistent with the following argument: 
After observing $\rvx$, we can construct infinitely many generative models that have the same marginal distribution of $\rvx$.
Any of these models could be the true causal generative model for the data, and the right model cannot be identified given only the distribution of $\rvx$ \citep{peters2017elements}.
Similar results have been obtained in the context of non-linear ICA~\citep{hyvarinen1999nonlinear}. 
The main novelty of Theorem~\ref{thm:impossibility} is that it allows the explicit construction of latent spaces $\rvz$ and $\hat{\rvz}$ that are completely \textit{entangled} with each other in the sense of~\citep{bengio2013representation}.
We note that while this result is very intuitive for multivariate Gaussians it also holds for distributions that are not invariant to rotation, such as multivariate uniform distributions. The classical result of~\citep{hyvarinen1999nonlinear} differs from ours as we specifically show the entanglement of these equivalent models. On the technical side, (i) we do not assume that $\P(\rvx|\rvz)$ is deterministic, (ii) do not restrict $\rvx$ to be in the same space as $\rvz$, (iii) we allow for any prior that admits a factorizing density, (iv) the function $g$ in~\citep{hyvarinen1999nonlinear} maps from $\rvx$ to $\rvz$ whereas our $f$ maps from $\rvz$ to $\hat{\rvz}$. Conceptually, the function $g$ constructs alternative solutions to the non-linear ICA problem. Instead, we show that there are infinitely many completely entangled generative models for the same data. Regardless of which method is used for disentanglement, a model cannot be disentangled to all of them.

Theorem~\ref{thm:impossibility} implies that the unsupervised learning of disentangled representation is impossible for arbitrary data sets. Even in the infinite data regime, where supervised learning algorithms like k-nearest neighbors classifiers are consistent, no model can find a disentangled representation observing samples from $P(\rvx)$ only. This theoretical result motivates the need for either implicit supervision, explicit supervision, or suitable inductive biases so that the correct solution is naturally preferred. We remark that  Theorem~\ref{thm:impossibility} holds for arbitrary data sets and does not account for the structure that real-world generative models may exhibit. On the other hand, we clearly show that inductive biases are required both for the models (so that we find a specific set of solutions) and the data sets (such that these solutions match the true generative model).

\section{Can We Learn Disentangled Representations Without Supervision?}\label{sec:learning}
In this section, we provide a sober look at the performances of state-of-the-art approaches and investigate how effectively we can learn disentangled representations without looking at the labels. We focus our analysis on key questions for practitioners interested in learning disentangled representations reliably and without supervision.

\begin{figure}[p]
\centering\includegraphics[width=\textwidth]{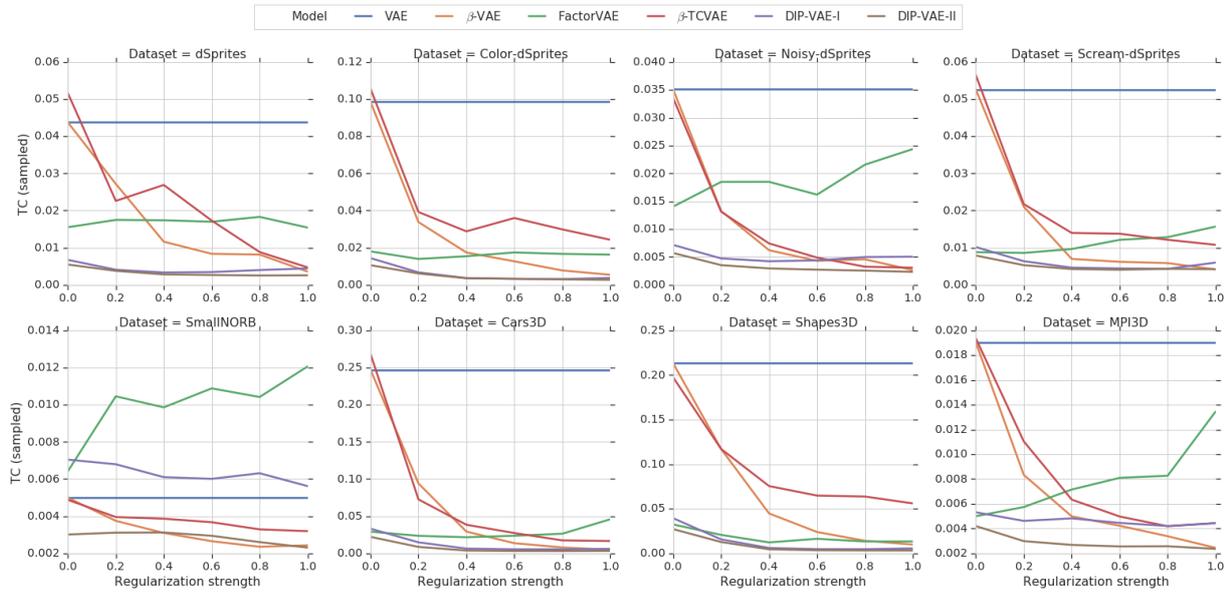}
\caption{\small Total correlation of sampled representation plotted against regularization strength for different data sets and approaches (except AnnealedVAE). The total correlation of the sampled representation decreases as the regularization strength is increased.}\label{figure:TCsampled}
\end{figure}
\begin{figure}[p]
\centering\includegraphics[width=\textwidth]{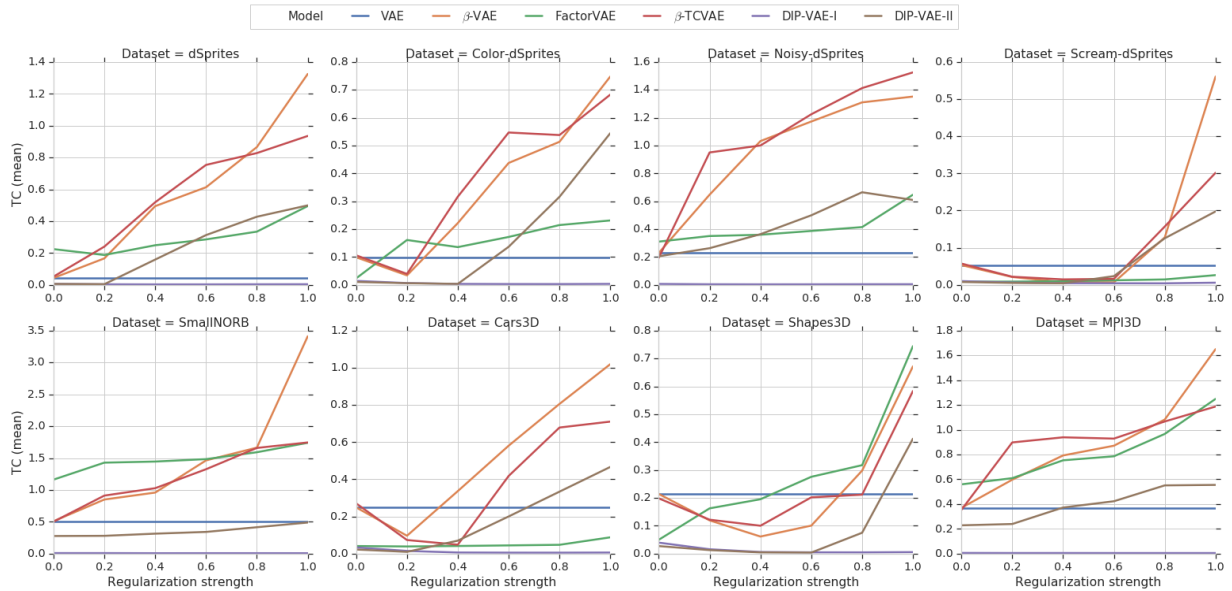}
\caption{\small Total correlation of mean representation plotted against regularization strength for different data sets and approaches (except AnnealedVAE). The total correlation of the mean representation does not necessarily decrease as the regularization strength is increased.}\label{figure:TCmean}
\end{figure}
\begin{figure}[pt]
\centering\includegraphics[width=\textwidth]{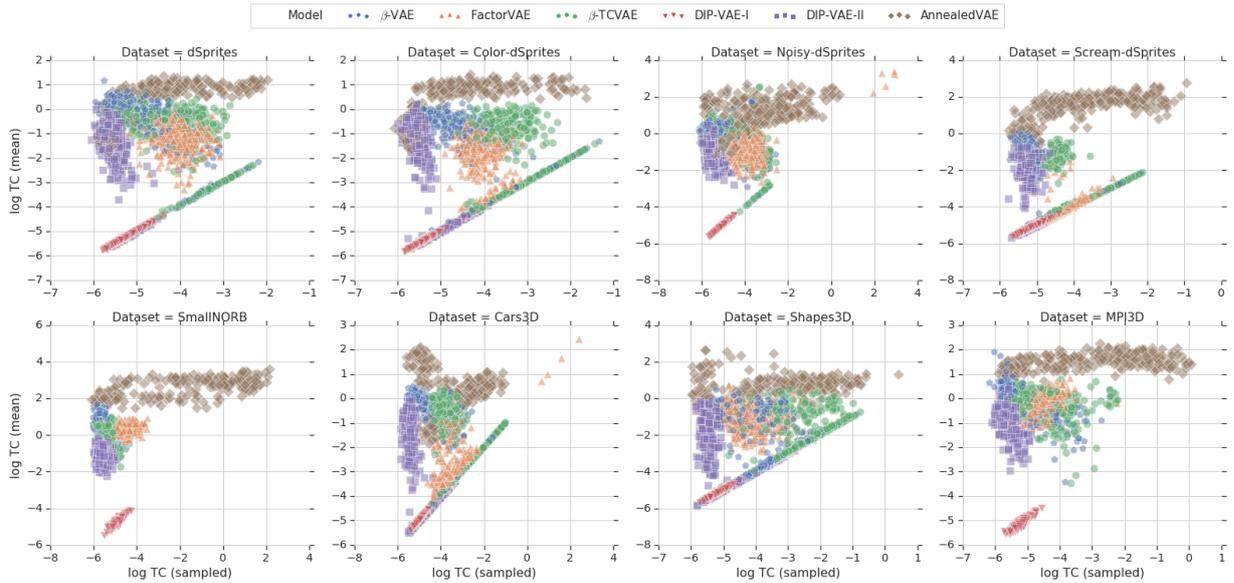}
\caption{\small Log total correlation of mean vs sampled representations. For a large number of models, the total correlation of the mean representation is higher than that of the sampled representation.}\label{figure:TCmeansampled}
\end{figure}

\subsection{Can Current Methods Enforce a Uncorrelated Aggregated Posterior and Representation?}\label{app:factorizing_q_tc}
We investigate whether the considered unsupervised disentanglement approaches are effective at enforcing a factorizing and, thus, uncorrelated aggregated posterior.
For each trained model, we sample $\num{10000}$ images and sample from the corresponding approximate posterior. 
We then fit a multivariate Gaussian distribution over these \num{10000} samples by computing the empirical mean and covariance matrix.
Finally, we compute the total correlation of the fitted Gaussian and report the median value for each data set, method and hyperparameter value.

Figure~\ref{figure:TCsampled} shows the total correlation of the sampled representation plotted against the regularization strength for each data set and method except AnnealedVAE.
\looseness=-1Overall, we observe that plain vanilla variational autoencoders (the $\beta$-VAE model with $\beta=1$) typically exhibit the highest total correlation (with the exception of DIP-VAE-I and FactorVAE on SmallNORB). For the other models, the total correlation of the sampled representation generally decreases on all data sets as the regularization strength is increased (with the exception of FactorVAE).
We did not report results for AnnealedVAE, as it is much more sensitive to the regularization strength.

While many of the considered methods aim to enforce a factorizing aggregated posterior, they use the mean vector of the Gaussian encoder as the representation and not a sample from the Gaussian encoder.
This may seem like a minor, irrelevant modification; however, it is not clear whether a factorizing aggregated posterior also ensures that the dimensions of the mean representation are uncorrelated.
To test whether this is true, we compute the same total correlation based on the mean representation (as opposed to sampled). 
Figure~\ref{figure:TCmean} shows the total correlation of the mean representation plotted against the regularization strength for each data set and method except AnnealedVAE.
We observe that, generally, increased regularization leads to an increased total correlation of the mean representations. DIP-VAE-I optimizes the covariance matrix of the mean representation to be diagonal which implies that the corresponding total correlation (as we compute it) is low. The DIP-VAE-II objective enforces the covariance matrix of the sampled representation to be diagonal, which seems to lead to a factorized mean representation on some data sets (for example, Shapes3D), but also seems to fail on others (dSprites, MPI3D).
In Figure~\ref{figure:TCmeansampled}, we further plot the log total correlations of the sampled representations versus the mean representations for each of the trained models.
It can be clearly seen that for a large number of models, the total correlation of the mean representations is much higher than that of the sampled representations. 

\paragraph{Implications}
Overall, these results lead us to conclude with minor exceptions that the considered methods are effective at enforcing an aggregated posterior whose individual dimensions are not correlated but that this does not seem to imply that the dimensions of the mean representation (usually used for representation) are uncorrelated.

\begin{figure}[p]
\centering\includegraphics[width=\textwidth]{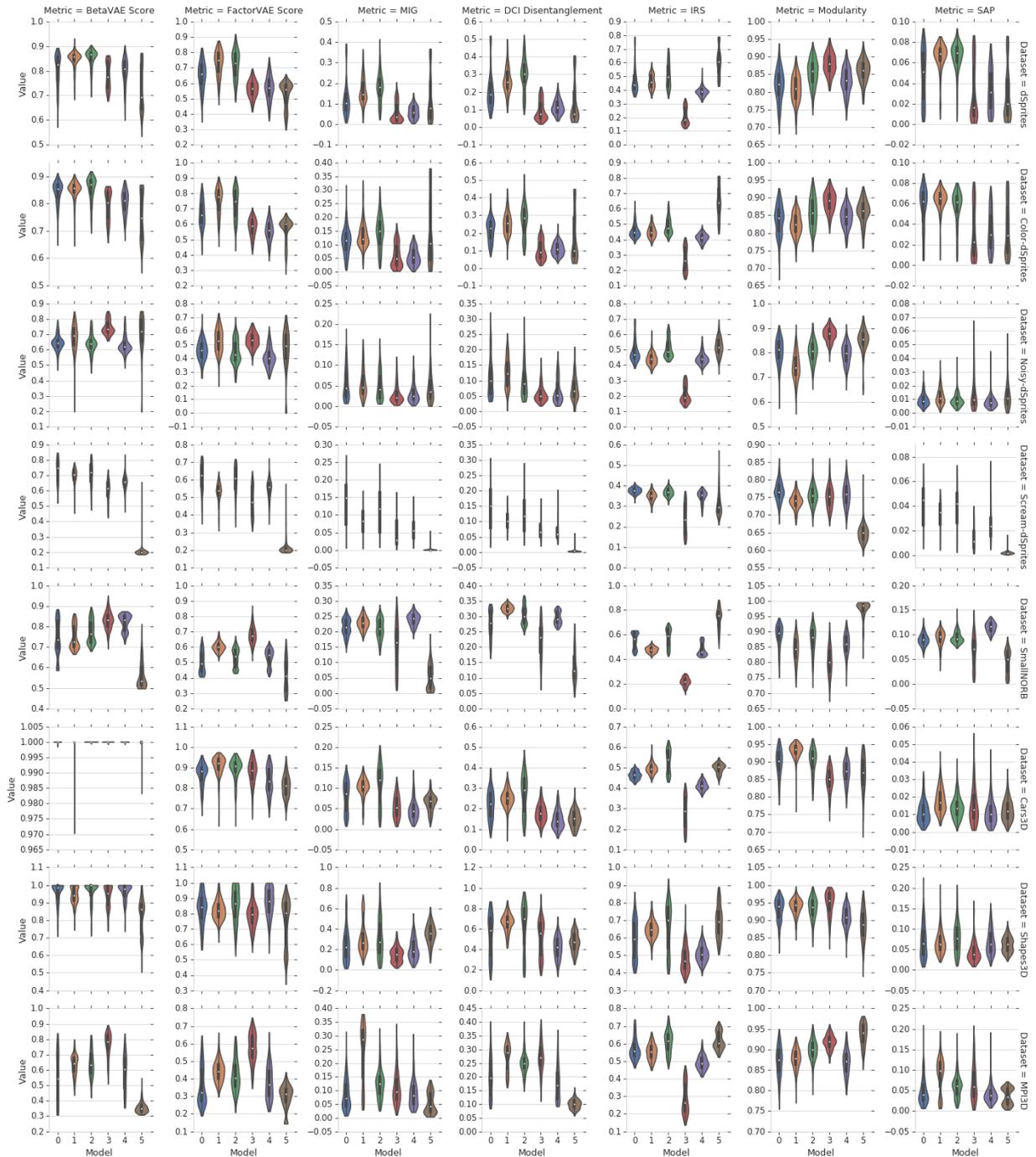}
\caption{\small Score for each method for each score (column) and data set (row) with different hyperparameters and random seed. Models are abbreviated (0=$\beta$-VAE, 1=FactorVAE, 2=$\beta$-TCVAE, 3=DIP-VAE-I, 4=DIP-VAE-II, 5=AnnealedVAE). The scores are heavily overlapping and we do not observe a consistent pattern. We conclude that hyperparameters and random seed matter more than the model choice.}\label{figure:score_vs_method}
\end{figure}
\begin{figure}[ht]
\centering\includegraphics[width=\textwidth]{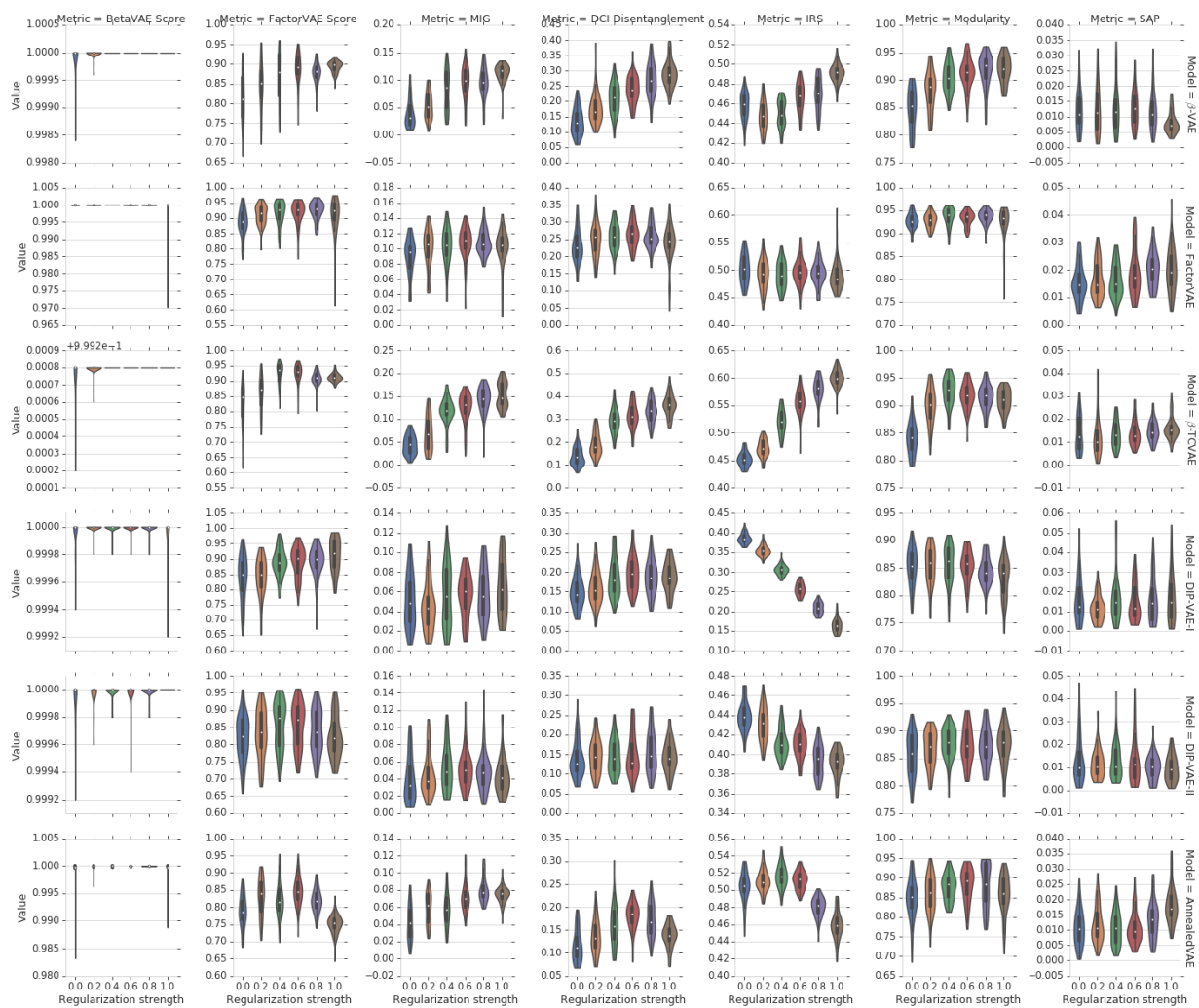}
\caption{\small Distribution of scores for different models, hyperparameters and regularization strengths on Cars3D. We clearly see that randomness (in the form of different random seeds) has a substantial impact on the attained result and that a good run with a bad hyperparameter can beat a bad run with a good hyperparameter in many cases. IRS seem to be an exception on some data sets.}\label{figure:random_seed_effect_Cars3D}
\end{figure}

\subsection{Which Method and Hyperparameter Should be Used?}\label{app:hyper_importance}
The first question a practitioner may face is how disentanglement is affected by the model choice, the hyperparameter selection, and randomness (in the form of different random seeds).
To investigate this, we compute all the considered disentanglement metrics for each of our trained models.
In Figure~\ref{figure:score_vs_method}, we show the range of attainable disentanglement scores for each method on each data set varying the regularization strength and the random seed.
We observe that these ranges are heavily overlapping for different models leading us to (qualitatively) conclude that the choice of hyperparameters and the random seed seems to be substantially more important than the objective function. 
We remark that in our study, we have fixed the range of hyperparameters a priori to six different values for each model and did not explore additional hyperparameters based on the results (as that would bias our study).
However, this also means that specific models may have performed better than in Figure~\ref{figure:score_vs_method} if we had chosen a different set of hyperparameters.

In Figure~\ref{figure:random_seed_effect_Cars3D}, we further show the impact of randomness in the form of random seeds.
Each violin plot shows the distribution of each disentanglement metric across all 50 trained models for each model and hyperparameter setting on Cars3D.
We clearly see that randomness (in the form of different random seeds) has a substantial impact on the attained result. A good run with a bad hyperparameter can beat a bad run with a good hyperparameter in many cases. We note that IRS seem to exhibit a clearer trend on some data sets. However, we remark that IRS primarily measure robustness and is often at odds with the other disentanglement metrics as discussed in Chapter~\ref{cha:eval_dis}.

To quantify these claims, we perform a variance analysis by predicting the different disentanglement scores with ordinary least squares for each data set:
If we allow the score to depend only on the objective function (categorical variable), we can only explain $37\%$ of the variance of the scores on average. 
Similarly, if the score depends on the Cartesian product of objective function and regularization strength (again categorical), we can explain $59\%$ of the variance while the rest is due to the random seed. 

\paragraph{Implications} 
The disentanglement scores of unsupervised models are heavily influenced by randomness (in the form of the random seed) and hyperparameter's choice (in the form of the regularization strength). The objective function appears to have less impact. Selecting good hyperparameters and good runs seem to be the most important.

\subsection{Are There Reliable Recipes for Model Selection?}\label{app:hyper_selection}
\looseness=-1In light of the results of Section~\ref{app:hyper_importance}, we investigate how to choose good hyperparameters how we can distinguish between good and bad training runs. 
We advocate that model selection \emph{should not} depend on the considered disentanglement score for the following reasons:
The point of unsupervised learning of disentangled representation is that there is no access to the labels as otherwise we could incorporate them and would have to compare to semi-supervised and fully supervised methods as we do in Chapter~\ref{cha:semi_sup}.
All the disentanglement metrics considered in this chapter require a considerable amount of ground-truth labels or even the full generative model (for example, for the BetaVAE and the FactorVAE metric).
Hence, one may substantially bias the results of a study by tuning hyperparameters based on (supervised) disentanglement metrics. 
Furthermore, we argue that it is not sufficient to fix a set of hyperparameters \emph{a priori} and then show that one of those hyperparameters and a specific random seed achieves a good disentanglement score as it amounts to showing the existence of a good model, but does not guide the practitioner in finding it.
Finally, in many practical settings, we might not even have access to adequate labels. It may be hard to identify the true underlying factor of variations, particularly if we consider data modalities that are less suitable to human interpretation than images.
In this study, we focus on choosing the learning model and the regularization strength corresponding to that loss function.
However, we note that in practice this problem is likely even harder as a practitioner might also want to tune other modeling choices such architecture or optimizer.
 
\subsubsection{General Recipes for Hyperparameter Selection}
We first investigate whether we may find generally applicable ``rules of thumb'' for choosing the hyperparameters.
For this, we plot in Figure~\ref{figure:score_vs_hyp} different disentanglement metrics against different regularization strengths for each model and each data set.
The values correspond to the median obtained values across 50 random seeds for each model, hyperparameter, and data set.
There seems to be no model dominating all the others and, for each model, there does not seem to be a consistent strategy in choosing the regularization strength to maximize disentanglement scores.
Furthermore, even if we could identify a good objective function and corresponding hyperparameter value, we still could not distinguish between a good and a bad training run.
\begin{figure}[ht]
\centering\includegraphics[width=\textwidth]{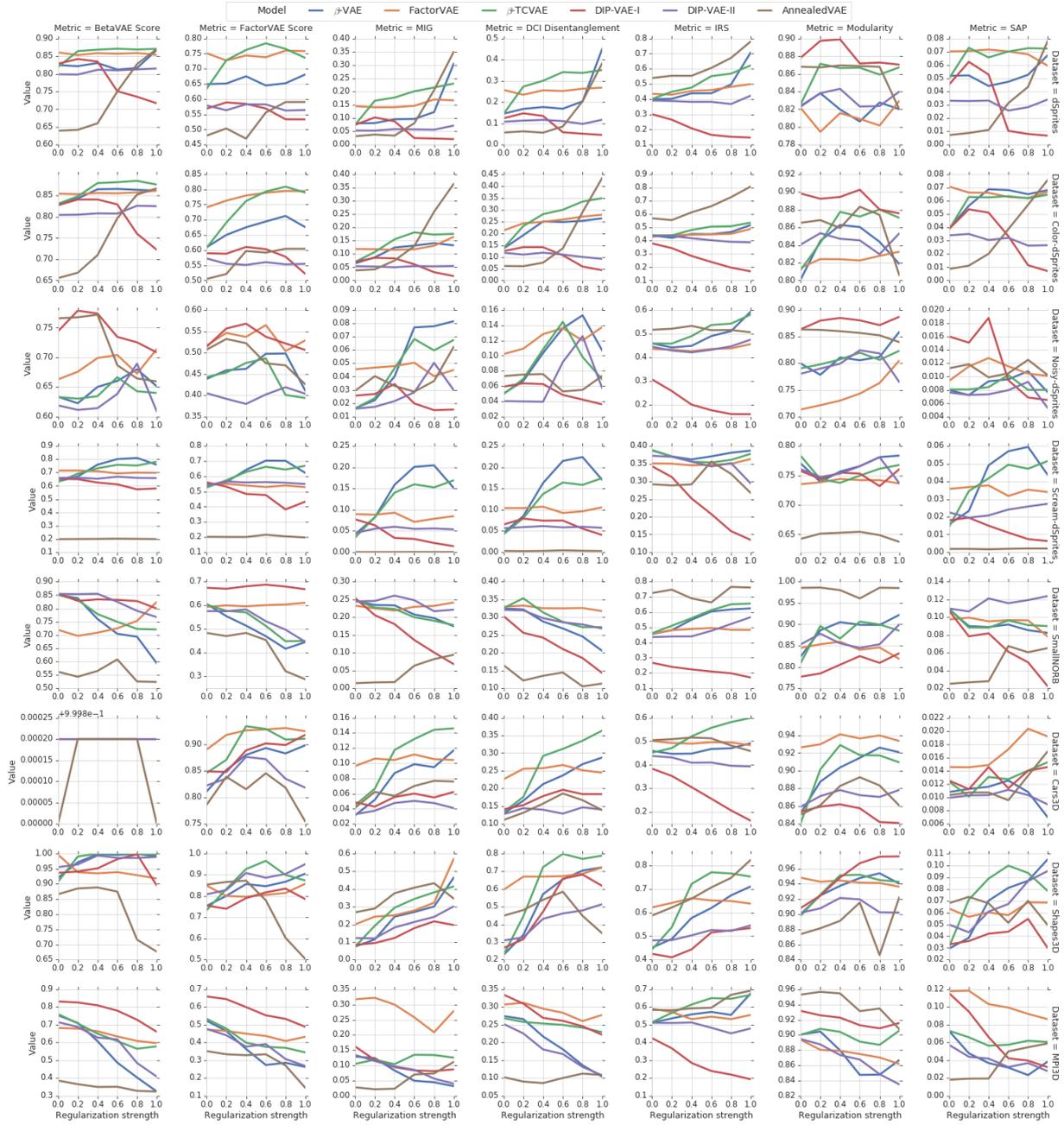}
\caption{\small Score vs hyperparameters for each score (column) and data set (row). There seems to be no model dominating all the others and for each model there does not seem to be a consistent strategy in choosing the regularization strength.}\label{figure:score_vs_hyp}
\end{figure}

\subsubsection{Model Selection Based on Unsupervised Scores}
Another approach could be to select hyperparameters based on unsupervised scores such as the reconstruction error, the KL divergence between the prior and the approximate posterior, the Evidence Lower Bound or the estimated total correlation of the sampled representation.
This would have the advantage that we could select specific trained models and not just good hyperparameter settings whose median trained model would perform well.
To test whether such an approach is fruitful, we compute the rank correlation between these unsupervised metrics and the disentanglement metrics and present it in Figure~\ref{figure:unsupervised_metrics}.
While we observe some correlations, no clear pattern emerges, which leads us to conclude that this approach is unlikely to be successful in practice.
\begin{table}[t]
    \centering
    \vspace{2mm}
\begin{tabular}{lrr}
\toprule
{} &  Random different data set &  Same data set \\
\midrule
Random different metric &                      52.7\% &          62.1\% \\
Same metric             &                      59.6\% &          81.9\% \\
\bottomrule
\end{tabular}
\caption{\small Probability of outperforming random model selection on a different random seed. A random disentanglement metric and data set is sampled and used for model selection. That model is then compared to a randomly selected model: (i) on the same metric and data set, (ii) on the same metric and a random different data set, (iii) on a random different metric and the same data set, and (iv) on a random different metric and a random different data set. The results are averaged across $\num{10000}$ random draws.}\label{table:app_random_model}
\end{table}

\begin{figure}[p]
\centering\includegraphics[width=\textwidth]{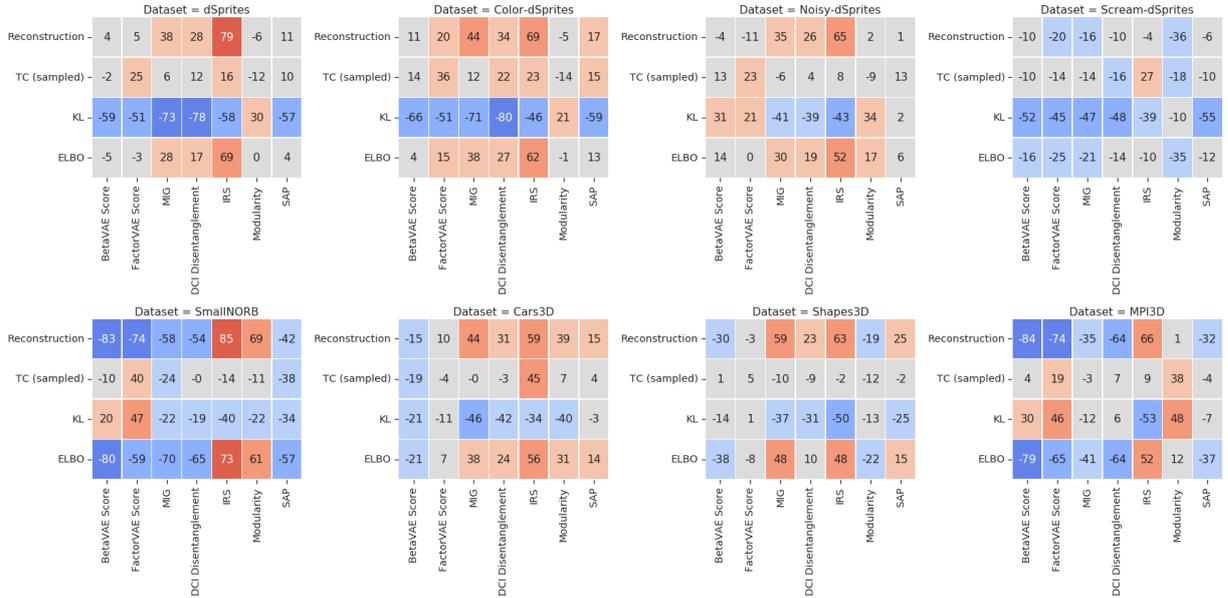}
\caption{\small Rank correlation between unsupervised scores and supervised disentanglement metrics. 
The unsupervised scores we consider do not seem to be useful for model selection.}\label{figure:unsupervised_metrics}
\end{figure}
\begin{figure}[p]
\centering\includegraphics[width=\textwidth]{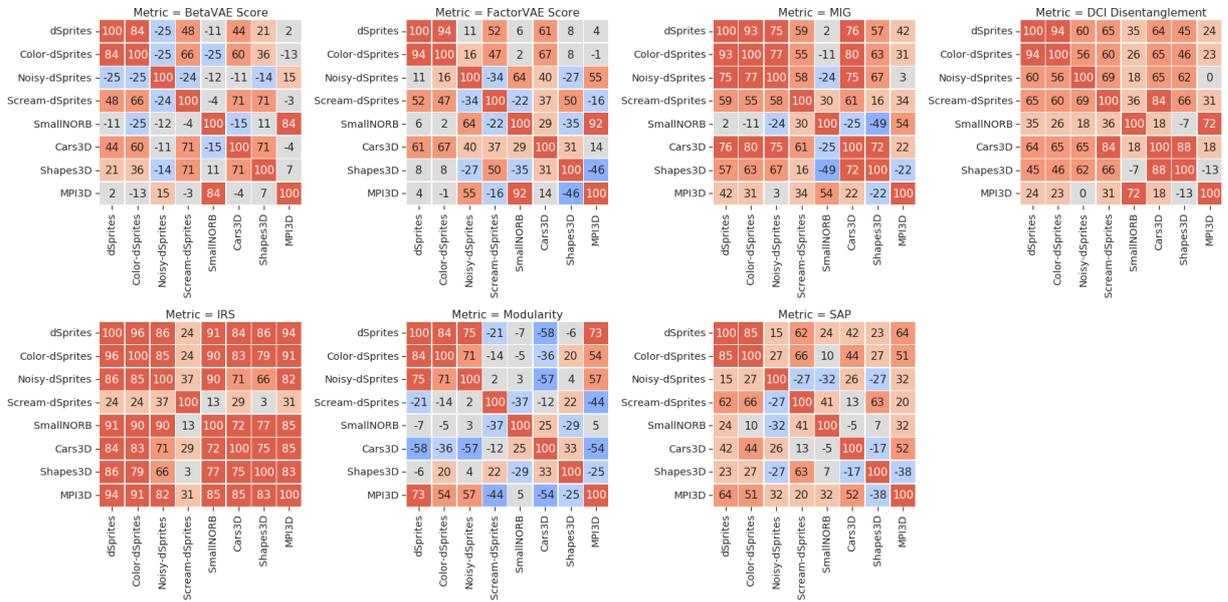}
\caption{\small Rank-correlation of different disentanglement metrics across different data sets. Good hyperparameters only seem to transfer between dSprites and Color-dSprites but not in between the other data sets.}\label{figure:dataset_vs_dataset}
\end{figure}


\subsubsection{Hyperparameter Selection Based on Transfer}
The final strategy for hyperparameter selection that we consider is based on transferring good settings across data sets.
The key idea is that good hyperparameter settings may be inferred on data sets where we have labels available (such as dSprites) and then applied to novel data sets.
In Figure~\ref{figure:dataset_vs_dataset}, we shows the rank correlations obtained between different data sets for each disentanglement scores.
While these result suggest that some transfer of hyperparameters is possible, it does not allow us to distinguish between good and bad random seeds on the target data set.

To illustrate this, we compare such a transfer based approach to hyperparameter selection to random model selection as follows: 
We first randomly sample one of our 50 random seeds and consider the set of trained models with that random seed. 
First, we sample one of our 50 random seeds, a random  disentanglement metric, and a data set and use them to select the hyperparameter setting with the highest attained score.
Then, we compare that selected hyperparameter setting to a randomly selected model on either the same or a random different data set, based on either the same or a random different metric and for a randomly sampled seed.
Finally, we report the percentage of trials in which this transfer strategy outperforms or performs equally well as random model selection across $\num{10000}$ trials in Table~\ref{table:app_random_model}.
If we choose the same metric and the same data set (but a different random seed), we obtain a score of $81.9\%$.
If we aim to transfer for the same metric across data sets, we achieve around $59.6\%$.
Finally, if we transfer both across metrics and data sets, our performance drops to $52.7\%$. The drop in performance transferring hyperparameters across different metrics may be interpreted in light of the results of Section~\ref{sec:eval_same_concept}.

\paragraph{Implications} 
\looseness=-1Unsupervised model selection remains an unsolved problem. 
Transfer of good hyperparameters between metrics and data sets does not seem to work as there appears to be no unsupervised way to distinguish between good and bad random seeds on the target task. Recent work~\citep{duan2019heuristic} may be used to select stable hyperparameter configurations. The IRS score seems to be more correlated with the unsupervised training metrics on most data sets and generally transfer the hyperparameters better. However, as we shall see in Section~\ref{sec:metrics_agreement}, IRS is not very correlated with the other disentanglement metrics.

\begin{figure}[pt]
\centering\includegraphics[width=\textwidth]{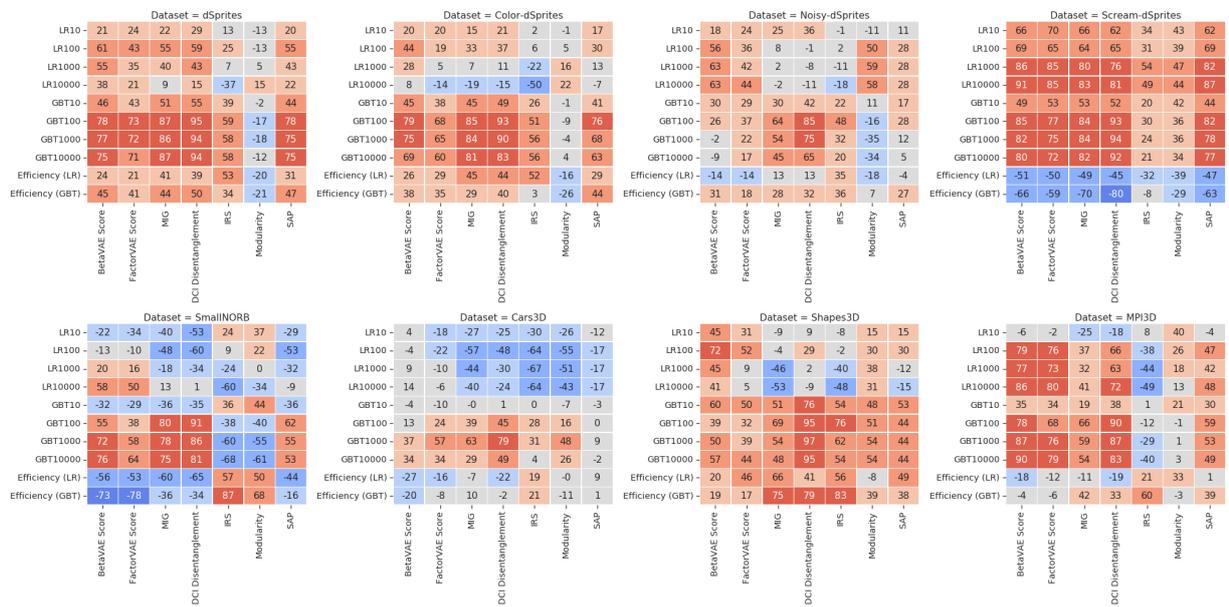}
\caption{\small Rank-correlation between the metrics and the performance on downstream task on different data sets. We observe some correlation between most disentanglement metrics and downstream performance. However, the correlation varies across data sets.
}\label{figure:downstream_tasks}
\end{figure}

\section[Downstream Accuracy and Sample Complexity]{Downstream Accuracy and Sample Complexity}\label{sec:downstream}
One of the main folklore arguments in favor of disentangled representations is that they should be useful for downstream tasks.
In particular, it is often argued that disentanglement should lead to a better sample complexity of learning~\citep{bengio2013representation,scholkopf2012causal,peters2017elements}.
In this section, we consider the simplest downstream classification task where the goal is to recover the true factors of variations from the learned representation using either multi-class logistic regression (LR) or gradient boosted trees (GBT).
We investigate the relationship between disentanglement and the average classification accuracy on these downstream tasks as well as whether better disentanglement leads to a decreased sample complexity of learning.
To compute the classification accuracy for each trained model, we sample true factors of variations and observations from our ground-truth generative models.
We then feed the observations into our trained model and take the mean of the Gaussian encoder as the representations.
Finally, we predict each of the ground-truth factors based on the representations with a separate learning algorithm.
We consider both a 5-fold cross-validated multi-class logistic regression as well as gradient boosted trees of the Scikit-learn package.
For each of these methods, we train on $\num{10}$, $\num{100}$, $\num{1000}$ and $\num{10000}$ samples.
We compute the average accuracy across all factors of variation using an additional set $\num{10000}$ randomly drawn samples.

\looseness=-1Figure~\ref{figure:downstream_tasks} shows the rank correlations between the disentanglement metrics and the downstream performance for all considered data sets.
We observe that all metrics except Modularity seem to be correlated with increased downstream performance on the different variations of dSprites and to some degree on Shapes3D. 
However, it is not clear whether this is due to the fact that disentangled representations perform better or whether some of these scores actually also (partially) capture the \textit{informativeness} of the evaluated representation. Furthermore, the correlation is weaker or inexistent on other data sets (for example, Cars3D, although this dataset may be harder as one factor of variation has significantly more classes). Generally, GBT seem to correlate better with disentanglement, in particular with DCI Disentanglement, due to its stronger axis bias. 

To assess the sample complexity argument we compute for each trained model a statistical efficiency score which we define as the average accuracy based on $\num{100}$ samples divided by the average accuracy based on $\num{10000}$ samples for either the logistic regression or the gradient boosted trees.
The key idea is that if disentangled representations lead to sample efficiency, then they should also exhibit a higher statistical efficiency score\footnote{We remark that this score differs from the definition of sample complexity commonly used in statistical learning theory.}.
Overall, we do not observe conclusive evidence in Figure~\ref{figure:downstream_tasks} that models with higher disentanglement scores also lead to higher statistical efficiency.
We observe that indeed models with higher disentanglement scores seem to often exhibit better performance for gradient boosted trees with 100 samples.
However, considering all data sets, it appears that overall increased disentanglement is rather correlated with better downstream performance (on some data sets) and not statistical efficiency.
We do not observe that higher disentanglement scores reliably lead to a higher sample efficiency.

\paragraph{Implications}
While the empirical results in this section are negative, they should also be interpreted with care.
\looseness=-1After all, we have seen in previous sections that the models considered in this study fail to reliably produce disentangled representations. 
Hence, the results in this section might change if one were to consider a different set of models, for example semi-supervised or fully supervised one.
Furthermore, there are many more potential notions of usefulness such as interpretability and fairness that we have not considered in this experimental evaluation. While prior work~\citep{steenbrugge2018improving,laversanne2018curiosity,nair2018visual,higgins2017darla,higgins2018scan} successfully applied disentanglement methods such as $\beta$-VAE on a variety of downstream tasks, it is not clear to us that these approaches and trained models performed well \emph{because of disentanglement}. Finally, we remark that disentanglement is mostly about \textit{how} the information is stored in the representation. Tasks that explicitly rely on this structure are likely to benefit more from disentanglement rather than the ones considered in this chapter. Notable examples are our applications in fairness \citep{locatello2019fairness} of Chapter~\ref{cha:fairness} and abstract visual reasoning \citep{van2019disentangled}. In the former, the we show that disentanglement can be used to isolate the effect of unobserved sensitive variables to limit their negative impact to the downstream prediction (see Chapter~\ref{cha:fairness}). In the latter, we showed compelling evidence that disentanglement is useful for abstract visual reasoning tasks in terms of sample complexity. We remark that these benefits are specific to some of the notions of disentanglement considered in this work, such as DCI Disentanglement and FactorVAE. 

\section{Proof of Theorem~\ref{thm:impossibility}}
\begin{proof}
To show the claim, we explicitly construct a family of functions $f$ using a sequence of bijective functions.
Let $d > 1$ be the dimensionality of the latent variable $\rvz$ and consider the function $g:\supp(\rvz)\to[0,1]^d$ defined by
\[
g_i(\vv) = \P(\rz_i\leq v_i) \quad \forall i =1, 2, \dots, d.
\]
Since $\P$ admits a density $p(\rvz)=\prod_ip(\rz_i)$, the function $g$ is bijective and, for almost every $\vv\in\supp(\rvz)$, it holds that $\frac{\partial g_i(\vv)}{\partial v_i}\neq 0$ for all $i$ and $\frac{\partial g_i(\vv)}{\partial v_j} = 0$ for all $i\neq j$.
Furthermore, it is easy to see that, by construction, $g(\rvz)$ is a independent $d$-dimensional uniform distribution.
Similarly, consider the function $h:(0,1]^d \to \R^d$ defined by
\[
h_i(\vv) = \psi^{-1}(v_i)\quad \forall i =1, 2, \dots, d,
\]
where $\psi(\cdot)$ denotes the cumulative density function of a standard normal distribution.
Again, by definition, $h$ is bijective with $\frac{\partial h_i(\vv)}{\partial v_i}\neq 0$ for all $i$ and $\frac{\partial h_i(\vv)}{\partial v_j} = 0$ for all $i \neq j$.
Furthermore, the random variable $h(g(\rvz))$ is a $d$-dimensional standard normal distribution.

Let $\mA\in\R^{d\times d}$ be an arbitrary orthogonal matrix with $A_{ij}\neq0$ for all $i=1,2, \dots, d$ and $j=1, 2, \dots, d$. 
An infinite family of such matrices can be constructed using a Householder transformation:
Choose an arbitrary $\alpha\in(0, 0.5)$ and consider the vector $\vv$ with $v_1=\sqrt{\alpha}$ and $v_i=\sqrt{\frac{1-\alpha}{d-1}}$ for $i=2, 3, \dots, d$. 
By construction, we have $\vv^T\vv = 1$ and both $v_i\neq0$ and $v_i\neq\sqrt{\frac12}$ for all $i=1, 2, \dots, d$.
Define the matrix $\mA = \mI_d - 2 \vv\vv^T$ and note that $A_{ii} = 1 - 2v_i^2 \neq 0$ for all $1,2, \dots, d$ as well as $A_{ij} = -v_iv_j\neq0$ for all $i\neq j$.
Furthermore, $\mA$ is orthogonal since 
\begin{align*}
\mA^T\mA = \left(\mI_d - 2 \vv\vv^T\right)^T\left(\mI_d - 2 \vv\vv^T\right) = \mI_d - 4 \vv\vv^T + 4 \vv(\vv^T\vv)\vv^T = \mI_d.
\end{align*}

Since $\mA$ is orthogonal, it is invertible and thus defines a bijective linear operator.
The random variable $\mA h(g(\rvz))\in\R^d$ is hence an independent, multivariate standard normal distribution since the covariance matrix $\mA^T\mA$ is equal to $\mI_d$.

Since $h$ is bijective, it follows that $h^{-1}(\mA h(g(\rvz)))$ is an independent $d$-dimensional uniform distribution.
Define the function $f: \supp(\rvz)\to\supp(\rvz)$
\[
f(\vu) = g^{-1}(h^{-1}(\mA h(g(\vu))))
\]
and note that by definition $f(\rvz)$ has the same marginal distribution as $\rvz$ under $\P$, \ie, $\P(\rvz \leq \vu) = \P(f(\rvz) \leq \vu)$ for all $\vu$.
Finally, for almost every $\vu\in\supp(\rvz)$, it holds that
\[
\frac{\partial f_i(\vu)}{\partial u_j} 
= 
\frac{A_{ij} \cdot\frac{\partial h_j(g(\vu))}{\partial v_j}\cdot \frac{\partial g_j(\vu)}{\partial u_j}}
{\frac{\partial h_i(h_i^{-1}(\mA h(g(\vu)))}{\partial v_i} \cdot \frac{\partial g_i(g^{-1}(h^{-1}(\mA h(g(\vu)))))}{\partial v_i}}
\neq 0,
\]
as claimed.
Since the choice of $\mA$ was arbitrary, there exists an infinite family of such functions $f$.
\end{proof}

\chapter{Evaluating Disentangled Representations}\label{cha:eval_dis}
\looseness=-1In this chapter, we discuss the evaluation of disentangled representations. The presented work is based on \citep{locatello2019challenging,locatello2020sober} and was developed in collaboration with Stefan Bauer, Mario Lucic, Gunnar R\"atsch, Sylvain Gelly, Bernhard Sch\"olkopf, and Olivier Bachem. This work was partially done when Francesco Locatello was at Google Research, Brain Team in Zurich.



\section{What Do Disentanglement Metrics Measure?}
The disentanglement of a learned representation can be seen as a certain structural property of the statistical relations between the latent space of the VAE with that of the ground-truth factors. Therefore, when evaluating disentangled representations, several metrics typically estimate these statistical dependencies first and then compute how well this structure encodes the desired properties. 
As quantifying statistical dependencies through independence testing is a challenging task~\citep{shah2018hardness}, several approaches have been proposed.
We identify two prevalent settings: using interventional~\citep{higgins2016beta,kim2018disentangling,suter2018interventional} and observational data~\citep{chen2018isolating,ridgeway2018learning,eastwood2018framework}. 

\begin{figure}
    \begin{center}
    \begin{subfigure}{0.47\textwidth}
    \includegraphics[width=\textwidth]{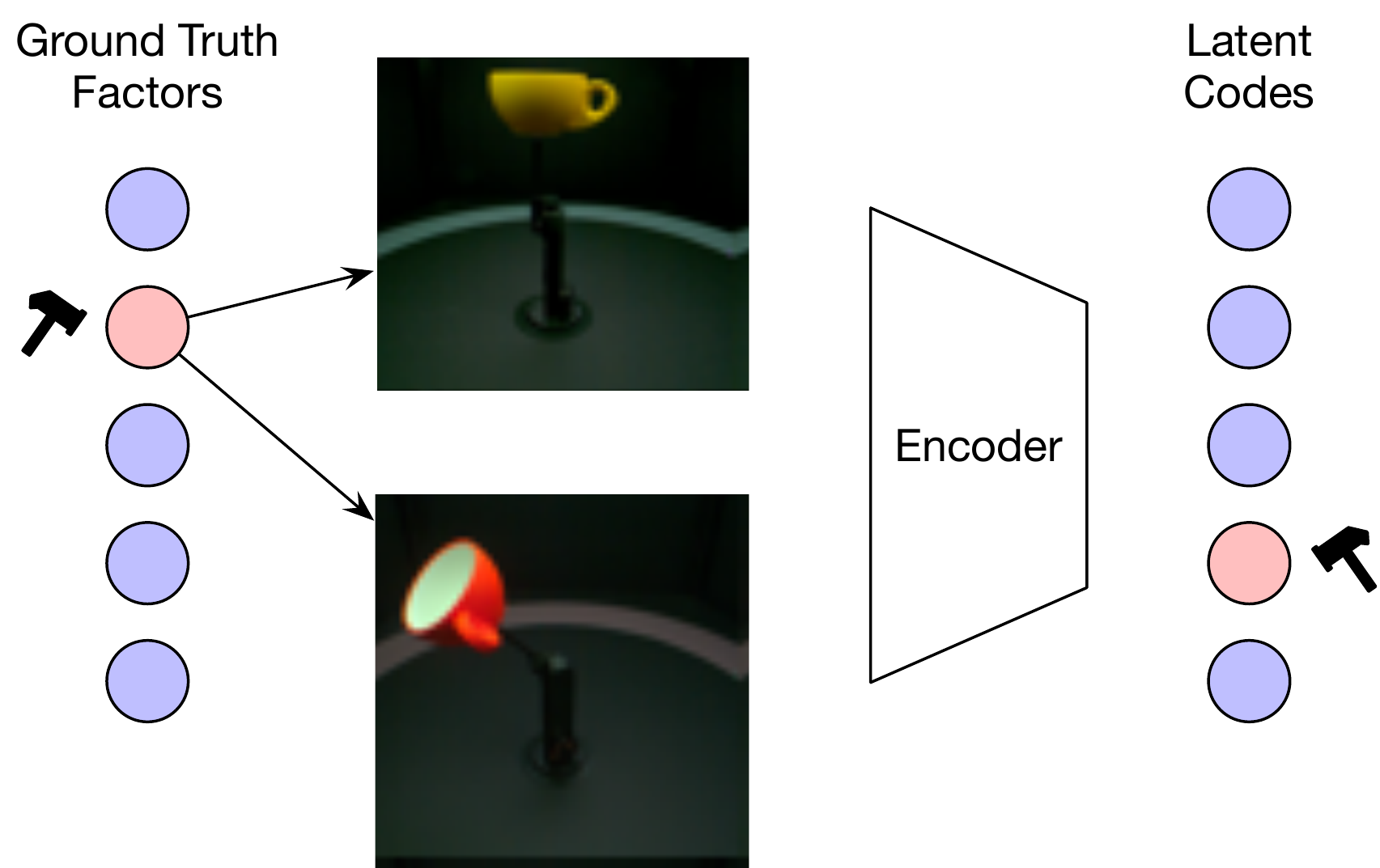}
    \caption{\small \looseness=-1Encoder \emph{consistency} \citep{shu2020weakly} for one factor of variation: intervening (\myhammer{}) on a ground-truth factor (or subset of factors) by \emph{fixing} its value corresponds to \emph{fixing} a dimension (or subset of dimensions) in the representation. In this example the object shape is constant and everything else is changing.}\label{fig:disentanaglement_notions_int_fix}
    \end{subfigure}%
    \hspace{1.5em}
    \begin{subfigure}{0.47\textwidth}
    \includegraphics[width=\textwidth]{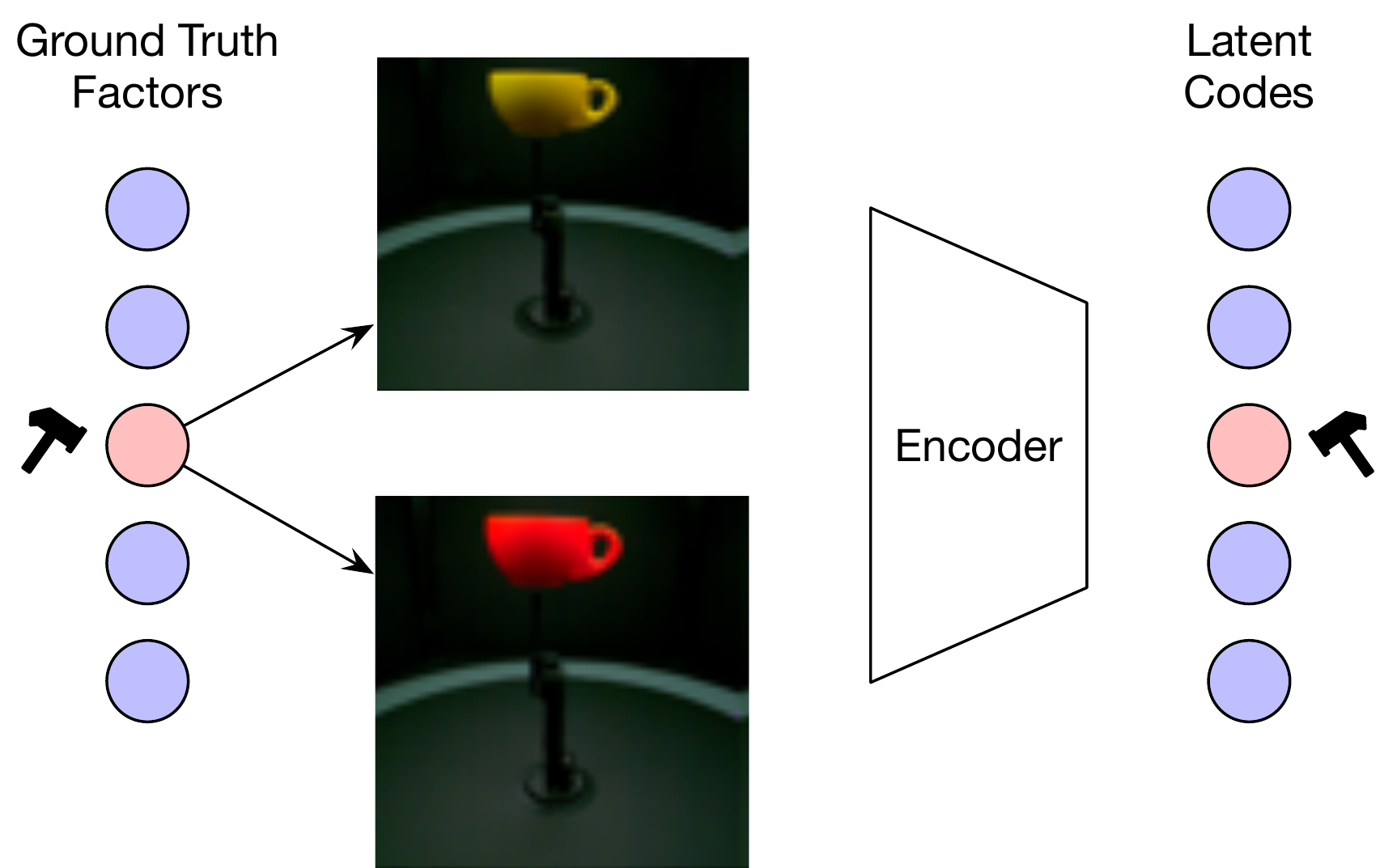}
    \caption{\small \looseness=-1Encoder \emph{restrictiveness} \citep{shu2020weakly}  for one factor of variation: intervening (\myhammer{}) on a ground-truth factor (or subset of factors) by \emph{changing} its value corresponds to \emph{changing} a dimension (or subset of dimensions) in the representation. In this example only the color is changing.}\label{fig:disentanaglement_notions_int_change}
    \end{subfigure}
    \vspace{2em}
    \begin{subfigure}{0.47\textwidth}
    \includegraphics[width=\textwidth]{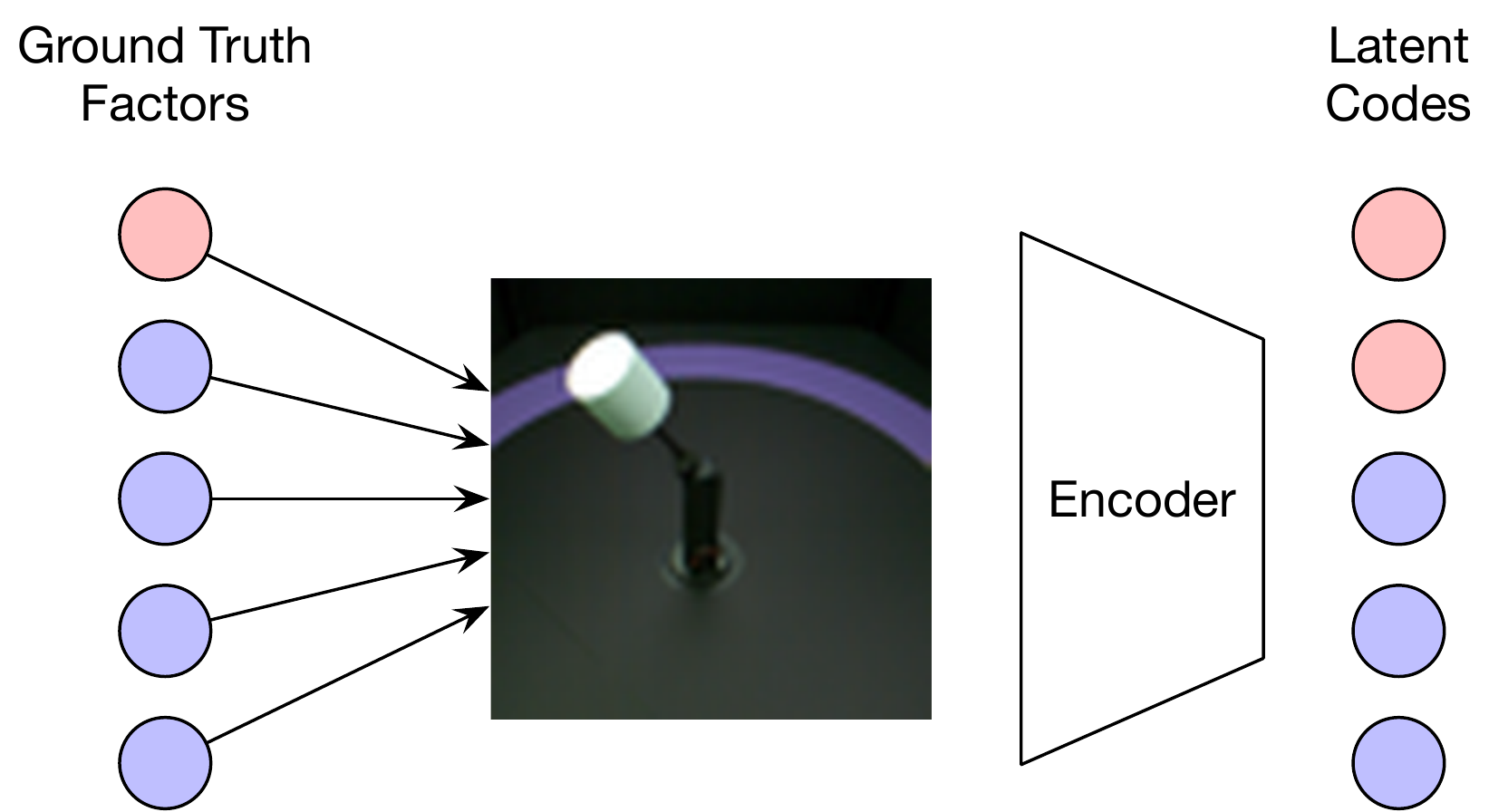}
    \caption{\small \emph{Disentangled} encoder for one factor of variation in the sense of~\citep{eastwood2018framework}: a few dimensions are capturing a single factor. }\label{fig:disentanaglement_notions_disent}
    \end{subfigure}%
    \hspace{1.5em}
    \begin{subfigure}{0.47\textwidth}
    \vspace{2em}
    \includegraphics[width=\textwidth]{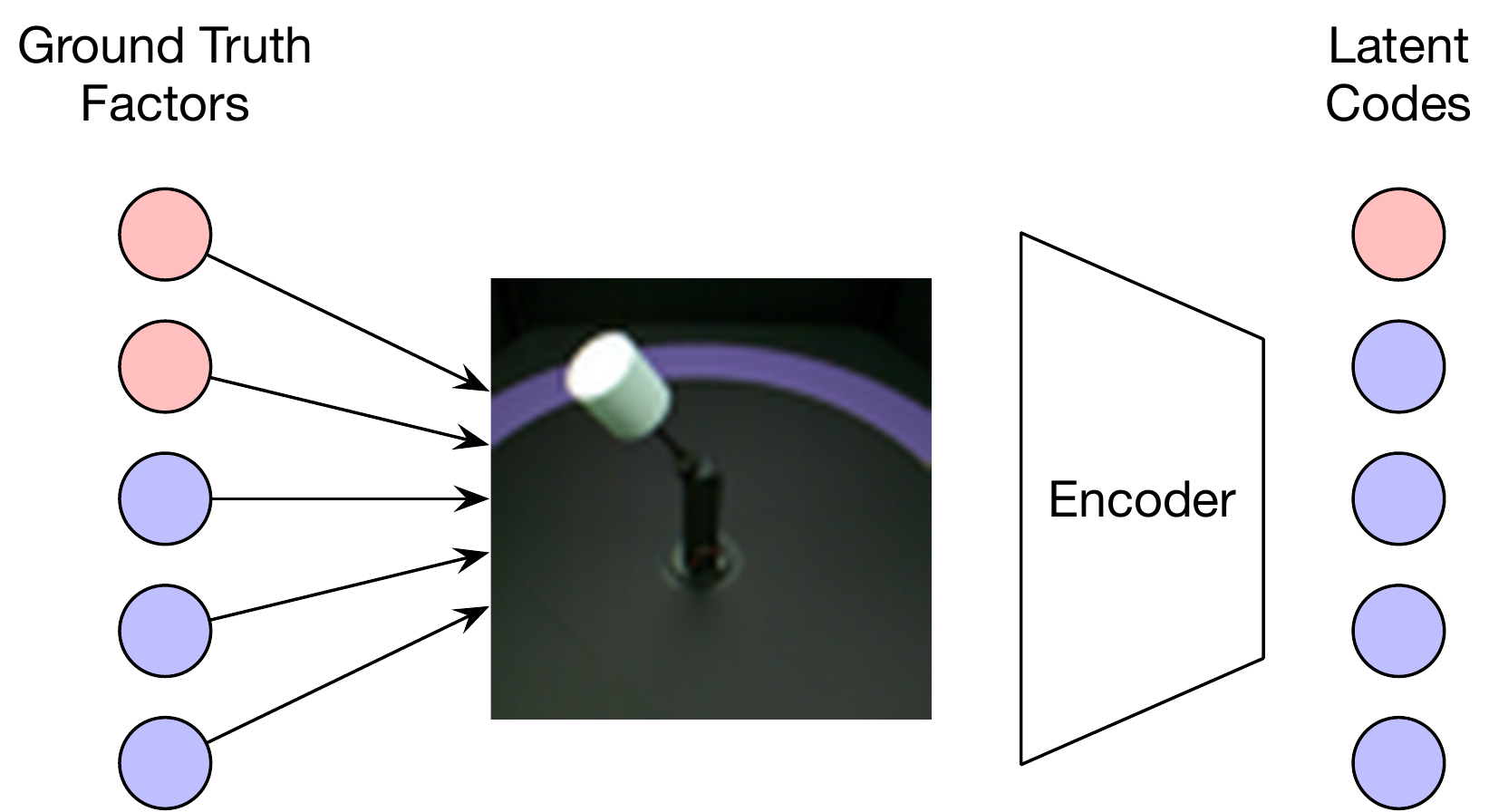}
    \caption{\small Encoder \emph{compactness} for one factor of variation in the sense of~\citep{eastwood2018framework}: a factor of variation should be captured in a single dimension. However multiple factors can still be encoded in the same dimension.}\label{fig:disentanaglement_notions_compact}
    \end{subfigure}
    \end{center}
    \caption{\small Examples of different notions of disentanglement being captured by the scores. Further, different scores \emph{measure} the same notion in different ways, which can introduce systematic differences in the evaluation. For the encoder to be consistent (a), restrictive (b), disentangled (c), or compact (d) the property highlighted in the each example should hold for each factor.}
    \label{fig:disentanaglement_notions}
\end{figure}

For interventional data, the two main properties a disentangled representation should have are \emph{consistency} and \emph{restrictiveness}~\citep{shu2020weakly}. Examples can be seen in Figures~\ref{fig:disentanaglement_notions_int_fix}~and~\ref{fig:disentanaglement_notions_int_change}. Both can be interpreted in the context of independent mechanisms~\citep{peters2017elements}: interventions on a ground-truth factor should manifest in a localized way in the representation. For example, fixing a certain factor of variation and sampling twice all others should result in a subset of dimensions being constant in the representation of the two points (consistency). This notion is used in the metrics of~\citet{higgins2016beta,kim2018disentangling}. On the other hand, changing the value of a factor of variation while keeping the others constant should result in a single change in the representation. This fact was used in the evaluation metric proposed by~\citet{suter2018interventional}. While~\citep{shu2020weakly} argue that both aspects are necessary for disentangled representations, when the ground-truth factors are independent and unconfounded the two definitions are equivalent.

On observational data, which is arguably the most practical case, there are several ways of estimating the relationship between factors and codes. For example, \citet{chen2018isolating,ridgeway2018learning} use the mutual information while \citet{eastwood2018framework,kumar2017variational} rely on predictability with a random forest classifier and a SVM respectively. The practical impact of these low-level and seemingly minor differences is not yet understood.

Once the relation between the factors and the codes is known for a given model, we need to evaluate the properties of the structure in order to measure its ``disentanglement''. Since a generally accepted formal definition for disentanglement is missing~\citep{eastwood2018framework, higgins2018towards,ridgeway2018learning}, the desired structure of the latent space compared to the ground-truth factors is a topic of debate. 
~\citet{eastwood2018framework} (and in part~\citet{ridgeway2018learning}) proposed three properties of representations: \emph{disentanglement}, \emph{compactness}, and \emph{informativeness}. A representation is disentangled if each dimension only captures a single factor of variation and compact if each factor is encoded in a single dimension, see Figures~\ref{fig:disentanaglement_notions_disent}~and~\ref{fig:disentanaglement_notions_compact}. Note that disentangled representations do not need to be compact nor compact representations need to be disentangled. Combining the two implies that a representation implements a one-to-one mapping between factors of variation and latent codes. Informativeness measures how well the information about the factors of variation is accessible in the latent representations with linear models. The degree of informativeness captured by any of the disentanglement metrics is unclear. In particular, as discussed in Chapter~\ref{sec:downstream}, it is not clear whether the correlation between disentanglement metrics and downstream performance is an artifact of the linear model used to estimate the relations between factors and code~\citep{eastwood2018framework,kumar2017variational}. 
Maintaining the terminology, the disentanglement scores in~\citep{higgins2016beta,kim2018disentangling,ridgeway2018learning,eastwood2018framework,suter2018interventional} focus on disentanglement in the sense of~\citep{eastwood2018framework} and~\citep{chen2018isolating,kumar2017variational} on compactness. Note that all these scores implement their own ``notion of disentanglement''.
Theoretically, we can characterize existing metrics in these two groups. On the other hand, observing the latent traversal of top performing models, it is not clear what the differences between the scores are and whether compactness and disentanglement are essentially equivalent on representations learned by VAEs (a compact representation is also disentangled and vice-versa).

As a motivating example consider the two models in Figure~\ref{figure:counterexample_traversals}. While visually, we may say that they are similarly disentangled, they achieve very different MIG scores, making the first model twice as good as the second one. Artifacts like this clearly impact the conclusions one may draw from a quantitative evaluation. Further, the structure of the representation may influence its usefulness downstream, and different properties may be useful for different tasks. For example, the applications in fairness~\citep{locatello2019fairness} (see Chapter~\ref{cha:fairness}), abstract reasoning~\citep{van2019disentangled} and strong generalization~\citep{locatello2020weakly} (see Chapter~\ref{cha:weak}) all conceptually rely on the disentanglement notion of~\citep{eastwood2018framework}.

\looseness=-1In this section, we first question how much the metrics agree with each other in terms of how the models are ranked. Second, we focus on the metrics that can be estimated from observational data, as we anticipate they will be more generally applicable in practice. There, we question the impact of different choices in the estimation of the factor-code matrices and in the aggregation. This last step encodes which notion of disentanglement is measured.
Finally, we investigate the sample efficiency of the different metrics to provide practical insights on which scores may be used in practical settings where labeled data is scarce.


\section{How Much Do Existing Disentanglement Metrics Agree?}
\label{sec:metrics_agreement}
As there exists no single, commonly accepted definition of disentanglement, an interesting question is to see how much the different metrics agree.
Figure~\ref{figure:metrics_rank_correlation} shows the Spearman rank correlation between different disentanglement metrics on different data sets.
Overall, we observe that all metrics except Modularity and, in part, IRS seem to be correlated strongly on the data sets dSprites, Color-dSprites, and Scream-dSprites and mildly on the other data sets.
There appear to be two pairs among these metrics that correlate well: the BetaVAE and the FactorVAE scores and the Mutual Information Gap and DCI Disentanglement. Note that this positive correlation does not necessarily imply that these metrics measure the same notion of disentanglement.

\begin{figure}[ht]
\centering\includegraphics[width=\textwidth]{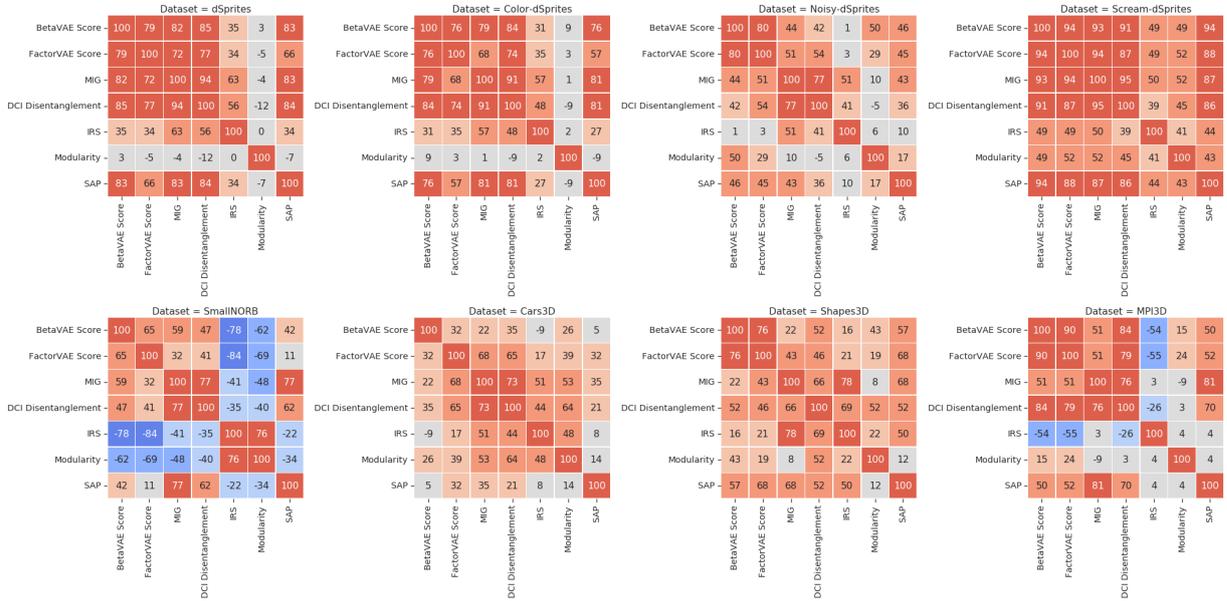}
\caption{\small Rank correlation of different metrics on different data sets. 
Overall, we observe that all metrics except Modularity seem to be strongly correlated on the data sets dSprites, Color-dSprites and Scream-dSprites and mildly on the other data sets.
 There appear to be two pairs among these metrics that capture particularly similar notions: the BetaVAE and the FactorVAE score as well as the Mutual Information Gap and DCI Disentanglement.
}\label{figure:metrics_rank_correlation}
\end{figure}

\begin{figure}[pt]
\begin{center}
{\adjincludegraphics[scale=0.4, trim={0 {0.91\height} 0 0}, clip]{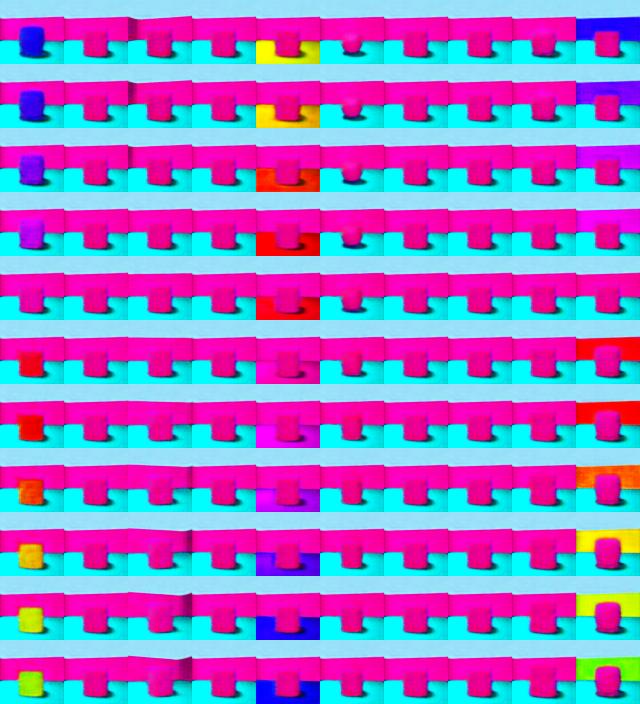}}\vspace{-0.3mm}
{\adjincludegraphics[scale=0.4, trim={0 {0.545\height} 0 {0.37\height}}, clip]{sections/disent/unsupervised/figures/counterexample.jpg}}\vspace{-0.3mm}
{\adjincludegraphics[scale=0.4, trim={0 0 0 {.91\height}}, clip]{sections/disent/unsupervised/figures/counterexample.jpg}}\vspace{2mm}
\end{center}
\begin{center}
{\adjincludegraphics[scale=0.4, trim={0 {0.91\height} 0 0}, clip]{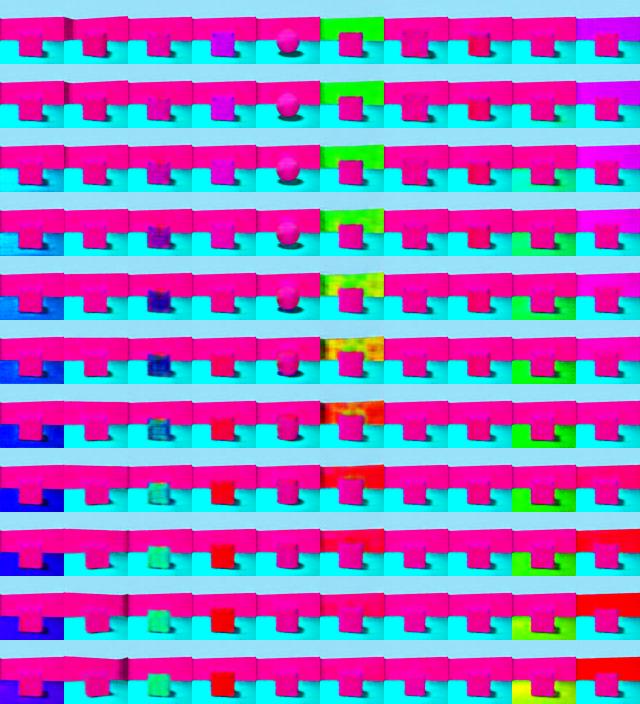}}\vspace{-0.3mm}
{\adjincludegraphics[scale=0.4, trim={0 {0.545\height} 0 {0.37\height}}, clip]{sections/disent/unsupervised/figures/traversals0.jpg}}\vspace{-0.3mm}
{\adjincludegraphics[scale=0.4, trim={0 0 0 {.91\height}}, clip]{sections/disent/unsupervised/figures/traversals0.jpg}}\vspace{2mm}
\end{center}
\caption{\small Latent traversal of a FactorVAE model (top) and a DIP-VAE-I (bottom) trained on Shapes3D. Despite dimensions 0, 5, and 8 not being perfectly disentangled (see Figure~\ref{figure:counterexample}), the model at the top achieves a MIG of 0.66 while the model at the bottom 0.33. Each column corresponds to a latent dimension. }\label{figure:counterexample_traversals}
\end{figure}

Indeed, we visualize in Figure~\ref{figure:counterexample_traversals} the latent traversals of two models that visually achieve similar disentanglement. Arguably, the bottom model may even be more disentangled that the one on the top (the shape in dimension 0 of the top model is not perfectly constant). However, the top model received a MIG of 0.66, while the model at the bottom just 0.33.  We remark that similar examples can be found for other disentanglement metrics as well by looking for models with a large disagreement between the scores. The two models in Figure~\ref{figure:counterexample_traversals} have DCI Disentanglement of 0.77 and 0.94, respectively.

The scores that require interventions and measure disentanglement computing consistency versus restrictiveness are not strongly correlated although they should be theoretically equivalent. On the other hand, we notice that the IRS is not very correlated with the other scores either, indicating that the difference may arise from how the IRS is computed.

We now investigate the differences in the scores that are computed from purely observational data: DCI Disentanglement, MIG, Modularity, and SAP Score. These scores are composed of two stages. First, they estimate a matrix relating factors of variation and latent codes. DCI Disentanglement considers the feature importance of a GBT predicting each factor of variation from the latent codes. MIG and Modularity compute the pairwise mutual information matrix between factors and codes. The SAP Score computes the predictability of each factor of variation from each latent code using an SVM. Second, they aggregate this matrix into a score measuring some of its structural properties. This is typically implemented as a normalized gap between the largest and second largest entries in the factor-code matrix either row or column-wise. We argue that this second step is the one that most encodes the ``notion of disentanglement'' being measured by the score. However, the correlation between the scores may also be influenced by how the matrix is estimated. In the remainder of this section, we put under scrutiny these two steps, systematically analyzing their similarities, robustness, and biases.

\begin{figure}[ht]
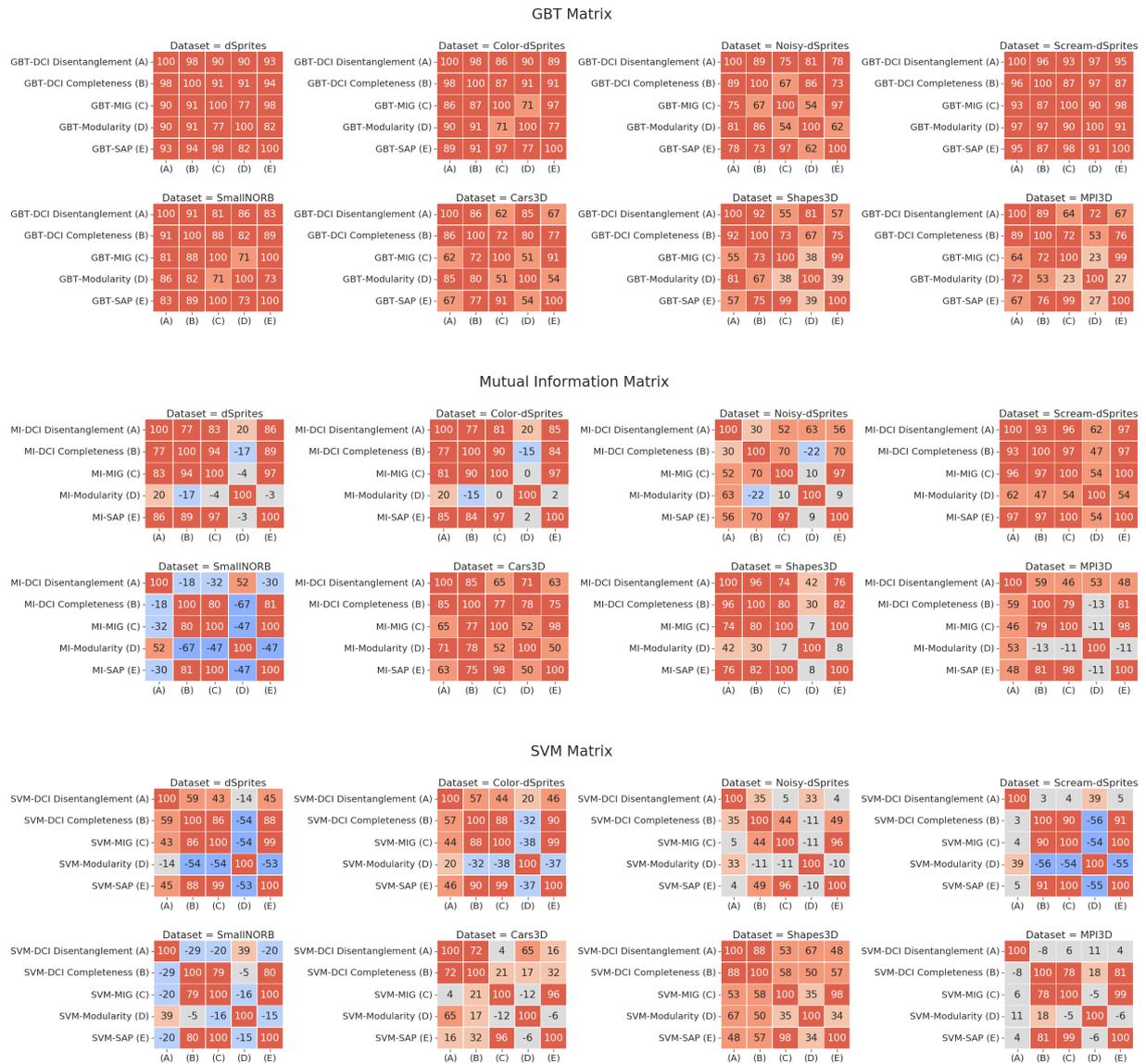

\centering\includegraphics[width=\textwidth]{sections/disent/unsupervised/autofigures/metrics_rank_gbt_matrix}\vspace{6mm}
\centering\includegraphics[width=\textwidth]{sections/disent/unsupervised/autofigures/metrics_rank_mi_matrix}\vspace{6mm}
\centering\includegraphics[width=\textwidth]{sections/disent/unsupervised/autofigures/metrics_rank_svm_matrix}\vspace{6mm}
\caption{\small Rank correlation of the different aggregations computed on the same matrix (GBT feature importance, mutual information, and predictability with a SVM). When the matrix is the same, MIG, SAP and DCI Completeness are much more correlated while the correlation with DCI Disentanglement decreases, highlighting the difference between completeness and disentanglement~\citep{eastwood2018framework}.
}\label{figure:metrics_rank_matrix}
\end{figure}

\subsection{What is the Difference Between the Aggregations? Is Compactness Equivalent to Disentanglement in Practice? }\label{sec:eval_same_concept}
In this section, we focus on the metrics that can be computed from observational data.
We question the ``notion of disentanglement'' implemented by the second step of DCI Disentanglement, MIG, Modularity, and SAP Score and look for differences between disentanglement and compactness in practice. These aggregations measure some structural properties of the statistical relation between factors and codes. To empirically understand the similarities and differences of these aggregations, we compare their result when evaluating the same input matrix in Figure~\ref{figure:metrics_rank_matrix}. We observe that the different aggregations seem to correlate well, but we note that this correlation is not always consistent across different matrices and data sets.
We note that MIG, SAP and DCI Completeness appear to be strongly correlated with each other when the matrix is the same. On the contrary, MIG/SAP and DCI Disentanglement seem to be consistently less correlated on the same matrix. The correlation between Modularity and the other scores varies dramatically depending on the matrix. This is not in contrast with Figure~\ref{figure:metrics_rank_correlation}, where we observed MIG being more correlated with DCI Disentanglement rather than SAP Score. Indeed, the dissimilarity between MIG and SAP depends on differences in the estimation of the matrix. 

These results may not be surprising, given the insights presented by~\citet{eastwood2018framework}. MIG and SAP compute the gap between the entries of the matrix per factor and therefore penalize compactness rather than disentanglement. In other words, they penalize whether a factor of variation is embedded in multiple codes but do not penalize the same code capturing multiple factors. DCI Disentanglement instead penalizes whether a code is related to multiple factors. Observing these differences in a large pool of trained models is challenging. First, the representations are not evenly distributed across the possible configurations (one-to-one, one-to-many, many-to-one, and many-to-many), and, for some of these relations (such as one-to-one and many-to-many), the scores behave similarly. Second, when comparing aggregations computed on different matrices, it is typically unclear where the difference is coming from. However, we believe it is important to understand these practical differences as enforcing different notions of disentanglement may not result in the same benefits downstream. 

\paragraph{Implications} We conclude that the similarity between the scores in Section~\ref{sec:metrics_agreement} is confounded by how the statistical relations are computed. Further, we note that one-to-one or many-to-many mappings seem to be preferred to one-to-many in the models we train, partially supporting the insights from~\citet{rolinek2018variational}.

\begin{figure}[ht]
\begin{subfigure}{0.3\textwidth}
\centering\includegraphics[width=\textwidth]{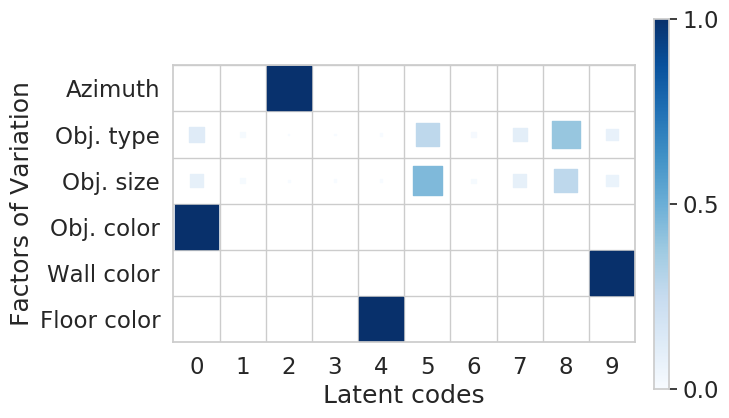}
\end{subfigure}%
\begin{subfigure}{0.3\textwidth}
\centering\includegraphics[width=\textwidth]{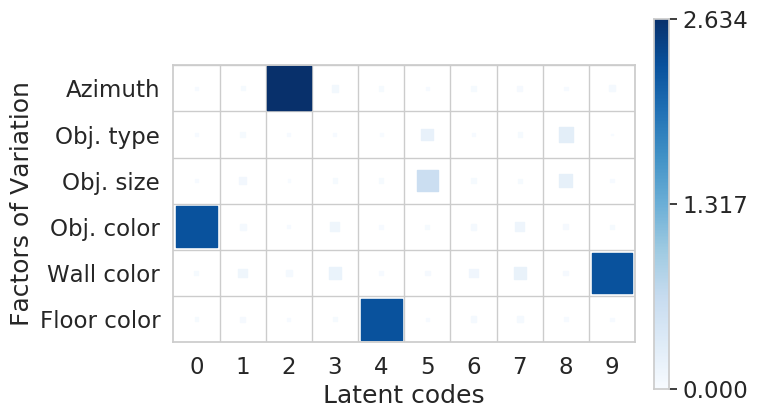}
\end{subfigure}
\begin{subfigure}{0.3\textwidth}%
\centering\includegraphics[width=\textwidth]{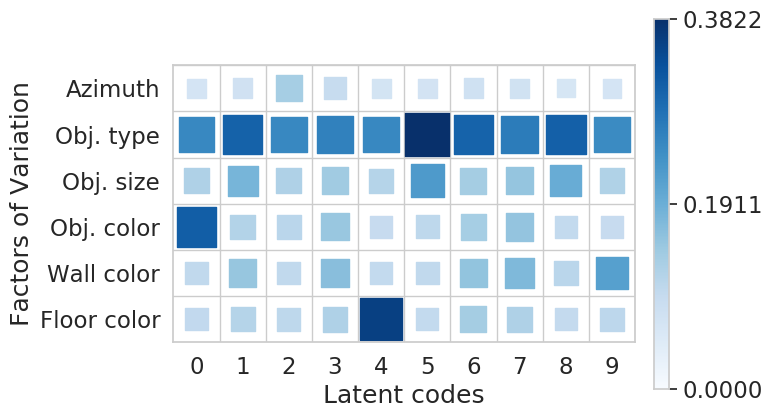}
\end{subfigure}

\begin{subfigure}{0.3\textwidth}
\centering\includegraphics[width=\textwidth]{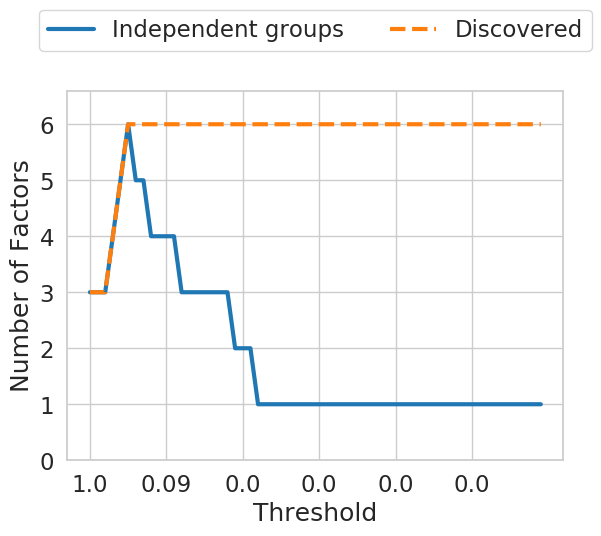}
\end{subfigure}%
\begin{subfigure}{0.3\textwidth}
\centering\includegraphics[width=\textwidth]{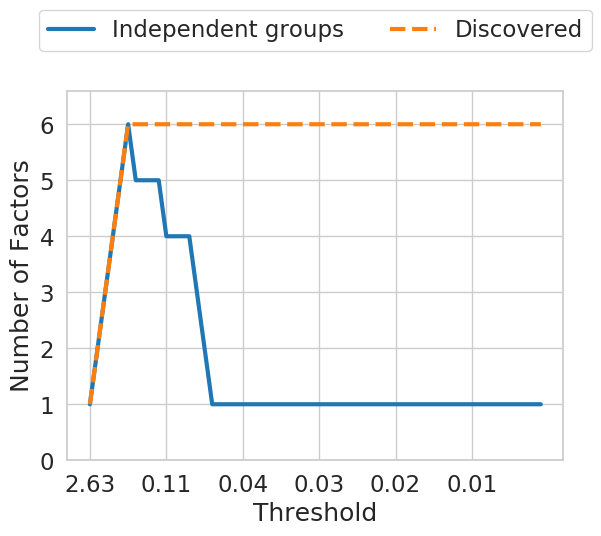}
\end{subfigure}
\begin{subfigure}{0.3\textwidth}%
\centering\includegraphics[width=\textwidth]{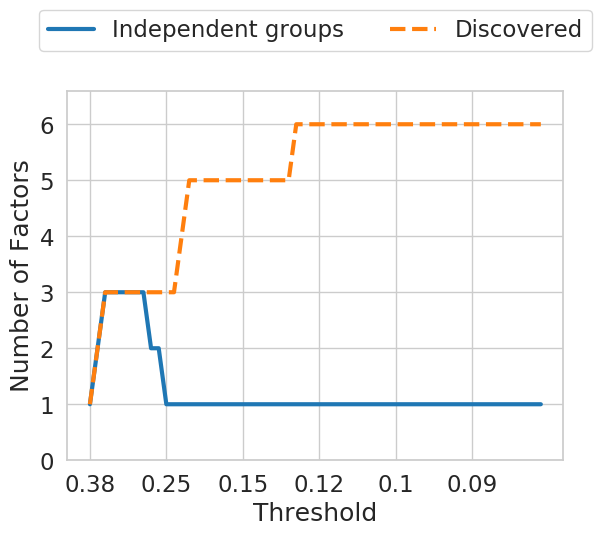}
\end{subfigure}

\begin{subfigure}{0.3\textwidth}
\centering\includegraphics[width=\textwidth]{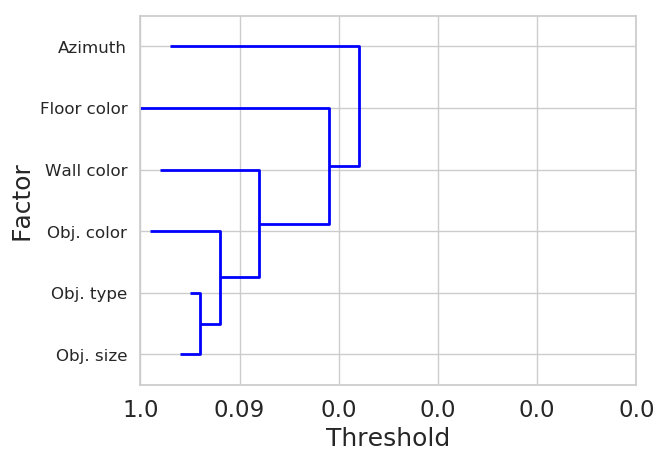}
\end{subfigure}%
\begin{subfigure}{0.3\textwidth}
\centering\includegraphics[width=\textwidth]{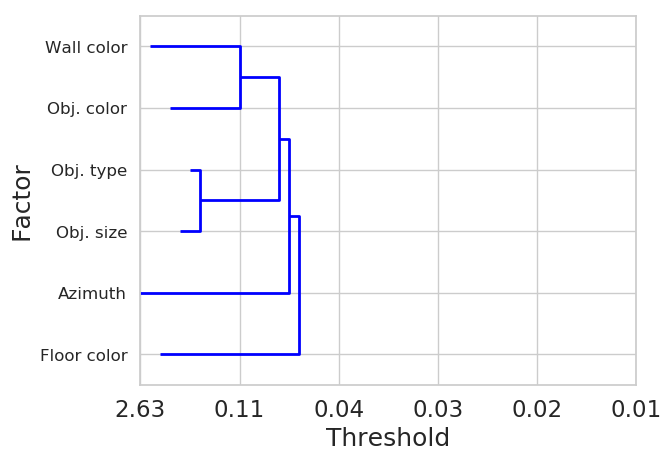}
\end{subfigure}
\begin{subfigure}{0.3\textwidth}%
\centering\includegraphics[width=\textwidth]{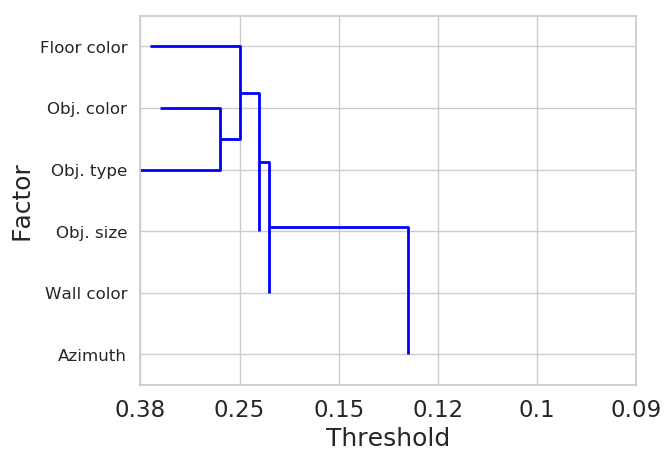}
\end{subfigure}
\caption{\small \looseness=-1Visualization of the relation between factors of variations and latent codes using for the model at the top of Figure~\ref{figure:counterexample_traversals}: (left) GBT feature importance as in the DCI Disentanglement score, (center) the mutual information as computed in the MIG and Modularity and, (right) SVM predictability as computed by the SAP Score. Top row: factor-code matrix. Middle row: independent-groups curve recording how many connected components of size larger than one there are in the factor-code bipartite graph defined by the matrix at a given threshold. Bottom row: dendrogram plot recording which factors are merged at which threshold. The long tail of the SVM importance matrix explains the weaker correlation between MIG and SAP Score in Figure~\ref{figure:metrics_rank_correlation} even though the scores are measuring a similar concept. The dendrogram plots computed from the independent-groups curve can be used to systematically analyze which factors are merged at which threshold by the different estimation techniques (\eg, SVM, GBT feature importance and mutual information).}\label{figure:counterexample}
\end{figure}

\subsection{Does the Estimation of Factor-Code Matrices Impact the Evaluation?}\label{sec:matrix}
\looseness=-1We continue to investigate the metrics computed from observational data and focus on the different matrices estimating the statistical relations between factors of variation and latent codes. First, we build new visualization tools to understand both what a model has learned and how it has been evaluated by the factor-code matrices.
\begin{figure}[t]
\begin{center}
\begin{subfigure}{0.3\textwidth}
\centering\includegraphics[width=\textwidth]{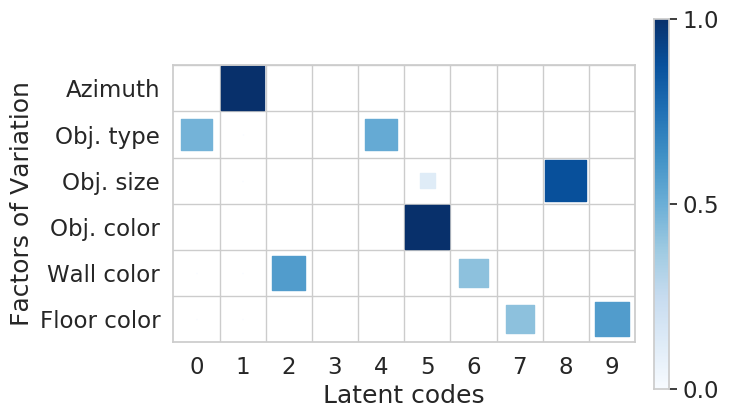}
\centering\includegraphics[width=\textwidth]{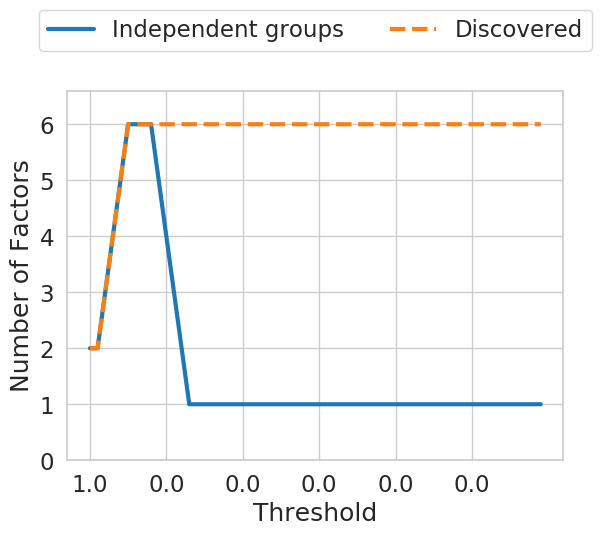}
\centering\includegraphics[width=\textwidth]{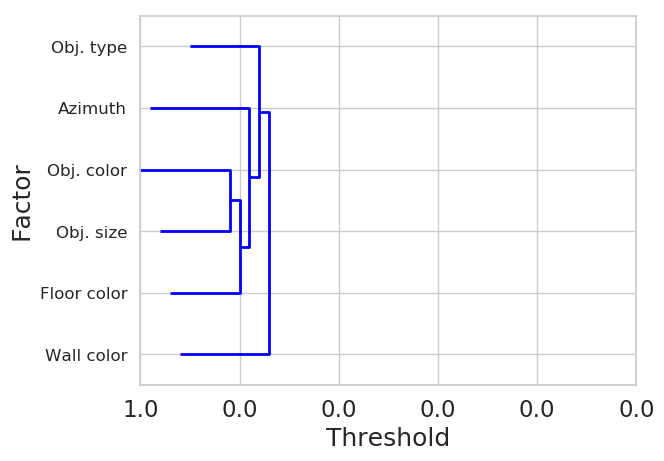}
\end{subfigure}%
\begin{subfigure}{0.3\textwidth}
\centering\includegraphics[width=\textwidth]{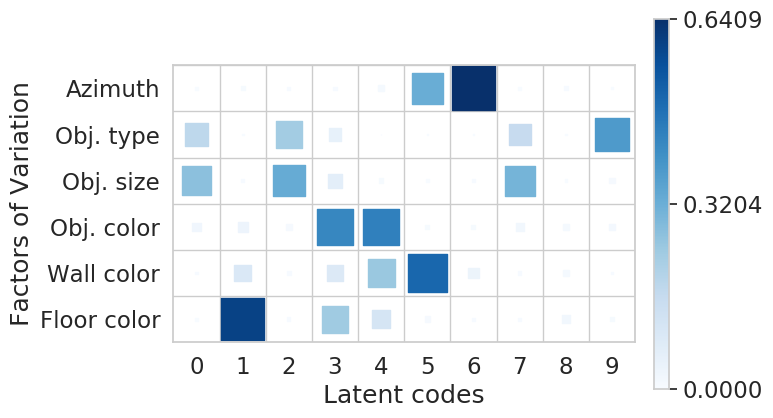}
\centering\includegraphics[width=\textwidth]{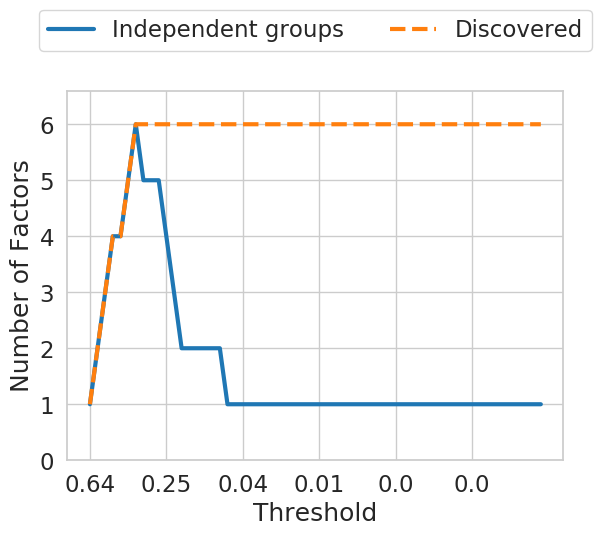}
\centering\includegraphics[width=\textwidth]{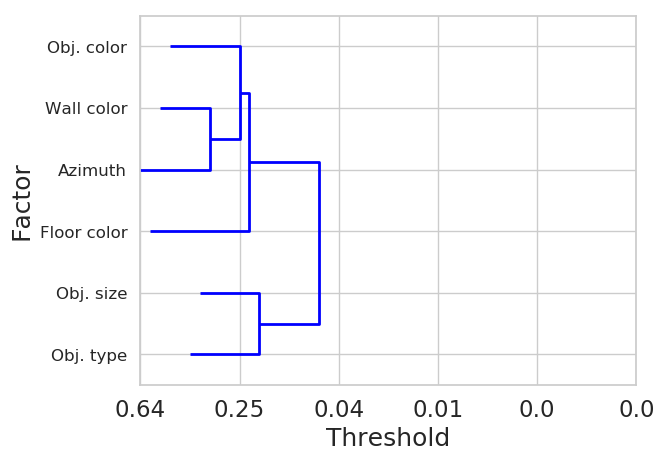}
\end{subfigure}%
\begin{subfigure}{0.3\textwidth}
\centering\includegraphics[width=\textwidth]{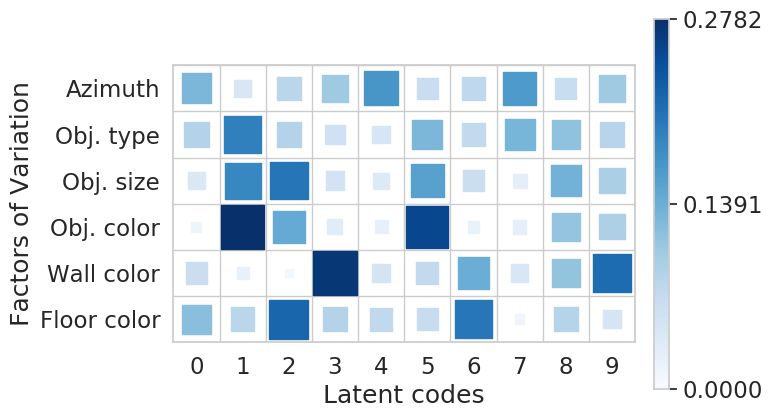}
\centering\includegraphics[width=\textwidth]{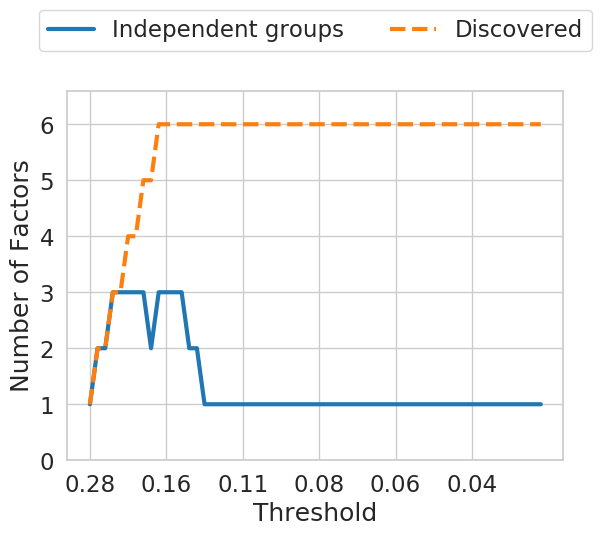}
\centering\includegraphics[width=\textwidth]{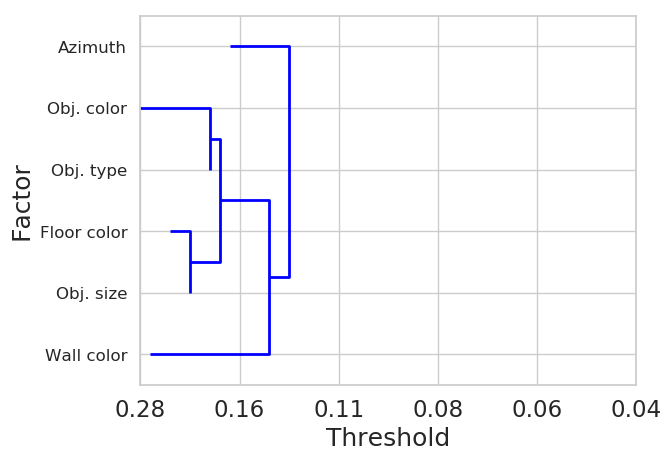}
\end{subfigure}
\end{center}
\caption{\small (top row) Visualization of the GBT importance matrix used in the DCI Disentanglement score for models with top (left), average (center), and worse (right) DCI Disentanglement on Shapes3D. (middle row) Independent-groups curves of the GBT importance matrix. (bottom row) Dendrogram plot recording when factors are merged. Comparing these plots with the ones in Figure~\ref{figure:precision_curves_mi}, we note that there are differences in the factor-code matrices. In particular, they disagree on which factors are most entangled. }~\label{figure:precision_curves_gbt}
\end{figure}

\begin{figure}[t]
\begin{center}
\begin{subfigure}{0.3\textwidth}
\centering\includegraphics[width=\textwidth]{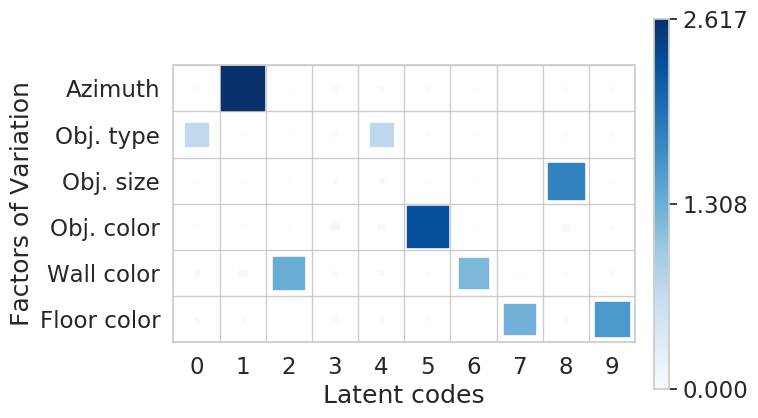}
\centering\includegraphics[width=\textwidth]{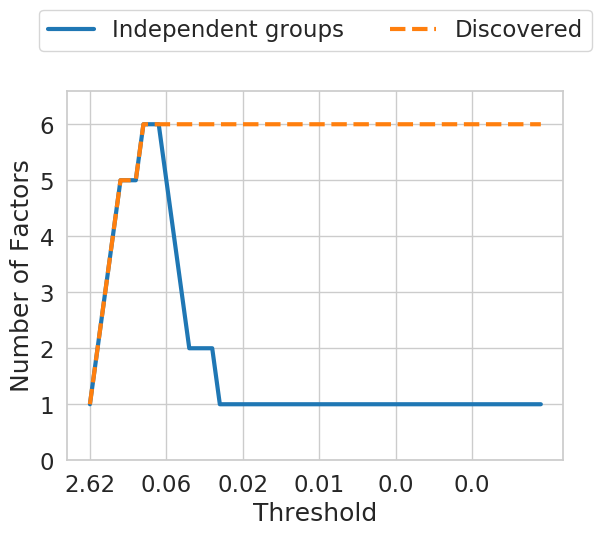}
\centering\includegraphics[width=\textwidth]{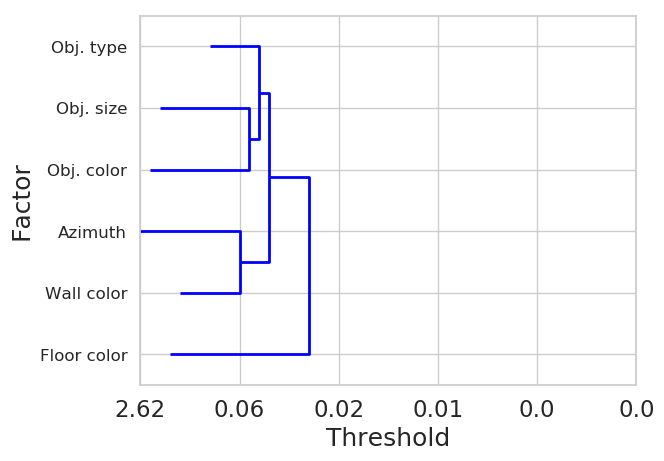}
\end{subfigure}%
\begin{subfigure}{0.3\textwidth}
\centering\includegraphics[width=\textwidth]{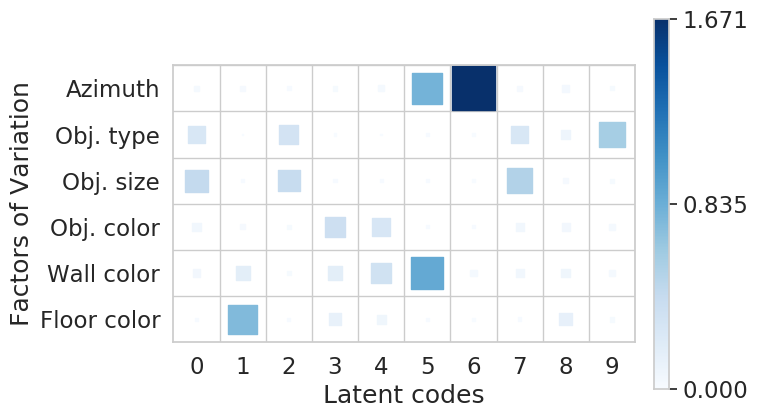}
\centering\includegraphics[width=\textwidth]{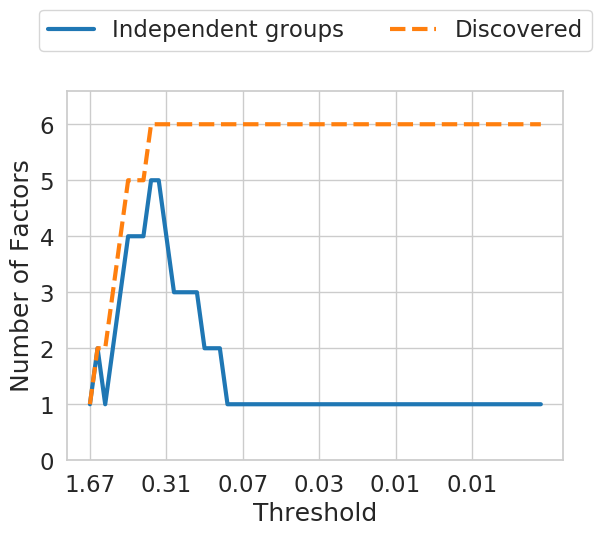}
\centering\includegraphics[width=\textwidth]{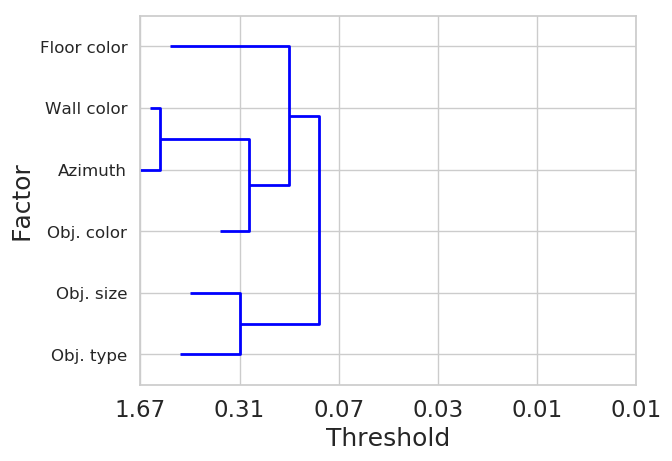}
\end{subfigure}%
\begin{subfigure}{0.3\textwidth}
\centering\includegraphics[width=\textwidth]{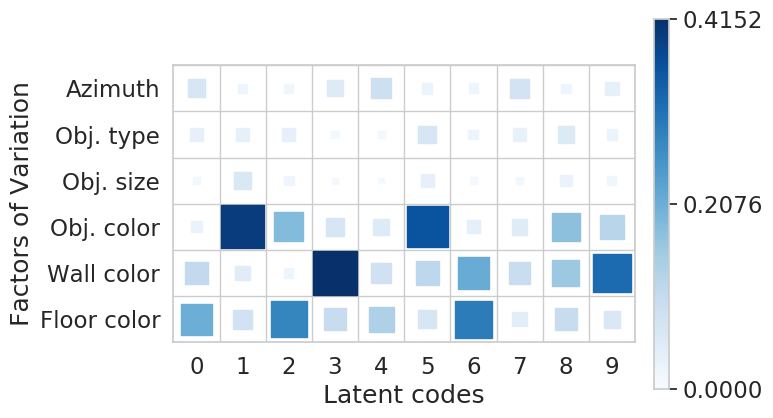}
\centering\includegraphics[width=\textwidth]{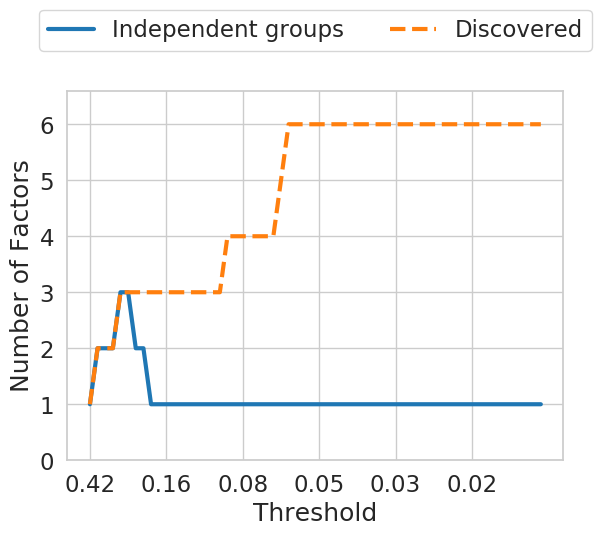}
\centering\includegraphics[width=\textwidth]{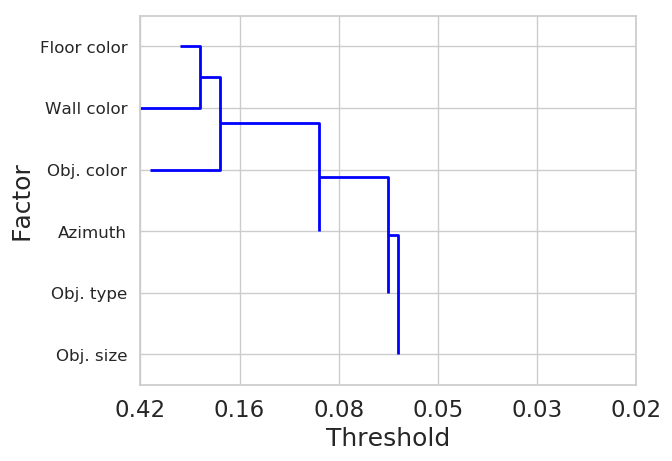}
\end{subfigure}%
\caption{\small 
(top row) Visualization of the mutual information matrix used in the MIG and Modularity scores for the same models of Figure~\ref{figure:precision_curves_gbt}. (middle row) Independent-groups curves of the mutual information matrix. (bottom row) Dendrogram plot recording when factors are merged. Comparing these plots with the ones in Figure~\ref{figure:precision_curves_gbt}, we note that there are differences in the factor-code matrices. In particular, they disagree on which factors are most entangled. }~\label{figure:precision_curves_mi}
\end{center}
\end{figure}

In Figure \ref{figure:counterexample} we visualize the model at the bottom of Figure~\ref{figure:counterexample_traversals}. In the first row, we plot the factor-codes matrices as learned by GBT feature importance, pairwise mutual information, and SVM predictability, respectively. We observe that for the GBT features and the mutual information matrix, the largest entries are the same. Still, the latter underestimates the effect of some dependencies, for example, object size and type in dimensions five and eight. The SVM feature importance, also agrees on some of the large values but exhibits a longer tail than the other matrices.

To further analyze the differences between the matrices, we view them as weights on the edges of a bipartite graph encoding the statistical relation between each factor of variation and code. We can now delete all edges with weight smaller than some threshold and count (i) how many factors of variation are connected with at least a latent code and (ii) the number of connected components with size larger than one. In Figure~\ref{figure:counterexample} (middle row), we plot these two curves computed on the respective matrices, and, in Figure~\ref{figure:counterexample} (bottom row), we record which factors are merged at which threshold.  Factors that are merged at a lower threshold are more entangled in the sense that are more statistically related to a shared latent dimension.

The long tail of the SVM importance matrix explains why we observed a weaker correlation between MIG and SAP Score in Figure~\ref{figure:metrics_rank_correlation} even though the scores are measuring a similar concept. Indeed, we can observe in the middle row of Figure~\ref{figure:counterexample} that the largest entries of the three matrices are distributed differently, in particular for the SVM predictability. Similarly, we can read in the dendrogram plot that the factors are merged in a different order for the SVM predictability compared to the other two matrices. We hypothesize that the long tail of the SVM predictability results from spurious correlations and optimization issues that arise from how the score computation (fitting a threshold separately on each code predicting each factor).

\looseness=-1In Figures~\ref{figure:precision_curves_gbt} and~\ref{figure:precision_curves_mi} we compare the factor-code matrices, independent-groups curves, and dendrograms for the best, average and worse model in terms of DCI Disentanglement. Figure~\ref{figure:precision_curves_gbt} shows the plots for the GBT (Gradient Boosted Trees) feature importance matrix used by the DCI Disentanglement score and Figure~\ref{figure:precision_curves_mi} the mutual information matrix of MIG and Modularity. By comparing these plots, we can clearly distinguish which model is the most disentangled but we again note differences in how the different matrices capture the factors of variation. In particular, we again observe that the two matrices may disagree on which factors are most entangled in the same model. For example, the GBT features computed on the model on the left suggest that object color and size are more entangled. In contrast, the mutual information matrix suggests azimuth and wall color.

Finally, we test whether the differences in the factor-code matrix impact the computation of the disentanglement scores. To do so, we compare the ranking produced by each aggregation computed on the different matrices. If the different matrices encode the same statistical relations, the ranking should also be similar.
We observe in Figure~\ref{figure:metrics_rank_sap_mig_aggregation} that the ranking seems to be generally different, and the level of correlation appears to depend on the data set. Overall, the aggregation of SAP Score and MIG seems to be more robust to changes in the estimation matrix compared to Modularity and DCI Disentanglement. 

\paragraph{Implications} Based on this result, we conclude that systematic differences in the estimation matrix may indeed impact the evaluation of disentanglement. It seems important for the evaluation that the statistical relations between factors and codes are robustly and consistently estimated. We observed that changing the estimation technique may produce different rankings of the models. It appears, therefore, important to not bias the evaluation by considering a single estimation technique unless reliability guarantees are also given.

\begin{figure}[pt]
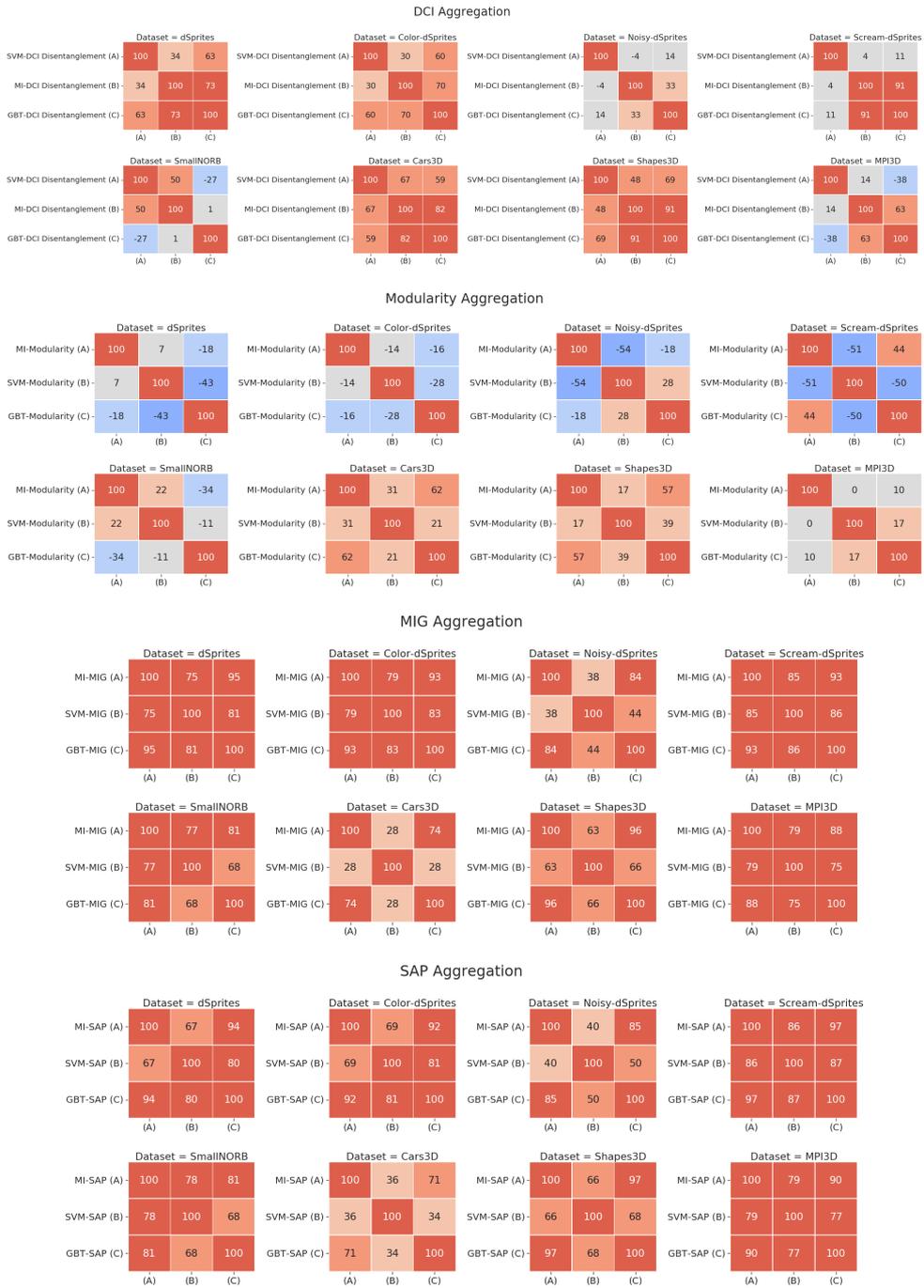

\centering\includegraphics[width=0.8\textwidth]{sections/disent/unsupervised/autofigures/metrics_rank_dci_aggregation}\vspace{3mm}
\centering\includegraphics[width=0.8\textwidth]{sections/disent/unsupervised/autofigures/metrics_rank_modularity_aggregation}\vspace{3mm}
\centering\includegraphics[width=0.7\textwidth]{sections/disent/unsupervised/autofigures/metrics_rank_mig_aggregation}\vspace{3mm}
\centering\includegraphics[width=0.7\textwidth]{sections/disent/unsupervised/autofigures/metrics_rank_sap_aggregation}
\caption{\small Rank correlation of DCI Disentanglement, Modularity, SAP Score and MIG aggregations on different matrices. The ranking seem to be generally different and data set dependant indicating that systematic differences in the estimation matrix may impact the evaluation of disentanglement. MIG and SAP aggregations appear to be more robust to changes in the estimation matrix.
}\label{figure:metrics_rank_sap_mig_aggregation}
\end{figure}

\subsection{Discussion} 

We conclude that the different disentanglement scores do not measure the same concept: they measure different notions of disentanglement (compactness versus disentanglement) that appear to be generally correlated in practice but not equivalent. 

In particular, MIG and SAP Score intend disentanglement differently than DCI Disentanglement. They rather measure completeness: they do not penalize multiple factors of variation being captured by a single latent dimension. Modularity seems to be more dependent on the estimation matrix as its correlation with the other scores changes considerably with different matrices. Furthermore, there are systematic differences between the different techniques to estimate the relation between factors of variation and latent codes that influence the correlation of the scores: the ranking of the models is different depending on the chosen estimation technique.

We argue that future works advancing the state-of-the-art in disentanglement, with or without any form of supervision, should reflect upon which notion of disentanglement they consider and how it is measured in the chosen evaluation protocol. 

\looseness=-1Not all the properties that are generally associated with the term ``disentanglement'' are necessarily related to all the scores considered in this chapter, and specific downstream tasks may require specific notions~\citep{locatello2019fairness,van2019disentangled,locatello2020weakly}. Further, separating the estimation of the statistical dependencies between factors of variation and codes from what the score is measuring may help clarify the properties that are being evaluated. As robustly capturing these statistical dependencies is a crucial step of the evaluation metrics that do not rely on interventions, we argue that future work on disentanglement scores should specifically highlight (i) how this estimation is performed precisely, (ii) its sample complexity/variance and (iii) biases (for example do they work well with coarse-grained as opposed to fine-grained factors of variation). Future research is necessary to understand both how estimation metrics overestimate or underestimate the amount of disentanglement and how to robustly aggregate this information into a score. Among the scores tested in this chapter, we recommend using the DCI aggregation, either with the GBT feature importance or the mutual information matrix, ideally both. 

\begin{figure}[pt]
\centering\includegraphics[width=\textwidth]{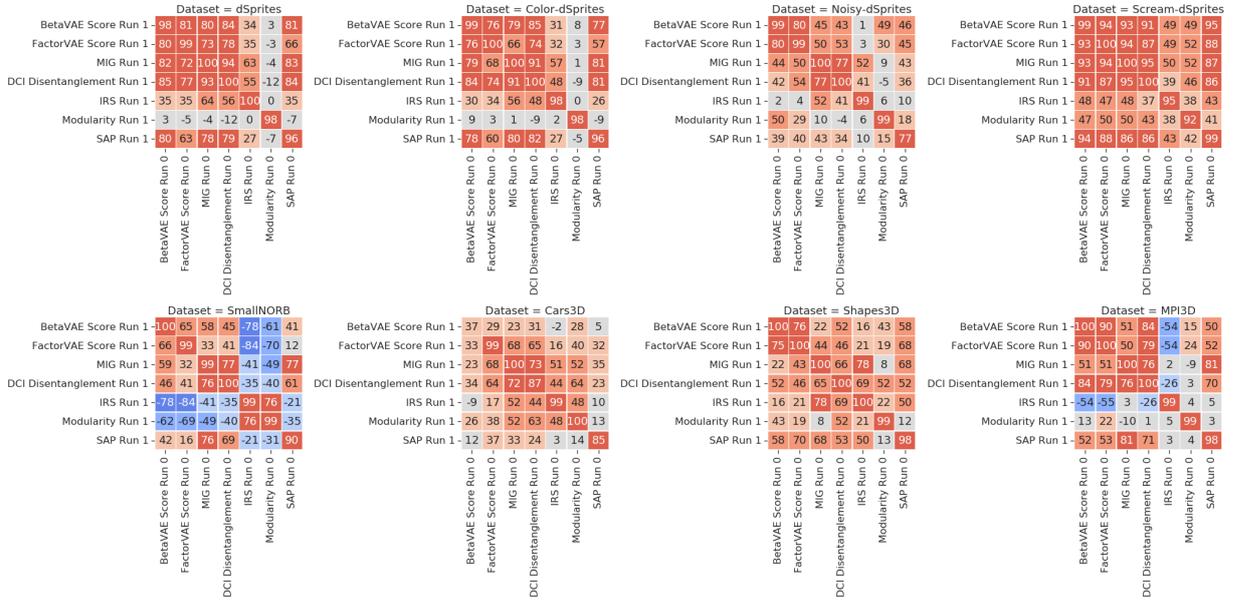}
\caption{\small Rank correlation of different metrics on different data sets across two runs. 
Overall, we observe that the disentanglement scores computed with \num{10000} examples are relatively stable.
}\label{figure:metrics_rank_correlation_10000}
\end{figure}
\begin{figure}[pt]
\centering\includegraphics[width=\textwidth]{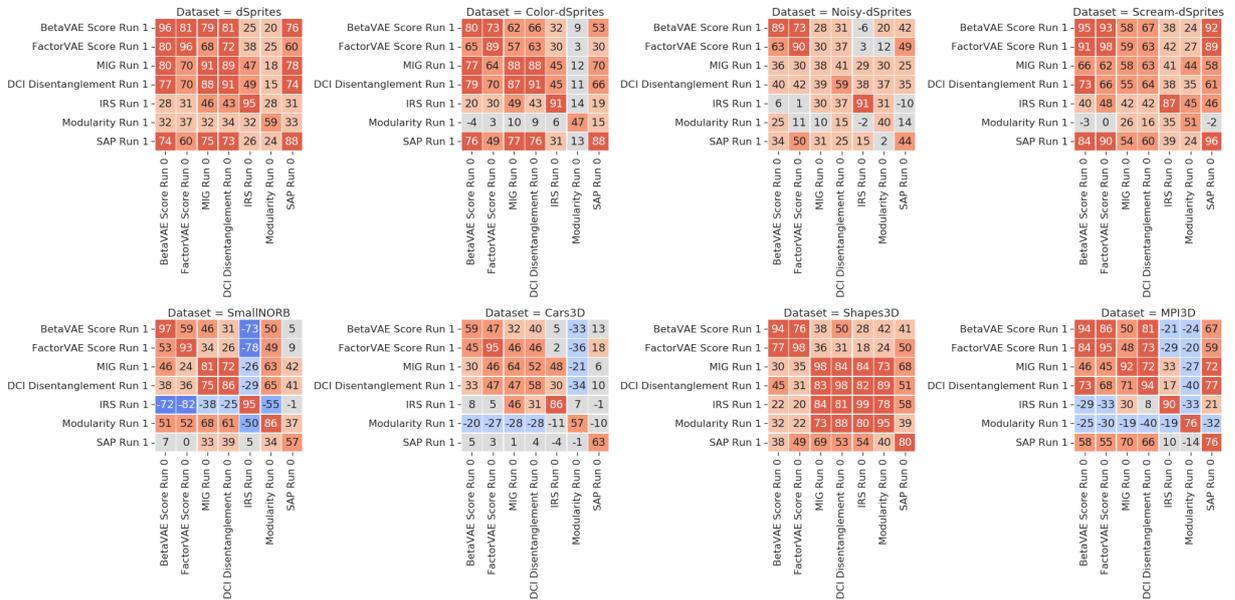}
\caption{\small Rank correlation of different metrics computed using \num{100} examples on different data sets across two runs. 
Overall, we observe that with fewer examples the disentanglement scores are considerably less stable.
}\label{figure:metrics_rank_correlation_100}
\end{figure}

\section{Is the Computation of the Disentanglement Scores Reliable?}
The computation of the disentanglement scores requires supervision, and having access to a large number of observations of $\rvz$ may be unreasonable. 
\looseness=-1On the other hand, for the purpose of this study, we are interested in a stable and reproducible experimental setup. In Figure~\ref{figure:metrics_rank_correlation_10000}, we observe that running the disentanglement scores twice yields comparable results with \num{10000} examples. Using just \num{100} examples may be feasible in practice as suggested by~\cite{locatello2019disentangling} but has less stable results as depicted in Figure~\ref{figure:metrics_rank_correlation_100}. We observe that not every score is equally sample efficient. The FactorVAE scores and the IRS seem to be the most efficient ones, followed by DCI Disentanglement and MIG.

\paragraph{Implications}
Computing the disentanglement scores on these data sets with \num{10000} examples yields stable results and is appropriate for this study. Finding sample efficient disentanglement scores is an important research direction for practical semi-supervised disentanglement~\citep{locatello2019disentangling}.



\chapter{Semi-Supervised Disentanglement}\label{cha:semi_sup}
In this chapter, we discuss the role of explicit supervision in the learning of disentangled representations. The presented work is based on \citep{locatello2019disentangling} and was developed in collaboration with  Michael Tschannen, Stefan Bauer, Gunnar R\"atsch, Bernhard Sch\"olkopf, and Olivier Bachem. This work was partially done when Francesco Locatello was at Google Research, Brain Team in Zurich.

\section{Motivation}

\begin{figure}
\begin{center}
    {\adjincludegraphics[scale=0.4, trim={0 {0.9\height} 0 0}, clip]{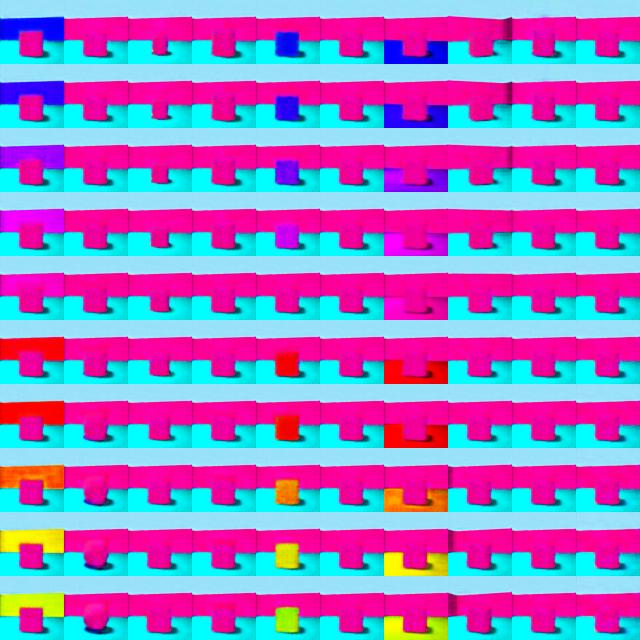}}\vspace{-0.3mm}
    {\adjincludegraphics[scale=0.4, trim={0 {.4\height} 0 {.5\height}}, clip]{sections/disent/semi_sup/autofigures/traversals0.jpg}}\vspace{-0.3mm}
    {\adjincludegraphics[scale=0.4, trim={0 0 0 {.9\height}}, clip]{sections/disent/semi_sup/autofigures/traversals0.jpg}}\vspace{2mm}
    {\adjincludegraphics[scale=0.4, trim={0 {0.9\height} 0 0}, clip]{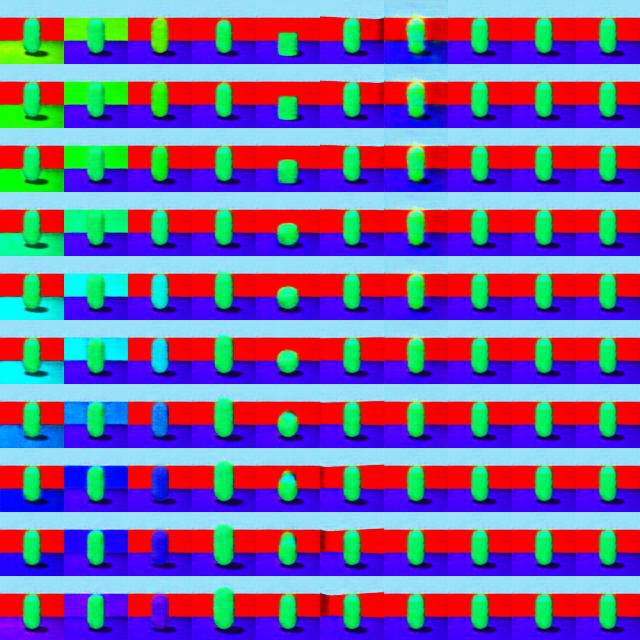}}\vspace{-0.3mm}
    {\adjincludegraphics[scale=0.4, trim={0 {.4\height} 0 {.5\height}}, clip]{sections/disent/semi_sup/autofigures/traversals4_s2.jpg}}\vspace{-0.3mm}
    {\adjincludegraphics[scale=0.4, trim={0 0 0 {.9\height}}, clip]{sections/disent/semi_sup/autofigures/traversals4_s2.jpg}}
\end{center}
    \caption{\looseness=-1Latent traversals (each column corresponds to a different latent variable being varied) on Shapes3D for the $\beta$-TCVAE model with best validation MIG (top) and for the semi-supervised $\beta$-TCVAE model with best validation loss (bottom), both using only \num{1000} labeled examples for validation and/or supervision. Both models appear to be visually well disentangled.}\label{figure:latent_traversal_s2}
    \vspace{-4mm}
\end{figure}

We discussed in Chapter~\ref{cha:unsup_dis} that the inductive biases of state-of-the-art methods may not be sufficient to reliably learn disentangled representations in practice and it is not clear how much additional supervision we would need. Further, there are many practical settings where one might have access to a limited amount of supervision, for example, through manual labeling of (some) factors of variation in a few training examples.
In this chapter, we investigate the impact of such supervision on state-of-the-art disentanglement methods and perform a large-scale study under well-defined and reproducible experimental conditions.
While human inspection can be used to select good model runs and hyperparameters (\eg \citet[Appendix 5.1]{higgins2017darla}), we argue that such supervision should be made explicit.
Hence, we consider the setting where one has access to annotations (which we call \textit{labels} in the following) of the latent variables $\rvz$ for a very limited number of observations $\rvx$, for example through human annotation.
Even though this setting is not universally applicable (\eg when the observations are not human interpretable) and a completely unsupervised approach would be elegant, collecting a small number of human annotations is simple and cheap via crowd-sourcing platforms such as Amazon Mechanical Turk, and is common practice in the development of real-world machine learning systems. As a consequence, the considered setup allows us to explicitly encode prior knowledge and biases into the learned representation via annotation, rather than relying solely on implicit biases such as the choice of network architecture with possibly hard-to-control effects. 
First, we investigate whether disentanglement scores are sample efficient and robust to imprecise labels.
Second, we explore whether it is more beneficial to incorporate the limited amount of labels available into training and thoroughly test the benefits and trade-offs of this approach compared to supervised validation.
For this purpose, we perform a reproducible large-scale experimental study\footnote{Reproducing these experiments requires approximately 8.57 GPU years (NVIDIA P100).}, training over \num{52000} models on four different data sets.
We found that unsupervised training with supervised validation enables reliable learning of disentangled representations. On the other hand, using some of the labeled data for training may be beneficial for disentanglement.
Overall, we show that a very small amount of supervision is enough to learn disentangled representations reliably as illustrated in Figure~\ref{figure:latent_traversal_s2}. 

\section{Unsupervised training with supervised model selection}\label{sec:validation}
\looseness=-1In this section, we investigate whether commonly used disentanglement metrics can be used to identify good models if a very small number of labeled observations is available.
While existing metrics are often evaluated using as much as \num{10000} labeled examples, it might be feasible in many practical settings to annotate \num{100} to \num{1000} data points and use them to obtain a disentangled representation.
At the same time, it is unclear whether such an approach would work as existing disentanglement metrics can be noisy (even with more samples), see Chapter~\ref{cha:eval_dis}.
Finally, we emphasize that the impossibility result of Chapter~\ref{cha:unsup_dis} does not apply in this setting as we do observe samples from $\rvz$.

\subsection{Experimental setup and approach}

\looseness=-1\textbf{Data sets.}
To reduce the number of models to train, we consider four data sets: \textit{dSprites}~\citep{higgins2016beta}, \textit{Cars3D}~\citep{reed2015deep}, \textit{SmallNORB}~\citep{lecun2004learning} and \textit{Shapes3D}~\citep{kim2018disentangling}.
For each data set, we assume to have either \num{100} or \num{1000} labeled examples available and a large amount of unlabeled observations.
We note that \num{100} labels correspond to labeling \num{0.01}\% of dSprites, \num{0.5}\% of Cars3D, \num{0.4}\% of SmallNORB and \num{0.02}\% of Shapes3D.

\textbf{Perfect vs. imprecise labels.}
In addition to using the perfect labels of the ground-truth generative model, we also consider the setting where the labels are imprecise. Specifically, we consider the cases were labels are \textit{binned} to take at most five different values, are \textit{noisy} (each observation of a factor of variation has 10\% chance of being random) or \textit{partial} (only two randomly drawn factors of variations are labeled). 
This is meant to simulate the trade-offs in the process of a practitioner quickly labeling a small number of images.

\textbf{Model selection metrics.}
We use MIG \citep{chen2018isolating}, DCI Disentanglement \citep{eastwood2018framework} and SAP score \citep{kumar2017variational} for model selection as they can be used on purely observational data. 

\textbf{Experimental protocol.}
We prepend the prefix $U/S$ for \emph{unsupervised} training with \emph{supervised} model selection to the method name. We consider 32 different experimental settings where an experimental setting corresponds to a data set (\textit{dSprites}/ \textit{Cars3D}/ \textit{SmallNORB}/ \textit{Shapes3D}), a specific number of labeled examples (\num{100}/\num{1000}), and a labeling setting (perfect/ binned/ noisy/ partial).
For each considered setting, we generate five different sets of labeled examples using five different random seeds.
For each of these labeled sets, we train cohorts of $\beta$-VAEs \citep{higgins2016beta}, $\beta$-TCVAEs \citep{chen2018isolating}, Factor-VAEs \citep{kim2018disentangling}, and DIP-VAE-Is \citep{kumar2017variational} where each model cohort consists of 36 different models with 6 different hyperparameters for each model and 6 random seeds.
For each of these \num{23040} models, we then compute all the model selection metrics on the set of labeled examples and use these scores to select the best models in each of the cohorts. Finally, we compute the BetaVAE score, the FactorVAE score, MIG, Modularity, DCI disentanglement, and SAP score for each model based on an additional test set of $\num{10000}$ samples.

\begin{figure}
\begin{center} 
\begin{subfigure}{0.8\textwidth}
\centering\includegraphics[width=\textwidth]{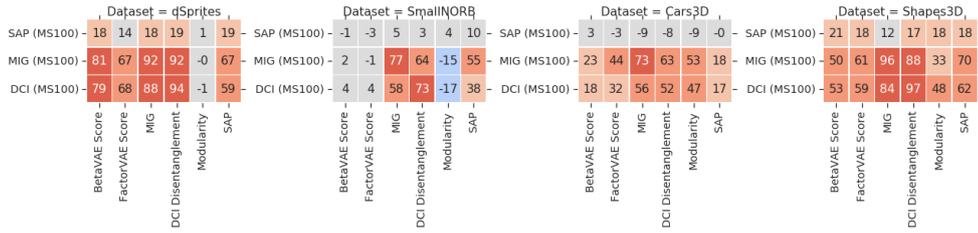}\caption{100 perfect labels.}
\end{subfigure}
\begin{subfigure}{0.8\textwidth}
\centering\includegraphics[width=\textwidth]{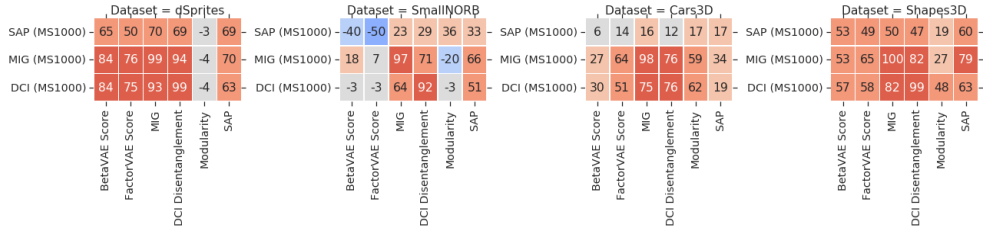}\caption{1000 perfect labels.}
\end{subfigure}
\begin{subfigure}{0.8\textwidth}
\centering\includegraphics[width=\textwidth]{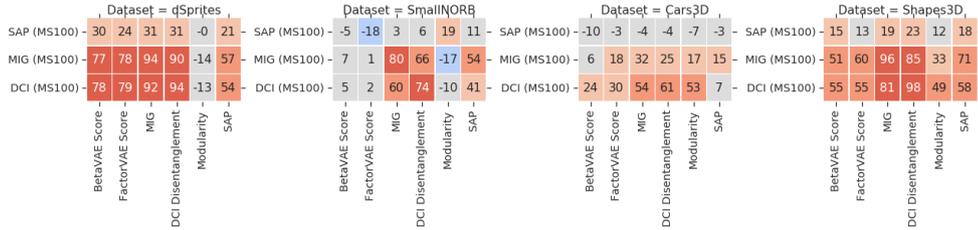}\caption{100 binned labels.}
\end{subfigure}
\begin{subfigure}{0.8\textwidth}
\centering\includegraphics[width=\textwidth]{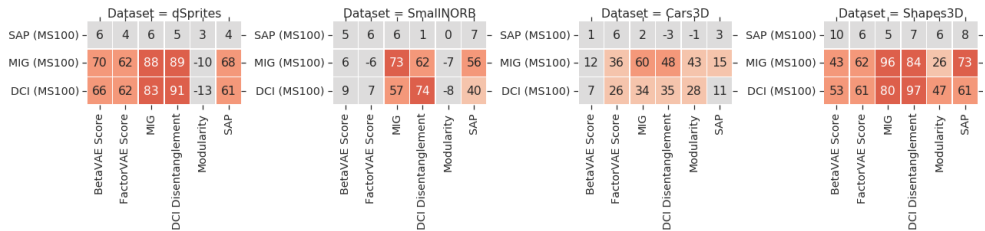}\caption{100 noisy labels.}
\end{subfigure}
\begin{subfigure}{0.8\textwidth}
\centering\includegraphics[width=\textwidth]{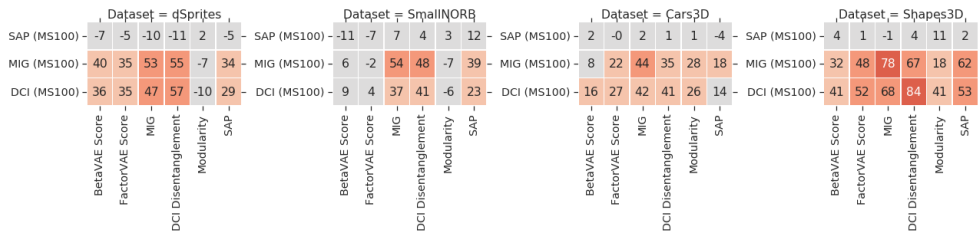}\caption{100 partial labels.}
\end{subfigure}
\end{center}
\vspace{-3mm}
\caption{Rank correlation of validation metrics and test metrics on dSprites. Validation metrics are computed on different types of labels.  Legend: (A) = BetaVAE Score,  (B) = FactorVAE Score, (C) = MIG, (D) = DCI Disentanglement, (E) = Modularity, (F) = SAP.
}\label{figure:metric_comparison_perfect_single}
\vspace{-5mm}
\end{figure}

\subsection{Key findings} \label{sec:us_keyfindings}
\looseness=-1In Figure~\ref{figure:metric_comparison_perfect_single}~(a), we show the rank correlation between the validation metrics computed on \num{100} samples and the test metrics on dSprites. 
We observe that MIG and DCI Disentanglement generally correlate well with the test metrics (with the only exception of Modularity) while the correlation for the SAP score is substantially lower.
This is not surprising given that the SAP score requires us to train a multiclass support vector machine for each dimension of $r(\rvx)$ predicting each dimension of $\rvz$. 
For example, on Cars3D the factor determining the object type can take 183 distinct values making it hard to train a classifier using only 100 training samples.
In Figure~\ref{figure:metric_comparison_perfect_single}~(b), we observe that the rank correlation improves considerably for the SAP score if we have \num{1000} labeled examples available and slightly for MIG and DCI Disentanglement. In Figure~\ref{figure:latent_traversal_s2} (top) we show latent traversals for the $U/S$ model achieving maximum validation MIG on \num{1000} examples on Shapes3D.
Figure~\ref{figure:metric_comparison_perfect_single}~(c) shows the rank correlation between the model selection metrics with binned values and the test metrics with exact labels. 
We observe that binned labeling does not seem detrimental to the effectiveness of model selection with few labels.
We interpret these results as follows: For disentanglement, fine-grained labeling is not critical as the different factors of variation can already be disentangled using coarse feedback.
Interestingly, the rank correlation of the SAP score and the test metrics improves considerably (particularly for 100 labels). 
This is to be expected, as now we only have five classes for each factor of variation so the classification problem becomes easier, and the estimate of the SAP score more reliable.
In Figure~\ref{figure:metric_comparison_perfect_single}~(d), we observe that noisy labels are only slightly impacting the performance. In Figure~\ref{figure:metric_comparison_perfect_single}~(e), we can see that observing only two factors of variation still leads to a high correlation with the test scores, although the correlation is lower than for other forms of label corruption.

\textbf{Conclusions.} 
From this experiment, we conclude that it is possible to identify good runs and hyperparameter settings on the considered data sets using the MIG and the DCI Disentanglement based on \num{100} labeled examples. 
The SAP score may also be used, depending on how difficult the underlying classification problem is. Surprisingly, these metrics are reliable, even if we do not collect the labels exactly. 
We conclude that labeling a small number of examples for supervised validation appears to be a reasonable solution to learn disentangled representations in practice. Not observing all factors of variation does not have a dramatic impact. Whenever possible, it seems better to label more factors of variation in a coarser way rather than fewer factors more accurately.

\section{Incorporating label information during training}\label{sec:semi-supervised}

Using labels for model selection---even only a small amount---raises the natural question of whether these labels should rather be used for training a good model directly.
In particular, such an approach also allows the structure of the ground-truth factors of variation to be used, for example, ordinal information.

\looseness=-1The key idea is that the limited labeling information should be used to ensure a latent space of the VAE with desirable structure w.r.t. the ground-truth factors of variation (as there is not enough labeled samples to learn a good representation solely from the labels).
We hence incorporate supervision by equipping the regularized ELBO with a constraint $R_s(q_\phi(\rvz|\rvx), \rvz)\leq \kappa$, where $R_s(q_\phi(\rvz|\rvx), \rvz)$ is a function computed on the (few) available observation-label pairs and $\kappa >0$ is a threshold.
We can now include this constraint into the loss as a regularizer under the Karush-Kuhn-Tucker conditions: 
\begin{align} \label{eq:rurs}
\max_{\phi, \theta} \quad \textsc{ELBO}(\phi, \theta) + \beta \E_{\rvx}R_u(q_\phi(\rvz|\rvx)) + \gamma_\text{sup}\E_{\rvx, \rvz} R_s(q_\phi(\rvz|\rvx), \rvz)
\end{align}
where $\gamma_\text{sup}>0$. 
We rely on the binary cross-entropy loss to match the factors to their targets, \ie, $R_s(q_\phi(\rvz | \rvx), \rvz) = -\sum_{i=1}^d z_i \log(\sigma(r(\rvx)_i)) + (1-z_i) \log(1-\sigma(r(\rvx)_i))$, where the targets $z_i$ are normalized to $[0,1]$, $\sigma(\cdot)$ is the logistic function and $r(\rvx)$ corresponds to the mean (vector) of $q_\phi(\rvz | \rvx)$.
When $\rvz$ has more dimensions than the number of factors of variation, only the first $d$ dimensions are regularized (where $d$ is the number of factors of variation).
While the $z_i$ do not model probabilities of a binary random variable but factors of variation with potentially more than two discrete states, we have found the binary cross-entropy loss to work empirically well out-of-the-box.
We also experimented with a simple $L_2$ loss $\|\sigma (r(\rvx)) - \rvz\|^2$ for $R_s$, but obtained considerably worse results than for the binary cross-entropy. 
Many other candidates for supervised regularizers could be explored in future work.

\textbf{Differences to prior work on semi-supervised disentanglement.}
Existing semi-supervised approaches tackle the different problem of disentangling some factors of variation that are (partially) observed from the others that remain entangled~\citep{reed2014learning,cheung2014discovering,mathieu2016disentangling,narayanaswamy2017learning,kingma2014semi}. 
In contrast, we assume to observe all ground-truth generative factors but only for a very limited number of observations.
Disentangling only some of the factors of variation from the others is an interesting extension of this study. However, it is not clear how to adapt existing disentanglement scores to this different setup as they are designed to measure the disentanglement of \textit{all} the factors of variation. 
We remark that the goal of the experiments in this section is to compare the two different approaches to incorporate supervision into state-of-the-art unsupervised disentanglement methods. 
\vspace{-2mm}
\subsection{Experimental setup}
\textbf{Experimental protocol.}
As in Section~\ref{sec:validation}, we compare the effectiveness of the ground-truth labels with binned, noisy, and partial labels on the performance of our semi-supervised approach. 
\looseness=-1To include supervision during training we split the labeled examples in a $90\%$/$10\%$ train/validation split. We consider 40 different experimental settings each corresponding to a data set (\textit{dSprites}/ \textit{Cars3D}/ \textit{SmallNORB}/ \textit{Shapes3D}), a specific number of labeled examples (\num{100}/\num{1000}), and a labeling setting (perfect/ binned/ noisy/ partial/randomly permuted). The randomly permuted labels are used to check the ordinal inductive bias of our loss and is not further discussed in the thesis, see~\citep{locatello2019disentangling}.
For each considered setting, we generate the same five different sets of labeled examples we used for the $U/S$ models.
For each of the labeled sets, we train cohorts of $\beta$-VAEs, $\beta$-TCVAEs, Factor-VAEs, and DIP-VAE-Is with the additional supervised regularizer $R_s(q_\phi(\rvz|\rvx), \rvz)$. 
Each model cohort consists of 36 different models with 6 different hyperparameters for each of the two regularizers and one random seed.
For each of these \num{28800} models, we compute the value of $R_s$ on the validation examples and use these scores to select the best method in each of the cohorts. 
For these models we use the prefix $S^2/S$ for \emph{semi-supervised} training with \emph{supervised} model selection and compute the same test disentanglement metrics as in Section~\ref{sec:validation}. 

\textbf{Fully supervised baseline.}
We further consider a fully supervised baseline where the encoder is trained solely based on the supervised loss with perfectly labeled training examples (again with a $90\%$/$10\%$ train/validation split). 
The supervised loss does not have any tunable hyperparameter, and for each labeled data set, we run cohorts of six models with different random seeds. 
For each of these \num{240} models, we compute the value of $R_s$ on the validation examples and use these scores to select the best method in the cohort.

\vspace{-2mm}
\subsection{Should labels be used for training?} \label{sec:labels_train}
\looseness=-1Each dot in the panels in Figure~\ref{figure:comparison_perfect}, corresponds to the median of the DCI Disentanglement score across the draws of the labeled subset (using 100 vs. 1000 examples for validation). For the $U/S$ models, we use MIG for validation (MIG has a higher rank correlation with most of the testing metrics than other validation metrics, see Figure~\ref{figure:metric_comparison_perfect_single}). 
From this plot, one can see that the fully supervised baseline performs worse than those that use unsupervised data. 
As expected, having more labels can improve the median disentanglement for the $S^2/S$ approaches (depending on the data set and the test metric) but does not improve much the $U/S$ approaches (recall that we observed in Figure~\ref{figure:metric_comparison_perfect_single} (a) that the validation metrics already perform well with 100 samples).

To test whether incorporating the label information during training is better than using it for validation only, we report in Figure~\ref{figure:bin_compare} (a) how often each approach outperforms all the others on a random disentanglement metric and data set. 
We observe that semi-supervised training often outperforms supervised validation.  
In particular, $S^2/S$-$\beta$-TC-VAE seems to improve the most, outperforming the $S^2/S$-Factor-VAE, which was the best method for \num{100} labeled examples. Using \num{100} labeled examples, the $S^2/S$ approach already wins in 70.5\% of the trials. 
We observe similar trends even when we use the testing metrics for validation (based on the full testing set) in the $U/S$ models, see~\citep{locatello2019disentangling}. The $S^2/S$ approach seems to improve training overall and to transfer well across the different disentanglement metrics. 
In Figure~\ref{figure:latent_traversal_s2} (bottom) we show the latent traversals for the best $S^2/S$ $\beta$-TCVAE using \num{1000} labeled examples. 
We observe that it achieves excellent disentanglement and that the unnecessary dimensions of the latent space are unused, as desired.

\begin{figure}
\centering\includegraphics[width=\textwidth]{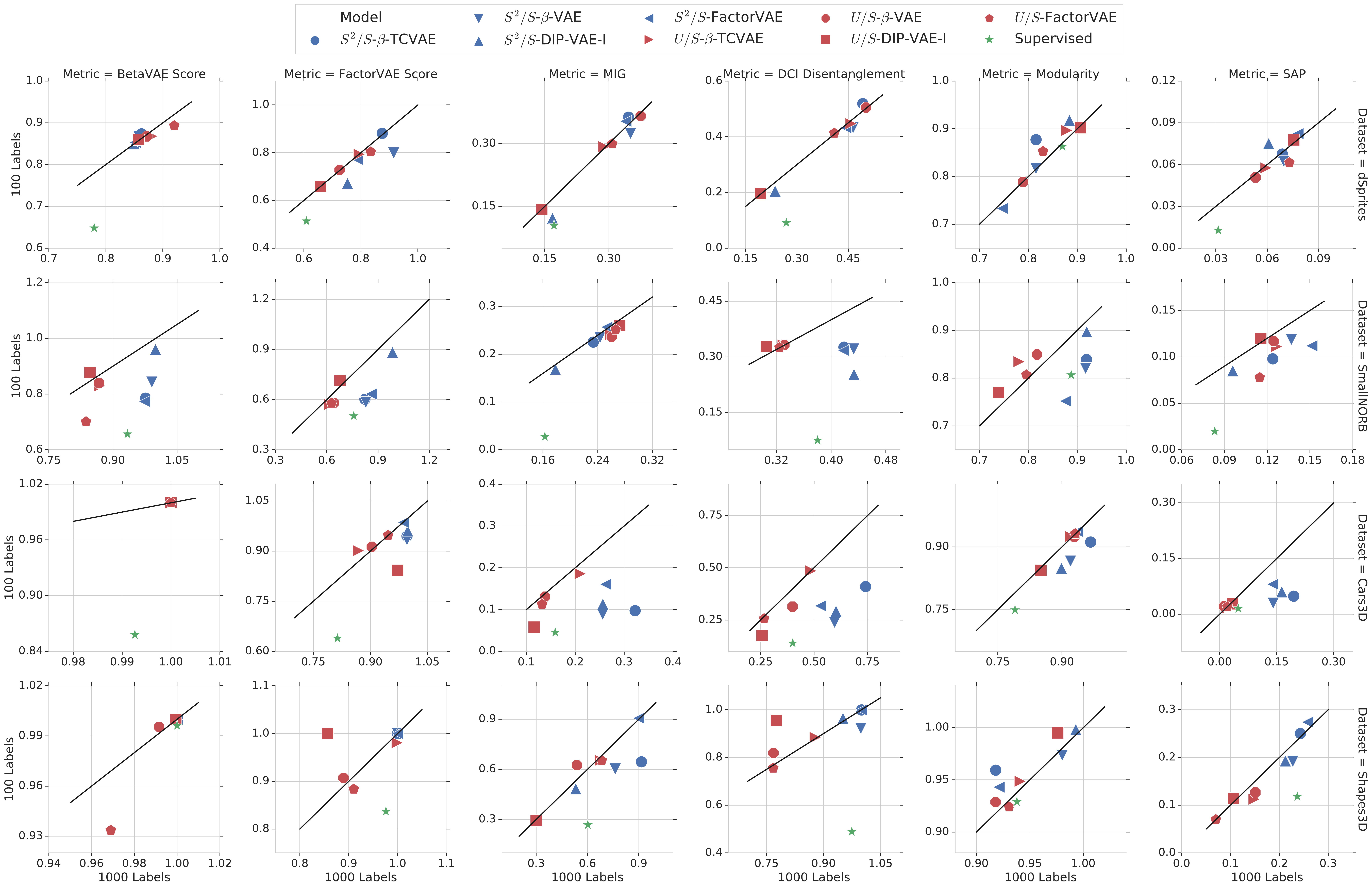}
\caption{Median across the draws of the labeled data set of the DCI Disentanglement test score on SmallNORB after validation with 100 and 1000 labeled examples. $U/S$ were validated with the MIG. 
}\label{figure:comparison_perfect}
\vspace{-3mm}
\end{figure}



\textbf{Conclusions:} Even though our semi-supervised training does not directly optimize the disentanglement scores, it seems beneficial compared to unsupervised training with supervised selection. The more labels are available, the larger the benefit. Finding extremely sample efficient disentanglement metrics is however an important research direction for practical applications of disentanglement.

\subsection{Robustness to imprecise labels} \label{sec:train_labels_imprecise}

\begin{figure}
\vspace{-2mm}
\begin{center}
\begin{subfigure}{0.22\textwidth}\centering\includegraphics[width=\textwidth]{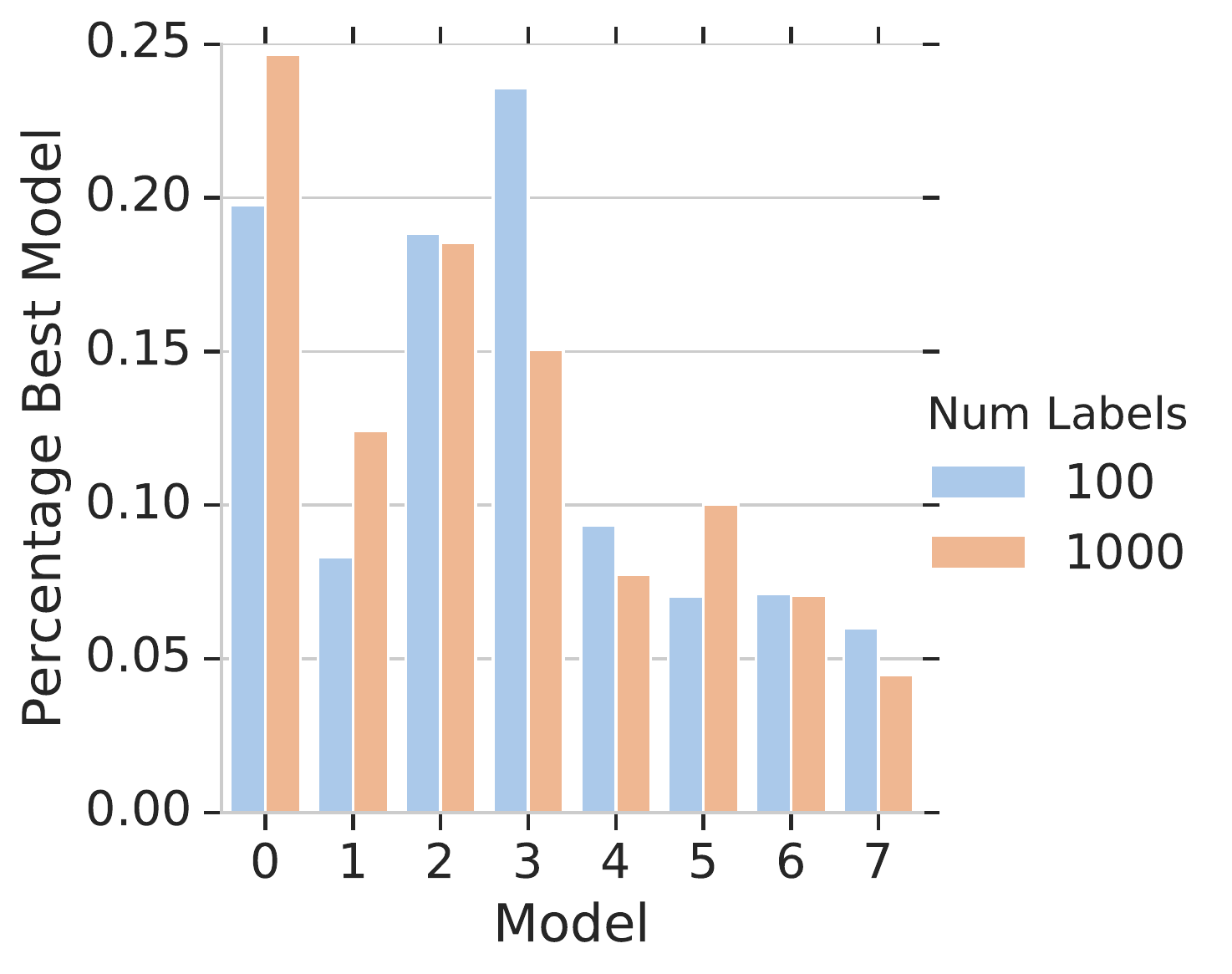}\caption{Perfect labels.}
\end{subfigure}%
\hspace{4mm}
\begin{subfigure}{0.22\textwidth}
\centering\includegraphics[width=\textwidth]{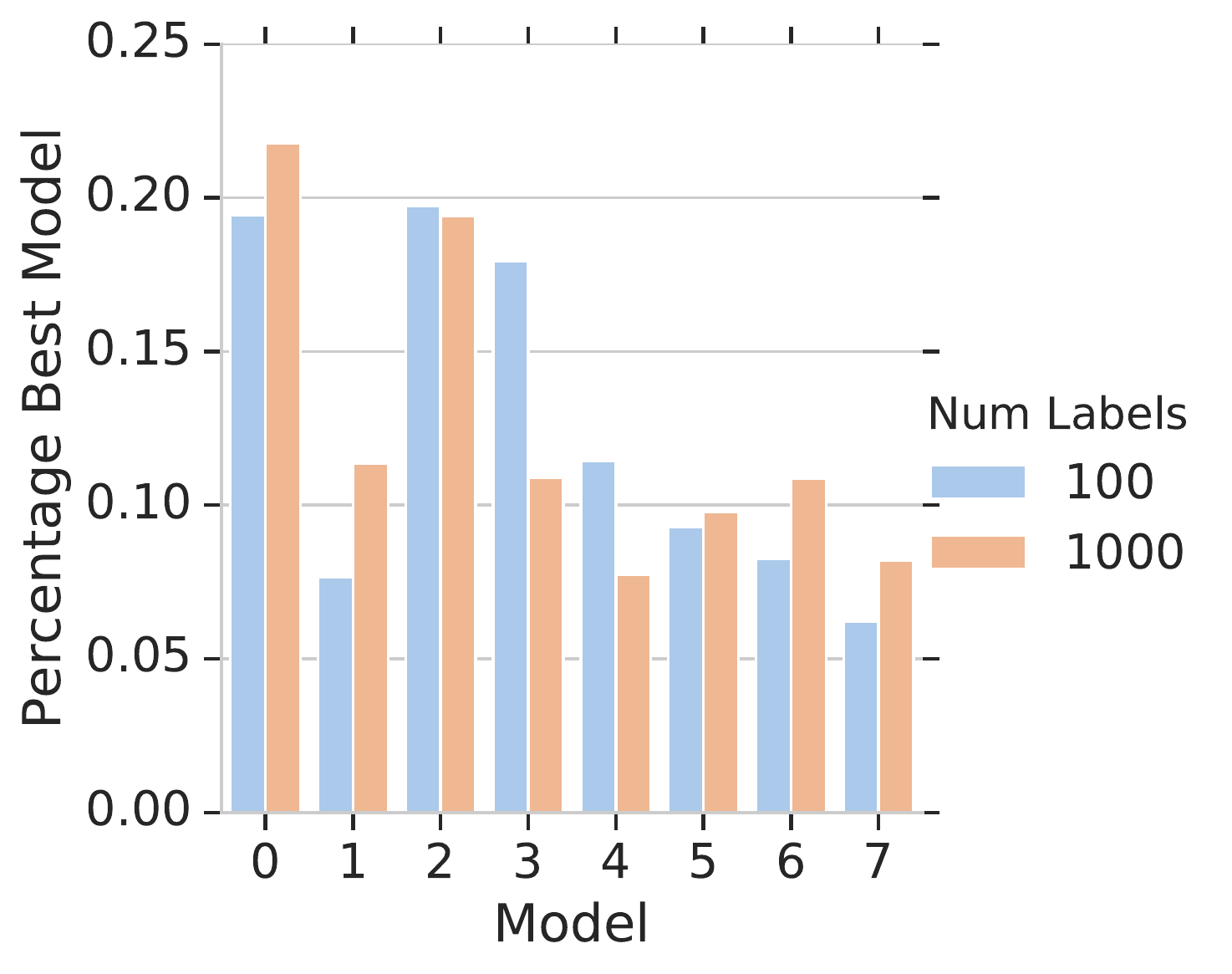}\caption{Binned labels.}
\end{subfigure}%
\hspace{4mm}
\begin{subfigure}{0.22\textwidth}
\centering\includegraphics[width=\textwidth]{sections/disent/semi_sup/autofigures/hist_best_method_noisy}\caption{Noisy labels.}
\end{subfigure}%
\begin{subfigure}{0.22\textwidth}
\centering\includegraphics[width=\textwidth]{sections/disent/semi_sup/autofigures/hist_best_method_partial}\caption{Partial labels.}
\end{subfigure}
\end{center}
\vspace{-3mm}
\caption{\looseness=-1Probability of each method being the best on a random test metric and a random data set after validation with different types of labels. Legend: 0=$S^2/S$-$\beta$-TCVAE, 1=$S^2/S$-$\beta$-VAE, 2=$S^2/S$-DIP-VAE-I, 3=$S^2/S$-FactorVAE, 4=$U/S$-$\beta$-TCVAE, 5=$U/S$-$\beta$-VAE, 6=$U/S$-DIP-VAE-I, 7=$U/S$-FactorVAE. Overall, it seem more beneficial to incorporate supervision during training rather than using it only for validation. Having more labels available increases the gap.}\label{figure:bin_compare}
\vspace{-4mm}
\end{figure}
\begin{figure}
\centering\includegraphics[width=\textwidth]{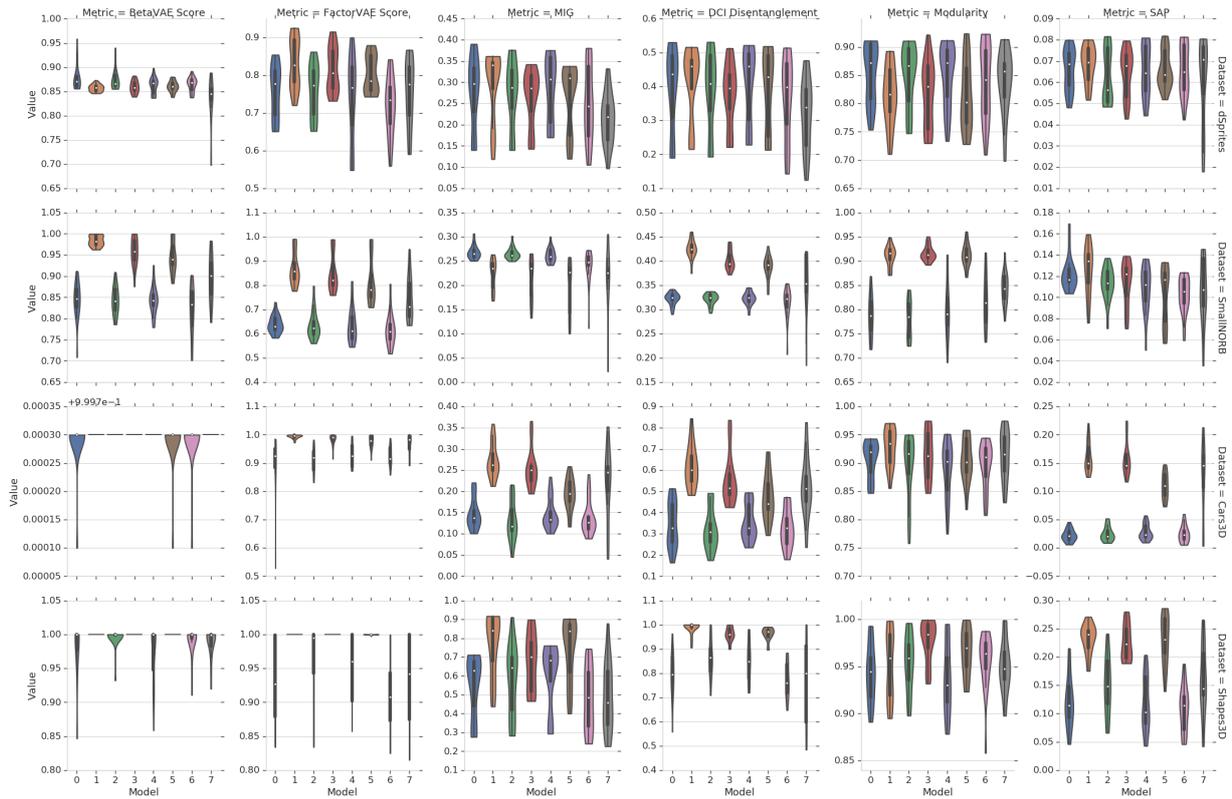}
\caption{\looseness=-1 Distribution of models trained with different types of labels with 1000 samples, $U/S$ validated with MIG. Legend: 0=$U/S$ perfect, 1=$S^2/S$ perfect, 2=$U/S$ binned, 3=$S^2/S$ binned, 4=$U/S$ noisy, 5=$S^2/S$ noisy, 6=$U/S$ partial, 7=$S^2/S$ partial.}\label{figure:bin_compare_violin}
\end{figure}

\looseness=-1
In Figure~\ref{figure:bin_compare_violin}, we observe that imprecise labels do not considerably worsen the performance of both the supervised validation and the semi-supervised training. 
Sometimes the regularization induced by simplifying the labels appears to improve generalization, arguably due to a reduction in overfitting.
We observe that the model selection metrics are slightly more robust than the semi-supervised loss. 
However, as shown in Figure~\ref{figure:bin_compare} (b-d), the semi-supervised approaches still outperform supervised model selection in 64.8\% and 67.5\% of the trials with 100 binned and noisy labels, respectively. The only exception appears to be with partial labels, where the two approaches are essentially equivalent (50.0\%) with 100 labeled examples, and the semi-supervised improves (62.6\%) only with 1000 labeled examples. 

\textbf{Conclusion:} The $S^2/S$ methods are also robust to imprecise labels. While the $U/S$ methods appear to be more robust, $S^2/S$ methods are still outperforming them.

\chapter{Fairness of Disentangled Representations}\label{cha:fairness}
\looseness=-1In this chapter, we discuss the fairness properties of disentangled representations. The presented work is based on \citep{locatello2019fairness} and was developed in collaboration with Gabriele Abbati, Tom Rainforth, Stefan Bauer, Bernhard Sch\"olkopf, and Olivier Bachem. This work was partially done when Francesco Locatello was at Google Research, Brain Team in Zurich.

\section{General Purpose Representations and Fairness}\label{sec:fairness}
In this chapter, we investigate the downstream usefulness of disentangled representations through the lens of fairness. For this, we consider the standard setup of disentangled representation learning, in which observations are the result of an (unknown) mixing mechanism of independent ground-truth factors of variation, as depicted in Figure~\ref{fig:graph}.
As one builds machine learning models for different tasks on top of such general purpose representations, it is not clear how the properties of the representations relate to the fairness of the predictions.
In particular, for different downstream prediction tasks, there may be different sensitive variables that we would like to be fair to.
This is modeled in our setting of Figure~\ref{fig:graph} by allowing one ground-truth factor of variation to be the target variable $\rvy$ and another one to be the sensitive variable $\rvs$.\footnote{Please see Section~\ref{sec:unfair} for how this is done in the experiments.}
There are two key differences to prior setups in the fairness literature:
First, we assume that one only observes the observations $\rvx$ when learning the representation $r(\rvx)$ and the target variable $\rvy$ only when solving the downstream classification task. 
The sensitive variable $\rvs$ and the remaining ground-truth factors of variation are not observed.
The second difference is that we assume that the target variable $\rvy$ and the sensitive variable $\rvs$ are independent.
While beyond the scope of this chapter, it would be interesting to study the setting where ground-truth factors of variations are dependent. 

To evaluate the learned representations $r(\rvx)$ of these observations, we assume that the set of ground-truth factors of variation include both a target factor $\rvy$, which we would like to predict from the learned representation, and an underlying sensitive factor $\rvs$, which we want to be fair to in the sense of demographic parity~\cite{calders2009building,zliobaite2015relation}, \ie such that $p(\hat \rvy = y | \rvs=s_1)=p(\hat \rvy = y | \rvs=s_2) \ \forall y, s_1, s_2$. 
The key difference to prior work is that in this setting, one never observes the sensitive variable $\rvs$ nor the other factors of variation except the target variable, which is only observed when learning the model for the downstream task. This setup is relevant when sensitive variables may not be recorded due to privacy reasons. 
Examples include learning general-purpose embeddings from a large number of images or building a world model based on video input of a robot.

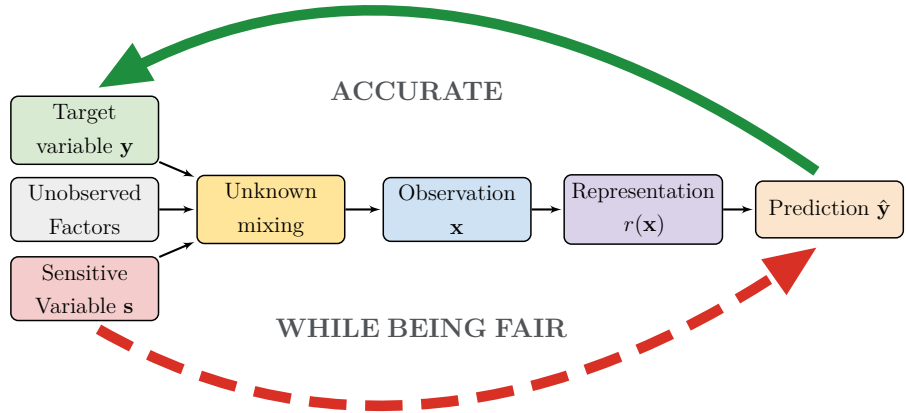
\begin{figure}
\makebox[\textwidth][c]{%
    \centering
    \definecolor{mygreen}{RGB}{30,142,62}
	\definecolor{myred}{RGB}{217,48,37}
	\scalebox{0.7}{\begin{tikzpicture}
		\tikzset{
			myarrow/.style={->, >=latex', shorten >=1pt, line width=0.4mm},
		}
		\tikzset{
			mybigredarrow/.style={dashed, dash pattern={on 20pt off 10pt}, >=latex', shorten >=3mm, shorten <=3mm, line width=2mm, draw=myred},
		}
		\tikzset{
			mybiggreenarrow/.style={->, >=latex', shorten >=3mm, shorten <=3mm, line width=2mm, draw=mygreen},
		}
		\node[text width=2.5cm,align=center,minimum size=2.6em,draw,thick,rounded corners, fill=gyellow] (unkn_mixing) at (-1.75,0) {Unknown mixing};
		\node[text width=2.5cm,align=center,minimum size=2.6em,draw,thick,rounded corners, fill=gblue] (obs) at (1.75,0) {Observation $\rvx$};
        \node[text width=2.7cm,align=center,minimum size=2.6em,draw,thick,rounded corners, fill=gpurple] (repr) at (5.25,0) {Representation $r(\rvx)$};
		\node[text width=2.5cm,align=center,minimum size=2.6em,draw,thick,rounded corners, fill=gyellow2] (pred) at (8.75,0) {Prediction $\hat{\rvy}$};
		\node[text width=2.5cm,align=center,minimum size=2.6em,draw,thick,rounded corners, fill=ggray] (unobs_fact) at (-5.25,0) {Unobserved Factors};
		\node[text width=2.5cm,align=center,minimum size=2.6em,draw,thick,rounded corners, fill=ggreen] (target_v) at (-5.25,1.5) {Target variable $\rvy$};
		\node[text width=2.5cm,align=center,minimum size=2.6em,draw,thick,rounded corners, fill=gred] (sens_v) at (-5.25,-1.5) {Sensitive Variable $\rvs$};
		\draw[myarrow] (repr) -- (pred);
		\draw[myarrow] (unkn_mixing) -- (obs);
		\draw[myarrow] (obs) -- (repr);
		\draw[myarrow] (unobs_fact) -- (unkn_mixing);
		\draw[myarrow] (sens_v) -- (unkn_mixing);
		\draw[myarrow] (target_v) -- (unkn_mixing);
		\draw[mybiggreenarrow, -triangle 45] (pred.north) to [out=145, in=30] (target_v.north);
		\draw[mybigredarrow, -triangle 45] (sens_v.south) to [out=-30, in=-145] (pred.south);
		\node[text width=6cm,align=center,minimum size=2.6em,draw=none,] (accurate) at (1.0,2.25) {\textcolor{ggray2}{\large \textbf{ACCURATE}}};
		\node[text width=6cm,align=center,minimum size=2.6em,draw=none,] (accurate) at (1.1,-2.25) {\textcolor{ggray2}{\large \textbf{WHILE BEING FAIR}}};
	\end{tikzpicture}}}
    \caption{\small Causal graph and problem setting. We assume the observations $\rvx$ are manifestations of independent factors of variation. We aim at predicting the value of some factors of variation $\rvy$ without being influenced by the \textit{unobserved} sensitive variable $\rvs$. Even though target and sensitive variable are in principle independent, they are entangled in the observations by an unknown mixing mechanism. Our goal for fair representation learning is to learn a good representation $r(\rvx)$ so that any downstream classifier will be both accurate and fair. Note that the representation is learned without supervision and when training the classifier we do not observe and do not know which variables are sensitive.}
    \label{fig:graph}
\end{figure}


\textbf{Why Can Representations be Unfair in this Setting?}
While the independence between the target variable $\rvy$ and the sensitive variable $\rvs$ may seem like an overly restrictive assumption, we argue that fairness is non-trivial to achieve even in this setting.
Since we only observe $\rvx$ or the learned representations $r(\rvx)$, the target variable $\rvy$ and the sensitive variable $\rvs$ can become conditionally dependent.
If we now train a prediction model based on $\rvx$ or $r(\rvx)$, there is no guarantee that predictions will be fair with respect to $\rvs$.

\looseness=-1There are additional considerations:
first, the following theorem shows that the fairness notion of demographic parity may not be satisfied even if we find the optimal prediction model (\ie, $p(\hat\rvy | \rvx) = p(\rvy | \rvx)$) on entangled representations (for example when the representations are the identity function, \ie $r(\rvx)=\rvx$).
\begin{theorem}\label{thm:perfect_fairness}
If $\rvx$ is entangled with $\rvs$ and $\rvy$, the use of a perfect classifier for $\hat\rvy$, \ie, $p(\hat\rvy | \rvx) = p(\rvy | \rvx)$, does not imply demographic parity, \ie, $p(\hat \rvy = y | \rvs=s_1)=p(\hat \rvy = y | \rvs=s_2), \forall y, s_1, s_2$.
\end{theorem}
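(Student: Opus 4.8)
The plan is to give the simplest possible counterexample: a concrete joint distribution over $\rvx$, $\rvy$, and $\rvs$ in which $\rvy \perp \rvs$ holds by construction, yet the optimal (in fact, perfect) classifier $p(\hat\rvy\mid\rvx) = p(\rvy\mid\rvx)$ fails demographic parity. Since the statement is an implication that we want to refute, a single well-chosen discrete example suffices, and a discrete example keeps all the computations elementary.

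\textbf{First}, I would set up binary variables $\rvy,\rvs \in \{0,1\}$ that are independent, say each uniform, so $p(\rvy=i,\rvs=j)=1/4$ for all $i,j$. The observation $\rvx$ should be deterministically entangled with both: the cleanest choice is $\rvx := (\rvy + \rvs)\bmod 2 \in \{0,1\}$ (the XOR), or alternatively $\rvx := \rvy + \rvs \in \{0,1,2\}$ if one prefers $\rvx$ to reveal more. With the XOR construction, $\rvx$ is marginally uniform on $\{0,1\}$ and, crucially, $\rvx$ is independent of $\rvy$ and independent of $\rvs$ individually, but $\rvy$ and $\rvs$ are dependent \emph{given} $\rvx$: observing $\rvx=0$ forces $\rvy=\rvs$, observing $\rvx=1$ forces $\rvy\neq\rvs$. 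This is exactly the "conditionally dependent after observing $\rvx$" phenomenon flagged in the text preceding the theorem, so it directly instantiates the informal argument.

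\textbf{Second}, I would compute the perfect classifier. Since $\rvx \perp \rvy$ in this example, $p(\rvy=y\mid\rvx) = p(\rvy=y) = 1/2$, so $\hat\rvy$ is just a fair coin independent of everything, and demographic parity holds trivially — which means the XOR example as stated does \emph{not} break parity and I must make $\rvx$ slightly more informative. The fix: let $\rvx=(\rvx_1,\rvx_2)$ where $\rvx_1 = \rvy$ with probability $2/3$ and $\rvx_2 = (\rvy+\rvs)\bmod 2$ deterministically. Then $p(\rvy\mid\rvx)$ is genuinely non-constant, the perfect classifier $p(\hat\rvy\mid\rvx)=p(\rvy\mid\rvx)$ uses both coordinates, and because $\rvx_2$ couples $\rvy$ and $\rvs$, conditioning the prediction on $\rvs$ shifts its distribution. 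Concretely I would tabulate $p(\rvx\mid\rvy,\rvs)$ over the (at most $2\times 2\times 4$) atoms, form $p(\rvy\mid\rvx)$ by Bayes' rule, then compute $p(\hat\rvy=1\mid\rvs=0)=\sum_{\rvx}p(\hat\rvy=1\mid\rvx)\,p(\rvx\mid\rvs=0)$ and likewise for $\rvs=1$, and exhibit that these two numbers differ. Even simpler and fully rigorous: take $\rvx := (\rvy + \rvs) \bmod 2$ together with an independent "tell" — actually the minimal clean construction is $\rvy,\rvs$ uniform independent and $\rvx := \mathbbm{1}[\rvy = \rvs]$ perturbed; I will pick whichever of these makes the arithmetic cleanest when I write the final version, the essential point being that $\rvx$ must carry strictly positive information about $\rvy$ while also correlating $\rvy$ with $\rvs$ conditionally.

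\textbf{The main obstacle} is not conceptual but one of bookkeeping: I need the example to simultaneously satisfy (i) $\rvy\perp\rvs$ marginally, (ii) $p(\rvy\mid\rvx)$ non-degenerate (otherwise $\hat\rvy$ is constant and parity holds vacuously), and (iii) $p(\hat\rvy\mid\rvs)$ actually $\rvs$-dependent. Requirements (ii) and (iii) pull in the same direction — $\rvx$ must be informative about $\rvy$ — but one must verify (i) is not accidentally violated and that the final inequality $p(\hat\rvy=1\mid\rvs=0)\neq p(\hat\rvy=1\mid\rvs=1)$ comes out strict with explicit rational numbers. I would close the proof by stating the explicit probability table, the two computed conditional prediction probabilities, and noting their difference is nonzero, thereby contradicting demographic parity while the classifier is perfect and $\rvy\perp\rvs$. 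A remark could add that replacing the deterministic mixing by $r(\rvx)=\rvx$ (the identity "representation") shows this is precisely a statement about entangled representations, matching the theorem's hypothesis.
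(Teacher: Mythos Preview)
Your overall strategy (exhibit a single discrete counterexample with $\rvy\perp\rvs$) is the right one and matches the paper. The gap is that neither of your proposed constructions actually violates demographic parity. You already saw that plain XOR fails because $\rvx\perp\rvy$. Your fix $\rvx=(\rvx_1,\rvx_2)$ with $\rvx_1$ a noisy copy of $\rvy$ and $\rvx_2=\rvy\oplus\rvs$ fails for the same reason: with $\rvy,\rvs$ uniform and the channel noise independent of $\rvs$, one has $p(\rvx_2\mid\rvy)=1/2$ regardless of $\rvy$, hence $\rvx_1\perp\rvx_2\mid\rvy$ and $p(\rvy\mid\rvx_1,\rvx_2)=p(\rvy\mid\rvx_1)$. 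The perfect classifier therefore ignores $\rvx_2$ entirely and depends only on $\rvx_1$, which is independent of $\rvs$, so $\hat\rvy\perp\rvs$ and demographic parity holds. The alternative you mention, $\rvx=\mathbbm{1}[\rvy=\rvs]$ ``perturbed,'' is just $1-\rvy\oplus\rvs$ and has the same symmetry problem.

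The paper sidesteps this by using an \emph{asymmetric} mixing: $\rvx=\min(\rvy,\rvs)$ with $\rvy,\rvs$ independent Bernoulli with parameters $b,q\in(0,1)$. Here $\rvx=1$ forces $\rvy=1$, while $\rvx=0$ leaves $\rvy$ uncertain; crucially $p(\rvx\mid\rvs=0)\neq p(\rvx\mid\rvs=1)$ and $p(\rvy=1\mid\rvx)$ is non-constant, so $p(\hat\rvy=1\mid\rvs)$ genuinely depends on $\rvs$. The paper then assumes demographic parity and derives $b=1$, a contradiction. The lesson is that your requirement~(iii) is incompatible with XOR-type symmetric mixings under uniform marginals; you need either an asymmetric function like $\min$ (or AND/OR) or non-uniform marginals so that the entangling coordinate is itself informative about $\rvy$.
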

While this result provides a worst-case example, it should be interpreted with care.
In particular, such instances may not allow for good and fair predictions regardless of the representations\footnote{In this case, even properties of representations such as disentanglement may not help.}, and real-world data may satisfy additional assumptions not satisfied by the provided counterexample.

Second, the unknown mixing mechanism that relates $\rvy$, $\rvs$ to $\rvx$ may be highly complex and in practice the downstream learned prediction model will likely not be equal to the theoretically optimal prediction model $p(\hat\rvy | r(\rvx))$.
As a result, the downstream prediction model may be unable to properly invert the unknown mixing mechanism and successfully separate $\rvy$ and $\rvs$, in particular as it may not be incentivized to do so.
Finally, implicit biases and specific structures of the downstream model may interact and lead to different overall predictions for different sensitive groups in $\rvs$.

\paragraph{Why Might Disentanglement Help?}
The key idea why disentanglement may help in this setting is that disentanglement promises to capture information about different generative factors in different latent dimensions.
This limits the mutual information between different code dimensions and encourages the predictions to depend only on the latent dimensions corresponding to the target variable and not to the one corresponding to the sensitive ground-truth factor of variation.
More formally, in the context of Theorem~\ref{thm:perfect_fairness}, consider a disentangled representation where the two factors of variations $\rvs$ and $\rvy$ are separated in independent components (say $r(\rvx)_y$ only depends on $\rvy$ and $r(\rvx)_s$ on $\rvs$).
Then, the optimal classifier can learn to ignore the part of its input which is independent of $\rvy$ since $p(\hat\rvy | r(\rvx)) = p(\rvy | r(\rvx)) = p(\rvy | r(\rvx)_y, r(\rvx)_s) =  p(\rvy | r(\rvx)_y)$ as $\rvy$ is independent from $r(\rvx)_s$. 
While such an optimal classifier on the representation $r(\rvx)$ might be fairer than the optimal classifier on the observation $\rvx$, it may also have a lower prediction accuracy.

\section{Do disentangled representations matter?}\label{sec:dis_matter}

\paragraph{Experimental conditions}
We adopt the setup of~\cite{locatello2019challenging} described in Chapter~\ref{cha:disent_background}, and use \num{12600} pre-trained models from Chapter~\ref{cha:unsup_dis}. We assume to observe a target variable $\rvy$ that we should predict from the representation while we do not observe the sensitive variable $\rvs$. 
For each trained model, we consider each possible pair of factors of variation as target and sensitive variables.
For the prediction, we consider the same gradient boosting classifier~\cite{friedman2001greedy} as in~Chapter~\ref{cha:unsup_dis}, which was trained on \num{10000} labeled examples (denoted by GBT10000) and which achieves higher accuracy than the cross-validated logistic regression.  
Then, we observe the values of all the factors of variations and have access to the whole generative model. 
With this, we compute the disentanglement metrics and use the following score to measure the unfairness of the predictions
\begin{align*}
    \texttt{unfairness}(\hat \rvy) = \frac{1}{| S |} \sum_s TV(p(\hat \rvy), p(\hat \rvy  \mid \rvs = s)) \ \forall \ y
\end{align*}
where $TV$ is the total variation. 
In other words, we compare the average total variation of the prediction after intervening on $\rvs$, thus directly measuring the violation of demographic parity.
The reported unfairness score for each trained representation is the average unfairness of all downstream classification tasks we considered for that representation. 

\begin{figure}
   \begin{center}
    \begin{subfigure}{0.4\textwidth}
    \includegraphics[width=\textwidth]{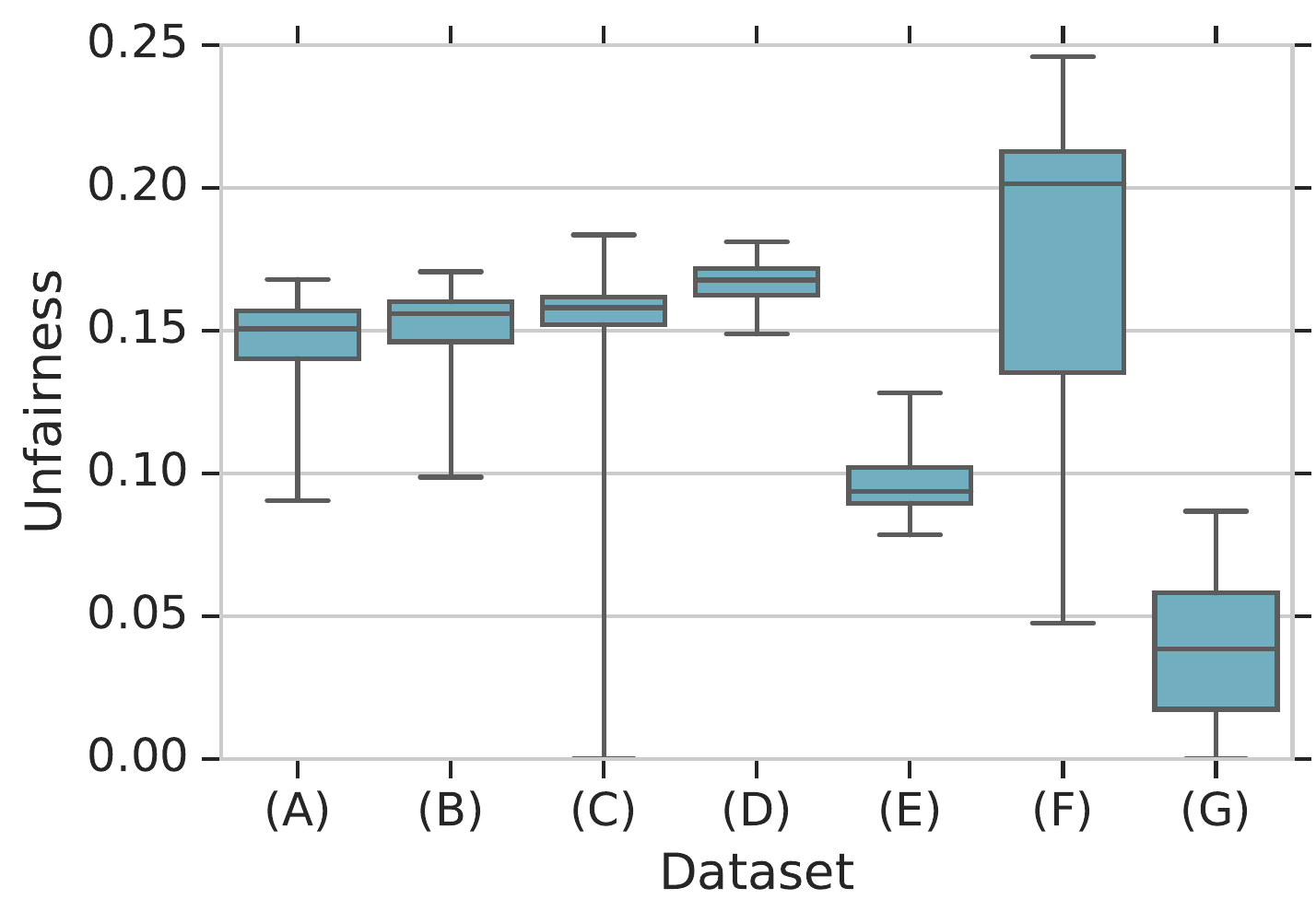}
    \end{subfigure}
    \begin{subfigure}{0.4\textwidth}
    \includegraphics[width=\textwidth]{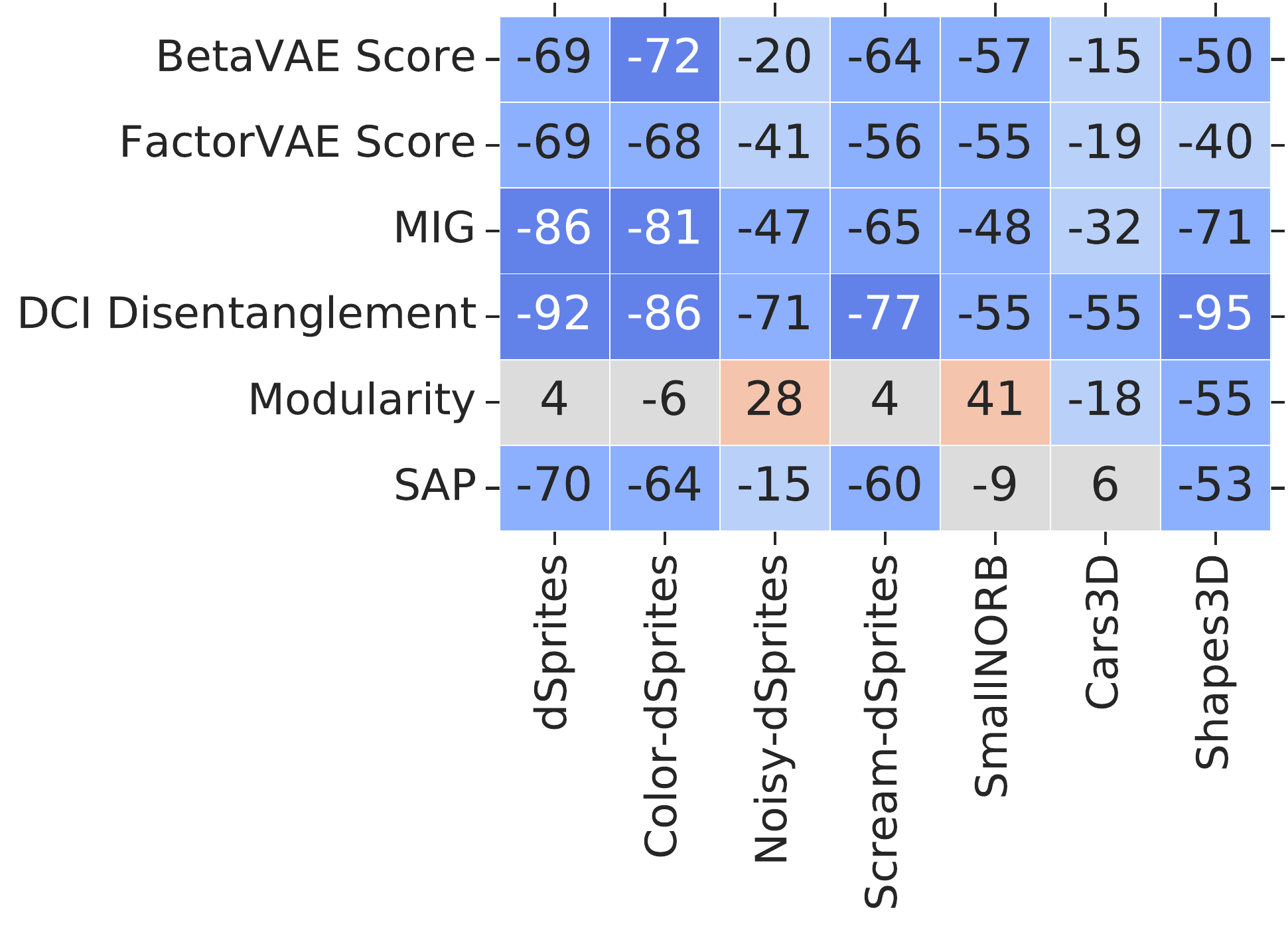}
    \end{subfigure}
    \end{center}
    \caption{\small (Left) Distribution of unfairness for learned representations. Legend: dSprites = (A), Color-dSprites = (B), Noisy-dSprites = (C), Scream-dSprites = (D), SmallNORB = (E), Cars3D = (F), Shapes3D = (G). (Right) Rank correlation of unfairness and disentanglement scores on the various data sets.}
     \label{fig:Unfairness}
     \centering
    \includegraphics[width=\textwidth]{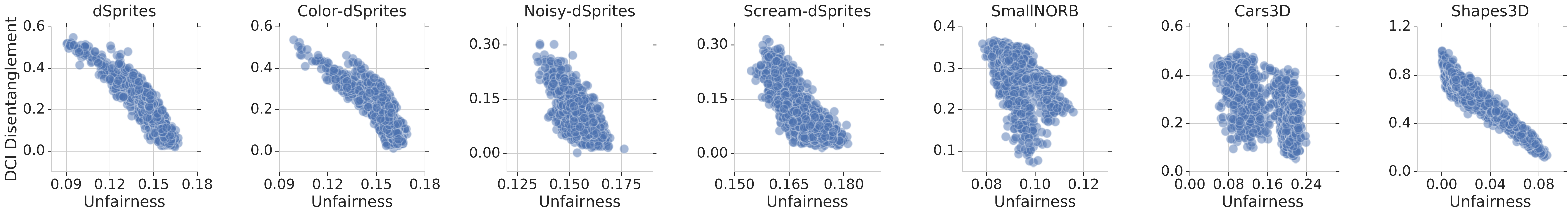}
    \caption{\small Unfairness of representations versus DCI Disentanglement on the different data sets.
    }
    \label{fig:Unfairness_vs_disentanglement_single}
\end{figure}

\subsection{The Unfairness of General Purpose Representations and the Relation to Disentanglement}\label{sec:unfair}

\looseness=-1In Figure~\ref{fig:Unfairness} (left), we show the distribution of unfairness scores for different representations on different data sets.
We clearly observe that learned representations can be unfair, even in the setting where the target variable and the sensitive variable are independent.
In particular, the total variation can reach as much as $15\%-25\%$ on five out of seven data sets.
This confirms the importance of trying to find general-purpose representations that are less unfair.
We also note that there is considerable spread in unfairness scores for different learned representations.
This indicates that the specific representation used matters and that predictions with lower unfairness can be achieved.
To investigate whether disentanglement is a useful property to guarantee less unfair representations, we show the rank correlation between a wide range of disentanglement scores and the unfairness score in Figure~\ref{fig:Unfairness} (right).
We observe that all disentanglement scores except Modularity appear to be consistently correlated with a lower unfairness score for all data sets.
While we have found the considered disentanglement metrics (except Modularity) to be correlated (see~Chapter~\ref{cha:eval_dis}), we observe differences in-between scores regarding the correlation with fairness:
Figure~\ref{fig:Unfairness} (right) indicates that DCI Disentanglement is correlated the most, followed by the Mutual Information Gap, the BetaVAE score, the FactorVAE score, the SAP score, and, finally Modularity.
The strong correlation of DCI Disentanglement is confirmed by Figure~\ref{fig:Unfairness_vs_disentanglement_single}, where we plot the Unfairness score against the DCI Disentanglement score for each model. Again, we observe that the large gap in unfairness seems to be related to differences in the representation.

\begin{figure}[h!]
    \begin{center}
    \centering
    \includegraphics[width=\textwidth]{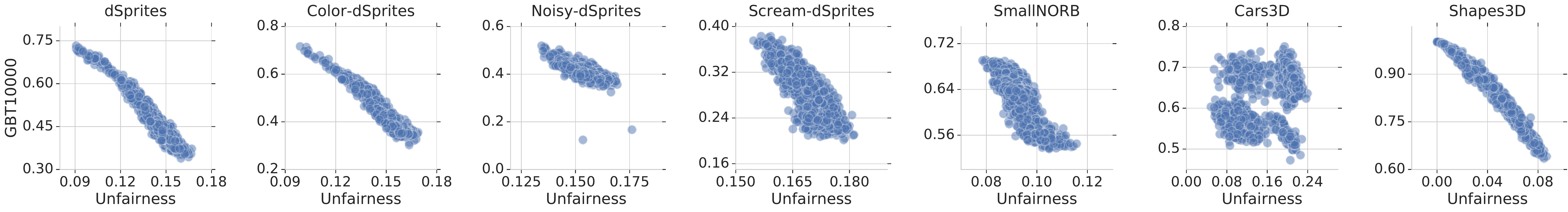}
    \caption{\small Unfairness of representations versus downstream accuracy on the different data sets.}
    \label{fig:GBT_vs_unfairness}
    \vspace{5mm}
    \begin{subfigure}{0.9\textwidth}
    \centering
    \includegraphics[width=\textwidth]{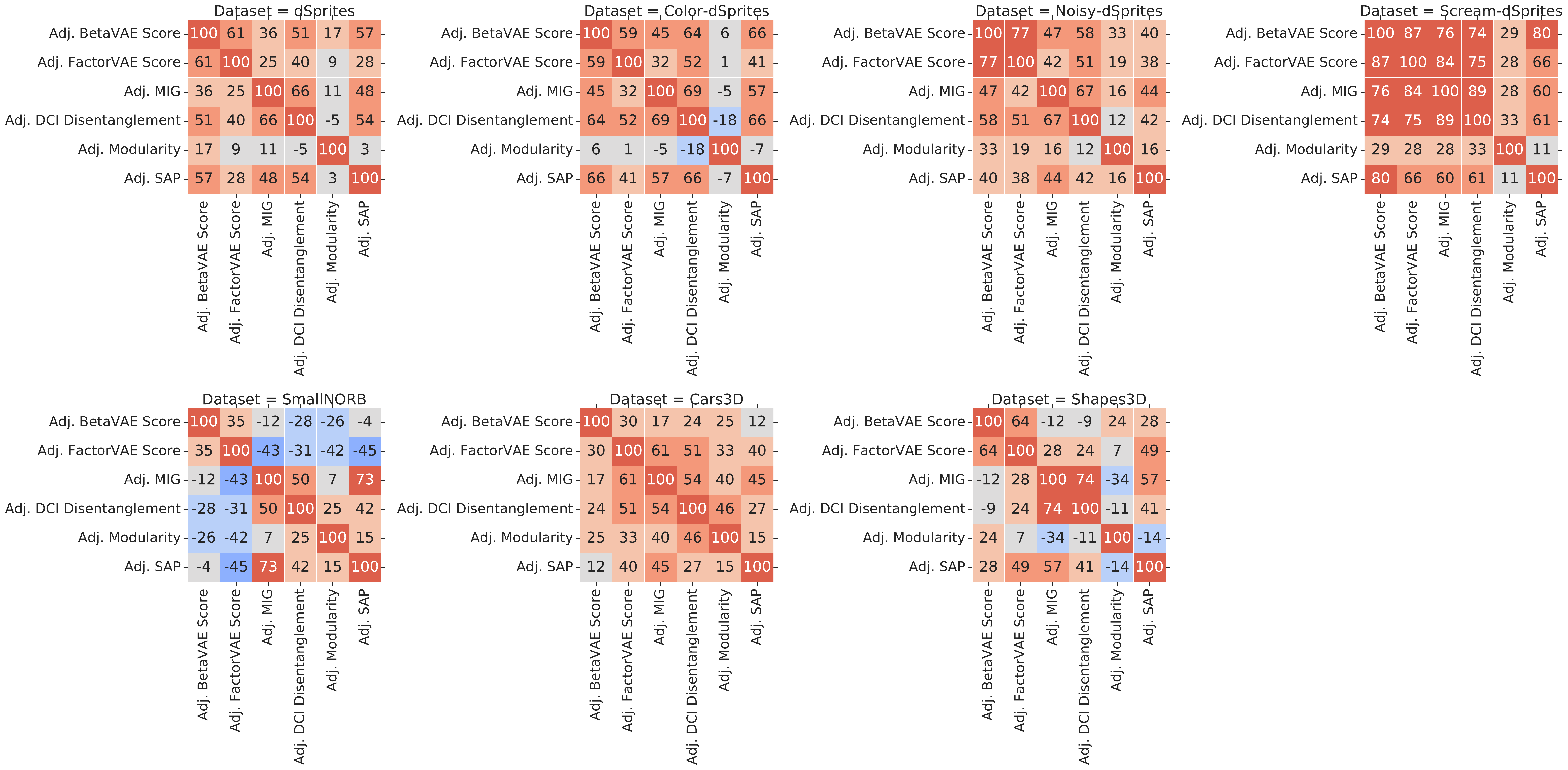}
    \end{subfigure}
    \hspace{2mm}
    \begin{subfigure}{0.9\textwidth}
    \centering
    \includegraphics[width=\textwidth]{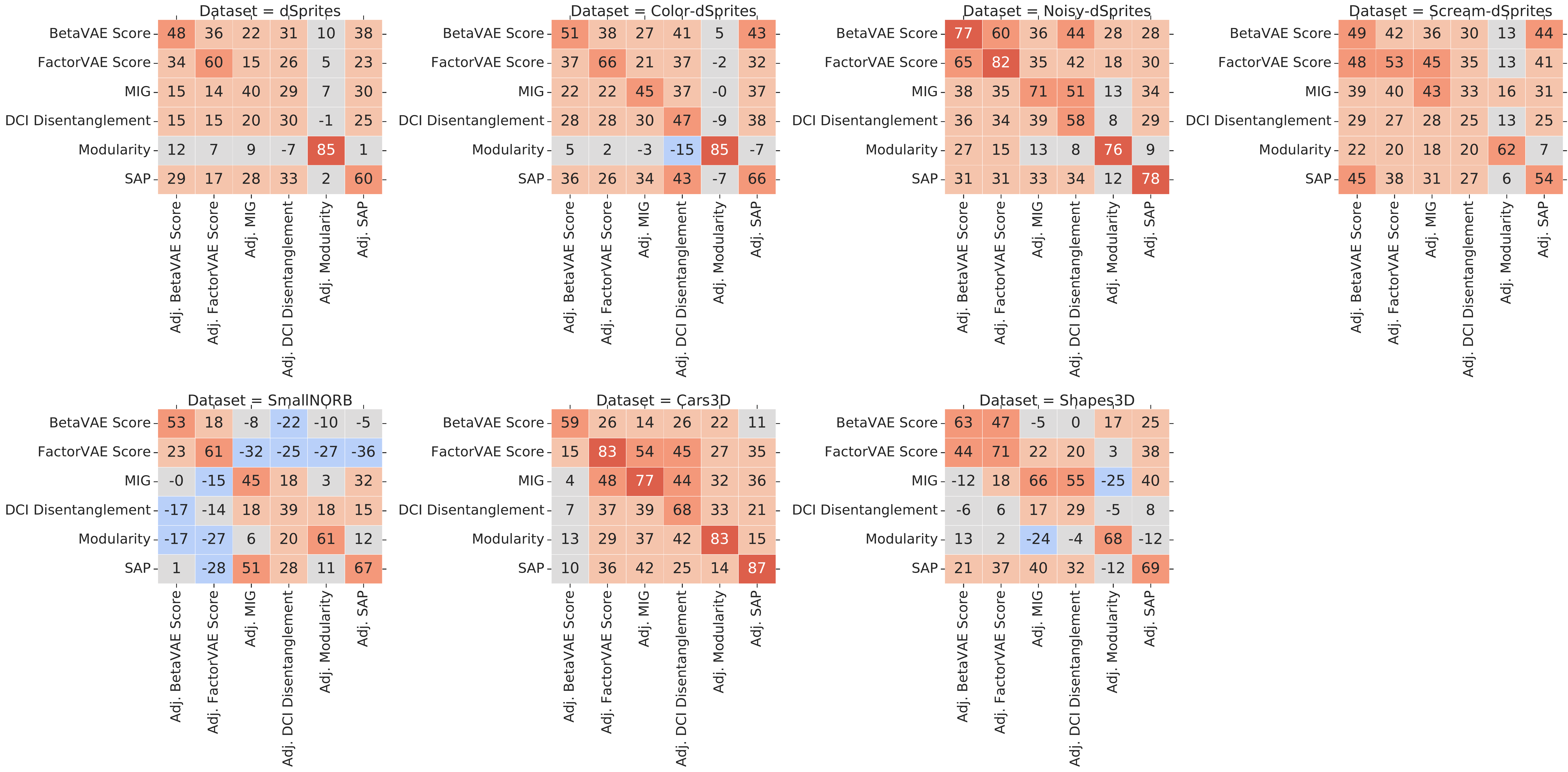}
     \end{subfigure}
     \end{center}
    \caption{\small Rank correlation between the adjusted disentanglement scores (top) and between original scores and the adjusted version (bottom).}
    \label{fig:adjusted_scores}
\end{figure}

These results provide an encouraging case for disentanglement being helpful in finding fairer representations.
However, they should be interpreted with care: 
Even though we have considered a diverse set of methods and disentangled representations, the computed correlation scores depend on the distribution of considered models.
If one were to consider an entirely different set of methods, hyperparameters, and corresponding representations, the observed relationship may differ.

\subsection{Adjusting for Downstream Performance}\label{sec:adj_unfair}

\looseness=-1In Chapter~\ref{cha:unsup_dis}, we have observed that disentanglement metrics are correlated with how well ground-truth factors of variations can be predicted from the representation using gradient boosted trees.
It is thus not surprising that the unfairness of a representation is also consistently correlated to the average accuracy of a gradient boosted trees classifier using \num{10000} samples (see Figure~\ref{fig:GBT_vs_unfairness}).
Now, we investigate whether disentanglement is also correlated with higher fairness if we compare representations with similar accuracy as measured by GBT10000 scores. \emph{Given two representations with the same downstream performance, is the more disentangled one also more fair?}
The key challenge is that for a given representation there may not be other ones with exactly the same downstream performance.

For this, we adjust all the disentanglement scores and the unfairness score for the effect of downstream performance. 
We use a k-nearest neighbors regression from Scikit-learn~\cite{scikitlearn} to predict, for any model, each disentanglement score and the unfairness from its five nearest neighbors in terms of GBT10000 (which we write as $N(\text{GBT10000})$). 
This can be seen as a one-dimensional non-parametric estimate of the disentanglement score (or fairness score) based on the GBT10000 score.
The adjusted metric is computed as the residual score after the average score of the neighbors is subtracted, namely
\begin{align*}
\texttt{Adj. Metric} = \texttt{Metric} - \frac{1}{5}\sum_{i \in N(\text{GBT10000})} \texttt{Metric}_{i}
\end{align*}
Intuitively, the adjusted metrics measure how much more disentangled/fairer a given representation is compared to an average representation with the same downstream performance.

\begin{figure}
\begin{center}
    {\adjincludegraphics[scale=0.4, trim={0 {0.9\height} 0 0}, clip]{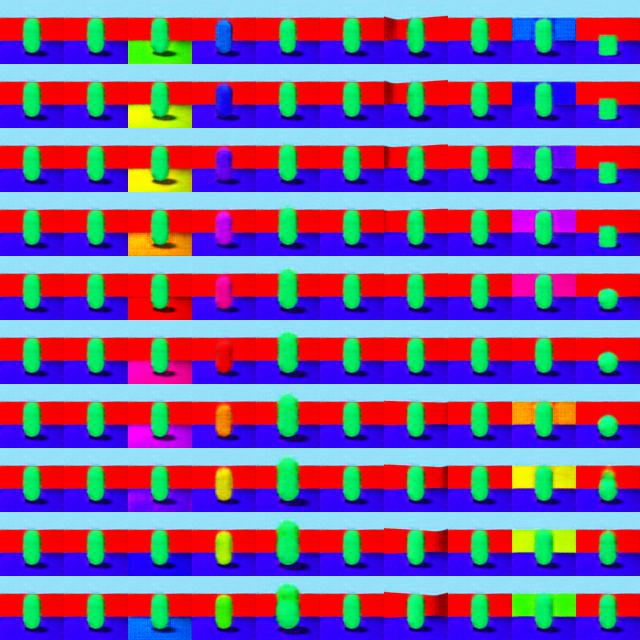}}\vspace{-0.3mm}
    {\adjincludegraphics[scale=0.4, trim={0 {.4\height} 0 {.5\height}}, clip]{sections/disent/fairness/figures/traversals4.jpg}}\vspace{-0.3mm}
    {\adjincludegraphics[scale=0.4, trim={0 0 0 {.9\height}}, clip]{sections/disent/fairness/figures/traversals4.jpg}}\vspace{2mm}
\end{center}
    \caption{\small Latent traversals (each column corresponds to a different latent variable being varied) on Shapes3D for the model with best adjusted MIG. }\label{figure:latent_traversal}
    \vspace{-4mm}
\end{figure}

\begin{wrapfigure}{l}{0.42\textwidth}
    \includegraphics[width=0.42\textwidth]{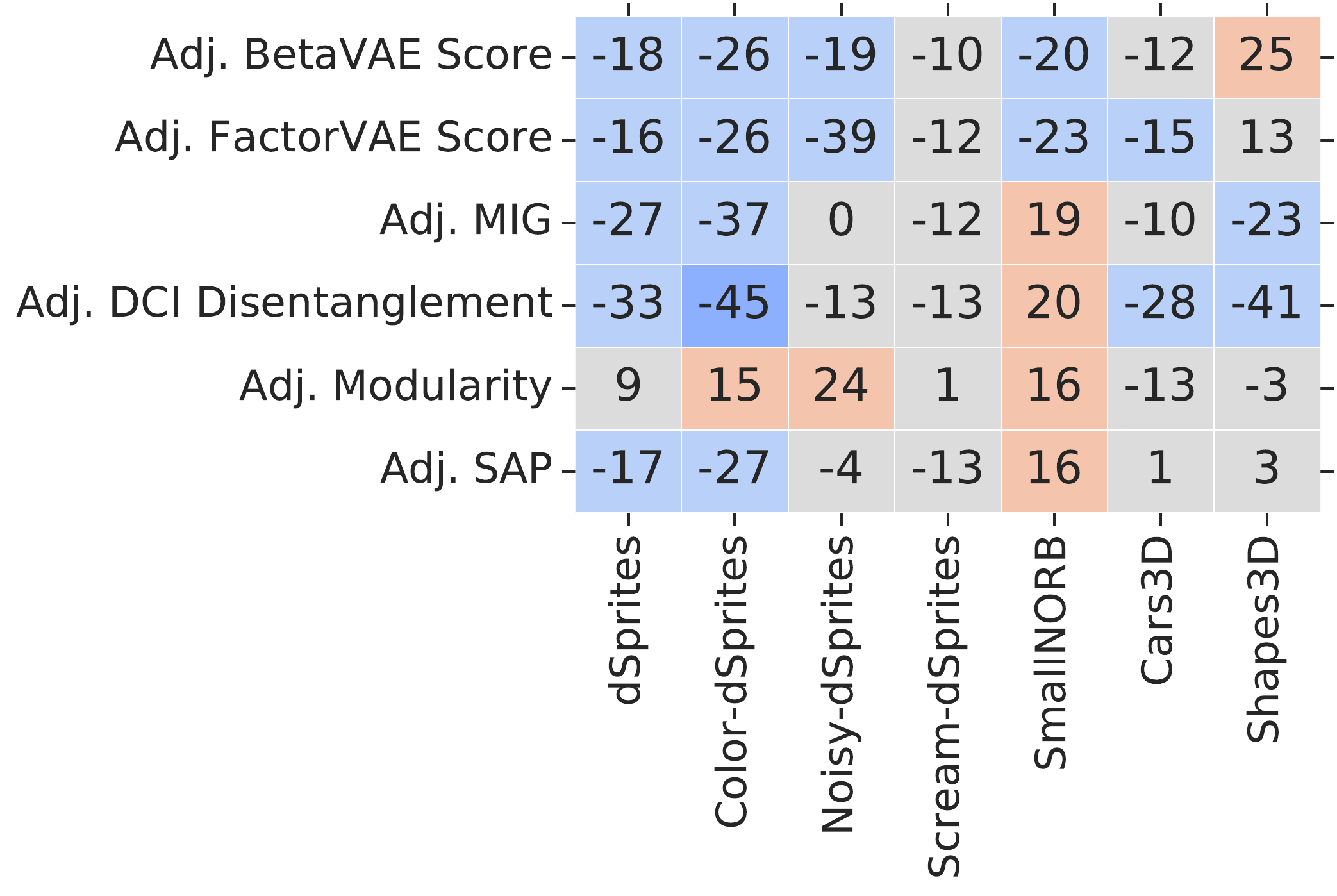}
    \caption{\small \looseness=-1Rank correlation between adjusted disentanglement and unfairness on the various data sets.}
    \label{fig:rank_fair_metrics}
    \vspace{-5mm}
\end{wrapfigure}In Figure~\ref{fig:adjusted_scores} (top), we observe that the rank correlation between the adjusted disentanglement scores (except Modularity) is consistenly positive.
This indicates that the adjusted scores do measure a similar property of the representation even when adjusted for performance. The only exception appears to be SmallNORB, where the adjusted DCI Disentanglement, MIG, and SAP score correlate with each other but do not correlate well with the BetaVAE and FactorVAE score (which only correlate with each other). On Shapes3D we observe a similar result, but the correlation between the two groups of scores is stronger than on SmallNORB. 
Similarly, Figure~\ref{fig:adjusted_scores} (bottom) shows the rank correlation between the disentanglement metrics and their adjusted versions.
As expected, we observe that there still is a positive correlation. 
This indicates the adjusted scores still capture a part of the unadjusted score. This result appears to be consistent across the different data sets, again with the exception of SmallNORB.
As a sanity check, we finally confirm by visual inspection that the adjusted metrics still measure disentanglement. In Figure~\ref{figure:latent_traversal}, we plot latent traversals for the model with the highest adjusted MIG score on Shapes3D and observe that the model appears well disentangled.

\looseness=-1Finally, Figure~\ref{fig:rank_fair_metrics} shows the rank correlation between the adjusted disentanglement scores and the adjusted fairness score for each of the data sets.
Overall, we observe that higher disentanglement still seems to be correlated with increased fairness, even when accounting for downstream performance.
Exceptions appear to be the adjusted Modularity score, the  adjusted BetaVAE, and the FactorVAE score on Shapes3D, and the adjusted MIG, DCI Disentanglement, Modularity and SAP on SmallNORB.
As expected, the correlations appear to be weaker than for the unadjusted scores (see Figure~\ref{fig:Unfairness} (right)), but we still observe some residual correlation.

\paragraph{How do we Identify Fair Models?}
In this chapter, we observed that disentangled representations might help training fairer classifiers. This leaves us with the question: \emph{how can we find fair representations?} In Chapter~\ref{cha:unsup_dis}, we showed that without access to supervision or inductive biases, disentangled representations cannot be identified. Due to the high correlation between disentanglement, downstream performance with GBT, and fairness, one could use downstream performance as a proxy for fairness. This leads to classifiers that are fairer 84.2\% of the time on the models we considered. This assumes that disentanglement is the only variable explaining prediction accuracy and fairness. Since disentanglement is likely not the only confounder, model selection based on downstream performance is not guaranteed to be fairer than random model selection as we have seen in Theorem~\ref{thm:perfect_fairness}. We will return to this issue in Chapter~\ref{cha:weak}, where we show that relying on weak-supervision alone, one can select good models that are disentangled and useful on several tasks, including this fairness setting.

\section{Proof of Theorem~\ref{thm:perfect_fairness}}
\label{sec:proof}
\begin{proof}
Our proof is by counter example. We present a simple case for which $\rvy$ is predicted from $\rvx$ in such a way that $p(\hat\rvy | \rvx) = p(\rvy | \rvx)$, but which does not satisfy demographic parity. 

We assume all variables to be Bernoulli-distributed $p(\rvs =1)= q, \, 0<q<1$ and $p(\rvy =1)= b, \, 0<b<1$ and our mixing mechanism to be $\rvx = \min(\rvy, \rvs)$. The assumption of demographic parity yields:
\begin{align*}
DP &\implies \sum_{\rvx \in \{0,1\}}p(\hat \rvy = 1, \rvx | \rvs = 1) = \sum_{\rvx \in \{0,1\}} p(\hat \rvy = 1 , \rvx| \rvs = 0) .
\end{align*}
Using the causal Markov condition~\cite{PetJanSch17}, we can rewrite $p(\hat \rvy = 1, \rvx | \rvs) = p(\hat \rvy = 1| \rvx) p(\rvx | \rvs)$ and thus
\begin{align*}
DP &\implies \sum_{\rvx \in \{0,1\}}p(\hat \rvy = 1| \rvx) p(\rvx | \rvs = 1) = \sum_{\rvx \in \{0,1\}} p(\hat \rvy = 1| \rvx) p(\rvx | \rvs = 0)
\end{align*}
The rest of the proof follows as a proof by contradiction.  Assuming that the classifier satisfies $p(\hat\rvy | \rvx) = p(\rvy | \rvx)$, we have
\begin{align*}
DP &\implies \sum_{\rvx \in \{0,1\}}p( \rvy = 1| \rvx) \left[ p(\rvx | \rvs = 1) - p(\rvx | \rvs = 0)\right] = 0. \\
\intertext{At this point, using the fact that $\rvx = \min(\rvy, \rvs)$, we have $p(\rvx = 0 | \rvs = 1)=p(\rvy=0)$,~ $p(\rvx = 0 | \rvs = 0)=1$,~ $p( \rvy = 1| \rvx = 1)=1$,~ $p(\rvx = 1 | \rvs = 1) = p(\rvy=1)$, ~and~ $p(\rvx = 1| \rvs = 0)=0$, therefore}
DP &\implies p( \rvy = 1| \rvx = 0) \left[ p(\rvy = 0) - 1\right]+1\cdot\left[ p(\rvy = 1) - 0\right] = 0\\
&\implies -b \cdot p( \rvy = 1| \rvx = 0)+b = 0\\
&\implies p( \rvy = 1| \rvx = 0) = 1 \\
&\implies p( \rvx = 0 | \rvy = 1) p(\rvy = 1) = p( \rvx = 0) \\
&\implies p( \rvs = 0) p(\rvy = 1) = p( \rvx = 0) \\
&\implies (1-q) b = (1-q)+q(1-b) \\
&\implies 0 = (1-q)(1-b)+q(1-b) \\
&\implies b=1
\end{align*}
Hence we have our desired contradiction as, by assumption, $b<1$.
\end{proof}



\chapter{Weakly-Supervised Disentanglement}\label{cha:weak}
\looseness=-1In this chapter, we propose a new framework for disentanglement relying on weak supervision. The presented work is based on \citep{locatello2020weakly} and was developed in collaboration with Ben Poole, Gunnar R\"atsch, Bernhard Sch\"olkopf, Olivier Bachem, and Michael Tschannen. This work was partially done when Francesco Locatello was at Google Research, Brain Team in Zurich.

\section{Problem Setting}
\looseness=-1Many data modalities are \textit{not} observed as \iid samples from a distribution~\cite{dayan1993improving,storck1995reinforcement,hochreiter1999feature,bengio2013representation,PetJanSch17,thomas2018disentangling,scholkopf2019causality}. Changes in natural environments, which typically correspond to changes of only a few underlying factors of variation, provide a weak supervision signal for representation learning algorithms~\cite{foldiak1991learning,schmidt2007learning,bengio2017consciousness,bengio2019meta}. 
State-of-the-art weakly-supervised disentanglement methods~\cite{bouchacourt2017multi,gvae2019,shu2020weakly} assume that observations belong to annotated groups where two things are known at training time: (i) the relation between images in the same group, and (ii) the group each image belongs to. \citet{bouchacourt2017multi,gvae2019} consider groups of observations differing in precisely one of the underlying factors. An example of such a group are images of a given object with a fixed orientation, in a fixed scene, but of varying color. \citet{shu2020weakly} generalized this notion to other relations (\eg, single shared factor, ranking information).
In general, precise knowledge of the groups and their structure may require either explicit human labeling or at least strongly controlled acquisition of the observations. As a motivating example, consider the video feedback of a robotic arm. In two temporally close frames, both the manipulated objects and the arm may have changed their position, the objects themselves may be different, or the lighting conditions may have changed due to failures.

\begin{wrapfigure}{l}{0.5\textwidth}
\hspace{0.3cm}\makebox[0.42\textwidth][c]{%
    \centering
    \definecolor{mygreen}{RGB}{30,142,62}
	\definecolor{myred}{RGB}{217,48,37}
	\definecolor{myw}{RGB}{255,255,255}
	\scalebox{0.7}{
	\centering
	\begin{minipage}{.28\textwidth}
	\begin{tikzpicture}[scale=1.2]
		\tikzset{
			myarrow/.style={->, >=latex', shorten >=1pt, line width=0.4mm},
		}
		\tikzset{
			mybigredarrow/.style={dashed, dash pattern={on 20pt off 10pt}, >=latex', shorten >=3mm, shorten <=3mm, line width=2mm, draw=myred},
		}
		\tikzset{
			mybiggreenarrow/.style={->, >=latex', shorten >=3mm, shorten <=3mm, line width=2mm, draw=mygreen},
		}
		\node[text width=.8cm,align=center,minimum size=2.6em,draw,thick,circle, fill=myw] (z) at (-5.25,1.5) {$\rvz$};
		\node[text width=.8cm,align=center,minimum size=2.6em,draw,thick,circle, fill=myw] (zt) at (-5.25,0) {$\tilde\rvz$};
		\node[text width=.8cm,align=center,minimum size=2.6em,draw,thick,circle, fill=myw] (s) at (-5.25,-1.5) {$S$};
		
		\node[text width=.8cm,align=center,minimum size=2.6em,draw,thick,circle, fill=ggray] (x1) at (-2.75,1) {$\rvx_1$};
		
		\node[text width=.8cm,align=center,minimum size=2.6em,draw,thick,circle, fill=ggray] (x2) at (-2.75,-1) {$\rvx_2$};
		\draw[myarrow] (z) -- (x1);
		\draw[myarrow] (z) -- (x2);
		\draw[myarrow] (zt) -- (x2);
		\draw[myarrow] (s) -- (x2);

	\end{tikzpicture}
	\end{minipage}%
	\begin{minipage}{.35\textwidth}
	\begin{tikzpicture}
		\tikzset{
			myarrow/.style={->, >=latex', shorten >=1pt, line width=0.4mm},
		}
		\tikzset{
			mybigredarrow/.style={dashed, dash pattern={on 20pt off 10pt}, >=latex', shorten >=3mm, shorten <=3mm, line width=2mm, draw=myred},
		}
		\tikzset{
			mybiggreenarrow/.style={->, >=latex', shorten >=3mm, shorten <=3mm, line width=2mm, draw=mygreen},
		}
		\node[fill=myw] (z1) at (-5.25,1.7) {
		$\begin{bmatrix}
           1 \\
           3 \\
           \color{red}\textbf{1} \\
           \color{red}\textbf{1} \\
           1 \\
         \end{bmatrix}
		$};

		\node[] (z) at (-5.75,1.7) {
		$\rvz\left\{
            \begin{array}{ c }
            \\
            \\
            \\
            \\
            \\
            \end{array}
        \right.
		$};
		
		\node[] (zt) at (-5.75,-2) {
		$\tilde\rvz\left\{
            \begin{array}{ c }
            \\
            \\
            \end{array}
        \right.
		$};

		\node[fill=myw] (z2) at (-5.25,-1.7) {$\begin{bmatrix}
           1 \\
           3 \\
           \color{red}\textbf{8} \\
           \color{red}\textbf{2} \\
           1 \\
         \end{bmatrix}
		$};

		\node[inner sep=0pt] (x1) at (-2.,1.5) {\includegraphics[width=.45 \textwidth]{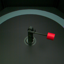}};
		
		\node[inner sep=0pt] (x2) at (-2.,-1.5) {\includegraphics[width=.45\textwidth]{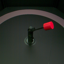}};
		
		\draw[myarrow] (z1) -- (x1);
		\draw[myarrow] (z2) -- (x2);

	\end{tikzpicture}
	\end{minipage}
	}}
	
    \caption{\small (\textbf{left}) The generative model: We observe pairs of observations $(\rvx_1, \rvx_2)$ sharing a random subset $S$ of latent factors: $\rvx_1$ is generated by $\rvz$; $\rvx_2$ is generated by combining the subset $S$ of $\rvz$ and resampling the remaining entries (modeled by $\tilde \rvz$). (\textbf{right}) Real-world example of the model: A pair of images from MPI3D~\cite{gondal2019transfer} where all factors are shared except the first degree of freedom and the background color (red values). This corresponds to a setting where few factors in a causal generative model change, which, by the {\em independent causal mechanisms} principle, leaves the others invariant \cite{scholkopf2012causal}.}
    \label{fig:disent_cause}
    \vspace{-0.5cm}
\end{wrapfigure}

\textbf{Weakly-Supervised Generative Model} We study learning of disentangled image representations from paired observations, for which some (but not all) factors of variation have the same value. This can be modeled as sampling two images from the causal generative model with an intervention~\cite{PetJanSch17} on a random subset of the factors of variation. Our goal is to use the additional information given by the pair (as opposed to a single image) to learn disentangled image representations. 
We generally do not assume knowledge of which or how many factors are shared, \ie, we do not require a controlled acquisition of the observations. This observation model applies to many practical scenarios. For example, we may want to learn a disentangled representation of a robot arm observed through a camera: In two temporally close frames, some joint angles will likely have changed, but others will have remained constant. Other factors of variation may also change independently of the actions of the robot. An example can be seen in Figure~\ref{fig:disent_cause} (right), where the first degree of freedom of the arm and the color of the background changed. More generally, this observation model applies to many natural scenes with moving objects~\citep{foldiak1991learning}. 
\newcommand{\dx}{\mathrm{d}}
\looseness=-1
For simplicity of exposition, we assume that the number of factors $k$ in which the two observations differ is constant (we present a strategy to deal with varying $k$ in Section~\ref{sec:algorithms}). 
The generative model is given by 
\begin{align}
    p(\rvz) &= \prod_{i=1}^d p(z_i), \quad p(\tilde \rvz) = \prod_{i=1}^{k} p(\tilde z_i), \quad S \sim p(S) \label{eq:generative-model-1}\\
    \rvx_1 &= g^\star(\rvz), \qquad 
    \rvx_2 = g^\star(f(\rvz, \tilde \rvz, S)), \label{eq:generative-model-4}
\end{align}
where $S$ is the subset of shared indices of size $d-k$ sampled from a distribution $p(S)$ over the set 
$\gS = \{ S \subset [d] \colon |S|=d-k \}$, and the $p(z_i)$ and $p(\tilde z_j)$ are all identical. 
The generative mechanism is modeled using a function
$g^\star \colon \gZ \to \gX$, with $\gZ = \supp(\rvz) \subseteq \sR^d$ and $\gX \subset \sR^m$, which maps the latent variable to observations of dimension $m$, typically $m \gg d$. To make the relation between $\rvx_1$ and $\rvx_2$ explicit, we use a function $f$ obeying
\begin{equation*}
    f(\rvz, \tilde \rvz, S)_S = \rvz_S \qquad \text{and} \qquad f(\rvz, \tilde \rvz, S)_{\bar S} = \tilde \rvz 
\end{equation*}
\looseness=-1with $\bar S = [d] \backslash S$. Intuitively, to generate $\rvx_2$, $f$ selects entries from $\rvz$ with index in $S$ and substitutes the remaining factors with $\tilde \rvz$, thus ensuring that the factors indexed by $S$ are shared in the two observations. 
The generative model \eqref{eq:generative-model-1}--\eqref{eq:generative-model-4} does not model additive noise; we assume that noise is explicitly modeled as a latent variable and its effect is manifested through $g^\star$ as done by~\cite{bengio2013representation,higgins2018towards,higgins2016beta,suter2018interventional,reed2015deep,lecun2004learning,kim2018disentangling,gondal2019transfer}.
For simplicity, we consider the case where groups consisting of two observations (pairs), but extensions to more than two observations are possible~\cite{gresele2019incomplete}. 

\section{Identifiability and Algorithms}
First, we show that, as opposed to the unsupervised case in Chapter~\ref{cha:unsup_dis}, the generative model \eqref{eq:generative-model-1}--\eqref{eq:generative-model-4} is identifiable under weak additional assumptions. Note that the joint distribution of all random variables factorizes as
\begin{equation}
    p(\rvx_1, \rvx_2, \rvz, \tilde \rvz, S) = p(\rvx_1 | \rvz)p(\rvx_2 | f(\rvz, \tilde \rvz, S)) p(\rvz) p(\tilde \rvz) p(S) \label{eq:full-joint}
\end{equation}
where the likelihood terms have the same distribution, \ie, $p(\rvx_1 | \bar \rvz) = p(\rvx_2 | \bar \rvz), \forall \bar \rvz \in \supp(p(\rvz))$. We show that to learn a disentangled generative model of the data $p(\rvx_1, \rvx_2)$ it is therefore sufficient to recover a factorized latent distribution with factors $p(\hat z_i)= p(\hat{\tilde{z}}_j)$, a corresponding likelihood $q(\rvx_1|\cdot)=q(\rvx_2|\cdot)$, as well as a distribution $p(\hat S)$ over $\gS$, which together satisfy the constraints of the true generative model \eqref{eq:generative-model-1}--\eqref{eq:generative-model-4} and match the true $p(\rvx_1, \rvx_2)$ after marginalization over $\hat \rvz,\hat{\tilde{\rvz}}, \hat S$ when substituted into \eqref{eq:full-joint}.

\begin{theorem} \label{thm:identifiability}
Consider the generative model \eqref{eq:generative-model-1}--\eqref{eq:generative-model-4}. Further assume that $p(z_i) = p(\tilde z_i)$ are continuous distributions, $p(S)$ is a distribution over $\gS$ s.t. for $S, S' \sim p(S)$ we have $P(S \cap S' = \{i\}) > 0, \forall i \in [d]$. Let $g^\star \colon \gZ \to \gX$ in \eqref{eq:generative-model-4} be smooth and invertible on $\gX$ with smooth inverse (\ie, a diffeomorphism). Given unlimited data from $p(\rvx_1, \rvx_2)$ and the true (fixed) $k$, consider all tuples $(p(\hat z_i), q(\rvx_1|\hat \rvz), p(\hat S))$ obeying these assumptions and matching $p(\rvx_1, \rvx_2)$ after marginalization over $\hat \rvz,\hat{\tilde{\rvz}}, \hat S$ when substituted in \eqref{eq:full-joint}. Then, the posteriors $q(\hat \rvz | \rvx_1) = q(\rvx_1|\hat \rvz)p(\hat\rvz)/p(\rvx_1)$ are disentangled in the sense that the aggregate posteriors $q(\hat \rvz) = \int q(\hat \rvz|\rvx_1) p(\rvx_1) \dx \rvx_1 = \iint q(\hat \rvz|\rvx_1) p(\rvx_1|\rvz) p(\rvz)\dx \rvz \dx \rvx_1$ are coordinate-wise reparameterizations of the ground-truth prior $p(\rvz)$ up to a permutation of the indices of $\rvz$.
\end{theorem}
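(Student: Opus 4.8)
# Proof Proposal for Theorem~\ref{thm:identifiability}

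The plan is to reduce the identifiability claim to a statement about which coordinate transformations of the latent space preserve the structure of the weakly-supervised generative model. First I would set up the comparison between the true model and an arbitrary candidate tuple $(p(\hat z_i), q(\rvx_1|\hat \rvz), p(\hat S))$ that matches $p(\rvx_1, \rvx_2)$ after marginalization. Since $g^\star$ is a diffeomorphism onto $\gX$, and by assumption the candidate likelihood $q(\rvx_1 | \hat \rvz)$ must also arise from a deterministic (noiseless) generative mechanism $\hat g$ that is a diffeomorphism, I can define $h := (\hat g)^{-1} \circ g^\star$, a diffeomorphism from $\gZ$ to $\hat \gZ$. The marginal matching on $\rvx_1$ alone forces $h_\# p(\rvz)$ to have the factorized density $\prod_i p(\hat z_i)$; this is just the change-of-variables formula and gives a first constraint on the Jacobian of $h$, but (as Theorem~\ref{thm:impossibility} shows) this constraint alone is far too weak — it is satisfied by fully entangling maps. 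The extra leverage must come from the pair structure.

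The core of the argument is then to exploit the shared-subset mechanism. The key step is: for a shared set $S$, the pair $(\rvx_1, \rvx_2)$ satisfies $g^\star{}^{-1}(\rvx_1)_S = g^\star{}^{-1}(\rvx_2)_S$, \ie, the true latents agree exactly on the coordinates in $S$. For the candidate model to match the joint distribution, there must exist a distribution $p(\hat S)$ such that, expressed in the candidate coordinates $\hat \rvz = h(\rvz)$, the two points $h(\rvz)$ and $h(f(\rvz, \tilde\rvz, S))$ agree exactly on some set $\hat S$ of size $d-k$ (and this must hold for $p(\rvx_1,\rvx_2)$-almost every pair). Since $\rvz$ and $f(\rvz,\tilde\rvz,S)$ differ in exactly the $k$ coordinates $\bar S$ (arbitrary values there, thanks to the continuous densities and full support), the condition ``$h(\vu)$ and $h(\vv)$ agree on $\hat S$ whenever $\vu, \vv$ agree on $S$'' means each coordinate $h_j$ with $j \in \hat S$ is a function of $\rvz_S$ only. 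Running this over the family of sets guaranteed by the assumption $P(S \cap S' = \{i\}) > 0$, I would intersect the corresponding dependency constraints: for each coordinate $i$ of $\rvz$ there is a pair $S, S'$ with $S \cap S' = \{i\}$, and combining ``$h_j$ depends only on $\rvz_S$'' with ``$h_j$ depends only on $\rvz_{S'}$'' for the matching $j$'s forces $h_j$ to depend on $\rvz_i$ alone. This is the standard intersection-of-subsets trick for identifiability; making it rigorous with ``almost every'' qualifiers and a measure-theoretic (rather than pointwise) dependency notion, and bookkeeping the combinatorics so that every candidate coordinate gets pinned to exactly one true coordinate (a permutation), is where the real work lies.

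Once $h$ is shown to be, up to a permutation $\pi$, a map of the form $(h(\rvz))_{\pi(i)} = h_i(z_i)$ — a coordinate-wise reparameterization — the conclusion about the aggregate posterior follows readily. I would note that when $g^\star$ is a diffeomorphism the posterior $p(\rvz | \rvx_1)$ is a point mass at $g^\star{}^{-1}(\rvx_1)$, so the aggregate posterior $q(\hat\rvz)$ equals $h_\# p(\rvz)$, which by the coordinate-wise structure of $h$ is exactly $p(\rvz)$ pushed through the permutation $\pi$ and the univariate maps $h_i$ — \ie, a coordinate-wise reparameterization of $p(\rvz)$ up to index permutation, as claimed. (For the general non-deterministic likelihood case I would instead argue at the level of aggregate posteriors directly, using that matching $p(\rvx_1,\rvx_2)$ with shared sets constrains the conditional independence structure of $q(\hat\rvz|\rvx_1)$ relative to that of $p(\rvz|\rvx_1)$; but the diffeomorphism case carries the essential idea.)

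The main obstacle I anticipate is the rigorous handling of the ``agreement on a subset'' argument: the candidate shared set $\hat S$ may in principle depend on the sampled pair rather than being a fixed deterministic function of the true $S$, so I must first argue — using that $k$ is known and fixed, that densities are continuous with full support, and a counting/continuity argument — that there is a well-defined set map $S \mapsto \hat S(S)$ (a.e.), and moreover that it is a bijection between $\gS$ and $\gS$ compatible with a single underlying permutation $\pi$ of $[d]$. Establishing this consistency, and ruling out pathological mixtures where different pairs realize different permutations, is the delicate part; everything downstream is then a routine application of the change-of-variables formula and the subset-intersection lemma.
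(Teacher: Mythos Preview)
Your proposal is correct and follows essentially the same route as the paper's proof: introduce the diffeomorphism $h$ relating the candidate and true latent spaces, use the shared-subset constraints for fixed $S$ to force $h$ to have a block structure aligned with $S$ versus $\bar S$, and then exploit the intersection condition $P(S\cap S'=\{i\})>0$ to refine this down to a coordinate-wise map. The obstacle you flag---establishing a consistent bijection $S\mapsto\hat S(S)$ compatible with a single permutation---is exactly what the paper handles in its Step~4 via smoothness/invertibility and a counting argument on $|S_1\cap S_2|=|T_1\cap T_2|$.
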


\looseness=-1\textbf{Discussion} 
Under the assumptions of this theorem, we established that all generative models that match the true marginal over the observations $p(\rvx_1, \rvx_2)$ must be disentangled. Therefore, constrained distribution matching is sufficient to learn disentangled representations. Formally, the aggregate posterior $q(\hat \rvz)$ is a coordinate-wise reparameterization of the true distribution of the factors of variation (up to index permutations). In other words, there exists a one-to-one mapping between every entry of $\rvz$ and a unique matching entry of $\hat \rvz$, and thus a change in a single coordinate of $\rvz$ implies a change in a single matching coordinate of $\hat \rvz$~\cite{bengio2013representation}. Changing the observation model from single \iid observations to non-\iid pairs of observations generated according to the generative model \eqref{eq:generative-model-1}--\eqref{eq:generative-model-4} allows us to bypass the non-identifiability result of Chapter~\ref{cha:unsup_dis}. 
Our result requires strictly weaker assumptions than the result of~\citet{shu2020weakly} as we do not require group annotations, but only knowledge of $k$.  As we shall see in Section~\ref{sec:algorithms}, $k$ can be cheaply and reliably estimated from data at run-time. 
Although the weak assumptions of Theorem~\ref{thm:identifiability} may not be satisfied in practice, we will show that the proof can inform practical algorithm design.

\subsection{Practical Adaptive Algorithms} \label{sec:algorithms}
We conceive two $\beta$-VAE \cite{higgins2016beta} variants tailored to the weakly-supervised generative model \eqref{eq:generative-model-1}--\eqref{eq:generative-model-4} and a selection heuristic to deal with unknown and random $k$. We will see that these simple models can very reliably learn disentangled representations. 

\looseness=-1The key differences between theory and practice are that: (i) we use the ELBO and amortized variational inference for distribution matching (the true and learned distributions will not exactly match after training), (ii) we have access to a finite number of data only, and (iii) the theory assumes known, fixed $k$, but $k$ might be unknown and random. 

\looseness=-1\textbf{Enforcing the Structural Constraints }
Here we present a simple structure for the variational family that allows us to tractably perform approximate inference on the weakly-supervised generative model.
First note that the alignment constraints imposed by the generative model (see \eqref{eq:gconstraint1v2} and \eqref{eq:gconstraint2v2} evaluated for $g = g^\star$ in Section~\ref{sec:proof_ident}) imply for the true posterior
\begin{align}
p(z_i | \rvx_1) &= p(z_i | \rvx_2) \quad \forall i \in S, \label{eq:pconstraint1}\\
p(z_i | \rvx_1) &\neq p(z_i | \rvx_2) \quad \forall i \in \bar S, \label{eq:pconstraint2}
\end{align}
(with probability $1$)
and we want to enforce these constraints on the approximate posterior $q_\phi(\hat\rvz | \rvx)$ of our learned model. However, the set $S$ is unknown. To obtain an estimate $\hat S$ of $S$ we therefore choose for every pair $(\rvx_1, \rvx_2)$ the $d-k$ coordinates with the smallest $\KL(q_\phi(\hat z_i | \rvx_1)||q_\phi(\hat z_i | \rvx_2))$. To impose the constraint \eqref{eq:pconstraint1} we then replace each shared coordinate with some average $a$ of the two posteriors
\begin{align*}
    \tilde q_\phi(\hat z_i | \rvx_1) &= a(q_\phi(\hat z_i | \rvx_1), q_\phi(\hat z_i | \rvx_2)) \quad &\forall i \in \hat S, \\
    \tilde q_\phi(\hat z_i | \rvx_1) &= q_\phi(\hat z_i | \rvx_1) &\text{else,}
\end{align*}
and obtain $\tilde q_\phi(\rvz_i | \rvx_2)$ in analogous manner. As we later simply use the averaging strategies of the Group-VAE (GVAE)~\citep{gvae2019} and the Multi Level-VAE (ML-VAE)~\citep{bouchacourt2017multi}, we term variants of our approach which infers the groups and their properties adaptively \textit{Adaptive-Group-VAE} (Ada-GVAE) and \textit{Adaptive-ML-VAE} (Ada-ML-VAE), depending on the choice of the averaging function $a$. We then optimize the following variant of the $\beta$-VAE objective 
\begin{align}
    \max_{\phi, \theta} \mathbb{E}_{(\rvx_1, \rvx_2)}&\mathbb{E}_{\tilde q_\phi(\hat\rvz | \rvx_1)} \log(p_\theta(\rvx_1|\hat\rvz)) \nonumber \\
    &+ \mathbb{E}_{\tilde q_\phi(\hat\rvz | \rvx_2)} \log(p_\theta(\rvx_2|\hat\rvz)) \nonumber \\
    &- \beta D_{KL} \left(\tilde q_\phi(\hat\rvz || \rvx_1)|p(\hat\rvz)\right) \nonumber \\
    &- \beta D_{KL} \left(\tilde q_\phi(\hat\rvz || \rvx_2)|p(\hat\rvz)\right), \label{eq:mod-elbo}
\end{align}
where $\beta \geq 1$ \cite{higgins2016beta}. The advantage of this averaging-based implementation of \eqref{eq:pconstraint1}, over implementing it, for instance, via a $\KL$-term that encourages the distributions of the shared coordinates $\hat S$ to be similar, is that averaging imposes a hard constraint in the sense that $q_\phi(\hat\rvz|\rvx_1)$ and $q_\phi(\hat\rvz|\rvx_2)$ can jointly encode only one value per shared coordinate. This in turn implicitly enforces the constraint \eqref{eq:pconstraint2} as the non-shared dimensions need to be efficiently used to encode the non-shared factors of $\rvx_1$ and $\rvx_2$. 

\looseness=-1We emphasize that the objective \eqref{eq:mod-elbo} is a simple modification of the $\beta$-VAE objective and is very easy to implement. Finally, we remark that invoking Theorem~4 of~\cite{khemakhem2019variational}, we achieve consistency under maximum likelihood estimation up to the equivalence class in our Theorem~\ref{thm:identifiability}, for $\beta=1$ and in the limit of infinite data and capacity. 

\textbf{Inferring $k$} In the (practical) scenario where $k$ is unknown, we use the threshold
$$
    \tau = \frac{1}{2}(\max_i \delta_i + \min_i \delta_i),
$$
where $\delta_i = \KL(q_\phi(\hat z_i | \rvx_1)||q_\phi(\hat z_i | \rvx_2))$, and average the coordinates with $\delta_i < \tau$. This heuristic is inspired by the ``elbow method''~\cite{ketchen1996application} for model selection in $k$-means clustering and $k$-singular value decomposition and we found it to work surprisingly well in practice (see the experiments in Section~\ref{sec:results}). This estimate relies on the assumption that not all factors have changed. All our adaptive methods use this heuristic.
Although a formal recovery argument cannot be made for arbitrary data sets, inductive biases may limit the impact of an approximate $k$ in practice. We further remark that this heuristic always yields the correct $k$ if the encoder is disentangled.

\looseness=-1\textbf{Relation to Prior Work} Closely related to the proposed objective \eqref{eq:mod-elbo} the GVAE of~\citet{gvae2019} and the ML-VAE of~\citet{bouchacourt2017multi} assume $S$ is known and implement $a$ using different averaging choices. 
Both assume Gaussian approximate posteriors where $\mu_j, \Sigma_j$ are the mean and variance of $q(\hat \rvz_{ S}|\rvx_j)$ and $\mu, \Sigma$ are the mean and variance, of $\tilde q(\hat \rvz_{ S}|\rvx_j)$.
For the coordinates in $S$, the GVAE uses a simple arithmetic mean ($\mu = \frac12(\mu_1 + \mu_2)$ and $\Sigma = \frac12(\Sigma_1 + \Sigma_2)$) and the ML-VAE takes the product of the encoder distributions, with $\mu, \Sigma$ taking the form:
\begin{align*}
    \Sigma^{-1} = \Sigma^{-1}_1 + \Sigma^{-1}_2, \quad \mu^T = (\mu_1^T\Sigma_1^{-1} + \mu_2^T\Sigma_2^{-1})\Sigma.
\end{align*}
Our approach critically differs in the sense that $S$ is not known and needs to be estimated for every pair of images.

\looseness=-1Recent work combines non-linear ICA with disentanglement~\cite{khemakhem2019variational,sorrenson2020disentanglement}. 
Critically, these approaches are based on the setup of~\citet{hyvarinen2018nonlinear} which requires access to label information $\rvu$ such that $p(\rvz|\rvu)$ factorizes as $\prod_i p(z_i|\rvu)$. In contrast, we base our work on the setup of~\citet{gresele2019incomplete}, which only assumes access to two \textit{sufficiently distinct views} of the latent variable. \citet{shu2020weakly} train the same type of generative models over paired data but use a GAN objective where inference is not required. However, they require known and fixed $k$ as well as annotations of which factors change in each pair.

\section{Experimental Results}\label{sec:results}

\looseness=-1\textbf{Experimental Setup} We use the five data sets where the observations are generated as deterministic functions of the factors of variation: \textit{dSprites}~\cite{higgins2016beta}, \textit{Cars3D}~\cite{reed2015deep}, \textit{SmallNORB}~\cite{lecun2004learning}, \textit{Shapes3D}~\cite{kim2018disentangling}, and \textit{MPI3D}, the real-world robotics data set we introduced in~\cite{gondal2019transfer}.
\looseness=-1Our unsupervised baselines correspond to a cohort of \num{9000} unsupervised models ($\beta$-VAE~\citep{higgins2016beta}, AnnealedVAE~\citep{burgess2018understanding}, Factor-VAE~\citep{kim2018disentangling}, $\beta$-TCVAE~\citep{chen2018isolating}, DIP-VAE-I and II~\citep{kumar2017variational}), from Chapter~\ref{cha:unsup_dis}. To evaluate the representations, we consider the disentanglement metrics in Chapter~\ref{cha:eval_dis}.

\looseness=-1To create data sets with weak supervision from the existing disentanglement data sets, we first sample from the discrete $\rvz$ according to the ground-truth generative model~\eqref{eq:generative-model-1}--\eqref{eq:generative-model-4}. Then, we sample either one factor (corresponding to sparse changes) or $k$ factors of variation (to allow potentially denser changes) that may not be shared by the two images and re-sample those coordinates to obtain $\tilde \rvz$.  This ensures that each image pair differs in at most $k$ factors of variation (although changes are typically sparse and some pairs may be identical). For $k$ we consider the range from $1$ to $d-1$. This last setting corresponds to the case where all but one factor of variation are re-sampled. We study both the case where $k$ is constant across all pairs in the data set and where $k$ is sampled uniformly in the range $[d-1]$ for every training pair ($k=\texttt{Rnd}$ in the following).

For each data set, we train four weakly-supervised methods: Our adaptive and vanilla (group-supervision) variants of GVAE~\cite{gvae2019} and ML-VAE~\cite{bouchacourt2017multi}. For each approach we consider six values for the regularization strength 
and 10 random seeds, training a total of \num{6000} weakly-supervised models. We perform model selection using the weakly-supervised reconstruction loss (\ie, the sum of the first two terms in \eqref{eq:mod-elbo})\footnote{Training loss and the ELBO correlate similarly with disentanglement.}. We stress that we \textit{do not require labels for model selection}.


\subsection{Is Weak Supervision Enough for Disentanglement?}\label{sec:results:learning_weak_sup}
\looseness=-1In Figure~\ref{fig:selection}, we compare the performance of the weakly-supervised methods with $k=\texttt{Rnd}$ against the unsupervised methods. 
Unlike in unsupervised disentanglement with $\beta$-VAEs where $\beta\gg 1$ is common, we find  $\beta=1$ (the ELBO) performs best in most cases. We clearly observe that weakly-supervised models outperform the unsupervised ones.
The Ada-GVAE performs similarly to the Ada-ML-VAE. For this reason, we focus the following analysis on the Ada-GVAE, and refer to~\citep{locatello2020weakly} for the Ada-ML-VAE results.

\begin{figure*}[t]
    \centering
    \includegraphics[width=0.9\linewidth]{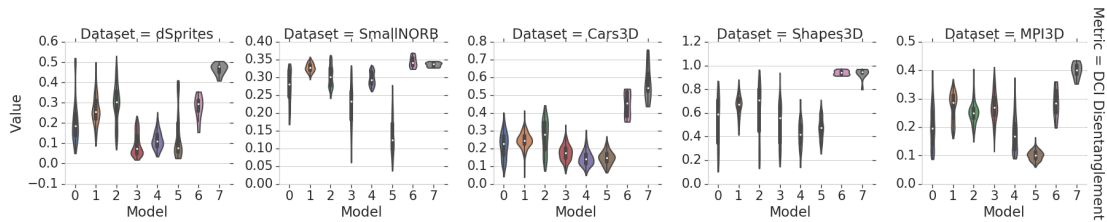}
    \vspace{-4mm}
    \caption{\small Our adaptive variants of the group-based disentanglement methods (models 6 and 7) considerably and consistently outperform unsupervised methods. In particular, the Ada-GVAE consistently yields the same or better performance than the Ada-ML-VAE. In this experiment, we consider the case where the number of shared factors of variation is randomly sampled for each pair ($k=\texttt{Rnd}$).
    Legend: 0=$\beta$-VAE, 1=FactorVAE, 2=$\beta$-TCVAE, 3=DIP-VAE-I, 4=DIP-VAE-II, 5=AnnealedVAE, 6=Ada-ML-VAE, 7=Ada-GVAE}
    \label{fig:selection}
    \vspace{-3mm}
\end{figure*}

\looseness=-1\textbf{Summary} With weak supervision, we reliably learn disentangled representations that outperform unsupervised ones. Our representations are competitive even if we perform fully supervised model selection on the unsupervised models.

\subsection{Are our methods adaptive to different values of $k$?}\label{sec:results_sup} 
In Figure~\ref{fig:cd_sweep_labels} (left), we report the performance of Ada-GVAE without model selection for different values of $k$ on MPI3D. We observe that Ada-GVAE is indeed adaptive to different values of $k$ and it achieves better performance when the change between the factors of variation is sparser. Note that our method is agnostic to the sharing pattern between the image pairs. In  applications where the number of shared factors is known to be constant, the performance may thus be further improved by injecting this knowledge into the inference procedure.

\looseness=-1\textbf{Summary} Our approach makes no assumption of which and how many factors are shared and successfully adapts to different values of $k$. The sparser the difference on the factors of variation, the more effective our method is in using weak supervision and learning disentangled representations.

\begin{figure}[t]
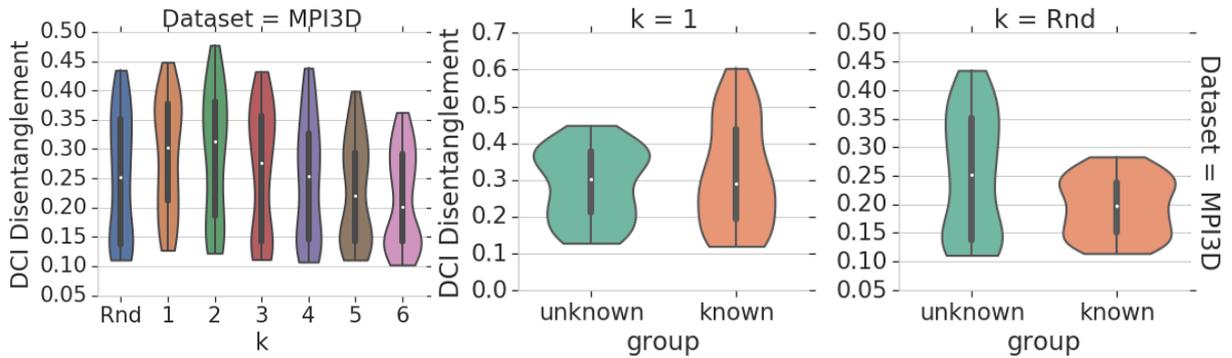

    \centering
    \includegraphics[width=0.35\linewidth]{sections/disent/weak/autofigures/cd_sweep_single.pdf}%
    \includegraphics[width=0.65\linewidth]{sections/disent/weak/autofigures/cd_labels_short.pdf}
    \vspace{-8mm}
    \caption{\small (\textbf{left}) Performance of the Ada-GVAE with different $k$ on MPI3D. The algorithm adapts well to the unknown $k$ and benefits from sparser changes. (\textbf{center} and \textbf{right}) Comparison of Ada-ML-VAE with the vanilla ML-VAE which assumes group knowledge. We note that group knowledge may improve performance (\textbf{center}) but can also hurt when it is incomplete (\textbf{right}).}
    \label{fig:cd_sweep_labels}
    \vspace{-4mm}
\end{figure}

\subsection{Supervision-performance trade-offs}\label{sec:results_group}
\looseness=-1The case $k=1$ where we actually know which factor of variation is not shared was previously considered in~\cite{bouchacourt2017multi,gvae2019,shu2020weakly}. Clearly, this additional knowledge should lead to improvements over our method. On the other hand, this information may be correct but incomplete in practice: For every pair of images, we know about one factor of variation that has changed but it may not be the only one. We therefore also consider the setup where $k=\texttt{Rnd}$ but the algorithm is only informed about one factor. Note that the original GVAE assumes group knowledge, so we directly compare its performance with our Ada-GVAE. We defer the comparison with ML-VAE~\cite{bouchacourt2017multi} and with the GAN-based approaches of~\cite{shu2020weakly} to the paper~\citep{locatello2020weakly}.

In Figure~\ref{fig:cd_sweep_labels} (center and right), we observe that when $k=1$, the knowledge of which factor was changed generally improves the performance of weakly-supervised methods on MPI3D. On the other hand, the GVAE is not robust to incomplete knowledge as its performance degrades when the factor that is labeled as non-shared is not the only one. This may not come as a surprise as group-based disentanglement methods all assume that the group knowledge is precise.

\looseness=-1\textbf{Summary} Whenever the groups are fully and precisely known, this information can be used to improve disentanglement. Even though our adaptive method does not use group annotations, its performance is often comparable to the methods of~\cite{bouchacourt2017multi,gvae2019,shu2020weakly}. On the other hand, in practical applications there may not be precise control of which factors have changed. In this scenario, relying on incomplete group knowledge considerably harms the performance of GVAE and ML-VAE as they assume exact group knowledge. A blend between our adaptive variant and the vanilla GVAE may further improve performance when only partial group knowledge is available.

\subsection{Are weakly-supervised representations useful?}\label{sec:results_downstream}
In this section, we investigate whether the representations learned by our Ada-GVAE are useful on a variety of tasks. 
We show that representations with small weakly-supervised reconstruction loss (the sum of the first two terms in \eqref{eq:mod-elbo}) achieve improved downstream performance as in Chapter~\ref{cha:unsup_dis}, improved downstream generalization~\cite{PetJanSch17} under covariate shifts~\cite{shimodaira2000improving,quionero2009dataset,ben2010impossibility}, fairer downstream predictions in the setting of Chapter~\ref{cha:fairness}, and improved sample complexity on an abstract reasoning task~\cite{van2019disentangled}.  
To the best of our knowledge, strong generalization under covariate shift has not been tested on disentangled representations before. 

\looseness=-1\textbf{Key insight} We remark that the usefulness insights of our previous work~\citet{locatello2019challenging,locatello2019disentangling,locatello2019fairness,van2019disentangled} (partially discussed in Chapters~\ref{cha:unsup_dis}--\ref{cha:fairness}) are based on the assumption that disentangled representations can be learned in the first place, ideally without observing the factors of variation. They consider models trained without supervision and argue that \textit{some} of the \textit{supervised disentanglement scores} (which require explicit labeling of the factors of variation) correlate well with desirable properties. \emph{In stark contrast, we here show that all these properties can be achieved simultaneously using only weakly-supervised data.}

\subsubsection{Downstream performance}
In this section, we consider the prediction task of Chapter~\ref{cha:unsup_dis}
\looseness=-1In Figure~\ref{fig:downstream} (left), we observe that the weakly-supervised reconstruction loss of Ada-GVAE is generally anti-correlated with downstream performance. The best weakly-supervised disentanglement methods thus learn representations that are useful for training accurate classifiers downstream.

\textbf{Summary} The weakly-supervised reconstruction loss of our Ada-GVAE is a useful proxy for downstream accuracy. 

\begin{figure*}[t]
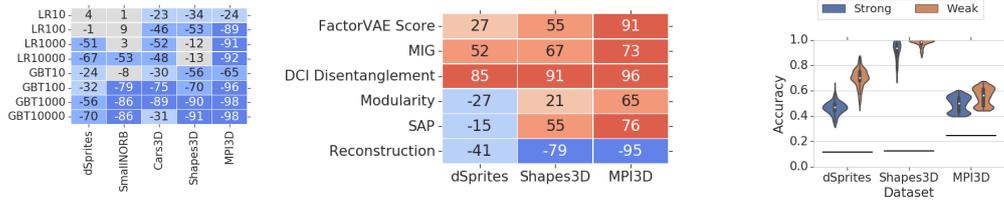

    \begin{center}
    \begin{minipage}{.25\textwidth}
    \centering\includegraphics[width=0.8\linewidth]{sections/disent/weak/autofigures/disentanglement_vs_downstream_groupVAE.pdf}
    \end{minipage}%
    \begin{minipage}{.4\textwidth}
    \includegraphics[width=0.8\linewidth]{sections/disent/weak/autofigures/disentanglement_vs_downstream_strong.pdf}
    \end{minipage}%
    \begin{minipage}{.3\textwidth}
    \includegraphics[width=0.65\linewidth]{sections/disent/weak/autofigures/generalization_gap.pdf}
    \end{minipage}
    \end{center}
    \vspace{-3mm}
    \caption{\small (\textbf{left}) Rank correlation between our weakly-supervised reconstruction loss and performance of downstream prediction tasks with logistic regression (LR) and gradient boosted decision-trees (GBT) at different sample sizes for Ada-GVAE. We observe a general negative correlation that indicates that models with a low weakly-supervised reconstruction loss may also be more accurate. (\textbf{center}) Rank correlation between the strong generalization accuracy under covariate shifts and disentanglement scores as well as weakly-supervised reconstruction loss, for Ada-GVAE. (\textbf{right}) Distribution of vanilla (weak) generalization and under covariate shifts (strong generalization) for Ada-GVAE. The horizontal line corresponds to the accuracy of a naive classifier based on the prior only.}
    \label{fig:downstream}
    \vspace{-3mm}
\end{figure*}

\subsubsection{Generalization Under Covariate Shift}
Assume we have access to a large pool of unlabeled paired data and our goal is to solve a prediction task for which we have a smaller labeled training set. Both the labeled training set and test set are biased,  but with different biases. For example, we want to predict object shape but our training set contains \textit{only} red objects, whereas the test set does \textit{not} contain \textit{any} red objects.
We create a biased training set by performing an intervention on a random factor of variation (other than the target variable), so that its value is constant in the whole training set. We perform another intervention on the test set, so that the same factor can take all other values. We train a GBT classifier on 10000 examples from the representations learned by Ada-GVAE. For each target factor of variation, we repeat the training of the classifier 10 times for different random interventions. For this experiment, we consider only dSprites, Shapes3D and MPI3D since Cars3D and SmallNORB are too small (after an intervention on their most fine grained factor of variation, they only contain 96 and 270 images respectively).

\looseness=-1In Figure~\ref{fig:downstream} (center) we plot the rank correlation between disentanglement scores and weakly-supervised reconstruction, and the results for generalization under covariate shifts for Ada-GVAE. We note that both the disentanglement scores and our weakly-supervised reconstruction loss are correlated with strong generalization. In Figure~\ref{fig:downstream} (right), we highlight the gap between the performance of a classifier trained on a normal train/test split (which we refer to as \textit{weak} generalization) as opposed to this covariate shift setting. We do not perform model selection, so we can show the performance of the whole range of representations.
We observe that there is a gap between weak and strong generalization but the distributions of accuracies overlap and are considerably better than a naive classifier based on the prior distribution of the classes.

\textbf{Summary} Our results provide compelling evidence that disentanglement is useful for strong generalization under covariate shifts. The best Ada-GVAE models in terms of weakly-supervised reconstruction loss seem to be useful for training classifiers that generalize under covariate shifts.

\begin{figure*}[t]
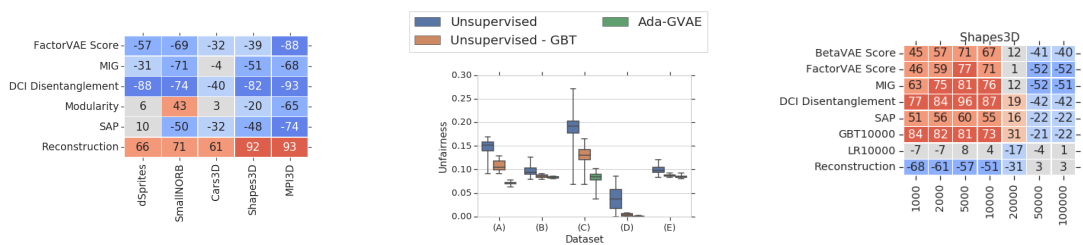

    \centering
    \begin{minipage}{.25\textwidth}
    \includegraphics[width=\linewidth]{sections/disent/weak/autofigures/unfairness_disentanglement_rank.pdf}
    \end{minipage}%
    \qquad\quad
    \begin{minipage}{.25\textwidth}
    \includegraphics[width=\linewidth]{sections/disent/weak/autofigures/Unfairness.pdf}
    \end{minipage}%
    \qquad
    \begin{minipage}{.25\textwidth}
    \includegraphics[width=\linewidth]{sections/disent/weak/autofigures/rank_scores_vs_steps_number.pdf}
    \end{minipage}
    \vspace{-0.2cm}
    \caption{\small \looseness=-1(\textbf{left}) Rank correlation between both disentanglement scores and our weakly-supervised reconstruction loss with the unfairness of GBT10000 on all the data sets for Ada-GVAE. (\textbf{center}) Unfairness of the unsupervised methods with the semi-supervised model selection heuristic of~\cite{locatello2019fairness} and our weakly-supervised Ada-GVAE with $k=1$.
    (\textbf{right}) Rank correlation with down-stream accuracy of the abstract visual
    reasoning models of~\cite{van2019disentangled} throughout training (\ie, for different sample sizes).
    }\vspace{-0.3cm}
    \label{fig:fairness_abstract}
\end{figure*}

\subsubsection{Fairness}
\looseness=-1We revisit our findings in Chpater~\ref{cha:fairness}, where we showed that disentangled representations may be useful to train robust classifiers that are fairer to unobserved sensitive variables independent of the target variable. We observed a strong correlation between demographic parity~\cite{calders2009building,zliobaite2015relation} and disentanglement, but the applicability of the presented approach was limited by the fact that disentangled representations are difficult to identify without access to explicit observations of the factors of variation (see Chapters~\ref{cha:unsup_dis} and~\ref{cha:eval_dis}).

In Figure~\ref{fig:fairness_abstract} (left), we show that the weakly-supervised reconstruction loss of our Ada-GVAE correlates with unfairness as strongly as the disentanglement scores, even though the former can be computed without observing the factors of variation. In particular, we can perform model selection without observing the sensitive variable. In Figure~\ref{fig:fairness_abstract} (center), we show that our Ada-GVAE with $k=1$ and model selection allows us to train and identify fairer models compared to the unsupervised models of Chapter~\ref{cha:fairness}. Furthermore, our previous model selection heuristic was based on downstream performance which requires knowledge of the target variable. From both plots we conclude that our weakly-supervised reconstruction loss is a good proxy for unfairness and allows us to train fairer classifiers in the setup of Chapter~\ref{cha:fairness} even if the sensitive variable is not observed.

\textbf{Summary} We showed that using weak supervision, we can train and identify fairer classifiers in the sense of demographic parity~\cite{calders2009building,zliobaite2015relation}. As opposed to Chapter~\ref{cha:fairness}, we do not need to observe the target variable and yet, our principled weakly-supervised approach outperforms their semi-supervised heuristic. 

\subsubsection{Abstract visual reasoning} 
\looseness=-1Finally, we consider the abstract visual reasoning task of~\citet{van2019disentangled}. This task is based on Raven's progressive matrices~\cite{raven1941standardization} and requires completing the bottom right missing panel of a sequence of context panels arranged in a $3\times 3$ grid as shown in Figure~\ref{fig:abstract_reasoning_app}. The algorithm is presented with six potential answers and needs to choose the correct one. To solve this task, the model has to infer the abstract relationships between the panels.
We replicate the experiment of~\citet{van2019disentangled} on Shapes3D under the same exact experimental conditions.

\begin{figure}[t]
    \centering
    \includegraphics[width=\linewidth]{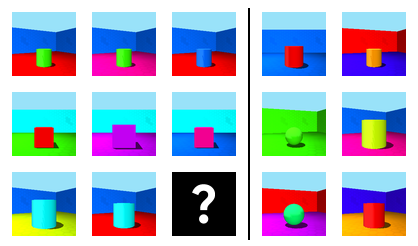}
    \caption{\small Example of the abstract visual reasoning task of~\cite{van2019disentangled}. The solution is  the panel in the central row on the right. }
    \label{fig:abstract_reasoning_app}
\end{figure}

\looseness=-1In Figure~\ref{fig:fairness_abstract} (right), one can see that at low sample sizes, the weakly-supervised reconstruction loss is strongly anti-correlated with performance on the abstract visual reasoning task. As previously observed by~\citet{van2019disentangled}, this benefit only occurs at low sample sizes.

\textbf{Summary} We demonstrated that training a relational network on the representations learned by our Ada-GVAE improves its sample efficiency. This result is in line with the findings of~\citet{van2019disentangled} where disentanglement was found to correlate positively with improved sample complexity.

\section{Proof of Theorem~\ref{thm:identifiability}}\label{sec:proof_ident}
Recall that the true marginal likelihoods $p(\rvx_1|\cdot) = p(\rvx_2|\cdot)$, are completely specified through the smooth, invertible function $g^\star$. The corresponding posteriors $p(\cdot | \rvx_1) = p(\cdot | \rvx_2)$ are completely determined by $g^{\star-1}$. The model family for candidate marginal likelihoods $q(\rvx_1| \cdot) = q(\rvx_2| \cdot)$ and corresponding posteriors $q(\cdot | \rvx_1) = q(\cdot | \rvx_2)$ are hence conditional distributions specified by the set of smooth invertible functions $g \colon \gZ \to \gX$ and their inverses $g^{-1}$, respectively.

In order to prove identifiability, we show that every candidate posterior distribution $q(\hat \rvz| \rvx_1)$ (more precisely, the corresponding $g$) on the generative model~\eqref{eq:generative-model-1}--\eqref{eq:generative-model-4} satisfying the assumptions stated in Theorem~\ref{thm:identifiability} inverts $g^\star$ in the sense that the aggregate posterior $q(\hat \rvz) = \int q(\hat \rvz|\rvx_1) p(\rvx_1) \dx \rvx_1$ is a coordinate-wise reparameterization of $p(\rvz)$ up to permutation of the indices.
Crucially, while neither the latent variables nor the shared indices are directly observed, observing pairs of images allows us to verify whether a candidate distribution has the right factorization \eqref{eq:full-joint} and sharing structure imposed by $S$ or not.

The proof is composed of the following steps:
\begin{enumerate}
    \item We characterize the constraints that need to hold for the posterior $q(\hat \rvz|\rvx_1)$ (the associated $g^{-1}$) inverting $g^\star$ for fixed~$S$.
    \item We parameterize all candidate posteriors $q(\hat \rvz|\rvx_1)$ (the associated $g^{-1}$) as a function $g^\star$ for a fixed $S$.
    \item We show that, for fixed $S$, $q(\hat \rvz|\rvx_1)$ (the associated $g^{-1}$) has two disentangled coordinate subspaces, one corresponding to $S$ and one corresponding to $\bar S$, in the sense that varying $\rvz_S$ and keeping $\rvz_{\bar S}$ fixed results in changes of the coordinate subspace of $\hat \rvz$ corresponding to $S$ only, and vice versa.
    \item We show that randomly sampling $S$ implies that every candidate posterior has an aggregated posterior which is a coordinate-wise reparameterization of the distribution of the true factors of variation.
\end{enumerate}

\paragraph{Step 1} We start by noting that since any continuous distribution can be obtained from the standard uniform distribution (via the inverse cumulative distribution function), it is sufficient to simply set $p(\hat \rvz)$ to the $d$-dimensional standard uniform distribution and try to recover an axis-aligned, smooth, invertible function $g \colon \gZ \to \gX$ (which completely characterizes $q(\rvx_1 | \hat \rvz)$ and $q(\hat \rvz|\rvx_1)$ via its inverse) as well as the distribution $p(S)$.

Next, assume that $S$ is fixed but unknown, \ie, the following reasoning is conditionally on $S$. By the generative process \eqref{eq:generative-model-1}--\eqref{eq:generative-model-4} we know that all smooth, invertible candidate functions $g$ need to obey with probability $1$ (and irrespective of whether $p(\hat \rvz)$ or $p(\rvz)$ is used)
\begin{align}
    g^{-1}_i(\rvx_1) &= g^{-1}_i(\rvx_2) \quad \forall i \in T, \label{eq:gconstraint1v2}\\
    g^{-1}_j(\rvx_1) &\neq g^{-1}_j(\rvx_2) \quad \forall i,j \in \bar T, \label{eq:gconstraint2v2}
\end{align}
for all $(\rvx_1,\rvx_2) \in \supp(p(\rvx_1,\rvx_2|S))$, where $T \in \gS$ is arbitrary but fixed. $T$ indexes the the coordinate subspace in the image of $g^{-1}$ corresponding to the unknown coordinate subspace $S$ of shared factors of $\rvz$. Note that choosing $T \in \gS$ requires knowledge of $k$ ($d$ can be inferred from $p(\rvx_1, \rvx_2)$). Also note that $g^\star$ satisfies \eqref{eq:gconstraint1v2}--\eqref{eq:gconstraint2v2} for $T=S$. 

\paragraph{Step 2} All smooth, invertible candidate functions can be written as $g = g^\star \circ h$, where $h\colon [0,1]^d \to \gZ$ is a smooth invertible function with smooth inverse (using that the composition of smooth invertible functions is smooth and invertible) that maps the $d$-dimensional uniform distribution to $p(\rvz)$. 

We have $g^{-1} = h^{-1} \circ g^{\star-1}$ i.e., $g^{-1}(\vx_1) = h^{-1}(g^{\star-1}(\vx_1)) = h^{-1}(\rvz)$ and similarly $g^{-1}(\vx_2) = h^{-1}(f(\rvz, \tilde \rvz, S))$. Expressing now \eqref{eq:gconstraint1v2}--\eqref{eq:gconstraint2v2} through $h$ we have with probability $1$
\begin{align}
    h^{-1}_i(\rvz) &= h^{-1}_i(f(\rvz, \tilde \rvz, S)) \quad \forall i \in T, \label{eq:rconstraint1v2}\\
    h^{-1}_j(\rvz) &\neq h^{-1}_j(f(\rvz, \tilde \rvz, S)) \quad \forall i,j \in \bar T. \label{eq:rconstraint2v2}
\end{align}
Thanks to invertibility and smoothness of $h$ we know that $h^{-1} $ maps the coordinate subspace $S$ of $\gZ$ to a $(d-k)$-dimensional submanifold $\gM_S$ of $[0,1]^d$ and the coordinate subspace $\bar S$ to a $k$-dimensional sub-manifold $\gM_{\bar S}$ of $[0,1]^d$ that is disjoint from $\gM_S$. 

\paragraph{Step 3}
Next, we shall see that for a fixed $S$ the only admissible functions $h \colon [0,1]^d \rightarrow\sZ^d$ are identifying two groups of factors (corresponding to two orthogonal coordinate subspaces): Those in $S$ and those in $\bar S$. 

To see this, we prove that $h$ can only satisfy \eqref{eq:rconstraint1v2}--\eqref{eq:rconstraint2v2} if it aligns the coordinate subspace $S$ of $\gZ$ with the coordinate subspace $T$ of $[0,1]^d$ and $\bar S$ with $\bar T$. In other words, $\gM_S$ and $\gM_{\bar S}$ lie in the coordinate subspaces $T$ and $\bar T$, respectively, and the Jacobian of $h^{-1}$ is block diagonal with blocks of coordinates indexed by $T$ and $\bar T$.

By contradiction, if $\gM_{\bar S}$ does not lie in the coordinate subspace $\bar T$ then \eqref{eq:rconstraint1v2} is violated as $h$ is smooth and invertible but its arguments obey $\rvz_i \neq f(\rvz, \tilde \rvz, S)_i=\tilde \rvz_i$ for every $i \in \bar S$ with probability $1$.

Likewise, if $\gM_S$ does not lie in the coordinate subspace $T$ then \eqref{eq:rconstraint2v2} is violated as $h$ is smooth and invertible but its arguments satisfy $\rvz_{S} = f(\rvz, \tilde \rvz, S)_{S}$ with probability $1$. 

As a result, \eqref{eq:rconstraint1v2} and \eqref{eq:rconstraint2v2} can only be satisfied if $h^{-1}$ maps each coordinate in $S$ to a unique matching coordinate in $T$. In other words there exists a permutation $\pi$ on $[d]$ such that $h^{-1}$ can be simplified as $h^{-1} = \tilde h$, where
\begin{align}
    h^{-1}_T(\rvz) &= \tilde{h}_T(\rvz_{\pi(S)}) \label{eq:reparam-fixed-s-1v2} \\
    h^{-1}_{\bar T}(\rvz) &= \tilde{h}_{\bar T}(\rvz_{\pi(\bar S)}). \label{eq:reparam-fixed-s-2v2}
\end{align}
Note that the permutation is required because the choice of $T$ is arbitrary. This implies that the Jacobian of $\tilde{h}$ is block diagonal with blocks corresponding to coordinates indexed by $T$ and $\bar T$ (or equivalently $S$ and $\bar S$).

For fixed $S$, i.e., considering $p(\rvx_1,\rvx_2|S)$, we can recover the groups of factors in $g^\star_S$ and $g^\star_{\bar S}$ up to permutation of the factor indices. Note that this does not yet imply that we can recover all axis-aligned $g$ as the factors in $g_T$ and $g_{\bar T}$ may still be entangled with each other, i.e., $\tilde{h}$ is not axis aligned within $T$ and $\bar T$.

\paragraph{Step 4}
If now $S$ is drawn at random, we observe a mixture of distributions $p(\rvx_1,\rvx_2| S)$ (but not $S$ itself) and $g$ needs to associate every $(\rvx_1,\rvx_2) \in \supp(p(\rvx_1,\rvx_2|S))$ with one and only one $T$ to satisfy \eqref{eq:gconstraint1v2}--\eqref{eq:gconstraint2v2}, for every $S \in \supp(p(S))$.

Indeed, suppose that $(\rvx_1, \rvx_2)$ are distributed according to a mixture of $p(\rvx_1,\rvx_2| S=S_1)$ and $p(\rvx_1,\rvx_2| S=S_2)$ with $S_1, S_2 \in \supp(p(S)), S_1 \neq S_2$. Then \eqref{eq:gconstraint1v2} can only be satisfied with probability $1$ for a subset of coordinates of size $|S_1 \cap S_2| < d-k$ due to invertibility and smoothness of $g$, but $|T| = d-k$. The same reasoning applies for mixtures of more than two subsets of $p(\rvx_1,\rvx_2| S)$. Therefore, \eqref{eq:gconstraint1v2} cannot be satisfied for $(\rvx_1, \rvx_2)$ drawn from a mixture of distribution $p(\rvx_1,\rvx_2| S)$ but associated with a single $T$.

Conversely, for a given $S$, all $(\rvx_1,\rvx_2) \in \supp(p(\rvx_1,\rvx_2|S))$ need to be associated with the same $T$ due to invertibility and smoothness of $g$. in more detail, all $(\rvx_1,\rvx_2) \in \supp(p(\rvx_1,\rvx_2|S))$ will share the same $d-k$-dimensional coordinate subspace due to \eqref{eq:rconstraint1v2}--\eqref{eq:rconstraint2v2} and therefore cannot be associated with two different $T$ as $|T|=d-k$.

Further, note that due to the smoothness and invertibility of $g$, for every pair of associated $S_1, T_1$ and $S_2, T_2$ we have $|S_1 \cap S_2| = |T_1 \cap T_2|$ and $|S_1 \cup S_2| = |T_1 \cup T_2|$. The assumption
\begin{equation} \label{eq:ps-conditionv2}
    P(S \cap S' = \{i\}) > 0 \quad \forall i \in [d] \quad \text{and} \quad S, S' \sim p(S)
\end{equation}
hence implies that we ``observe'' every factor through $(\rvx_1, \rvx_2) \sim p(\rvx_1, \rvx_2)$ as the intersection of two sets $S_1,S_2$, and this intersection will be reflected as the intersection of the corresponding two coordinate subspaces $T_1, T_2$. This, together with \eqref{eq:reparam-fixed-s-1v2}--\eqref{eq:reparam-fixed-s-2v2} finally implies
\begin{align}
    h^{-1}_i(\rvz) &= \tilde{h}_i(z_{\pi(i)}) \quad \forall i \in [d]
\end{align}
for some permutation $\pi$ on $[d]$. This in turns imply that the Jacobian of $\tilde{h}$ is diagonal. 

Therefore, by change of variables formula we have
\begin{equation} \label{eq:reparam}
q(\hat\rvz) = p(\tilde{h}( \rvz_{\pi([d])})) \left|\mathrm{det}\frac{\partial}{\partial\rvz_{\pi([d])}}\tilde{h}\right| = \prod_{i=1}^d p(\tilde{h}_i( z_{\pi(i)})) \left|\frac{\partial}{\partial z_{\pi(i)}}\tilde{h}_i\right|
\end{equation}
where the second equality is a consequence of the Jacobian being diagonal, and $|\partial \tilde{h}_i / \partial z_{\pi(i)}| \neq 0, \forall i,$ thanks to $\tilde{h}\colon \gZ \to [0,1]^d$ being invertible on $\gZ$. From \eqref{eq:reparam}, we can see that $q(\hat\rvz)$ is a coordinate-wise reparameterization of $p(\rvz)$ up to permutation of the indices. As a consequence, a change in a coordinate of $\rvz$ implies a change in the unique corresponding coordinate of $\hat\rvz$, so $q(\hat \rvz|\rvx_1)$ (or, equivalently, $g$) disentangles the factors of variation.

\paragraph{Final remarks} The considered generative model is identifiable up to coordinate-wise reparametrization of the factors. $p(S)$ can then be recovered $p(\rvx_1, \rvx_2)$ via $g$. Note that \eqref{eq:ps-conditionv2} effectively ensures that to a weak supervision signal is available for each factor of variation.

\chapter{Learning About Multiple Objects}\label{cha:slot_attn}
In this chapter, we discuss how to learn a modular representation of multiple entities. The presented work is based on \citep{locatello2020object} and was developed in collaboration with Dirk Weissenborn, Thomas Unterthiner, Aravindh Mahendran, Georg Heigold, Jakob Uszkoreit, Alexey Dosovitskiy, and Thomas Kipf. This work was done when Francesco Locatello was interning at Google Research, Brain Team in Amsterdam. Francesco Locatello and Thomas Kipf contributed equally to this publication. Francesco Locatello contributed to the development of the module (which started before he joined and was the result of a team effort) and its implementation. He was responsible for the analysis of its properties and the experiments on set prediction. Thomas Kipf was responsible for the experiments on Object Discovery which are not part of this dissertation. Figures~\ref{fig:slota} and~\ref{fig:slota_b} were done by Francesco Locatello and Thomas Kipf. Code available at \url{https://github.com/google-research/google-research/tree/master/slot_attention}.

\section{Motivation}
Perhaps, the most compelling arguments in favor of the usefulness of learning structured representations come from reinforcement learning and physical modeling, where several state-of-the-art approaches are trained from the internal representation of a simulator~\citep{battaglia2016interaction,sanchez2020learning} or of a game engine~\citep{berner2019dota,vinyals2019grandmaster}. As such, learning object-centric representations of complex scenes is a promising step towards enabling efficient abstract reasoning from low-level perceptual features. Yet, most deep learning approaches learn distributed representations that do not capture the compositional properties of natural scenes. Even the disentangled representations we presented in Chapters~\ref{cha:unsup_dis}--\ref{cha:weak} all learn vector representations that are the output of a CNN. Not only they introduce an unnatural ordering among the factors of variation, but also reduce the scene to a representation of its \textit{features} rather than the objects that compose it. Consider the example of a robotic arm manipulating an object and assume we have learned a disentangled representation for this physical system. If we added a second object to the arena, the representation would either have to ignore this new object or stop being disentangled.

\begin{wrapfigure}{l}{0.5\textwidth}
    \centering
        \includegraphics[width=0.5\textwidth]{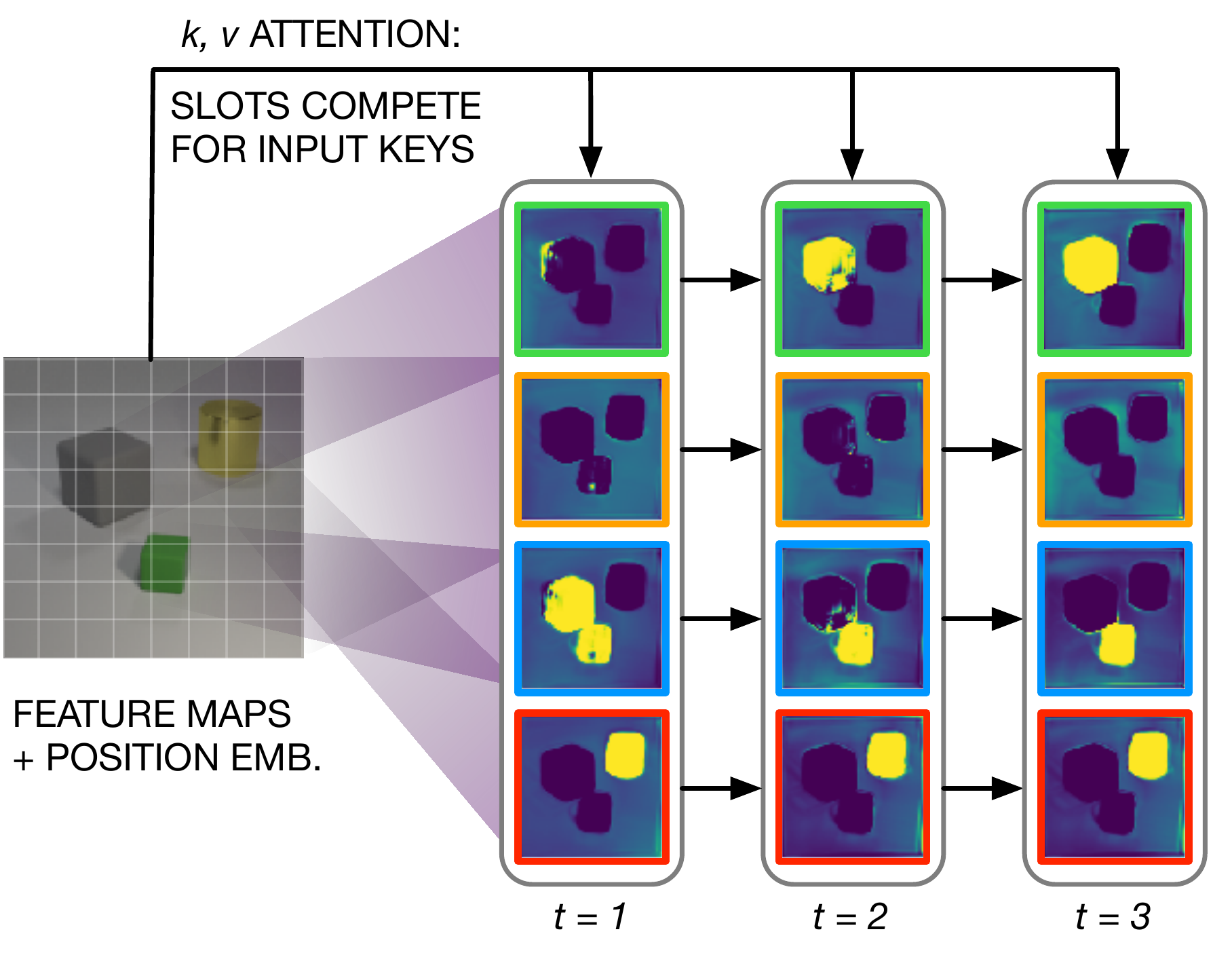}
    \caption{\small Slot Attention module. Using iterative attention, \sam learns a mapping from distributed representations to a set of slots. Slots compete to explain parts of the input.}\label{fig:slota}
\end{wrapfigure}
As a step in the direction of learning abstract variables from high-dimensional observations, we introduce the Slot Attention module, a differentiable \textit{interface} between perceptual representations (e.g., the output of a CNN) and a \textit{set} of variables called \textit{slots}. Using an iterative attention mechanism, \sam produces a set of output vectors with permutation symmetry. Unlike \textit{capsules} used in Capsule Networks~\citep{sabour2017dynamic,hinton2018matrix}, slots produced by \sam do not specialize to one particular type or class of object, which could harm generalization. Instead, they act akin to \textit{object files}~\citep{kahneman1992reviewing}, i.e., slots use a common representational format: each slot can store (and bind to) any object in the input. This allows \sam to generalize in a systematic way to unseen compositions, more objects, and more slots.

\section{Slot Attention}\label{sec:slot_attention_module}
\looseness=-1 The Slot Attention module (Figure~\ref{fig:slota}) maps from a set of $N$ input feature vectors to a set of $K$ output vectors that we refer to as \textit{slots}. Each vector in this output set can, for example, describe an object or an entity in the input. The overall module is described in Algorithm~\ref{algo:slot_attention} in pseudo-code.

Slot Attention uses an iterative attention mechanism to map from its inputs to the slots. Slots are initialized at random and thereafter refined at each iteration $t=1\ldots T$ to bind to a particular part (or grouping) of the input features. Randomly sampling initial slot representations from a common distribution allows Slot Attention to generalize to a different number of slots at test time.

At each iteration, slots \textit{compete} for explaining parts of the input via a softmax-based attention mechanism \citep{bahdanau2014neural,luong2015effective,vaswani2017attention} and update their representation using a recurrent update function. The final representation in each slot can be used in downstream tasks such as supervised set prediction (Figure~\ref{fig:slota_b}) or   unsupervised object discovery~\citep{locatello2020object}.

\begin{algorithm}[t!]
\caption{Slot Attention module. The input is a set of $N$ vectors of dimension $D_{\inp}$ which is mapped to a set of $K$ slots of dimension $D_{\slots}$. We initialize the slots by sampling their initial values as independent samples from a Gaussian distribution with shared, learnable parameters $\mu\in\mathbb{R}^{D_{\slots}}$ and $\sigma\in\mathbb{R}^{D_{\slots}}$. In our experiments we set the number of iterations to $T=3$.}\label{algo:slot_attention}
\begin{algorithmic}[1]
  \STATE \textbf{Input}: $\inp\in\mathbb{R}^{N \times D_{\inp}}$, $\slots \sim \mathcal{N}(\mu, \ \mathrm{diag(\sigma)})\in\mathbb{R}^{K \times D_{\slots}}$\\[0.2em]
  \STATE \textbf{Layer params}: $k, \ q, \ v$: linear projections for attention; $\texttt{GRU}$; $\texttt{MLP}$;  \layernorm \,(x3)\\[0.2em]
  \STATE \quad $\inp = \layernorm\,(\inp)$\label{algo_step:norm_input}\\[0.2em]
  \STATE \quad \textbf{for} $t = 0\ldots T$ \\[0.2em]
  \STATE \qquad $\slots\texttt{\_prev} = \slots$\\[0.2em]
  \STATE \qquad $\slots = \layernorm\,(\slots)$\\[0.2em]
  \STATE \qquad $\attn = \softmax{\frac{1}{\sqrt{D}} k(\inp) \cdot q(\slots)^T,\,\texttt{axis=`slots'}} $  \hfill\COMMENT{{\color{gray}\# norm.~over slots}}\label{algo_step:softmax}
  \STATE \qquad $\updates = \texttt{WeightedMean}\,(\texttt{weights=}\texttt{attn}+\epsilon,\,\texttt{values=}v(\inp))$  \hfill\COMMENT{{\color{gray} \# aggregate}}\label{algo_step:updates}\\[0.2em]
  \STATE \qquad $\slots = \gru{\texttt{state=}\slots\texttt{\_prev}}{\texttt{inputs=}\updates} $  \hfill\COMMENT{{\color{gray}\# GRU update (per slot)}}\\[0.2em]
  \STATE \qquad $\slots \mathrel{+}= \mlp\,(\layernorm\,(\slots))$  \hfill\COMMENT{{\color{gray}\# optional residual MLP (per slot)}}\\[0.2em]
  \STATE \quad \textbf{return} $\slots$\label{algo_step:return}\\[0.2em]
\end{algorithmic}
\end{algorithm}

We now describe a single iteration of Slot Attention on a set of input features, $\inp\in\mathbb{R}^{N\times D_{\inp}}$, with $K$ output slots of dimension $D_{\slots}$ (we omit the batch dimension for clarity). We use learnable linear transformations $k$, $q$, and $v$ to map inputs and slots to a common dimension $D$.

Slot Attention uses dot-product attention~\citep{luong2015effective} with attention coefficients that are normalized over the slots, i.e., the queries of the attention mechanism. This choice of normalization introduces competition between the slots for explaining parts of the input. We further follow the common practice of setting the softmax temperature to a fixed value of $\sqrt{D}$~\citep{vaswani2017attention}:
\begin{equation}\label{eqn:slot_attention_key_query}
      \attn_{i, j} \coloneqq \frac{e^{M_{i, j}}}{\sum_l e^{M_{i, l}}} \qquad \text{where} \qquad M \coloneqq \frac{1}{\sqrt{D}}k(\inp) \cdot q(\slots)^T \in\mathbb{R}^{N\times K}.
\end{equation}
\looseness=-1 In other words, the normalization ensures that attention coefficients sum to one for each individual input feature vector, which prevents the attention mechanism from ignoring parts of the input. To aggregate the input values to their assigned slots, we use a weighted mean as follows:
\begin{equation}\label{eqn:slot_attention_value}
    \updates \coloneqq W^T \cdot v(\inp) \in \mathbb{R}^{K \times D}  \qquad \text{where} \qquad W_{i, j} \coloneqq \frac{\attn_{i, j}}{\sum_{l=1}^N\attn_{l, j}} \ .
\end{equation}
The weighted mean helps improve stability of the attention mechanism (compared to using a weighted sum) as in our case the attention coefficients are normalized over the slots. In practice we further add a small offset $\epsilon$ to the attention coefficients to avoid numerical instability.

The aggregated \texttt{updates} are finally used to update the slots via a learned recurrent function, for which we use a Gated Recurrent Unit (GRU)~\citep{cho2014learning} with $D_\slots$ hidden units. We found that transforming the GRU output with an (optional) multi-layer perceptron (MLP) with ReLU activation and a residual connection~\citep{he2016deep} can help improve performance. Both the GRU and the residual MLP are applied independently on each slot with shared parameters.
We apply layer normalization (LayerNorm)~\citep{ba2016layer} both to the inputs of the module and to the slot features at the beginning of each iteration and before applying the residual MLP. While this is not strictly necessary, we found that it helps speed up training convergence. We refer to the paper~\citep{locatello2020object} for a detailed ablation study. The overall time-complexity of the module is $\mathcal{O}\left(T\cdot D \cdot N\cdot K \right)$.

We identify two key properties of \sam: (1) permutation invariance with respect to the input (i.e., the output is independent of permutations applied to the input and hence suitable for sets) and (2) permutation equivariance with respect to the order of the slots (i.e., permuting the order of the slots after their initialization is equivalent to permuting the output of the module). More formally:
\begin{proposition}\label{thm:layer_invarianvce_equivariance}
\looseness=-1Let $\textnormal{SlotAttention}(\textnormal{\inp}, \textnormal{\slots})\in\mathbb{R}^{K\times D_{\textnormal{\slots}}}$ be the output of the Slot Attention module (Algorithm~\ref{algo:slot_attention}), where $\textnormal{\inp}\in\mathbb{R}^{N\times D_{\textnormal{\inp}}}$ and $\textnormal{\slots}\in\mathbb{R}^{K\times D_{\textnormal{\slots}}}$\,. Let $\pi_{i}\in\mathbb{R}^{N\times N}$ and $\pi_{s}\in\mathbb{R}^{K\times K}$ be arbitrary permutation matrices. Then, the following holds: 
\begin{align*}
    \textnormal{SlotAttention}(\pi_i \cdot \textnormal{\inp}, \pi_s\cdot \textnormal{\slots}) = \pi_s \cdot \textnormal{SlotAttention}(\textnormal{\inp}, \textnormal{\slots})\,.
\end{align*}
\end{proposition}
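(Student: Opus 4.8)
The plan is to verify the two symmetry properties by tracking how permutations propagate through each line of Algorithm~\ref{algo:slot_attention}. The key structural observation is that every operation in the module is of one of three types: (a) operations applied row-wise (independently and with shared parameters) to the rows of $\inp$, (b) operations applied row-wise to the rows of $\slots$, or (c) the attention coupling in steps~\ref{algo_step:softmax}--\ref{algo_step:updates} which is bilinear in (a matrix-valued function of) $\inp$ and $\slots$. Row-wise operations commute with any permutation of the corresponding rows, so the only step that requires care is the coupling.

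First I would establish the bookkeeping lemma: if $h$ is any map acting row-wise with shared parameters (this covers \layernorm\ on line~\ref{algo_step:norm_input}, the linear projections $k$, $q$, $v$, the per-slot \texttt{GRU} and residual \mlp), then $h(\pi X) = \pi\, h(X)$ for any permutation matrix $\pi$ of compatible size. This is immediate from the definition of matrix row-permutation $(\pi X)_{r,:} = X_{\pi^{-1}(r),:}$. Next I would handle the attention matrix: with $\inp' = \pi_i\inp$ and $\slots' = \pi_s\slots$ we get $k(\inp') = \pi_i k(\inp)$ and $q(\slots') = \pi_s q(\slots)$, hence $M' = \tfrac{1}{\sqrt D}\, k(\inp')\cdot q(\slots')^{T} = \pi_i M \pi_s^{T}$, i.e.\ $M'_{r,c} = M_{\pi_i^{-1}(r),\,\pi_s^{-1}(c)}$. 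The softmax in~\eqref{eqn:slot_attention_key_query} is normalized over the slot axis (columns), and permuting which entries are summed does not change the sum; so $\attn' = \pi_i\, \attn\, \pi_s^{T}$ with the same index relabelling. Then in~\eqref{eqn:slot_attention_value} the column-normalization denominator $\sum_{l}\attn'_{l,j}$ is again a permuted copy of $\sum_l \attn_{l, \pi_s^{-1}(j)}$, so $W' = \pi_i W \pi_s^{T}$, and $\updates' = (W')^{T} v(\inp') = \pi_s W^{T}\pi_i^{T}\pi_i v(\inp) = \pi_s\, \updates$, using $\pi_i^{T}\pi_i = I$. Thus $\inp$ permutations cancel out entirely at the aggregation step, while $\slots$ permutations survive as a left-multiplication by $\pi_s$ on \updates.

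The final step is an induction over the $T$ iterations. Write $\slots^{(t)}$ for the slot matrix entering iteration $t$, with $\slots^{(0)}$ the initialization. The induction hypothesis is that running the module on $(\pi_i\inp, \pi_s\slots^{(0)})$ produces $\pi_s\slots^{(t)}$ at the start of each iteration. The base case is $\slots'^{(0)} = \pi_s\slots^{(0)}$ by assumption. For the inductive step, inside iteration $t$: \layernorm\ on the slots gives $\pi_s\,\layernorm(\slots^{(t)})$ by the bookkeeping lemma; the computation above gives $\updates' = \pi_s\updates$ (the $\pi_i$ having cancelled); the \texttt{GRU} update is per-slot with state $\slots\_\texttt{prev}' = \pi_s\slots^{(t)}$ and input $\pi_s\updates$, so by the bookkeeping lemma it outputs $\pi_s$ times the un-permuted result; likewise the residual $\mlp\circ\layernorm$ is per-slot, so $\slots'^{(t+1)} = \pi_s\slots^{(t+1)}$. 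After $T$ iterations line~\ref{algo_step:return} returns $\pi_s\slots^{(T)} = \pi_s\,\textnormal{SlotAttention}(\inp,\slots)$, which is the claim.

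**The main obstacle** is not conceptual difficulty but bookkeeping care: one must be precise that the softmax over the slot axis and the weighted-mean normalization over the input axis both preserve their respective sums under the relevant permutation, and that the $\pi_i$ genuinely cancels in $\updates'$ via $W'^{T} v(\inp')$ (this is where the choice to normalize attention over slots, not inputs, matters — it is what makes the $\inp$-axis a pure sum at aggregation time). A secondary subtlety worth a sentence in the writeup is that the initial slots are sampled i.i.d.\ from the same Gaussian, so ``permuting the order of the slots after their initialization'' is meaningful and the proposition is the right formalization of the informal equivariance claim; with correlated or slot-specific initializations the statement would be about the deterministic map only, which is exactly what we prove. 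The small constant $\epsilon$ added to the attention coefficients is added uniformly to all entries and therefore does not affect any of the permutation arguments.
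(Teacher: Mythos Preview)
Your proposal is correct and follows essentially the same approach as the paper: both verify each component of Algorithm~\ref{algo:slot_attention} in turn, using that row-wise operations with shared parameters commute with row permutations, and that the aggregation in~\eqref{eqn:slot_attention_value} sums over the input axis so that $\pi_i$ cancels while $\pi_s$ survives. Your write-up is more explicit than the paper's (you track the matrix identity $M' = \pi_i M \pi_s^{T}$ and use $\pi_i^{T}\pi_i = I$ directly, and you frame the multi-iteration argument as a clean induction), but the substance is the same.
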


\textbf{Relation with Capsules} \
The capsules used in Capsules Networks~\citep{sabour2017dynamic, hinton2018matrix,tsai2020capsules} communicate with an iterative \textit{routing} mechanism similar to ours. The closest is the inverted dot-product attention routing~\citep{tsai2020capsules}, which does not respect permutation symmetry as input-output pairs do not share parameters.

\textbf{Relation with interacting memory models} \
Interactive memory models~\citep{watters2017visual,van2018relational,kipf2020contrastive,santoro2018relational,zambaldi2018relational,watters2019cobra,stanic2019r,goyal2019recurrent,veerapaneni2020entity} are structured architectures for reasoning. They typically consist in a set of slots that are updated with a recurrent function and periodically communicate. Slots are symmetric (with the exception of~\citep{goyal2019recurrent}). Both~\citep{santoro2018relational} and \citep{goyal2019recurrent} use vanilla attention to map from the input to the slots without the competition driven by the normalization wrt the slots. Further, they consider temporal data with the recurrent function updating the state of the slots across time-steps rather than refining the abstraction for a single input.

\textbf{Relation with mixtures of experts} \ Expert models~\citep{jacobs1991adaptive,parascandolo2018learning,locatello2018clustering} are opposite to our approach as they do not share parameters between individual experts. Their goal is the specialization of the experts to different tasks or examples    .

\textbf{Relation with soft clustering} \
\looseness=-1Our routing procedure is related to soft k-means clustering~\cite{bauckhage2015lecture} (where slots corresponds to centroids), except we use a dot product similarity with learned linear projections and a parameterized, learnable update function.

\textbf{Relation with recurrent attention} \
Recurrent attention models~\citep{mnih2014recurrent,gregor2015draw,eslami2016attend,ren2017end,kosiorek2018sequential} and recurrent models for set predictions~\citep{stewart2016end,romera2016recurrent} have been applied to image modeling and scene decomposition but they infer the slots one at the time in an auto-regressive manner.



\section{Example Application: Set Prediction}
\label{sec:set_prediction}
Set representations are already used in tasks across many data modalities ranging from point cloud prediction~\citep{achlioptas2018learning,fan2017point}, classifying multiple objects in an image~\citep{zhang2019deep}, or generation of molecules with desired properties~\citep{de2018molgan,simonovsky2018graphvae}.
In the example considered in this section, we are given an input image and a set of prediction targets, each describing an object in the scene. The key challenge in predicting sets is that there are $K!$ possible equivalent representations for a set of $K$ elements, as the order of the targets is arbitrary. This inductive bias needs to be explicitly modeled in the architecture to avoid discontinuities in the learning process, e.g.~when two semantically specialized slots swap their content throughout training~\citep{zhang2019deep,zhang2019fspool}. \sam directly maps from set to set (of different cardinalities) using only a few attention iterations and a single task-specific loss function.
The output order of \sam is random and independent of the input order. Therefore, \sam can be used to turn a distributed representation of an input scene into a set representation where each object can be separately classified with a standard classifier as shown in Figure~\ref{fig:slota_b}.

\begin{figure}[t]
        {
            \includegraphics[width=\textwidth]{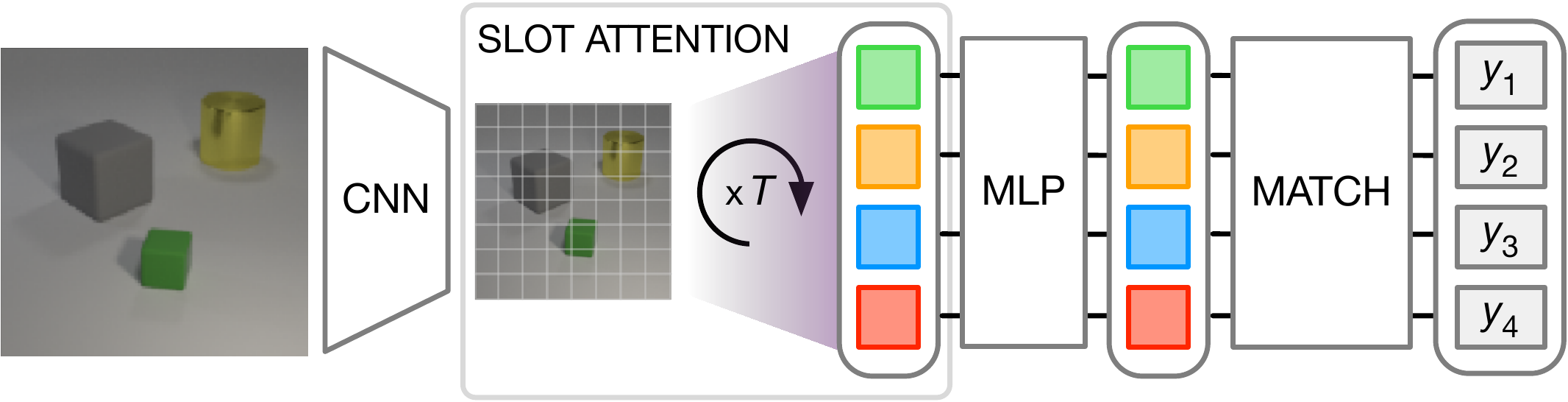}
            \caption{\small Set prediction architecture. A CNN backbone learns a distributed representation of each input image. \sam extract a representation of each object from which we classifiy its properties with a MLP with shared parameters.}
            \label{fig:slota_b}
        }
    \end{figure}

\textbf{Other neural networks for sets} \
\looseness=-1A range of recent methods explore learning representations for set structured data~\citep{lin2017structured,zaheer2017deep,zhang2019fspool} and set generation~\citep{zhang2019deep,rezatofighi2020learn,kosiorek2020conditional}. For set-to-set mappings~\citep{vaswani2017attention,lee2018set} that preserve the cardinality of the input (\ie, the output set has the same number of elements as the input) the standard are graph neural networks~\citep{scarselli2008graph,li2015gated,kipf2016semi,battaglia2018relational} and the self-attention mechanism~\citep{vaswani2017attention}. \citep{ying2018hierarchical,lee2018set,carion2020detr} learn an \textit{ordered} representation of the output set with learned per-element initialization, which prevents these approaches from generalizing to a different set cardinality at test time. The Deep Set Prediction Network (DSPN)~\citep{zhang2019deep,huang2020set} is the only approach that respects permutation symmetry by running an inner gradient descent loop for each example, which requires many steps for convergence and careful tuning of several loss hyperparmeters. 
Most related approaches, including DiffPool~\citep{ying2018hierarchical}, Set Transformer~\citep{lee2018set}, DSPN~\citep{zhang2019deep}, and DETR~\citep{carion2020detr} use a learned per-element initialization (\ie, separate parameters for each set element), which prevents these approaches from generalizing to more set elements at test time. 

\subsection{Experiments}

\textbf{Architecture} \ Our encoder consists of two components: (i) a CNN backbone augmented with positional embeddings, followed by (ii) a Slot Attention module. The output of \sam is a set of slots, that represent a grouping of the scene (e.g.~in terms of objects).
For each slot, we apply a MLP with parameters shared between slots. As the order of both predictions and labels is arbitrary, we match them using the Hungarian algorithm~\citep{kuhn1955hungarian}. For details on the implementation we refer to original paper~\citep{locatello2020object} and the code\footnote{\url{https://github.com/google-research/google-research/tree/master/slot_attention}}.

\looseness=-1\textbf{Metrics} \
Following~\citet{zhang2019deep}, we use $K=10$ object slots and compute the Average Precision (AP) as commonly used in object detection~\citep{everingham2015pascal}. A prediction (object properties and position) is considered correct if there is a matching object with exactly the same properties (shape, material, color, and size) within a certain distance threshold ($\infty$ means we do not enforce any threshold).
The predicted position coordinates are scaled to $[-3, 3]$. We zero-pad the targets and predict an additional indicator score in $[0, 1]$ corresponding to the presence probability of an object (1 means there is an object) which we then use as prediction confidence to compute the AP.

\textbf{Results} \
\looseness=-1In Figure~\ref{fig:prop_pred} (left) we report results in terms of Average Precision for supervised object property prediction on CLEVR10 (using $T=3$ for Slot Attention at both train and test time). We compare to both the DSPN results of~\citep{zhang2019deep} and the Slot MLP baseline. Overall, we observe that our approach matches or outperforms the DSPN baseline. The performance of our method degrades gracefully at more challenging distance thresholds (for the object position feature) maintaining a reasonably small variance. Note that the DSPN baseline~\citep{zhang2019deep} uses a significantly deeper ResNet 34~\citep{he2016deep} image encoder and use a different weight for the distance penalty in the loss. In Figure~\ref{fig:prop_pred} (center) we observe that increasing the number of attention iterations at test time generally improves performance. Slot Attention  can naturally handle more objects at test time by changing the number of slots. In Figure~\ref{fig:prop_pred} (right) we observe that the AP degrades gracefully if we train a model on CLEVR6 (with $K=6$ slots) and test it with more objects.

\looseness=-1Intuitively, to solve this set prediction task each slot should attend to a different object. In Figure~\ref{fig:masks_prop_pred}, we visualize the attention maps of each slot for two CLEVR images. In general, we observe that the attention maps naturally segment the objects. We remark that the method is only trained to predict the property of the objects, without any segmentation mask. Quantitatively, we can evaluate the Adjusted Rand Index (ARI) scores of the attention masks. On CLEVR10 (with masks), the attention masks produced by Slot Attention achieve an ARI~\citep{rand1971objective,hubert1985comparing} of $78.0\% \pm 2.9$ (to compute the ARI we downscale the input image to $32 \times 32$). 

Finally, we investigate a variant of our model where we make a prediction and compute the loss at every iteration. A similar experiment was reported in~\citep{zhang2019deep} for the DSPN model. Also, we remark that DSPN uses a different scale for the position coordinates of objects by default. Therefore, we further compare against a version of our model where we similarly tune the importance of position in the loss.
A scale of 1 corresponds to our default coordinate normalization of $[0, 1]$, whereas larger scales correspond to a $[0, \texttt{scale}]$ normalization of the coordinates (or shifted by an arbitrary constant). In Figure~\ref{fig:prop_pred_hist_loss}, we observe that computing the loss at each step in Slot Attention improves the AP score at all distance thresholds as opposed to DSPN, where it is only beneficial at small distance thresholds. We conjecture that this is an optimization issue in DSPN. As expected, increasing the importance of accurately modeling position in the loss impacts the AP positively at smaller distance thresholds, but can otherwise have a negative effect on predicting other object attributes correctly.

\textbf{Summary} \ Slot Attention learns a representation of objects for set-structured property prediction tasks and achieves results competitive with a prior state-of-the-art approach while being significantly easier to implement and tune. Further, the attention masks naturally segment the scene, which can be valuable for debugging and interpreting the predictions of the model.

\begin{figure}[t]
    \centering
    \begin{subfigure}[b]{0.29\textwidth}
    \includegraphics[width=\textwidth]{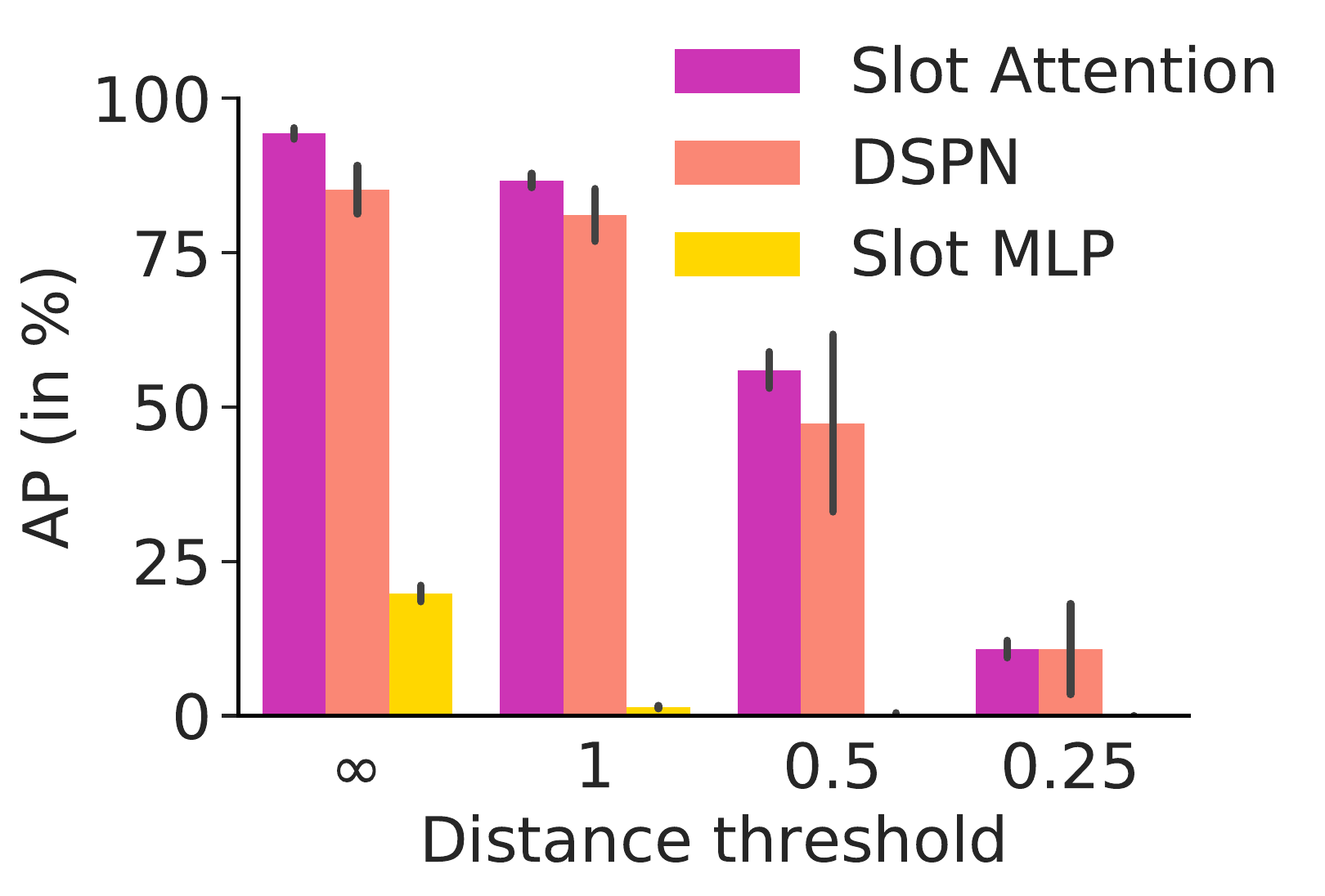}
    \end{subfigure}
    ~\quad
    \begin{subfigure}[b]{0.31\textwidth}
    \includegraphics[width=\textwidth]{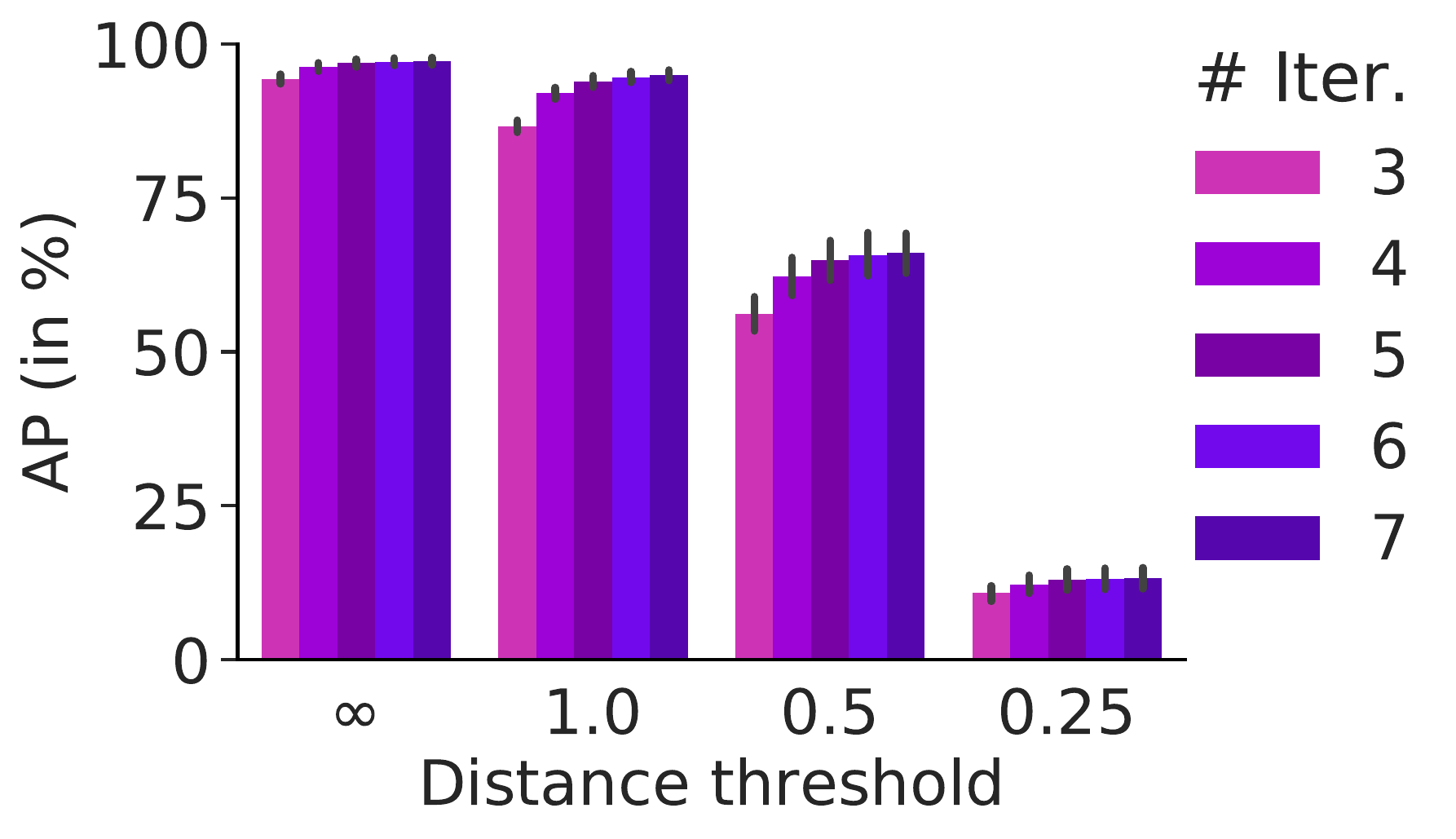}
    \end{subfigure}
    ~\quad
    \begin{subfigure}[b]{0.31\textwidth}
    \includegraphics[width=\textwidth]{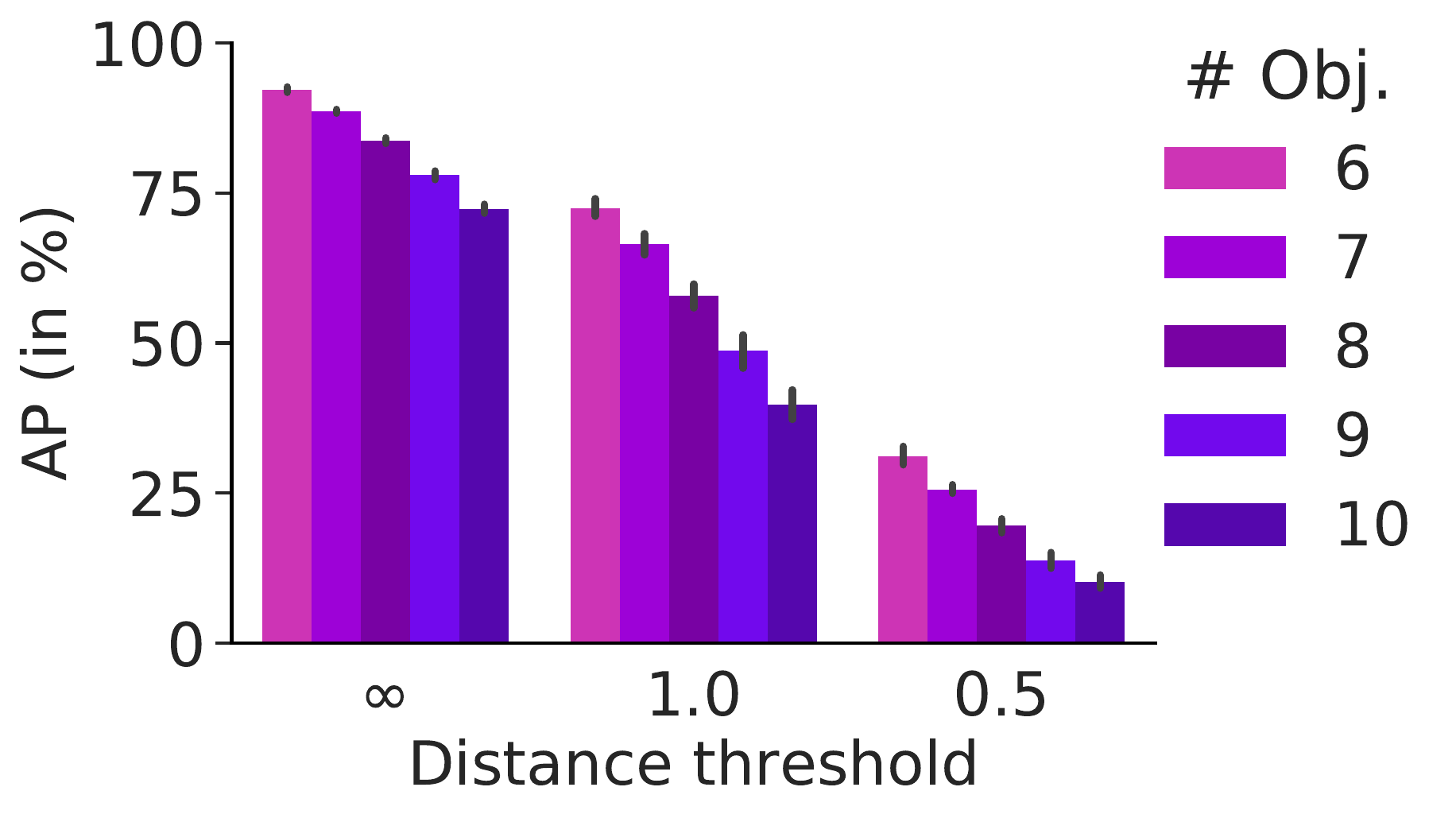}
    \end{subfigure}
    \vspace{-1mm}
    \caption{\small (\textbf{Left}) AP at different distance thresholds on CLEVR10 (with $K=10$).
    (\textbf{Center}) AP with different number of iterations. The models are trained with 3 iterations and tested with iterations ranging from 3 to 7. (\textbf{Right}) AP for Slot Attention trained on CLEVR6 ($K=6$) and tested on scenes containing exactly $N$ objects (with $N=K$ from $6$ to $10$).}
    \label{fig:prop_pred}
    \vspace{-1mm}
\end{figure}

\begin{figure}[t]
    \centering
    \includegraphics[width=0.75\textwidth,trim={0 0.3cm 0 0.25cm},clip]{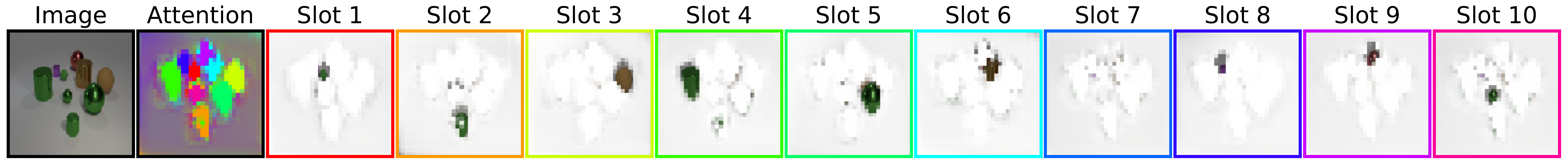}
    \includegraphics[width=0.75\textwidth,trim={0 0.3cm 0 1.3cm},clip]{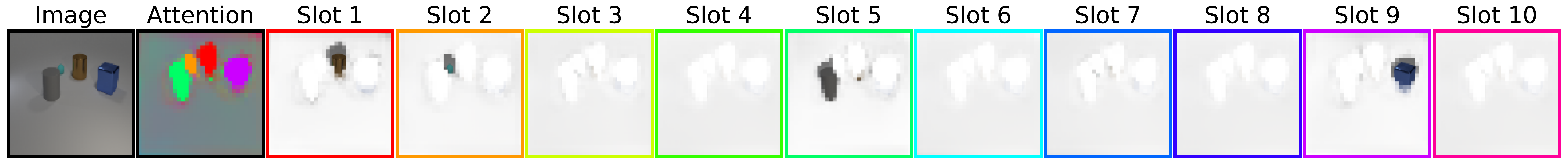}
    \caption{\small Visualization of the attention masks on CLEVR10 for two examples with 9 and 4 objects, respectively, for a model trained on the property prediction task. The masks are upsampled to $128\times 128$ for this visualization to match the resolution of input image.}
    \label{fig:masks_prop_pred}
    \vspace{-2.5mm}
\end{figure}

\begin{figure}[ht]
    \centering
    \includegraphics[width=0.85\textwidth,trim={0 0.3cm 0 0.25cm},clip]{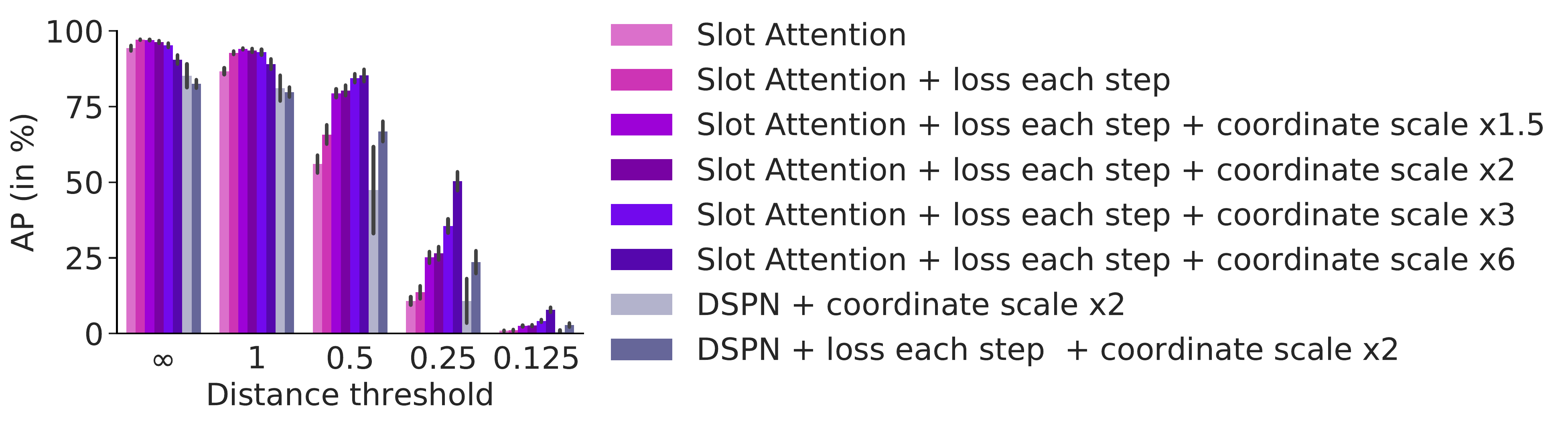}
    \caption{\small Computing the loss at each iteration generally improves results for both Slot Attention and the DSPN (while increasing the computational cost as well). As expected, tuning the importance of modeling the position of objects in the loss positively impacts the AP at small distance thresholds where the position of objects needs to be predicted more accurately.}
    \label{fig:prop_pred_hist_loss}
\end{figure}

\section{Proof of Proposition~\ref{thm:layer_invarianvce_equivariance}}\label{app:proof}
Let us first formally define permutation invariance and equivariance.
\begin{definition}[Permutation Invariance]
A function $f: \mathbb{R}^{M\times D_1} \rightarrow \mathbb{R}^{M\times D_2}$ is \emph{permutation invariant} if for any arbitrary permutation matrix $\pi\in\mathbb{R}^{M\times M}$ it holds that:
\begin{equation*}
    f(\pi x) = f(x) \, .
\end{equation*}
\end{definition}
\begin{definition}[Permutation Equivariance]
A function $f: \mathbb{R}^{M\times D_1} \rightarrow \mathbb{R}^{M\times D_2}$ is \emph{permutation equivariant} if for any arbitrary permutation matrix $\pi\in\mathbb{R}^{M\times M}$ it holds that:
\begin{equation*}
    f(\pi x) = \pi f(x) \, .
\end{equation*}
\end{definition}

The proof is straightforward and is reported for completeness. We rely on the fact that the sum operation is permutation invariant.

\paragraph{Linear projections} As the linear projections are applied independently per slot/input element with shared parameters, they are permutation equivariant.

\paragraph{Equation~\ref{eqn:slot_attention_key_query}}
The dot product of the attention mechanism (i.e. computing the matrix $M\in\mathbb{R}^{N\times K}$) involves a sum over the feature axis (of dimension $D$) and is therefore permutation equivariant w.r.t.~both input and slots. The output of the softmax is also equivariant, as:
\begin{align*}
    \softmax{\pi_s\cdot \pi_i \cdot M}_{(k,l)} = \frac{e^{(\pi_s\cdot \pi_i \cdot M)_{k,l}}}{\sum_s e^{(\pi_s\cdot \pi_i \cdot M)_{k,s}}} =  \frac{e^{M_{\pi_i(k),\pi_s(l)}}}{\sum_{\pi_s(l)} e^{M_{\pi_i(k),\pi_s(l)}}} = \softmax{M}_{(\pi_i(k),\pi_s(l))}\,,
\end{align*}
where we indicate with e.g. $\pi_i(k)$ the transformation of the coordinate $k$ with the permutation matrix $\pi_i$. The second equality follows from the fact that the sum is permutation invariant.

\paragraph{Equation~\ref{eqn:slot_attention_value}}
The matrix product in the computation of the $\updates$ involves a sum over the input elements which makes the operation invariant w.r.t.~permutations of the input order (i.e. $\pi_i$) and equivariant w.r.t.~the slot order (i.e. $\pi_s$).

\paragraph{Slot update:} The slot update applies the same network to each slot with shared parameters. Therefore, it is a permutation equivariant operation w.r.t.~the slot order.
\paragraph{Combining all steps:}
As all steps in the algorithms are permutation equivariant wrt $\pi_s$, the overall module is permutation equivariant. On the other hand, Equation~2 is permutation invariant w.r.t.~to $\pi_i$. Therefore, after the first iteration the algorithm becomes permutation invariant w.r.t.~the input order.

\part{Concluding Remarks}


\chapter{Discussion}
\label{cha:conclusion}
In this chapter, we summarize our results and discuss their implications. This chapter is based on~\citet{locatello2020stochastic}, Locatello* and Raj* et al.~\citep{locatello2018matching},~\citet{locatello2017greedy},~\citet{LocKhaGhoRat18}, Locatello* and Dresdner* et al.~\citep{locatello2018boosting},~\citet{locatello2019challenging},~\citet{locatello2020sober},~\citet{locatello2020commentary},~\citet{locatello2019disentangling},~\citet{locatello2019fairness},~\citet{locatello2020weakly}, Locatello* and Kipf* et al.~\citep{locatello2020object}, Sch\"olkopf* and Locatello* et al.~\citep{scholkopf2020towards} and was developed with the authors thereof.

\looseness=-1In this dissertation, we investigated how to incorporate specific structural constraints into the solution of a learning algorithm. From the optimization perspective in Part~\ref{part:opt}, we focused on constraints that can be written as combinations of atoms. Although restrictive, the analysis of this setting resulted in several algorithmic and theoretical contributions.
We unified the convergence analysis of several greedy optimization methods, developing new solvers for SDPs and Boosting Variational Inference along the way. 
In Chapter~\ref{ch:SFW}, we introduced a scalable stochastic FW-type method for solving convex optimization problems with affine constraints and demonstrated empirical superiority of our approach in various numerical experiments. In particular, we consider the case of stochastic optimization of SDPs for which we give the first projection-free algorithm. Our algorithm's convergence rate is asymptotically identical to that of~\citep{mokhtari2020stochastic}, indicating that the additional affine constraint does not impact the asymptotic convergence. The convergence in feasibility gap is also close to deterministic variants~\citep{yurtsever2018conditional}.
In Chapter~\ref{cha:mpcd}, we presented a unified analysis of matching pursuit and coordinate descent. Using the natural connection between the two algorithms, we obtain tight rates for steepest coordinate descent and the first accelerated rate for matching pursuit and steepest coordinate descent. Our affine invariant rates for MP are more elegant than the ones from~\citep{locatello2017unified}. Furthermore, we discussed the relation between the steepest and the random directions by viewing the latter as an approximate version of the former. This allows us to discuss general rates for random pursuits as well. In Chapter~\ref{cha:cone}, we extend this new proof to the case of conic constraints, leading to the first principled and convergent non-negative matching pursuit algorithm.
In Chapter~\ref{cha:boostingVI}, we discussed the theoretical convergence of the boosting variational inference paradigm, delineating explicitly the assumptions that are required for the previously conjectured rates. Further, we have presented algorithmic enhancements allowing us to incorporate black box VI solvers into a general gradient boosting framework based on the Frank-Wolfe algorithm.  This is an important step forward to add boosting VI to the standard toolbox of Bayesian inference.

\looseness=-1From the representation learning perspective described in Part~\ref{part:rep_learn}, we focused on learning factors of variation from data. This problem has been deemed to be a fundamental step in the direction of the emerging field of \textit{Causal Representation Leaerning}~\citep{scholkopf2019causality,scholkopf2020towards}. The goal of the new research program is to learn representations that expose causal structure and support specific causal statements (\eg certain interventional and counterfactual questions). As described in Chapter~\ref{cha:disent_background}, disentanglement can be seen as recovering the noise variables of a structural causal model from high dimensional observations. Our work clearly delineates when certain causal representations can and cannot be learned. In practice, we observed that existing architectural inductive biases are not sufficient to overcome the theoretical limits. We proposed new identifiable settings and presented evidence on the usefulness of these representations.
In Chapter~\ref{cha:unsup_dis}, we gave an impossibility result for the unsupervised learning of disentangled representations. Then, we investigated the performance of six state-of-the-art
disentanglement methods and, in Chapter~\ref{cha:eval_dis}, disentanglement metrics. Overall, we found that (i) A factorizing aggregated posterior (which is sampled) does not seem to necessarily imply that the dimensions in the representation (which is taken to be the mean) are uncorrelated. (ii) Random seeds and hyperparameters seem to matter more than the model but tuning seems to require supervision. (iii) We did not observe that increased disentanglement necessarily implies a decreased sample complexity of learning downstream tasks. 
Our results highlight an overall need for supervision. In theory, inductive biases are crucial to distinguish among equally plausible generative models. In practice, we did not find a reliable strategy to choose hyperparameters without supervision. Recent work~\cite{duan2019heuristic} proposed a stability based heuristic for unsupervised model selection. Further exploring these techniques may help us understand the practical role of inductive biases and implicit supervision. Otherwise, we advocate to consider different settings, for example, when limited explicit~\cite{locatello2019disentangling} or weak supervision~\cite{bouchacourt2017multi,gresele2019incomplete,locatello2020weakly,shu2020weakly} is available.
Our study also highlights the need for a sound, robust, and reproducible experimental setup on a diverse set of data sets.
In our experiments, we observed that the results may be easily misinterpreted if one only looks at a subset of the data sets or the scores. As current research is typically focused on the synthetic data sets of~\cite{higgins2016beta,reed2015deep,lecun2004learning,kim2018disentangling,locatello2019challenging} --- with only a few recent exceptions~\cite{gondal2019transfer} --- we advocate for insights that generalize across data sets rather than individual absolute performance.
In Chapter~\ref{cha:semi_sup}, we investigated the role and benefit of explicit supervision for disentangled representations. We found that both supervised model selection and semi-supervised training and validation with imprecise and partial labels (inherent with human annotation) are viable solutions to  apply these approaches in real-world machine learning systems. Our findings provide practical guidelines for practitioners to develop such systems and, as we hope, will help to advance disentanglement research towards more practical data sets and tasks. 
In Chapter~\ref{cha:fairness}, we observe the first empirical evidence that disentanglement might prove beneficial to learn fair representations, supporting the conjectures of~\cite{kumar2017variational}.  We show that general purpose representations can lead to unfair predictions, even if the sensitive variable and target variable are independent and one only has access to observations that depend on both of them. We extensively discuss the relation between fairness, downstream accuracy, and disentanglement. These findings may serve as motivation to further explore the usefulness of disentanglement for robust and fair classification. They are especially relevant in the context of the General Data Protection Regulation laws in Europe.
In Chapter~\ref{cha:weak}, we proposed a new setting for learning disentangled representations considering pairs of non-i.i.d. observations sharing a sparse but unknown, random subset of factors of variation. We demonstrated that, under certain technical assumptions, the associated disentangled  generative model is identifiable. Importantly, we show that our training metrics correlate with disentanglement and are useful to select models with strong performance on a diverse suite of downstream tasks \emph{without using supervised disentanglement metrics}, relying exclusively on weak supervision. This result is of great importance as the community is becoming increasingly interested in the practical benefits of disentangled representations~\cite{van2019disentangled,locatello2019fairness,creager2019flexibly,chao2019hybrid,iten2020discovering,chartsias2019disentangled,higgins2017darla}. 
In Chapter~\ref{cha:slot_attn}, we discuss a conceptual limitation of disentanglement that we believe is important in the pursuit of learning causal representations. We propose a new architectural component learning a set of abstract high-level variables with a common representational format that respects permutation symmetry. While we did not apply Slot Attention to disentanglement, we successfully showed the generalization to a variable number of objects in the supervised setting. Our approach is significantly simpler than competitive set prediction architectures~\citep{zhang2019deep,zhang2019fspool} and achieve better performance.

\section{Limits of this Dissertation} 
We start from the premise that structure is a useful property and concern ourselves with how to learn and enforce such structure. While we observed some evidence that structure is indeed useful (\eg matrix completion experiments in Chapter~\ref{ch:SFW}, downstream tasks in Chapters~\ref{cha:unsup_dis},~\ref{cha:fairness}, and~\ref{cha:weak}), we did not address this claim systematically on applications in the wild. The work presented in Part~\ref{part:opt} is often very general, and we did not further explore applications (for example, in Chapters~\ref{cha:mpcd} and~\ref{cha:cone}), and our Boosting Black-Box VI approach is still cumbersome to implement properly. 
Our work on disentanglement significantly advanced the state of the field. However, it is unclear what is the practical benefit of chasing performance on the data sets we considered. To counter this, we often compared trends rather than absolute performance. 
While our research agenda is not at odds with the trend of scaling model capacity and data set size, we overall did not personally investigate this direction on real-life benchmarks with clear and immediate industrial applications (although our work on semi-supervised disentanglement, fairness and Slot Attention generated more interest from industry). In subsequent work~\citep{trauble2020independence,dittadi2020transfer}, we successfully scaled the methods described in Chapter~\ref{cha:weak} to a real robotic platform, however the usefulness of disentanglement for real world control tasks is still largely unproven.
With Slot Attention we outperformed other set prediction architectures, but we did not scale it to the level of~\cite{carion2020detr}, which is in turn not state-of-the-art in object detection, where dedicated architectures are still superior.
From the conceptual standpoint, Slot Attention does not know about objects per-se as the segmentation is solely driven by the downstream task. Slot Attention does not distinguish between clustering objects, colors, or simply spatial regions and completely relies on the downstream task to drive the specialization to objects. Further, the positional encoding used in our experiments is absolute and hence our module is not equivariant to translations.

\section{Final Comments, Lessons Learned, and Open Problems}
\paragraph{Inductive Biases and Supervision}
\looseness=-1In our research agenda, we explored various forms of supervision. Motivated by our theoretical impossibility result in Chapter~\ref{cha:unsup_dis}, we explored the role of inductive biases for disentanglement. As supervision seemed to be crucial for model selection, we explored semi- and weakly-supervised approaches. Clearly, unsupervised methods would be more elegant. Still, we would argue that real-world applications do not only come with precise requirements but also with some prior knowledge about the data collection process. In the disentanglement challenge we organized for NeurIPS 2019, the participants did not even have access to the training set, and had to develop their models in a purely unsupervised way. In hindsight, one may argue that this setting is somewhat rare. We believe that it is prudent to incorporate knowledge of the data collection process into the model. For example, several works on weakly-supervised disentanglement were published in 2020 with our~\citep{locatello2020weakly} described in Chapter~\ref{cha:weak}, e.g.~\citep{shu2020weakly,khemakhem2019variational,sorrenson2020disentanglement} to name a few. They all consider slightly different sources of weak supervision, which is, in our view, an asset. The tale of a ``single model to rule them all'' is at odds with the fact that specific inductive biases from the data collection process are likely useful for both performance and data efficiency. Designing models that can flexibly incorporate these biases is an important research direction that brings machine learning closer to causality research, where assumptions (in the form of a graph, or faithfulness and Markov conditions) play a more explicit role in the modeling.

\paragraph{Practical Benefits}
\looseness=-1In our research, we extensively investigated whether higher disentanglement scores lead to usefulness downstream. Surprisingly, our first finding in Chapter~\ref{cha:unsup_dis} was that disentanglement did not seem to be useful for increased sample efficiency for downstream tasks. Subsequently, we explored different settings that more clearly benefit from the imposed structure~\citep{locatello2019fairness,van2019disentangled,trauble2020independence,dittadi2020transfer}. The lesson we learned here is that it is unlikely that a single structural property would be useful for any possible downstream task. Instead, different tasks come with different requirements, which speaks in favor of modular architectures that can be efficiently re-purposed and recombined. One would almost wish for a dictionary of inductive biases, much like the ones described in Part~\ref{part:opt} of this dissertation, that are dynamically combined depending on the task. An interesting candidate approach is using independent causal mechanisms \citep{pearl2009causality, peters2017elements}, with the sparse causal shifts assumption~\citep{scholkopf2020towards}.

\paragraph{Experimental Setup and Diversity of Data Sets}
\looseness=-1Our experimental studies highlight the need for a sound, robust, and reproducible experimental setup on a diverse set of data sets in order to draw valid conclusions. 
We have observed that it is easy to draw spurious conclusions from experimental results if one only considers a subset of methods, metrics, and data sets. Hence,
we argue that it is crucial for future work to perform experiments on a wide variety of data sets to see whether conclusions and insights are generally applicable.
This is particularly important in the setting of disentanglement learning as experiments are largely performed on toy-like data sets. Likewise, concerning the evaluation, it is important for future work to be explicit about which the properties of the learned representation are deemed useful and how they are being evaluated.

\paragraph{Open Problems in Optimization}
There are few open problems from Part~\ref{part:opt}. First, our rates for SHCGM of Chapter~\ref{ch:SFW} are suboptimal. In particular, it should be possible to match the rate with deterministic gradients~\citep{yurtsever2018conditional}. A feasible approach may be using the proof technique we developed in~\citep{negiar2020stochastic} for separable objectives (and constraints). Regardless of their theoretical advantages, widespread adoption of these new SDP solvers will require more work, in particular, impressive applications to unprecedented scales.
Second, one of the main reasons FW fell out of favor in the 80s was its lack of an accelerated rate. While the conditional gradient sliding~\citep{Lan2016} and Catalyst~\citep{NIPS2015_5928} achieve a faster rate, the resulting algorithm is not faster in practice. Our hope was that by connecting Frank-Wolfe with Coordinate Descent through Matching Pursuit and the Non-Negative variant (which is an intermediate case between the FW and MP) we could prove an accelerated rate for FW. This did not work out and remains an open problem. Finally, we hope iterative inference algorithms inspired by those in this dissertation can be used in combination with meta-learning to learn a dictionary of modules and inductive biases that are dynamically recombined across multiple tasks.

\paragraph{Causal Representations}
From Part~\ref{part:rep_learn}, there are several open problems in the direction of causal representation learning. First, it is unclear how to model disentanglement in the multi-object case as a structural causal model. Second, while we can recover exogenous variables with techniques like the ones described in this dissertation, even with correlations~\citep{trauble2020independence,dittadi2020transfer}, we still do not directly have access to the causal variables and do not perform inference on the graph structure. This is problematic because the decoder cannot distinguish between causal graphs in the same Markov equivalence class. Therefore, the traversals of a disentangled model trained on a data set with causally dependent factors of variation~\citep{trauble2020independence,dittadi2020transfer} may not correspond to correct interventional distributions under the causality interpretation described in Chapter~\ref{cha:disent_background}.
Third, we learn disentangled representations in a very controlled setting: all images come from the same causal graph. The applicability of this setting to real-world data sets is unclear. 
Understanding under which conditions causal variables and non-linear causal relations can be learned, which training frameworks allow to best exploit the scalability of machine learning approaches to the problem of learning structure, and providing compelling evidence on the advantages over (non-causal) statistical representations in terms of generalization, re-purposing, and transfer of causal modules on real-world tasks all remain open questions.

\cleardoublepageempty{}


\pdfbookmark[-1]{Lists of Tables}{lot}
\listoftables{}
\cleardoublepageempty{}
\pdfbookmark[-1]{Lists of Figures}{lof}
\listoffigures{}
\cleardoublepageempty{}
\pdfbookmark[-1]{Lists of Algorithms}{loa}
\listofalgorithms{}
\cleardoublepageempty{}

\pdfbookmark[-1]{Bibliography}{book:bibliography}
\printbibliography{}

@article{Abbe2018,
	Author = {Abbe, Emmanuel},
	Date-Added = {2019-05-23 19:49:15 +0000},
	Date-Modified = {2019-05-23 19:50:01 +0000},
	Journal = {Journal of Machine Learning Research},
	Pages = {1--86},
	Title = {Community Detection and Stochastic Block Models: Recent Developments},
	Volume = {18},
	Year = {2018}}

@article{scholkopf2020towards,
	Author = {Sch{\"o}lkopf$^*$, Bernhard and Locatello$^*$, Francesco and Ke$^\dagger$, Nan Rosemary and Bauer$^\dagger$, Stefan and Kalchbrenner, Nal and Goyal, Anirudh and Bengio, Yoshua},
	Journal = {Submitted to Special Issue of the Proceedings of IEEE},
	Title = {Towards Causal Representation Learning},
	Year = {2020}}

@conference{Dunner2016,
	Author = {D\"unner, Celestine and Forte, Simone and Tak\'ac, Martin and Jaggi, Martin},
	Booktitle = {Proceedings of the $33$rd International Conference on Machine Learning},
	Date-Added = {2018-02-07 01:03:32 +0000},
	Date-Modified = {2019-10-25 16:43:05 +0300},
	Title = {Primal--Dual Rates and Certificates},
	Year = {2016}}

@conference{yurtsever2018conditional,
	Author = {Yurtsever, Alp and Fercoq, Oliver and Locatello, Francesco and Cevher, Volkan},
	Booktitle = {Proceedings of the $35$th International Conference on Machine Learning},
	Date-Modified = {2019-10-14 17:23:12 +0300},
	Title = {A Conditional Gradient Framework for Composite Convex Minimization with Applications to Semidefinite Programming},
	Year = {2018}}

@incollection{Cevher2018sfdr,
	Author = {Cevher, Volkan and Vu, Bang Cong and Yurtsever, Alp},
	Booktitle = {Large--Scale and Distributed Optimization},
	Chapter = {7},
	Date-Added = {2019-01-16 14:18:54 +0000},
	Date-Modified = {2019-01-16 14:24:41 +0000},
	Editor = {Giselsson, P. and Rantzer, A.},
	Pages = {149-179},
	Publisher = {Springer International Publishing},
	Title = {Stochastic Forward {D}ouglas-{R}achford Splitting Method for Monotone Inclusions},
	Year = {2018}}

@article{dAspremont2004,
	Author = {d'Aspremont, Alexandre and Ghaoui, Laurent El and Jordan, Michael I. and Lanckriet, Gert R.},
	Date-Added = {2017-10-26 15:34:19 +0000},
	Date-Modified = {2019-10-18 18:07:31 +0300},
	Journal = {SIAM Review},
	Number = {3},
	Pages = {434--448},
	Title = {A Direct Formulation for Sparse {PCA} Using Semidefinite Programming},
	Volume = {49},
	Year = {2007}}

@inproceedings{garber2018fast,
  title={Fast stochastic algorithms for low-rank and nonsmooth matrix problems},
  author={Garber, Dan and Kaplan, Atara},
  booktitle={The 22nd International Conference on Artificial Intelligence and Statistics},
  pages={286--294},
  year={2019}
}

@article{HMT11:Finding-Structure,
	Author = {Halko, Nathan and Martinsson, Per-Gunnar and Tropp, Joel A.},
	Date-Added = {2019-10-18 15:50:56 +0300},
	Date-Modified = {2019-10-18 18:07:57 +0300},
	Journal = {SIAM Review},
	Number = {2},
	Pages = {217--288},
	Title = {Finding structure with randomness: {P}robabilistic algorithms for constructing approximate matrix decompositions},
	Volume = {53},
	Year = {2011},
	Bdsk-Url-1 = {https://doi.org/10.1137/090771806}}

@conference{Hazan2016,
	Author = {Hazan, Elad and Luo, Haipeng},
	Booktitle = {Proceedings of the $33$rd International Conference on Machine Learning},
	Date-Added = {2017-12-30 00:28:22 +0000},
	Date-Modified = {2019-10-25 16:45:50 +0300},
	Title = {Variance-Reduced and Projection-Free Stochastic Optimization},
	Year = {2016}}

@inproceedings{hazan2008sparse,
	Author = {Hazan, Elad},
	Booktitle = {Latin American symposium on theoretical informatics},
	Date-Modified = {2019-10-14 17:25:28 +0300},
	Organization = {Springer},
	Pages = {306--316},
	Title = {Sparse approximate solutions to semidefinite programs},
	Year = {2008}}

@article{Clarkson2010,
	Author = {Clarkson, Kenneth L.},
	Date-Added = {2017-12-30 00:41:25 +0000},
	Date-Modified = {2017-12-30 00:42:16 +0000},
	Journal = {ACM Transactions on Algorithms (TALG)},
	Number = {4},
	Title = {Coresets, Sparse Greedy Approximation, and the {F}rank-{W}olfe Algorithm},
	Volume = {6},
	Year = {2010}}

@inproceedings{Hazan2012,
	Author = {Hazan, Elad and Kale, Satyen},
	Booktitle = {Proceedings of the $29$th International Conference on Machine Learning},
	Date-Added = {2017-10-26 14:40:57 +0000},
	Date-Modified = {2019-10-25 16:45:35 +0300},
	Title = {Projection--free Online Learning},
	Year = {2012}}

@inproceedings{lan2017conditional,
  title={Conditional accelerated lazy stochastic gradient descent},
  author={Lan, Guanghui and Pokutta, Sebastian and Zhou, Yi and Zink, Daniel},
  booktitle={Proceedings of the 34th International Conference on Machine Learning-Volume 70},
  pages={1965--1974},
  year={2017}
}

@conference{yurtsever2019cgal,
	Author = {Yurtsever, Alp and Fercoq, Olivier and Cevher, Volkan},
	Booktitle = {Proceedings of the $36$th International Conference on Machine Learning},
	Date-Added = {2019-10-19 22:41:03 +0300},
	Date-Modified = {2019-10-19 22:41:45 +0300},
	Title = {A Conditional-Gradient-Based Augmented {L}agrangian Framework},
	Year = {2019}}

@conference{Gidel2018,
	Author = {Gidel, Gauthier and Pedregosa, Fabian and Lacoste-Julien, Simon},
	Booktitle = {Proceedings of the $21$st International Conference on Artificial Intelligence and Statistics},
	Date-Added = {2018-06-05 22:55:16 +0000},
	Date-Modified = {2019-10-25 16:44:59 +0300},
	Title = {{F}rank-{Wolfe} Splitting via Augmented {L}agrangian Method},
	Year = {2018}}

@inproceedings{pmlr-v54-goldfarb17a,
	Author = {Goldfarb, Donald and Iyengar, Garud and Zhou, Chaoxu},
	Booktitle = {Proceedings of the $20$th International Conference on Artificial Intelligence and Statistics},
	Date-Modified = {2019-10-20 19:49:33 +0300},
	Title = {Linear Convergence of Stochastic {F}rank {W}olfe Variants},
	Year = {2017}}

@conference{yurtsever2019spiderfw,
	Author = {Yurtsever, Alp and Sra, Suvrit and Cevher, Volkan},
	Booktitle = {Proceedings of the $36$th International Conference on Machine Learning},
	Date-Added = {2019-10-19 21:43:15 +0300},
	Date-Modified = {2019-10-19 21:44:01 +0300},
	Title = {Conditional Gradient Methods via Stochastic Path-Integrated Differential Estimator},
	Year = {2019}}

@article{Lan2014,
	Author = {Guanghui Lan},
	Date-Added = {2018-02-06 17:11:13 +0000},
	Date-Modified = {2019-10-25 16:44:25 +0300},
	Journal = {arXiv:1309.5550v2},
	Title = {The Complexity of Large--Scale Convex Programming under a Linear Optimization Oracle},
	Year = {2014}}

@article{Lan2016,
	Author = {G. Lan and Y. Zhou},
	Date-Added = {2017-10-26 15:38:58 +0000},
	Date-Modified = {2018-04-19 23:32:13 +0000},
	Journal = {SIAM J. Optim.},
	Number = {2},
	Pages = {1379--1409},
	Title = {Conditional gradient sliding for convex optimization},
	Volume = {26},
	Year = {2016}}

@article{Lanckriet2004,
	Author = {Lanckriet, Gert R.G. and Cristianini, Nello and Bartlett, Peter and Ghaoui, Laurent El  and Jordan, Michael I.},
	Date-Added = {2018-04-19 23:30:07 +0000},
	Date-Modified = {2019-10-21 00:26:43 +0300},
	Journal = {J. Mach. Learn. Res.},
	Pages = {27--72},
	Title = {Learning the kernel matrix with semidefinite programming},
	Volume = {5},
	Year = {2004}}

@incollection{locatello2020stochastic,
title = {Stochastic Frank-Wolfe for Composite Convex Minimization},
author = {Locatello, Francesco and Yurtsever, Alp and Fercoq, Olivier and Cevher, Volkan},
booktitle = {Advances in Neural Information Processing Systems},
pages = {14269--14279},
year = {2019},
}

@inproceedings{locatello2017unified,
  title={A Unified Optimization View on Generalized Matching Pursuit and Frank-Wolfe},
  author={Locatello, Francesco and Khanna$^*$, Rajiv and Tschannen$^*$, Michael and Jaggi, Martin},
  booktitle={Artificial Intelligence and Statistics},
  pages={860--868},
  year={2017}
}

@inproceedings{Temlyakov:2014eb,
	Author = {Temlyakov, Vladimir},
	Booktitle = {48th Asilomar Conference on Signals, Systems and Computers},
	Pages = {1331--1335},
	Publisher = {IEEE},
	Title = {{Greedy algorithms in convex optimization on Banach spaces}},
	Year = {2014}}

@article{Temlyakov:2012vg,
	Author = {Temlyakov, Vladimir},
	Date-Added = {2016-10-08 16:12:34 +0000},
	Date-Modified = {2016-10-08 16:13:06 +0000},
	Journal = {Constructive Approximation},
	Number = {2},
	Pages = {269--296},
	Publisher = {Springer},
	Title = {Greedy approximation in convex optimization},
	Volume = {41},
	Year = {2015}}

@article{nguyen2014greedy,
	Author = {Nguyen, Hao and Petrova, Guergana},
	Date-Added = {2016-10-08 15:15:49 +0000},
	Date-Modified = {2016-10-08 15:15:49 +0000},
	Journal = {Calcolo},
	Pages = {1--18},
	Publisher = {Springer},
	Title = {Greedy strategies for convex optimization},
	Year = {2014}}

@article{ShalevShwartz:2010wq,
	Author = {Shalev-Shwartz, Shai and Srebro, Nathan and Zhang, Tong},
	Journal = {SIAM Journal on Optimization},
	Pages = {2807--2832},
	Title = {{Trading Accuracy for Sparsity in Optimization Problems with Sparsity Constraints}},
	Volume = {20},
	Year = {2010}}

@article{Temlyakov:2013wf,
	Author = {Temlyakov, Vladimir},
	Eprint = {1312.1244v1},
	Eprintclass = {stat.ML},
	Eprinttype = {arxiv},
	Journal = {arXiv.org},
	Month = dec,
	Title = {{Chebushev Greedy Algorithm in convex optimization}},
	Year = {2013}}

@inproceedings{jaggi2013revisiting,
  title={Revisiting Frank-Wolfe: Projection-free sparse convex optimization},
  author={Jaggi, Martin},
  booktitle={Proceedings of the 30th international conference on machine learning},
  pages={427--435},
  year={2013}
}

@article{Mallat:1993gu,
	Author = {Mallat, St{\'e}phane and Zhang, Zhifeng},
	Journal = {IEEE Transactions on Signal Processing},
	Number = {12},
	Pages = {3397--3415},
	Title = {{Matching pursuits with time-frequency dictionaries}},
	Volume = {41},
	Year = {1993}}

@article{behr2013mitie,
  title={MITIE: Simultaneous RNA-Seq-based transcript identification and quantification in multiple samples},
  author={Behr, Jonas and Kahles, Andr{\'e} and Zhong, Yi and Sreedharan, Vipin T and Drewe, Philipp and R{\"a}tsch, Gunnar},
  journal={Bioinformatics},
  volume={29},
  number={20},
  pages={2529--2538},
  year={2013},
  publisher={Oxford Univ Press}}

@incollection{kim2012fast,
	Author = {Kim, Jingu and Park, Haesun},
	Booktitle = {High-Performance Scientific Computing},
	Pages = {311--326},
	Publisher = {Springer},
	Title = {Fast nonnegative tensor factorization with an active-set-like method},
	Year = {2012}}

@article{bruckstein2008uniqueness,
	Author = {Bruckstein, Alfred M and Elad, Michael and Zibulevsky, Michael},
	Date-Added = {2017-01-28 14:30:18 +0000},
	Date-Modified = {2017-01-28 14:30:18 +0000},
	Journal = {IEEE Transactions on Information Theory},
	Number = {11},
	Pages = {4813--4820},
	Publisher = {IEEE},
	Title = {On the uniqueness of nonnegative sparse solutions to underdetermined systems of equations},
	Volume = {54},
	Year = {2008}}

@article{Yaghoobi:2015ff,
	Author = {Yaghoobi, Mehrdad and Wu, Di and Davies, Mike E},
	Journal = {IEEE Signal Processing Letters},
	Number = {9},
	Pages = {1229--1233},
	Publisher = {IEEE},
	Title = {Fast non-negative orthogonal matching pursuit},
	Volume = {22},
	Year = {2015}}

@book{lawson1995solving,
	Author = {Lawson, Charles L and Hanson, Richard J},
	Publisher = {SIAM},
	Title = {Solving least squares problems},
	Volume = {15},
	Year = {1995}}

@article{Sha:2002um,
	Author = {Sha, Fei and Saul, Lawrence K. and Lee, Daniel D},
	Journal = {Advances in Neural Information Processing Systems},
	Title = {{Multiplicative updates for nonnegative quadratic programming in support vector machines}},
	Volume = {15},
	Year = {2002}}

@inproceedings{makalic2011logistic,
	Author = {Makalic, Enes and Schmidt, Daniel F},
	Booktitle = {Australasian Joint Conference on Artificial Intelligence},
	Organization = {Springer},
	Pages = {82--91},
	Title = {Logistic regression with the nonnegative garrote},
	Year = {2011}}

@article{berry2007algorithms,
	Author = {Berry, Michael W and Browne, Murray and Langville, Amy N and Pauca, V Paul and Plemmons, Robert J},
	Journal = {Computational statistics \& data analysis},
	Number = {1},
	Pages = {155--173},
	Publisher = {Elsevier},
	Title = {Algorithms and applications for approximate nonnegative matrix factorization},
	Volume = {52},
	Year = {2007}}

@article{Buhlmann:2010fj,
author = {B{\"u}hlmann, Peter and Yu, Bin},
title = {{Boosting}},
journal = {Wiley Interdisciplinary Reviews: Computational Statistics},
year = {2010},
volume = {2},
number = {1},
pages = {69--74},
month = jan
}

@article{stich2013optimization,
  title={Optimization of convex functions with random pursuit},
  author={Stich, Sebastian U. and M\"uller, Christian L. and G{\"a}rtner, Bernd},
  journal={SIAM Journal on Optimization},
  volume={23},
  number={2},
  pages={1284--1309},
  year={2013},
  publisher={SIAM}
}

@InProceedings{lu2018greedy,
  title={Accelerating Greedy Coordinate Descent Methods},
  author={Lu, Haihao and Freund, Robert M. and Mirrokni, Vahab},
  year={2018},
  booktitle = {ICML 2018 - Proceedings of the 35th International Conference on Machine Learning}
}

@inproceedings{locatello2017greedy,
  title={Greedy algorithms for cone constrained optimization with convergence guarantees},
  author={Locatello, Francesco and Tschannen, Michael and R{\"a}tsch, Gunnar and Jaggi, Martin},
  booktitle={Advances in Neural Information Processing Systems},
  pages={773--784},
  year={2017}
}

@book{Nesterov:2004gx,
	Address = {Boston, MA},
	Author = {Nesterov, Yurii},
	Publisher = {Springer US},
	Series = {Applied Optimization},
	Title = {{Introductory Lectures on Convex Optimization}},
	Volume = {87},
	Year = {2004}}

@inproceedings{allen2017linear,
  title={Linear Coupling: An Ultimate Unification of Gradient and Mirror Descent},
  author={Allen-Zhu, Zeyuan and Orecchia, Lorenzo},
  booktitle={8th Innovations in Theoretical Computer Science Conference (ITCS 2017)},
  year={2017},
  organization={Schloss Dagstuhl-Leibniz-Zentrum fuer Informatik}
}

@article{Nesterov:2017,
    author = {Yurii Nesterov and Sebastian U. Stich},
    title = {Efficiency of the Accelerated Coordinate Descent Method on Structured Optimization Problems},
    journal = {SIAM Journal on Optimization},
    volume = {27},
    number = {1},
    pages = {110-123},
    year = {2017},
    eprint = {https://doi.org/10.1137/16M1060182}
}

@inproceedings{Lee13,
	author = {Lee, Yin Tat and Sidford, Aaron},
	title = {Efficient Accelerated Coordinate Descent Methods and Faster Algorithms for Solving Linear Systems},
	booktitle = {FOCS '13 - Proceedings of the 2013 IEEE 54th Annual Symposium on Foundations of Computer Science},
	series = {FOCS '13},
	year = {2013},
	pages = {147--156},
	numpages = {10},
	acmid = {2570585},
	keywords = {convex optimization, coordinate descent, Kaczmarz method, symmetric diagonally dominant matrix}
}

@inproceedings{song2017accelerated,
  title={Accelerated Stochastic Greedy Coordinate Descent by Soft Thresholding Projection onto Simplex},
  author={Song, Chaobing and Cui, Shaobo and Jiang, Yong and Xia, Shu-Tao},
  booktitle={NIPS - Advances in Neural Information Processing Systems},
  pages={4841--4850},
  year={2017}
}

@inproceedings{Nutini:2015vd,
	Author = {Nutini, Julie and Schmidt, Mark and Laradji, Issam and Friedlander, Michael and Koepke, Hoyt},
	Booktitle = {ICML 2015 - Proceedings of the 32th International Conference on Machine Learning},
	Pages = {1632--1641},
	Title = {{Coordinate Descent Converges Faster with the Gauss-Southwell Rule Than Random Selection}},
	Year = {2015}}

@PhdThesis{Stich14,
	author = {Stich, Sebastian U.},
	publisher = {ETH-Zurich},
	year = {2014},
	language = {en},
	DOI = {10.3929/ethz-a-010377352},
	title = {Convex Optimization with Random Pursuit},
	Note = {Nr. 22111.},
	school = {ETH Zurich}
}

@InProceedings{stich2017approximate,
  title={Approximate steepest coordinate descent},
  author={Stich, Sebastian U. and Raj, Anant and Jaggi, Martin},
  journal={ICML 2017},
  year={2017},
  booktitle = 	 {ICML 2017 - Proceedings of the 34th International Conference on Machine Learning},
  pages = 	 {3251--3259},
  volume = 	 {70},
  series = 	 {PMLR}
}

@inproceedings{karimireddy2019efficient,
  title={Efficient Greedy Coordinate Descent for Composite Problems},
  author={Karimireddy, Sai Praneeth and Koloskova, Anastasia and Stich, Sebastian U and Jaggi, Martin},
  booktitle={The 22nd International Conference on Artificial Intelligence and Statistics},
  pages={2887--2896},
  year={2019}
}

@inproceedings{LacosteJulien:2015wj,
  title={On the global linear convergence of Frank-Wolfe optimization variants},
  author={Lacoste-Julien, Simon and Jaggi, Martin},
  booktitle={Advances in neural information processing systems},
  pages={496--504},
  year={2015}
}

@article{pena2015polytope,
  title={Polytope conditioning and linear convergence of the Frank--Wolfe algorithm},
  author={Pena, Javier and Rodriguez, Daniel},
  journal={Mathematics of Operations Research},
  volume={44},
  number={1},
  pages={1--18},
  year={2019},
  publisher={INFORMS}
}

@inproceedings{grubb2011generalized,
  title={Generalized boosting algorithms for convex optimization},
  author={Grubb, Alexander and Bagnell, J Andrew},
  booktitle={Proceedings of the 28th International Conference on International Conference on Machine Learning},
  pages={1209--1216},
  year={2011}
}

@article{pena2016solving,
	Author = {Pena, Javier and Soheili, Negar},
	Journal = {Mathematical Programming},
	Pages = {1--25},
	Publisher = {Springer},
	Title = {Solving conic systems via projection and rescaling},
	Year = {2016}}

@incollection{meir2003introduction,
	Author = {Meir, Ron and R{\"a}tsch, Gunnar},
	Booktitle = {Advanced lectures on machine learning},
	Pages = {118--183},
	Publisher = {Springer},
	Title = {An introduction to boosting and leveraging},
	Year = {2003}}

@inproceedings{ratsch2001convergence,
	Author = {R{\"a}tsch, Gunnar and Mika, Sebastian and Warmuth, Manfred K and others},
	Booktitle = {NIPS},
	Pages = {487--494},
	Title = {On the convergence of leveraging},
	Year = {2001}}

@article{kim2014algorithms,
	Author = {Kim, Jingu and He, Yunlong and Park, Haesun},
	Journal = {Journal of Global Optimization},
	Number = {2},
	Pages = {285--319},
	Publisher = {Springer},
	Title = {Algorithms for nonnegative matrix and tensor factorizations: A unified view based on block coordinate descent framework},
	Volume = {58},
	Year = {2014}}

@article{harchaoui2015conditional,
	Author = {Harchaoui, Zaid and Juditsky, Anatoli and Nemirovski, Arkadi},
	Journal = {Mathematical Programming},
	Number = {1-2},
	Pages = {75--112},
	Publisher = {Springer},
	Title = {Conditional gradient algorithms for norm-regularized smooth convex optimization},
	Volume = {152},
	Year = {2015}}

@inproceedings{shashua2005non,
	Author = {Shashua, Amnon and Hazan, Tamir},
	Booktitle = {Proceedings of the 22nd international conference on Machine learning},
	Organization = {ACM},
	Pages = {792--799},
	Title = {Non-negative tensor factorization with applications to statistics and computer vision},
	Year = {2005}}

@article{esser2013method,
	Author = {Esser, Ernie and Lou, Yifei and Xin, Jack},
	Journal = {SIAM Journal on Imaging Sciences},
	Number = {4},
	Pages = {2010--2046},
	Publisher = {SIAM},
	Title = {A method for finding structured sparse solutions to nonnegative least squares problems with applications},
	Volume = {6},
	Year = {2013}}

@article{gillis2016fast,
  title={A fast gradient method for nonnegative sparse regression with self-dictionary},
  author={Gillis, Nicolas and Luce, Robert},
  journal={IEEE Transactions on Image Processing},
  volume={27},
  number={1},
  pages={24--37},
  year={2017},
  publisher={IEEE}
}

@article{nesterov2012efficiency,
	Author = {Nesterov, Yurii},
	Journal = {SIAM Journal on Optimization},
	Number = {2},
	Pages = {341--362},
	Title = {{Efficiency of Coordinate Descent Methods on Huge-Scale Optimization Problems}},
	Volume = {22},
	Year = {2012}}

@article{frank1956algorithm,
  title={An algorithm for quadratic programming},
  author={Frank, Marguerite and Wolfe, Philip and others},
  journal={Naval research logistics quarterly},
  volume={3},
  number={1-2},
  pages={95--110},
  year={1956},
  publisher={Wiley Subscription Services, Inc., A Wiley Company New York}
}

@article{Holloway:1974ii,
	Author = {Holloway, Charles A},
	Journal = {Mathematical Programming},
	Number = {1},
	Pages = {14--27},
	Title = {{An extension of the frank and Wolfe method of feasible directions}},
	Volume = {6},
	Year = {1974}}

@article{Rao:2015df,
	Author = {Rao, Nikhil and Shah, Parikshit and Wright, Stephen J},
	Journal = {IEEE Transactions on Signal Processing},
	Number = {21},
	Pages = {5798--5811},
	Title = {{Forward - Backward Greedy Algorithms for Atomic Norm Regularization}},
	Volume = {63},
	Year = {2015}}

@incollection{Wolfe:1970wy,
	Author = {Wolfe, Philip},
	Booktitle = {Integer and Nonlinear Programming},
	Editor = {Abadie, J},
	Pages = {1--23},
	Publisher = {North-Holland},
	Title = {{Convergence Theory in Nonlinear Programming}},
	Year = {1970}}

@article{jaggi2011convex,
  title={Convex optimization without projection steps},
  author={Jaggi, Martin},
  journal={arXiv preprint arXiv:1108.1170},
  year={2011}
}

@inproceedings{locatello2018matching,
  title={On Matching Pursuit and Coordinate Descent},
  author={Locatello$^*$, Francesco and Raj$^*$, Anant and Karimireddy, Sai Praneeth Reddy and R{\"a}tsch, Gunnar and Sch{\"o}lkopf, Bernhard and Stich, Sebastian Urban and Jaggi, Martin},
  booktitle={ICML 2018-Proceedings of the 35th International Conference on Machine Learning},
  year={2018}
}

@inproceedings{locatello2018boosting,
  title={Boosting black box variational inference},
  author={Locatello$^*$, Francesco and Dresdner$^*$, Gideon and Khanna, Rajiv and Valera, Isabel and R{\"a}tsch, Gunnar},
  booktitle={Advances in Neural Information Processing Systems},
  pages={3401--3411},
  year={2018}
}

@conference{LocKhaGhoRat18,
  title = {Boosting Variational Inference: an Optimization Perspective},
  author = {Locatello, Francesco and Khanna, Rajiv and Ghosh, Joydeep and R{\"a}tsch, Gunnar},
  booktitle = {Proceedings of the 21st International Conference on Artificial Intelligence and Statistics (AISTATS 2018)},
  volume = {84},
  pages = {464--472},
  series = {Proceedings of Machine Learning Research},
  editors = {Amos Storkey and Fernando Perez-Cruz},
  publisher = {PMLR},
  year = {2018},
}

@article{zhang2003sequential,
  title={Sequential greedy approximation for certain convex optimization problems},
  author={Zhang, Tong},
  journal={IEEE Transactions on Information Theory},
  volume={49},
  number={3},
  pages={682--691},
  year={2003},
  publisher={IEEE}
}

@article{parzen1962estimation,
  author = {Parzen, Emanuel},
  journal = {The Annals of Mathematical Statistics},
  pages = {pp. 1065-1076},
  publisher = {Institute of Mathematical Statistics},
  title = {On Estimation of a Probability Density Function and Mode},
  volume = 33,
  year = 1962
}

@article{Mixon2017,
	Author = {Mixon, Dustin G. and Villar, Soledad and Ward, Rachel},
	Date-Added = {2018-01-16 20:34:35 +0000},
	Date-Modified = {2018-01-16 20:35:59 +0000},
	Journal = {Information and Inference: A Journal of the IMA},
	Number = {4},
	Pages = {389--415},
	Title = {Clustering subgaussian mixtures by semidefinite programming},
	Volume = {6},
	Year = {2017}}

@inproceedings{gresele2019incomplete,
  title={The incomplete rosetta stone problem: Identifiability results for multi-view nonlinear ica},
  author={Gresele*, Luigi and Rubenstein*, Paul K and Mehrjou, Arash and Locatello, Francesco and Sch{\"o}lkopf, Bernhard},
  booktitle={Conference on Uncertainty in Artificial Intelligence (UAI)},
  pages={217--227},
  year={2020},
  organization={PMLR}
}

@inproceedings{guo2017efficient,
	Author = {Guo, Xiawei and Yao, Quanming and Kwok, James T},
	Booktitle = {AAAI Conference on Artificial Intelligence},
	Date-Added = {2017-01-28 18:03:40 +0000},
	Date-Modified = {2017-01-28 18:03:40 +0000},
	Title = {Efficient Sparse Low-Rank Tensor Completion using the {Frank-Wolfe} Algorithm},
	Year = {2017}}

@inproceedings{buhlmann2005boosting,
	Author = {B{\"u}hlmann, Peter and Yu, Bin},
	Organization = {Technical Report 127, Seminar f{\"u}r Statistik ETH Z{\"u}rich},
	Title = {Boosting, model selection, lasso and nonnegative garrote},
	Year = {2005}}

@inproceedings{ID52513,
	Author = {Robert Peharz and Michael Stark and Franz Pernkopf},
	Booktitle = {Proceedings of MLSP},
	Date-Added = {2017-01-28 13:10:19 +0000},
	Date-Modified = {2017-01-28 13:14:35 +0000},
	Editor = {IEEE},
	Month = {08},
	Pages = {83 - 88},
	Title = {Sparse Nonnegative Matrix Factorization using l0-Constraints},
	Year = {2010}}

@article{Tropp:2004gc,
	Author = {Tropp, Joel A},
	Journal = {IEEE Transactions on Information Theory},
	Number = {10},
	Pages = {2231--2242},
	Title = {{Greed is good: algorithmic results for sparse approximation}},
	Volume = {50},
	Year = {2004}}

@article{davenport2010analysis,
	Author = {Davenport, Mark A and Wakin, Michael B},
	Date-Added = {2016-12-23 10:05:50 +0000},
	Date-Modified = {2016-12-23 10:05:50 +0000},
	Journal = {IEEE Transactions on Information Theory},
	Number = {9},
	Pages = {4395--4401},
	Publisher = {IEEE},
	Title = {Analysis of orthogonal matching pursuit using the restricted isometry property},
	Volume = {56},
	Year = {2010}}

@article{chen1989orthogonal,
	Author = {Chen, Sheng and Billings, Stephen A and Luo, Wan},
	Date-Added = {2016-02-04 23:24:50 +0000},
	Date-Modified = {2016-02-04 23:24:50 +0000},
	Journal = {International Journal of control},
	Number = {5},
	Pages = {1873--1896},
	Publisher = {Taylor \& Francis},
	Title = {Orthogonal least squares methods and their application to non-linear system identification},
	Volume = {50},
	Year = {1989}}

@article{tran2016edward,
  title={Edward: A library for probabilistic modeling, inference, and criticism},
  author={Tran, Dustin and Kucukelbir, Alp and Dieng, Adji B and Rudolph, Maja and Liang, Dawen and Blei, David M},
  journal={arXiv preprint arXiv:1610.09787},
  year={2016}
}

@article{Gribonval:2006ch,
	Author = {Gribonval, R{\'e}mi and Vandergheynst, P},
	Journal = {IEEE Transactions on Information Theory},
	Number = {1},
	Pages = {255--261},
	Title = {{On the exponential convergence of matching pursuits in quasi-incoherent dictionaries}},
	Volume = {52},
	Year = {2006}}

@article{luo1992convergence,
	Author = {Luo, Zhi-Quan and Tseng, Paul},
	Journal = {Journal of Optimization Theory and Applications},
	Number = {1},
	Pages = {7--35},
	Publisher = {Springer},
	Title = {On the convergence of the coordinate descent method for convex differentiable minimization},
	Volume = {72},
	Year = {1992}}

@article{harper2016movielens,
	Author = {Harper, F. Maxwell and Konstan, Joseph A.},
	Date-Modified = {2019-10-26 18:01:55 +0300},
	Journal = {ACM Transactions on Interactive Intelligent Systems ({TiiS})},
	Number = {4},
	Pages = {19},
	Title = {The {MovieLens} datasets: {H}istory and context},
	Volume = {5},
	Year = {2016}}

@inproceedings{Yurtsever2016s3cm,
	Author = {Yurtsever, Alp and Vu, Bang~Cong and Cevher, Volkan},
	Booktitle = {Advances in Neural Information Processing Systems 29},
	Date-Added = {2018-09-14 00:57:43 +0000},
	Date-Modified = {2018-09-14 01:00:29 +0000},
	Title = {Stochastic Three-Composite Convex Minimization},
	Year = {2016}}

@inproceedings{gondal2019transfer,
  title={On the transfer of inductive bias from simulation to the real world: a new disentanglement dataset},
  author={Gondal*, Waleed M. and Wuthrich*, Manuel and Miladinovic, Djordje and Locatello, Francesco and Breidt, Martin and Volchkov, Valentin and Akpo, Joel and Bachem, Olivier and Sch{\"o}lkopf, Bernhard and Bauer, Stefan},
  booktitle={Advances in Neural Information Processing Systems},
  pages={15740--15751},
  year={2019}
}

@article{trauble2020independence,
  title={Is Independence all you need? On the Generalization of Representations Learned from Correlated Data},
  author={Tr{\"a}uble, Frederik and Creager, Elliot and Kilbertus, Niki and Goyal, Anirudh and Locatello, Francesco and Sch{\"o}lkopf, Bernhard and Bauer, Stefan},
  journal={arXiv preprint arXiv:2006.07886},
  year={2020}
}

@inproceedings{locatello2020commentary,
	Author = {Locatello, Francesco and Bauer, Stefan and Lucic, Mario and R{\"a}tsch, Gunnar and Gelly, Sylvain and Sch{\"o}lkopf, Bernhard and Bachem, Olivier},
	Booktitle = {Proceedings of the AAAI Conference on Artificial Intelligence, Sister Conference Track},
	Title = {A commentary on the unsupervised learning of disentangled representations},
	Year = {2020}}

@article{locatello2020sober,
  author  = {Francesco Locatello and Stefan Bauer and Mario Lucic and Gunnar R{\"a}tsch and Sylvain Gelly and Bernhard Sch{\"o}lkopf and Olivier Bachem},
  title   = {A Sober Look at the Unsupervised Learning of Disentangled Representations and their Evaluation},
  journal = {Journal of Machine Learning Research},
  year    = {2020},
  volume  = {21},
  number  = {209},
  pages   = {1-62},
  url     = {http://jmlr.org/papers/v21/19-976.html}
}

@INPROCEEDINGS{7472875,
  author={Odor, Gergley and Li, Yen-Huan and Yurtsever, Alp and Hsieh, Ya-Ping and Tran-Dinh, Quoc and Halabi, Marwa El and Cevher, Volkan},
  booktitle={2016 IEEE International Conference on Acoustics, Speech and Signal Processing (ICASSP)}, 
  title={Frank-Wolfe works for non-Lipschitz continuous gradient objectives: Scalable poisson phase retrieval}, 
  year={2016},
  volume={},
  number={},
  pages={6230-6234},}

@article{stark2020scim,
  title={SCIM: Universal Single-Cell Matching with Unpaired Feature Sets},
  author={Stark*, Stefan G and Ficek*, Joanna and Lehmann, Kjong and Bonilla, Ximena and Locatello, Francesco and R{\"a}tsch, Gunnar and Chevrier, Stephane and Singer, Franziska},
  journal={(In press) Bioinformatics},
  doi={10.1093/bioinformatics/btaa843},
  year={2020},
}

@conference{locatello2018clustering,
  title={Clustering Meets Variational Autoencoders
},
  author={Locatello, Francesco and Vincent, Damien and Tolstikhin, Ilya and R{\"a}tsch, Gunnar and Gelly, Sylvain and Sch{\"o}lkopf, Bernhard},
  booktitle = {Workshop at the 6th International Conference on Learning Representations (ICLR)},
  year = {2018},
}

@inproceedings{fortuin2018deep,
  title={Deep self-organization: Interpretable discrete representation learning on time series},
  author={Fortuin, Vincent and H{\"u}ser, Matthias and Locatello, Francesco and Strathmann, Heiko and R{\"a}tsch, Gunnar},
  booktitle={International Conference on Learning Representations},
  year={2019}
}

@inproceedings{wang2014matrixcompletion,
	Author = {Zheng Wang and Ming-jun Lai and Zhaosong Lu and Wei Fan and Hasan Davulcu and Jieping Ye},
	Booktitle = {ICML 2014 - Proceedings of the 31st International Conference on Machine Learning},
	Pages = {91-99},
	Title = {Rank-One Matrix Pursuit for Matrix Completion},
	Year = {2014}}

@inproceedings{yaogreedy,
	Author = {Yao, Quanming and Kwok, James T},
	Booktitle = {IJCAI},
	Title = {Greedy Learning of Generalized Low-Rank Models},
	Year = {2016}}

@article{Yang:2015wy,
	Author = {Yang, Yuning and Mehrkanoon, Siamak and Suykens, Johan A K},
	Eprint = {1503.02216v1},
	Eprintclass = {stat.ML},
	Eprinttype = {arxiv},
	Journal = {arXiv.org},
	Month = mar,
	Title = {{Higher order Matching Pursuit for Low Rank Tensor Learning}},
	Year = {2015}}

@inproceedings{romera2013multilinear,
  title={Multilinear multitask learning},
  author={Romera-Paredes, Bernardino and Aung, Hane and Bianchi-Berthouze, Nadia and Pontil, Massimiliano},
  booktitle={Proceedings of the 30th International Conference on Machine Learning},
  pages={1444--1452},
  year={2013}
}

@article{tibshirani2015general,
  title={A general framework for fast stagewise algorithms},
  author={Tibshirani, Ryan J},
  journal={Journal of Machine Learning Research},
  volume={16},
  pages={2543--2588},
  year={2015}
}

@article{vandenberghe1996semidefinite,
  title={Semidefinite programming},
  author={Vandenberghe, Lieven and Boyd, Stephen},
  journal={SIAM review},
  volume={38},
  number={1},
  pages={49--95},
  year={1996},
  publisher={SIAM}
}

@inproceedings{lacoste2013block,
  title={Block-coordinate Frank-Wolfe optimization for structural SVMs},
  author={Lacoste-Julien, Simon and Jaggi, Martin and Schmidt, Mark and Pletscher, Patrick},
  booktitle={International Conference on Machine Learning},
  pages={53--61},
  year={2013},
  organization={PMLR}
}

@inproceedings{negiar2020stochastic,
  title={Stochastic Frank-Wolfe for Constrained Finite-Sum Minimization},
  author={N{\'e}giar, Geoffrey and Dresdner, Gideon and Tsai, Alicia and Ghaoui, Laurent El and Locatello, Francesco and Freund,  Robert M. and Pedregosa, Fabian},
  booktitle={ICML 2020 - Proceedings of the 37th International Conference on Machine Learning},
  year={2020}
}

@article{freund1999short,
  title={A short introduction to boosting},
  author={Freund, Yoav and Schapire, Robert and Abe, Naoki},
  journal={Journal-Japanese Society For Artificial Intelligence},
  volume={14},
  number={771-780},
  pages={1612},
  year={1999},
  publisher={JAPANESE SOC ARTIFICIAL INTELL}
}

@article{friedman2001greedy,
  title={Greedy function approximation: a gradient boosting machine},
  author={Friedman, Jerome H},
  journal={Annals of statistics},
  pages={1189--1232},
  year={2001},
  publisher={JSTOR}
}

@article{Li:2000vt,
	Author = {Li, Jonathan Q and Barron, Andrew R},
	Journal = {NIPS - Advances in Neural Information Processing Systems 12},
	Title = {{Mixture density estimation}},
	Year = {1999}}

@inproceedings{LacosteJulien:2013uea,
	Author = {Lacoste-Julien, Simon and Jaggi, Martin},
	Booktitle = {NIPS 2013 Workshop on Greedy Algorithms, Frank-Wolfe and Friends},
	Month = dec,
	Title = {{An Affine Invariant Linear Convergence Analysis for Frank-Wolfe Algorithms}},
	Year = {2013}}

@article{burger2003infinite,
  title={Infinite-dimensional optimization and optimal design},
  author={Burger, Martin},
  year={2003},
  publisher={Citeseer}
}

@article{lu2020generalized,
  title={Generalized stochastic Frank--Wolfe algorithm with stochastic “substitute” gradient for structured convex optimization},
  author={Lu, Haihao and Freund, Robert M},
  journal={Mathematical Programming},
  pages={1--33},
  year={2020},
  publisher={Springer}
}

@article{yu2017generalized,
  title={Generalized conditional gradient for sparse estimation},
  author={Yu, Yaoliang and Zhang, Xinhua and Schuurmans, Dale},
  journal={The Journal of Machine Learning Research},
  volume={18},
  number={1},
  pages={5279--5324},
  year={2017},
  publisher={JMLR. org}
}

@article{julienlacoste2016nonconvexfw,
	Author = {Lacoste-Julien, Simon},
	Date-Added = {2019-10-19 22:11:33 +0300},
	Date-Modified = {2019-10-25 16:43:38 +0300},
	Journal = {arXiv:1607.00345},
	Title = {Convergence rate of {F}rank-{W}olfe for non-convex objectives},
	Year = {2016}}

@inproceedings{reddi2016stochastic,
  title={Stochastic frank-wolfe methods for nonconvex optimization},
  author={Reddi, Sashank J and Sra, Suvrit and P{\'o}czos, Barnab{\'a}s and Smola, Alex},
  booktitle={2016 54th Annual Allerton Conference on Communication, Control, and Computing (Allerton)},
  pages={1244--1251},
  year={2016},
  organization={IEEE}
}

@article{Blei:2016vr,
  title={Variational inference: A review for statisticians},
  author={Blei, David M and Kucukelbir, Alp and McAuliffe, Jon D},
  journal={Journal of the American statistical Association},
  volume={112},
  number={518},
  pages={859--877},
  year={2017},
  publisher={Taylor \& Francis}
}

@article{Guo:2016tg,
  title={Boosting Variational Inference},
  author={Guo, Fangjian and Wang, Xiangyu and Fan, Kai and Broderick, Tamara and Dunson, David B},
  journal={arXiv preprint arXiv:1611.05559},
  year={2016}
}

@inproceedings{Miller:2016vt,
  title={Variational boosting: Iteratively refining posterior approximations},
  author={Miller, Andrew C and Foti, Nicholas J and Adams, Ryan P},
  booktitle={International Conference on Machine Learning},
  pages={2420--2429},
  year={2017}
}

@inproceedings{Shen2019,
	Author = {Shen, Zebang and Fang, Cong and Zhao, Peilin and Huang, Junzhou and Qian, Hui},
	Booktitle = {Proceedings of the 22nd International Conference on Artificial Intelligence and Statistics},
	Date-Added = {2019-10-25 13:47:25 +0300},
	Date-Modified = {2019-10-25 16:47:41 +0300},
	Title = {Complexities in projection-free stochastic non-convex minimization},
	Year = {2019}}

@article{Madani2015,
	Author = {Madani, J. L. Ramtin and Sojoudi, Somayeh and Lavaei, Javad},
	Date-Added = {2019-05-23 19:44:58 +0000},
	Date-Modified = {2019-10-21 00:27:06 +0300},
	Journal = {{IEEE} Trans. on Power Syst.},
	Number = {1},
	Pages = {199--211},
	Title = {Convex relaxation for optimal power flow problem: mesh networks},
	Volume = {30},
	Year = {2015}}

@article{mokhtari2020stochastic,
  title={Stochastic conditional gradient methods: From convex minimization to submodular maximization},
  Author={Mokhtari, Aryan and Hassani, Hamed and Karbasi, Amin},
  journal={Journal of Machine Learning Research},
  volume={21},
  number={105},
  pages={1--49},
  year={2020}
}

@article{Nesterov2005,
	Author = {Nesterov, Yurii},
	Date-Added = {2017-12-30 12:02:55 +0000},
	Date-Modified = {2017-12-30 12:04:01 +0000},
	Journal = {Math. Program.},
	Pages = {127--152},
	Title = {Smooth Minimization of Non-smooth Functions},
	Volume = {103},
	Year = {2005}}

@article{Peng2007,
	Author = {Peng, Jiming and Wei, Yu},
	Date-Added = {2017-10-26 15:22:37 +0000},
	Date-Modified = {2017-10-26 15:30:53 +0000},
	Journal = {SIAM J. Optim.},
	Number = {1},
	Pages = {186--205},
	Title = {Approximating {K}--means--type clustering via semidefinite programming},
	Volume = {18},
	Year = {2007}}

@article{TranDinh2017,
	Author = {Tran--Dinh, Quoc and Fercoq, Oliver and Cevher, Volkan},
	Date-Added = {2017-10-26 14:38:40 +0000},
	Date-Modified = {2018-06-06 22:01:24 +0000},
	Journal = {SIAM J. Optim.},
	Number = {1},
	Pages = {96--134},
	Title = {A Smooth Primal-Dual Optimization Framework for Nonsmooth Composite Convex Minimization},
	Volume = {28},
	Year = {2018}}

@article{thomas2018disentangling,
  title={Disentangling the independently controllable factors of variation by interacting with the world},
  author={Thomas, Valentin and Bengio, Emmanuel and Fedus, William and Pondard, Jules and Beaudoin, Philippe and Larochelle, Hugo and Pineau, Joelle and Precup, Doina and Bengio, Yoshua},
  journal={Learning Disentangled Representations Workshop at NeurIPS},
  year={2017}
}

@inproceedings{gvae2019,
  title     = {Group-based Learning of Disentangled Representations with Generalizability for Novel Contents},
  author    = {Hosoya, Haruo},
  booktitle = {International Joint Conference on
               Artificial Intelligence},
  pages     = {2506--2513},
  year      = {2019},
}

@article{shah2018hardness,
  title={The hardness of conditional independence testing and the generalised covariance measure},
  author={Shah, Rajen D and Peters, Jonas},
  journal={arXiv preprint arXiv:1804.07203},
  year={2018}
}

@article{duan2019heuristic,
  title={A Heuristic for Unsupervised Model Selection for Variational Disentangled Representation Learning},
  author={Duan, Sunny and Watters, Nicholas and Matthey, Loic and Burgess, Christopher P and Lerchner, Alexander and Higgins, Irina},
  journal={arXiv preprint arXiv:1905.12614},
  year={2019}
}

@article{locatello2019disentangling,
  title={Disentangling factors of variation using few labels},
  author={Locatello, Francesco and Tschannen, Michael and Bauer, Stefan and R{\"a}tsch, Gunnar and Sch{\"o}lkopf, Bernhard and Bachem, Olivier},
  journal={International Conference on Learning Representations},
  year={2020}
}

@inproceedings{locatello2019challenging,
  title={Challenging common assumptions in the unsupervised learning of disentangled representations},
  author={Locatello, Francesco and Bauer, Stefan and Lucic, Mario and R\"atsch, Gunnar and Gelly, Sylvain and Sch{\"o}lkopf, Bernhard and Bachem, Olivier},
  booktitle={ICML 2019 - International Conference on Machine Learning},
  year={2019}
}

@inproceedings{locatello2019fairness,
  title={On the Fairness of Disentangled Representations},
  author={Locatello, Francesco and Abbati, Gabriele and Rainforth, Tom and Bauer, Stefan and Sch{\"o}lkopf, Bernhard and Bachem, Olivier},
  booktitle={Advances in Neural Information Processing Systems},
  year={2019}
}

@book{peters2017elements,
  title={Elements of causal inference: foundations and learning algorithms},
  author={Peters, Jonas and Janzing, Dominik and Sch{\"o}lkopf, Bernhard},
  year={2017},
  publisher={MIT press}
}

@inproceedings{yingzhen2018disentangled,
  title={Disentangled sequential autoencoder},
  author={Yingzhen, Li and Mandt, Stephan},
  booktitle={International Conference on Machine Learning},
  pages={5656--5665},
  year={2018}
}

@inproceedings{van2019disentangled,
  title={Are Disentangled Representations Helpful for Abstract Visual Reasoning?},
  author={van Steenkiste, Sjoerd and Locatello, Francesco and Schmidhuber, J{\"u}rgen and Bachem, Olivier},
  booktitle={Advances in Neural Information Processing Systems},
  year={2019}
}

@article{berner2019dota,
  title={Dota 2 with large scale deep reinforcement learning},
  author={Berner, Christopher and Brockman, Greg and Chan, Brooke and Cheung, Vicki and Debiak, Przemyslaw and Dennison, Christy and Farhi, David and Fischer, Quirin and Hashme, Shariq and Hesse, Chris and others},
  journal={arXiv preprint arXiv:1912.06680},
  year={2019}
}

@article{vinyals2019grandmaster,
  title={Grandmaster level in StarCraft II using multi-agent reinforcement learning},
  author={Vinyals, Oriol and Babuschkin, Igor and Czarnecki, Wojciech M and Mathieu, Micha{\"e}l and Dudzik, Andrew and Chung, Junyoung and Choi, David H and Powell, Richard and Ewalds, Timo and Georgiev, Petko and others},
  journal={Nature},
  volume={575},
  number={7782},
  pages={350--354},
  year={2019},
  publisher={Nature Publishing Group}
}

@inproceedings{battaglia2016interaction,
  title={Interaction networks for learning about objects, relations and physics},
  author={Battaglia, Peter and Pascanu, Razvan and Lai, Matthew and Rezende, Danilo Jimenez and others},
  booktitle={Advances in neural information processing systems},
  pages={4502--4510},
  year={2016}
}

@article{sanchez2020learning,
  title={Learning to simulate complex physics with graph networks},
  author={Sanchez-Gonzalez, Alvaro and Godwin, Jonathan and Pfaff, Tobias and Ying, Rex and Leskovec, Jure and Battaglia, Peter W},
  journal={arXiv preprint arXiv:2002.09405},
  year={2020}
}

@inproceedings{watters2017visual,
  title={Visual interaction networks: Learning a physics simulator from video},
  author={Watters, Nicholas and Zoran, Daniel and Weber, Theophane and Battaglia, Peter and Pascanu, Razvan and Tacchetti, Andrea},
  booktitle={Advances in neural information processing systems},
  pages={4539--4547},
  year={2017}
}

@inproceedings{van2018relational,
  title={Relational Neural Expectation Maximization: Unsupervised Discovery of Objects and their Interactions},
  author={van Steenkiste, Sjoerd and Chang, Michael and Greff, Klaus and Schmidhuber, J{\"u}rgen},
  booktitle={International Conference on Learning Representations},
  year={2018}
}

@inproceedings{santoro2018relational,
  title={Relational recurrent neural networks},
  author={Santoro, Adam and Faulkner, Ryan and Raposo, David and Rae, Jack and Chrzanowski, Mike and Weber, Theophane and Wierstra, Daan and Vinyals, Oriol and Pascanu, Razvan and Lillicrap, Timothy},
  booktitle={Advances in neural information processing systems},
  pages={7299--7310},
  year={2018}
}

@article{watters2019cobra,
  title={Cobra: Data-efficient model-based rl through unsupervised object discovery and curiosity-driven exploration},
  author={Watters, Nicholas and Matthey, Loic and Bosnjak, Matko and Burgess, Christopher P and Lerchner, Alexander},
  journal={arXiv preprint arXiv:1905.09275},
  year={2019}
}

@article{stanic2019r,
  title={R-SQAIR: relational sequential attend, infer, repeat},
  author={Stani{\'c}, Aleksandar and Schmidhuber, J{\"u}rgen},
  journal={arXiv preprint arXiv:1910.05231},
  year={2019}
}

@article{goyal2019recurrent,
  title={Recurrent independent mechanisms},
  author={Goyal, Anirudh and Lamb, Alex and Hoffmann, Jordan and Sodhani, Shagun and Levine, Sergey and Bengio, Yoshua and Sch{\"o}lkopf, Bernhard},
  journal={arXiv preprint arXiv:1909.10893},
  year={2019}
}

@article{rand1971objective,
  title={Objective criteria for the evaluation of clustering methods},
  author={Rand, William M},
  journal={Journal of the American Statistical association},
  volume={66},
  number={336},
  pages={846--850},
  year={1971},
  publisher={Taylor \& Francis Group}
}

@article{hubert1985comparing,
  title={Comparing partitions},
  author={Hubert, Lawrence and Arabie, Phipps},
  journal={Journal of classification},
  volume={2},
  number={1},
  pages={193--218},
  year={1985},
  publisher={Springer}
}

@book{popper2005logic,
  title={The logic of scientific discovery},
  author={Popper, Karl},
  year={1959},
  publisher={Routledge}
}

@inproceedings{karahan2016image,
  title={How image degradations affect deep cnn-based face recognition?},
  author={Karahan, Samil and Yildirum, Merve Kilinc and Kirtac, Kadir and Rende, Ferhat Sukru and Butun, Gultekin and Ekenel, Hazim Kemal},
  booktitle={2016 International Conference of the Biometrics Special Interest Group (BIOSIG)},
  pages={1--5},
  year={2016},
  organization={IEEE}
}

@article{roy2018effects,
  title={Effects of degradations on deep neural network architectures},
  author={Roy, Prasun and Ghosh, Subhankar and Bhattacharya, Saumik and Pal, Umapada},
  journal={arXiv preprint arXiv:1807.10108},
  year={2018}
}

@article{zhang2019making,
  title={Making convolutional networks shift-invariant again},
  author={Zhang, Richard},
  journal={arXiv preprint arXiv:1904.11486},
  year={2019}
}

@article{azulay2019deep,
  title={Why do deep convolutional networks generalize so poorly to small image transformations?},
  author={Azulay, Aharon and Weiss, Yair},
  journal={Journal of Machine Learning Research},
  volume={20},
  number={184},
  pages={1--25},
  year={2019}
}

@article{gu2019using,
  title={Using videos to evaluate image model robustness},
  author={Gu, Keren and Yang, Brandon and Ngiam, Jiquan and Le, Quoc and Shlens, Jonathan},
  journal={arXiv preprint arXiv:1904.10076},
  year={2019}
}

@article{shankarimage,
  title={Do Image Classifiers Generalize Across Time?},
  author={Shankar, Vaishaal and Dave, Achal and Roelofs, Rebecca and Ramanan, Deva and Recht, Benjamin and Schmidt, Ludwig}
}

@inproceedings{barbu2019objectnet,
  title={ObjectNet: A large-scale bias-controlled dataset for pushing the limits of object recognition models},
  author={Barbu, Andrei and Mayo, David and Alverio, Julian and Luo, William and Wang, Christopher and Gutfreund, Dan and Tenenbaum, Josh and Katz, Boris},
  booktitle={Advances in Neural Information Processing Systems},
  pages={9448--9458},
  year={2019}
}

@article{hendrycks2019benchmarking,
  title={Benchmarking neural network robustness to common corruptions and perturbations},
  author={Hendrycks, Dan and Dietterich, Thomas},
  journal={arXiv preprint arXiv:1903.12261},
  year={2019}
}

@article{michaelis2019benchmarking,
  title={Benchmarking robustness in object detection: Autonomous driving when winter is coming},
  author={Michaelis, Claudio and Mitzkus, Benjamin and Geirhos, Robert and Rusak, Evgenia and Bringmann, Oliver and Ecker, Alexander S and Bethge, Matthias and Brendel, Wieland},
  journal={arXiv preprint arXiv:1907.07484},
  year={2019}
}

@article{engstrom2017exploring,
  title={Exploring the landscape of spatial robustness},
  author={Engstrom, Logan and Tran, Brandon and Tsipras, Dimitris and Schmidt, Ludwig and Madry, Aleksander},
  journal={arXiv preprint arXiv:1712.02779},
  year={2017}
}

@book{gabriel2015world,
  title={Why the world does not exist},
  author={Gabriel, Markus},
  year={2015},
  publisher={John Wiley \& Sons}
}

@article{chisholm1946contrary,
  title={The contrary-to-fact conditional},
  author={Chisholm, Roderick M},
  journal={Mind},
  volume={55},
  number={220},
  pages={289--307},
  year={1946},
  publisher={JSTOR}
}

@article{landman1995missed,
  title={Missed opportunities: Psychological ramifications of counterfactual thought in midlife women},
  author={Landman, Janet and Vandewater, Elizabeth A and Stewart, Abigail J and Malley, Janet E},
  journal={Journal of Adult Development},
  volume={2},
  number={2},
  pages={87--97},
  year={1995},
  publisher={Springer}
}

@article{reichert1999reflective,
  title={Reflective learning: The use of “if only...” statements to improve performance},
  author={Reichert, Laine K and Slate, John R},
  journal={Social Psychology of Education},
  volume={3},
  number={4},
  pages={261--275},
  year={1999},
  publisher={Springer}
}

@article{roese1994functional,
  title={The functional basis of counterfactual thinking.},
  author={Roese, Neal J},
  journal={Journal of personality and Social Psychology},
  volume={66},
  number={5},
  pages={805},
  year={1994},
  publisher={American Psychological Association}
}

@article{epstude2008functional,
  title={The functional theory of counterfactual thinking},
  author={Epstude, Kai and Roese, Neal J},
  journal={Personality and social psychology review},
  volume={12},
  number={2},
  pages={168--192},
  year={2008},
  publisher={Sage Publications Sage CA: Los Angeles, CA}
}

@inproceedings{krizhevsky2012imagenet,
  title={Imagenet classification with deep convolutional neural networks},
  author={Krizhevsky, Alex and Sutskever, Ilya and Hinton, Geoffrey E},
  booktitle={Advances in neural information processing systems},
  pages={1097--1105},
  year={2012}
}

@article{chartsias2019disentangled,
  title={Disentangled representation learning in cardiac image analysis},
  author={Chartsias, Agisilaos and Joyce, Thomas and Papanastasiou, Giorgos and Semple, Scott and Williams, Michelle and Newby, David E and Dharmakumar, Rohan and Tsaftaris, Sotirios A},
  journal={Medical Image Analysis},
  volume={58},
  pages={101535},
  year={2019},
  publisher={Elsevier}
}

@inproceedings{creager2019flexibly,
  title={Flexibly Fair Representation Learning by Disentanglement},
  author={Creager, Elliot and Madras, David and Jacobsen, Joern-Henrik and Weis, Marissa and Swersky, Kevin and Pitassi, Toniann and Zemel, Richard},
  booktitle={International Conference on Machine Learning},
  pages={1436--1445},
  year={2019}
}

@article{chao2019hybrid,
  title={Hybrid deep fault detection and isolation: Combining deep neural networks and system performance models},
  author={Chao, Manuel Arias and Kulkarni, Chetan and Goebel, Kai and Fink, Olga},
  journal={arXiv:1908.01529},
  year={2019}
}

@article{iten2020discovering,
  title={Discovering physical concepts with neural networks},
  author={Iten, Raban and Metger, Tony and Wilming, Henrik and Del Rio, L{\'\i}dia and Renner, Renato},
  journal={Physical Review Letters},
  volume={124},
  number={1},
  pages={010508},
  year={2020},
  publisher={APS}
}

@incollection{NIPS2015_5928,
title = {A Universal Catalyst for First-Order Optimization},
author = {Lin, Hongzhou and Mairal, Julien and Harchaoui, Zaid},
booktitle = {Advances in Neural Information Processing Systems 28},
pages = {3384--3392},
year = {2015},
}

@article{dittadi2020transfer,
  title={On the Transfer of Disentangled Representations in Realistic Settings},
  author={Dittadi*, Andrea and Tr{\"a}uble*, Frederik and Locatello, Francesco and W{\"u}thrich, Manuel and Agrawal, Vaibhav and Winther, Ole and Bauer, Stefan and Sch{\"o}lkopf, Bernhard},
  url={openreview.net/forum?id=8VXvj1QNRl1},
  year={2020}
}

@article{devlin2018bert,
  title={Bert: Pre-training of deep bidirectional transformers for language understanding},
  author={Devlin, Jacob and Chang, Ming-Wei and Lee, Kenton and Toutanova, Kristina},
  journal={arXiv preprint arXiv:1810.04805},
  year={2018}
}

@inproceedings{graves2013speech,
  title={Speech recognition with deep recurrent neural networks},
  author={Graves, Alex and Mohamed, Abdel-rahman and Hinton, Geoffrey},
  booktitle={2013 IEEE international conference on acoustics, speech and signal processing},
  pages={6645--6649},
  year={2013},
  organization={IEEE}
}

@article{brown2020language,
  title={Language models are few-shot learners},
  author={Brown, Tom B and Mann, Benjamin and Ryder, Nick and Subbiah, Melanie and Kaplan, Jared and Dhariwal, Prafulla and Neelakantan, Arvind and Shyam, Pranav and Sastry, Girish and Askell, Amanda and others},
  journal={arXiv preprint arXiv:2005.14165},
  year={2020}
}

@article{rosenblatt1958perceptron,
  title={The perceptron: a probabilistic model for information storage and organization in the brain.},
  author={Rosenblatt, Frank},
  journal={Psychological review},
  volume={65},
  number={6},
  pages={386},
  year={1958},
  publisher={American Psychological Association}
}

@article{hinton1984distributed,
  title={Distributed representations},
  author={Hinton, Geoffrey E},
  year={1984},
  publisher={Carnegie Mellon University}
}

@article{kant1977prolegomena,
  title={Prolegomena to any Future Metaphysics, trans},
  author={Kant, Immanuel},
  journal={Paul Carus and revised by james W. ellington, Indianapolis, Hackett},
  year={1977}
}

@book{hume2000enquiry,
  title={An enquiry concerning human understanding: A critical edition (first published in 1748)},
  author={Hume, David},
  volume={3},
  year={2000},
  publisher={Oxford University Press}
}

@article{Solomonoff1964,
  author =     {Solomonoff,  R.},
  title =      {A formal theory of inductive inference},
  journal =    {Information and Control, Part II},
  volume =     {7},
  number =     {2},
  year =       {1964},
  pages =      {224-254},
}

@article{goodman1947problem,
  title={The problem of counterfactual conditionals},
  author={Goodman, Nelson},
  journal={The Journal of Philosophy},
  volume={44},
  number={5},
  pages={113--128},
  year={1947},
  publisher={JSTOR}
}

@article{chisholm1955law,
  title={Law statements and counterfactual inference},
  author={Chisholm, Roderick M},
  journal={Analysis},
  volume={15},
  number={5},
  pages={97--105},
  year={1955},
  publisher={JSTOR}
}

@inproceedings{veerapaneni2020entity,
  title={Entity abstraction in visual model-based reinforcement learning},
  author={Veerapaneni, Rishi and Co-Reyes, John D and Chang, Michael and Janner, Michael and Finn, Chelsea and Wu, Jiajun and Tenenbaum, Joshua and Levine, Sergey},
  booktitle={Conference on Robot Learning},
  pages={1439--1456},
  year={2020},
  organization={PMLR}
}

@article{zambaldi2018relational,
  title={Relational deep reinforcement learning},
  author={Zambaldi, Vinicius and Raposo, David and Santoro, Adam and Bapst, Victor and Li, Yujia and Babuschkin, Igor and Tuyls, Karl and Reichert, David and Lillicrap, Timothy and Lockhart, Edward and others},
  journal={arXiv preprint arXiv:1806.01830},
  year={2018}
}

@inproceedings{kipf2020contrastive,
  title={Contrastive Learning of Structured World Models},
  author={Kipf, Thomas and van der Pol, Elise and Welling, Max},
  booktitle={International Conference on Learning Representations},
  year={2020}
}

@inproceedings{sabour2017dynamic,
  title={Dynamic routing between capsules},
  author={Sabour, Sara and Frosst, Nicholas and Hinton, Geoffrey E},
  booktitle={Advances in neural information processing systems},
  pages={3856--3866},
  year={2017}
}

@inproceedings{hinton2018matrix,
  title={Matrix capsules with EM routing},
  author={Hinton, Geoffrey E and Sabour, Sara and Frosst, Nicholas},
  booktitle={International conference on learning representations},
  year={2018}
}

@article{kahneman1992reviewing,
  title={The reviewing of object files: Object-specific integration of information},
  author={Kahneman, Daniel and Treisman, Anne and Gibbs, Brian J},
  journal={Cognitive psychology},
  volume={24},
  number={2},
  pages={175--219},
  year={1992},
  publisher={Elsevier}
}

@article{bahdanau2014neural,
  title={Neural machine translation by jointly learning to align and translate},
  author={Bahdanau, Dzmitry and Cho, Kyunghyun and Bengio, Yoshua},
  journal={arXiv preprint arXiv:1409.0473},
  year={2014}
}

@article{luong2015effective,
  title={Effective approaches to attention-based neural machine translation},
  author={Luong, Minh-Thang and Pham, Hieu and Manning, Christopher D},
  journal={arXiv preprint arXiv:1508.04025},
  year={2015}
}

@inproceedings{vaswani2017attention,
  title={Attention is all you need},
  author={Vaswani, Ashish and Shazeer, Noam and Parmar, Niki and Uszkoreit, Jakob and Jones, Llion and Gomez, Aidan N and Kaiser, {\L}ukasz and Polosukhin, Illia},
  booktitle={Advances in neural information processing systems},
  pages={5998--6008},
  year={2017}
}

@inproceedings{he2016deep,
  title={Deep residual learning for image recognition},
  author={He, Kaiming and Zhang, Xiangyu and Ren, Shaoqing and Sun, Jian},
  booktitle={Proceedings of the IEEE conference on computer vision and pattern recognition},
  pages={770--778},
  year={2016}
}

@article{ba2016layer,
  title={Layer normalization},
  author={Ba, Jimmy Lei and Kiros, Jamie Ryan and Hinton, Geoffrey E},
  journal={arXiv preprint arXiv:1607.06450},
  year={2016}
}

@article{cho2014learning,
  title={Learning phrase representations using RNN encoder-decoder for statistical machine translation},
  author={Cho, Kyunghyun and Van Merri{\"e}nboer, Bart and Gulcehre, Caglar and Bahdanau, Dzmitry and Bougares, Fethi and Schwenk, Holger and Bengio, Yoshua},
  journal={arXiv preprint arXiv:1406.1078},
  year={2014}
}

@inproceedings{achlioptas2018learning,
  title={Learning representations and generative models for 3d point clouds},
  author={Achlioptas, Panos and Diamanti, Olga and Mitliagkas, Ioannis and Guibas, Leonidas},
  booktitle={International conference on machine learning},
  pages={40--49},
  year={2018},
  organization={PMLR}
}

@inproceedings{fan2017point,
  title={A point set generation network for 3d object reconstruction from a single image},
  author={Fan, Haoqiang and Su, Hao and Guibas, Leonidas J},
  booktitle={Proceedings of the IEEE conference on computer vision and pattern recognition},
  pages={605--613},
  year={2017}
}

@inproceedings{zhang2019deep,
  title={Deep set prediction networks},
  author={Zhang, Yan and Hare, Jonathon and Prugel-Bennett, Adam},
  booktitle={Advances in Neural Information Processing Systems},
  pages={3212--3222},
  year={2019}
}

@article{de2018molgan,
  title={MolGAN: An implicit generative model for small molecular graphs},
  author={De Cao, Nicola and Kipf, Thomas},
  journal={arXiv preprint arXiv:1805.11973},
  year={2018}
}

@inproceedings{simonovsky2018graphvae,
  title={Graphvae: Towards generation of small graphs using variational autoencoders},
  author={Simonovsky, Martin and Komodakis, Nikos},
  booktitle={International Conference on Artificial Neural Networks},
  pages={412--422},
  year={2018},
  organization={Springer}
}

@inproceedings{zhang2019fspool,
  title={FSPool: Learning Set Representations with Featurewise Sort Pooling},
  author={Zhang, Yan and Hare, Jonathon and Pr{\"u}gel-Bennett, Adam},
  booktitle={International Conference on Learning Representations},
  year={2019}
}

@article{lin2017structured,
  title={A structured self-attentive sentence embedding},
  author={Lin, Zhouhan and Feng, Minwei and Santos, Cicero Nogueira dos and Yu, Mo and Xiang, Bing and Zhou, Bowen and Bengio, Yoshua},
  journal={arXiv preprint arXiv:1703.03130},
  year={2017}
}

@inproceedings{zaheer2017deep,
  title={Deep sets},
  author={Zaheer, Manzil and Kottur, Satwik and Ravanbakhsh, Siamak and Poczos, Barnabas and Salakhutdinov, Russ R and Smola, Alexander J},
  booktitle={Advances in neural information processing systems},
  pages={3391--3401},
  year={2017}
}

@article{rezatofighi2020learn,
  title={Learn to Predict Sets Using Feed-Forward Neural Networks},
  author={Rezatofighi, Hamid and Kaskman, Roman and Motlagh, Farbod T and Shi, Qinfeng and Milan, Anton and Cremers, Daniel and Leal-Taix{\'e}, Laura and Reid, Ian},
  journal={arXiv preprint arXiv:2001.11845},
  year={2020}
}

@article{scarselli2008graph,
  title={The graph neural network model},
  author={Scarselli, Franco and Gori, Marco and Tsoi, Ah Chung and Hagenbuchner, Markus and Monfardini, Gabriele},
  journal={IEEE Transactions on Neural Networks},
  volume={20},
  number={1},
  pages={61--80},
  year={2008},
  publisher={IEEE}
}

@article{li2015gated,
  title={Gated graph sequence neural networks},
  author={Li, Yujia and Tarlow, Daniel and Brockschmidt, Marc and Zemel, Richard},
  journal={arXiv preprint arXiv:1511.05493},
  year={2015}
}

@article{kipf2016semi,
  title={Semi-supervised classification with graph convolutional networks},
  author={Kipf, Thomas N and Welling, Max},
  journal={arXiv preprint arXiv:1609.02907},
  year={2016}
}

@article{battaglia2018relational,
  title={Relational inductive biases, deep learning, and graph networks},
  author={Battaglia, Peter W and Hamrick, Jessica B and Bapst, Victor and Sanchez-Gonzalez, Alvaro and Zambaldi, Vinicius and Malinowski, Mateusz and Tacchetti, Andrea and Raposo, David and Santoro, Adam and Faulkner, Ryan and others},
  journal={arXiv preprint arXiv:1806.01261},
  year={2018}
}

@inproceedings{ying2018hierarchical,
  title={Hierarchical graph representation learning with differentiable pooling},
  author={Ying, Zhitao and You, Jiaxuan and Morris, Christopher and Ren, Xiang and Hamilton, Will and Leskovec, Jure},
  booktitle={Advances in neural information processing systems},
  pages={4800--4810},
  year={2018}
}

@article{carion2020detr,
  title={End-to-End Object Detection with Transformers},
  author={Carion, Nicolas and Massa, Francisco and Synnaeve, Gabriel and Usunier, Nicolas and Kirillov, Alexander and Zagoruyko, Sergey},
  journal={arXiv preprint arXiv:2005.12872},
  year={2020}
}

@inproceedings{lee2018set,
  title={Set transformer: A framework for attention-based permutation-invariant neural networks},
  author={Lee, Juho and Lee, Yoonho and Kim, Jungtaek and Kosiorek, Adam and Choi, Seungjin and Teh, Yee Whye},
  booktitle={International Conference on Machine Learning},
  pages={3744--3753},
  year={2019},
  organization={PMLR}
}

@article{huang2020set,
  title={Set-Structured Latent Representations},
  author={Huang, Qian and He, Horace and Singh, Abhay and Zhang, Yan and Lim, Ser-Nam and Benson, Austin},
  journal={arXiv preprint arXiv:2003.04448},
  year={2020}
}

@article{kosiorek2020conditional,
  title={Conditional Set Generation with Transformers},
  author={Kosiorek, Adam R and Kim, Hyunjik and Rezende, Danilo J},
  journal={arXiv preprint arXiv:2006.16841},
  year={2020}
}

@inproceedings{tsai2020capsules,
  title={Capsules with Inverted Dot-Product Attention Routing},
  author={Tsai, Yao-Hung Hubert and Srivastava, Nitish and Goh, Hanlin and Salakhutdinov, Ruslan},
  booktitle={International Conference on Learning Representations},
  year={2019}
}

@article{jacobs1991adaptive,
  title={Adaptive mixtures of local experts},
  author={Jacobs, Robert A and Jordan, Michael I and Nowlan, Steven J and Hinton, Geoffrey E},
  journal={Neural computation},
  volume={3},
  number={1},
  pages={79--87},
  year={1991},
  publisher={MIT Press}
}

@article{everingham2015pascal,
  title={The pascal visual object classes challenge: A retrospective},
  author={Everingham, Mark and Eslami, SM Ali and Van Gool, Luc and Williams, Christopher KI and Winn, John and Zisserman, Andrew},
  journal={International journal of computer vision},
  volume={111},
  number={1},
  pages={98--136},
  year={2015},
  publisher={Springer}
}

@inproceedings{parascandolo2018learning,
  title={Learning independent causal mechanisms},
  author={Parascandolo, Giambattista and Kilbertus, Niki and Rojas-Carulla, Mateo and Sch{\"o}lkopf, Bernhard},
  booktitle={International Conference on Machine Learning},
  pages={4036--4044},
  year={2018},
  organization={PMLR}
}

@inproceedings{mnih2014recurrent,
  title={Recurrent models of visual attention},
  author={Mnih, Volodymyr and Heess, Nicolas and Graves, Alex and others},
  booktitle={Advances in neural information processing systems},
  pages={2204--2212},
  year={2014}
}

@inproceedings{gregor2015draw,
  title={DRAW: A Recurrent Neural Network For Image Generation},
  author={Gregor, Karol and Danihelka, Ivo and Graves, Alex and Rezende, Danilo and Wierstra, Daan},
  booktitle={International Conference on Machine Learning},
  pages={1462--1471},
  year={2015}
}

@inproceedings{eslami2016attend,
  title={Attend, infer, repeat: Fast scene understanding with generative models},
  author={Eslami, SM Ali and Heess, Nicolas and Weber, Theophane and Tassa, Yuval and Szepesvari, David and Hinton, Geoffrey E and others},
  booktitle={Advances in Neural Information Processing Systems},
  pages={3225--3233},
  year={2016}
}

@inproceedings{ren2017end,
  title={End-to-end instance segmentation with recurrent attention},
  author={Ren, Mengye and Zemel, Richard S},
  booktitle={Proceedings of the IEEE conference on computer vision and pattern recognition},
  pages={6656--6664},
  year={2017}
}

@inproceedings{kosiorek2018sequential,
  title={Sequential attend, infer, repeat: Generative modelling of moving objects},
  author={Kosiorek, Adam and Kim, Hyunjik and Teh, Yee Whye and Posner, Ingmar},
  booktitle={Advances in Neural Information Processing Systems},
  pages={8606--8616},
  year={2018}
}

@inproceedings{stewart2016end,
  title={End-to-end people detection in crowded scenes},
  author={Stewart, Russell and Andriluka, Mykhaylo and Ng, Andrew Y},
  booktitle={Proceedings of the IEEE conference on computer vision and pattern recognition},
  pages={2325--2333},
  year={2016}
}

@inproceedings{romera2016recurrent,
  title={Recurrent instance segmentation},
  author={Romera-Paredes, Bernardino and Torr, Philip Hilaire Sean},
  booktitle={European conference on computer vision},
  pages={312--329},
  year={2016},
  organization={Springer}
}

@techreport{bauckhage2015lecture,
  title={Lecture Notes on Data Science: Soft k-Means Clustering},
  author={Bauckhage, Christian}
}

@article{kuhn1955hungarian,
  title={The Hungarian method for the assignment problem},
  author={Kuhn, Harold W},
  journal={Naval research logistics quarterly},
  volume={2},
  number={1-2},
  pages={83--97},
  year={1955},
  publisher={Wiley Online Library}
}

@article{tschannen2018recent,
  title={Recent Advances in Autoencoder-Based Representation Learning},
  author={Tschannen, Michael and Bachem, Olivier and Lucic, Mario},
  journal={Third workshop on Bayesian Deep Learning (NeurIPS)},
  year={2018}
}

@inproceedings{hyvarinen2018nonlinear,
  title={Nonlinear ICA Using Auxiliary Variables and Generalized Contrastive Learning},
  author={Hyvarinen, Aapo and Sasaki, Hiroaki and Turner, Richard E},
booktitle={International Conference on Artificial Intelligence and Statistics},
year={2019}
}

@inproceedings{rolinek2018variational,
  title = {Variational Autoencoders Recover PCA Directions (by Accident)},
  author = {Rolinek, Michal and Zietlow, Dominik and Martius, Georg},
  booktitle = {Proceedings IEEE Conf. on Computer Vision and Pattern Recognition},
  year = {2019},
}

@article{hyvarinen1999nonlinear,
  title={Nonlinear independent component analysis: Existence and uniqueness results},
  author={Hyv{\"a}rinen, Aapo and Pajunen, Petteri},
  journal={Neural Networks},
  year={1999},
}

@inproceedings{hyvarinen2016unsupervised,
  title={Unsupervised feature extraction by time-contrastive learning and nonlinear ICA},
  author={Hyvarinen, Aapo and Morioka, Hiroshi},
  booktitle={Advances in Neural Information Processing Systems},
  year={2016}
}

@inproceedings{steenbrugge2018improving,
  title={Improving Generalization for Abstract Reasoning Tasks Using Disentangled Feature Representations},
  author={Steenbrugge, Xander and Leroux, Sam and Verbelen, Tim and Dhoedt, Bart},
  booktitle={Workshop on Relational Representation Learning at NeurIPS},  year={2018}
}

@inproceedings{higgins2018scan,
  title={SCAN: Learning Hierarchical Compositional Visual Concepts},
  author={Higgins, Irina and Sonnerat, Nicolas and Matthey, Loic and Pal, Arka and Burgess, Christopher P and Bo{\v{s}}njak, Matko and Shanahan, Murray and Botvinick, Matthew and Hassabis, Demis and Lerchner, Alexander},
  booktitle={International Conference on Learning Representations},
  year={2018}
}

@inproceedings{higgins2017darla,
  title={DARLA: Improving Zero-Shot Transfer in Reinforcement Learning},
  author={Higgins, Irina and Pal, Arka and Rusu, Andrei and Matthey, Loic and Burgess, Christopher and Pritzel, Alexander and Botvinick, Matthew and Blundell, Charles and Lerchner, Alexander},
  booktitle={International Conference on Machine Learning},
  year={2017}
}

@inproceedings{nair2018visual,
  title={Visual reinforcement learning with imagined goals},
  author={Nair, Ashvin V and Pong, Vitchyr and Dalal, Murtaza and Bahl, Shikhar and Lin, Steven and Levine, Sergey},
  booktitle={Advances in Neural Information Processing Systems},
  year={2018}
}

@inproceedings{laversanne2018curiosity,
  title={Curiosity Driven Exploration of Learned Disentangled Goal Spaces},
  author={Laversanne-Finot, Adrien and Pere, Alexandre and Oudeyer, Pierre-Yves},
  booktitle={Conference on Robot Learning},
  year={2018}
}

@inproceedings{suter2018interventional,
  title={Interventional Robustness of Deep Latent Variable Models},
  author={Suter, Raphael and Miladinovi{\'c}, Djordje and Bauer, Stefan and Sch{\"o}lkopf, Bernhard},
  booktitle={International Conference on Machine Learning},
  year={2019}
}

@inproceedings{bouchacourt2017multi,
  title={Multi-level variational autoencoder: Learning disentangled representations from grouped observations},
  author={Bouchacourt, Diane and Tomioka, Ryota and Nowozin, Sebastian},
  booktitle={AAAI Conference on Artificial Intelligence},
  year={2018}
}

@article{burgess2018understanding,
  title={Understanding disentangling in beta-{VAE}},
  author={Burgess, Christopher P and Higgins, Irina and Pal, Arka and Matthey, Loic and Watters, Nick and Desjardins, Guillaume and Lerchner, Alexander},
  journal={arXiv preprint arXiv:1804.03599},
  year={2018}
}

@inproceedings{reed2014learning,
  title={Learning to disentangle factors of variation with manifold interaction},
  author={Reed, Scott and Sohn, Kihyuk and Zhang, Yuting and Lee, Honglak},
  booktitle={International Conference on Machine Learning},
  year={2014}
}

@inproceedings{scholkopf2012causal,
  title={On causal and anticausal learning},
  author={Sch{\"o}lkopf, Bernhard and Janzing, Dominik and Peters, Jonas and Sgouritsa, Eleni and Zhang, Kun and Mooij, Joris},
  booktitle={International Conference on Machine Learning},
  year={2012}
}

@inproceedings{lenc2015understanding,
  title={Understanding image representations by measuring their equivariance and equivalence},
  author={Lenc, Karel and Vedaldi, Andrea},
  booktitle={IEEE Conference on Computer Vision and Pattern Recognition},
  year={2015}
}

@inproceedings{goodfellow2009measuring,
  title={Measuring invariances in deep networks},
  author={Goodfellow, Ian and Lee, Honglak and Le, Quoc V and Saxe, Andrew and Ng, Andrew Y},
  booktitle={Advances in Neural Information Processing Systems},
  year={2009}
}

@inproceedings{chen2016infogan,
  title={Infogan: Interpretable representation learning by information maximizing generative adversarial nets},
  author={Chen, Xi and Duan, Yan and Houthooft, Rein and Schulman, John and Sutskever, Ilya and Abbeel, Pieter},
  booktitle={Advances in Neural Information Processing Systems},
  year={2016}
}

@inproceedings{zhu2014multi,
  title={Multi-view perceptron: a deep model for learning face identity and view representations},
  author={Zhu, Zhenyao and Luo, Ping and Wang, Xiaogang and Tang, Xiaoou},
  booktitle={Advances in Neural Information Processing Systems},
  year={2014}
}

@inproceedings{li2018disentangled,
  title={Disentangled sequential autoencoder},
  author={Yingzhen, Li and Mandt, Stephan},
  booktitle={International Conference on Machine Learning},
  year={2018}
}

@inproceedings{hsu2017unsupervised,
  title={Unsupervised learning of disentangled and interpretable representations from sequential data},
  author={Hsu, Wei-Ning and Zhang, Yu and Glass, James},
  booktitle={Advances in Neural Information Processing Systems},
  year={2017}
}

@inproceedings{denton2017unsupervised,
  title={Unsupervised learning of disentangled representations from video},
  author={Denton, Emily L and Birodkar, Vighnesh},
  booktitle={Advances in Neural Information Processing Systems},
  year={2017}
}

@inproceedings{fraccaro2017disentangled,
  title={A disentangled recognition and nonlinear dynamics model for unsupervised learning},
  author={Fraccaro, Marco and Kamronn, Simon and Paquet, Ulrich and Winther, Ole},
  booktitle={Advances in Neural Information Processing Systems},
  year={2017}
}

@article{whitney2016understanding,
  title={Understanding visual concepts with continuation learning},
  author={Whitney, William F and Chang, Michael and Kulkarni, Tejas and Tenenbaum, Joshua B},
  journal={arXiv preprint arXiv:1602.06822},
  year={2016}
}

@inproceedings{goroshin2015learning,
  title={Learning to linearize under uncertainty},
  author={Goroshin, Ross and Mathieu, Michael F and LeCun, Yann},
  booktitle={Advances in Neural Information Processing Systems},
  year={2015}
}

@inproceedings{cohen2014learning,
  title={Learning the irreducible representations of commutative lie groups},
  author={Cohen, Taco and Welling, Max},
  booktitle={International Conference on Machine Learning},
  year={2014}
}

@inproceedings{hinton2011transforming,
  title={Transforming auto-encoders},
  author={Hinton, Geoffrey E and Krizhevsky, Alex and Wang, Sida D},
  booktitle={International Conference on Artificial Neural Networks},
  year={2011}
}

@inproceedings{narayanaswamy2017learning,
  title={Learning disentangled representations with semi-supervised deep generative models},
  author={Narayanaswamy, Siddharth and Paige, T Brooks and Van de Meent, Jan-Willem and Desmaison, Alban and Goodman, Noah and Kohli, Pushmeet and Wood, Frank and Torr, Philip},
  booktitle={Advances in Neural Information Processing Systems},
  year={2017}
}

@inproceedings{mathieu2016disentangling,
  title={Disentangling factors of variation in deep representation using adversarial training},
  author={Mathieu, Michael F and Zhao, Junbo J and Ramesh, Aditya and Sprechmann, Pablo and LeCun, Yann},
  booktitle={Advances in Neural Information Processing Systems},
  year={2016}
}

@article{cheung2014discovering,
  title={Discovering hidden factors of variation in deep networks},
  author={Cheung, Brian and Livezey, Jesse A and Bansal, Arjun K and Olshausen, Bruno A},
  journal={arXiv preprint arXiv:1412.6583},
  year={2014}
}

@inproceedings{kulkarni2015deep,
  title={Deep convolutional inverse graphics network},
  author={Kulkarni, Tejas D and Whitney, William F and Kohli, Pushmeet and Tenenbaum, Josh},
  booktitle={Advances in Neural Information Processing Systems},
  year={2015}
}

@inproceedings{lecun2004learning,
  title={Learning methods for generic object recognition with invariance to pose and lighting},
  author={LeCun, Yann and Huang, Fu Jie and Bottou, Leon},
  booktitle={IEEE Conference on Computer Vision and Pattern Recognition},
  year={2004},
}

@inproceedings{reed2015deep,
  title={Deep visual analogy-making},
  author={Reed, Scott and Zhang, Yi and Zhang, Yuting and Lee, Honglak},
  booktitle={Advances in Neural Information Processing Systems},
  year={2015}
}

@inproceedings{yang2015weakly,
  title={Weakly-supervised disentangling with recurrent transformations for 3{D} view synthesis},
  author={Yang, Jimei and Reed, Scott E and Yang, Ming-Hsuan and Lee, Honglak},
  booktitle={Advances in Neural Information Processing Systems},
  year={2015}
}

@article{desjardins2012disentangling,
  title={Disentangling factors of variation via generative entangling},
  author={Desjardins, Guillaume and Courville, Aaron and Bengio, Yoshua},
  journal={arXiv preprint arXiv:1210.5474},
  year={2012}
}

@article{cohen2014transformation,
  title={Transformation properties of learned visual representations},
  author={Cohen, Taco S and Welling, Max},
  journal={arXiv preprint arXiv:1412.7659},
  year={2014}
}

@inproceedings{jutten2003advances,
  title={Advances in nonlinear blind source separation},
  author={Jutten, Christian and Karhunen, Juha},
  booktitle={International Symposium on Independent Component Analysis and Blind Signal Separation},
  pages={245--256},
  year={2003}
}

@article{lake2017building,
  title={Building machines that learn and think like people},
  author={Lake, Brenden M and Ullman, Tomer D and Tenenbaum, Joshua B and Gershman, Samuel J},
  journal={Behavioral and Brain Sciences},
  year={2017},
  volume={40},
  publisher={Cambridge University Press}
}

@inproceedings{chen2018isolating,
  title={Isolating Sources of Disentanglement in Variational Autoencoders},
  author={Chen, Tian Qi and Li, Xuechen and Grosse, Roger and Duvenaud, David},
  booktitle={Advances in Neural Information Processing Systems},
  year={2018}
}

@article{arcones1992bootstrap,
  title={On the bootstrap of U and V statistics},
  author={Arcones, Miguel A and Gine, Evarist},
  journal={The Annals of Statistics},
  pages={655--674},
  year={1992},
  publisher={JSTOR}
}

@inproceedings{ridgeway2018learning,
  title={Learning Deep Disentangled Embeddings with the F-Statistic Loss},
  author={Ridgeway, Karl and Mozer, Michael C},
  booktitle={Advances in Neural Information Processing Systems},
  year={2018}
}

@article{sugiyama2012density,
  title={Density-ratio matching under the Bregman divergence: a unified framework of density-ratio estimation},
  author={Sugiyama, Masashi and Suzuki, Taiji and Kanamori, Takafumi},
  journal={Annals of the Institute of Statistical Mathematics},
  volume={64},
  number={5},
  pages={1009--1044},
  year={2012},
  publisher={Springer}
}

@article{bach2002kernel,
  title={Kernel independent component analysis},
  author={Bach, Francis and Jordan, Michael},
  journal={Journal of Machine Learning Research},
  pages={1--48},
  volume={3},
  number={7},
  year={2002}
}

@article{karaletsos2015bayesian,
  title={Bayesian representation learning with oracle constraints},
  author={Karaletsos, Theofanis and Belongie, Serge and R{\"a}tsch, Gunnar},
  journal={arXiv preprint arXiv:1506.05011},
  year={2015}
}

@article{comon1994independent,
  title={Independent component analysis, a new concept?},
  author={Comon, Pierre},
  journal={Signal Processing},
  pages={287--314},
  volume={36},
  number={3},
  year={1994},
}

@article{bengio2007scaling,
  title={Scaling learning algorithms towards {AI}},
  author={Bengio, Yoshua and LeCun, Yann and others},
  journal={Large-scale Kernel Machines},
  pages={1--41},
  volume={34},
  number={5},
  year={2007}
}

@book{pearl2009causality,
  title={Causality},
  author={Pearl, Judea},
  year={2009},
  publisher={Cambridge University Press}
}

@article{lecun2015deep,
  title={Deep learning},
  author={LeCun, Yann and Bengio, Yoshua and Hinton, Geoffrey},
  journal={Nature},
  volume={521},
  number={7553},
  pages={436},
  year={2015},
  publisher={Nature Publishing Group}
}

@misc{munch_1893, 
title={The Scream}, 
author={Munch, Edvard}, 
year={1893}
}

@article{schmidhuber1992learning,
  title={Learning factorial codes by predictability minimization},
  author={Schmidhuber, J{\"u}rgen},
  journal={Neural Computation},
  volume={4},
  number={6},
  pages={863--879},
  year={1992},
  publisher={MIT Press}
}

@inproceedings{eastwood2018framework,
  title={A framework for the quantitative evaluation of disentangled representations},
  author={Eastwood, Cian and Williams, Christopher KI},
  booktitle={International Conference on Learning Representations},
  year={2018}
}

@article{nguyen2010estimating,
  title={Estimating divergence functionals and the likelihood ratio by convex risk minimization},
  author={Nguyen, XuanLong and Wainwright, Martin J and Jordan, Michael I},
  journal={IEEE Transactions on Information Theory},
  volume={56},
  number={11},
  pages={5847--5861},
  year={2010},
  publisher={IEEE}
}

@article{watanabe1960information,
  title={Information theoretical analysis of multivariate correlation},
  author={Watanabe, Satosi},
  journal={IBM Journal of research and development},
  volume={4},
  number={1},
  pages={66--82},
  year={1960},
  publisher={IBM}
}

@inproceedings{kim2018disentangling,
  title={Disentangling by factorising},
  author={Kim, Hyunjik and Mnih, Andriy},
  booktitle={International Conference on Machine Learning},
  year={2018}
}

@article{higgins2018towards,
  title={Towards a Definition of Disentangled Representations},
  author={Higgins, Irina and Amos, David and Pfau, David and Racaniere, Sebastien and Matthey, Loic and Rezende, Danilo and Lerchner, Alexander},
  journal={arXiv preprint arXiv:1812.02230},
  year={2018}
}

@inproceedings{higgins2016beta,
  title={beta-{VAE}: Learning basic visual concepts with a constrained variational framework},
  author={Higgins, Irina and Matthey, Loic and Pal, Arka and Burgess, Christopher and Glorot, Xavier and Botvinick, Matthew and Mohamed, Shakir and Lerchner, Alexander},
  booktitle={International Conference on Learning Representations},
  year={2017}
}

@inproceedings{kumar2017variational,
  title={Variational inference of disentangled latent concepts from unlabeled observations},
  author={Kumar, Abhishek and Sattigeri, Prasanna and Balakrishnan, Avinash},
  booktitle={International Conference on Learning Representations},
  year={2018}
}

@article{bengio2013representation,
  title={Representation learning: A review and new perspectives},
  author={Bengio, Yoshua and Courville, Aaron and Vincent, Pascal},
  journal={IEEE Transactions on Pattern Analysis and Machine Intelligence},
  pages={1798--1828},
  volume={35},
  number={8},
  year={2013}
}

@inproceedings{kingma2014semi,
  title={Semi-supervised learning with deep generative models},
  author={Kingma, Diederik P and Mohamed, Shakir and Rezende, Danilo Jimenez and Welling, Max},
  booktitle={Advances in Neural Information Processing Systems},
  year={2014}
}

@inproceedings{kingma2013auto,
  title={Auto-encoding variational {Bayes}},
  author={Kingma, Diederik P and Welling, Max},
  booktitle={International Conference on Learning Representations},
  year={2014}
}

@inproceedings{deng2017factorized,
  title={Factorized variational autoencoders for modeling audience reactions to movies},
  author={Deng, Zhiwei and Navarathna, Rajitha and Carr, Peter and Mandt, Stephan and Yue, Yisong and Matthews, Iain and Mori, Greg},
  booktitle={IEEE Conference on Computer Vision and Pattern Recognition},
  year={2017}
}

@inproceedings{hsieh2018learning,
  title={Learning to decompose and disentangle representations for video prediction},
  author={Hsieh, Jun-Ting and Liu, Bingbin and Huang, De-An and Fei-Fei, Li F and Niebles, Juan Carlos},
  booktitle={Advances in Neural Information Processing Systems},
  year={2018}
}

@inproceedings{locatello2020weakly,
  title={Weakly-Supervised Disentanglement Without Compromises},
  author={Locatello, Francesco and Poole, Ben and R{\"a}tsch, Gunnar and Sch{\"o}lkopf, Bernhard and Bachem, Olivier and Tschannen, Michael},
  booktitle={ICML 2020 - Proceedings of the 37th International Conference on Machine Learning},
  pages={4114--4124},
  year={2020}
}

@inproceedings{locatello2020object,
  title={Object-Centric Learning with Slot Attention},
  author={Locatello$^*$, Francesco and Weissenborn, Dirk and Unterthiner, Thomas and Mahendran, Aravindh and Heigold, Georg and Uszkoreit, Jakob and Dosovitskiy, Alexey and Kipf$^*$, Thomas},
  booktitle={(To Appear) Advances in Neural Information Processing Systems},
  year={2020}
}

@book{PetJanSch17,
  title = {Elements of Causal Inference - Foundations and Learning Algorithms},
  author = {Peters, Jonas and Janzing, Dominik and Sch{\"o}lkopf, Bernhard},
  series = {Adaptive Computation and Machine Learning Series},
  publisher = {MIT Press},
  year = {2017}
}

@Book{        SpiGlySch93,
  author    = {P. Spirtes and C. Glymour and R. Scheines},
  title     = {Causation, prediction, and search},
  publisher = {MIT Press},
  year      = {2000}
}

@inproceedings{calders2009building,
  title={Building classifiers with independency constraints},
  author={Calders, Toon and Kamiran, Faisal and Pechenizkiy, Mykola},
  booktitle={2009 IEEE International Conference on Data Mining Workshops},
  pages={13--18},
  year={2009},
  organization={IEEE}
}

@article{zliobaite2015relation,
  title={On the relation between accuracy and fairness in binary classification},
  author={Zliobaite, Indre},
  journal={2nd workshop on Fairness, Accountability, and Transparency in Machine Learning},
  year={2015}
}

@article{bengio2017consciousness,
  title={The consciousness prior},
  author={Bengio, Yoshua},
  journal={arXiv:1709.08568},
  year={2017}
}

@article{dayan1993improving,
  title={Improving generalization for temporal difference learning: The successor representation},
  author={Dayan, Peter},
  journal={Neural Computation},
  volume={5},
  number={4},
  pages={613--624},
  year={1993},
  publisher={MIT Press}
}

@inproceedings{storck1995reinforcement,
  title={Reinforcement driven information acquisition in non-deterministic environments},
  author={Storck, Jan and Hochreiter, Sepp and Schmidhuber, J{\"u}rgen},
  booktitle={International Conference on Artificial Neural Networks},
  pages={159--164},
  year={1995}
}

@article{hochreiter1999feature,
  title={Feature extraction through LOCOCODE},
  author={Hochreiter, Sepp and Schmidhuber, J{\"u}rgen},
  journal={Neural Computation},
  volume={11},
  number={3},
  pages={679--714},
  year={1999},
  publisher={MIT Press}
}

@article{foldiak1991learning,
  title={Learning invariance from transformation sequences},
  author={F{\"o}ldi{\'a}k, Peter},
  journal={Neural Computation},
  volume={3},
  number={2},
  pages={194--200},
  year={1991},
  publisher={MIT Press}
}

@inproceedings{schmidt2007learning,
  title={Learning graphical model structure using L1-regularization paths},
  author={Schmidt, Mark and Niculescu-Mizil, Alexandru and Murphy, Kevin and others},
  booktitle={AAAI},
  volume={7},
  pages={1278--1283},
  year={2007}
}

@article{scholkopf2019causality,
  title={Causality for machine learning},
  author={Sch{\"o}lkopf, Bernhard},
  journal={arXiv preprint arXiv:1911.10500},
  year={2019}
}

@article{scikitlearn,
 title={Scikit-learn: Machine Learning in {P}ython},
 author={Pedregosa, F. and Varoquaux, G. and Gramfort, A. and Michel, V.
         and Thirion, B. and Grisel, O. and Blondel, M. and Prettenhofer, P.
         and Weiss, R. and Dubourg, V. and Vanderplas, J. and Passos, A. and
         Cournapeau, D. and Brucher, M. and Perrot, M. and Duchesnay, E.},
 journal={Journal of Machine Learning Research},
 volume={12},
 pages={2825--2830},
 year={2011}
}

@article{bengio2019meta,
  title={A Meta-Transfer Objective for Learning to Disentangle Causal Mechanisms},
  author={Bengio, Yoshua and Deleu, Tristan and Rahaman, Nasim and Ke, Rosemary and Lachapelle, S{\'e}bastien and Bilaniuk, Olexa and Goyal, Anirudh and Pal, Christopher},
  journal={International Conference on Learning Representations},
  year={2020}
}

@article{ketchen1996application,
  title={The application of cluster analysis in strategic management research: an analysis and critique},
  author={Ketchen, David J and Shook, Christopher L},
  journal={Strategic Management Journal},
  volume={17},
  number={6},
  pages={441--458},
  year={1996},
  publisher={Wiley Online Library}
}

@inproceedings{ben2010impossibility,
  title={Impossibility theorems for domain adaptation},
  author={Ben-David, Shai and Lu, Tyler and Luu, Teresa and P{\'a}l, D{\'a}vid},
  booktitle={International Conference on Artificial Intelligence and Statistics},
  pages={129--136},
  year={2010}
}

@book{quionero2009dataset,
  title={Dataset shift in machine learning},
  author={Quionero-Candela, Joaquin and Sugiyama, Masashi and Schwaighofer, Anton and Lawrence, Neil D},
  year={2009},
  publisher={The MIT Press}
}

@article{shimodaira2000improving,
  title={Improving predictive inference under covariate shift by weighting the log-likelihood function},
  author={Shimodaira, Hidetoshi},
  journal={Journal of Statistical Planning and Inference},
  volume={90},
  number={2},
  pages={227--244},
  year={2000},
  publisher={Elsevier}
}

@article{raven1941standardization,
  title={Standardization of progressive matrices, 1938},
  author={Raven, John C},
  journal={British Journal of Medical Psychology},
  volume={19},
  number={1},
  pages={137--150},
  year={1941},
  publisher={Wiley Online Library}
}
\cleardoublepageempty{}
\clearpage
\cleardoublepageempty{}
\end{document}